\newcommand*{\COLT}{}
\newcommand*{\CAMREADY}{}
\newcommand*{\SUBSUBSUBSECTION}{}
	\def\set@curr@file#1{\def\@curr@file{#1}} 
	\newtheorem{claim}[theorem]{Claim}
	\newtheorem{fact}[theorem]{Fact}
	\newtheorem{procedure}[theorem]{Procedure}
	\newtheorem{hypothesis}[theorem]{Hypothesis}	
	\newcommand{\qed}{\hfill\ensuremath{\blacksquare}}
	\newenvironment{proofsketch}[1]{\par\noindent{\bfseries\upshape Proof sketch\ }#1}{\jmlrQED}
	\newtheorem{lemma}{Lemma}
	\newtheorem{corollary}{Corollary}
	\newtheorem{theorem}{Theorem}
	\newtheorem{proposition}{Proposition}
	\newtheorem{remark}{Remark}
	\newtheorem{claim}{Claim}
	\newtheorem{procedure}{Procedure}
	\theoremstyle{definition}
	\newtheorem{definition}{Definition}
	\newenvironment{proofsketch}[1]{\begin{proof}[Proof sketch #1]}{\end{proof}}
\def\be{\begin{equation}}
\def\ee{\end{equation}}
\def\beas{\begin{eqnarray*}}
\def\eeas{\end{eqnarray*}}
\def\bea{\begin{eqnarray}}
\def\eea{\end{eqnarray}}
\newcommand{\x}{{\mathbf x}}
\newcommand{\y}{{\mathbf y}}
\newcommand{\uu}{{\mathbf u}}
\newcommand{\vv}{{\mathbf v}}
\newcommand{\w}{{\mathbf w}}
\newcommand{\e}{{\mathbf e}}
\newcommand{\q}{{\mathbf q}}
\newcommand{\f}{{\mathbf f}}
\newcommand{\g}{{\mathbf g}}
\newcommand{\1}{{\mathbf 1}}
\newcommand{\0}{{\mathbf 0}}
\newcommand{\B}{{\mathcal B}}
\newcommand{\CC}{{\mathcal C}}
\newcommand{\D}{{\mathcal D}}
\newcommand{\J}{{\mathcal J}}
\newcommand{\NN}{{\mathcal N}}
\newcommand{\PP}{{\mathcal P}}
\renewcommand{\S}{{\mathcal S}}
\newcommand{\Y}{{\mathcal Y}}
\newcommand{\OO}{{\mathcal O}}
\newcommand{\R}{{\mathbb R}}
\newcommand{\N}{{\mathbb N}}
\newcommand{\thetabf}{{\boldsymbol{\theta}}}
\newcommand{\abs}[1]{\left\lvert#1 \right\rvert}
\newcommand{\norm}[1]{\left\|#1 \right\|}
\newcommand{\alphabar}{\bar{\alpha}}
\definecolor{xcolor-gray}{gray}{0.95}
\DeclareMathOperator{\Tr}{Tr}
\definecolor{darkspringgreen}{rgb}{0.09, 0.45, 0.27}
	\let\note\endnote
	\renewcommand{\endnote}[1]{\null} 
	\let\note\footnote
	\titleformat{\paragraph}{\normalfont\normalsize\itshape}{\theparagraph}{1em}{}
	\titlespacing*{\paragraph}{0pt}{4mm plus 2mm minus 2mm}{2mm plus 2mm}
	\newcommand*{\ABBR}{}
	\newcommand*{\ABBR}{}
	\newcommand*{\ABBR}{}
	\newcommand*{\ABBR}{}
	\newcommand{\eg}{{\it e.g.}}
	\newcommand{\ie}{{\it i.e.}}
	\newcommand{\cf}{{\it cf.}}
	\newcommand{\vs}{{\it vs.}}
\begin{document}

\ifdefined\NEURIPS
   \title{Continuous vs.~Discrete Optimization of Deep Neural Networks}
   \author{
	Author 1 \\
	Author 1 Institution \\
	\texttt{author1@email} \\
	\And
	Author 2 \\
	Author 2 Institution \\
	\texttt{author2@email} \\
	\And
	Author 3 \\
	Author 3 Institution \\
	\texttt{author3@email} \\
	}
	\maketitle
\fi
\ifdefined\CVPR
	\title{Paper Title}
	\author{
	Author 1 \\
	Author 1 Institution \\	
	\texttt{author1@email} \\
	\and
	Author 2 \\
	Author 2 Institution \\
	\texttt{author2@email} \\	
	\and
	Author 3 \\
	Author 3 Institution \\
	\texttt{author3@email} \\
	}
	\maketitle
\fi
\ifdefined\AISTATS
	\twocolumn[
	\aistatstitle{Paper Title}
	\ifdefined\CAMREADY
		\aistatsauthor{Author 1 \And Author 2 \And Author 3}
		\aistatsaddress{Author 1 Institution \And Author 2 Institution \And Author 3 Institution}
	\else
		\aistatsauthor{Anonymous Author 1 \And Anonymous Author 2 \And Anonymous Author 3}
		\aistatsaddress{Unknown Institution 1 \And Unknown Institution 2 \And Unknown Institution 3}
	\fi
	]	
\fi
\ifdefined\ICML
	\twocolumn[
	\icmltitlerunning{Paper Title}
	\icmltitle{Paper Title} 
	\icmlsetsymbol{equal}{*}
	\begin{icmlauthorlist}
	\icmlauthor{Author 1}{institutionA} 
	\icmlauthor{Author 2}{institutionB}
	\icmlauthor{Author 3}{institutionA,institutionB}
	\end{icmlauthorlist}
	\icmlaffiliation{institutionA}{Department A, University A, City A, Region A, Country A}
	\icmlaffiliation{institutionB}{Department B, University B, City B, Region B, Country B}
	\icmlcorrespondingauthor{Corresponding Author 1}{cauthor1@email}
	\icmlcorrespondingauthor{Corresponding Author 2}{cauthor2@email}
	\icmlkeywords{Deep Learning, Learning Theory, Non-Convex Optimization}
	\vskip 0.3in
	]
	\printAffiliationsAndNotice{} 
\fi
\ifdefined\ICLR
	\title{Paper Title}
	\author{
	Author 1 \\
	Author 1 Institution \\
	\texttt{author1@email}
	\And
	Author 2 \\
	Author 2 Institution \\
	\texttt{author2@email}
	\And
	Author 3 \\ 
	Author 3 Institution \\
	\texttt{author3@email}
	}
	\maketitle
\fi
\ifdefined\COLT
	\title{Continuous vs.~Discrete Optimization of Deep Neural Networks}
	\coltauthor{%
	\Name{Omer Elkabetz} \Email{omer.elkabetz@cs.tau.ac.il} \\
	\addr Tel Aviv University
	\AND
	\Name{Nadav Cohen} \Email{cohennadav@cs.tau.ac.il} \\
	\addr Tel Aviv University}
	\maketitle
\fi

\begin{abstract}
Existing analyses of optimization in deep learning are either continuous, focusing on (variants of) gradient flow, or discrete, directly treating (variants of) gradient descent.
Gradient flow is amenable to theoretical analysis, but is stylized and disregards computational efficiency.
The extent to which it represents gradient descent is an open question in the theory of deep learning.
The current paper studies this question.
Viewing gradient descent as an approximate numerical solution to the initial value problem of gradient flow, we find that the degree of approximation depends on the curvature around the gradient flow trajectory.
We then show that over deep neural networks with homogeneous activations, gradient flow trajectories enjoy favorable curvature, suggesting they are well approximated by gradient descent.
This finding allows us to translate an analysis of gradient flow over deep linear neural networks into a guarantee that gradient descent efficiently converges to global minimum \emph{almost surely} under random initialization.
Experiments suggest that over simple deep neural networks, gradient descent with conventional step size is indeed close to gradient flow.
We hypothesize that the theory of gradient flows will unravel mysteries behind deep learning.
\end{abstract}

\ifdefined\COLT
	\medskip
	\begin{keywords}
	\emph{Deep Learning}, \emph{Non-Convex Optimization}, \emph{Gradient Flow}, \emph{Gradient Descent}
	\end{keywords}
\fi

\section{Introduction} \label{sec:intro}

The success of deep neural networks is fueled by the mysterious properties of gradient-based optimization, namely, the ability of (variants of) gradient descent to minimize non-convex training objectives while exhibiting tendency towards solutions that generalize well.
Vast efforts are being directed at mathematically analyzing this phenomenon, with existing results typically falling into one of two categories: continuous or discrete.
Continuous analyses usually focus on gradient flow (or variants thereof), which corresponds to gradient descent (or variants thereof) with infinitesimally small step size.
Compared to their discrete (positive step size) counterparts, continuous settings are oftentimes far more amenable to theoretical analysis (\eg~they admit use of the theory of differential equations), but on the other hand are stylized, and disregard the critical aspect of computational efficiency (number of steps required for convergence).
Works analyzing gradient flow over deep neural networks either accept the latter shortcomings (see for example~\cite{saxe2014exact,arora2018optimization,razin2020implicit}), or attempt to reproduce part of the results via completely separate analysis of gradient descent (\cf~\cite{ji2019gradient,du2018algorithmic,arora2019convergence}).
The extent to which gradient flow represents gradient descent is an open question in the theory of deep learning.

The current paper formally studies the foregoing question.
Viewing gradient descent as a numerical method for approximately solving the initial value problem corresponding to gradient flow, we turn to the literature on numerical analysis, and invoke a fundamental theorem concerning the approximation error.
The theorem implies that in general, the match between gradient descent and gradient flow is determined by the curvature around gradient flow's trajectory.
In particular, the ``more convex'' the trajectory, \ie~the larger the (possibly negative) minimal eigenvalue of the Hessian is around the trajectory, the better the match is guaranteed to be.\note{%
In addition to the minimal eigenvalue of the Hessian, local smoothness and Lipschitz constants also affect the guaranteed match between gradient descent and gradient flow.
However, the impact of these constants is exponentially weaker than that of the Hessian's minimal eigenvalue.
For details see Theorem~\ref{theorem:gf_gd}.
\label{note:match_smooth_lip}
}
We show that when applied to deep neural networks (fully connected as well as convolutional) with homogeneous activations (\eg~linear, rectified linear or leaky rectified linear), gradient flow emanating from near-zero initialization (as commonly employed in practice) follows trajectories that are ``roughly convex,'' in the sense that the minimal eigenvalue of the Hessian along them is far greater than in arbitrary points in space, particularly towards convergence.
This implies that over deep neural networks, gradient descent with moderately small step size may in fact be close to its continuous limit, \ie~to gradient flow. 
We exemplify an application of this finding by translating an analysis of gradient flow over deep linear neural networks into a convergence guarantee for gradient descent.
The guarantee we obtain is, to our knowledge, the first to ensure that a conventional gradient-based algorithm optimizing a deep (three or more layer) neural network of fixed (data-independent\note{%
By data-independence we mean that no assumptions on training data are made beyond it being subject to standard whitening and normalization procedures.
\label{note:data_ind}
}%
) size efficiently converges\note{%
We regard convergence as efficient if its computational complexity is polynomial in training set size and dimensions, as well as the desired level of accuracy.
\label{note:eff_converge}
}
to global minimum \emph{almost surely} under random (data-independent) near-zero initialization.

We corroborate our theoretical analysis through experiments with basic deep learning settings, which demonstrate that reducing the step size of gradient descent often leads to only slight changes in its trajectory.
This confirms that, in basic settings, central aspects of deep neural network optimization may indeed be captured by gradient flow.
Recent works (\eg~\cite{barrett2021implicit,kunin2021neural,smith2021origin}) suggest that by appropriately modifying gradient flow it is possible to account for advanced settings as well, including ones with momentum, stochasticity and large step size.
Encouraged by these developments, we hypothesize that the vast bodies of knowledge on continuous dynamical systems, and gradient flow in particular (see, \eg,~\cite{glendinning1994stability,ambrosio2008gradient}), will pave way to unraveling mysteries behind deep learning.

\subsection{Contributions}

The main contributions of this work are:
\emph{(i)}~we conduct the first formal study for the discrepancy between continuous and discrete optimization of deep neural networks;
\emph{(ii)}~we demonstrate the use of \emph{generic} mathematical machinery for translating a continuous non-convex convergence result into a discrete one;
\emph{(iii)}~to our knowledge, the discrete result we obtain forms the first guarantee of random (data-independent) near-zero initialization \emph{almost surely} leading a conventional gradient-based algorithm optimizing a deep (three or more layer) neural network of fixed (data-independent) size to efficiently converge to global minimum;
\emph{(iv)}~the fundamental theorem (from numerical analysis) we employ is seldom used in machine learning contexts and may be of independent interest;
and
\emph{(v)}~we provide empirical evidence suggesting that gradient descent over simple deep neural networks is often close to gradient flow.

\subsection{Paper Organization}

The remainder of the paper is organized as follows.
Section~\ref{sec:preliminaries} delivers preliminary background in numerical analysis, and in particular the fundamental theorem concerning numerical solution of initial value problems.
Implications of the theorem on the role of curvature in determining the match between gradient flow and gradient descent are presented in Section~\ref{sec:match}.
Section~\ref{sec:roughly_convex} shows that over deep neural networks, trajectories of gradient flow enjoy favorable curvature.
An application of this finding for translating a convergence result from gradient flow to gradient descent is demonstrated in Section~\ref{sec:lnn}.
Our experiments are presented in Section~\ref{sec:experiments}.
In Section~\ref{sec:related} we review related work.
Finally, Section~\ref{sec:conclusion} concludes.

\section{Preliminaries: Numerical Solution of Initial Value Problems} \label{sec:preliminaries}

Let $d \in \N$.
Given a function $\g : [ 0 , \infty ) \times \R^d \to \R^d$ (viewed as a time-dependent vector field) and a point $\thetabf_s \in \R^d$, consider the \emph{initial value problem}:
\be
\thetabf ( 0 ) = \thetabf_s 
\quad , \quad 
\tfrac{d}{dt} \thetabf ( t ) = \g ( t , \thetabf ( t ) )
~~\text{for $t \geq 0$}
\text{\,.}
\label{eq:ivp}
\ee
The following result~---~an extension of the well known Picard-Lindel\"of Theorem~---~establishes that local Lipschitz continuity of~$\g ( \cdot )$ suffices for ensuring existence and uniqueness of a solution~$\thetabf ( \cdot )$.
\begin{theorem}[Existence-Uniqueness] 
\label{theorem:exist_unique}
Consider the initial value problem in Equation~\eqref{eq:ivp}, and suppose $\g ( \cdot )$ is locally Lipschitz continuous.
Then, there exists a solution $\thetabf : [ 0 , t_e ) \to \R^d$, where either:
\emph{(i)}~$t_e = \infty$;
or 
\emph{(ii)}~$t_e < \infty$ and $\lim_{t \nearrow t_e} \norm{ \thetabf ( t ) }_2 = \infty$.
Moreover, the solution is unique in the sense that any other solution $\thetabf' : [ 0 , t'_e ) \to \R^d$ must satisfy $t'_e \leq t_e$ and $\forall t \in [ 0 , t'_e ) : \thetabf' ( t ) = \thetabf ( t )$.
\end{theorem}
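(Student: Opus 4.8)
The plan is to follow the classical three-stage construction of a maximal solution: \emph{(i)} local existence and uniqueness via a fixed-point argument; \emph{(ii)} patching all solutions into a single maximal one, which immediately yields the uniqueness claim; and \emph{(iii)} the \emph{blow-up alternative}, characterizing how the maximal interval $[0,t_e)$ can fail to equal $[0,\infty)$.

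For \emph{(i)} I would fix a base point $(t_0,\thetabf_0)$. Local Lipschitz continuity (which in particular makes $\g$ continuous, hence bounded on compacts) gives a closed box around $(t_0,\thetabf_0)$ on which $\g$ is bounded, say by $M$, and Lipschitz in its second argument, say with constant $L$. On the space of continuous $\R^d$-valued functions on a short interval $[t_0,t_0+\delta]$, with $\delta$ small relative to $1/L$, $M$ and the box size, equipped with the supremum norm, I would consider the Picard operator $(\mathcal{T}\phi)(t) = \thetabf_0 + \int_{t_0}^{t}\g(s,\phi(s))\,ds$; it maps a suitable closed ball of functions into itself and is a contraction there (directly after shrinking $\delta$, or in an exponentially weighted norm), while by the fundamental theorem of calculus its fixed points are exactly the solutions of \eqref{eq:ivp} started at $(t_0,\thetabf_0)$ on that interval, so Banach's theorem yields a unique local solution.

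For \emph{(ii)}, let $\mathcal{S}$ be the set of all solutions of \eqref{eq:ivp}, each on some interval $[0,T)$. A connectedness argument shows any two members of $\mathcal{S}$ agree on the intersection of their domains: the set of times at which they agree contains $0$, is closed by continuity, and is open by the local uniqueness of \emph{(i)}, hence is the whole common interval. Thus $t_e := \sup\{ T : [0,T) \text{ is the domain of some member of } \mathcal{S}\}$ and $\thetabf(t) :=$ the common value of all solutions defined at $t$ give a well-defined solution $\thetabf:[0,t_e)\to\R^d$ extending every other; in particular any solution $\thetabf':[0,t'_e)\to\R^d$ belongs to $\mathcal{S}$, so $t'_e \le t_e$ and $\thetabf'\equiv\thetabf$ on $[0,t'_e)$, which is the uniqueness assertion. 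For \emph{(iii)} assume $t_e<\infty$; it suffices to prove that for every $R>0$ there is $t_R<t_e$ with $\norm{\thetabf(t)}_2>R$ on $(t_R,t_e)$, since letting $R\to\infty$ then forces $\norm{\thetabf(t)}_2\to\infty$. If this failed, some $R$ would admit a sequence $t_n\nearrow t_e$ with $\norm{\thetabf(t_n)}_2\le R$. On the compact set $[0,t_e]\times\{\x:\norm{\x}_2\le R+1\}$ the field $\g$ is bounded and Lipschitz in $\x$, so the argument of \emph{(i)} furnishes a \emph{uniform} local existence time $\tau>0$ valid from any start $(t_0,\x_0)$ with $t_0\in[0,t_e]$ and $\norm{\x_0}_2\le R$ (the derivative bound keeps the solution within the radius-$(R+1)$ ball). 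Picking $n$ with $t_e-t_n<\tau$ and solving forward from $(t_n,\thetabf(t_n))$, then gluing onto $\thetabf|_{[0,t_n]}$ via the agreement property, produces a solution on $[0,t_n+\tau)$ with $t_n+\tau>t_e$ — contradicting the maximality of $t_e$.

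The main obstacle is step \emph{(iii)}: one must rule out not merely that $\thetabf$ stays bounded, but also that it oscillates — bounded along some sequence approaching $t_e$ while possibly large in between. The uniform-in-starting-point local existence time from step \emph{(i)}, combined with local uniqueness to glue the extension onto $\thetabf$, is exactly what closes this gap.
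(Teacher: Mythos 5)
Your proposal is correct: it is the standard Picard--Lindel\"of contraction argument, followed by the maximal-solution patching and the blow-up alternative (with the uniform local existence time used to rule out bounded oscillation near $t_e$), and all three steps are carried out soundly. The paper does not prove this theorem itself but cites it directly from Section~1.5 of \cite{grant2014theory}, which contains essentially this same argument, so your write-up simply fills in the deferred proof; the only detail you gloss over is the one the paper's footnote flags (the reference assumes an open domain, whereas here $\g$ is defined on $[0,\infty)\times\R^d$), but since your construction only ever integrates forward in time this is immaterial.
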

\begin{proof}
The theorem is a direct consequence of the results in Section~1.5 of~\cite{grant2014theory}.\note{%
A minor subtlety is that in~\cite{grant2014theory} the vector field~$\g ( \cdot )$ is defined over an open domain.
To account for this requirement, simply extend~$\g ( \cdot )$ to the domain $( -\infty , \infty ) \times \R^d$ by setting $\g ( t , \q ) = \g ( 0 , \q )$ for all $t < 0$, $\q \in \R^d$.
\label{note:grant_open_domain}
}
\end{proof}
It is typically the case that the solution to Equation~\eqref{eq:ivp} cannot be expressed in closed form, and a numerical approximation is sought after.
Various numerical methods for approximately solving initial value problems have been developed over the years (see Chapter~12 in~\cite{suli2003introduction} for an introduction).
The most basic one, \emph{Euler's method}, is parameterized by a \emph{step size}~$\eta > 0$, and when applied to Equation~\eqref{eq:ivp} follows the recursive scheme:
\be
\thetabf_{k + 1} = \thetabf_k + \eta \, \g ( t_k , \thetabf_k )
~~
\text{for $k = 0 , 1 , 2 , \ldots$}
\text{\,,}
\label{eq:euler}
\ee
where $t_k := k \eta$ and the initial point~$\thetabf_0$ is typically set to~$\thetabf_s$.
The motivation behind Euler's method is straightforward~---~a first order Taylor expansion of the exact solution~$\thetabf ( \cdot )$ around time~$t_k$ yields:
\[
\thetabf ( t_{k + 1} ) = \thetabf ( t_k + \eta ) \approx \thetabf ( t_k ) + \eta \tfrac{d}{dt} \thetabf ( t_k ) = \thetabf ( t_k ) + \eta \, \g ( t_k , \thetabf ( t_k ) )
\text{\,,}
\]
therefore if $\thetabf ( t_k )$ is well approximated by~$\thetabf_k$, we may expect $\thetabf_{k + 1}$ to resemble~$\thetabf ( t_{k + 1} )$.
The numerical solution produced by Euler's method may be viewed as a continuous polygonal curve:
\be
\bar{\thetabf} : [ 0 , \infty ) \to \R^d
\quad , \quad
\bar{\thetabf} ( 0 ) = \thetabf_0
\quad , \quad
\tfrac{d}{dt} \bar{\thetabf} ( t ) = \g ( t_k , \thetabf_k )
~~\text{for $t \in ( t_k , t_{k + 1} )$\,, $k = 0 , 1 , 2 , \ldots$}
\text{\,.}
\label{eq:euler_polygon}
\ee
The quality of the numerical solution then boils down to the distance between this curve and the exact solution, \ie~between $\bar{\thetabf} ( t )$ and $\thetabf ( t )$ for $t \geq 0$.
Many efforts have been made to derive tight bounds for this distance.
We provide below a modern result known as ``Fundamental Theorem.''
\begin{theorem}[Fundamental Theorem] 
\label{theorem:fundamental}
Consider the initial value problem in Equation~\eqref{eq:ivp}, and suppose $\g ( \cdot )$ is continuously differentiable.
Let $\thetabf : [ 0 , t_e ) \to \R^d$ be the solution to this problem (see Theorem~\ref{theorem:exist_unique}), and let $\bar{\thetabf} : [ 0 , \infty ) \to \R^d$ be a continuous polygonal curve (Equation~\eqref{eq:euler_polygon}) born from Euler's method (Equation~\eqref{eq:euler}).
For any $t \in [ 0 , t_e ) , \q \in \R^d$, denote by~$J ( t , \q ) \in \R^{d , d}$ the Jacobian of~$\g ( \cdot )$ with respect to its second argument at the point~$( t , \q )$, and by~$\lambda_{max} ( t , \q )$ the maximal eigenvalue of~$\tfrac{1}{2} ( J ( t , \q ) + J ( t , \q )^\top )$.\note{%
This maximal eigenvalue is known as the \emph{logarithmic norm} of~$J ( t , \q )$ (\cf~Section~I.10 in~\cite{hairer1993solving}).
}
Let $m : [ 0 , t_e ) \to \R$ be an integrable function satisfying:
\[
\lambda_{max} ( t , \q ) \leq m ( t )
~~~\text{for all $t \in [ 0 , t_e )$ and~$\q \in [ \thetabf ( t ) , \bar{\thetabf} ( t ) ]$}
\text{\,,}
\]
where $[ \thetabf ( t ) , \bar{\thetabf} ( t ) ]$ stands for the line segment (in~$\R^d$) between $\thetabf ( t )$ and~$\bar{\thetabf} ( t )$.
Let $\delta : [ 0 , t_e ) \to \R_{\geq 0}$ be an integrable function that meets:
\[
\| \tfrac{d}{dt} \bar{\thetabf} ( t^+ ) - \g ( t , \bar{\thetabf} ( t ) ) \|_2 \leq \delta ( t )
~~~\text{for all $t \in [ 0 , t_e )$}
\text{\,,}
\]
where $\tfrac{d}{dt} \bar{\thetabf} ( t^+ )$ represents the right derivative of~$\bar{\thetabf} ( \cdot )$ at time~$t$.
Then, for all $t \in [ 0 , t_e )$:
\be
\| \thetabf ( t ) - \bar{\thetabf} ( t ) \|_2 \leq e^{\mu ( t )} \Big( \| \thetabf ( 0 ) - \bar{\thetabf} ( 0 ) \|_2 + \smallint\nolimits_0^t e^{- \mu ( t' )} \delta ( t' ) dt' \Big)
\text{\,,}
\label{eq:fundamental}
\ee
where $\mu ( t ) := \int_0^t m ( t' ) dt'$.
\end{theorem}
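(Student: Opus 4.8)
The plan is to bound the error function $e(t) := \norm{\thetabf(t) - \bar{\thetabf}(t)}_2$ by turning it into a scalar differential inequality and then applying a Gr\"onwall-type argument. Write $\vv(t) := \thetabf(t) - \bar{\thetabf}(t)$. On each mesh interval $(t_k, t_{k+1})$ the polygonal curve $\bar{\thetabf}(\cdot)$ is affine (Equation~\eqref{eq:euler_polygon}), so $\vv(\cdot)$ is differentiable there, and I would split its derivative into a ``linearizable'' part plus the local defect of Euler's method:
\[
\tfrac{d}{dt}\vv(t) \;=\; \g(t,\thetabf(t)) - \tfrac{d}{dt}\bar{\thetabf}(t^+) \;=\; \big(\g(t,\thetabf(t)) - \g(t,\bar{\thetabf}(t))\big) \;-\; \rr(t),
\]
where $\rr(t) := \tfrac{d}{dt}\bar{\thetabf}(t^+) - \g(t,\bar{\thetabf}(t))$ satisfies $\norm{\rr(t)}_2 \le \delta(t)$ by hypothesis.

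Next I would establish the one-sided Lipschitz estimate that is the heart of the argument. By the fundamental theorem of calculus applied to the vector field in its second argument, $\g(t,\thetabf(t)) - \g(t,\bar{\thetabf}(t)) = \big(\int_0^1 J(t,\bar{\thetabf}(t) + s\vv(t))\,ds\big)\vv(t)$, and therefore
\[
\inprod{\vv(t)}{\g(t,\thetabf(t)) - \g(t,\bar{\thetabf}(t))} \;=\; \int_0^1 \vv(t)^\top \tfrac{1}{2}\big(J + J^\top\big)\!\big(t,\bar{\thetabf}(t) + s\vv(t)\big)\,\vv(t)\,ds \;\le\; m(t)\,\norm{\vv(t)}_2^2 ,
\]
where the inequality uses that each point $\bar{\thetabf}(t) + s\vv(t)$, $s \in [0,1]$, lies on the segment $[\thetabf(t),\bar{\thetabf}(t)]$, on which $\lambda_{max}$ is bounded by $m(t)$. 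Combining this with Cauchy--Schwarz on the defect term, at any time where $\vv(t) \ne 0$ the function $e$ is differentiable with $\tfrac{d}{dt}e(t) = \inprod{\vv(t)}{\tfrac{d}{dt}\vv(t)}/e(t) \le m(t)\,e(t) + \delta(t)$, while at times where $\vv(t) = 0$ the right derivative of $e$ equals $\norm{\tfrac{d}{dt}\vv(t^+)}_2 = \norm{\rr(t)}_2 \le \delta(t)$, again consistent with $m(t)e(t) + \delta(t)$. Since $\vv(\cdot)$ is continuous and piecewise $C^1$, $e(\cdot)$ is locally Lipschitz, hence locally absolutely continuous, so $e'(t) \le m(t)e(t) + \delta(t)$ holds for almost every $t \in [0,t_e)$.

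Finally I would integrate this inequality: multiplying by the integrating factor $e^{-\mu(t)}$ with $\mu(t) := \int_0^t m(t')\,dt'$ (finite since $m$ is integrable), the map $t \mapsto e^{-\mu(t)}e(t)$ is absolutely continuous with almost-everywhere derivative at most $e^{-\mu(t)}\delta(t)$; integrating from $0$ to $t$ and rearranging yields precisely the bound~\eqref{eq:fundamental}. The main obstacle is the regularity bookkeeping rather than any single inequality: $\bar{\thetabf}(\cdot)$ is only piecewise smooth and $e(\cdot)$ may vanish where the exact and numerical trajectories cross, so one cannot simply write $e' = \inprod{\vv}{\vv'}/e$ globally, nor invoke the $C^1$ form of Gr\"onwall's lemma. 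I would handle this by working with right (Dini) derivatives throughout, as sketched above, together with a form of Gr\"onwall's inequality valid for merely absolutely continuous functions (an alternative is to regularize $e$ by $(\norm{\vv}_2^2 + \varepsilon^2)^{1/2}$, derive the inequality up to an $O(\varepsilon)$ error term, and send $\varepsilon \to 0$). Everything else --- the integral representation of the increment of $\g$, the identification of $\lambda_{max}$ with the quadratic form of the symmetrized Jacobian, and the Cauchy--Schwarz step on the defect --- is routine.
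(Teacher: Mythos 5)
Your proposal is correct. Note that the paper does not actually prove this theorem---it simply cites it as a restatement of Theorem~10.6 in Hairer, N{\o}rsett and Wanner's \emph{Solving Ordinary Differential Equations~I}---so there is no in-paper argument to compare against; your sketch is essentially the standard proof underlying that cited result (integral representation of the increment of $\g$, bound via the logarithmic norm of the symmetrized Jacobian, one-sided differential inequality for $\|\thetabf(t)-\bar{\thetabf}(t)\|_2$, Gr\"onwall with integrating factor $e^{-\mu(t)}$), and your handling of the regularity issues at mesh points and at zeros of the error via Dini derivatives (or the $(\|\vv\|_2^2+\varepsilon^2)^{1/2}$ regularization) is the right way to make it rigorous.
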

\begin{proof}
The theorem is simply a restatement of Theorem~10.6 in~\cite{hairer1993solving}.
\end{proof}
The result of Theorem~\ref{theorem:fundamental}~---~bound on distance between exact solution~$\thetabf ( \cdot )$ and numerical one~$\bar{\thetabf} ( \cdot )$ (Equation~\eqref{eq:fundamental})~---~primarily depends on:
\emph{(i)}~the function~$m ( \cdot )$, which corresponds to maximal eigenvalue of symmetric part of the Jacobian of the vector field~$\g ( \cdot )$ around exact solution~$\thetabf ( \cdot )$;
and
\emph{(ii)}~the function~$\delta ( \cdot )$, corresponding to the discrepancy between the vector field~$\g ( \cdot )$ and the velocity of the numerical solution~$\bar{\thetabf} ( \cdot )$.
The numerical scheme employed (Euler's method; Equation~\eqref{eq:euler}) has little control over~$m ( \cdot )$.
However, by taking its step size~$\eta$ to be sufficiently small, $\delta ( \cdot )$~can be brought arbitrarily close to zero, which, assuming exact initialization (\ie~that $\thetabf_0$ is set to~$\thetabf_s$ from Equation~\eqref{eq:ivp}), ensures that $\thetabf ( \cdot )$ and~$\bar{\thetabf} ( \cdot )$ stay arbitrarily close for an arbitrary amount of time.
We thus observe a tradeoff~---~on one hand the step size~$\eta$ is required to be small so as to ensure accuracy of the numerical solution, while on the other a large step size is preferred for computational efficiency (less iterations per time unit).
The largest value of~$\eta$ that still ensures desired accuracy highly depends on~$m ( \cdot )$, as will be exemplified in Section~\ref{sec:match}.

\section{Continuous vs.~Discrete Optimization: Match Determined by Convexity} \label{sec:match}

Let $f : \R^d \to \R$, where $d \in \N$, be a twice continuously differentiable function which we would like to minimize.
Consider continuous optimization via \emph{gradient flow} initialized at $\thetabf_s \in \R^d$:
\be
\thetabf ( 0 ) = \thetabf_s 
\quad , \quad 
\tfrac{d}{dt} \thetabf ( t ) = - \nabla f ( \thetabf ( t ) )
~~\text{for $t \geq 0$}
\text{\,.}
\label{eq:gf}
\ee
This is a special case of the initial value problem presented in Equation~\eqref{eq:ivp}.\note{%
The vector field in this case is time-independent (given by $\g ( t , \q ) = - \nabla f ( \q )$ for all $t \in [ 0 , \infty ) , \q \in \R^d$).
Initial value problems of this type are known as \emph{autonomous}.
}
By Theorem~\ref{theorem:exist_unique}, it admits a unique solution $\thetabf : [ 0 , t_e ) \,{\to}\, \R^d$, where either:
\emph{(i)}~$t_e \,{=}\, \infty$;
or
\emph{(ii)}~$t_e \,{<}\, \infty$ and $\lim_{t \nearrow t_e} \norm{ \thetabf ( t ) }_2 \,{=}\, \infty$.
Numerically approximating this solution via Euler's method (Equation~\eqref{eq:euler}) yields a discrete optimization algorithm which is no other than \emph{gradient descent}:
\be
\thetabf_{k + 1} = \thetabf_k - \eta \, \nabla f ( \thetabf_k )
~~
\text{for $k = 0 , 1 , 2 , \ldots$}
\text{\,,}
\label{eq:gd}
\ee
where $\eta > 0$ is the chosen step size.
We may thus invoke the Fundamental Theorem (Theorem~\ref{theorem:fundamental}) and obtain a bound on the distance between the trajectories of gradient flow and gradient descent.
\begin{theorem}
\label{theorem:gf_gd}
Consider the trajectory of gradient flow (solution to Equation~\eqref{eq:gf}) $\thetabf : [ 0 , t_e ) \,{\to}\, \R^d$, and let $\tilde{t} \in ( 0 , t_e )$ and~$\epsilon > 0$.
Define $\D_{\tilde{t} , \epsilon} := \bigcup_{t \in [ 0 , \tilde{t} \, ]} \B_\epsilon ( \thetabf ( t ) )$, where $\B_\epsilon ( \thetabf ( t ) ) \,{\subset}\, \R^d$ stands for the (closed) Euclidean ball of radius~$\epsilon$ centered at~$\thetabf ( t )$.
Let $\beta_{\tilde{t} , \epsilon} , \gamma_{\tilde{t} , \epsilon} > 0$ be such that:
\[
\sup\nolimits_{\q \in \D_{\tilde{t} , \epsilon}} \| \nabla^2 f ( \q ) \|_{spectral} \leq \beta_{\tilde{t} , \epsilon}
~~ , ~~
\sup\nolimits_{\q \in \D_{\tilde{t} , \epsilon}} \| \nabla f ( \q ) \|_2 \leq \gamma_{\tilde{t} , \epsilon}
\text{\,.}
\]
Let $m : [ 0 , \tilde{t} \, ] \to \R$ be an integrable function satisfying:
\[
- \lambda_{min} ( \nabla^2 f ( \q ) ) \leq m ( t )
~~~\text{for all $t \in [ 0 , \tilde{t} \, ]$ and~$\q \in \B_\epsilon ( \thetabf ( t ) )$}
\text{\,,}
\]
where $\lambda_{min} ( \nabla^2 f ( \q ) )$ stands for the minimal eigenvalue of~$\nabla^2 f ( \q )$.
Then, if the step size~$\eta > 0$ chosen for gradient descent (Equation~\eqref{eq:gd}) satisfies:
\be
\eta \, < \, \inf_{t \in ( 0 , \tilde{t} \, ]} \frac{\epsilon - e^{\int_0^t m ( t' ) dt'} \norm{\thetabf_0 - \thetabf ( 0 )}_2}{\beta_{\tilde{t} , \epsilon} \gamma_{\tilde{t} , \epsilon} \int_0^t e^{\int_{t'}^t m ( t'' ) dt''} dt'}
\text{\,,}
\label{eq:gf_gd_eta}
\ee
the first $\lfloor \tilde{t} / \eta \rfloor$ iterates of gradient descent will $\epsilon$-approximate the trajectory of gradient flow up to time~$\tilde{t}$,~\ie~we will have $\| \thetabf_k - \thetabf ( k \eta ) \|_2 \leq \epsilon$ for all $k \in \{ 1 , 2 , \ldots , \lfloor \tilde{t} / \eta \rfloor \}$.
\end{theorem}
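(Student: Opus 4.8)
The plan is to specialize the Fundamental Theorem (Theorem~\ref{theorem:fundamental}) to the autonomous initial value problem~\eqref{eq:gf}, whose vector field is $\g ( t , \q ) = - \nabla f ( \q )$ and whose Euler scheme~\eqref{eq:euler} is precisely gradient descent~\eqref{eq:gd}. Since $f$ is twice continuously differentiable, $\g ( \cdot )$ is continuously differentiable (hence locally Lipschitz), so Theorem~\ref{theorem:exist_unique} supplies the unique solution $\thetabf : [ 0 , t_e ) \to \R^d$ and Theorem~\ref{theorem:fundamental} applies. The Jacobian of $\g ( \cdot )$ in its second argument is $J ( t , \q ) = - \nabla^2 f ( \q )$, which is symmetric, so its symmetric part is $- \nabla^2 f ( \q )$ and its logarithmic norm is $\lambda_{max} ( t , \q ) = - \lambda_{min} ( \nabla^2 f ( \q ) )$. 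Thus the function $m ( \cdot )$ in the statement is a legitimate choice for the role of $m ( \cdot )$ in Theorem~\ref{theorem:fundamental}, \emph{provided} the line segment $[ \thetabf ( t ) , \bar{\thetabf} ( t ) ]$ lies inside $\B_\epsilon ( \thetabf ( t ) )$, which holds exactly when $\norm{ \bar{\thetabf} ( t ) - \thetabf ( t ) }_2 \leq \epsilon$.

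Next I would bound the local defect $\delta ( \cdot )$. On $( t_k , t_{k+1} )$ the right derivative of the Euler polygon is $\tfrac{d}{dt} \bar{\thetabf} ( t^+ ) = - \nabla f ( \thetabf_k )$, while $\g ( t , \bar{\thetabf} ( t ) ) = - \nabla f ( \bar{\thetabf} ( t ) )$, so one may take $\delta ( t ) = \norm{ \nabla f ( \bar{\thetabf} ( t ) ) - \nabla f ( \thetabf_k ) }_2$. The segment from $\thetabf_k = \bar{\thetabf} ( t_k )$ to $\bar{\thetabf} ( t )$ is exactly the arc $\{ \bar{\thetabf} ( s ) : s \in [ t_k , t ] \}$ (the Euler curve is affine on each $[ t_k , t_{k+1} ]$), so whenever that arc lies in $\D_{\tilde{t},\epsilon}$ the mean value inequality gives $\delta ( t ) \leq \beta_{\tilde{t},\epsilon} \norm{ \bar{\thetabf} ( t ) - \thetabf_k }_2 = \beta_{\tilde{t},\epsilon} ( t - t_k ) \norm{ \nabla f ( \thetabf_k ) }_2 \leq \beta_{\tilde{t},\epsilon} \gamma_{\tilde{t},\epsilon} \, \eta$, using $\thetabf_k \in \D_{\tilde{t},\epsilon}$ and $t - t_k < \eta$. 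Plugging $\delta ( \cdot ) \equiv \beta_{\tilde{t},\epsilon} \gamma_{\tilde{t},\epsilon} \eta$ and the above $m ( \cdot )$ into~\eqref{eq:fundamental}, and using $\bar{\thetabf} ( 0 ) = \thetabf_0$, yields $\norm{ \thetabf ( t ) - \bar{\thetabf} ( t ) }_2 \leq e^{\mu ( t )} \norm{ \thetabf_0 - \thetabf ( 0 ) }_2 + \beta_{\tilde{t},\epsilon} \gamma_{\tilde{t},\epsilon} \eta \int_0^t e^{\int_{t'}^t m ( t'' ) dt''} dt'$, with $\mu ( t ) = \int_0^t m ( t' ) dt'$; a direct rearrangement shows the hypothesis~\eqref{eq:gf_gd_eta} on $\eta$ is exactly the assertion that this bound is strictly less than $\epsilon$ for every $t \in ( 0 , \tilde{t} \, ]$.

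The subtlety is that both estimates above were conditional on $\bar{\thetabf}$ remaining within distance $\epsilon$ of $\thetabf$ (so that the relevant segments stay in $\B_\epsilon ( \thetabf ( t ) )$, respectively in $\D_{\tilde{t},\epsilon}$), which is what is being proved; a continuity/bootstrap argument removes the circularity. I would set $T^\star := \sup \{ \tau \in [ 0 , \tilde{t} \, ] : \norm{ \bar{\thetabf} ( s ) - \thetabf ( s ) }_2 \leq \epsilon \text{ for all } s \in [ 0 , \tau ] \}$; this set contains $0$, since letting $t \to 0^+$ in~\eqref{eq:gf_gd_eta} (the denominator vanishes while $\eta > 0$ is fixed, forcing the numerators to stay positive in the limit) gives $\norm{ \thetabf_0 - \thetabf ( 0 ) }_2 \leq \epsilon$, and both curves are continuous so $T^\star$ is well defined. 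On $[ 0 , T^\star ]$ all hypotheses hold, so Theorem~\ref{theorem:fundamental} applied on that sub-interval yields $\norm{ \bar{\thetabf} ( t ) - \thetabf ( t ) }_2 < \epsilon$ for all $t \in ( 0 , T^\star ]$ — a \emph{strict} inequality — and if $T^\star$ were strictly below $\tilde{t}$, continuity would extend $\norm{ \bar{\thetabf} ( s ) - \thetabf ( s ) }_2 \leq \epsilon$ slightly past $T^\star$, contradicting maximality. Hence $T^\star = \tilde{t}$ and $\norm{ \bar{\thetabf} ( t ) - \thetabf ( t ) }_2 \leq \epsilon$ on all of $[ 0 , \tilde{t} \, ]$; since $\bar{\thetabf} ( k \eta ) = \thetabf_k$ and $k \eta \leq \tilde{t}$ for $k \leq \lfloor \tilde{t} / \eta \rfloor$, this is the claimed conclusion. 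I expect this bootstrap — making the argument non-circular and correctly propagating the strictness in~\eqref{eq:gf_gd_eta} — to be the only genuinely delicate point; the rest is a mechanical instantiation of Theorem~\ref{theorem:fundamental}.
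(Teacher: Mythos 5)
Your proposal is correct and follows essentially the same route as the paper's proof: instantiate the Fundamental Theorem with $\g(t,\q)=-\nabla f(\q)$ (so the logarithmic norm is $-\lambda_{min}(\nabla^2 f)$), bound the defect by $\delta(\cdot)\equiv\beta_{\tilde{t},\epsilon}\gamma_{\tilde{t},\epsilon}\eta$ via smoothness along the Euler polygon, and break the circularity with a continuity argument. The paper phrases that last step as a contradiction at the first time $t_\epsilon$ where $\|\bar{\thetabf}(t)-\thetabf(t)\|_2$ exceeds $\epsilon$, whereas you use the supremum $T^\star$ of good times; these are the same argument.
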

\begin{proofsketch}{(for complete proof see Subappendix~\ref{app:proof:gf_gd})}
The result follows from applying the Fundamental Theorem (Theorem~\ref{theorem:fundamental}) with~$\delta ( \cdot )$ fixed at~$\beta_{\tilde{t} , \epsilon} \gamma_{\tilde{t} , \epsilon} \eta$.
\end{proofsketch}

Theorem~\ref{theorem:gf_gd} gives a sufficient condition~---~upper bound on step size~$\eta$ (Equation~\eqref{eq:gf_gd_eta})~---~for gradient descent to follow gradient flow up to a given time~$\tilde{t}$.
The bound is inversely proportional to smoothness and Lipschitz constants ($\beta_{\tilde{t} , \epsilon}$ and~$\gamma_{\tilde{t} , \epsilon}$ respectively), and more importantly, depends exponentially on the integral of~$m ( \cdot )$ along the gradient flow trajectory, where~$m ( \cdot )$ corresponds to minus the minimal eigenvalue of the Hessian.
The smaller the integral of~$m ( \cdot )$, \ie~the larger (less negative or more positive) the minimal eigenvalue of the Hessian around the trajectory is, the more relaxed the bound will be.
That is, \emph{the ``more convex'' the objective function is around the gradient flow trajectory, the better the match between gradient flow and gradient descent} is guaranteed to be.

Corollary~\ref{corollary:gf_gd_coarse} below coarsely applies Theorem~\ref{theorem:gf_gd} by fixing~$m ( \cdot )$ to minus the minimal eigenvalue of the Hessian \emph{across the entire space}.
If $m ( \cdot ) \equiv m$ (now a constant) is negative, \ie~the objective function~$f ( \cdot )$ is strongly convex, the upper bound on the step size~$\eta$ becomes constant, meaning it is independent of the time~$\tilde{t}$ until which gradient descent is required to follow gradient flow.
If $m$ is equal to zero, \ie~$f ( \cdot )$~is non-strongly convex, the upper bound on~$\eta$ mildly decreases with~$\tilde{t}$, namely it scales as~$1 / \tilde{t}$.
If on the other hand $m$~is positive, meaning $f ( \cdot )$ is non-convex, the bound on~$\eta$ shrinks to zero (becoming prohibitively restrictive) exponentially fast as~$\tilde{t}$ grows.
This suggests that as opposed to (strongly or non-strongly) convex objectives, over which gradient descent can easily be made to follow gradient flow, over non-convex objectives, in the worst case, gradient descent will immediately divert from gradient flow unless its step size is exponentially small.
In Appendix~\ref{app:worst} we present a simple example of such a worst case scenario.
In this worst case, the minimal eigenvalue of the Hessian is bounded below and away from zero around the gradient flow trajectory.
A question is then whether there are non-convex objectives in which the minimal eigenvalue of the Hessian around gradient flow trajectories is large enough for them to be followed by gradient descent.
We will see that training losses of deep neural networks can meet this property.

\begin{corollary}
\label{corollary:gf_gd_coarse}
Assume that the objective function~$f ( \cdot )$ is non-negative and $\beta$-smooth with $\beta > 0$.\note{%
Namely, $\| \nabla^2 f ( \q ) \|_{spectral} \leq \beta$ for all $\q \in \R^d$.
}
Denote $m := - \inf_{\q \in \R^d} \lambda_{min} ( \nabla^2 f ( \q ) )$, where $\lambda_{min} ( \nabla^2 f ( \q ) )$ stands for the minimal eigenvalue~of $\nabla^2 f ( \q )$.
Consider the trajectory of gradient flow (solution to Equation~\eqref{eq:gf}) $\thetabf : [ 0 , t_e ) \,{\to}\, \R^d$,\note{%
Lemma~\ref{lemma:infinite} in Appendix~\ref{app:infinite} shows that in the current context ($\beta$-smoothness of the objective function~$f ( \cdot )$), it necessarily holds that $t_e = \infty$, \ie~the trajectory of gradient flow is defined over~$[ 0 , \infty )$.
For simplicity, the statement of the corollary does not rely on this fact.
}
and let $\tilde{t} \in ( 0 , t_e )$ and~$\epsilon > 0$.
Then, if the step size~$\eta > 0$ for gradient descent (Equation~\eqref{eq:gd}) satisfies:
\[
\eta \, < \, 
\left\{
\begin{array}{lll}
c \, ( \epsilon - \norm{\thetabf_0 - \thetabf ( 0 )}_2 ) \, | m | & , \emph{if} ~ m < 0 & \emph{(strong convexity)} \\[2.5mm]
c \, ( \epsilon - \norm{\thetabf_0 - \thetabf ( 0 )}_2 ) \, ( 1 / \tilde{t} \, ) & , \emph{if} ~ m = 0 & \emph{(non-strong convexity)} \\[2.5mm] 
c \, ( \epsilon - \norm{\thetabf_0 - \thetabf ( 0 )}_2 e^{m \tilde{t}} ) \, ( e^{m \tilde{t}} - 1 )^{-1} \, m & , \emph{if} ~ m > 0 & \emph{(non-convexity)}
\end{array}
\right.
\text{\,,}
\]
where $c := \big( \sqrt{2 \beta^3 f ( \thetabf ( 0 ) )} + \beta^2 \epsilon \big)^{-1}$, we will have $\| \thetabf_k - \thetabf ( k \eta ) \|_2 \leq \epsilon$ for all $k \in \{ 1 , 2 , \ldots , \lfloor \tilde{t} / \eta \rfloor \}$.
\end{corollary}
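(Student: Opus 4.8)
The plan is to specialize Theorem~\ref{theorem:gf_gd} to the constant choice $m(\cdot)\equiv m$, to bound $\beta_{\tilde{t},\epsilon}$ and $\gamma_{\tilde{t},\epsilon}$ by universal (time-independent) quantities coming from $\beta$-smoothness, and then to evaluate the resulting threshold on $\eta$ in closed form, splitting according to the sign of $m$. First I would check that the constant function $m(\cdot)\equiv m$ is admissible: by definition $m=-\inf_{\q\in\R^d}\lambda_{min}(\nabla^2 f(\q))$, so $-\lambda_{min}(\nabla^2 f(\q))\le m$ for every $\q$, in particular on each ball $\B_\epsilon(\thetabf(t))$; and $\beta$-smoothness gives $\lambda_{min}(\nabla^2 f(\q))\in[-\beta,\beta]$, hence $m$ is finite and (trivially) integrable on $[0,\tilde{t}\,]$.

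Next I would pin down the two constants. Since $\|\nabla^2 f(\q)\|_{spectral}\le\beta$ everywhere, one may take $\beta_{\tilde{t},\epsilon}=\beta$. For the gradient bound I would use two standard facts: (a) a non-negative $\beta$-smooth function obeys $\|\nabla f(\q)\|_2^2\le 2\beta f(\q)$ for all $\q$ (apply the descent lemma from $\q$ to $\q-\beta^{-1}\nabla f(\q)$ and use $f\ge 0$); and (b) along gradient flow $\tfrac{d}{dt}f(\thetabf(t))=-\|\nabla f(\thetabf(t))\|_2^2\le 0$, so $f(\thetabf(t))\le f(\thetabf(0))$ for $t\in[0,\tilde{t}\,]$. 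Combining these with the $\beta$-Lipschitzness of $\nabla f$, for any $\q\in\B_\epsilon(\thetabf(t))$ we get $\|\nabla f(\q)\|_2\le\|\nabla f(\thetabf(t))\|_2+\beta\epsilon\le\sqrt{2\beta f(\thetabf(0))}+\beta\epsilon$, so $\gamma_{\tilde{t},\epsilon}=\sqrt{2\beta f(\thetabf(0))}+\beta\epsilon$ works over all of $\D_{\tilde{t},\epsilon}$. Then $\beta_{\tilde{t},\epsilon}\gamma_{\tilde{t},\epsilon}=\sqrt{2\beta^3 f(\thetabf(0))}+\beta^2\epsilon=c^{-1}$.

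I would then substitute into Equation~\eqref{eq:gf_gd_eta}. With $m$ constant, $\int_0^t m\,dt'=mt$, while $\int_0^t e^{m(t-t')}dt'$ equals $t$ when $m=0$ and $(e^{mt}-1)/m$ when $m\ne 0$. The $m=0$ case is immediate: the bound becomes $\inf_{t\in(0,\tilde{t}\,]}c(\epsilon-\|\thetabf_0-\thetabf(0)\|_2)/t=c(\epsilon-\|\thetabf_0-\thetabf(0)\|_2)/\tilde{t}$. For $m\ne 0$ the bound is $c\inf_{t\in(0,\tilde{t}\,]}\tfrac{m(\epsilon-e^{mt}\|\thetabf_0-\thetabf(0)\|_2)}{e^{mt}-1}$; writing $v=e^{mt}$ and $a=\|\thetabf_0-\thetabf(0)\|_2$, the map $v\mapsto\tfrac{\epsilon-va}{v-1}$ has derivative $\tfrac{a-\epsilon}{(v-1)^2}$, which is negative when $\epsilon>a$ (if $\epsilon\le a$ the claimed bound is $\le 0$ and the corollary holds vacuously). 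Tracking the sign of $m$ — $v=e^{mt}$ is increasing in $t$ for $m>0$ and decreasing for $m<0$ — shows that in both cases $t\mapsto\tfrac{m(\epsilon-e^{mt}a)}{e^{mt}-1}$ is nonincreasing, so its infimum on $(0,\tilde{t}\,]$ is attained at $t=\tilde{t}$. For $m>0$ this gives exactly $c\,m(\epsilon-ae^{m\tilde{t}})(e^{m\tilde{t}}-1)^{-1}$, the stated threshold. For $m<0$ it gives $c\,\tfrac{m(\epsilon-e^{m\tilde{t}}a)}{e^{m\tilde{t}}-1}$, which I would then lower-bound by the simpler stated threshold $c|m|(\epsilon-a)$ via the elementary inequality $w(\epsilon-2a)+a>0$ for all $w\in(0,1)$ and $0\le a<\epsilon$ (here $w=e^{m\tilde{t}}$); consequently any $\eta$ below $c|m|(\epsilon-a)$ lies below the true infimum, so the hypothesis of Theorem~\ref{theorem:gf_gd} is met and $\|\thetabf_k-\thetabf(k\eta)\|_2\le\epsilon$ for $k\le\lfloor\tilde{t}/\eta\rfloor$.

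The one genuinely non-routine point — the main obstacle — is the $m<0$ branch, where the stated bound is a deliberate simplification rather than the exact infimum, so one must verify both the monotonicity-in-$v$ computation and the small algebraic inequality above to be sure the stated threshold is sufficiently small. Everything else is bookkeeping: the choice of $\beta_{\tilde{t},\epsilon},\gamma_{\tilde{t},\epsilon}$, the two elementary smoothness lemmas, and evaluating the three explicit integrals.
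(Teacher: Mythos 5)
Your proposal is correct and follows exactly the paper's route: it invokes Theorem~\ref{theorem:gf_gd} with $\beta_{\tilde{t},\epsilon}=\beta$, $\gamma_{\tilde{t},\epsilon}=\sqrt{2\beta f(\thetabf(0))}+\beta\epsilon$ (via $\|\nabla f\|_2\le\sqrt{2\beta f}$ and monotonicity of $f$ along the flow) and $m(\cdot)\equiv m$, then simplifies Equation~\eqref{eq:gf_gd_eta}. In fact you supply more detail than the paper's proof, which leaves the case analysis and the $m<0$ relaxation as an unstated "simplification"; your monotonicity argument in $v=e^{mt}$ and the inequality $w(\epsilon-2a)+a>0$ correctly justify that step.
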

\begin{proofsketch}{(for complete proof see Subappendix~\ref{app:proof:gf_gd_coarse})}
The result follows from applying Theorem~\ref{theorem:gf_gd} with $\beta_{\tilde{t} , \epsilon} = \beta$, $\gamma_{\tilde{t} , \epsilon} = \sqrt{2 \beta f ( \thetabf ( 0 ) )} + \beta \epsilon$ and~$m ( \cdot ) \equiv m$.
\end{proofsketch}

\section{Optimization of Deep Neural Networks is Roughly Convex} \label{sec:roughly_convex}

Section~\ref{sec:match} has shown that the extent to which gradient descent matches gradient flow depends on ``how convex'' the objective function is around the gradient flow trajectory.
More precisely, the larger (less negative or more positive) the minimal eigenvalue of the Hessian is around this trajectory, the longer gradient descent (with given step size) is guaranteed to follow it.\textsuperscript{\normalfont{\ref{note:match_smooth_lip}}}
In this section we establish that over training losses of deep neural networks (fully connected as well as convolutional) with homogeneous activations (\eg~linear, rectified linear or leaky rectified linear), when emanating from near-zero initialization (as commonly employed in practice), trajectories of gradient flow are ``roughly convex,'' in the sense that the minimal eigenvalue of the Hessian along them is far greater than in arbitrary points in space, particularly towards convergence.
This finding suggests that when optimizing deep neural networks, gradient descent may closely resemble gradient flow.
We demonstrate a formal application of the finding in Section~\ref{sec:lnn}, translating an analysis of gradient flow over deep linear neural networks into a guarantee of efficient convergence (to global minimum) for gradient descent, which applies \emph{almost surely} with respect to a random near-zero initialization.


\subsection{Fully Connected Architectures} \label{sec:roughly_convex:fnn}

Consider the mappings realized by a fully connected neural network with depth $n \in \N_{\geq 2}$, input dimension $d_0 \in \N$, hidden widths $d_1 , d_2 , \ldots , d_{n - 1} \in \N$, and output dimension~$d_n \in \N$:
\be
h_\thetabf : \R^{d_0} \to \R^{d_n}
~~ , ~~
h_\thetabf ( \x ) = W_n \, \sigma ( W_{n - 1} \, \sigma ( W_{n - 2} \, \cdots \, \sigma ( W_1 \x ) ) \cdots )
\text{\,,}
\label{eq:fnn}
\ee
where: 
$W_j \,{\in}\, \R^{d_j , d_{j - 1}}$, $j \,{=}\, 1 , 2 , ... \, , n$, are learned weight matrices;
$\thetabf \,{\in}\, \R^d$, with $d \,{:=}\, \sum_{j = 1}^n d_j d_{j - 1}$, is their arrangement as a vector;\note{%
The exact order by which the entries of $W_1 , W_2 , \ldots , W_n$ are placed in~$\thetabf$ is insignificant for our purposes~---~all that matters is that the same order be used throughout.
}
and 
$\sigma : \R \to \R$ is a predetermined activation function that operates element-wise when applied to a vector.\note{%
Our analysis can easily be extended to account for different activation functions at different hidden layers.
We assume identical activation functions for simplicity of presentation.
}
We assume that~$\sigma ( \cdot )$ is (positively) \emph{homogeneous}, meaning $\sigma ( c z ) = c \, \sigma ( z )$ for all $c \geq 0 , z \in \R$.
This allows for linear ($\sigma ( z ) = z$), as well as the commonly employed rectified linear ($\sigma ( z ) = \max \{ z , 0 \}$) and leaky rectified linear ($\sigma ( z ) = \max \{ z , \alphabar z \}$ for some $0 < \alphabar < 1$) activations.

Let $\Y$ be a set of possible labels, and let $\S = \big( ( \x_i , y_i ) \big)_{i = 1}^{| \S |}$, with $\x_i \in \R^{d_0} , y_i \in \Y$ for $i = 1 , 2 , \ldots , | \S |$, be a sequence of labeled inputs.
Given a loss function $\ell : \R^{d_n} \times \Y \to \R$ convex and twice continuously differentiable in its first argument (common choices include square, logistic and exponential losses), we learn the weights of the neural network by minimizing its \emph{training loss}~---~average loss over elements of~$\S$:
\be
f : \R^d \to \R
~~ , ~~
f ( \thetabf ) = \frac{1}{| \S |} \sum\nolimits_{i = 1}^{| \S |} \ell ( h_\thetabf ( \x_i ) , y_i )
\text{\,.}
\label{eq:train_loss}
\ee

Subsubsections \ref{sec:roughly_convex:fnn:lin} and~\ref{sec:roughly_convex:fnn:non_lin} below show (for linear and non-linear activation functions, respectively) that although the minimal eigenvalue of~$\nabla^2 f ( \thetabf )$ (Hessian of training loss)~---~denoted $\lambda_{min} ( \nabla^2 f ( \thetabf ) )$~---~can in general be arbitrarily negative, along trajectories of gradient flow (which emanate from near-zero initialization) it is no less than moderately negative, approaching non-negativity towards convergence.
In light of Section~\ref{sec:match}, this suggests that over fully connected deep neural networks, gradient flow may lend itself to approximation by gradient descent~---~a prospect we confirm (for a case with linear activation) in Section~\ref{sec:lnn}.
 
\subsubsection{Linear Activation} \label{sec:roughly_convex:fnn:lin}

Assume that the activation function of the fully connected neural network (Equation~\eqref{eq:fnn}) is linear, \ie~$\sigma ( z ) = z$, and define the \emph{end-to-end matrix}:
\be
W_{n : 1} := W_n W_{n - 1} \cdots W_1 \in \R^{d_n , d_0}
\text{\,.}
\label{eq:e2e}
\ee
The mappings realized by the network can then be written as $h_\thetabf ( \x ) = W_{n : 1} \x$, and the training loss as $f ( \thetabf ) = \phi ( W_{n : 1} )$, where
\be
\phi : \R^{d_n , d_0} \to \R
~~ , ~~
\phi ( W ) = \frac{1}{| \S |} \sum\nolimits_{i = 1}^{| \S |} \ell ( W \x_i , y_i )
\label{eq:train_loss_e2e}
\ee
is convex and twice continuously differentiable.
Lemma~\ref{lemma:lnn_hess} below expresses~$\nabla^2 f ( \thetabf )$ in this case.
\begin{lemma}
\label{lemma:lnn_hess}
For any~$\thetabf \in \R^d$, regard $\nabla^2 f ( \thetabf )$ not only as a (symmetric) matrix in~$\R^{d , d}$, but also as a quadratic form $\nabla^2 f ( \thetabf ) [ \, \cdot \, ]$ that intakes a tuple $( \Delta W_1 , \Delta W_2 , ...\, , \Delta W_n ) \in \R^{d_1 , d_0} \times \R^{d_2 , d_1} \times \cdots \times \R^{d_n , d_{n - 1}}$, arranges it as a vector $\Delta \thetabf \in \R^d$ (in correspondence with how weight matrices $W_1 , W_2 , ...\, , W_n$ are arranged to create~$\thetabf$), and returns $\Delta \thetabf^\top \, \nabla^2 f ( \thetabf ) \, \Delta \thetabf \in \R$.
Similarly, for any $W \in \R^{d_n , d_0}$, regard $\nabla^2 \phi ( W )$ as a quadratic form $\nabla^2 \phi ( W ) [ \, \cdot \, ]$ that intakes a matrix in~$\R^{d_n , d_0}$ and returns a scalar (non-negative since $\phi ( \cdot )$ is convex).
Then, $\nabla^2 f ( \thetabf )$~is given by:
\bea
\nabla^2 f ( \thetabf ) \, [ \Delta W_1 , \Delta W_2 , ...\, , \Delta W_n ] = \nabla^2 \phi ( W_{n : 1} ) \left[ {\textstyle \sum\nolimits_{j = 1}^n} W_{n : j + 1} ( \Delta W_j ) W_{j - 1 : 1} \right] 
\qquad 
\label{eq:lnn_hess} \\
+ 2 \Tr \left( \nabla \phi ( W_{n : 1} )^\top \, {\textstyle \sum\nolimits_{1 \leq j < j' \leq n}} W_{n : j' + 1} ( \Delta W_{j'} ) W_{j' - 1 : j + 1} ( \Delta W_j ) W_{j - 1 : 1} \right)
\text{\,,}
\nonumber
\eea
where $W_{j' : j}$, for any $j , j' \in \{ 1 , 2 , \ldots , n \}$, is defined as $W_{j'} W_{j' - 1} \cdots W_j$ if $j \leq j'$, and as an identity matrix (with size to be inferred by context) otherwise.
\end{lemma}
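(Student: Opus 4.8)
The plan is to reduce the computation of the Hessian quadratic form to a single-variable calculation along a line. Fix $\thetabf \in \R^d$ together with a direction represented by a tuple $(\Delta W_1, \Delta W_2, \ldots, \Delta W_n)$, and let $\Delta\thetabf \in \R^d$ be its vectorization (arranged in the same order as $\thetabf$). Define $g(t) := f(\thetabf + t\,\Delta\thetabf)$ for $t$ near~$0$. Since $\phi(\cdot)$ is twice continuously differentiable and $W_{n:1}$ is a polynomial (hence smooth) function of $\thetabf$, the function $f(\cdot) = \phi(W_{n:1})$ is twice continuously differentiable, so $g$ is twice differentiable with $g''(0) = \Delta\thetabf^\top \nabla^2 f(\thetabf)\,\Delta\thetabf = \nabla^2 f(\thetabf)[\Delta W_1, \ldots, \Delta W_n]$. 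It therefore suffices to compute $g''(0)$.

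Write $W_j(t) := W_j + t\,\Delta W_j$ and $M(t) := W_n(t) W_{n-1}(t) \cdots W_1(t) \in \R^{d_n, d_0}$, so $g(t) = \phi(M(t))$. First I would apply the product rule to the $n$-fold matrix product to get $M'(t) = \sum_{j=1}^n W_n(t) \cdots W_{j+1}(t)\,(\Delta W_j)\,W_{j-1}(t) \cdots W_1(t)$, which at $t = 0$ equals $\sum_{j=1}^n W_{n:j+1}(\Delta W_j)W_{j-1:1}$. Differentiating $M'(t)$ once more, the $j$-th summand produces a term for each remaining factor $W_{j'}(t)$ with $j' \neq j$, obtained by replacing that factor with $\Delta W_{j'}$. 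Collecting these at $t = 0$ and noting that each unordered pair $\{j, j'\}$ (say $j < j'$) arises exactly twice — once from differentiating the $j'$-th summand with respect to position $j$, and once from differentiating the $j$-th summand with respect to position $j'$, both yielding $W_{n:j'+1}(\Delta W_{j'})W_{j'-1:j+1}(\Delta W_j)W_{j-1:1}$ — gives $M''(0) = 2\sum_{1 \le j < j' \le n} W_{n:j'+1}(\Delta W_{j'})W_{j'-1:j+1}(\Delta W_j)W_{j-1:1}$.

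Finally I would invoke the second-order chain rule for $\phi$ (valid since $\phi$ is twice continuously differentiable): $g''(0) = \nabla^2\phi(M(0))[M'(0)] + \Tr\big(\nabla\phi(M(0))^\top M''(0)\big)$, where $\nabla^2\phi(W)[H]$ abbreviates the quadratic form $\nabla^2\phi(W)[H,H]$ and $\nabla\phi$ is taken with respect to the trace inner product. Substituting $M(0) = W_{n:1}$ and the expressions just derived for $M'(0)$ and $M''(0)$ yields precisely Equation~\eqref{eq:lnn_hess}.

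The step I expect to be the main obstacle is the second differentiation of $M(t)$: one must keep careful track of where each inserted factor $\Delta W_j$, $\Delta W_{j'}$ sits relative to the surrounding sub-products $W_{j':j}$, and confirm the multiplicity~$2$ for every pair $j < j'$ (while there is no ``diagonal'' $j = j'$ contribution, since $W_j(t)$ is affine in $t$). This bookkeeping can be made rigorous either by induction on~$n$ or by a careful application of the Leibniz rule to the $n$-fold product; once it is settled, the remainder of the proof is a direct substitution.
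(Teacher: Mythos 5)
Your proposal is correct and follows essentially the same route as the paper: both expand the product $(W_n+\Delta W_n)\cdots(W_1+\Delta W_1)$ into first-order terms $\sum_j W_{n:j+1}(\Delta W_j)W_{j-1:1}$ and second-order terms $\sum_{j<j'}W_{n:j'+1}(\Delta W_{j'})W_{j'-1:j+1}(\Delta W_j)W_{j-1:1}$ (with multiplicity~$2$), and then combine these with the second-order Taylor expansion of $\phi$ at $W_{n:1}$. The only difference is how the Hessian is identified at the end — you compute $g''(0)$ along a line via the second-order chain rule, while the paper extracts the quadratic term of the multivariate Taylor expansion by uniqueness; your variant is, if anything, slightly cleaner since it avoids bookkeeping the $o(\|\Delta\thetabf\|_2^2)$ remainder terms.
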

\begin{proofsketch}{(for complete proof see Subappendix~\ref{app:proof:lnn_hess})}
With $\Delta \thetabf$ an arbitrary vector in~$\R^d$, and $( \Delta W_1 , \Delta W_2 , ...\, , \Delta W_n )$ its corresponding matrix tuple, we expand:
\[
f ( \thetabf + \Delta \thetabf ) = \phi \big( ( W_n + \Delta W_n ) ( W_{n - 1} + \Delta W_{n - 1} ) \cdots (W_1 + \Delta W_1) \big)
\text{\,,}
\]
and extract $\nabla^2 f ( \thetabf )$ from the second order terms.
\end{proofsketch}
The following proposition makes use of Lemma~\ref{lemma:lnn_hess} to show that (under mild conditions) $\lambda_{min} ( \nabla^2 f ( \thetabf ) )$ can be arbitrarily negative, \ie~$\inf_{\thetabf \in \R^d} \lambda_{min} ( \nabla^2 f ( \thetabf ) ) = -\infty$.
\begin{proposition}
\label{prop:lnn_hess_arbitrary_neg}
Assume that the network is deep ($n \geq 3$), and that the zero mapping is not a global minimizer of the training loss (meaning~$\nabla \phi ( 0 ) \neq 0$).\note{%
Both of these assumptions are necessary, in the sense that removing any of them (without imposing further assumptions) renders the proposition false~---~see Claim~\ref{claim:necessity_lnn} in Appendix~\ref{app:necessity}.
}
Then $\inf_{\thetabf \in \R^d} \lambda_{min} ( \nabla^2 f ( \thetabf ) ) = -\infty$.
\end{proposition}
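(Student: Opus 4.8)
\medskip

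\noindent\textbf{Proof proposal.}
The plan is to produce a one-parameter family of parameter vectors $\thetabf_t$ together with a fixed, $t$-independent perturbation direction along which the Hessian quadratic form of Lemma~\ref{lemma:lnn_hess} evaluates to a quantity that is linear in~$t$ with a sign we control. Since for any unit vector~$\vv$ one has $\lambda_{min}(\nabla^2 f(\thetabf_t)) \leq \vv^\top \nabla^2 f(\thetabf_t)\, \vv$ (Rayleigh quotient), this forces $\lambda_{min}(\nabla^2 f(\thetabf_t)) \to -\infty$, hence $\inf_{\thetabf} \lambda_{min}(\nabla^2 f(\thetabf)) = -\infty$. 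The construction is tailored so that the first, positive-semidefinite term $\nabla^2\phi(W_{n:1})[\,\cdot\,]$ in Equation~\eqref{eq:lnn_hess} vanishes identically, leaving only the bilinear ``cross'' term, which I will arrange to be a single trace against $\nabla\phi(0)$ --- and this is precisely where the hypothesis $\nabla\phi(0) \neq 0$ enters.

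Concretely, fix indices $p \in \{1,\dots,d_n\}$, $q \in \{1,\dots,d_0\}$ with $(\nabla\phi(0))_{pq} \neq 0$ (possible since $\nabla\phi(0) \neq 0$). For the perturbation I would move only the first two layers: set $\Delta W_j = 0$ for $j \geq 3$, and take the rank-one ``routing'' matrices $\Delta W_1 = e_1 e_q^\top \in \R^{d_1,d_0}$ and $\Delta W_2 = e_1 e_1^\top \in \R^{d_2,d_1}$, where each unlabeled $e_i$ denotes the first standard basis vector of the relevant coordinate space. For the base point~$\thetabf_t$ I would set $W_1 = W_2 = 0$ and choose the remaining matrices so that the partial product $W_{n:3} = W_n W_{n-1}\cdots W_3$ equals $t\, e_p e_1^\top$ --- for instance by letting $W_3,\dots,W_{n-1}$ be rank-one pass-through matrices $e_1 e_1^\top$ and $W_n = t\, e_p e_1^\top$ (for $n=3$ this range is empty and $W_3 = t\, e_p e_1^\top$). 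This is the point at which $n \geq 3$ is essential: when $n = 2$ the connecting factor $W_{n:3}$ is forced to be the identity and cannot absorb the scalar~$t$, consistent with the proposition failing for two-layer networks.

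With these choices $W_{n:1} = 0$ (because $W_1 = 0$), so both $\nabla^2\phi$ and $\nabla\phi$ in Equation~\eqref{eq:lnn_hess} are evaluated at the origin; the argument of the first term collapses to~$0$, since its only potentially nonzero summands (those with $j = 1,2$) carry a factor of $W_{n:2}$ --- which contains $W_2 = 0$ --- or a factor of $W_{1:1} = W_1 = 0$; and among the cross terms only $(j,j') = (1,2)$ survives, contributing $W_{n:3}(\Delta W_2)(\Delta W_1) = t\, e_p e_q^\top$. Substituting into Equation~\eqref{eq:lnn_hess}, the quadratic form equals $2t\,(\nabla\phi(0))_{pq}$, while the squared Euclidean norm of the perturbation tuple is $\|\Delta W_1\|_F^2 + \|\Delta W_2\|_F^2 = 2$, so the Rayleigh quotient is $t\,(\nabla\phi(0))_{pq}$, which tends to $-\infty$ for the appropriate sign of~$t$. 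I do not expect a genuine obstacle: the argument is a direct computation once the direction is guessed. The only part requiring care is checking, against the precise $W_{a:b}$ indexing convention of Lemma~\ref{lemma:lnn_hess}, that the first term really is zero and that no stray cross term survives --- one should verify the boundary cases $W_{0:1}$, $W_{1:2}$, $W_{1:1}$ and $W_{n:2}$ by hand --- together with the trivial remark that all hidden widths $d_1,\dots,d_{n-1} \geq 1$, so the rank-one matrices above are well-defined.
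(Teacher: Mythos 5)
Your proposal is correct and follows essentially the same route as the paper's proof: both set $W_1 = W_2 = 0$ so that $W_{n:1}=0$ and the positive-semidefinite term of Lemma~\ref{lemma:lnn_hess} vanishes, leave only the $(j,j')=(1,2)$ cross term, and scale the connecting product $W_{n:3}$ by an unbounded parameter to drive the Rayleigh quotient to $-\infty$. The only difference is cosmetic — you instantiate the matrices concretely with standard basis vectors routed through a nonzero entry of $\nabla\phi(0)$, whereas the paper works with an abstractly chosen tuple satisfying $\langle\nabla\phi(0),W_n'\cdots W_3'\Delta W_2'\Delta W_1'\rangle>0$ and rescales $W_3$.
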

\begin{proofsketch}{(for complete proof see Subappendix~\ref{app:proof:lnn_hess_arbitrary_neg})}
The proof is constructive~---~with $c > 0$ arbitrary, we define a point $\thetabf \in \R^d$, and a non-zero translation vector $\Delta \thetabf \in \R^d \setminus \{ \0 \}$, such that $\Delta \thetabf^\top \, \nabla^2 f ( \thetabf ) \, \Delta \thetabf = - c \| \Delta \thetabf \|_2^2$.
\end{proofsketch}
Building on Lemma~\ref{lemma:lnn_hess}, Lemma~\ref{lemma:lnn_hess_lb} below provides a lower bound on~$\lambda_{min} ( \nabla^2 f ( \thetabf ) )$.
\begin{lemma}
\label{lemma:lnn_hess_lb}
For any~$\thetabf \in \R^d$:\note{%
Note that by convention, an empty product (\ie~a product over the elements of the empty set) is equal to one.
\label{note:empty_prod}
} 
\be
\hspace{-2mm}
\lambda_{min} ( \nabla^2 f ( \thetabf ) ) \geq - ( n \hspace{0.25mm} {-} \hspace{0.25mm} 1 ) \hspace{-0.25mm} \sqrt{\min \{ d_0 , d_n \}} \, \| \nabla \phi ( W_{n : 1} ) \|_{\hspace{-0.25mm} Frobenius} \hspace{-1mm} \max_{\substack{\J \subseteq \{ 1 , 2 , \ldots , n \} \\[0.25mm] | \J | = n - 2}} \hspace{-0.5mm} \prod_{j \in \J} \hspace{-1mm} \| W_j \|_{\hspace{-0.25mm} spectral}
\hspace{-0.5mm} \text{\,.} \hspace{-0.5mm}
\label{eq:lnn_hess_lb}
\vspace{-1mm}
\ee
\end{lemma}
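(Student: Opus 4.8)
The plan is to combine the explicit expression for $\nabla^2 f(\thetabf)$ from Lemma~\ref{lemma:lnn_hess} with the convexity of $\phi(\cdot)$ and a handful of routine matrix-norm estimates. First I would fix $\thetabf\in\R^d$, let $\Delta\thetabf\in\R^d$ be arbitrary with associated weight-perturbation tuple $(\Delta W_1,\ldots,\Delta W_n)$, and recall $\|\Delta\thetabf\|_2^2=\sum_{j=1}^{n}\|\Delta W_j\|_{Frobenius}^2$. Since $\phi(\cdot)$ is convex, the quadratic form $\nabla^2\phi(W_{n:1})[\,\cdot\,]$ is non-negative, so the first summand on the right-hand side of Equation~\eqref{eq:lnn_hess} is nonnegative and may be discarded, leaving
\[
\Delta\thetabf^\top\nabla^2 f(\thetabf)\,\Delta\thetabf \;\ge\; 2\sum_{1\le j<j'\le n}\Tr\!\left(\nabla\phi(W_{n:1})^\top\,W_{n:j'+1}(\Delta W_{j'})W_{j'-1:j+1}(\Delta W_j)W_{j-1:1}\right)\text{\,.}
\]

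Next I would estimate a single cross term. Fix a pair $j<j'$ and let $Q$ denote the matrix product $W_{n:j'+1}(\Delta W_{j'})W_{j'-1:j+1}(\Delta W_j)W_{j-1:1}$. By the duality between the nuclear and spectral norms (a special case of von Neumann's trace inequality), $\bigl|\Tr(\nabla\phi(W_{n:1})^\top Q)\bigr|\le\|\nabla\phi(W_{n:1})\|_{nuclear}\,\|Q\|_{spectral}$. Sub-multiplicativity of the spectral norm, followed by $\|\Delta W_j\|_{spectral}\le\|\Delta W_j\|_{Frobenius}$ (and likewise for $\Delta W_{j'}$), gives $\|Q\|_{spectral}\le\bigl(\prod_{k\in\{1,\ldots,n\}\setminus\{j,j'\}}\|W_k\|_{spectral}\bigr)\|\Delta W_{j'}\|_{Frobenius}\|\Delta W_j\|_{Frobenius}$, where I have used that the three subproducts $W_{n:j'+1}$, $W_{j'-1:j+1}$, $W_{j-1:1}$ together involve exactly the $n-2$ weight matrices whose index lies outside $\{j,j'\}$ (a subproduct may be empty, i.e.\ an identity block, when $j$ or $j'$ is extreme); in particular $\prod_{k\notin\{j,j'\}}\|W_k\|_{spectral}\le\max_{|\J|=n-2}\prod_{k\in\J}\|W_k\|_{spectral}$. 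Finally, since $\nabla\phi(W_{n:1})\in\R^{d_n,d_0}$ has rank at most $\min\{d_0,d_n\}$, a Cauchy--Schwarz bound over its nonzero singular values yields $\|\nabla\phi(W_{n:1})\|_{nuclear}\le\sqrt{\min\{d_0,d_n\}}\,\|\nabla\phi(W_{n:1})\|_{Frobenius}$.

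Substituting these per-pair estimates into the displayed inequality and summing, the last ingredient is the elementary bound $\sum_{1\le j<j'\le n}\|\Delta W_j\|_{Frobenius}\|\Delta W_{j'}\|_{Frobenius}\le\tfrac{n-1}{2}\sum_{j=1}^{n}\|\Delta W_j\|_{Frobenius}^2=\tfrac{n-1}{2}\|\Delta\thetabf\|_2^2$, which follows from $\bigl(\sum_{j=1}^{n}\|\Delta W_j\|_{Frobenius}\bigr)^2\le n\sum_{j=1}^{n}\|\Delta W_j\|_{Frobenius}^2$. The right-hand side then becomes exactly $-(n-1)\sqrt{\min\{d_0,d_n\}}\,\|\nabla\phi(W_{n:1})\|_{Frobenius}\bigl(\max_{|\J|=n-2}\prod_{k\in\J}\|W_k\|_{spectral}\bigr)\|\Delta\thetabf\|_2^2$; since $\Delta\thetabf$ was arbitrary, dividing by $\|\Delta\thetabf\|_2^2$ and taking the infimum over $\Delta\thetabf\neq\0$ gives the asserted lower bound on $\lambda_{min}(\nabla^2 f(\thetabf))$. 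I do not expect a genuine obstacle here; the only step requiring a bit of care is the index bookkeeping for $Q$ (checking that precisely $n-2$ genuine weight matrices appear, with identity blocks at the ends in boundary cases) together with the routine passage between the nuclear, spectral and Frobenius norms — and noting that it is precisely the nuclear-to-Frobenius step that introduces the $\sqrt{\min\{d_0,d_n\}}$ factor.
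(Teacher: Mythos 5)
Your proposal is correct and follows essentially the same route as the paper's proof: discard the positive-semidefinite $\nabla^2\phi$ term by convexity, bound the cross terms via the nuclear/spectral trace duality (von Neumann), pass from nuclear to Frobenius via the rank bound $\min\{d_0,d_n\}$, use sub-multiplicativity with $\|\Delta W_j\|_{spectral}\le\|\Delta W_j\|_{Frobenius}$, and finish with $\sum_{j<j'}\|\Delta W_j\|_F\|\Delta W_{j'}\|_F\le\tfrac{n-1}{2}\sum_j\|\Delta W_j\|_F^2$. The only cosmetic difference is that you apply the trace inequality term-by-term whereas the paper applies it to the whole sum and then uses the triangle inequality on the spectral norm; the two orderings are equivalent.
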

\begin{proofsketch}{(for complete proof see Subappendix~\ref{app:proof:lnn_hess_lb})}
Appealing to Lemma~\ref{lemma:lnn_hess}, we lower bound the right-hand side of Equation~\eqref{eq:lnn_hess}.
Convexity of~$\phi ( \cdot )$ implies that the first summand is non-negative.
For the second summand, we use known matrix inequalities to establish a lower bound of $c \sum_{j = 1}^n \| \Delta W_j \|_{Frobenius}^2$, with $c$ being the expression on the right-hand side of Equation~\eqref{eq:lnn_hess_lb}.
\end{proofsketch}
Assuming the training loss is non-constant and the network is deep ($n \geq 3$), the infimum (over $\thetabf \in \R^d$) of the lower bound in Equation~\eqref{eq:lnn_hess_lb} is minus infinity.
In particular, if $\thetabf$ is not a global minimizer ($\nabla \phi ( W_{n : 1} ) \,{\neq}\, 0$) and at least $n - 2$ of its weight matrices $W_1 , W_2 , ...\, , W_n$ are non-zero, then by rescaling the latter it is possible to take the lower bound to minus infinity while keeping the end-to-end matrix~$W_{n : 1}$ (and thus the input-output mapping~$h_\thetabf ( \cdot )$ and the training loss value~$f ( \thetabf )$) intact.
However, gradient flow over fully connected neural networks (with homogeneous activations) initialized near zero is known to maintain balance between weight matrices~---~see~\cite{du2018algorithmic}~---~and so along its trajectories the lower bound in Equation~\eqref{eq:lnn_hess_lb} takes a much tighter form.
This is formalized in Proposition~\ref{prop:lnn_hess_lb_gf}~below.
\begin{proposition}
\label{prop:lnn_hess_lb_gf}
If~$\thetabf \in \R^d$ resides on a trajectory of gradient flow (over~$f ( \cdot )$) emanating from some point $\thetabf_s \in \R^d$, with $\| \thetabf_s \|_2 \leq \epsilon$ for some $\epsilon \in \big( 0 , \frac{1}{2 n} \big]$, then:
\be
\lambda_{min} ( \nabla^2 f ( \thetabf ) ) \geq - ( n - 1 ) \sqrt{\min \{ d_0 , d_n \}} \, \| \nabla \phi ( W_{n : 1} ) \|_{Frobenius} \| W_{n : 1} \|_{spectral}^{1 - 2 / n} - c \, \epsilon^{1 - 2 / n}
\text{\,,}
\label{eq:lnn_hess_lb_gf}
\ee
where $c:=\frac{4 n ( n - 1 )}{(4n)^{2/n}}\sqrt{\min\{d_{0},d_{n}\}}\,\|\nabla\phi(W_{n:1})\|_{Frobenius}\max\big\{1,\max\{\|W_{j}\|_{spectral}\}_{j=1}^{n}\big\}^{2(n-2)}$.
\end{proposition}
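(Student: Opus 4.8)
The plan is to bound the single quantity appearing in Lemma~\ref{lemma:lnn_hess_lb} that does not already appear in Equation~\eqref{eq:lnn_hess_lb_gf}~---~namely $P := \max_{\J \subseteq \{1, \ldots, n\},\, |\J| = n - 2} \prod_{j \in \J} \|W_j\|_{spectral}$~---~by exploiting the fact that gradient flow from a near-zero initialization keeps the weight matrices approximately balanced. Concretely, it suffices to prove $P \le \|W_{n:1}\|_{spectral}^{1 - 2/n} + c'\,\epsilon^{1 - 2/n}$ with $c' := (4n)^{1 - 2/n} \max\big\{ 1, \max_j \|W_j\|_{spectral} \big\}^{2(n - 2)}$, since inserting this into Lemma~\ref{lemma:lnn_hess_lb} and recalling the definition of~$c$ reproduces Equation~\eqref{eq:lnn_hess_lb_gf} exactly. (For $n = 2$ the claimed bound is already implied by Lemma~\ref{lemma:lnn_hess_lb}, so one may assume $n \ge 3$.)

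First I would invoke~\cite{du2018algorithmic}: along gradient flow over~$f(\cdot)$, each matrix $W_{j + 1}^\top W_{j + 1} - W_j W_j^\top$, $j \in \{1, \ldots, n - 1\}$, is conserved. At the initialization~$\thetabf_s$ every weight matrix is a sub-block of the vector~$\thetabf_s$, hence has spectral norm at most $\|\thetabf_s\|_2 \le \epsilon$; so $W_{j + 1}^\top W_{j + 1}$ and $W_j W_j^\top$ are positive semidefinite matrices of spectral norm at most~$\epsilon^2$ there, and hence so is their difference. By the conservation law, the same bound holds at the point~$\thetabf$ on the trajectory, so $\|W_{j + 1}^\top W_{j + 1} - W_j W_j^\top\|_{spectral} \le \epsilon^2$ for every~$j$. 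From this I would extract two facts: (i)~by Weyl's inequality, $\big|\, \|W_{j + 1}\|_{spectral}^2 - \|W_j\|_{spectral}^2 \,\big| \le \epsilon^2$, so, writing $M := \max_j \|W_j\|_{spectral}$, every $\|W_j\|_{spectral}^2$ lies in $\big[\, M^2 - (n - 1)\epsilon^2,\ M^2 \,\big]$; and (ii)~chaining the near-balancedness relations along the product $W_n W_{n - 1} \cdots W_1$~---~a perturbation of the exact-balancedness situation, in which consecutive factors share top singular directions so that their top singular values multiply and $\|W_{n:1}\|_{spectral} = \|W_j\|_{spectral}^n$~---~yields a lower bound of the form $\|W_{n:1}\|_{spectral} \ge \big( M^2 - C_n \epsilon^2 \big)_+^{n/2}$ for a constant $C_n$ depending only on~$n$.

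Given (i) and (ii), the rest is arithmetic: (i) gives $P \le (\max_j \|W_j\|_{spectral})^{n - 2} = (M^2)^{(n - 2)/2}$, and (ii) gives $\|W_{n:1}\|_{spectral}^{1 - 2/n} \ge \big( M^2 - C_n \epsilon^2 \big)_+^{(n - 2)/2}$, whence
\[
P - \|W_{n:1}\|_{spectral}^{1 - 2/n} \;\le\; (M^2)^{(n - 2)/2} - \big( M^2 - C_n \epsilon^2 \big)_+^{(n - 2)/2} \,.
\]
I would then bound the right-hand side by splitting on the exponent: for $n \in \{3, 4\}$ it is at most~$1$, so subadditivity of $x \mapsto x^{(n - 2)/2}$ bounds the difference by $(C_n \epsilon^2)^{(n - 2)/2}$, while for $n \ge 5$ it exceeds~$1$, so the mean value theorem bounds it by $\tfrac{n - 2}{2} M^{n - 4} C_n \epsilon^2$. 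In both regimes, using $\epsilon \le \tfrac{1}{2n}$, $\epsilon^2 \le \epsilon^{1 - 2/n}$ (as $\epsilon < 1$), and $M^{n - 4} \le \max\{1, M\}^{2(n - 2)}$, the difference is at most $c'\, \epsilon^{1 - 2/n}$, with $c'$ of the stated form.

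The step I expect to be the main obstacle is~(ii): carefully propagating the per-layer balancedness errors of order~$\epsilon^2$ through the depth-$n$ product $W_n \cdots W_1$ while keeping the accumulated deviation from the exact identity $\|W_{n:1}\|_{spectral} = \|W_j\|_{spectral}^n$ controlled by a clean constant (depending only on~$n$) times~$\epsilon^2$, and tracking how that constant depends on the weight-matrix norms. The exact-balancedness argument relies on an alignment of top singular subspaces across consecutive layers that holds only approximately here, so the perturbation must be managed layer by layer~---~for instance, by pushing an approximate top singular vector through the product $W_n \cdots W_1$ and quantifying the norm loss incurred at each factor via the $\epsilon^2$ balancedness bound. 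Everything else~---~the conservation law of~\cite{du2018algorithmic}, Weyl's inequality, and the concluding arithmetic~---~is routine.
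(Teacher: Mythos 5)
Your proposal is correct, and it reaches Equation~\eqref{eq:lnn_hess_lb_gf} by a genuinely different route than the paper's. Both arguments start from Lemma~\ref{lemma:lnn_hess_lb} and the conservation of $W_{j+1}^\top W_{j+1}-W_jW_j^\top$ from~\cite{du2018algorithmic}, but the key perturbation step differs. The paper proves an auxiliary lemma (via a telescoping argument on the quantities $a_i=\boldsymbol{v}^\top A_{n-i:1}^\top(A_{n-(i-1)}^\top A_{n-(i-1)})^i A_{n-i:1}\boldsymbol{v}$) that \emph{upper}-bounds $\max_j\|W_j\|_{spectral}^{n}$ by $\|W_{n:1}\|_{spectral}+4n\epsilon\max\{1,\max_j\|W_j\|_{spectral}\}^{2n}$ — an additive error of order $\epsilon$ carrying a norm-dependent factor — and then applies subadditivity of $x\mapsto x^{(n-2)/n}$ once, with no case split on $n$. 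You instead \emph{lower}-bound $\|W_{n:1}\|_{spectral}$ by $(M^2-C_n\epsilon^2)_+^{n/2}$ with $C_n$ depending only on $n$, and then compare $M^{n-2}$ against $(M^2-C_n\epsilon^2)_+^{(n-2)/2}$ by subadditivity (for $n\in\{3,4\}$) or the mean value theorem (for $n\geq5$), reintroducing the $\max\{1,M\}^{2(n-2)}$ factor only at that last step. The step you flag as the main obstacle does close, and by exactly the mechanism you name: with $v$ the top right singular vector of $W_1$ and $u_j$ the normalized image of $u_{j-1}$ under $W_j$, one has $\|W_{j+1}u_j\|^2\geq\|W_j^\top u_j\|^2-\epsilon^2\geq\|W_ju_{j-1}\|^2-\epsilon^2$, where the first inequality is the conserved balancedness bound and the second is Cauchy--Schwarz applied to $\|W_j^\top W_ju_{j-1}\|\geq u_{j-1}^\top W_j^\top W_ju_{j-1}$; chaining over layers and combining with your Weyl estimate gives $\|W_{n:1}\|_{spectral}\geq(M^2-2(n-1)\epsilon^2)_+^{n/2}$ (one trivially discards the degenerate case where some intermediate image vanishes, since then the clipped base is zero). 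Your perturbation bound is in fact tighter than the paper's in its dependence on the weight norms — the error inside the base is dimension- and norm-free — at the cost of the $n$-dependent case analysis in the final arithmetic; the paper's version is coarser per layer but concludes uniformly in $n$.
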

\begin{proofsketch}{(for complete proof see Subappendix~\ref{app:proof:lnn_hess_lb_gf})}
By the analysis of~\cite{du2018algorithmic}, the quantities $W_{j + 1}^\top W_{j + 1} - W_j W_j^\top$, $j = 1 , 2 , ...\, , n \,{-}\, 1$, are invariant (constant) along a gradient flow trajectory, and therefore small if initialization is such.
This implies that along a trajectory emanating from near-zero initialization, for every $j = 1 , 2 , ...\, , n \,{-}\, 1$, the singular values of~$W_j$ are similar to those of~$W_{j + 1}$, and the left singular vectors of~$W_j$ match the right ones of~$W_{j + 1}$.
Products~of adjacent weight matrices thus simplify, and we obtain $\smash{\| W_j \|_{spectral} \approx \| W_{n : 1} \|_{spectral}^{1 / n}}$ for $j = 1 , 2 , ...\, , n$.
Plugging this into Equation~\eqref{eq:lnn_hess_lb} yields the desired result (Equation~\eqref{eq:lnn_hess_lb_gf}).
\end{proofsketch}
Assume the network is deep ($n \geq 3$), and consider a trajectory of gradient flow (over~$f ( \cdot )$) emanating from near-zero initialization.
For every point on the trajectory, Proposition~\ref{prop:lnn_hess_lb_gf} may be applied with small~$\epsilon$, leading the lower bound in Equation~\eqref{eq:lnn_hess_lb_gf} to depend primarily on the sizes (norms) of the end-to-end matrix~$W_{n : 1}$ and the gradient of the loss with respect to it, \ie~$\nabla \phi ( W_{n : 1} )$ (see Equations \eqref{eq:e2e} and~\eqref{eq:train_loss_e2e}).
In the course of optimization, $W_{n : 1}$~is initially small, and (since the loss $f ( \thetabf ) = \phi ( W_{n : 1} )$ is monotonically non-increasing) remains confined to sublevel sets of~$\phi ( \cdot )$ (which is convex) thereafter.
$\nabla \phi ( W_{n : 1} )$~on the other hand tends to zero upon convergence to global minimum.
We conclude that the lower bound on~$\lambda_{min} ( \nabla^2 f ( \thetabf ) )$ in Equation~\eqref{eq:lnn_hess_lb_gf} starts off slightly negative, and approaches non-negativity (if and) as the trajectory converges to global minimum.
In light of Section~\ref{sec:match}, this implies that the gradient flow trajectory may lend itself to approximation by gradient descent.
Indeed, the results of the current Subsubsection are used in Section~\ref{sec:lnn} to establish proximity between gradient flow and gradient descent, thereby translating an analysis of gradient flow into a guarantee of efficient convergence (to global minimum) for gradient descent.

\subsubsection{Non-Linear Activation} \label{sec:roughly_convex:fnn:non_lin}

When the (homogeneous) activation function of the fully connected neural network (Equation~\eqref{eq:fnn}) is non-linear, \ie~$\sigma ( z ) = \alpha \max \{ z , 0 \} - \alphabar \max \{ -z , 0 \}$ for some $\alpha , \alphabar \in \R$, $\alpha \neq \alphabar$, the training loss~$f ( \cdot )$ is (typically) not everywhere differentiable.
It is however locally Lipschitz thus differentiable almost everywhere (see Theorem~9.1.2 in~\cite{borwein2010convex}).
Moreover, as established by Proposition~\ref{prop:region_fnn} in Appendix~\ref{app:region}, for almost every $\thetabf' \in \R^d$ there exist diagonal matrices $D'_{i , j} \in \R^{d_j , d_j}$, $i = 1 , 2 , ...\, , | \S |$, $j = 1 , 2 , ...\, , n - 1$, with diagonal elements in~$\{ \alpha , \alphabar \}$, such that~$f ( \cdot )$ coincides with the function
\be
\thetabf \mapsto \frac{1}{| \S |} \sum\nolimits_{i = 1}^{| \S |} \ell ( W_n D'_{i , n - 1} W_{n - 1} D'_{i , n - 2} W_{n - 2} \cdots D'_{i , 1} W_1 \x_i , y_i )
\label{eq:train_loss_fnn_region}
\ee
on an open region~$\D_{\thetabf'} \subseteq \R^d$ containing~$\thetabf'$, that is closed under positive rescaling of weight matrices (\ie~under $( W_1 , W_2 , ...\, , W_n ) \mapsto ( c_1 W_1 , c_2 W_2 , ...\, , c_n W_n )$ with $c_1 , c_2 , ...\, , c_n > 0$).
The notion of gradient flow over a non-differentiable locally Lipschitz objective function is typically formalized via differential inclusion and Clarke subdifferentials (\cf~\cite{davis2020stochastic,du2018algorithmic}).
To our knowledge there exists no analogue of the Fundamental Theorem (Theorem~\ref{theorem:fundamental}) that applies to this formalization, thus we focus on (open) regions of the form~$\D_{\thetabf'}$, where $f ( \cdot )$ is given by Equation~\eqref{eq:train_loss_fnn_region}, and in particular is twice continuously differentiable.
On such regions the analysis of Section~\ref{sec:match} applies, and since they constitute the entire weight space but a negligible (closed and zero measure) set, they can facilitate a ``piecewise characterization'' of the discrepancy between gradient flow and gradient descent.

Lemma~\ref{lemma:fnn_region_hess} below expresses~$\nabla^2 f ( \thetabf )$ for~$\thetabf \in \D_{\thetabf'}$.
\begin{lemma}
\label{lemma:fnn_region_hess}
Let $\thetabf \in \D_{\thetabf'}$.
For any $i \in \{ 1 , 2 , ...\, , | \S | \}$ and $j , j' \in \{ 1 , 2 , ...\, , n \}$ define $( D'_{i , *} W_* )_{j' : j}$ to be the matrix $D'_{i , j'} W_{j'} D'_{i , j' - 1} W_{j' - 1} \cdots D'_{i , j} W_j$ (where by convention $D'_{i , n} \in \R^{d_n , d_n}$ stands for identity) if $j \leq j'$, and an identity matrix (with size to be inferred by context) otherwise.
For $i \in \{ 1 , 2 , ...\, , | \S | \}$ let $\nabla \ell_i \in \R^{d_n}$ and~$\nabla^2 \ell_i \in \R^{d_n , d_n}$ be the gradient and Hessian (respectively) of the loss~$\ell ( \cdot )$ at the point $\big( ( D'_{i , *} W_* )_{n : 1} \x_i , y_i \big)$ with respect to its first argument.
Then, regarding Hessians as quadratic forms (see examples in Lemma~\ref{lemma:lnn_hess}), it holds that:
\bea
\hspace{-0.5mm}
\nabla^2 f ( \thetabf ) [ \Delta W_1 , \Delta W_2 , \hspace{-0.25mm} ...\, \hspace{-0.25mm} , \Delta W_n ] \,{=}\, \frac{1}{| \S |} \hspace{-0.5mm} \sum_{i = 1}^{| \S |} \hspace{-0.5mm} \nabla^2 \ell_i \hspace{-0.5mm} \Bigg[ \hspace{-0.25mm} \sum_{j = 1}^n ( D'_{i , *} W_* )_{n : j \text{+} 1} D'_{i , j} ( \Delta W_j ) ( D'_{i , *} W_* )_{j \text{-} 1 : 1} \x_i \Bigg]
\label{eq:fnn_region_hess} \\[-2mm]
+ \frac{2}{| \S |} \sum_{i = 1}^{| \S |} \nabla \ell_i^\top \hspace{-2.5mm} \sum_{1 \leq j < j' \leq n} \hspace{-3mm} ( D'_{i , *} W_* )_{n : j' \text{+} 1} D'_{i , j'} ( \Delta W_{j'} ) ( D'_{i , *} W_* )_{j' \text{-} 1 : j \text{+} 1} D'_{i , j} ( \Delta W_j ) ( D'_{i , *} W_* )_{j \text{-} 1 : 1} \x_i
\nonumber
\text{\,.}
\eea
\end{lemma}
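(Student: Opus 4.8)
The plan is to reduce the statement to the linear-activation Hessian formula of Lemma~\ref{lemma:lnn_hess}, applied separately to each training example after an example-dependent linear change of variables.

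First I would record that, by Proposition~\ref{prop:region_fnn}, the set $\D_{\thetabf'}$ is open, contains $\thetabf$, and on it $f(\cdot)$ agrees with the twice continuously differentiable map of Equation~\eqref{eq:train_loss_fnn_region}. Hence $f$ coincides with that explicit smooth function on a Euclidean ball around $\thetabf$, so $\nabla^2 f(\thetabf)$ is genuinely its Hessian at $\thetabf$; in particular $\nabla^2 f(\thetabf) = \tfrac{1}{|\S|}\sum_{i=1}^{|\S|}\nabla^2 g_i(\thetabf)$ where $g_i(\thetabf) := \ell\big((D'_{i,*}W_*)_{n:1}\x_i, y_i\big)$, and it suffices to compute the quadratic form $\nabla^2 g_i(\thetabf)[\Delta W_1,\ldots,\Delta W_n]$ for a fixed $i$.

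Fix $i$ and set $\hat{W}_j := D'_{i,j} W_j \in \R^{d_j,d_{j-1}}$ for $j=1,\ldots,n$ (recall $D'_{i,n}$ denotes identity), so that $\hat{W}_{j'}\hat{W}_{j'-1}\cdots\hat{W}_j = (D'_{i,*}W_*)_{j':j}$ for all $j\leq j'$. The linear map carrying the vectorization of $(W_1,\ldots,W_n)$ to that of $(\hat{W}_1,\ldots,\hat{W}_n)$ sends the increment tuple $(\Delta W_1,\ldots,\Delta W_n)$ to $(\widehat{\Delta W}_1,\ldots,\widehat{\Delta W}_n)$ with $\widehat{\Delta W}_j := D'_{i,j}\Delta W_j$, so the chain rule through a linear map gives $\nabla^2 g_i(\thetabf)[\Delta W_1,\ldots,\Delta W_n] = \nabla^2 \tilde{g}_i(\hat{\thetabf}_i)[\widehat{\Delta W}_1,\ldots,\widehat{\Delta W}_n]$, where $\tilde{g}_i$ is the training loss of the \emph{linear} depth-$n$ network with weights $\hat{W}_1,\ldots,\hat{W}_n$ on the single example $(\x_i,y_i)$ and $\hat{\thetabf}_i$ vectorizes $(\hat{W}_1,\ldots,\hat{W}_n)$. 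Applying Lemma~\ref{lemma:lnn_hess} to $\tilde{g}_i$ with $|\S|=1$ and $\phi(W) = \ell(W\x_i,y_i)$, and using that the chain rule through $W\mapsto W\x_i$ yields $\nabla\phi(\hat{W}_{n:1}) = \nabla\ell_i\,\x_i^\top$ and $\nabla^2\phi(\hat{W}_{n:1})[M] = \nabla^2\ell_i[M\x_i]$, the two summands of Equation~\eqref{eq:lnn_hess} become precisely (after substituting $\widehat{\Delta W}_j = D'_{i,j}\Delta W_j$ and $\hat{W}_{j':j} = (D'_{i,*}W_*)_{j':j}$) the two summands appearing under $\tfrac{1}{|\S|}\sum_i$ in Equation~\eqref{eq:fnn_region_hess}. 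Averaging over $i$ finishes the proof.

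The computation is routine once this bookkeeping is in place; the only point needing care is that Lemma~\ref{lemma:lnn_hess} was stated for a loss depending on the weights through a \emph{single} end-to-end matrix shared by all examples, whereas with non-linear activations example $i$ carries its own end-to-end matrix $(D'_{i,*}W_*)_{n:1}$ — so the reduction must first split $f$ into per-example summands and then absorb the fixed diagonal masks into the weights via the $i$-dependent substitution $W_j \mapsto D'_{i,j}W_j$ (valid even when some $D'_{i,j}$ are singular, since the identity $\nabla^2(\tilde{g}_i\circ L_i) = L_i^\top(\nabla^2\tilde{g}_i)L_i$ requires no invertibility of $L_i$). One may equivalently bypass Lemma~\ref{lemma:lnn_hess} and expand $(W_n + \Delta W_n)D'_{i,n-1}(W_{n-1}+\Delta W_{n-1})\cdots D'_{i,1}(W_1+\Delta W_1)\x_i$ directly, grouping the terms with zero, one, and two factors of $\Delta W$ and then Taylor-expanding $\ell(\cdot,y_i)$ to second order about $(D'_{i,*}W_*)_{n:1}\x_i$: the one-factor increment feeds the $\nabla^2\ell_i[\cdots]$ term and the two-factor increment feeds the $\nabla\ell_i^\top(\cdots)$ term, reproducing Equation~\eqref{eq:fnn_region_hess}.
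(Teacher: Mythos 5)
Your proposal is correct, and its primary route differs from the paper's. The paper proves Lemma~\ref{lemma:fnn_region_hess} by direct expansion (Subappendix~\ref{app:proof:fnn_region_hess}): it defines per-example increments $\Delta_i^{(1)},\Delta_i^{(2)},\Delta_i^{(3:n)}$, Taylor-expands $\ell(\cdot)$ to second order about $(D'_{i,*}W_*)_{n:1}\x_i$, and extracts the quadratic term by uniqueness of the Taylor expansion --- essentially re-running the proof of Lemma~\ref{lemma:lnn_hess} with the diagonal masks inserted. This is exactly the ``equivalent bypass'' you sketch in your last paragraph. Your main argument instead \emph{reuses} Lemma~\ref{lemma:lnn_hess}: split $f$ into per-example summands $g_i$, absorb the masks via the $i$-dependent linear substitution $\hat{W}_j = D'_{i,j}W_j$, apply Lemma~\ref{lemma:lnn_hess} to the single-example linear network with $\phi(W)=\ell(W\x_i,y_i)$, and translate $\nabla\phi(\hat{W}_{n:1})=\nabla\ell_i\,\x_i^\top$, $\nabla^2\phi(\hat{W}_{n:1})[M]=\nabla^2\ell_i[M\x_i]$ back into the claimed formula. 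The bookkeeping checks out: the identity $\nabla^2(\tilde{g}_i\circ L_i)[\Delta\thetabf]=\nabla^2\tilde{g}_i[L_i\Delta\thetabf]$ for linear $L_i$ needs no invertibility (you rightly flag this, since $D'_{i,j}$ may be singular when $\alpha=0$ or $\alphabar=0$), and the per-example splitting correctly handles the fact that Lemma~\ref{lemma:lnn_hess} assumes a single end-to-end matrix shared across the training set. What your route buys is economy and transparency --- the non-linear Hessian is literally the linear one with masked weights, and the same reduction would dispatch the convolutional analogue (Lemma~\ref{lemma:cnn_region_hess}) by composing one more linear map $\w_j\mapsto W_j(\w_j)$; what the paper's route buys is self-containment, at the cost of repeating the expansion verbatim.
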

\begin{proofsketch}{(for complete proof see Subappendix~\ref{app:proof:fnn_region_hess})}
The proof is similar to that of Lemma~\ref{lemma:lnn_hess}.
Namely, it expands the function in Equation~\eqref{eq:train_loss_fnn_region} and then extracts second order terms.
\end{proofsketch}
The following proposition employs Lemma~\ref{lemma:fnn_region_hess} to show that (under mild conditions) there exists $\thetabf \in \R^d$ for which $\lambda_{min} ( \nabla^2 f ( \thetabf ) )$ is arbitrarily negative.
\begin{proposition}
\label{prop:fnn_hess_arbitrary_neg}
Assume that:
\emph{(i)}~the network is deep ($n \geq 3$);
and
\emph{(ii)}~the loss function~$\ell ( \cdot )$ and training set~$\S$ are non-degenerate, in the sense that there exists a weight setting $\thetabf \in \R^d$ for which $\sum_{i = 1}^{| \S |} \nabla \ell ( \0 , y_i )^\top h_\thetabf ( \x_i ) \neq 0$, where $\nabla \ell ( \cdot )$ stands for the gradient of~$\ell ( \cdot )$ with respect to its first argument, and $h_\thetabf ( \cdot )$ is the input-output mapping realized by the network (Equation~\eqref{eq:fnn}).\note{%
Assumptions \emph{(i)} and~\emph{(ii)} are both necessary, in the sense that removing any of them (without imposing further assumptions) renders the proposition false~---~see Claim~\ref{claim:necessity_fnn} in Appendix~\ref{app:necessity}.
Assumption~\emph{(ii)} in particular is extremely mild, \eg~if $\ell ( \cdot )$ is the square loss (\ie~$\Y = \R^{d_n}$ and $\ell ( \hat{\y} , \y ) = \frac{1}{2} \| \hat{\y} - \y \|_2^2$), the slightest change in a single label~($\y_i$) corresponding to a non-zero prediction ($h_{\thetabf} ( \x_i ) \neq \0$) can ensure the inequality.
}
Then, it holds that $\inf_{\thetabf \in \R^d~\emph{s.t.}\,\nabla^2 f ( \thetabf )~\emph{exists}} \lambda_{min} ( \nabla^2 f ( \thetabf ) ) = -\infty$.
\end{proposition}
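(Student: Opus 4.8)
The plan is to mimic the constructive argument behind Proposition~\ref{prop:lnn_hess_arbitrary_neg} (the linear case), but to carry it out inside one of the open regions $\D_{\thetabf'}$ on which $f(\cdot)$ is twice continuously differentiable and given by Equation~\eqref{eq:train_loss_fnn_region}, so that $\nabla^2 f(\cdot)$ is there described by Lemma~\ref{lemma:fnn_region_hess}. For an arbitrarily large constant $c>0$ I will exhibit a point $\thetabf$ in the interior of such a region together with a \emph{fixed} nonzero perturbation $\Delta\thetabf$ whose only nonzero blocks are $\Delta W_1,\Delta W_2$, satisfying $\Delta\thetabf^\top\nabla^2 f(\thetabf)\,\Delta\thetabf<-c\,\|\Delta\thetabf\|_2^2$. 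Since $\Delta W_j=\0$ for $j\geq 3$, only the $(j,j')=(1,2)$ summand of the cross term in Equation~\eqref{eq:fnn_region_hess} survives, namely $\tfrac{2}{|\S|}\sum_i\nabla\ell_i^\top (D'_{i,*}W_*)_{n:3}D'_{i,2}(\Delta W_2)D'_{i,1}(\Delta W_1)\x_i$, which carries no factor of $W_1$ or $W_2$; by contrast, both summands of the ``direct'' term of Equation~\eqref{eq:fnn_region_hess} carry a factor of $W_1$ or $W_2$ (through $(D'_{i,*}W_*)_{n:2}$ for the $j=1$ summand, and through $(D'_{i,*}W_*)_{j-1:1}\x_i=D'_{i,1}W_1\x_i$ for the $j=2$ summand). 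Hence driving $W_1,W_2\to\0$ — which by closure of $\D_{\thetabf'}$ under positive \emph{per-layer} rescaling (Proposition~\ref{prop:region_fnn}) keeps us in the same region with the same activation patterns $D'_{i,j}$ — sends the direct term to $0$ and sends the output-dependent quantities $\nabla\ell_i,\nabla^2\ell_i$ to their values at zero output (continuity of $\nabla\ell(\cdot),\nabla^2\ell(\cdot)$), so $\Delta\thetabf^\top\nabla^2 f(\thetabf)\,\Delta\thetabf$ converges to $\tfrac{2}{|\S|}\sum_i\nabla\ell(\0,y_i)^\top (D'_{i,*}W_*)_{n:3}D'_{i,2}(\Delta W_2)D'_{i,1}(\Delta W_1)\x_i$.

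Next I would use Assumption~\emph{(ii)} to make that limit nonzero and then blow it up. Assumption~\emph{(ii)} supplies a weight setting at which $\sum_i\nabla\ell(\0,y_i)^\top h_\thetabf(\x_i)\neq 0$; perturbing it slightly and invoking Proposition~\ref{prop:region_fnn}, I may assume this setting $\thetabf^*=(W_1^*,\dots,W_n^*)$ lies in the interior of its region with all $W_j^*\neq\0$ and $S^*:=\sum_i\nabla\ell(\0,y_i)^\top h_{\thetabf^*}(\x_i)\neq 0$ (the sum is continuous in $\thetabf^*$). I then take $W_j=W_j^*$ for $3\leq j\leq n-1$, $W_n=t\,W_n^*$ for a scalar $t>0$, $W_1=c_1W_1^*$ and $W_2=c_2W_2^*$ for small $c_1,c_2>0$, and the fixed perturbation $\Delta W_2=W_2^*$, $\Delta W_1=-\sign(S^*)\,W_1^*$. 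With this choice the product $(D'_{i,*}W_*)_{n:3}D'_{i,2}(\Delta W_2)D'_{i,1}(\Delta W_1)\x_i$ telescopes to $-\sign(S^*)\,t\,(D'_{i,*}W^*_*)_{n:1}\x_i=-\sign(S^*)\,t\,h_{\thetabf^*}(\x_i)$, so the limit (as $c_1,c_2\to 0^+$) of $\Delta\thetabf^\top\nabla^2 f(\thetabf)\,\Delta\thetabf$ equals $-\tfrac{2t}{|\S|}\,|S^*|$, which tends to $-\infty$ as $t\to\infty$ while $\|\Delta\thetabf\|_2^2=\|W_1^*\|_2^2+\|W_2^*\|_2^2$ stays fixed. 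For each $t$ I pick $c_1,c_2$ small enough that the quadratic form lies below half its limit; this produces points $\thetabf$ at which $\nabla^2 f$ exists with $\lambda_{min}(\nabla^2 f(\thetabf))\leq \Delta\thetabf^\top\nabla^2 f(\thetabf)\,\Delta\thetabf/\|\Delta\thetabf\|_2^2\to-\infty$, yielding the claim.

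The bookkeeping of the matrix products in Equation~\eqref{eq:fnn_region_hess} — verifying that every direct-term summand carries a vanishing factor and that the surviving cross term telescopes to a multiple of $h_{\thetabf^*}(\x_i)$ — is routine but must respect the index conventions of Lemma~\ref{lemma:fnn_region_hess}. The genuine obstacle, and the point where the non-linear case departs from the linear one, is that I cannot simply set $W_1=W_2=\0$: that lies on a region boundary where $f$ need not be twice differentiable, so the closed-form Hessian is unavailable there. The double-limit device above is exactly the fix, and it relies essentially on $\D_{\thetabf'}$ being open and closed under positive per-layer rescaling so that shrinking $W_1,W_2$ leaves the activation patterns (hence Lemma~\ref{lemma:fnn_region_hess}) intact. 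Finally, $n\geq 3$ is used so that $(D'_{i,*}W_*)_{n:3}$ — the block we inflate via $W_n$ — is a genuine product that does not touch $W_1,W_2$; for $n=2$ this fails, consistent with the necessity of Assumption~\emph{(i)}.
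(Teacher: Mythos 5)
Your proposal is correct and follows essentially the same route as the paper's proof: restrict the perturbation to the first two layers so that only the $(j,j')=(1,2)$ cross term of Lemma~\ref{lemma:fnn_region_hess} survives, exploit closure of $\D_{\thetabf'}$ under positive per-layer rescaling to shrink $W_1,W_2$ and inflate a deeper layer, and use continuity of $\nabla\ell,\nabla^2\ell$ at zero output together with Assumption~\emph{(ii)} to drive the quadratic form to $-\infty$. The only (cosmetic) differences are that the paper uses a one-parameter rescaling ($W_1,W_2\mapsto a^{-2}W_1,a^{-2}W_2$, $W_3\mapsto aW_3$) with a single limit $a\to\infty$ and handles the sign by flipping $W_n$, whereas you use a two-parameter family with an iterated limit and flip the sign in $\Delta W_1$.
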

\begin{proofsketch}{(for complete proof see Subappendix~\ref{app:proof:fnn_hess_arbitrary_neg})}
Let $\thetabf \in \R^d$ be a weight setting realizing the non-degeneracy condition, \ie~for which \smash{$\sum_{i = 1}^{| \S |} \hspace{-1mm} \nabla \ell ( \0 , y_i )^\top h_\thetabf ( \x_i ) \neq 0$}.
Without loss of generality, we may assume that~$\thetabf$ satisfies the condition $\sum_{i = 1}^{| \S |} \hspace{-1mm} \nabla \ell ( \0 , y_i )^{\hspace{-0.5mm} \top} \hspace{-0.5mm} h_\thetabf ( \x_i ) \,{<}\, 0$ (if this is not the case then simply flip the signs of the entries in~$\thetabf$ corresponding to the last weight matrix~$W_n$).
From continuity, there exists a neighborhood of~$\thetabf$ consisting of weight settings that all meet the latter condition.
There must exist a region of the form~$\D_{\thetabf'}$ intersecting this neighborhood (since these regions constitute all of~$\R^d$ but a zero measure set), so we may assume, without loss of generality, that $\thetabf \in \D_{\thetabf'}$.
Lemma~\ref{lemma:fnn_region_hess} then applies.
Moreover, since $\D_{\thetabf'}$ is closed under positive rescaling of weight matrices (\ie~of $W_1 , W_2 , ...\, , W_n$), the lemma remains applicable even when~$\thetabf$ is subject to such rescaling.
The proof proceeds by fixing $\Delta W_1 , \Delta W_2 , ...\, , \Delta W_n$ to certain values, and positively rescaling $W_1 , W_2 , ...\, , W_n$ in a certain way, such that the expression for $\nabla^2 f ( \thetabf ) \, [ \Delta W_1 , \Delta W_2 , ...\, , \Delta W_n ]$ provided in Lemma~\ref{lemma:fnn_region_hess} becomes arbitrarily negative.
\end{proofsketch}
Relying on Lemma~\ref{lemma:fnn_region_hess}, Lemma~\ref{lemma:fnn_region_hess_lb} below provides a lower bound on~$\lambda_{min} ( \nabla^2 f ( \thetabf ) )$ for~$\thetabf \in \D_{\thetabf'}$.
\begin{lemma}
\label{lemma:fnn_region_hess_lb}
With the notations of Lemma~\ref{lemma:fnn_region_hess}, for any $\thetabf \in \D_{\thetabf'}$:\textsuperscript{\normalfont{\ref{note:empty_prod}}}
\be
\hspace{-3mm}
\lambda_{min} ( \nabla^2 f ( \thetabf ) ) \geq - \max \{ | \alpha | , | \alphabar | \}^{n {-} 1} \hspace{0.25mm} \frac{n {-} 1}{| \S |} \hspace{-0.5mm} \sum_{i = 1}^{| \S |} \hspace{-0.5mm} \| \nabla \ell_i \|_2 \| \x_i \|_2 \hspace{-0.5mm} \max_{\substack{\J \subseteq \{ 1 , 2 , \ldots , n \} \\[0.25mm] | \J | = n - 2}} \hspace{-0.5mm} \prod_{j \in \J} \hspace{-0.75mm} \| W_j \|_{Frobenius}
\hspace{-0.5mm} \text{\,.} \hspace{-2mm}
\label{eq:fnn_region_hess_lb}
\ee
\end{lemma}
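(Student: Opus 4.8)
The proof will closely mirror that of Lemma~\ref{lemma:lnn_hess_lb}, now starting from the expression for $\nabla^2 f ( \thetabf )$ on the region $\D_{\thetabf'}$ furnished by Lemma~\ref{lemma:fnn_region_hess}. Fix $\thetabf \in \D_{\thetabf'}$ (on which $f ( \cdot )$ coincides with the twice continuously differentiable function in Equation~\eqref{eq:train_loss_fnn_region}, so $\nabla^2 f ( \thetabf )$ indeed exists), and let $\Delta \thetabf \in \R^d$ be arbitrary, with $( \Delta W_1 , \ldots , \Delta W_n )$ its matrix-tuple arrangement. Plugging into Equation~\eqref{eq:fnn_region_hess}, the quadratic form $\Delta \thetabf^\top \nabla^2 f ( \thetabf ) \Delta \thetabf$ decomposes into two terms. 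The first, $\tfrac{1}{| \S |} \sum_i \nabla^2 \ell_i [ \, \cdot \, ]$, is non-negative because $\ell ( \cdot )$ is convex in its first argument, so each $\nabla^2 \ell_i$ is positive semidefinite. It therefore suffices to lower bound the (possibly negative) second, bilinear-in-$\Delta W$ term.

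For that second term I would bound its magnitude summand by summand. For fixed $i \in \{ 1 , \ldots , | \S | \}$ and fixed $1 \leq j < j' \leq n$, write the corresponding matrix product as $M$, so the summand is $\nabla \ell_i^\top M \x_i$. By Cauchy--Schwarz and the definition of the operator norm, $| \nabla \ell_i^\top M \x_i | \leq \norm{\nabla \ell_i}_2 \, \norm{M}_{spectral} \, \norm{\x_i}_2$, and submultiplicativity of the operator norm factors $\norm{M}_{spectral}$ over the constituents of $M$. The key bookkeeping step is to count these constituents: $M$ contains exactly $n - 1$ of the diagonal matrices $D'_{i , 1} , \ldots , D'_{i , n - 1}$ (the factor $D'_{i , n}$ being the identity), each of spectral norm at most $\max \{ | \alpha | , | \alphabar | \}$; exactly $n - 2$ weight matrices, namely all of $W_1 , \ldots , W_n$ except $W_j$ and $W_{j'}$, whose norm product is at most $\max_{| \J | = n - 2} \prod_{k \in \J} \norm{W_k}_{Frobenius}$ (using $\norm{\cdot}_{spectral} \leq \norm{\cdot}_{Frobenius}$); and the two perturbation matrices $\Delta W_j$ and $\Delta W_{j'}$. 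An AM--GM step gives $\norm{\Delta W_j}_{Frobenius} \norm{\Delta W_{j'}}_{Frobenius} \leq \tfrac{1}{2} ( \norm{\Delta W_j}_{Frobenius}^2 + \norm{\Delta W_{j'}}_{Frobenius}^2 )$.

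Summing over all pairs $1 \leq j < j' \leq n$ and noting that each index lies in exactly $n - 1$ such pairs, $\sum_{j < j'} \tfrac{1}{2} ( \norm{\Delta W_j}_{Frobenius}^2 + \norm{\Delta W_{j'}}_{Frobenius}^2 ) = \tfrac{n - 1}{2} \sum_{k = 1}^n \norm{\Delta W_k}_{Frobenius}^2 = \tfrac{n - 1}{2} \norm{\Delta \thetabf}_2^2$. Combined with the $\tfrac{2}{| \S |}$ prefactor, the second term of Equation~\eqref{eq:fnn_region_hess} is thus at least $- \max \{ | \alpha | , | \alphabar | \}^{n - 1} \tfrac{n - 1}{| \S |} \big( \sum_i \norm{\nabla \ell_i}_2 \norm{\x_i}_2 \big) \big( \max_{| \J | = n - 2} \prod_{k \in \J} \norm{W_k}_{Frobenius} \big) \norm{\Delta \thetabf}_2^2$. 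Adding back the non-negative first term leaves the same lower bound for $\Delta \thetabf^\top \nabla^2 f ( \thetabf ) \Delta \thetabf$; since $\Delta \thetabf$ is arbitrary, the Rayleigh-quotient characterization of the minimal eigenvalue of the symmetric matrix $\nabla^2 f ( \thetabf )$ yields exactly Equation~\eqref{eq:fnn_region_hess_lb}.

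I expect the main obstacle to be the index/counting argument — verifying that each bilinear term contains precisely $n - 1$ diagonal factors and $n - 2$ weight factors, with the excluded weight indices being exactly $j$ and $j'$ — since this is what pins down the constants $\max \{ | \alpha | , | \alphabar | \}^{n - 1}$ and the $( n - 2 )$-subset maximum. Everything else is a routine chain of standard matrix inequalities, entirely parallel to the linear case of Lemma~\ref{lemma:lnn_hess_lb}.
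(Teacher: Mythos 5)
Your proposal is correct and follows essentially the same route as the paper's proof: drop the non-negative convex term, bound each bilinear summand via Cauchy--Schwarz and submultiplicativity with the count of $n-1$ non-identity diagonal factors and $n-2$ weight factors, and reduce $\sum_{j<j'}\|\Delta W_j\|_F\|\Delta W_{j'}\|_F$ to $\tfrac{n-1}{2}\sum_j\|\Delta W_j\|_F^2$. The only cosmetic difference is that you obtain this last combinatorial bound by termwise AM--GM and pair counting, whereas the paper expands $\big(\sum_j\|\Delta W_j\|_F\big)^2$ and uses the $\ell_1$-versus-$\ell_2$ comparison; both yield the identical constant.
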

\begin{proofsketch}{(for complete proof see Subappendix~\ref{app:proof:fnn_region_hess_lb})}
The proof is analogous to that of Lemma~\ref{lemma:lnn_hess_lb}.
Namely, it appeals to Lemma~\ref{lemma:fnn_region_hess}, and lower bounds the right-hand side of Equation~\eqref{eq:fnn_region_hess}.
Convexity of~$\ell ( \cdot )$ (with respect to its first argument) implies that the first summand is non-negative.
For the second summand, we use known matrix inequalities (as well as the fact that $\| D'_{i , j} \|_{spectral}$ is no greater than $\max \{ | \alpha | , | \alphabar | \}$ for $j \,{=}\, 1 , 2 , ...\, , n - 1$, and equal to one for $j \,{=}\, n$) to establish a lower bound of \smash{$c \sum_{j = 1}^n \| \Delta W_j \|_{Frobenius}^2$}, with $c$ being the expression on the right-hand side of Equation~\eqref{eq:fnn_region_hess_lb}.
\end{proofsketch}
The lower bound in Equation~\eqref{eq:fnn_region_hess_lb} is highly sensitive to the scales of the individual weight matrices.
Specifically, assuming the network is deep ($n \geq 3$), if~$\thetabf$ does not perfectly fit all non-zero training inputs (meaning there exists $i \in \{ 1 , 2 , ...\, , | \S | \}$ for which $\nabla \ell_i \neq \0$ and $\x_i \neq \0$), and if at least $n - 2$ of its weight matrices $W_1 , W_2 , ...\, , W_n$ are non-zero, then it is possible to rescale each $W_j$ by $c_j > 0$, with $\prod_{j = 1}^n c_j = 1$, such that the lower bound in Equation~\eqref{eq:fnn_region_hess_lb} becomes arbitrarily negative\note{%
The bound remains applicable since $\D_{\thetabf'}$ is closed under positive rescaling of weight matrices.
}
despite the input-output mapping~$h_\thetabf ( \cdot )$ (and thus the training loss value~$f ( \thetabf )$) remaining unchanged.
Nevertheless, similarly to the case of linear activation (Subsubsection~\ref{sec:roughly_convex:fnn:lin}), we may employ the fact that gradient flow over fully connected neural networks (with homogeneous activations) initialized near zero maintains balance between weight matrices~---~\cf~\cite{du2018algorithmic}~---~to show that along its trajectories, the lower bound in Equation~\eqref{eq:fnn_region_hess_lb} assumes a tighter form.
This is done in Proposition~\ref{prop:fnn_region_hess_lb_gf} below.
\begin{proposition}
\label{prop:fnn_region_hess_lb_gf}
If~$\thetabf \in \D_{\thetabf'}$ resides on a trajectory of gradient flow (over~$f ( \cdot )$)\note{%
Recall that in the current context, the optimized objective function~$f ( \cdot )$ is locally Lipschitz but (typically) non-differentiable.
Following a conventional formalization in such settings (\cf~\cite{davis2020stochastic,du2018algorithmic}), we regard a curve in~$\R^d$ as a trajectory of gradient flow if it satisfies the differential inclusion $\tfrac{d}{dt} \thetabf ( t ) \,{\in}\, - \partial f ( \thetabf ( t ) )$ for almost every time~$t$, where $\partial f ( \thetabf ( t ) ) \,{\subseteq}\, \R^d$ stands for the Clarke subdifferential (see~\cite{clarke1975generalized}) of~$f ( \cdot )$ at~$\thetabf ( t )$.
\label{note:gf_non_diff}
}
initialized at some point $\thetabf_s \in \R^d$, with $\| \thetabf_s \|_2 \leq \epsilon$ for some $\epsilon > 0$, then, using the notations of Lemma~\ref{lemma:fnn_region_hess}:
\vspace{-1.5mm}
\be
\hspace{-2mm} 
\lambda_{min} ( \nabla^2 \hspace{-0.5mm} f ( \thetabf ) ) \, {\geq} \, - \max \{ \hspace{-0.25mm} | \alpha | , \hspace{-0.5mm} | \alphabar | \hspace{-0.25mm} \}^{\hspace{-0.25mm} n {-} 1} \hspace{0.25mm} \frac{n {-} 1}{| \S |} \hspace{-0.5mm} \sum_{i = 1}^{| \S |} \hspace{-0.5mm} \| \nabla \ell_i \|_2 \| \x_i \|_2 \Big( \hspace{-0.5mm} \min_{j \in \{ 1 , 2 , \ldots , n \}} \hspace{-1mm} \| W_j \|_{Frobenius} + \epsilon \hspace{-0.5mm} \Big)^{\hspace{-0.5mm} n - 2}
\hspace{-2mm} \text{.} \hspace{0.5mm}
\label{eq:fnn_region_hess_lb_gf}
\ee
\end{proposition}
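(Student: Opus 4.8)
The plan is to sharpen the bound of Lemma~\ref{lemma:fnn_region_hess_lb}~---~whose only ``loose'' ingredient is the factor $\max_{\J \subseteq \{1,\ldots,n\},\, |\J| = n-2} \prod_{j \in \J} \|W_j\|_{Frobenius}$~---~by using the fact that gradient flow from near-zero initialization keeps the layers balanced in Frobenius norm. Write $\thetabf(\cdot)$ for the gradient flow trajectory with $\thetabf(0) = \thetabf_s$, and let $W_1(t), \ldots, W_n(t)$ be its constituent weight matrices at time $t$, so that the $\thetabf$ appearing in the statement equals $\thetabf(t)$ for some $t \ge 0$ and $W_j = W_j(t)$. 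First I would record the conservation law underlying the analysis of~\cite{du2018algorithmic}: since $\sigma(\cdot)$ is (positively) homogeneous, $h_\thetabf(\x)$ is positively homogeneous of degree one in each $W_j$ separately, so wherever $f(\cdot)$ is differentiable one has, by Euler's identity, $\langle W_j, \nabla_{W_j} f(\thetabf)\rangle = \tfrac{1}{|\S|}\sum_{i=1}^{|\S|} \nabla\ell(h_\thetabf(\x_i),y_i)^\top h_\thetabf(\x_i)$, an expression that does not depend on $j$; passing to the Clarke subdifferential (see footnote~\ref{note:gf_non_diff}), the same holds for every selection in $\partial f(\thetabf)$, hence $\tfrac{d}{dt}\|W_j(t)\|_{Frobenius}^2 = -2\langle W_j(t), \dot W_j(t)\rangle$ is independent of $j$ along the trajectory, so each gap $\|W_j(t)\|_{Frobenius}^2 - \|W_{j'}(t)\|_{Frobenius}^2$ is invariant and equals its value at $t = 0$.

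Next I would convert this into a pointwise bound on the layer norms. Since $\|\thetabf_s\|_2^2 = \sum_{k=1}^n \|W_k(0)\|_{Frobenius}^2 \le \epsilon^2$, each $\|W_j(0)\|_{Frobenius}^2 \le \epsilon^2$. Choosing $j^\star$ to be a layer attaining $\min_k \|W_k(t)\|_{Frobenius}$ and using invariance of the gaps, for every $j$ one gets $\|W_j(t)\|_{Frobenius}^2 = \|W_{j^\star}(t)\|_{Frobenius}^2 + \|W_j(0)\|_{Frobenius}^2 - \|W_{j^\star}(0)\|_{Frobenius}^2 \le \min_k \|W_k(t)\|_{Frobenius}^2 + \epsilon^2$, and therefore $\|W_j(t)\|_{Frobenius} \le \sqrt{\min_k\|W_k(t)\|_{Frobenius}^2 + \epsilon^2} \le \min_k \|W_k(t)\|_{Frobenius} + \epsilon$ (using $\sqrt{a^2 + b^2} \le a + b$ for $a, b \ge 0$). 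Note that this needs no smallness assumption on $\epsilon$, in contrast with the spectral-norm argument used for linear activations in Proposition~\ref{prop:lnn_hess_lb_gf}.

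Finally, for any $\J \subseteq \{1,\ldots,n\}$ with $|\J| = n-2$, bounding each of its $n-2$ factors in this way gives $\prod_{j \in \J}\|W_j\|_{Frobenius} \le \big(\min_k\|W_k\|_{Frobenius} + \epsilon\big)^{n-2}$, so the maximum over all such $\J$ satisfies the same bound. Substituting this into the right-hand side of Equation~\eqref{eq:fnn_region_hess_lb} of Lemma~\ref{lemma:fnn_region_hess_lb}~---~which applies because $\thetabf \in \D_{\thetabf'}$, where $f(\cdot)$ coincides with the twice continuously differentiable function of Equation~\eqref{eq:train_loss_fnn_region}, and whose notation for $\nabla\ell_i$ matches that of the proposition~---~yields exactly Equation~\eqref{eq:fnn_region_hess_lb_gf}. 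The step I expect to be the main obstacle is the first one: carefully justifying the Frobenius-norm balancedness for the \emph{non-differentiable} loss, \ie~verifying that the Euler-homogeneity computation (and hence the conservation) carries over to trajectories defined by the differential inclusion $\tfrac{d}{dt}\thetabf(t) \in -\partial f(\thetabf(t))$ rather than by a classical ODE, and that it is applied correctly as a statement along the \emph{entire} trajectory~---~which generally crosses many regions $\D_{\thetabf'}$~---~not merely within the single region containing $\thetabf$. Everything after that is elementary telescoping and substitution.
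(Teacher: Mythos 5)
Your proposal is correct and follows essentially the same route as the paper's proof: establish that the gaps $\|W_j\|_{Frobenius}^2 - \|W_{j'}\|_{Frobenius}^2$ are invariant along the flow (the paper simply cites Corollary~2.1 of~\cite{du2018algorithmic}, covering the differential-inclusion subtlety you flag, whereas you re-derive it from Euler's identity), bound these gaps by $\epsilon^2$ at initialization, use subadditivity of the square root to get $\|W_j\|_{Frobenius} \leq \min_k \|W_k\|_{Frobenius} + \epsilon$, and substitute into Lemma~\ref{lemma:fnn_region_hess_lb}. The only blemish is a harmless sign typo in $\tfrac{d}{dt}\|W_j(t)\|_{Frobenius}^2 = -2\langle W_j(t),\dot W_j(t)\rangle$ (it should be $+2\langle W_j(t),\dot W_j(t)\rangle = -2\langle W_j(t),\nabla_{W_j} f\rangle$), which does not affect the argument.
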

\begin{proofsketch}{(for complete proof see Subappendix~\ref{app:proof:fnn_region_hess_lb_gf})}
By the analysis of~\cite{du2018algorithmic}, for any $j , j' \in \{ 1 , 2 , ...\, , n \}$, the quantity $\| W_{j'} \|_{Frobenius}^2 - \| W_j \|_{Frobenius}^2$ is invariant (constant) along a gradient flow trajectory.
This implies that along a trajectory emanating from a point with (Euclidean) norm~$\OO ( \epsilon )$, it holds that $\| W_{j'} \|_{Frobenius}^2 - \| W_j \|_{Frobenius}^2 \in \OO ( \epsilon^2 )$ for all $j , j' \in \{ 1 , 2 , ...\, , n \}$, which in turn implies $\| W_{j'} \|_{Frobenius} \leq \min_{j \in \{ 1 , 2 , ...\, , n \}} \| W_j \|_{Frobenius} + \OO ( \epsilon )$ for all $j' \in \{ 1 , 2 , ...\, , n \}$.
Plugging this into Equation~\eqref{eq:fnn_region_hess_lb} yields the desired result (Equation~\eqref{eq:fnn_region_hess_lb_gf}).
\end{proofsketch}
Assume the network is deep ($n \geq 3$), and consider a trajectory of gradient flow (over~$f ( \cdot )$) emanating from near-zero initialization.
For every point on the trajectory, Proposition~\ref{prop:fnn_region_hess_lb_gf} may be applied with small~$\epsilon$, leading the lower bound in Equation~\eqref{eq:fnn_region_hess_lb_gf} to depend primarily on the \emph{minimal} size (Frobenius norm) of a weight matrix~$W_j$, and on $\nabla \ell_1 , \nabla \ell_2 , ...\, , \nabla \ell_{| \S |}$~---~gradients of the loss function with respect to the predictions over the training set.
In the course of optimization, $W_1 , W_2 , ...\, , W_n$ are initially small, and if a perfect fit of the training set is ultimately achieved, $\nabla \ell_1 , \nabla \ell_2 , ...\, , \nabla \ell_{| \S |}$ will converge to zero.
Therefore, if not \emph{all} weight matrices $W_1 , W_2 , ...\, , W_n$ become large during optimization, the lower bound on~$\lambda_{min} ( \nabla^2 f ( \thetabf ) )$ in Equation~\eqref{eq:fnn_region_hess_lb_gf} will only be moderately negative before approaching non-negativity (if and) as the trajectory converges to a perfect fit.
In light of Section~\ref{sec:match}, this suggests that the gradient flow trajectory may lend itself to approximation by gradient descent.
For a case with linear activation (Subsubsection~\ref{sec:roughly_convex:fnn:lin}) such prospect is theoretically verified in Section~\ref{sec:lnn}.
For non-linear activation we provide empirical corroboration in Section~\ref{sec:experiments}, deferring to future work a complete theoretical affirmation.

\subsection{Convolutional Architectures} \label{sec:roughly_convex:cnn}

We account for convolutional neural networks by allowing for weight sharing and sparsity patterns to be imposed on the layers of the fully connected model analyzed in Subsection~\ref{sec:roughly_convex:fnn}.
Namely, we consider the exact same mappings as in Equation~\eqref{eq:fnn}, but now, rather than being learned directly,~the matrices $W_j \in \R^{d_j , d_{j - 1}}$, $j = 1 , 2 , ... \, , n$, are determined by learned weight vectors $\w_j \in \R^{d'_j}$, with \smash{$d'_j \in \N$}, $j = 1 , 2 , ... \, , n$, such that each entry of~$W_j$ is either fixed at zero or connected to a predetermined coordinate of~$\w_j$ (with no repetition of coordinates within the same row).
The weight setting $\thetabf \in \R^d$ is then simply a concatenation of the weight vectors $\w_1 , \w_2 , ...\, , \w_n$, and its dimension is accordingly $d = \sum_{j = 1}^n d'_j$.
Our analysis for this model (which includes convolutional neural networks as a special case) is essentially the same as that presented for fully connected neural networks with non-linear activation (Subsubsection~\ref{sec:roughly_convex:fnn:non_lin}).
In particular, we use the fact that even with weight sharing and sparsity patterns imposed on the layers of a fully connected neural network (with homogeneous activation), when initialized near zero, gradient flow over the network maintains balance between weights of different layers~---~\cf~\cite{du2018algorithmic}.
For the complete analysis~see~Appendix~\ref{app:cnn}.

\section{Continuous Proof of Discrete Convergence for Deep Linear Neural Networks} \label{sec:lnn}

Section~\ref{sec:match} invoked the Fundamental Theorem for numerical solution of initial value problems (Theorem~\ref{theorem:fundamental}) to show that, in general, the extent to which gradient descent provably matches gradient flow is determined by how large (less negative or more positive) the minimal eigenvalue of the Hessian is around the gradient flow trajectory.\textsuperscript{\normalfont{\ref{note:match_smooth_lip}}}
Section~\ref{sec:roughly_convex} established that for training losses of deep neural networks, along trajectories of gradient flow emanating from near-zero initialization (as commonly employed in practice), the minimal eigenvalue of the Hessian is far greater than in arbitrary points in space, particularly towards convergence.
In this section we combine the two findings, translating an analysis of gradient flow over deep linear neural networks into a convergence guarantee for gradient descent.
The guarantee we obtain is, to our knowledge, the first to ensure that a conventional gradient-based~algorithm optimizing a deep (three or more layer) neural network of fixed (data-independent\textsuperscript{\ref{note:data_ind}}) size efficiently converges\textsuperscript{\ref{note:eff_converge}} to global minimum \emph{almost surely} under random (data-independent) near-zero initialization.

Deep linear neural networks~---~fully connected neural networks with linear activation (see Subsection~\ref{sec:roughly_convex:fnn})~---~are perhaps the most common subject of theoretical study in the context of optimization in deep learning.
Though trivial from an expressiveness point of view (realize only linear input-output mappings), they induce highly non-convex training losses, giving rise to highly non-trivial phenomena under gradient-based optimization.
In recent years, various results concerning gradient flow over deep linear neural networks have been proven, most notably for the case of \emph{balanced initialization} (see for example~\cite{saxe2014exact,arora2018optimization,lampinen2019analytic,arora2019implicit,razin2020implicit}).
Under the notations of Subsection~\ref{sec:roughly_convex:fnn} (in particular with $W_1 , W_2 , ...\, , W_n$ standing for network weight matrices), balanced initialization means that when optimization commences:
\be
W_{j + 1}^\top W_{j + 1} = W_j W_j^\top
~~
\text{for $j = 1 , 2 , ...\, , n - 1$}
\text{\,.}
\label{eq:balance}
\ee
The condition holds approximately with any near-zero initialization, and exactly when the following procedure (adaptation of Procedure~1 in~\cite{arora2019convergence}) is employed.
\begin{procedure}[random balanced initialization]
\label{proc:balance}
With a distribution~$\PP$ over $d_n$-by-$d_0$ matrices of rank at most $\min \{ d_0 , d_1 , ...\, , d_n \}$, initialize $W_j \in \R^{d_j , d_{j - 1}}$, $j = 1 , 2 , ...\, , n$, via following steps:
\emph{(i)}~sample $A \sim \PP$;
\emph{(ii)}~take singular value decomposition $A=U\Sigma{V}^\top$, where $U \in \R^{d_n , \min \{ d_0 , d_n \}}$ and $V \in \R^{d_0 , \min \{ d_0 , d_n \}}$ have orthonormal columns, and $\Sigma \in \R^{\min \{ d_0 , d_n \} , \min \{ d_0 , d_n \}}$ is diagonal and holds the singular values of $A$;
and
\emph{(iii)}~set $W_n \simeq U \Sigma^{1 / n} , W_{n - 1} \simeq \Sigma^{1 / n} , W_{n - 2} \simeq \Sigma^{1 / n} , ...\, , W_2 \simeq \Sigma^{1 / n} , W_1 \simeq \Sigma^{1 / n} V^\top$, where ``$\simeq$'' stands for equality up to zero-valued padding.
\end{procedure}
Compared to gradient flow, little is known about gradient descent when it comes to optimization of deep (three or more layer) linear neural networks.
Indeed, there are relatively few results along this line (\cf~\cite{bartlett2018gradient,ji2019gradient,arora2019convergence}), and these are typically highly specific, built upon technical proofs that are difficult to generalize.
Being able to obtain results via translation of gradient flow analyses is thus of prime interest.

We focus in this section on deep\note{%
Our results apply to shallow (two layer) networks as well.
We highlight the deep (three or more layer) setting as it is far less understood (\cf~\cite{arora2019convergence}), and arguably more central to deep learning.
}
linear neural networks trained for scalar regression per least-squares criterion.
In the context of Subsection~\ref{sec:roughly_convex:fnn}, this means that the activation function~$\sigma ( \cdot )$ is linear ($\sigma ( z ) = z$), the output dimension~$d_n$ is one, and the loss function~$\ell ( \cdot )$ is the square loss (\ie~$\Y = \R$ and $\ell ( \hat{y} , y ) = \frac{1}{2} ( \hat{y} - y )^2$).
We assume that training inputs are whitened, \ie~have been transformed such that their empirical (uncentered) covariance matrix \smash{$\Lambda_{xx} := \frac{1}{| \S |} \sum_{i = 1}^{\hspace{-0.25mm} | \S |} \x_i \x_i^\top \in \R^{d_0 , d_0}$} is equal to identity.
A standard calculation (see Appendix~\ref{app:train_loss_lin_square}) shows that in this case the function~$\phi ( \cdot )$ defined by Equation~\eqref{eq:train_loss_e2e} becomes $\phi ( W ) = \frac{1}{2} \| W - \Lambda_{yx} \|_{Frobenius}^2 + c$, where \smash{$\Lambda_{yx} := \frac{1}{| \S |}\sum_{i = 1}^{| \S |} y_i \x_i^\top \in \R^{1 , d_0}$} is the empirical (uncentered) cross-covariance matrix between training labels and inputs, and~$c \in \R$ is a constant (independent of~$W$).
We may thus write the training loss~$f ( \cdot )$ (Equation~\eqref{eq:train_loss}) as:
\be
f ( \thetabf ) \, = \, \frac{1}{2} \| W_{n : 1} - \Lambda_{yx} \|_{Frobenius}^2 + c \, = \, \frac{1}{2} \| W_{n : 1} - \Lambda_{yx} \|_{Frobenius}^2 + \min\nolimits_{\q \in \R^d} f ( \q )
\text{\,,}
\label{eq:train_loss_lnn_square}
\ee
where $W_{n : 1} \in \R^{1 , d_0}$ is the network's end-to-end matrix (Equation~\eqref{eq:e2e}).
We disregard the degenerate case where $\Lambda_{yx} = 0$, \ie~where the zero mapping attains the global minimum, and assume that training labels are normalized (jointly scaled) such that~$\Lambda_{yx}$ has unit length ($\| \Lambda_{yx} \|_{Frobenius} = 1$).

Proposition~\ref{prop:gf_analysis} below analyzes gradient flow over the training loss in Equation~\eqref{eq:train_loss_lnn_square}.
Relying on a known characterization for the dynamics of the end-to-end matrix (\cf~\cite{arora2018optimization}), it establishes convergence to global minimum.
Moreover, harnessing the results of Section~\ref{sec:roughly_convex}, it derives a lower bound on (the integral of) the minimal eigenvalue of the Hessian around the gradient flow~trajectory.
\begin{proposition}
 \label{prop:gf_analysis}
Consider minimization of the training loss~$f ( \cdot )$ in Equation~\eqref{eq:train_loss_lnn_square} via gradient flow (Equation~\eqref{eq:gf}) starting from initial point $\thetabf_s \in \R^d$ that meets the balancedness condition (Equation~\eqref{eq:balance}).
Denote by $W_{n : 1 , s}$ the initial value of the end-to-end matrix (Equation~\eqref{eq:e2e}), and suppose that $\| W_{n : 1 , s} \|_{Frobenius} \in ( 0 , 0.2 ]$ (initialization is small but non-zero).
Assume that $W_{n : 1 , s}$ is not antiparallel to~$\Lambda_{yx}$, \ie~\smash{$\nu := \Tr ( \Lambda_{yx}^\top W_{n : 1 , s} ) \big/ \big( \| \Lambda_{yx} \|_{Frobenius} \| W_{n : 1 , s} \|_{Frobenius} \big) \neq -1$}.
Then, the trajectory of gradient flow is defined over infinite time, and with $\thetabf : [ 0 , \infty ) \to \R^d$ representing this trajectory, for any $\bar{\epsilon} > 0$, the following time~$\bar{t}$ satisfies $f ( \thetabf ( \bar{t} \, ) ) - \min_{\q \in \R^d} f ( \q ) \leq \bar{\epsilon}$: 
\be
\bar{t} = \tfrac{2 n \big( \max \big\{ 1 , \tfrac{3}{2} \cdot  \tfrac{1 - \nu}{1 + \nu} \big\} \big)^n}{\| W_{n : 1 , s} \|_{Frobenius}} \ln \bigg( \tfrac{15 n \max \big\{ 1 , \tfrac{1 - \nu}{1 + \nu} \big\}}{\| W_{n:1,s} \|_{Frobenius} \min \{ 1 , 2 \bar{\epsilon} \}} \bigg)
\text{\,.}
\label{eq:gf_analysis_time}
\ee
Moreover, under the notations of Theorem~\ref{theorem:gf_gd}, for any $t > 0$ and~$\epsilon \in \big( 0 , \frac{1}{2 n} \big]$ with corresponding~$\D_{t , \epsilon}$ ($\epsilon$-neighborhood of gradient flow trajectory up to time~$t$), we have the smoothness and Lipschitz constants $\beta_{t , \epsilon} = 16 n$ and~$\gamma_{t , \epsilon} = 6 \sqrt{n}$ respectively, and the following (upper) bound on the integral of (minus) the minimal eigenvalue of the Hessian:
\be
\int_0^t m ( t' ) dt' \leq \tfrac{15 n^3 \big( \max \big\{ 1 , \tfrac{3}{2} \cdot \tfrac{1 - \nu}{1 + \nu} \big\} \big)^n t \epsilon}{\| W_{n:1,s} \|_{Frobenius}} + \ln \bigg( \tfrac{n^2 \big( e^2 \max \big\{ 1 , \tfrac{1 - \nu}{1 + \nu} \big\} \big)^{5 ( n - 1 ) / 2}}{\| W_{n : 1 , s} \|_{Frobenius}^2} \bigg)
\text{\,,}
\label{eq:gf_analysis_m}
\ee
where the function $m : [ 0 , t ] \to \R$ is non-negative.
\end{proposition}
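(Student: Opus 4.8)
\begin{proofsketch}{}
\textbf{Reduction to a two‑dimensional ODE.} The plan is to reduce gradient flow to a low‑dimensional ODE that can be integrated essentially in closed form, and to read all four assertions off the resulting formulas. The balancedness condition~\eqref{eq:balance} is preserved along gradient flow, so by~\cite{arora2018optimization} the end‑to‑end matrix obeys $\tfrac{d}{dt} W_{n:1} = - \sum_{j=1}^n \big( W_{n:1} W_{n:1}^\top \big)^{\frac{n-j}{n}} \nabla \phi ( W_{n:1} ) \big( W_{n:1}^\top W_{n:1} \big)^{\frac{j-1}{n}}$, where $\nabla \phi ( W_{n:1} ) = W_{n:1} - \Lambda_{yx}$ by Equation~\eqref{eq:train_loss_lnn_square}. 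Since $d_n = 1$, the factor $W_{n:1} W_{n:1}^\top$ is the scalar $\rho^2$ with $\rho := \| W_{n:1} \|_{Frobenius}$, and $W_{n:1}^\top W_{n:1}$ is rank one; substituting and collecting terms shows that $W_{n:1}(t)$ stays in $\mathrm{span}\{ W_{n:1,s} , \Lambda_{yx} \}$, and that $(\rho, \nu)$ — with $\nu$ the signed cosine $\Tr ( \Lambda_{yx}^\top W_{n:1} ) / \rho$ (well defined since $\rho_s > 0$ and $\| \Lambda_{yx} \|_{Frobenius} = 1$) — satisfies the autonomous system $\dot\rho = - n \rho^{2 - 2/n} ( \rho - \nu )$, $\dot\nu = \rho^{1 - 2/n} ( 1 - \nu^2 )$.

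\textbf{Integrating it; convergence and the time bound.} I would pass to the rescaled time $\tau := \int_0^t \rho(s)^{1 - 2/n} ds$, under which the system becomes $\tfrac{d\rho}{d\tau} = - n \rho ( \rho - \nu )$, $\tfrac{d\nu}{d\tau} = 1 - \nu^2$. The angular equation integrates to $\nu(\tau) = \tanh( \tau + c_0 )$ with $c_0 := \tanh^{-1} \nu$, finite precisely because $\nu \neq -1$ (the case $\nu = 1$ is the trivial one‑dimensional sub‑case), so $\nu$ increases to $1$; and with $u := 1/\rho$ the radial equation becomes the \emph{linear} ODE $\tfrac{d u}{d\tau} = n - n \nu(\tau) u$, solved in closed form via $\int_0^\tau \nu = \ln \cosh( \tau + c_0 ) - \ln \cosh c_0$ and $e^{2 c_0} = \tfrac{1 + \nu}{1 - \nu}$ (the origin of the $\max\{ 1, \tfrac{1 - \nu}{1 + \nu} \}$ factors). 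From these formulas: $\rho(t) \in (0, 1]$ always (it cannot cross $1$, since $\dot\rho \le 0$ whenever $\rho = 1 \ge \nu$, nor reach $0$ in finite time) and is bounded below on finite intervals, so the weight matrices stay bounded along the trajectory (by balancedness their spectral norms lie within $\OO(\epsilon)$ of $\rho^{1/n} \le 1$, as in the proof of Proposition~\ref{prop:lnn_hess_lb_gf}), whence Theorem~\ref{theorem:exist_unique} gives $t_e = \infty$ and $f(\thetabf(t)) - \min_{\q} f(\q) = \tfrac12 ( \rho^2 - 2 \rho \nu + 1 ) = \tfrac12 \big( (\rho - \nu)^2 + (1 - \nu^2) \big) \to 0$. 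Bounding $\tau$ below by the progress of $\nu$ needed to force $(\rho - \nu)^2$ and $1 - \nu^2$ below $\bar\epsilon$, and converting back through $t(\tau) = \int_0^\tau \rho(s)^{2/n - 1} ds$ using the lower bound on $\rho$, yields~\eqref{eq:gf_analysis_time}.

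\textbf{Constants and the integral of $m$.} Here $\nabla^2 \phi \equiv I$ and $\nabla \phi(W) = W - \Lambda_{yx}$, so $\nabla^2 f$ is given explicitly by Lemma~\ref{lemma:lnn_hess}. On $\D_{t,\epsilon}$ every point lies within $\epsilon$ of the trajectory, where $\| W_j \|_{spectral} \le \rho^{1/n} + \OO(\epsilon)$ and $\| W_{n:1} \|_{Frobenius} \le \rho + \OO(\epsilon) \le 1 + \OO(\epsilon)$, hence $\| \nabla \phi( W_{n:1} ) \|_{Frobenius} \le 2 + \OO(\epsilon)$; taking $\epsilon \le \tfrac{1}{2n}$ folds all $\OO(\epsilon)$ terms into universal constants, and bounding the two summands of~\eqref{eq:lnn_hess} (the second via the matrix inequalities behind Lemma~\ref{lemma:lnn_hess_lb}, with $\min\{ d_0, d_n \} = 1$) gives $\beta_{t,\epsilon} = 16 n$, while $\| \nabla f \|_2^2 = \sum_{j=1}^n \| \partial f / \partial W_j \|_{Frobenius}^2 \le n ( 2 + \OO(\epsilon) )^2$ gives $\gamma_{t,\epsilon} = 6 \sqrt{n}$. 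For the last bound I would take $m(t')$ to be the non‑negative upper bound on $- \lambda_{min}( \nabla^2 f(\q) )$ over $\q \in \B_\epsilon( \thetabf(t') )$ supplied by Lemma~\ref{lemma:lnn_hess_lb} (valid at \emph{every} point) evaluated with the perturbed weights: up to universal constants, $m(t') \le n \big( \| W_{n:1}(t') - \Lambda_{yx} \|_{Frobenius} + \OO(n\epsilon) \big) \big( \rho(t')^{1 - 2/n} + \OO(\epsilon) \big)$. Integrating, the terms carrying a factor $\epsilon$ contribute the term of~\eqref{eq:gf_analysis_m} linear in $t$ (its prefactor from the crude bounds $\rho^{1-2/n} \le 1$, $\| \nabla\phi \|_{Frobenius} \le 2$ applied over an interval whose length is controlled by the lower bound on $\rho$, i.e.\ by $\max\{1, \tfrac{3}{2} \tfrac{1-\nu}{1+\nu}\}^n / \| W_{n:1,s} \|_{Frobenius}$), whereas the main term $n \| W_{n:1}(t') - \Lambda_{yx} \|_{Frobenius} \rho(t')^{1 - 2/n}$ has a \emph{finite} integral: since $\| \tfrac{d}{dt} W_{n:1} \|_{Frobenius}$ equals $\rho^{1 - 2/n}$ times the projected residual, this integral is controlled by the end‑to‑end path length, which the closed form shows is $\OO( \ln(1/\rho_s^2) )$ up to the $\tfrac{1-\nu}{1+\nu}$‑dependent constants — the logarithmic term of~\eqref{eq:gf_analysis_m}.

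\textbf{Main obstacle.} The hard part throughout will be the quantitative lower bound on $\rho(t)$, in particular through the ``escape from the origin'' phase when $\nu$ is near $-1$, where $\rho$ first shrinks before the $\Lambda_{yx}$‑component of $W_{n:1}$ can grow; this is what produces the $\big( \max\{1, \tfrac{1-\nu}{1+\nu}\} \big)^n$‑type blow‑up in both~\eqref{eq:gf_analysis_time} and~\eqref{eq:gf_analysis_m}, while everything else reduces to bookkeeping on the explicit solution of the $(\rho, \nu)$ system.
\end{proofsketch}
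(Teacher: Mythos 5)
Your proposal follows essentially the same route as the paper: reduce to the end-to-end dynamics of \cite{arora2018optimization}, reparameterize time so that the angular variable obeys $\dot\nu = 1-\nu^2$ (your $\tau$ is exactly the inverse of the paper's reparameterization $\xi$, and your $\tfrac{d\rho}{d\tau}=-n\rho(\rho-\nu)$ is the paper's norm equation), track the dip of $\rho$ to a minimum of order $\rho_s\min\{1,(\tfrac{2}{3}\cdot\tfrac{1+\nu}{1-\nu})^n\}$ followed by escape and exponential convergence, and obtain the geometric constants from Lemma~\ref{lemma:lnn_hess_lb} combined with balancedness-controlled weight norms in the $\epsilon$-neighborhood, splitting $\int m$ into an $\OO(\epsilon t)$ term and a logarithmic term. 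The one methodological difference is that you integrate the radial equation in closed form via the linear ODE for $u=1/\rho$, whereas the paper only uses one-sided differential-inequality comparisons (Theorem~10.3 of Hairer et al.) to sandwich $\rho$; your route is cleaner but yields the same bounds. One loose point: the logarithmic term of Equation~\eqref{eq:gf_analysis_m} is not controlled by the Euclidean path length of $W_{n:1}(\cdot)$ (which is $\OO(1)$ and would only give a $1/\rho_{\min}$ factor after dividing by $\rho$); it comes from $\int \|\nabla\phi(W_{n:1})\|\,d\tau$ in reparameterized time, i.e.\ the residual integrated against the scale-invariant clock, which is $\OO(t_a+t_c)=\OO(\tfrac{1}{n}\ln(1/\rho_{\min}))$ before exponential convergence kicks in — this is recoverable from your closed form but should replace the path-length justification.
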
 
\begin{proofsketch}{(for complete proof see Subappendix~\ref{app:proof:gf_analysis})}
By result of~\cite{arora2018optimization}, gradient flow induces on the end-to-end matrix the following dynamics:
\[
\tfrac{d}{dt} W_{n : 1} ( t ) = - \nabla \phi \big( W_{n : 1} ( t ) \big) \Big( \| W_{n : 1} ( t ) \|_{Frobenius}^{2 - 2 / n} I_{d_0} + ( n - 1 ) \big[ W_{n : 1}^\top ( t ) W_{n : 1} ( t ) \big]^{1 - 1 / n} \Big)
\text{\,,}
\]
where $I_{d_0} \in \R^{d_0 , d_0}$ represents identity, and~$[ \, \cdot \, ]^c$, $c \geq 0$, stands for a power operator defined over positive semi-definite matrices (with $c = 0$ yielding identity by definition).
Carefully analyzing these dynamics, we characterize~$W_{n : 1} ( \cdot )$~---~trajectory of end-to-end matrix~---~and show that, with~$\bar{t}$ given by Equation~\eqref{eq:gf_analysis_time}, $\frac{1}{2} \| W_{n : 1}  ( \bar{t} \, ) - \Lambda_{yx} \|_{Frobenius}^2 \leq \bar{\epsilon}$ as required.
For establishing Equation~\eqref{eq:gf_analysis_m}, we use the characterization of~$W_{n : 1} ( \cdot )$, along with a lower bound on the minimal eigenvalue of the Hessian provided in Subsubsection~\ref{sec:roughly_convex:fnn:lin}.
The expressions for $\beta_{t , \epsilon}$ and~$\gamma_{t , \epsilon}$ are also derived using the characterization of~$W_{n : 1} ( \cdot )$ and geometric bounds (bounds on Hessian eigenvalues and gradient norm, respectively), but they involve much coarser computations.
\end{proofsketch}
Plugging the gradient flow results of Proposition~\ref{prop:gf_analysis} into the generic Theorem~\ref{theorem:gf_gd} translates them to the following convergence guarantee for gradient descent.
\begin{theorem}
\label{theorem:gd_translation}
Assume the same conditions as in Proposition~\ref{prop:gf_analysis}, but with minimization via gradient descent (Equation~\eqref{eq:gd}) instead of gradient flow.\note{%
The conditions on~$\thetabf_s$ in Proposition~\ref{prop:gf_analysis} are now satisfied by the initialization of gradient descent, \ie~by~$\thetabf_0$.
}
Then, with $\thetabf_0 , \thetabf_1 , \thetabf_2 , ...$ representing the iterates of gradient descent, $W_{n : 1 , 0}$ standing for the end-to-end matrix (Equation~\eqref{eq:e2e}) of the initial point~$\thetabf_0$, and $\nu \,{:=} \Tr ( \Lambda_{yx}^\top W_{n : 1 , 0} ) \big/ \big( \| \Lambda_{yx} \|_{Frobenius} \| W_{n : 1 , 0} \|_{Frobenius} \big)$, for any $\tilde{\epsilon} > 0$, if the step size~$\eta$~meets:
\be
\hspace{-2mm}
\eta \hspace{1.5mm} {\leq} \hspace{0.75mm} \tfrac{\| W_{n : 1 , 0} \|_{Frobenius}^5 \min \{ 1 , \tilde{\epsilon} \}}{n^{17 / 2} e^{7 n + 6} \big( \hspace{-0.75mm} \max \hspace{-0.5mm} \big\{ \hspace{-0.5mm} 1 , \tfrac{1 - \nu}{1 + \nu} \hspace{-0.5mm} \big\} \hspace{-0.5mm} \big)^{\hspace{-0.5mm} ( 11 n - 5 ) / 2}} \hspace{-0.5mm} \Bigg( \hspace{-1.75mm} \ln \hspace{-0.75mm} \bigg( \hspace{-0.5mm} \tfrac{15 n \max \hspace{-0.5mm} \big\{ \hspace{-0.5mm} 1 , \tfrac{1 - \nu}{1 + \nu} \hspace{-0.5mm} \big\}}{\| W_{n : 1 , 0} \|_{Frobenius} \min \{ 1 , \tilde{\epsilon} \}} \hspace{-0.5mm} \bigg) \hspace{-1.5mm} \Bigg)^{\hspace{-1.5mm} -2} \hspace{-2.5mm} \in \mathit{\tilde{\Omega}} \bigg( \hspace{-0.5mm} \tfrac{\| W_{n : 1 , 0} \|_{Frobenius}^5 \tilde{\epsilon}}{n^{17 / 2} \big( poly \big( \tfrac{1 - \nu}{1 + \nu} \big) \hspace{-0.5mm} \big)^{\hspace{-0.5mm} n}} \hspace{-0.5mm} \bigg)
\hspace{-0.5mm} \text{\,,} \hspace{-0.5mm}
\label{eq:gd_translation_eta}
\ee
it holds that $f ( \thetabf_k ) - \min_{\q \in \R^d} f ( \q ) \leq \tilde{\epsilon}$, where:
\be
\hspace{-2mm}
k = \left\lfloor \hspace{-0.5mm} \tfrac{2 n \big(  \max \big\{ 1 , \tfrac{3}{2}\cdot \tfrac{1 - \nu}{1 + \nu} \big\} \big)^n}{\| W_{n : 1 , 0} \|_{Frobenius} \eta} \ln \hspace{-0.5mm} \bigg( \hspace{-0.5mm} \tfrac{15 n \max \big\{ 1 , \tfrac{1 - \nu}{1 + \nu} \big\}}{\| W_{n : 1 , 0} \|_{Frobenius} \min \{ 1 , \tilde{\epsilon} \}} \hspace{-0.5mm} \bigg) \,{+}\, 1 \hspace{-0.25mm} \right\rfloor \hspace{-0.5mm} \in \tilde{\OO} \bigg( \tfrac{n \big( poly \big( \tfrac{1 - \nu}{1 + \nu} \big) \big)^{\hspace{-0.5mm} n} \ln \big( \tfrac{1}{\tilde{\epsilon}} \big)}{\| W_{n : 1 , 0} \|_{Frobenius} \eta} \bigg)
\text{\,.}
\label{eq:gd_translation_k}
\ee
\end{theorem}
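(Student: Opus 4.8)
The plan is to instantiate the generic Theorem~\ref{theorem:gf_gd} using the gradient flow analysis of Proposition~\ref{prop:gf_analysis}. First I would fix the gradient flow ``target time'' by setting $\bar{\epsilon} := \tilde{\epsilon} / 2$ and letting $\bar{t}$ be the corresponding time from Equation~\eqref{eq:gf_analysis_time}, so that the gradient flow trajectory $\thetabf ( \cdot )$ satisfies $f ( \thetabf ( \bar{t} \, ) ) - \min_{\q \in \R^d} f ( \q ) \leq \tilde{\epsilon} / 2$, and recall that $f ( \thetabf ( \cdot ) )$ is non-increasing. Next I would choose the tracking accuracy $\epsilon \in ( 0 , \tfrac{1}{2 n} ]$ (so that Proposition~\ref{prop:gf_analysis} applies) small enough to meet two requirements simultaneously: \emph{(a)}~$6 \sqrt{n} \, \epsilon + 8 n \epsilon^2 \leq \tilde{\epsilon} / 2$, which, together with the smoothness and Lipschitz bounds $\beta_{t , \epsilon} = 16 n$ and $\gamma_{t , \epsilon} = 6 \sqrt{n}$ from Proposition~\ref{prop:gf_analysis}, ensures via a second-order Taylor bound along the segment joining a gradient descent iterate to the gradient flow point it tracks that $\| \thetabf_k - \thetabf ( k \eta ) \|_2 \leq \epsilon$ forces $f ( \thetabf_k ) - f ( \thetabf ( k \eta ) ) \leq \tilde{\epsilon} / 2$; and \emph{(b)}~$\epsilon$ is small enough that the term of Equation~\eqref{eq:gf_analysis_m} that is linear in $t \epsilon$ stays bounded by an absolute constant for $t$ up to a small multiple of $\bar{t}$, so that $e^{\int_0^t m ( t' ) dt'}$ remains polynomially bounded. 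Substituting the formula for $\bar{t}$, one checks that a choice of the form $\epsilon = \Theta \big( \| W_{n : 1 , 0} \|_{Frobenius}^2 \, \tilde{\epsilon} \, \big/ \, \big( n^5 ( \max \{ 1 , \tfrac{1 - \nu}{1 + \nu} \} )^{2 n} \ln ( \cdots ) \big) \big)$ satisfies both (using $\| W_{n : 1 , 0} \|_{Frobenius} \leq 0.2$ and, without loss of generality, $\tilde{\epsilon} < 1$).

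With $\epsilon$ fixed, I would apply Theorem~\ref{theorem:gf_gd}. The key simplification is that gradient descent starts where gradient flow starts, $\thetabf_0 = \thetabf ( 0 ) = \thetabf_s$, so the initialization-error term $\| \thetabf_0 - \thetabf ( 0 ) \|_2$ in Equation~\eqref{eq:gf_gd_eta} vanishes and the step-size condition becomes $\eta < \inf_{t \in ( 0 , \tilde{t} \, ]} \epsilon \big/ \big( \beta_{\tilde{t} , \epsilon} \gamma_{\tilde{t} , \epsilon} \int_0^t e^{\int_{t'}^t m ( t'' ) dt''} dt' \big)$. Since $m ( \cdot ) \geq 0$ (Proposition~\ref{prop:gf_analysis}), I would bound $\int_0^t e^{\int_{t'}^t m} dt' \leq t \, e^{\int_0^t m}$, observe this is non-decreasing in $t$, and take $\tilde{t} := k \eta$ with $k := \lfloor \bar{t} / \eta \rfloor + 1$, so that $\bar{t} \leq k \eta < 2 \bar{t}$ (the upper bound using $\eta \leq \bar{t}$, which holds for the tiny step sizes in play). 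It then suffices to require $\eta \leq \epsilon \big/ \big( 96 \, n^{3 / 2} \cdot 2 \bar{t} \cdot e^{\int_0^{2 \bar{t}} m ( t' ) dt'} \big)$. Plugging in $\bar{t}$ from Equation~\eqref{eq:gf_analysis_time}, the bound $e^{\int_0^{2 \bar{t}} m} = \OO \big( n^2 ( e^2 \max \{ 1 , \tfrac{1 - \nu}{1 + \nu} \} )^{5 ( n - 1 ) / 2} \big/ \| W_{n : 1 , 0} \|_{Frobenius}^2 \big)$ from Equation~\eqref{eq:gf_analysis_m} and property (b), and the above $\epsilon$, a coarse simplification --- absorbing $\OO ( 1 )$ and $e^{\OO ( n )}$ prefactors and using $\max \{ 1 , \tfrac{3}{2} \cdot \tfrac{1 - \nu}{1 + \nu} \} \leq \tfrac{3}{2} \max \{ 1 , \tfrac{1 - \nu}{1 + \nu} \}$ --- reproduces Equation~\eqref{eq:gd_translation_eta}: $\epsilon$ supplies one factor $\| W_{n : 1 , 0} \|_{Frobenius}^2$, $e^{\int m}$ another, and $\bar{t}$ a final $\| W_{n : 1 , 0} \|_{Frobenius}$ (hence $\| W_{n : 1 , 0} \|_{Frobenius}^5$), while the logarithm appears with exponent $-2$ (once from $\epsilon$, once from $\bar{t}$); the $\tilde{\Omega} ( \cdot )$ form follows by retaining only the leading dependence.

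Finally, for this $\eta$ and $\tilde{t} = k \eta$, Theorem~\ref{theorem:gf_gd} yields $\| \thetabf_{k'} - \thetabf ( k' \eta ) \|_2 \leq \epsilon$ for all $k' \in \{ 1 , \ldots , \lfloor \tilde{t} / \eta \rfloor \} = \{ 1 , \ldots , k \}$, in particular for $k' = k$. Because $k \eta \geq \bar{t}$ and $f ( \thetabf ( \cdot ) )$ is non-increasing, $f ( \thetabf ( k \eta ) ) - \min_{\q \in \R^d} f ( \q ) \leq f ( \thetabf ( \bar{t} \, ) ) - \min_{\q \in \R^d} f ( \q ) \leq \tilde{\epsilon} / 2$, and combining with property (a) gives $f ( \thetabf_k ) - \min_{\q \in \R^d} f ( \q ) \leq \tilde{\epsilon}$, as claimed; substituting $\bar{t}$ from Equation~\eqref{eq:gf_analysis_time} into $k = \lfloor \bar{t} / \eta \rfloor + 1$ gives the iteration count in Equation~\eqref{eq:gd_translation_k} (with its $\tilde{\OO} ( \cdot )$ form obtained again by keeping leading terms). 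I expect the main obstacle to be purely administrative: tracking the many $n$-, $\nu$- and $\| W_{n : 1 , 0} \|_{Frobenius}$-dependent constants through Proposition~\ref{prop:gf_analysis} and Theorem~\ref{theorem:gf_gd} carefully enough that they collapse into the stated closed forms --- in particular pinning down the exponents of $n$, of $\max \{ 1 , \tfrac{1 - \nu}{1 + \nu} \}$ and the $e^{\OO ( n )}$ factor --- together with the minor bookkeeping around the floor in the definition of $k$; the conceptual content is just the substitution described above.
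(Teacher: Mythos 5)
Your plan follows essentially the same route as the paper's proof: set $\bar{\epsilon}=\tilde{\epsilon}/2$, pick $\epsilon$ so that the $t\epsilon$ term of Equation~\eqref{eq:gf_analysis_m} stays $\OO(1)$ and so that $\epsilon$-proximity in parameter space forces $\tilde{\epsilon}/2$-proximity in loss (the paper uses only the Lipschitz constant $\gamma_{t,\epsilon}=6\sqrt{n}$ rather than your second-order bound, and defines $\epsilon$ directly as $\|W_{n:1,0}\|_{F}\tilde{\epsilon}\big/\big(15n^3(\max\{1,\tfrac32\tfrac{1-\nu}{1+\nu}\})^n k\eta\big)$, which scales as $n^{-4}$ rather than your $n^{-5}$ and is what yields the stated $n^{17/2}$), then feed Proposition~\ref{prop:gf_analysis}'s geometric outputs into Theorem~\ref{theorem:gf_gd} at $\tilde{t}=k\eta\geq\bar{t}$ via the bound $\int_0^t e^{\int_{t'}^t m}\,dt'\leq t\,e^{\int_0^t m}$. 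The argument is correct; only the constant bookkeeping you defer remains.
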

\begin{proofsketch}{(for complete proof see Subappendix~\ref{app:proof:gd_translation})}
The proof calls Proposition~\ref{prop:gf_analysis} with $\bar{\epsilon}$ and~$\epsilon$ small enough such that for any $t > 0$ and $\q' \in \R^d$, if gradient flow at time~$t$ is $\bar{\epsilon}$-optimal (meaning $f ( \thetabf ( t ) ) - \min_{\q \in \R^d} f ( \q ) \leq \bar{\epsilon}$\,) and is $\epsilon$-approximated by~$\q'$ (\ie~$\| \q' - \thetabf ( t ) \|_2 \leq \epsilon$), then $\q'$ is $\tilde{\epsilon}$-optimal ($f ( \q' ) - \min_{\q \in \R^d} f ( \q ) \leq \tilde{\epsilon}$\,).
The proposition implies that gradient flow is $\bar{\epsilon}$-optimal at the time~$\bar{t}$ given in Equation~\eqref{eq:gf_analysis_time}.
Since gradient flow monotonically non-increases~$f ( \cdot )$, it is $\bar{\epsilon}$-optimal at any time after~$\bar{t}$ as well.
With $\eta$ and~$k$ adhering to Equations \eqref{eq:gd_translation_eta} and~\eqref{eq:gd_translation_k} respectively, we have $k \eta \geq \bar{t}$, so it suffices to show that when its step size is~$\eta$, the first $k$ iterates of gradient descent $\epsilon$-approximate the trajectory of gradient flow up to time~$k \eta$.
This follows directly from delivering to Theorem~\ref{theorem:gf_gd} the geometric results of Proposition~\ref{prop:gf_analysis} (bound on integral of minimal eigenvalue of the Hessian, as well as smoothness and Lipschitz constants) corresponding to~$\D_{k \eta , \epsilon}$~---~$\epsilon$-neighborhood of gradient flow trajectory up to time~$k \eta$.
\end{proofsketch}
\vspace{-3mm}
\begin{remark}
\label{rem:unbalance}
Theorem~\ref{theorem:gf_gd}~---~our generic tool for translating analyses between gradient flow and gradient descent~---~allows for the two to be initialized differently.
Accordingly, the convergence guarantee of Theorem~\ref{theorem:gd_translation} may be extended to account for initialization which is not perfectly balanced, \ie~which satisfies Equation~\eqref{eq:balance} only approximately.
For details see Appendix~\ref{app:unbalance}.
\end{remark}
\begin{remark}
\label{rem:exponential}
The convergence guarantee of Theorem~\ref{theorem:gd_translation} requires a number of iterates that scales exponentially with network depth~($n$).
\cite{shamir2019exponential}~has proven that under mild conditions, for a deep linear neural network whose input, hidden and output dimensions are all equal to one (\ie, in our notations, $d_0 = d_1 = \cdots = d_n = 1$), such exponential dependence on depth is unavoidable.
We defer to future work the question of whether this also holds in the context of Theorem~\ref{theorem:gd_translation}.
\end{remark}

Combining Theorem~\ref{theorem:gd_translation} with random balanced initialization (Procedure~\ref{proc:balance}) yields what is, to our knowledge, the first guarantee of random (data-independent) near-zero initialization \emph{almost surely} leading a conventional gradient-based algorithm optimizing a deep (three or more layer) neural network of fixed (data-independent) size to efficiently converge to global minimum.
\begin{corollary}
\label{corollary:gd_almost_surely}
Consider minimization of the training loss~$f ( \cdot )$ in Equation~\eqref{eq:train_loss_lnn_square} via gradient descent (Equation~\eqref{eq:gd}) emanating from a random balanced initialization (Procedure~\ref{proc:balance}) whose underlying distribution~$\PP$ is continuous and satisfies~$\Pr_{A \sim \PP} \big[ \| A \|_{Frobenius} \leq 0.2 \big] = 1$.
Assume~$d_0$ (network input dimension) is greater than one, and let $W_{n : 1 , 0}$ and~$\nu$ be as defined in Theorem~\ref{theorem:gd_translation}.
Then, almost surely with respect to (\ie~with probability one over) initialization, for any $\tilde{\epsilon} > 0$, if the step size~$\eta$ meets Equation~\eqref{eq:gd_translation_eta}, the value of~$f ( \cdot )$ after $k$~iterates will be within $\tilde{\epsilon}$ from global minimum, where $k$ is given by Equation~\eqref{eq:gd_translation_k}.
\end{corollary}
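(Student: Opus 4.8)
The plan is to reduce the corollary to a \emph{pointwise} application of Theorem~\ref{theorem:gd_translation}, by exhibiting a full‑measure event on which every hypothesis of that theorem (equivalently, of Proposition~\ref{prop:gf_analysis}) holds. Concretely, I would introduce the event
\[
E \;:=\; \big\{ \| W_{n : 1 , 0} \|_{Frobenius} \in ( 0 , 0.2 ] \big\} \;\cap\; \big\{ \nu \neq -1 \big\}
\text{\,,}
\]
show that $\Pr[E]=1$, and observe that on $E$ the conditions ``$\thetabf_0$ balanced'', ``$\|W_{n:1,0}\|_{Frobenius}\in(0,0.2]$'' and ``$W_{n:1,0}$ not antiparallel to $\Lambda_{yx}$'' are all in force, so Theorem~\ref{theorem:gd_translation} applies verbatim. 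Since $E$ does not depend on $\tilde\epsilon$ or $\eta$, invoking the theorem on $E$ delivers exactly the stated conclusion, with the quantifier order (``almost surely, for any $\tilde\epsilon>0$, if $\eta$ meets~\eqref{eq:gd_translation_eta} \ldots'') intact.

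First I would record the deterministic properties of Procedure~\ref{proc:balance} (the adaptation of Procedure~1 in~\cite{arora2019convergence}): the weight setting $\thetabf_0$ it produces satisfies the balancedness condition~\eqref{eq:balance} \emph{exactly}, and its end‑to‑end matrix equals the sampled matrix up to zero padding, so that $\| W_{n : 1 , 0} \|_{Frobenius} = \| A \|_{Frobenius}$ and $\nu$ coincides with the cosine of the angle between $A$ and $\Lambda_{yx}$. (For $d_n=1$ this is immediate: the product of the $n$ factors $U\Sigma^{1/n},\Sigma^{1/n},\ldots,\Sigma^{1/n},\Sigma^{1/n}V^\top$ telescopes to $U\Sigma V^\top=A$, and the diagonal blocks $\Sigma^{2/n}$ match the balancedness equations.) This step reduces all probabilistic claims to claims about $A\sim\PP$.

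Next I would verify $\Pr[E]=1$ in two parts. The upper bound $\|A\|_{Frobenius}\le 0.2$ holds almost surely by the hypothesis $\Pr_{A\sim\PP}\big[\|A\|_{Frobenius}\le 0.2\big]=1$; the lower bound $\|A\|_{Frobenius}>0$ holds almost surely because $\PP$, being continuous, is atomless, so $\Pr[A=0]=0$. For the antiparallel condition, note that $\nu=-1$ forces $A$ (hence $W_{n:1,0}$ after padding) onto the open half‑line $\{-c\,\Lambda_{yx}:c>0\}\subset\R^{d_0}$; since $\Lambda_{yx}\neq 0$ (a standing assumption of this section) and $d_0>1$, this half‑line is a Lebesgue‑null set, so continuity of $\PP$ gives $\Pr[\nu=-1]=0$. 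This is the single place where the hypothesis $d_0>1$ is used, and it is exactly what makes the ``bad'' locus lower‑dimensional and therefore negligible.

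Finally, on $E$ all hypotheses of Proposition~\ref{prop:gf_analysis}, and thus of Theorem~\ref{theorem:gd_translation}, are satisfied, yielding: for every $\tilde\epsilon>0$, if $\eta$ obeys~\eqref{eq:gd_translation_eta} then $f(\thetabf_k)-\min_{\q\in\R^d}f(\q)\le\tilde\epsilon$ with $k$ as in~\eqref{eq:gd_translation_k}. I do not anticipate a substantive obstacle; the only points needing care are the measure‑theoretic bookkeeping (a continuous law assigns zero probability to the single point $\{0\}$ and to the half‑line $\R_{<0}\Lambda_{yx}$ — the latter null precisely because $d_0\ge 2$) and a correct statement of the exact‑balancedness property of Procedure~\ref{proc:balance}.
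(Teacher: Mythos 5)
Your proposal is correct and follows essentially the same route as the paper: verify that Procedure~\ref{proc:balance} yields exact balancedness with end-to-end matrix distributed as~$\PP$, and then note that the conditions $\| W_{n:1,0} \|_{Frobenius} \in (0,0.2]$ and $\nu \neq -1$ fail only on a set of measure zero (the paper handles the origin and the antiparallel half-line jointly as the line through the origin and~$\Lambda_{yx}$, which is null since $d_0 > 1$). Your write-up is merely a more detailed rendering of the same argument, so no further comparison is needed.
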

\begin{proof}
It suffices to show that the conditions of Theorem~\ref{theorem:gd_translation} are almost surely satisfied.
Initialization is balanced by construction, and since~$W_{n : 1 , 0}$ (initial end-to-end matrix) follows the distribution~$\PP$, it almost surely has Frobenius norm no greater than~$0.2$.
Moreover, since~$\PP$ is continuous, and the line in~$\R^{1 , d_0}$ passing through the origin and~$\Lambda_{yx}$ has (Lebesgue) measure zero, $W_{n : 1 , 0}$~is almost surely not equal to zero and not antiparallel to~$\Lambda_{yx}$.
This completes the proof.
\end{proof}

\section{Experiments} \label{sec:experiments}

In this section we corroborate our theory by presenting experiments suggesting that over simple deep neural networks, gradient descent with conventional step size is indeed close to the continuous limit, \ie~to gradient flow.
Our experimental protocol is simple~---~on several deep neural networks classifying MNIST handwritten digits (\cite{lecun1998mnist}), we compare runs of gradient descent differing only in the step size~$\eta$.
Specifically, separately on each evaluated network, with $\eta_0 = 0.001$ (standard choice of step size) and~$r$ ranging over $\{ 2 , 5 , 10 , 20 \}$, we compare, in terms of training loss value and location in weight space, every iteration of a run using $\eta \, {=} \, \eta_0$ to every $r$'th iteration of a run in which $\eta \, {=} \, \eta_0 / r$.
Figure~\ref{fig:exp_fnn} reports the results obtained on fully connected neural networks (as analyzed in Subsection~\ref{sec:roughly_convex:fnn}), with both linear and non-linear activation.
As can be seen, reducing the step size~$\eta$ leads to only slight changes, suggesting that the trajectory of gradient descent with $\eta \, {=} \, \eta_0$ is already close to the continuous limit.
Similar results obtained on convolutional neural networks (see Subsection~\ref{sec:roughly_convex:cnn} for corresponding analysis) are reported by Figure~\ref{fig:exp_cnn}~in~Subappendix~\ref{app:experiments:further}.

Our experimental findings suggest that in practice, proximity between gradient descent and gradient flow may take place even when the step size of gradient descent is larger than permitted by current theory.
Indeed, the theoretical machinery developed in this paper brings forth upper bounds on step size that guarantee proximity, and while such upper bounds can be asymptotically tight under worst case conditions (see Appendix~\ref{app:worst}), they are by no means tight in every given scenario, and therefore larger step sizes may also admit proximity.
For illustration, a step size of~$\eta_0$, which in our experiments was seemingly sufficient for ensuring proximity, is many orders of magnitude greater than the upper bound on step size required by Theorem~\ref{theorem:gd_translation} (Equation~\eqref{eq:gd_translation_eta}).

\begin{figure}
\begin{center}
\includegraphics[width=0.48\textwidth]{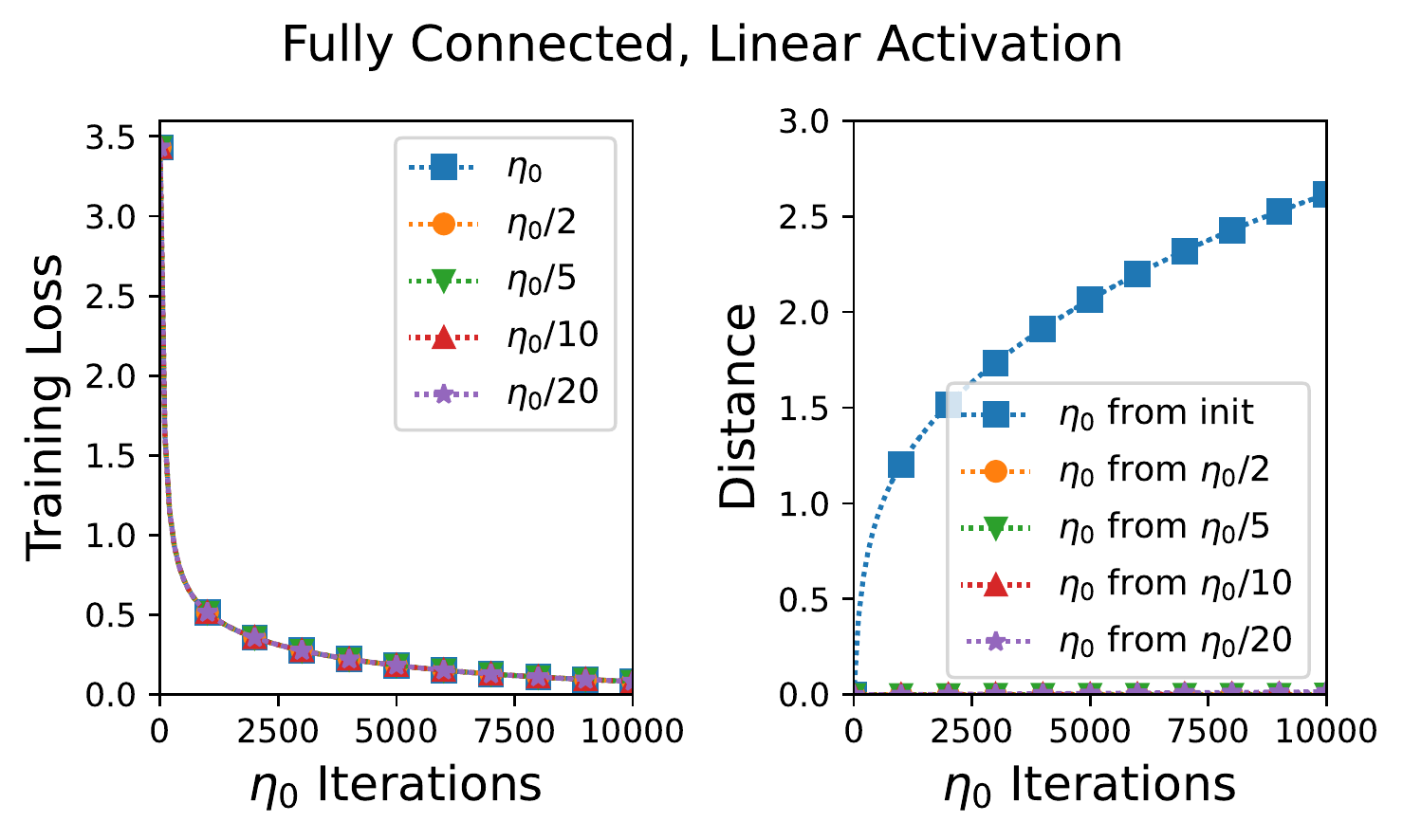}
\hspace{3mm}
\includegraphics[width=0.48\textwidth]{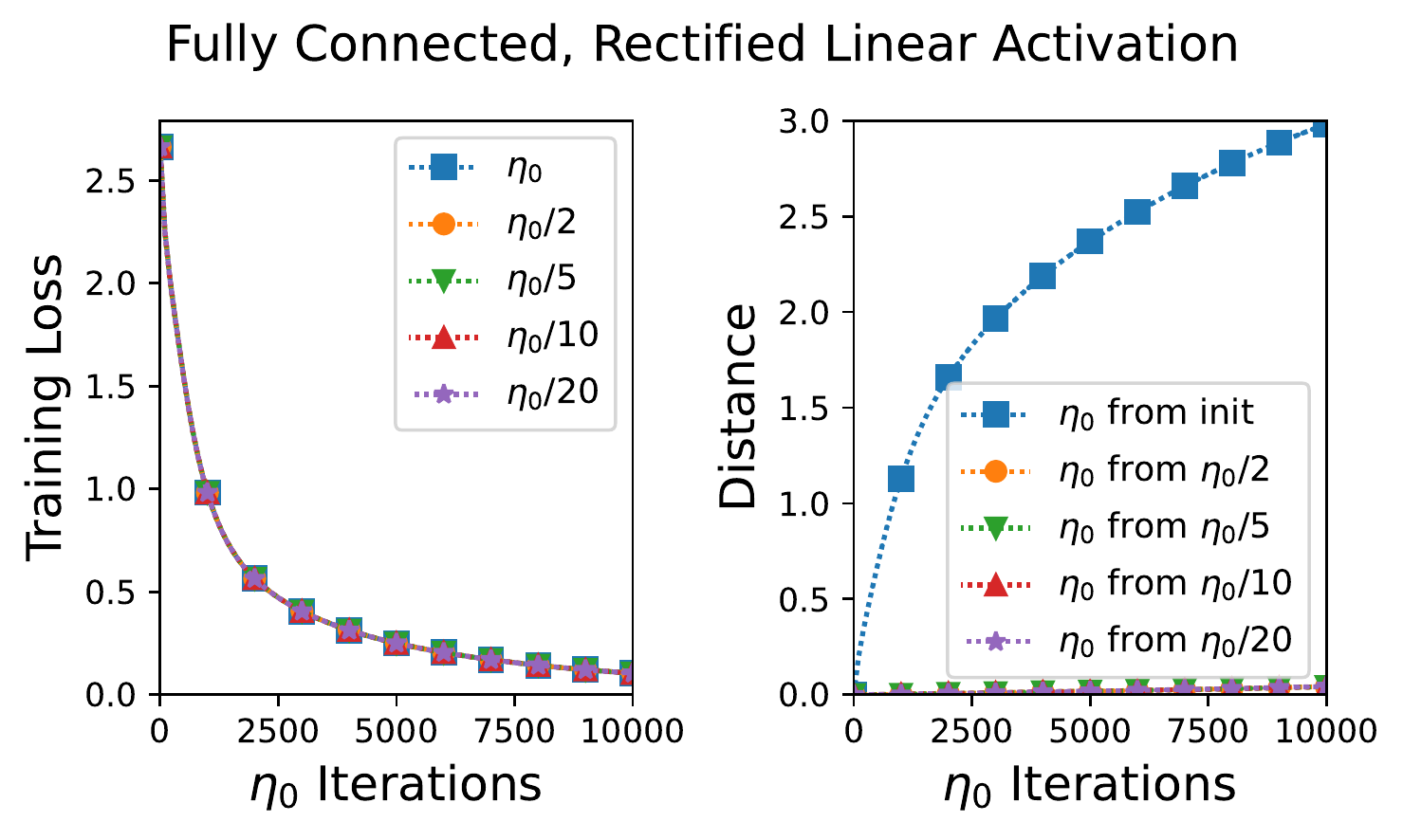}
\end{center}
\vspace{-4mm}
\caption{
Over deep fully connected neural networks, trajectories of gradient descent with conventional step size barely change when step size is reduced, suggesting they are close to the continuous limit, \ie~to trajectories of gradient flow.
Presented results were obtained on fully connected neural networks as analyzed in Subsection~\ref{sec:roughly_convex:fnn}, trained to classify MNIST handwritten digits ($28$-by-$28$ grayscale images, each labeled as an integer between $0$ and~$9$~---~\cf~\cite{lecun1998mnist}).
Networks had depth $n \, {=} \, 3$, input dimension $d_0 \, {=} \, 784$ (corresponding to $28 \cdot 28 \, {=} \, 784$~pixels), hidden widths $d_1 \, {=} \, d_2 \, {=} \, 50$ and output dimension $d_3 \, {=} \, 10$ (corresponding to ten possible labels).
Training was based on gradient descent applied to cross-entropy loss with no regularization, starting from a near-zero point drawn from Xavier distribution (\cf~\cite{glorot2010understanding}).
Separately on each network, we compared runs differing only in the step size~$\eta$.
Specifically, with $\eta_0 \, {=} \, 0.001$ (standard choice of step size) and~$r$ ranging over $\{ 2 , 5 , 10 , 20 \}$, we compared, in terms of training loss value and location in weight space, every iteration of a run using $\eta \, {=} \, \eta_0$ to every $r$'th iteration of a run in which $\eta \, {=} \, \eta_0 / r$.
Left pair of plots reports results obtained on a network with linear activation ($\sigma ( z ) = z$), while right pair corresponds to a network with rectified linear activation ($\sigma ( z ) = \max \{ z , 0 \}$).
In each pair, left plot displays training loss values, and right one shows (Euclidean) distances in weight space, namely, distance between initialization and run with $\eta \, {=} \, \eta_0$, alongside distances between run with $\eta \, {=} \, \eta_0$ and runs having $\eta \, {=} \, \eta_0 / r$ for different values of~$r$.
Horizontal axes represent time in units of $\eta \, {=} \, \eta_0$ iterations (meaning each time unit corresponds to~$r$ iterations of a run with $\eta \, {=} \, \eta_0 / r$).
Notice that the drift between runs with different step sizes is minor compared to the distance traveled.
For further implementation details, and results of similar experiments on convolutional neural networks, see Appendix~\ref{app:experiments}.
}
\label{fig:exp_fnn}
\end{figure}

\section{Related Work} \label{sec:related}

Theoretical study of gradient-based optimization in deep learning is an extremely active area of research.
While far too wide to fully cover here, we note that analyses in this area can broadly be categorized as continuous (see for example~\cite{saxe2014exact,arora2018optimization,lampinen2019analytic,arora2019implicit,advani2020high,eftekhari2020training,vardi2020implicit,razin2020implicit,ji2020directional,razin2021implicit,woodworth2020kernel,azulay2021implicit,yun2021unifying}) or discrete (\eg~\cite{bartlett2018gradient,gunasekar2018implicit,du2019gradient,allen2019convergence,du2019width,zou2020global,hu2020provable}).
There are works comprising analyses of both types (\cf~\cite{du2018algorithmic,ji2019gradient,arora2019convergence,wu2019global,lyu2019gradient,e2019analysis,chizat2020implicit,chou2020gradient}), but with these developed separately, wherein continuous proofs typically serve as inspiration for discrete ones (which are often far more technical and brittle).

When relating continuous and discrete optimization, the algorithms at play are most commonly gradient flow and gradient descent.
There are however works that draw analogies between other algorithms, replacing gradient flow on the continuous end and/or gradient descent on the discrete one (see, \eg, \cite{su2014differential,wibisono2016variational,wilson2016lyapunov,raginsky2017non,scieur2017integration,li2017stochastic,shi2018understanding,zhang2018direct,franca2018admm,orvieto2019shadowing,merkulov2020stochastic,barrett2021implicit,kunin2021neural,smith2021origin}).
The literature includes works which, similarly to the current paper, provide formal results concerning the accumulated (non-local) discrepancy between continuous and discrete optimization (\cf~\cite{scieur2017integration,orvieto2019shadowing}).
However, such works typically focus on simple objective functions (for example convex or quadratic), whereas we center on (non-convex and non-smooth) training losses of deep neural networks.
Several recent works (\eg~\cite{barrett2021implicit,kunin2021neural,cohen2021gradient}) also considered continuous \vs~discrete optimization of deep neural networks, but they did not provide formal results concerning the accumulated discrepancy.
We are not aware of any study (prior to the current) formally quantifying the accumulated discrepancy between continuous and discrete optimization of deep neural networks.

With regards to the convergence guarantee we obtain in Section~\ref{sec:lnn} (via translation of gradient flow analysis to gradient descent)~---~Theorem~\ref{theorem:gd_translation} and Corollary~\ref{corollary:gd_almost_surely}~---~relevant results are those that establish efficient convergence\textsuperscript{\ref{note:eff_converge}} to global minimum for a conventional (discrete) gradient-based algorithm optimizing a deep (three or more layer) neural network.
Existing results meeting this criterion either:
\emph{(i)}~apply to neural networks (linear or non-linear) whose size depends on the data (\ie~is not data-independent\textsuperscript{\ref{note:data_ind}}), predominantly in an impractical fashion (\cf~\cite{zou2018stochastic,du2019gradient,allen2019convergence,e2019analysis,zou2019improved,noy2021convergence});
or
\emph{(ii)}~apply to linear neural networks of fixed (data-independent) size, similarly to our guarantee.
Results of type~\emph{(ii)} often treat the residual setting, which boils down to (possibly scaled) identity initialization, perhaps with input and/or output layers initialized differently (see for example~\cite{bartlett2018gradient,wu2019global,zou2020global}).
Exceptions include \cite{arora2019convergence}, \cite{du2019width} and~\cite{hu2020provable}.
\cite{arora2019convergence}~allows for random balanced initialization, as we do.
Its results account for networks with multi-dimensional output, and require a number of iterates polynomial in network depth.
Our guarantee on the other hand is limited to networks with one-dimensional output, and calls for a number of iterates scaling exponentially with network depth.
However, while \cite{arora2019convergence} demands that initialization be sufficiently close to global minimum, thereby excluding the possibility of saddle points being encountered, our guarantee holds \emph{almost surely} (\ie~with probability one) under random (data-independent) near-zero initialization.
The fact that we account for evasion of saddle points (in particular that at the origin, which is non-strict\note{%
A saddle point is said to be non-strict if its Hessian has no negative eigenvalues.
Saddle points that are non-strict are generally regarded as more difficult to evade~---~\cf~\cite{arora2019convergence}.
\label{note:non_strict}
}
when network depth is three or more) may be the source of the gap in number of iterates~---~see Remark~\ref{rem:exponential}.
As for the results of \cite{du2019width} and~\cite{hu2020provable}, these also hold with high probability under random initialization, but they require network size to grow towards infinity in order for the probability to approach one.

\section{Conclusion} \label{sec:conclusion}

\subsection{Summary} \label{sec:conclusion:summary}

The extent to which gradient flow represents gradient descent is an open question in the theory of deep learning.
Appealing to the literature on numerical analysis, we invoked a fundamental theorem scarcely used in machine learning contexts (Section~\ref{sec:preliminaries}), and found that in general, the larger (less negative or more positive) the minimal eigenvalue of the Hessian is around gradient flow trajectories, the better the match between gradient flow and gradient descent is guaranteed to be\textsuperscript{\normalfont{\ref{note:match_smooth_lip}}} (Section~\ref{sec:match}).
We then analyzed trajectories of gradient flow over deep neural networks (fully connected as well as convolutional) with homogeneous activations (\eg~linear, rectified linear or leaky rectified linear), and showed that the minimal eigenvalue of the Hessian along them is far greater than in arbitrary points in space, particularly towards convergence (Section~\ref{sec:roughly_convex}).
This allowed us to translate an analysis of gradient flow over deep linear neural networks into a convergence result for gradient descent, which to our knowledge forms the first guarantee of random (data-independent\textsuperscript{\ref{note:data_ind}}) near-zero initialization \emph{almost surely} leading a conventional gradient-based algorithm optimizing a deep (three or more layer) neural network of fixed (data-independent) size to efficiently convergence\textsuperscript{\ref{note:eff_converge}} to global minimum (Section~\ref{sec:lnn}).
Experiments complemented our theory, suggesting that over simple deep neural networks, gradient descent with conventional step size is indeed close to the continuous limit, \ie~to gradient flow (Section~\ref{sec:experiments}).

\subsection{Discussion} \label{sec:conclusion:discussion}

Our work puts forth a potential explanation to a puzzling phenomenon in deep learning, namely, the effect of weight decay ($L_2$~regularization).
While traditionally viewed as a regularizer, it is known (\cf~\cite{krizhevsky2012imagenet}) that in deep learning, weight decay can assist in minimizing the training loss.
In light of our findings, a possible reason for this is that weight decay translates to adding a positive constant to Hessian eigenvalues, thereby bringing gradient descent closer to gradient flow, which often enjoys favorable convergence properties.
Theoretical and/or empirical investigation of this prospect is a potential avenue for future work.

Emerging evidence (\cf~\cite{li2019towards,lewkowycz2020large,jastrzebski2020break}) suggests that for (variants of) gradient descent optimizing deep neural networks, large step size is often beneficial in terms of generalization (\ie~in terms of test accuracy).
While the large step size regime is not necessarily captured by standard (variants of) gradient flow (see~\cite{cohen2021gradient}), recent works (\eg~\cite{barrett2021implicit,kunin2021neural,smith2021origin}) argue that it is captured by a certain modified version of (variants of) gradient flow.
Formally quantifying the discrepancy between gradient descent with large step size and such modified version of gradient flow is a promising direction for future research.

The demonstration we provided for translation of a gradient flow analysis to gradient descent (Section~\ref{sec:lnn}) culminated in a convergence guarantee, but in fact entails much more information.
Namely, since the translated gradient flow analysis includes a careful trajectory characterization, not only do we know that gradient descent converges to global minimum (and how fast that happens), but we also have access to information about the trajectory it takes to get there.
This allows, for example, shedding light on how saddle points (non-strict ones in particular\textsuperscript{\ref{note:non_strict}}) are evaded.
A nascent belief (\cf~\cite{arora2019convergence,arora2019implicit}) is that understanding the trajectories of gradient descent is key to unraveling mysteries behind optimization and generalization (implicit regularization) in deep learning.
The machinery developed in the current paper may contribute to this understanding, by translating results from the vast bodies of literature on continuous dynamical systems.

\ifdefined\NEURIPS
	\begin{ack}
	We thank Sanjeev Arora, Noah Golowich, Wei Hu, Michael Lee, Zhiyuan Li, Kaifeng Lyu, Govind Menon and Zsolt Veraszto for helpful discussions.
This work was supported by a Google Research Scholar Award, a Google Research Gift, the Yandex Initiative in Machine Learning, the Israel Science Foundation (grant 1780/21), Len Blavatnik and the Blavatnik Family Foundation, and Amnon and Anat Shashua.

	\end{ack}
\else
	\newcommand{\ack}{}
\fi
\ifdefined\COLT
	\acks{\ack}
\else
	\ifdefined\CAMREADY
		\ifdefined\ICLR
			\newcommand*{\subsuback}{}
		\fi
		\ifdefined\NEURIPS
		\else
			\section*{Acknowledgments}
			\ack
		\fi
	\fi
\fi

\section*{References}
{\small
\ifdefined\ICML
	\bibliographystyle{icml2018}
\else
	\bibliographystyle{plainnat}
\fi
\bibliography{refs}
}

\ifdefined\NEURIPS
	\section*{Checklist}


\begin{enumerate}

\item For all authors...
\begin{enumerate}
  \item Do the main claims made in the abstract and introduction accurately reflect the paper's contributions and scope?
    \answerYes{}
  \item Did you describe the limitations of your work?
    \answerYes{}
  \item Did you discuss any potential negative societal impacts of your work?
    \answerNA{}
  \item Have you read the ethics review guidelines and ensured that your paper conforms to them?
    \answerYes{}
\end{enumerate}

\item If you are including theoretical results...
\begin{enumerate}
  \item Did you state the full set of assumptions of all theoretical results?
    \answerYes{}
	\item Did you include complete proofs of all theoretical results?
    \answerYes{}
\end{enumerate}

\item If you ran experiments...
\begin{enumerate}
  \item Did you include the code, data, and instructions needed to reproduce the main experimental results (either in the supplemental material or as a URL)?
    \answerYes{}
  \item Did you specify all the training details (e.g., data splits, hyperparameters, how they were chosen)?
    \answerYes{}
	\item Did you report error bars (e.g., with respect to the random seed after running experiments multiple times)?
    \answerNA{}
	\item Did you include the total amount of compute and the type of resources used (e.g., type of GPUs, internal cluster, or cloud provider)?
    \answerYes{}
\end{enumerate}

\item If you are using existing assets (e.g., code, data, models) or curating/releasing new assets...
\begin{enumerate}
  \item If your work uses existing assets, did you cite the creators?
    \answerYes{}
  \item Did you mention the license of the assets?
    \answerNo{All assets used (MNIST data and PyTorch framework) are standard.}
  \item Did you include any new assets either in the supplemental material or as a URL?
    \answerNA{There are no new assets other than our code (included in supplemental material).}
  \item Did you discuss whether and how consent was obtained from people whose data you're using/curating?
    \answerNA{}
  \item Did you discuss whether the data you are using/curating contains personally identifiable information or offensive content?
    \answerNo{All data used (MNIST) is standard.}
\end{enumerate}

\item If you used crowdsourcing or conducted research with human subjects...
\begin{enumerate}
  \item Did you include the full text of instructions given to participants and screenshots, if applicable?
    \answerNA{}
  \item Did you describe any potential participant risks, with links to Institutional Review Board (IRB) approvals, if applicable?
    \answerNA{}
  \item Did you include the estimated hourly wage paid to participants and the total amount spent on participant compensation?
    \answerNA{}
\end{enumerate}

\end{enumerate}

\fi

\clearpage
\appendix

\section{Infinite Time for Gradient Flow Over Smooth Objective} \label{app:infinite}

By Theorem~\ref{theorem:exist_unique}, gradient flow over a twice continuously differentiable objective function~$f : \R^d \to \R$ (Equation~\eqref{eq:gf}) admits a unique solution $\thetabf : [ 0 , t_e ) \,{\to}\, \R^d$, where either:
\emph{(i)}~$t_e \,{=}\, \infty$;
or
\emph{(ii)}~$t_e \,{<}\, \infty$ and $\lim_{t \nearrow t_e} \norm{ \thetabf ( t ) }_2 \,{=}\, \infty$.
Lemma~\ref{lemma:infinite} below shows that if~$f ( \cdot )$ is $\beta$-smooth then necessarily~$t_e = \infty$.

\begin{lemma}
\label{lemma:infinite}
Let $f : \R^d \to \R$ be twice continuously differentiable and $\beta$-smooth with $\beta > 0$ (meaning $\| \nabla^2 f ( \q ) \|_{spectral} \leq \beta$ for all $\q \in \R^d$).
Then, for any $\thetabf_s \in \R^d$, there exists a solution $\thetabf : [ 0 , \infty ) \to \R^d$ to gradient flow over~$f ( \cdot )$ initialized at~$\thetabf_s$ (Equation~\eqref{eq:gf}).
\end{lemma}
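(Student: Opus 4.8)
The plan is to rule out the finite-time escape alternative in Theorem~\ref{theorem:exist_unique}. Since $f(\cdot)$ is twice continuously differentiable, the vector field $\g(t,\q) = -\nabla f(\q)$ is continuously differentiable, hence locally Lipschitz, so Theorem~\ref{theorem:exist_unique} applies and yields a unique solution $\thetabf : [0, t_e) \to \R^d$ with either $t_e = \infty$, or $t_e < \infty$ and $\lim_{t \nearrow t_e} \norm{\thetabf(t)}_2 = \infty$. It therefore suffices to derive an a priori bound showing that $\norm{\thetabf(t)}_2$ cannot blow up in finite time.

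The first ingredient is a linear growth bound on the gradient implied by $\beta$-smoothness: for every $\q \in \R^d$, writing $\nabla f(\q) = \nabla f(\0) + \int_0^1 \nabla^2 f(s\q)\,\q\,ds$ and bounding the integrand in norm by $\beta\norm{\q}_2$, we obtain $\norm{\nabla f(\q)}_2 \le G + \beta\norm{\q}_2$ with $G := \norm{\nabla f(\0)}_2$. The second ingredient is a differential inequality for the squared norm along the trajectory. Setting $u(t) := \norm{\thetabf(t)}_2^2$ (which is continuously differentiable, unlike $\norm{\cdot}_2$ itself at the origin), the gradient flow equation gives $u'(t) = 2\inprod{\thetabf(t)}{\tfrac{d}{dt}\thetabf(t)} = -2\inprod{\thetabf(t)}{\nabla f(\thetabf(t))} \le 2\norm{\thetabf(t)}_2\,\norm{\nabla f(\thetabf(t))}_2 \le 2\sqrt{u(t)}\,(G + \beta\sqrt{u(t)})$, and using $\sqrt{u} \le 1 + u$ this simplifies to the linear inequality $u'(t) \le 2G + (2G + 2\beta)\,u(t)$.

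Gr\"onwall's inequality (via the integrating factor $e^{-(2G+2\beta)t}$) then yields an explicit bound of the form $u(t) \le (u(0) + \tfrac{G}{G+\beta})\,e^{(2G+2\beta)t} - \tfrac{G}{G+\beta}$ for all $t \in [0, t_e)$, which is finite at every finite time. Consequently, if $t_e$ were finite, $\norm{\thetabf(t)}_2 = \sqrt{u(t)}$ would remain bounded as $t \nearrow t_e$, contradicting alternative \emph{(ii)}; hence $t_e = \infty$, as claimed. The argument is essentially routine; the only points meriting care are passing to the squared norm to avoid differentiating $\norm{\cdot}_2$ at a possible zero of the trajectory, and correctly invoking the dichotomy of Theorem~\ref{theorem:exist_unique} — it is precisely the a priori boundedness of $\thetabf(\cdot)$ on bounded time intervals that excludes finite-time blow-up.
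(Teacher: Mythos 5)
Your proof is correct and follows essentially the same route as the paper's: both exclude the finite-time blow-up alternative of Theorem~\ref{theorem:exist_unique} by combining the linear growth bound $\|\nabla f(\q)\|_2 \le \|\nabla f(\0)\|_2 + \beta\|\q\|_2$ with a Gr\"onwall-type integration of the resulting differential inequality along the trajectory. The only (cosmetic) difference is that you differentiate the squared norm to sidestep non-differentiability of $\|\cdot\|_2$ at the origin, whereas the paper handles this by working on an interval $[t_0, t_e)$ where the norm is nonzero, which is available under its contradiction hypothesis; both devices are valid.
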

\begin{proof}
In light of Theorem~\ref{theorem:exist_unique}, there exists a solution (to gradient flow over~$f ( \cdot )$ initialized at~$\thetabf_s$) $\thetabf : [ 0 , t_e ) \to \R^d$, where either:
\emph{(i)}~$t_e = \infty$;
or
\emph{(ii)}~$t_e < \infty$ and $\lim_{t \nearrow t_e} \norm{ \thetabf ( t ) }_2 = \infty$.
It suffices to prove that condition~\emph{(ii)} is not satisfied.
Assume by way of contradiction that it is.
Then, there exists $t_0 \in [ 0 , t_e )$ such that for every $t \in [ t_0 , t_e )$, $\| \thetabf ( t ) \|_2 \neq 0$ and we may write:
\beas
\tfrac{d}{dt} \| \thetabf ( t ) \|_2 
&=& \big( \thetabf ( t ) / \| \thetabf ( t ) \|_2 \big)^\top \hspace{-0.5mm} \tfrac{d}{dt} \thetabf ( t ) \\
&=& \big( \thetabf ( t ) / \| \thetabf ( t ) \|_2 \big)^\top \hspace{-0.5mm} \big( - \nabla f ( \thetabf ( t ) ) \big) \\
&\leq& \| \nabla f ( \thetabf ( t ) ) \|_2 \\
&=& \| \nabla f ( \0 ) + \nabla f ( \thetabf ( t ) ) - \nabla f ( \0 ) \|_2 \\
&\leq& \| \nabla f ( \0 ) \|_2 + \| \nabla f ( \thetabf ( t ) ) - \nabla f ( \0 ) \|_2 \\
&\leq& \| \nabla f ( \0 ) \|_2 + \beta \| \thetabf ( t ) \|_2
\text{\,,}
\eeas
where the first transition follows from the chain rule, the second holds since $\thetabf ( \cdot )$ is a solution to gradient flow over~$f ( \cdot )$, the third is an application of the Cauchy-Schwartz inequality, the fourth is trivial, the fifth results from the triangle inequality, and the sixth is due to $\beta$-smoothness of~$f ( \cdot )$.
Dividing by the right-hand side above and integrating between $t_0$ and some $t' \in [ t_0 , t_e )$, we obtain:
\[
\beta^{-1} \ln \big( \| \nabla f ( \0 ) \|_2 + \beta \| \thetabf ( t' ) \|_2 \big) - \beta^{-1} \ln \big( \| \nabla f ( \0 ) \|_2 + \beta \| \thetabf ( t_0 ) \|_2 \big) \leq t' - t_0
\text{\,,}
\]
which in turn implies:
\[
\| \thetabf ( t' ) \|_2 \leq \beta^{-1} \Big( \big( \| \nabla f ( \0 ) \|_2 + \beta \| \thetabf ( t_0 ) \|_2 \big) \exp \big( \beta ( t' - t_0 ) \big) - \| \nabla f ( \0 ) \|_2 \Big)
\text{\,.}
\]
We conclude that for any $t' \in [ t_0 , t_e )$, it holds that $\| \thetabf ( t' ) \|_2 \leq c$, where:
\[
c := \beta^{-1} \Big( \big( \| \nabla f ( \0 ) \|_2 + \beta \| \thetabf ( t_0 ) \|_2 \big) \exp \big( \beta ( t_e - t_0 ) \big) - \| \nabla f ( \0 ) \|_2 \Big) < \infty
\text{\,.}
\]
This of course contradicts $\lim_{t \nearrow t_e} \norm{ \thetabf ( t ) }_2 = \infty$, affirming that condition~\emph{(ii)} above is false.
\end{proof}

\section{Worst Case Scenario} \label{app:worst}

Theorem~\ref{theorem:gf_gd} in Section~\ref{sec:match} established that if gradient descent (Equation~\eqref{eq:gd}) is applied with step size~$\eta$ meeting a certain upper bound (Equation~\eqref{eq:gf_gd_eta}), then its trajectory will $\epsilon$-approximate that of gradient flow (Equation~\eqref{eq:gf}) up to a given time~$\tilde{t}$.
The upper bound on~$\eta$ decays exponentially with the integral of~$m ( \cdot )$ along the gradient flow trajectory up to time~$\tilde{t}$, where~$m ( \cdot )$ corresponds to minus the minimal eigenvalue of the Hessian.
Replacing~$m ( \cdot )$ by a constant~$m$ equal to minus the minimal eigenvalue of the Hessian \emph{across the entire space} results in a coarse bound, which for a non-convex objective~($m > 0$) scales as~$e^{- m \tilde{t}}$~---~see Corollary~\ref{corollary:gf_gd_coarse}.
The current appendix shows that in the worst case, such exponential scaling is necessary.
That is, there exist objective functions and initializations with which the location of gradient flow at time~$\tilde{t}$ will not be $\epsilon$-approximated by the trajectory of gradient descent (at any iteration) unless the step size of gradient descent is~$\OO ( e^{- m \tilde{t}} )$.
We prove this via an example, whose crux is that the gradient flow trajectories it entails traverse through regions where Hessian eigenvalues coincide with the minimal one across space.

Let $a > 0$, $b \geq 3$ and $\epsilon \in (0,1)$.
Define the ``cut points'' $z_c := b e^{30} + 1$ and $\bar{z}_c := b + 1$, and the ``transition width'' $\bar{\rho} := \min \{ e^{-12} / 2 , {\epsilon/2b} \}$.
Consider the functions $\varphi , \bar{\varphi} : \R \,{\to}\, \R$ given~by:
\vspace{-2mm}
\bea
&&
\hspace{-13mm}
\varphi ( z ) \,{=}
\left\{
\hspace{-1mm}
\begin{array}{ll}
\frac{1}{2} a ( z_c + 1 )^2 - \frac{5}{12} a - \frac{1}{2} a z_c
& , z = 0
\\[2mm]
\varphi ( 0 ) - \frac{1}{2} a z^2
& , z \in ( 0 , z_c )
\\[2mm]
\varphi ( 0 ) - \frac{1}{2} a z^2 + a \big( \frac{2}{3} + z_c \big) ( z - z_c )^3 - a \big( \frac{1}{4} + \frac{1}{2} z_c \big) ( z - z_c )^4
& , z \in [ z_c , z_c + 1 ]
\\[2mm]
0
& , z \in ( z_c + 1 , \infty )
\\[2mm]
\varphi ( \abs{z} )
& , z \in ( - \infty , 0 )
\end{array}
\right.
\hspace{-1mm}
\text{,}
\label{eq:worst_obj_axis}
\\[2mm]
&&
\hspace{-13mm}
\bar{\varphi} ( z ) \,{=}
\left\{
\hspace{-2mm}
\begin{array}{ll}
\frac{1}{2} a ( \bar{z}_c + 1 )^2 + \frac{1}{12} a - \frac{1}{2} a \bar{z}_c  - a \big( \frac{1}{2} \bar{\rho} - \frac{7}{48} \bar{\rho}^2 \big)
& , z = \frac{1}{2} \bar{\rho} - 1
\\[2mm]
\bar{\varphi} \big( \frac{1}{2} \bar{\rho} - 1 \big) - \frac{1}{4} a \big( z - \big( \frac{1}{2} \bar{\rho} - 1 \big) \big)^2
& , z \in \big( \frac{1}{2} \bar{\rho} - 1 , 1 - \bar{\rho} \big)
\\[2mm]
\bar{\varphi} \big( \frac{1}{2} \bar{\rho} - 1 \big) - \frac{1}{2} a + a \big( \frac{1}{2} \bar{\rho} - \frac{7}{48} \bar{\rho}^2 \big) - \frac{1}{2} a z^2 - \frac{1}{12} a \bar{\rho}^{\text{\,\,--} 1} ( z - 1 )^3
& , z \in [ 1 - \bar{\rho} , 1 ]
\\[2mm]
\bar{\varphi} \big( \frac{1}{2} \bar{\rho} - 1 \big) - \frac{1}{2} a + a \big( \frac{1}{2} \bar{\rho} - \frac{7}{48} \bar{\rho}^2 \big) - \frac{1}{2} a z^2
& , z \in ( 1 , \bar{z}_c )
\\[2mm]
\bar{\varphi} \big( \tfrac{1}{2} \bar{\rho} - 1 \big) - \tfrac{1}{2} a + a \big( \tfrac{1}{2} \bar{\rho} - \tfrac{7}{48} \bar{\rho}^2 \big) - \tfrac{1}{2} a z^2
&
\begin{gathered}
\vspace{-4mm}
\hspace{-0.65mm}
, z \in [ \bar{z}_c , \bar{z}_c + 1 ]
\end{gathered}
\\[1mm]
\hspace{18mm}
+ \,\, a \big( \tfrac{2}{3} + \bar{z}_c \big) ( z - \bar{z}_c )^3 - a \big( \tfrac{1}{4} + \tfrac{1}{2} \bar{z}_c \big) ( z - \bar{z}_c )^4
\\[0.5mm]
0
& , z \in ( \bar{z}_c + 1 , \infty )
\\[2mm]
\bar{\varphi} \big( \abs{z - \big( \frac{1}{2} \bar{\rho} - 1 \big)} + \frac{1}{2} \bar{\rho} - 1 \big)
& , z \in \big( - \infty , \frac{1}{2} \bar{\rho} - 1 \big)
\end{array}
\right.
\hspace{-10mm}
\text{.}
\hspace{1mm}
\label{eq:worst_obj_axis_bar}
\eea
Both $\varphi ( \cdot)$ and~$\bar{\varphi} ( \cdot )$ are twice continuously differentiable, non-negative and smooth,\note{%
Their second derivatives are bounded.
}
with minimal curvature (second derivative) equal to~$- a$.
$\varphi ( \cdot)$~comprises three parts~---~%
\emph{(i)}~constant zero over $( - \infty , - z_c - 1)$; 
\emph{(ii)}~quadratic with curvature~$- a$ over $( - z_c , z_c )$; 
and 
\emph{(iii)}~constant zero over $( z_c + 1 , \infty )$%
~---~with twice continuously differentiable transitions in-between.
$\bar{\varphi} ( \cdot )$~consists of five parts~---~%
\emph{(i)}~constant zero over $( - \infty , - \bar{z}_c - 3 + \bar{\rho} )$;
\emph{(ii)}~quadratic with curvature~$- a$ over $( - \bar{z}_c - 2 + \bar{\rho} , - 3 + \bar{\rho} )$;
\emph{(iii)}~quadratic with curvature~$- a / 2$ over $( - 3 + 2 \bar{\rho} , 1 - \bar{\rho} )$;
\emph{(iv)}~quadratic with curvature~$- a$ over $( 1 , \bar{z}_c )$;
and
\emph{(v)}~constant zero over $( \bar{z}_c + 1 , \infty )$%
~---~also joined by twice continuously differentiable transitions.
Illustrations of $\varphi ( \cdot)$ and~$\bar{\varphi} ( \cdot )$ are presented in Figure~\ref{fig:worst_obj}.
\begin{figure}
\begin{center}
\includegraphics[width=1.05\textwidth]{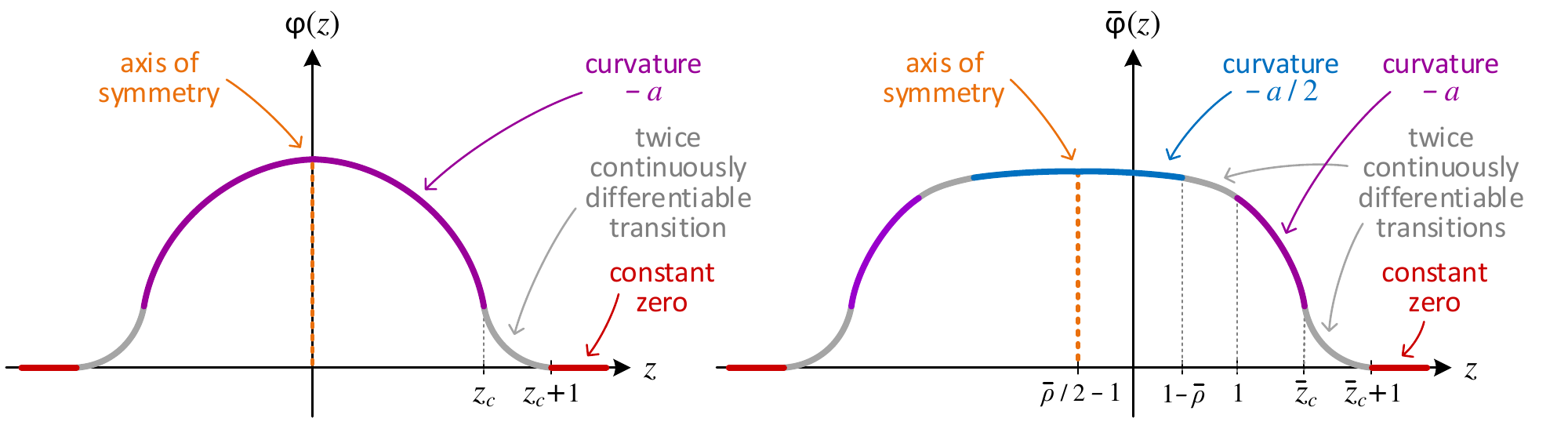}
\end{center}
\vspace{-4mm}
\caption{
Illustrations of the functions $\varphi ( \cdot)$ and~$\bar{\varphi} ( \cdot )$ defined in Equations \eqref{eq:worst_obj_axis} and~\eqref{eq:worst_obj_axis_bar} respectively.
}
\label{fig:worst_obj}
\end{figure}

Let $d \in \N_{\geq 3}$, and consider the objective function $f : \R^d \to \R$ defined by:
\be
f ( \q ) = \varphi ( q_1 ) + \bar{\varphi} ( q_2 ) + 6 a q_3^2
\text{\,,}
\label{eq:worst_obj}
\ee
where $q_1$, $q_2$ and~$q_3$ stand for the first, second and third coordinates (respectively) of $\q \,\,{\in}\,\, \R^d$.
$f ( \cdot )$~meets the conditions of Corollary~\ref{corollary:gf_gd_coarse}~---~it is twice continuously differentiable, non-negative and smooth.\note{%
There exists $\beta > 0$ such that $\| \nabla^2 f ( \q ) \|_{spectral} \leq \beta$ for all $\q \in \R^d$.
}
The minimal eigenvalue of its Hessian across space (\ie~$\inf_{\q \in \R^d} \lambda_{min} ( \nabla^2 f ( \q ) )$, where $\lambda_{min} ( \nabla^2 f ( \q ) )$ represents the minimal eigenvalue of~$\nabla^2 f ( \q )$) is~$- a$, meaning the constant $m \,{:=}\, {-} \inf_{\q \in \R^d} \lambda_{min} ( \nabla^2 f ( \q ) )$ is equal to~$a$.
Building on the fact that in the region $( 0 , z_c ) \times ( 1 , \bar{z}_c ) \times \R^{d - 2}$ the Hessian has eigenvalues coinciding with the minimum (\ie~equal to~$- a$), Proposition~\ref{prop:worst} below establishes the sought-after result~---~over~$f ( \cdot )$, there exist gradient flow trajectories whose $\epsilon$-approximation at a given time~$\tilde{t}$ requires gradient descent to have step size~$\OO ( e^{- m \tilde{t}} )$.
\begin{proposition}
\label{prop:worst}
Let $\thetabf_s = ( \theta_{s , 1} , \theta_{s , 2} , ...\, , \theta_{s , d} ) \in \R^d$ be such that $\theta_{s , 1} \,{\in}\, ( 0.5 , 1 )$, $\theta_{s , 2} \,{\in}\, ( e^{- 12} / 2 \,{-}\, 1 ,$ $e^{- 12} \,{-}\, 1 )$ and~$\theta_{s , 3} \,{>}\, 2$.
In the above context (in particular with the objective function $f \,{:}\, \R^d \,{\to}\, \R$ defined by Equation~\eqref{eq:worst_obj}, for which $m \,{:=}\, {-} \inf_{\q \in \R^d} \lambda_{min} ( \nabla^2 f ( \q ) ) \,{=}\, a$), denote by~$\thetabf ( \cdot )$ the trajectory of gradient flow initialized at~$\thetabf_s$ (solution to Equation~\eqref{eq:gf}), and by $\thetabf_0 , \thetabf_1 , \thetabf_2 , ...$ the iterates of gradient descent with step size~$\eta >0$ (Equation~\eqref{eq:gd}) emanating from the same point (\ie~with~$\thetabf_0 = \thetabf_s$).
Then, for any time \smash{$\tilde{t} \in \big[ \tfrac{2}{a} \ln \big( \tfrac{2 - 3 \bar{\rho} / 2}{\theta_{s , 2} - ( \bar{\rho} / 2 - 1 )} \big) \,{+}\, \tfrac{1}{a} \ln \big( \tfrac{2}{1 - \bar{\rho}} \big) \, , \, \tfrac{2}{a} \ln \big( \tfrac{\,~ 2 - 3 \bar{\rho} / 2}{\theta_{s , 2} - ( \bar{\rho} / 2 - 1 )} \big)$} ${+} \tfrac{1}{a} \ln \big( \tfrac{1 + \bar{\rho} / 4}{1 - 3 \bar{\rho} / 4} \big) {+} \tfrac{1}{a} \ln ( b ) \big]$,\note{%
Note that the upper bound on~$\tilde{t}$ can be made arbitrarily large via suitable (sufficiently large) choice of~$b$.
}
if $\eta \geq \frac{10^{14}}{a} e^{- a \tilde{t}} \epsilon$, it holds that $\| \thetabf_k - \thetabf ( \tilde{t} \, ) \|_2 > \epsilon$ for all $k \in \N \cup \{ 0 \}$.\note{%
Since $f ( \cdot )$ is twice continuously differentiable and smooth, $\thetabf ( \tilde{t} \, )$~necessarily exists (see Lemma~\ref{lemma:infinite} in Appendix~\ref{app:infinite}).
}
\end{proposition}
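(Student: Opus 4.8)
Because $f$ is separable, gradient flow and gradient descent decouple across coordinates, and only the first three matter; in the box $(0,z_c)\times(1,\bar z_c)\times\R^{d-2}$ one has $\nabla^2 f=\diag(-a,-a,12a,0,\dots)$, so two Hessian eigenvalues sit at the global minimum $-a$. The plan is to show that for $\tilde t$ in the stated window the flow $\thetabf(\tilde t)$ lies in this box, well inside the curvature-$(-a)$ part of the second coordinate, and that no descent iterate comes $\epsilon$-close to it. First I would dispose of large step sizes using the third coordinate alone: $\theta_3(t)=\theta_{s,3}e^{-12at}$ along the flow and $(\thetabf_k)_3=\theta_{s,3}(1-12a\eta)^k$ along descent, so if $\eta\ge\tfrac1{6a}$ then $|1-12a\eta|\ge1$, whence $|(\thetabf_k)_3|\ge\theta_{s,3}>2$ for all $k$ while $|\theta_3(\tilde t)|<1$, giving $\|\thetabf_k-\thetabf(\tilde t)\|_2>1>\epsilon$ outright. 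So from here on $\eta a<\tfrac16$, a regime in which all Taylor estimates below are uniform. I would also record the single inequality that powers the estimate: with $v_0:=\theta_{s,2}-(\bar\rho/2-1)\ge e^{-12}/4$ and $t_1:=\tfrac2a\ln\tfrac{2-3\bar\rho/2}{v_0}$ (the first term of the window's left endpoint), the hypothesis $\eta\ge\tfrac{10^{14}}{a}e^{-a\tilde t}\epsilon$ combined with the window bound $a\tilde t\le 30+\ln b$ forces $\eta a$ to be at least a universal constant times $\epsilon/b$, hence at least a universal constant times the transition width $\bar\rho=\min\{e^{-12}/2,\epsilon/2b\}$; this comparison between step size and transition width is exactly what lets the argument survive as $b\to\infty$.

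Next I would pin down the two trajectories coordinatewise. For the flow, $\theta_1(t)=\theta_{s,1}e^{at}$ while $\theta_1\in(0,z_c)$, and $\theta_2$ passes through the curvature-$(-\tfrac a2)$ patch — where, with $v:=\theta_2-(\bar\rho/2-1)$, $\dot v=\tfrac a2 v$, so that $v^2/\theta_1$ is an \emph{invariant} of this patch — then the short curvature transition $[1-\bar\rho,1]$, then the curvature-$(-a)$ patch, in which $\dot\theta_2=a\theta_2$ and $\theta_2/\theta_1$ is constant; write $\theta_1^{\mathrm{flow}}$ for the flow's $\theta_1$ on entering this last patch. A direct computation from the window endpoints (using the choice of $\bar\rho$) gives $\theta_1(\tilde t)=\theta_{s,1}e^{a\tilde t}$ with $e^{at_1}\le e^{29}$ (so $\theta_1(\tilde t)<z_c$ with room at least $1$) and $2\le\theta_2(\tilde t)<\bar z_c$ with positive margin at both ends — here the $\ln2$ term in the lower endpoint and the $\epsilon/2b$ term inside $\bar\rho$ furnish exactly these margins. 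For descent, while it stays in the box, $(\thetabf_k)_1=\theta_{s,1}(1+\eta a)^k$, in the curvature-$(-\tfrac a2)$ patch $(v^{(k)})^2/(\thetabf_k)_1=(v_0^2/\theta_{s,1})\,g^k$ with $g:=\tfrac{(1+\eta a/2)^2}{1+\eta a}>1$ (so this quantity \emph{grows}), and in the curvature-$(-a)$ patch $(\thetabf_k)_2/(\thetabf_k)_1$ equals some constant $c_{\mathrm E}$. I would then prune the easy iterates: if $(\thetabf_k)_1\ge z_c$, or $(\thetabf_k)_1\le\tfrac12\theta_1(\tilde t)$, or $(\thetabf_k)_2\notin(1,\bar z_c)$, one of the first two coordinates already separates $\thetabf_k$ from $\thetabf(\tilde t)$ by more than $\epsilon$; so only iterates with $(\thetabf_k)_1\in(\tfrac12\theta_1(\tilde t),z_c)$ and $(\thetabf_k)_2\in(1,\bar z_c)$ remain, and for those the closed forms hold.

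The core is a persistent \emph{angular} mismatch. Since descent needs at least $t_1/\eta$ iterates to leave the curvature-$(-\tfrac a2)$ patch (because $\ln(1+\eta a/2)\le\eta a/2$) and $\ln g$ is of order $(\eta a)^2$, the invariant $v^2/\theta_1$ gets inflated by a factor $g^{k_1}$ exceeding $(1+\eta a/2)^2$ by at least $1+3.8\,\eta a$; translating back, descent's $\theta_1$ on leaving the patch is smaller than the flow's, at the same $\theta_2$, by that factor. I would then track the log-deficit of $(\thetabf_k)_1$ against the flow's $\theta_1$ at the matching $\theta_2$ across the curvature transition $[1-\bar\rho,1]$ and show it drops by only $\OO(\bar\rho)$ in total — which is negligible next to $\eta a$ by the forced inequality $\bar\rho\le(\text{const})\cdot\eta a$ — so on entering the curvature-$(-a)$ patch the deficit is still of order $\eta a$; equivalently, $c_{\mathrm E}$ exceeds $c_{\mathrm F}:=\theta_2(\tilde t)/\theta_1(\tilde t)$ by a relative amount $\rho\ge(\text{universal const})\cdot\eta a$. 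For a remaining iterate $k$, the point $((\thetabf_k)_1,(\thetabf_k)_2)$ lies on the ray $\theta_2=c_{\mathrm E}\theta_1$ and the target on $\theta_2=c_{\mathrm F}\theta_1$, so $\|\thetabf_k-\thetabf(\tilde t)\|_2$ is at least the distance from the former to the line $\theta_2=c_{\mathrm F}\theta_1$, namely $\tfrac{(\thetabf_k)_1\,c_{\mathrm F}\,\rho}{\sqrt{1+c_{\mathrm F}^2}}\ge\tfrac14\,\theta_1(\tilde t)\,c_{\mathrm F}\,\rho=\tfrac14\,\theta_2(\tilde t)\,\rho$ (using $(\thetabf_k)_1\ge\tfrac12\theta_1(\tilde t)$ and $c_{\mathrm F}\le2e^{-24}$). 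Finally I would cash in the cancellation: $\theta_2(\tilde t)=c_{\mathrm F}\theta_1(\tilde t)=\theta_{s,1}e^{a\tilde t}/\theta_1^{\mathrm{flow}}$ with $\theta_1^{\mathrm{flow}}\le\theta_{s,1}e^{30}$, so $\tfrac14\theta_2(\tilde t)\,\rho$ is at least a universal constant times $e^{-30}\,\eta a\,e^{a\tilde t}\ge e^{-30}\cdot10^{14}\epsilon$ — the factors $e^{\pm a\tilde t}$, which grow without bound in $b$, cancel, and the constant $10^{14}$ is sized precisely so that what remains exceeds $\epsilon$. Hence $\|\thetabf_k-\thetabf(\tilde t)\|_2>\epsilon$ for every $k\in\N\cup\{0\}$.

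The hard part will be making the angular-mismatch estimate rigorous and \emph{uniform} in $\eta\in(0,\tfrac1{6a})$: obtaining a clean lower bound of order $(at_1)\,\eta a$ on the convexity defect $k_1\ln g$; checking that descent's $\OO(\eta a)$ overshoot past the patch boundary and the $\OO(\bar\rho)$ distortion it incurs over the engineered cubic transition $[1-\bar\rho,1]$ do not cancel the main effect — which is precisely where $\bar\rho\le(\text{const})\cdot\eta a$ is indispensable — and carefully bookkeeping which discrete iterate lies in which piece of $\varphi$ and $\bar\varphi$. Everything else — the step-size dichotomy, the pruning of easy iterates, and the elementary but tedious bounds $at_1\le29$ and $a\tilde t\le30+\ln b$ — is routine.
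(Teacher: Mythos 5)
Your plan is correct and follows essentially the same route as the paper's proof: the large-step-size dichotomy via the third coordinate, coordinate-wise closed forms in the anisotropic, transition and isotropic regions, a relative mismatch of order $a\eta$ in the ratio $\theta_2/\theta_1$ upon entering the isotropic region (which the paper obtains by directly comparing $\bar{\theta}(t_1)/\theta(t_1)$ with $\bar{\theta}_{k_1}/\theta_{k_1}$, equivalent to your invariant $v^2/\theta_1$ and growth factor $g=(1+\eta a/2)^2/(1+\eta a)$), the ray-to-ray distance bound, and the cancellation of the $e^{\pm a\tilde{t}}$ factors. The only cosmetic difference is that the paper projects the flow's point onto the gradient-descent ray (so the lower bound carries the factor $\theta(0)e^{a\tilde{t}}$ without needing your pruning of iterates with small first coordinate), and it disposes of iterates outside $\{k_1,\ldots,k_{\bar{z}_c}-1\}$ using the second coordinate alone.
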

\begin{proofsketch}{(for complete proof see Subappendix~\ref{app:proof:worst})}
Since $f ( \cdot )$ is additively separable (can be expressed as a sum of terms, each depending on a single input variable), the dynamics in~$\R^d$ induced by gradient flow and gradient descent can be analyzed separately for different coordinates.
Restricting our attention to the first two coordinates, we observe that gradient flow and gradient descent initially traverse through an ``anisotropic'' region, where curvature is $- a$ in the first coordinate and $- a / 2$ in the second, and from there move to an ``isotropic'' region, where curvature is $- a$ in both the first and second coordinates.
In the isotropic region, if gradient descent is placed along a gradient flow trajectory it will continue down the same path, but otherwise, if there is any discrepancy between gradient descent and gradient flow, this discrepancy will grow exponentially with time, namely will scale as~$e^{a t}$.
Carefully characterizing the dynamics along the anisotropic region reveals that upon entrance to the isotropic one, there is indeed a discrepancy between gradient descent and gradient flow, the magnitude of which is proportional to~$\eta$ (step size of gradient descent).
Since this magnitude scales as~$e^{a t}$ thereafter, it will exceed~$\epsilon$ at time~$\tilde{t}$ if $\eta \notin \OO ( e^{- a \tilde{t}} \epsilon )$, which is what we set out to prove.
The above analysis assumes $\eta$ is no greater than a certain constant.
However, larger values for~$\eta$ lead to divergence in the third coordinate (due to the term~$6 a q_3^2$ in the definition of~$f ( \cdot )$~---~Equation~\eqref{eq:worst_obj}), thus these are accounted for as well (they preclude the possibility of gradient descent $\epsilon$-approximating gradient flow at time~$\tilde{t}$\,).
\end{proofsketch}

\section{Analysis for Convolutional Architectures} \label{app:cnn}

In this appendix we provide our analysis for convolutional architectures, outlined in Subsection~\ref{sec:roughly_convex:cnn}.

Suppose we modify the fully connected neural network defined in Equation~\eqref{eq:fnn} (and surrounding text) by converting each learned weight matrix $W_j \in \R^{d_j , d_{j - 1}}$, $j = 1 , 2 , ...\, , n$, into a function $W_j : \R^{d'_j} \to \R^{d_j , d_{j - 1}}$, with $d'_j \in \N$, that intakes a learned weight vector $\w_j \in \R^{d'_j}$, and returns a matrix where each element is either fixed at zero or connected to a predetermined coordinate~of~$\w_j$, with no repetition of coordinates within the same row (that is, each row of~$W_j ( \cdot )$ realizes a function of the form $\w_j \, {\mapsto} \, P \w_j$, where \smash{$P \, {\in} \, \R^{d_{j - 1} , d'_j}$} is a matrix in which no row or column includes more than a single non-zero element, and all non-zero elements are equal to one).
This allows imposing various weight sharing and sparsity patterns on the layers of the model, in particular ones giving rise to convolutional neural networks.
The resulting input-output mapping has the form:
\be
h_\thetabf : \R^{d_0} \,{\to}\, \R^{d_n}
~ , ~
h_\thetabf ( \x ) \,{=}\, W_n ( \w_n ) \, \sigma \big( W_{n \text{-} 1} ( \w_{n \text{-} 1} ) \, \sigma \big( W_{n \text{-} 2} ( \w_{n \text{-} 2} ) \, \cdots \, \sigma \big( W_1 ( \w_1 )\x \big) \big) \cdots \big)
\text{\,,}
\label{eq:cnn}
\ee
where $\thetabf \in \R^d$, with $d := \sum_{j = 1}^n d'_j$, is the concatenation of the weight vectors $\w_1 , \w_2 , ...\, , \w_n$,\note{%
The exact order by which $\w_1 , \w_2 , ...\, , \w_n$ are concatenated is insignificant for our purposes~---~all that matters is that the same order be used throughout.
} 
and as before, $\sigma : \R \, {\to} \, \R$ is a predetermined activation function (operating element-wise when applied to a vector) that is (positively) homogeneous, meaning there exist $\alpha , \alphabar \in \R$ such that $\sigma ( z ) = \alpha \max \{ z , 0 \} - \alphabar \max \{ -z , 0 \}$ for all $z \in \R$.\note{%
Similarly to our analysis of fully connected architectures (Subsection~\ref{sec:roughly_convex:fnn}), that of convolutional architectures (current appendix) readily extends to the case of different (homogeneous) activation functions at different hidden layers.
}

Let $f : \R^d \to \R$ be the training loss defined by applying Equation~\eqref{eq:train_loss} (and surrounding text) to the above neural network (\ie~with $h_\thetabf ( \cdot )$ given by Equation~\eqref{eq:cnn}).
In line with our analysis of fully connected architectures (Subsection~\ref{sec:roughly_convex:fnn}), we will show that although the minimal eigenvalue of~$\nabla^2 f ( \thetabf )$ (Hessian of training loss)~---~denoted $\lambda_{min} ( \nabla^2 f ( \thetabf ) )$~---~can in general be arbitrarily negative, along trajectories of gradient flow (which emanate from near-zero initialization) it is no less than moderately negative, approaching non-negativity towards convergence.
In light of Section~\ref{sec:match}, this suggests that over deep convolutional neural networks, gradient flow may lend itself to approximation by gradient descent~---~a prospect we empirically corroborate in Subappendix~\ref{app:experiments:further}.

Proposition~\ref{prop:region_cnn} in Appendix~\ref{app:region} establishes that for almost every $\thetabf' \, {\in} \, \R^d$ there exist diagonal matrices $D'_{i , j} \, {\in} \, \R^{d_j , d_j}$, $i = 1 , 2 , ...\, , | \S |$, $j = 1 , 2 , ...\, , n - 1$, with diagonal elements in~$\{ \alpha , \alphabar \}$, such that~$f ( \cdot )$ coincides with the function
\be
\thetabf \mapsto \frac{1}{| \S |} \sum\nolimits_{i = 1}^{| \S |} \ell \big( W_n ( \w_n ) D'_{i , n \text{-} 1} W_{n \text{-} 1} ( \w_{n \text{-} 1} ) D'_{i , n \text{-} 2} W_{n \text{-} 2} ( \w_{n \text{-} 2} ) \cdots D'_{i , 1} W_1 ( \w_1 ) \x_i , y_i \big)
\label{eq:train_loss_cnn_region}
\ee
on an open region~$\D_{\thetabf'} \subseteq \R^d$ containing~$\thetabf'$, that is closed under positive rescaling of weight vectors (\ie~under $( \w_1 , \w_2 , ...\, , \w_n ) \mapsto ( c_1 \w_1 , c_2 \w_2 , ...\, , c_n \w_n )$ with $c_1 , c_2 , ...\, , c_n > 0$).
Analogously to the case of fully connected architectures with non-linear activation (Subsubsection~\ref{sec:roughly_convex:fnn:non_lin}), we will focus on (open) regions of the form~$\D_{\thetabf'}$, where $f ( \cdot )$ is given by Equation~\eqref{eq:train_loss_cnn_region}, and in particular is twice continuously differentiable.
On such regions the analysis of Section~\ref{sec:match} applies, and since they constitute the entire weight space but a negligible (closed and zero measure) set, they can facilitate a ``piecewise characterization'' of the discrepancy between gradient flow and gradient descent.\note{%
Such ``piecewise characterization'' is holistic when the activation function~$\sigma ( \cdot )$ is linear, \ie~$\sigma ( z ) \,{=}\, z$ (or more generally,~$\alpha \,{=}\, \alphabar$).
Indeed, in this case $f ( \cdot )$~is twice continuously differentiable throughout, and we may take $\D_{\thetabf'} \,{=}\, \R^d$.
}

Lemma~\ref{lemma:cnn_region_hess} below expresses~$\nabla^2 f ( \thetabf )$ for~$\thetabf \in \D_{\thetabf'}$.
\begin{lemma}
\label{lemma:cnn_region_hess}
Let $\thetabf \in \D_{\thetabf'}$.
For any $i \,{\in}\, \{ 1 , 2 , \hspace{-0.4mm} ... \hspace{0.1mm} , | \S | \}$ and $j , j' \,{\in}\, \{ 1 , 2 , \hspace{-0.4mm} ... \hspace{0.1mm} , n \}$ define $( D'_{i , *} W_* ( \w_* ) )_{j' : j}$ to be the matrix $D'_{i , j'} W_{j'} ( \w_{j'} ) D'_{i , j' \text{-} 1} W_{j' \text{-} 1} ( \w_{j' \text{-} 1} ) \cdots D'_{i , j} W_j ( \w_j )$ (where by convention \smash{$D'_{i , n} \in$} \smash{$\R^{d_n , d_n}$} stands for identity) if $j \leq j'$, and an identity matrix (with size to be inferred by context) otherwise.
For $i \in \{ 1 , 2 , ...\, , | \S | \}$ let $\nabla \ell_i \in \R^{d_n}$ and~$\nabla^2 \ell_i \in \R^{d_n , d_n}$ be the gradient and Hessian (respectively) of the loss~$\ell ( \cdot )$ at the point $\big( ( D'_{i , *} W_* ( \w_* ) )_{n : 1} \x_i , y_i \big)$ with respect to its first argument.
Then, regarding Hessians as quadratic forms (see examples in Lemma~\ref{lemma:lnn_hess}), it holds that:
\bea
&& 
\nabla^2 f ( \thetabf ) [ \Delta \w_1 , \Delta \w_2 , ...\, , \Delta \w_n ] = 
\label{eq:cnn_region_hess} \\
&& \qquad
\frac{1}{| \S |} \hspace{-0.5mm} \sum_{i = 1}^{| \S |} \hspace{-0.5mm} \nabla^2 \ell_i \hspace{-0.5mm} \Bigg[ \hspace{-0.25mm} \sum_{j = 1}^n \big( D'_{i , *} W_* ( \w_* ) \big)_{n : j \text{+} 1} D'_{i , j} W_j ( \Delta \w_j ) \big( D'_{i , *} W_* ( \w_* ) \big)_{j \text{-} 1 : 1} \x_i \Bigg] +
\nonumber \\
&& \qquad
\frac{2}{| \S |} \sum_{i = 1}^{| \S |} \nabla \ell_i^\top \hspace{-2.5mm} \sum_{1 \leq j < j' \leq n} \hspace{-3mm} \big( D'_{i , *} W_* ( \w_* ) \big)_{n : j' \text{+} 1} D'_{i , j'} W_{j'} ( \Delta \w_{j'} ) \big( D'_{i , *} W_* ( \w_* ) \big)_{j' \text{-} 1 : j \text{+} 1} \cdot
\nonumber \\[-2mm]
&& \hspace{80mm}
D'_{i , j} W_j ( \Delta \w_j ) \big( D'_{i , *} W_* ( \w_* ) \big)_{j \text{-} 1 : 1} \x_i
\nonumber
\text{\,.}
\eea
\end{lemma}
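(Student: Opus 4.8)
The plan is to mirror the proof of Lemma~\ref{lemma:fnn_region_hess} (itself a variant of the proof of Lemma~\ref{lemma:lnn_hess}), the only new ingredient being that each map $W_j ( \cdot )$ is \emph{linear} in its weight vector. First I would invoke Proposition~\ref{prop:region_cnn}: on the open region~$\D_{\thetabf'}$ the training loss~$f ( \cdot )$ coincides with the function in Equation~\eqref{eq:train_loss_cnn_region}, which, as a composition of the twice continuously differentiable loss~$\ell ( \cdot )$ with a fixed multilinear map of $( \w_1 , ...\, , \w_n )$, is itself twice continuously differentiable on~$\D_{\thetabf'}$. Since $\D_{\thetabf'}$ is open, it therefore suffices, for $\thetabf \in \D_{\thetabf'}$, to extract the quadratic term in the Taylor expansion of Equation~\eqref{eq:train_loss_cnn_region} around~$\thetabf$.

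Next I would fix an arbitrary $\Delta \thetabf \in \R^d$, identified with a tuple $( \Delta \w_1 , ...\, , \Delta \w_n )$ matching the concatenation defining~$\thetabf$, and use $W_j ( \w_j + \Delta \w_j ) = W_j ( \w_j ) + W_j ( \Delta \w_j )$ to expand, for each $i \in \{ 1 , ...\, , | \S | \}$, the perturbed end-to-end matrix $\big( D'_{i , *} W_* ( \w_* + \Delta \w_* ) \big)_{n : 1}$. By multilinearity this splits into a sum indexed by the subset of layers whose factor has been replaced by its $\Delta$-version; grouping by the size of that subset yields the unperturbed matrix (order zero), a first-order matrix $A_i$ (one replaced factor, summed over $j$), a second-order matrix $B_i$ (two replaced factors at positions $j < j'$, summed), and terms of order $\geq 3$ that do not affect the Hessian. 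Here $A_i$ and $B_i$ are exactly the matrices appearing inside the brackets in the two lines of Equation~\eqref{eq:cnn_region_hess}, with the fixed diagonal matrices $D'_{i , j}$ kept in their prescribed positions.

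Finally I would Taylor-expand $\ell ( \, \cdot \, , y_i )$ to second order about $\big( ( D'_{i , *} W_* ( \w_* ) )_{n : 1} \x_i , y_i \big)$, evaluated at the argument $( D'_{i , *} W_* ( \w_* ) )_{n : 1} \x_i + ( A_i + B_i ) \x_i + \OO ( \| \Delta \thetabf \|_2^3 )$. The part quadratic in~$\Delta \thetabf$ equals $\nabla \ell_i^\top B_i \x_i + \tfrac{1}{2} \nabla^2 \ell_i [ A_i \x_i ]$; averaging over~$i$ gives the quadratic coefficient of $f ( \thetabf + \Delta \thetabf )$, and doubling it (since $\nabla^2 f ( \thetabf ) [ \Delta \thetabf ] = \Delta \thetabf^\top \nabla^2 f ( \thetabf ) \Delta \thetabf$ is twice that coefficient) reproduces Equation~\eqref{eq:cnn_region_hess}. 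The only real effort is the combinatorial bookkeeping of the multilinear expansion and the placement of the $D'_{i , j}$'s; there is no analytic obstacle, since openness of~$\D_{\thetabf'}$ makes differentiation of Equation~\eqref{eq:train_loss_cnn_region} unobstructed. I expect this bookkeeping~---~rather than any genuine difficulty~---~to be the main thing to get right.
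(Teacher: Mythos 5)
Your proposal is correct and follows essentially the same route as the paper's own proof: expand the perturbed end-to-end matrix via linearity of each $W_j(\cdot)$, split by the number of replaced factors, Taylor-expand $\ell(\cdot)$ to second order about the unperturbed prediction, and identify the quadratic term (equal to $\tfrac{1}{2}\nabla^2 f(\thetabf)[\cdots]$ by uniqueness of the Taylor expansion). No gaps.
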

\begin{proofsketch}{(for complete proof see Subappendix~\ref{app:proof:cnn_region_hess})}
The proof is similar to those of Lemmas \ref{lemma:lnn_hess} and~\ref{lemma:fnn_region_hess}.
Namely, it expands the function in Equation~\eqref{eq:train_loss_cnn_region} and then extracts second order terms.
\end{proofsketch}
The following proposition employs Lemma~\ref{lemma:cnn_region_hess} to show that (under mild conditions) there exists $\thetabf \in \R^d$ for which $\lambda_{min} ( \nabla^2 f ( \thetabf ) )$ is arbitrarily negative.
\begin{proposition}
\label{prop:cnn_hess_arbitrary_neg}
Assume that:
\emph{(i)}~the network is deep ($n \,{\geq}\, 3$);
and
\emph{(ii)}~the network, loss function~$\ell ( \cdot )$ and training set~$\S$ are non-degenerate, in the sense that there exists a weight setting $\thetabf \,{\in}\, \R^d$ for~which \smash{$\sum_{i = 1}^{| \S |} \nabla \ell ( \0 , y_i )^\top h_\thetabf ( \x_i ) \neq 0$}, where $\nabla \ell ( \cdot )$ stands for the gradient of~$\ell ( \cdot )$ with respect to its first argument, and $h_\thetabf ( \cdot )$ is the input-output mapping realized by the network (Equation~\eqref{eq:cnn}).\note{%
Assumptions \emph{(i)} and~\emph{(ii)} are both necessary, in the sense that removing any of them (without imposing further assumptions) renders the proposition false~---~see Claim~\ref{claim:necessity_cnn} in Appendix~\ref{app:necessity}.
Assumption~\emph{(ii)} in particular is extremely mild, \eg~if $\ell ( \cdot )$ is the square loss (\ie~$\Y = \R^{d_n}$ and $\ell ( \hat{\y} , \y ) = \frac{1}{2} \| \hat{\y} - \y \|_2^2$), the slightest change in a single label~($\y_i$) corresponding to a non-zero prediction ($h_{\thetabf} ( \x_i ) \neq \0$) can ensure the inequality.
}
Then, it holds that $\inf_{\thetabf \in \R^d~\emph{s.t.}\,\nabla^2 f ( \thetabf )~\emph{exists}} \lambda_{min} ( \nabla^2 f ( \thetabf ) ) = -\infty$.
\end{proposition}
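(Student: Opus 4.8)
The plan is to mimic the proof of Proposition~\ref{prop:fnn_hess_arbitrary_neg}, adjusting it to the weight-vector parameterization $\w_1,\ldots,\w_n$ of the convolutional model.

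First I would reduce to a point lying inside one of the regions $\D_{\thetabf'}$ of Proposition~\ref{prop:region_cnn}, where Lemma~\ref{lemma:cnn_region_hess} is available and $f$ is twice differentiable. By assumption~\emph{(ii)} there is a $\thetabf$ with $\sum_{i=1}^{|\S|}\nabla\ell(\0,y_i)^\top h_\thetabf(\x_i)\neq 0$; since $W_n(\cdot)$ is linear in $\w_n$, replacing $\w_n$ by $-\w_n$ negates $h_\thetabf(\cdot)$, so I may assume $S:=\sum_{i=1}^{|\S|}\nabla\ell(\0,y_i)^\top h_\thetabf(\x_i)<0$. This strict inequality persists on an open neighborhood of $\thetabf$ by continuity, and since the regions $\D_{\thetabf'}$ exhaust $\R^d$ up to a set of measure zero (Proposition~\ref{prop:region_cnn}), one of them meets this neighborhood; moving $\thetabf$ into the intersection, I obtain $\thetabf\in\D_{\thetabf'}$ with $S<0$. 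Note that every $\w_j$ is then nonzero (otherwise $h_\thetabf\equiv\0$, contradicting $S<0$), and recall that $\D_{\thetabf'}$ is closed under positive rescaling of $\w_1,\ldots,\w_n$.

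Next comes the construction. Writing $h_i:=h_\thetabf(\x_i)$, I fix two indices $j_1<j_2$ and take the fixed direction $\Delta\thetabf$ given by $\Delta\w_{j_1}:=\w_{j_1}$, $\Delta\w_{j_2}:=\w_{j_2}$ and $\Delta\w_j:=\0$ otherwise, which has norm $\sqrt{\|\w_{j_1}\|_2^2+\|\w_{j_2}\|_2^2}>0$. For $t>1$ I then rescale $\thetabf$ into $\thetabf_t\in\D_{\thetabf'}$ by $\w_k\mapsto t\,\w_k$ for $k\notin\{j_1,j_2\}$ and $\w_{j_1}\mapsto t^{-(n-1)}\w_{j_1}$, $\w_{j_2}\mapsto t^{-(n-1)}\w_{j_2}$, so the product of all rescalings is $t^{-n}\to 0$: the predictions at $\thetabf_t$ equal $t^{-n}h_i\to\0$, hence the $\nabla\ell_i,\nabla^2\ell_i$ appearing in Lemma~\ref{lemma:cnn_region_hess} converge to $\nabla\ell(\0,y_i),\nabla^2\ell(\0,y_i)$ and in particular stay bounded. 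Since only $\Delta\w_{j_1},\Delta\w_{j_2}$ are nonzero, the double-insertion sum in Equation~\eqref{eq:cnn_region_hess} collapses to the single pair $(j_1,j_2)$; pulling the $t$'s out of the non-$\Delta$ slots, that term equals $\frac{2\,t^{\,n-2}}{|\S|}\sum_i\nabla\ell_i^\top h_i$, which tends to $-\infty$ because $n\geq 3$ makes $t^{\,n-2}\to\infty$ while $\sum_i\nabla\ell_i^\top h_i\to S<0$, whereas the convex first sum, whose single-insertion vectors each scale like $t^{-1}h_i$, is only $\OO(t^{-2})$. Thus $\Delta\thetabf^\top\nabla^2 f(\thetabf_t)\,\Delta\thetabf\to-\infty$ while $\|\Delta\thetabf\|_2$ is a fixed positive constant, giving $\lambda_{min}(\nabla^2 f(\thetabf_t))\to-\infty$ with $\nabla^2 f(\thetabf_t)$ existing (as $\thetabf_t\in\D_{\thetabf'}$), which is the claim.

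The main obstacle is choosing the scaling exponents so that three requirements hold simultaneously: the end-to-end factor $\prod_k c_k$ must vanish (to force $\nabla\ell_i\to\nabla\ell(\0,y_i)$, so the cross term is controlled by $S<0$), the cross-term prefactor $\prod_{k\notin\{j_1,j_2\}}c_k$ must diverge, and the convex first-sum vectors, which scale as $(\prod_k c_k)/c_j$, must stay bounded. Depth $n\geq 3$ is exactly what permits this, since it guarantees a ``spare'' index $k\notin\{j_1,j_2\}$ to carry the diverging factor~---~the same role depth plays in Proposition~\ref{prop:fnn_hess_arbitrary_neg}. The weight-sharing/sparsity maps $W_j(\cdot)$ introduce no difficulty, being linear (so $W_j(c\,\w_j)=c\,W_j(\w_j)$ and, for the chosen direction, $W_j(\Delta\w_j)=W_j(\w_j)$), and the zero-measure-complement structure of the regions $\D_{\thetabf'}$ (Proposition~\ref{prop:region_cnn}) takes care of the non-differentiability of $f$ for non-linear $\sigma$.
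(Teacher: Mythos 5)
Your proposal is correct and follows essentially the same route as the paper's proof: reduce to a region $\D_{\thetabf'}$ with $\sum_i\nabla\ell(\0,y_i)^\top h_\thetabf(\x_i)<0$, apply Lemma~\ref{lemma:cnn_region_hess}, and positively rescale the weight vectors so that the end-to-end product vanishes (keeping the convex first summand bounded and driving $\nabla\ell_i\to\nabla\ell(\0,y_i)$) while the cross-term prefactor diverges. The only difference is cosmetic: the paper fixes the perturbed indices to $1,2$ with exponents $a^{-2},a^{-2},a,1,\ldots,1$, whereas you use a general pair $j_1<j_2$ with exponents $t^{-(n-1)}$ there and $t$ elsewhere; both satisfy the same three scaling requirements.
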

\begin{proofsketch}{(for complete proof see Subappendix~\ref{app:proof:cnn_hess_arbitrary_neg})}
The proof is analogous to that of Proposition~\ref{prop:fnn_hess_arbitrary_neg}.
Specifically, it establishes that there exists $\thetabf \in \D_{\thetabf'}$ for which $\sum_{i = 1}^{| \S |} \nabla \ell ( \0 , y_i )^\top h_\thetabf ( \x_i ) < 0$, and then makes use of Lemma~\ref{lemma:cnn_region_hess} to show that fixing $\Delta \w_1 , \Delta \w_2 , ...\, , \Delta \w_n$ to certain values, and positively rescaling $\w_1 , \w_2 , ...\, , \w_n$ in a certain way, leads $\nabla^2 f ( \thetabf ) \, [ \Delta \w_1 , \Delta \w_2 , ...\, , \Delta \w_n ]$ to become arbitrarily negative.
\end{proofsketch}
Relying on Lemma~\ref{lemma:cnn_region_hess}, Lemma~\ref{lemma:cnn_region_hess_lb} below provides a lower bound on~$\lambda_{min} ( \nabla^2 f ( \thetabf ) )$ for~$\thetabf \in \D_{\thetabf'}$.
\begin{lemma}
\label{lemma:cnn_region_hess_lb}
With the notations of Lemma~\ref{lemma:cnn_region_hess}, for any $\thetabf \in \D_{\thetabf'}$:\textsuperscript{\normalfont{\ref{note:empty_prod}}}
\vspace{-1.5mm}
\bea
&& \hspace{-5mm}
\lambda_{min} ( \nabla^2 f ( \thetabf ) ) \geq - \max \{ | \alpha | , | \alphabar | \}^{n - 1} \frac{n \, {-} \, 1}{| \S |} \sum_{i = 1}^{| \S |} \| \nabla \ell_i \|_2 \| \x_i \|_2 \cdot 
\label{eq:cnn_region_hess_lb} \\[-2mm]
&& \hspace{50mm}
\prod_{j = 1}^n \| W_j ( \cdot ) \|_{op} \max_{\substack{\J \subseteq \{ 1 , 2 , \ldots , n \} \\[0.25mm] | \J | = n - 2}} \prod_{j \in \J} \hspace{-0.75mm} \| \w_j \|_2
\text{\,,}
\hspace{10mm}
\nonumber
\vspace{-1.5mm}
\eea
where $\| \hspace{-0.5mm} W_{\hspace{-0.25mm} j} ( \cdot ) \hspace{-0.25mm} \|_{op}$, $j \hspace{0.1mm} {=} \hspace{0.1mm} 1 , 2 , \hspace{-0.5mm} ... , n$, denotes the operator norm of~\,$W_{\hspace{-0.5mm} j} ( \cdot )$ induced by the Frobenius norm.\note{%
From the structure of~$W_j ( \cdot )$ (see beginning of this appendix) it follows that $\| W_j ( \cdot ) \|_{op}$ is equal to square root of the maximal number of elements in~$W_j ( \w_j )$ connected to the same coordinate of~$\w_j$.
}
\end{lemma}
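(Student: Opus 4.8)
The plan is to mirror the proofs of Lemmas~\ref{lemma:lnn_hess_lb} and~\ref{lemma:fnn_region_hess_lb}, now keeping track of the weight-sharing maps $W_j(\cdot)$ via their operator norms. Fix $\thetabf\in\D_{\thetabf'}$, let $(\Delta\w_1,\ldots,\Delta\w_n)$ be an arbitrary tuple and $\Delta\thetabf\in\R^d$ its arrangement as a vector, so that $\|\Delta\thetabf\|_2^2=\sum_{j=1}^n\|\Delta\w_j\|_2^2$ (recall $\thetabf$ is the concatenation of $\w_1,\ldots,\w_n$). Since $\nabla^2 f(\thetabf)$ is symmetric, $\lambda_{min}(\nabla^2 f(\thetabf))=\min_{\|\Delta\thetabf\|_2=1}\Delta\thetabf^\top\nabla^2 f(\thetabf)\Delta\thetabf$, so it suffices to lower bound the quadratic form $\nabla^2 f(\thetabf)[\Delta\w_1,\ldots,\Delta\w_n]$ given by Equation~\eqref{eq:cnn_region_hess} by $-c\|\Delta\thetabf\|_2^2$, with $c$ the right-hand side of Equation~\eqref{eq:cnn_region_hess_lb}.

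First I would dispose of the $\nabla^2\ell_i$ summand in Equation~\eqref{eq:cnn_region_hess}: because $\ell(\cdot)$ is convex in its first argument, each $\nabla^2\ell_i$ is a positive semidefinite quadratic form, hence $\tfrac{1}{|\S|}\sum_{i=1}^{|\S|}\nabla^2\ell_i[\,\cdot\,]\ge0$. It then remains to lower bound the $\nabla\ell_i$ summand. For each term indexed by $1\le j<j'\le n$, I would apply Cauchy--Schwarz to the scalar $\nabla\ell_i^\top(\cdots)\x_i$ and then submultiplicativity of the spectral norm along the chain, bounding the term in absolute value by $\|\nabla\ell_i\|_2\|\x_i\|_2$ times the product of the spectral norms of the factors appearing. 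A short bookkeeping check shows that, regardless of $j,j'$, these factors are exactly the $n-1$ nontrivial diagonal matrices $D'_{i,1},\ldots,D'_{i,n-1}$ (with $D'_{i,n}$ the identity), each of spectral norm at most $\max\{|\alpha|,|\alphabar|\}$; the two ``disturbed'' matrices $W_j(\Delta\w_j)$, $W_{j'}(\Delta\w_{j'})$; and the $n-2$ ``undisturbed'' ones $W_k(\w_k)$, $k\notin\{j,j'\}$. Using $\|M\|_{spectral}\le\|M\|_{Frobenius}$ together with the definition of $\|W_k(\cdot)\|_{op}$ as the operator norm of the linear map $W_k(\cdot)$ induced by the Frobenius norm gives $\|W_k(\w_k)\|_{spectral}\le\|W_k(\cdot)\|_{op}\|\w_k\|_2$ and $\|W_k(\Delta\w_k)\|_{spectral}\le\|W_k(\cdot)\|_{op}\|\Delta\w_k\|_2$. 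Collecting everything, the $(j,j')$ term for sample $i$ is at most
\[
\|\nabla\ell_i\|_2\,\|\x_i\|_2\,\max\{|\alpha|,|\alphabar|\}^{n-1}\Big(\prod_{k=1}^{n}\|W_k(\cdot)\|_{op}\Big)\Big(\prod_{k\notin\{j,j'\}}\|\w_k\|_2\Big)\|\Delta\w_j\|_2\,\|\Delta\w_{j'}\|_2 ,
\]
and $\prod_{k\notin\{j,j'\}}\|\w_k\|_2\le\max_{\J\subseteq\{1,\ldots,n\},\,|\J|=n-2}\prod_{k\in\J}\|\w_k\|_2$, a bound uniform in $j,j'$ that can be pulled out of the double sum.

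To finish I would sum over the pairs $j<j'$ using $\|\Delta\w_j\|_2\|\Delta\w_{j'}\|_2\le\tfrac12(\|\Delta\w_j\|_2^2+\|\Delta\w_{j'}\|_2^2)$; since each $\|\Delta\w_j\|_2^2$ appears in $n-1$ pairs, this gives $\sum_{j<j'}\|\Delta\w_j\|_2\|\Delta\w_{j'}\|_2\le\tfrac{n-1}{2}\sum_{j=1}^n\|\Delta\w_j\|_2^2=\tfrac{n-1}{2}\|\Delta\thetabf\|_2^2$. The prefactor $\tfrac{2}{|\S|}$ in Equation~\eqref{eq:cnn_region_hess} absorbs the $\tfrac12$, so the $\nabla\ell_i$ summand is at least $-c\,\|\Delta\thetabf\|_2^2$ with $c$ precisely the right-hand side of Equation~\eqref{eq:cnn_region_hess_lb}; combined with the non-negativity of the $\nabla^2\ell_i$ summand this yields $\Delta\thetabf^\top\nabla^2 f(\thetabf)\Delta\thetabf\ge-c\|\Delta\thetabf\|_2^2$ for all $\Delta\thetabf$, i.e.\ $\lambda_{min}(\nabla^2 f(\thetabf))\ge-c$. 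The only step requiring genuine care beyond transcribing the fully connected argument is the bookkeeping of the matrix chain in each term of the second summand --- verifying that exactly $n-1$ nontrivial diagonal factors and exactly $n-2$ undisturbed weight factors occur for every $j,j'$, and applying the operator-norm inequality for the linear sharing map $W_k(\cdot)$ in the correct direction; the rest is routine.
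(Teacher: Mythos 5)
Your proposal is correct and follows essentially the same route as the paper's proof: drop the $\nabla^2\ell_i$ summand by convexity, then bound the $\nabla\ell_i$ summand via Cauchy--Schwarz, submultiplicativity of the spectral norm, the bounds $\|D'_{i,j}\|_{spectral}\le\max\{|\alpha|,|\alphabar|\}$ (with $D'_{i,n}$ the identity), the passage through Frobenius norms and $\|W_k(\cdot)\|_{op}$, and finally $\sum_{j<j'}\|\Delta\w_j\|_2\|\Delta\w_{j'}\|_2\le\tfrac{n-1}{2}\sum_j\|\Delta\w_j\|_2^2$. The only cosmetic difference is that you obtain this last inequality via AM--GM and a pair-counting argument, whereas the paper expands $\big(\sum_j\|\Delta\w_j\|_2\big)^2$ and invokes the $\ell_1$--$\ell_2$ norm comparison; both yield the identical constant.
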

\begin{proofsketch}{(for complete proof see Subappendix~\ref{app:proof:cnn_region_hess_lb})}
The proof mirrors those of Lemmas \ref{lemma:lnn_hess_lb} and~\ref{lemma:fnn_region_hess_lb}~---~it establishes that the right-hand side of Equation~\eqref{eq:cnn_region_hess} in Lemma~\ref{lemma:cnn_region_hess} is lower bounded by \smash{$c \sum_{j = 1}^n \| \Delta \w_j \|_2^2$}, with $c$ being the expression on the right-hand side of Equation~\eqref{eq:cnn_region_hess_lb}.
\end{proofsketch}
The lower bound in Equation~\eqref{eq:cnn_region_hess_lb} is highly sensitive to the scales of the individual weight vectors.
Specifically, assuming the network is deep ($n \geq 3$) and is non-degenerate, in the sense that all of its layers can realize non-zero mappings (that is, the activation function~$\sigma ( \cdot )$ is not identically zero, \ie~$\alpha$ and~$\alphabar$ are not both equal to zero, and for all $j \in \{ 1 , 2 , ...\, , n \}$, $W_j ( \cdot )$~is not the zero mapping, \ie~$\| W_j ( \cdot ) \|_{op} > 0$), if~$\thetabf$ does not perfectly fit all non-zero training inputs (meaning there exists $i \in \{ 1 , 2 , ...\, , | \S | \}$ for which $\nabla \ell_i \neq \0$ and $\x_i \neq \0$), and if at least $n - 2$ of its weight vectors $\w_1 , \w_2 , ...\, , \w_n$ are non-zero, then it is possible to rescale each $\w_j$ by $c_j > 0$, with \smash{$\prod_{j = 1}^n c_j = 1$}, such that the lower bound in Equation~\eqref{eq:cnn_region_hess_lb} becomes arbitrarily negative\note{%
The bound remains applicable since $\D_{\thetabf'}$ is closed under positive rescaling of weight vectors.
}
despite the input-output mapping~$h_\thetabf ( \cdot )$ (and thus the training loss value~$f ( \thetabf )$) remaining unchanged.
Nevertheless, as with fully connected architectures (see Subsection~\ref{sec:roughly_convex:fnn}), gradient flow over convolutional architectures (\ie~over neural networks as defined in Equation~\eqref{eq:cnn} and surrounding text) initialized near zero maintains balance between weight vectors~---~\cf~\cite{du2018algorithmic}~---~and so along its trajectories the lower bound in Equation~\eqref{eq:cnn_region_hess_lb} assumes a tighter form.
This is formalized in Proposition~\ref{prop:cnn_region_hess_lb_gf} below.
\begin{proposition}
\label{prop:cnn_region_hess_lb_gf}
If~$\thetabf \in \D_{\thetabf'}$ resides on a trajectory of gradient flow (over~$f ( \cdot )$)\textsuperscript{\normalfont{\ref{note:gf_non_diff}}} initialized at some point $\thetabf_s \in \R^d$, with $\| \thetabf_s \|_2 \leq \epsilon$ for some $\epsilon > 0$, then, using the notations of Lemmas \ref{lemma:cnn_region_hess} and~\ref{lemma:cnn_region_hess_lb}:
\vspace{-1.5mm}
\bea
&&
\lambda_{min} ( \nabla^2 \hspace{-0.5mm} f ( \thetabf ) ) \,{\geq}\, - \max \{ | \alpha | , | \alphabar | \}^{n - 1} \frac{n \, {-} \, 1}{| \S |} \hspace{-0.5mm} \sum_{i = 1}^{| \S |} \hspace{-0.5mm} \| \nabla \ell_i \|_2 \| \x_i \|_2 \cdot
\label{eq:cnn_region_hess_lb_gf} \\[-2mm]
&& \hspace{50mm}
\prod_{j = 1}^n \| W_j ( \cdot ) \|_{op} \Big( \hspace{-0.5mm} \min_{j \in \{ 1 , 2 , \ldots , n \}} \hspace{-1mm} \| \w_j \|_2 + \epsilon \Big)^{n - 2}
\text{\,.}
\hspace{10mm}
\nonumber
\eea
\end{proposition}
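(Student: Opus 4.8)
The plan is to obtain Equation~\eqref{eq:cnn_region_hess_lb_gf} by combining the generic lower bound of Lemma~\ref{lemma:cnn_region_hess_lb} with the balancedness that gradient flow inherits from a near-zero initialization, exactly as in the proof of the fully connected counterpart (Proposition~\ref{prop:fnn_region_hess_lb_gf}). The key observation is that the only factor in the right-hand side of Equation~\eqref{eq:cnn_region_hess_lb} that is not already intrinsic to the data or the architecture is $\max_{\J \subseteq \{ 1 , 2 , ...\, , n \} , \, | \J | = n - 2} \prod_{j \in \J} \| \w_j \|_2$; the quantities $\| \nabla \ell_i \|_2$, $\| \x_i \|_2$, $\prod_{j = 1}^n \| W_j ( \cdot ) \|_{op}$ and the activation constant $\max \{ | \alpha | , | \alphabar | \}$ need no control. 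So the whole task reduces to bounding the norms of the individual weight vectors $\w_j$ along the trajectory.

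First I would invoke the ``automatic balancing'' analysis of~\cite{du2018algorithmic}. It applies whenever each layer is homogeneous in its own parameters; here each map $\w_j \mapsto W_j ( \w_j )$ is linear, hence degree-one homogeneous, so imposing weight sharing and sparsity patterns does not affect the argument (this is the observation already made in Subsection~\ref{sec:roughly_convex:cnn}). The conclusion is that for every $j , j' \in \{ 1 , 2 , ...\, , n \}$ the scalar $\| \w_{j'} \|_2^2 - \| \w_j \|_2^2$ is constant along the gradient flow trajectory, and in the non-differentiable case this continues to hold for the differential-inclusion formulation of footnote~\ref{note:gf_non_diff}, again following~\cite{du2018algorithmic,davis2020stochastic}.

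Next I would feed in the initialization. Since $\thetabf_s$ is the concatenation of $\w_{1 , s} , ...\, , \w_{n , s}$ and $\| \thetabf_s \|_2 \leq \epsilon$, each $\| \w_{j , s} \|_2 \leq \epsilon$, so the conserved difference $\| \w_{j'} \|_2^2 - \| \w_j \|_2^2$ lies in $[ - \epsilon^2 , \epsilon^2 ]$ at all times. Hence at the point $\thetabf \in \D_{\thetabf'}$ in the statement, $\| \w_{j'} \|_2^2 \leq \| \w_j \|_2^2 + \epsilon^2$ for all $j , j'$; choosing $j$ to attain $\min_{k \in \{ 1 , 2 , ...\, , n \}} \| \w_k \|_2$ and using $b^2 + \epsilon^2 \leq ( b + \epsilon )^2$ for $b , \epsilon \geq 0$ gives $\| \w_{j'} \|_2 \leq \min_{k \in \{ 1 , 2 , ...\, , n \}} \| \w_k \|_2 + \epsilon$ for every $j'$. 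Bounding each factor of the $( n - 2 )$-fold product this way yields $\max_{\J \subseteq \{ 1 , 2 , ...\, , n \} , \, | \J | = n - 2} \prod_{j \in \J} \| \w_j \|_2 \leq \big( \min_{j \in \{ 1 , 2 , ...\, , n \}} \| \w_j \|_2 + \epsilon \big)^{n - 2}$, and substituting into Equation~\eqref{eq:cnn_region_hess_lb} produces Equation~\eqref{eq:cnn_region_hess_lb_gf}.

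I do not expect a serious obstacle: once Lemma~\ref{lemma:cnn_region_hess_lb} is available, what remains is the balancedness citation plus elementary algebra. The one point requiring care is verifying that the conservation law of~\cite{du2018algorithmic} genuinely survives the introduction of weight sharing and sparsity, and in the non-smooth case the passage to Clarke subdifferentials; but the homogeneity degree of every layer is still one, so the balancing argument goes through verbatim, as already exploited for this model in Subsection~\ref{sec:roughly_convex:cnn}. A minor additional subtlety is that $\thetabf$ is assumed to lie in a region $\D_{\thetabf'}$ on which $f$ is twice continuously differentiable, whereas the conservation law is a global property of the trajectory rather than of any single region; this, however, causes no difficulty, since we only apply the law to the (fixed) trajectory passing through $\thetabf$.
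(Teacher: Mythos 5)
Your proposal is correct and follows essentially the same route as the paper's proof: cite the conservation of $\| \w_{j'} \|_2^2 - \| \w_j \|_2^2$ along the trajectory from \cite{du2018algorithmic}, bound the initial differences by $\epsilon^2$ using $\| \thetabf_s \|_2 \leq \epsilon$, and convert $\| \w_{j'} \|_2^2 \leq \min_j \| \w_j \|_2^2 + \epsilon^2$ into $\| \w_{j'} \|_2 \leq \min_j \| \w_j \|_2 + \epsilon$ before substituting into Equation~\eqref{eq:cnn_region_hess_lb}. The only cosmetic difference is that the paper first bounds the product by $\max_j \| \w_j \|_2^{n-2}$ and treats $n = 2$ separately, while you bound each factor directly; both are equivalent.
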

\begin{proofsketch}{(for complete proof see Subappendix~\ref{app:proof:cnn_region_hess_lb_gf})}
By the analysis of~\cite{du2018algorithmic}, for any $j , j' \in \{ 1 , 2 , ...\, , n \}$, the quantity $\| \w_{j'} \|_2^2 - \| \w_j \|_2^2$ is invariant (constant) along a gradient flow trajectory.
This implies that along a trajectory emanating from a point with (Euclidean) norm~$\OO ( \epsilon )$, it holds that $\| \w_{j'} \|_2^2 - \| \w_j \|_2^2 \in \OO ( \epsilon^2 )$ for all $j , j' \in \{ 1 , 2 , ...\, , n \}$, which in turn implies $\| \w_{j'} \|_2 \leq \min_{j \in \{ 1 , 2 , ...\, , n \}} \| \w_j \|_2 + \OO ( \epsilon )$ for all $j' \in \{ 1 , 2 , ...\, , n \}$.
Plugging this into Equation~\eqref{eq:cnn_region_hess_lb} yields the desired result (Equation~\eqref{eq:cnn_region_hess_lb_gf}).
\end{proofsketch}
Assume the network is deep ($n \geq 3$) and non-degenerate ($\alpha$ and~$\alphabar$ are not both equal to zero, and $\| W_j ( \cdot ) \|_{op} > 0$ for all $j \in \{ 1 , 2 , ...\, , n \}$), and consider a trajectory of gradient flow (over~$f ( \cdot )$) emanating from near-zero initialization.
For every point on the trajectory, Proposition~\ref{prop:cnn_region_hess_lb_gf} may be applied with small~$\epsilon$, leading the lower bound in Equation~\eqref{eq:cnn_region_hess_lb_gf} to depend primarily on the \emph{minimal} size (Euclidean norm) of a weight vector~$\w_j$, and on $\nabla \ell_1 , \nabla \ell_2 , ...\, , \nabla \ell_{| \S |}$~---~gradients of the loss function with respect to the predictions over the training set.
In the course of optimization, $\w_1 , \w_2 , ...\, , \w_n$ are initially small, and if a perfect fit of the training set is ultimately achieved, $\nabla \ell_1 , \nabla \ell_2 , ...\, , \nabla \ell_{| \S |}$ will converge to zero.
Therefore, if not \emph{all} weight vectors $\w_1 , \w_2 , ...\, , \w_n$ become large during optimization, the lower bound on~$\lambda_{min} ( \nabla^2 f ( \thetabf ) )$ in Equation~\eqref{eq:cnn_region_hess_lb_gf} will only be moderately negative before approaching non-negativity (if and) as the trajectory converges to a perfect fit.
In light of Section~\ref{sec:match}, this suggests that the gradient flow trajectory may lend itself to approximation by gradient descent.
For a case of fully connected neural networks with linear activation (analyzed in Subsubsection~\ref{sec:roughly_convex:fnn:lin}), such prospect is theoretically verified in Section~\ref{sec:lnn}.
For convolutional architectures (subject of the current appendix) we provide empirical corroboration in Subappendix~\ref{app:experiments:further}, deferring to future work a complete theoretical affirmation.

\section{Regions of Differentiability} \label{app:region}

In this appendix we prove that for fully connected and convolutional architectures with non-linear activation, there exist regions of differentiability~$\D_{\thetabf'}$ as described in Subsubsection~\ref{sec:roughly_convex:fnn:non_lin} and Appendix~\ref{app:cnn} respectively.

\begin{proposition}[regions of differentiability for fully connected architectures]
\label{prop:region_fnn}
Consider a fully connected neural network as defined in Equation~\eqref{eq:fnn} (and surrounding text), and assume that its (homogeneous) activation function is non-linear, \ie~$\sigma ( z ) = \alpha \max \{ z , 0 \} - \alphabar \max \{ -z , 0 \}$ for some $\alpha , \alphabar \in \R$, $\alpha \neq \alphabar$.
Then, for almost every (in the sense of Lebesgue measure) $\thetabf' \in \R^d$, there exist diagonal matrices $D'_{i , j} \in \R^{d_j , d_j}$, $i = 1 , 2 , ...\, , | \S |$, $j = 1 , 2 , ...\, , n - 1$, with diagonal elements in~$\{ \alpha , \alphabar \}$, such that the training loss~$f ( \cdot )$ (Equation~\eqref{eq:train_loss}) coincides with the function defined in Equation~\eqref{eq:train_loss_fnn_region} on an open region~$\D_{\thetabf'} \subseteq \R^d$ containing~$\thetabf'$, that is closed under positive rescaling of weight matrices (\ie~under $( W_1 , W_2 , ...\, , W_n ) \mapsto ( c_1 W_1 , c_2 W_2 , ...\, , c_n W_n )$ with $c_1 , c_2 , ...\, , c_n > 0$).
\end{proposition}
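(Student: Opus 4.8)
The plan is to partition $\R^d$ into finitely many closed ``activation regions,'' show that on each region the training loss coincides with a function of the form in Equation~\eqref{eq:train_loss_fnn_region}, and then verify that the pairwise boundaries of these regions form a Lebesgue-null set and that each region is invariant under positive rescaling of the weight matrices. The starting observation is that the homogeneous activation can be written as $\sigma ( z ) = \alpha z$ for $z \geq 0$ and $\sigma ( z ) = \alphabar z$ for $z \leq 0$ (the two formulas agree at $z = 0$ since $\sigma ( 0 ) = 0$), so passing $\x_i$ through the network amounts to replacing each activation by multiplication by a diagonal matrix whose $k$'th entry is $\alpha$ or $\alphabar$ according to the sign of the corresponding pre-activation.

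I would construct the regions by induction on the layer index $j = 0 , 1 , \ldots , n - 1$, maintaining a finite family $\RR_j$ of closed subsets covering $\R^d$ such that each $R \in \RR_j$ carries, for every $i \in \{ 1 , \ldots , | \S | \}$, fixed diagonal matrices $D_{i , 1} , \ldots , D_{i , j}$ with entries in $\{ \alpha , \alphabar \}$ for which the layer-$\leq j$ computation on $\x_i$ equals $D_{i , j} W_j D_{i , j - 1} W_{j - 1} \cdots D_{i , 1} W_1 \x_i$ at every $\thetabf \in R$ (for $j = 0$ this reads just $\x_i$). Given $\RR_{j - 1}$, within a region $R$ the $k$'th layer-$j$ pre-activation of example $i$ is $p^R_{i , k} ( \thetabf ) := \e_k^\top W_j D_{i , j - 1} W_{j - 1} \cdots D_{i , 1} W_1 \x_i$, which — since the $D$'s are constants and each of $W_1 , \ldots , W_j$ appears exactly once — is a polynomial in $\thetabf$. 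For each sign assignment $s$ (one sign per pair $( i , k )$, $k \leq d_j$) I intersect $R$ with $\{ \thetabf : s ( i , k ) \cdot p^R_{i , k} ( \thetabf ) \geq 0 ~ \forall i , k \}$ and let $D_{i , j}$ have $k$'th entry $\alpha$ if $s ( i , k ) = +$ and $\alphabar$ if $s ( i , k ) = -$; the resulting family $\RR_j$ still covers $\R^d$ (assign $\thetabf$ the signs of its own pre-activations, breaking ties toward $+$), and because $\sigma ( z ) = \alpha z$ on $\{ z \geq 0 \}$ and $\sigma ( z ) = \alphabar z$ on $\{ z \leq 0 \}$ coincide at $z = 0$, the identity $\sigma ( p^R_{i , k} ( \thetabf ) ) = ( D_{i , j} )_{k k} \, p^R_{i , k} ( \thetabf )$ holds on the whole (closed) region regardless of which of $\alpha , \alphabar$ is chosen at a vanishing pre-activation, so the inductive invariant is preserved. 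After $n - 1$ steps, on each $R \in \RR_{n - 1}$ we obtain $h_\thetabf ( \x_i ) = W_n D_{i , n - 1} W_{n - 1} \cdots D_{i , 1} W_1 \x_i$, hence $f ( \cdot )$ agrees on $R$ with the function in Equation~\eqref{eq:train_loss_fnn_region} with $D'_{i , j} = D_{i , j}$.

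To finish, it suffices to show $\bigcup_{R \in \RR_{n - 1}} \partial R$ is Lebesgue-null: then any $\thetabf'$ outside this set lies in $R \setminus \partial R = \mathrm{int} ( R )$ for whichever $R \in \RR_{n - 1}$ contains it, and $\D_{\thetabf'} := \mathrm{int} ( R )$ is an open neighborhood of $\thetabf'$ on which $f ( \cdot )$ has the required form. Each $R$ is a finite intersection $\bigcap_m \{ p_m \geq 0 \}$ of superlevel sets of polynomials, and a standard argument gives $\partial R \subseteq \bigcup_m \partial \{ p_m \geq 0 \} \subseteq \bigcup_{m : p_m \not\equiv 0} \{ p_m = 0 \}$ (a polynomial vanishing identically gives $\{ p_m \geq 0 \} = \R^d$, with empty boundary; if all do then $R = \R^d$ and $\partial R = \emptyset$); since the zero set of a nonzero polynomial on $\R^d$ is Lebesgue-null (by induction on $d$ and Fubini), a finite union of such sets is Lebesgue-null too. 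Finally, each defining polynomial $p^{( l )}_{i , k} ( \thetabf ) = \e_k^\top W_l D_{i , l - 1} W_{l - 1} \cdots D_{i , 1} W_1 \x_i$ contains each of $W_1 , \ldots , W_l$ exactly once, so under $( W_1 , \ldots , W_n ) \mapsto ( c_1 W_1 , \ldots , c_n W_n )$ with $c_1 , \ldots , c_n > 0$ it is multiplied by the positive scalar $c_1 \cdots c_l$; the defining inequalities, hence $R$ itself, are therefore invariant, and since the rescaling is an invertible linear map of $\R^d$ it carries $\mathrm{int} ( R )$ onto $\mathrm{int} ( R )$, so $\D_{\thetabf'}$ is closed under positive rescaling of the weight matrices. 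The main obstacle I expect is getting the inductive construction exactly right — in particular verifying that the pre-activations are genuine polynomials throughout a region and that the ``robust-at-zero'' behaviour of $\sigma$ lets the regions be taken closed (defined by weak inequalities) so that almost every parameter lands in an interior; the measure-zero and rescaling steps are then routine.
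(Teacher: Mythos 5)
Your proof is correct, and it reaches the result by a genuinely different route than the paper's. Both arguments are layer-by-layer inductions that fix an activation pattern, but the paper's construction is \emph{local}: for each admissible $\thetabf'$ it builds a neighborhood via a continuity argument, which forces it to distinguish ``nullifying'' and ``regular'' weight settings, to carry the extra inductive invariant that every entry of the intermediate output is either nowhere zero or identically zero on the region, and to dispose of the bad set by a Fubini argument over the configurations of the current layer's weights that produce an irregular point. You instead build a \emph{global}, finite partition of $\R^d$ into closed regions cut out by weak polynomial inequalities on the pre-activations, and you sidestep the zero-pre-activation bookkeeping entirely by the observation that the two linear branches of $\sigma ( \cdot )$ agree at $z = 0$, so the sign choice at a vanishing pre-activation is immaterial and closed regions with weak inequalities suffice; the null set is then simply the union of the finitely many region boundaries, each contained in the zero set of a non-identically-zero polynomial, and $\D_{\thetabf'}$ is taken to be the interior of the region containing $\thetabf'$. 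Your approach is self-contained, yields slightly more (a finite semialgebraic cover of parameter space, each cell invariant under positive rescaling by multilinearity of the pre-activations in $( W_1 , \ldots , W_l )$), and would also subsume the convolutional variant (Proposition~\ref{prop:region_cnn}) without the per-coordinate ``feasible/infeasible'' bookkeeping the paper needs there; the paper's local construction, in exchange, is the template it reuses almost verbatim for that case. The only steps you should make fully explicit in a write-up are the two topological facts you invoke in passing, namely $\partial ( \cap_m A_m ) \subseteq \cup_m \partial A_m$ for finitely many closed $A_m$ and $\partial \{ p \geq 0 \} \subseteq \{ p = 0 \}$ for continuous~$p$; both are elementary and your argument is otherwise complete.
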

\begin{proof}
If for $\thetabf' \in \R^d$ there exist diagonal matrices~$( D'_{i , j} )_{i , j}$ and an open region~$\D_{\thetabf'}$ as above, then we refer to $\thetabf'$ as an \emph{admissible} weight setting, to $( D'_{i , j} )_{i , j}$ as its \emph{activation matrices}, and to $\D_{\thetabf'}$ as its \emph{differentiability region}.\note{%
Note that given an admissible weight setting, activation matrices and differentiability region are not necessarily determined uniquely.
\label{note:mat_region_non_unique}
}

\medskip

Without loss of generality, we may assume $| \S | = 1$, \ie~that the training set comprises a single labeled input $( \x , y ) \in \R^{d_0} \times \Y$, meaning the training loss takes the form $f ( \thetabf ) = \ell ( h_\thetabf ( \x ) , y )$.
To see this, assume the sought-after result holds for a single labeled input, and suppose~$| \S | \,{>}\, 1$.
We may then apply the result separately for each labeled input~$( \x_i , y_i )$, $i = 1 , 2 , ...\, , | \S |$, and obtain, for every admissible $\thetabf' \in \R^d$, activation matrices $( D^{\prime \, {\scriptscriptstyle  ( \x_i , y_i )}}_j )_{j = 1}^{n - 1}$ and a differentiability region~$\D^{\scriptscriptstyle ( \x_i , y_i )}_{\thetabf'}$.
Since the weight settings not admissible for a certain labeled input~$( \x_i , y_i )$ form a set of zero (Lebesgue) measure, those not admissible for any of the \smash{$| \S |$}~labeled inputs also constitute a zero measure set.
That is, almost every $\thetabf' \in \R^d$ is jointly admissible for all \smash{$\big( ( \x_i , y_i ) \big)_{i = 1}^{| \S |}$}.
Given such~$\thetabf'$, consider the activation matrices and differentiability regions obtained for the different labeled inputs~---~$( D^{\prime \, {\scriptscriptstyle  ( \x_i , y_i )}}_j )_{j = 1}^{n - 1}$ and $\D^{\scriptscriptstyle ( \x_i , y_i )}_{\thetabf'}$, $i = 1 , 2 , ...\, , | \S |$.
Defining $D'_{i , j} := D^{\prime \, {\scriptscriptstyle  ( \x_i , y_i )}}_j$, $i = 1 , 2 , ...\, , | \S |$, $j = 1 , 2 , ...\, , n - 1$, and \smash{$\D_{\thetabf'} := \cap_{i = 1}^{| \S |} \D^{\scriptscriptstyle ( \x_i , y_i )}_{\thetabf'}$}, we have that~$\thetabf'$ is admissible for~$\S$, with activation matrices~$( D'_{i , j} )_{i , j}$ and differentiability region~$\D_{\thetabf'}$.
The sought-after result thus holds for~$\S$.

\medskip

In light of the above, we assume hereafter that $\S = \big( ( \x , y ) \big)$.
Recursively define the functions $\f^{( j )} : \R^d \to \R^{d_j}$, $j = 0 , 1 , ... \, , n - 1$:
\[
\f^{( 0 )} ( \thetabf ) \equiv \x
\quad , \quad
\f^{( j )} ( \thetabf ) = \sigma \big( W_j \f^{( j - 1 )} ( \thetabf ) \big)
~~\text{for $j = 1 , 2 , ... \, , n - 1$}
\text{\,.}
\]
We will prove by induction that given $j' \in \{ 0 , 1 , ... \, , n - 1 \}$, for almost every $\thetabf' \in \R^d$, there exist diagonal matrices $D'_j \in \R^{d_j , d_j}$, $j = 1 , 2 , ...\, , j'$, with diagonal elements in~$\{ \alpha , \alphabar \}$, such that $\f^{( j' )} ( \cdot )$ meets the following conditions on an open region~$\D_{\thetabf'} \subseteq \R^d$ containing~$\thetabf'$, that is closed under positive rescaling of weight matrices:
\begin{enumerate}[label=\emph{(\roman*)}]
\item $\f^{( j' )} ( \cdot )$~coincides with the function $\thetabf \mapsto D'_{j'} W_{j'} D'_{j' - 1} W_{j' - 1} \cdots D'_1 W_1 \x$;
and
\vspace{-2mm}
\item each entry of $\f^{( j' )} ( \cdot )$ is either nowhere zero or identically zero.
\end{enumerate}
Continuing the terminology defined earlier, in the context of~$\f^{( j' )} ( \cdot )$, $j' = 0 , 1 , ... \, , n - 1$, we refer to $\thetabf'$, $( D'_j )_j$ and~$\D_{\thetabf'}$ satisfying the above as \emph{admissible}, \emph{activation matrices} and \emph{differentiability region}, respectively.
Note that the training loss~$f ( \cdot )$ can be expressed as \smash{$f ( \thetabf ) = \ell ( W_n \f^{( n - 1 )} ( \thetabf ) , y )$}, and therefore proving the inductive hypothesis for $j' = n - 1$ yields the desired result.
The base case for the induction ($j' = 0$) is trivial, so all that remains is to establish the induction step.

\medskip

Given $j' \in \{ 1 , 2 , ... \, , n - 1 \}$, assume that the inductive hypothesis holds for~$j' - 1$, and in the context of~$\f^{( j' - 1 )} ( \cdot )$, let $\thetabf'$ be an admissible weight setting, with corresponding activation matrices \smash{$( D'_j )_{j = 1}^{j' - 1}$} and differentiability region~$\D_{\thetabf'}$.
We refer to~$\thetabf'$ as \emph{nullifying} if $\f^{( j' - 1 )} ( \thetabf' ) = \0$, which implies $\f^{( j' - 1 )} ( \thetabf ) = \0$ for all $\thetabf \in \D_{\thetabf'}$.
In this case $\thetabf'$ is clearly admissible in the context of~$\f^{( j' )} ( \cdot )$ (as activation matrices we may take \smash{$( D'_j )_{j = 1}^{j' - 1}$} along with any diagonal matrix $D'_{j'} \in \R^{d_{j'} , d_{j'}}$ whose diagonal elements are in~$\{ \alpha , \alphabar \}$, and as differentiability region we can simply use~$\D_{\thetabf'}$).
Consider now the case where $\thetabf'$ is non-nullifying, \ie~where \smash{$\f^{( j' - 1 )} ( \thetabf' ) \neq \0$}.
We refer to~$\thetabf'$ as \emph{regular} if all entries of \smash{$W'_{j'} \f^{( j' - 1 )} ( \thetabf' )$} are non-zero, with \smash{$W'_{j'} \in \R^{d_{j'} , d_{j' - 1}}$} denoting the value of weight matrix~$j'$ held in~$\thetabf'$.
If~$\thetabf'$ is regular then it is admissible in the context of~$\f^{( j' )} ( \cdot )$.
To see this, note that a valid choice of activation matrices is \smash{$( D'_j )_{j = 1}^{j' - 1}$} along with the diagonal matrix $D'_{j'} \in \R^{d_{j'} , d_{j'}}$ whose diagonal elements corresponding to positive entries of \smash{$W'_{j'} \f^{( j' - 1 )} ( \thetabf' )$} hold~$\alpha$, and those corresponding to negative entries hold~$\alphabar$.
From continuity, and homogeneity with slopes $\alpha$ and~$\alphabar$ of the activation function~$\sigma ( \cdot )$, there exists an open neighborhood of~$\thetabf'$ (subset of~$\D_{\thetabf'}$) on which conditions \emph{(i)} and~\emph{(ii)} hold.
Extending this neighborhood to include, for each of its weight settings~$\thetabf$, all positive rescalings of weight matrices $W_1 , W_2 , ...\, , W_n$, yields a valid differentiability region for~$\thetabf'$ in the context of~$\f^{( j' )} ( \cdot )$, thereby confirming admissibility.

We conclude the proof by showing that almost every $\thetabf' \in \R^d$ is admissible in the context of~$\f^{( j' )} ( \cdot )$.
Per the above, if $\thetabf' \in \R^d$ does not meet this condition then it must either be inadmissible in the context of~$\f^{( j' - 1 )} ( \cdot )$, or be non-nullifying and irregular.
By our inductive hypothesis, weight settings inadmissible in the context of~$\f^{( j' - 1 )} ( \cdot )$ form a set of measure zero, so it suffices to show that the collection of non-nullifying and irregular weight settings, denoted~$\CC$, is also of measure zero.
Note that whether a weight setting~$\thetabf$ is nullifying (\ie~$\f^{( j' - 1 )} ( \thetabf ) = \0$) or not depends only on the weight matrices $W_1 , W_2 , ...\, , W_{j' - 1}$, and given these matrices, whether it is regular (\ie~all entries of \smash{$W'_{j'} \f^{( j' - 1 )} ( \thetabf' )$} are non-zero) or not depends only on~$W_{j'}$.
We may thus apply Fubini's Theorem (\cf~\cite{royden1988real}), and compute the measure of~$\CC$ by integrating over non-nullifying configurations of $W_1 , W_2 , ...\, , W_{j' - 1}$, where for each, the measure of values for $W_{j'} , W_{j' + 1} , ...\, , W_n$ leading to irregularity is integrated.
The latter measure is zero, since for any $\0 \neq \q \in \R^{d_{j' - 1}}$, the set $\big\{ W \in \R^{d_{j'} , d_{j' - 1}} : \text{there exists a coordinate of~$W \q$ equal to zero} \big\}$ has measure zero, thus its Cartesian product with $\R^{d_{j' + 1} , d_{j'}} \times \R^{d_{j' + 2} , d_{j' + 1}} \times \cdots \times \R^{d_n , d_{n - 1}}$ is also of measure zero.
This implies that~$\CC$ has measure zero, thereby completing the proof.
\end{proof}

\begin{proposition}[regions of differentiability for convolutional architectures]
\label{prop:region_cnn}
Consider a neural network with weight sharing and sparsity as defined in Equation~\eqref{eq:cnn} (and surrounding text), and assume that its (homogeneous) activation function is non-linear, \ie~$\sigma ( z ) = \alpha \max \{ z , 0 \} - \alphabar \max \{ -z , 0 \}$ for some $\alpha , \alphabar \in \R$, $\alpha \neq \alphabar$.
Then, for almost every (in the sense of Lebesgue measure) $\thetabf' \in \R^d$, there exist diagonal matrices $D'_{i , j} \in \R^{d_j , d_j}$, $i = 1 , 2 , ...\, , | \S |$, $j = 1 , 2 , ...\, , n - 1$, with diagonal elements in~$\{ \alpha , \alphabar \}$, such that the training loss~$f ( \cdot )$ (Equation~\eqref{eq:train_loss}) coincides with the function defined in Equation~\eqref{eq:train_loss_cnn_region} on an open region~$\D_{\thetabf'} \subseteq \R^d$ containing~$\thetabf'$, that is closed under positive rescaling of weight vectors (\ie~under $( \w_1 , \w_2 , ...\, , \w_n ) \mapsto ( c_1 \w_1 , c_2 \w_2 , ...\, , c_n \w_n )$ with~$c_1 , c_2 , ...\, , c_n > 0$).
\end{proposition}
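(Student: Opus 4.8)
The plan is to mimic, almost line for line, the proof of Proposition~\ref{prop:region_fnn}, the only structural change being that each weight matrix $W_j$ is now the image of the \emph{linear} map $\w_j \mapsto W_j ( \w_j )$ (linear because each entry of $W_j ( \cdot )$ is either $0$ or a single coordinate of $\w_j$, with no coordinate repeated within a row). First I would reduce to $| \S | = 1$ exactly as before: if the statement holds for one labeled input, applying it separately to each $( \x_i , y_i )$ produces, off a zero-measure set of weight settings, activation matrices $D^{\prime ( \x_i , y_i )}_j$ and differentiability regions $\D^{( \x_i , y_i )}_{\thetabf'}$; collecting the matrices and intersecting the (finitely many) regions handles a general training set. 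So I would assume $\S = \big( ( \x , y ) \big)$ and define recursively $\f^{( 0 )} ( \thetabf ) \equiv \x$ and $\f^{( j )} ( \thetabf ) = \sigma \big( W_j ( \w_j ) \f^{( j - 1 )} ( \thetabf ) \big)$ for $j = 1 , \ldots , n - 1$, noting $f ( \thetabf ) = \ell ( W_n ( \w_n ) \f^{( n - 1 )} ( \thetabf ) , y )$.

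Next I would prove, by induction on $j' \in \{ 0 , 1 , \ldots , n - 1 \}$, that for almost every $\thetabf' \in \R^d$ there are diagonal matrices $D'_1 , \ldots , D'_{j'}$ with entries in $\{ \alpha , \alphabar \}$ and an open, positive-rescaling-closed region $\D_{\thetabf'} \ni \thetabf'$ on which \emph{(i)} $\f^{( j' )} ( \cdot )$ coincides with $\thetabf \mapsto D'_{j'} W_{j'} ( \w_{j'} ) D'_{j' - 1} W_{j' - 1} ( \w_{j' - 1} ) \cdots D'_1 W_1 ( \w_1 ) \x$, and \emph{(ii)} each coordinate of $\f^{( j' )} ( \cdot )$ is either nowhere zero or identically zero. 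The base case $j' = 0$ is trivial, and $j' = n - 1$ gives the proposition since $f ( \thetabf ) = \ell ( W_n ( \w_n ) \f^{( n - 1 )} ( \thetabf ) , y )$. For the induction step, fix an admissible $\thetabf'$ for $\f^{( j' - 1 )} ( \cdot )$ with data $( D'_j )_{j < j'}$, $\D_{\thetabf'}$. If $\f^{( j' - 1 )} ( \thetabf' ) = \0$ then, by \emph{(ii)} at level $j' - 1$, $\f^{( j' - 1 )} \equiv \0$ on $\D_{\thetabf'}$, so $\f^{( j' )} \equiv \0$ there and any $D'_{j'}$ works. Otherwise, call a coordinate $m$ of layer $j'$ \emph{active} if the linear functional $\w_{j'} \mapsto \big( W_{j'} ( \w_{j'} ) \f^{( j' - 1 )} ( \thetabf' ) \big)_m$ is not identically zero; since \emph{(ii)} at level $j' - 1$ freezes the zero-pattern of $\f^{( j' - 1 )}$ throughout $\D_{\thetabf'}$, whether $m$ is active is the same for every point of $\D_{\thetabf'}$, and for a non-active $m$ that functional is identically zero on all of $\D_{\thetabf'}$, so the $m$-th coordinate of $\f^{( j' )} ( \cdot )$ vanishes identically there (as $\sigma ( 0 ) = 0$). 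I would then call $\thetabf'$ \emph{regular} if $\big( W_{j'} ( \w_{j'} ) \f^{( j' - 1 )} ( \thetabf' ) \big)_m \neq 0$ for every active $m$; for a regular $\thetabf'$, taking $D'_{j'}$ to carry $\alpha$ (resp.~$\alphabar$) on the coordinates where this quantity is positive (resp.~negative) and an arbitrary value on the non-active ones, continuity together with the homogeneity of $\sigma$ with slopes $\alpha , \alphabar$ yields a neighborhood of $\thetabf'$ inside $\D_{\thetabf'}$ on which \emph{(i)} and \emph{(ii)} hold; extending it under positive rescaling of $\w_1 , \ldots , \w_n$ — which preserves all signs, by positive homogeneity of $\sigma$ and linearity of each $W_j ( \cdot )$, and keeps us inside the rescaling-closed $\D_{\thetabf'}$ — gives a valid differentiability region, so $\thetabf'$ is admissible for $\f^{( j' )} ( \cdot )$.

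It then remains to see that the non-admissible set is null. By the above, a non-admissible $\thetabf'$ is either non-admissible for $\f^{( j' - 1 )} ( \cdot )$ (a null set by induction) or non-nullifying and irregular. Whether a weight setting is nullifying depends only on $\w_1 , \ldots , \w_{j' - 1}$, and for a fixed non-nullifying configuration of those (so $\q := \f^{( j' - 1 )} ( \thetabf )$ is a fixed non-zero vector) irregularity depends only on $\w_{j'}$: it means $\big( W_{j'} ( \w_{j'} ) \q \big)_m = 0$ for some active $m$, i.e.~$\w_{j'}$ lies in one of finitely many hyperplanes, each \emph{proper} precisely because $m$ is active — a set of measure zero. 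Applying Fubini's Theorem (\cf~\cite{royden1988real}) exactly as in Proposition~\ref{prop:region_fnn}, integrating the (zero) measure of irregular $\w_{j'}$ over non-nullifying $\w_1 , \ldots , \w_{j' - 1}$ and all of $\w_{j' + 1} , \ldots , \w_n$, shows the irregular non-nullifying set is null, completing the induction and the proof. I expect the only genuinely new point relative to the fully connected case to be the bookkeeping around non-active coordinates (including rows of $W_{j'} ( \cdot )$ that are identically zero): one must check both that they contribute identically-zero coordinates of $\f^{( j' )}$ on all of $\D_{\thetabf'}$ and that omitting them from the definition of regularity still confines the irregular set to a finite union of \emph{proper} hyperplanes. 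Once this is in place, the linearity (hence positive homogeneity) of $W_j ( \cdot )$ makes every remaining step identical to the fully connected argument.
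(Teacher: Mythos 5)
Your proposal is correct and follows essentially the same route as the paper's proof: reduction to a single labeled input, induction over layers with a sign-based choice of $D'_{j'}$, a regularity notion restricted to those output coordinates on which $\w_{j'} \mapsto \big( W_{j'} ( \w_{j'} ) \f^{( j' - 1 )} ( \thetabf ) \big)_m$ is not identically zero (what you call \emph{active} the paper calls \emph{feasible}, encoding it via the nullity pattern $\e$ and the vector $W_{j'} ( \1 ) \e$), and a Fubini argument confining irregularity to finitely many proper hyperplanes in $\w_{j'}$. The only detail the paper spells out that you elide is the measurability of the irregular set before invoking Fubini, which is routine.
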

\begin{proof}
The proof begins similarly to that of Proposition~\ref{prop:region_fnn}, and then takes a slightly different (more involved) route.
We provide a self-contained presentation, repeating details from the proof of Proposition~\ref{prop:region_fnn} as needed.

\medskip

If for $\thetabf' \in \R^d$ there exist diagonal matrices~$( D'_{i , j} )_{i , j}$ and an open region~$\D_{\thetabf'}$ as in proposition statement, then we refer to $\thetabf'$ as an \emph{admissible} weight setting, to $( D'_{i , j} )_{i , j}$ as its \emph{activation matrices}, and to $\D_{\thetabf'}$ as its \emph{differentiability region}.\textsuperscript{\ref{note:mat_region_non_unique}}

\medskip

Without loss of generality, we may assume $| \S | = 1$, \ie~that the training set comprises a single labeled input $( \x , y ) \in \R^{d_0} \times \Y$, meaning the training loss takes the form $f ( \thetabf ) = \ell ( h_\thetabf ( \x ) , y )$.
To see this, assume the sought-after result holds for a single labeled input, and suppose~$| \S | \,{>}\, 1$.
We may then apply the result separately for each labeled input~$( \x_i , y_i )$, $i = 1 , 2 , ...\, , | \S |$, and obtain, for every admissible $\thetabf' \in \R^d$, activation matrices $( D^{\prime \, {\scriptscriptstyle  ( \x_i , y_i )}}_j )_{j = 1}^{n - 1}$ and a differentiability region~$\D^{\scriptscriptstyle ( \x_i , y_i )}_{\thetabf'}$.
Since the weight settings not admissible for a certain labeled input~$( \x_i , y_i )$ form a set of zero (Lebesgue) measure, those not admissible for any of the \smash{$| \S |$}~labeled inputs also constitute a zero measure set.
That is, almost every $\thetabf' \in \R^d$ is jointly admissible for all \smash{$\big( ( \x_i , y_i ) \big)_{i = 1}^{| \S |}$}.
Given such~$\thetabf'$, consider the activation matrices and differentiability regions obtained for the different labeled inputs~---~$( D^{\prime \, {\scriptscriptstyle  ( \x_i , y_i )}}_j )_{j = 1}^{n - 1}$ and $\D^{\scriptscriptstyle ( \x_i , y_i )}_{\thetabf'}$, $i = 1 , 2 , ...\, , | \S |$.
Defining $D'_{i , j} := D^{\prime \, {\scriptscriptstyle  ( \x_i , y_i )}}_j$, $i = 1 , 2 , ...\, , | \S |$, $j = 1 , 2 , ...\, , n - 1$, and \smash{$\D_{\thetabf'} := \cap_{i = 1}^{| \S |} \D^{\scriptscriptstyle ( \x_i , y_i )}_{\thetabf'}$}, we have that~$\thetabf'$ is admissible for~$\S$, with activation matrices~$( D'_{i , j} )_{i , j}$ and differentiability region~$\D_{\thetabf'}$.
The sought-after result thus holds for~$\S$.

\medskip

In light of the above, we assume hereafter that $\S = \big( ( \x , y ) \big)$.
Recursively define the functions $\f^{( j )} : \R^d \to \R^{d_j}$, $j = 0 , 1 , ... \, , n - 1$:
\[
\f^{( 0 )} ( \thetabf ) \equiv \x
\quad , \quad
\f^{( j )} ( \thetabf ) = \sigma \big( W_j ( \w_j ) \f^{( j - 1 )} ( \thetabf ) \big)
~~\text{for $j = 1 , 2 , ... \, , n - 1$}
\text{\,.}
\]
We will prove by induction that given $j' \in \{ 0 , 1 , ... \, , n - 1 \}$, for almost every $\thetabf' \in \R^d$, there exist diagonal matrices $D'_j \in \R^{d_j , d_j}$, $j = 1 , 2 , ...\, , j'$, with diagonal elements in~$\{ \alpha , \alphabar \}$, such that $\f^{( j' )} ( \cdot )$ meets the following conditions on an open region~$\D_{\thetabf'} \subseteq \R^d$ containing~$\thetabf'$, that is closed under positive rescaling of weight vectors:
\begin{enumerate}[label=\emph{(\roman*)}]
\item $\f^{( j' )} ( \cdot )$~coincides with the function $\thetabf \, {\mapsto} \, D'_{j'} W_{j'} ( \w_{j'} ) D'_{j' \text{-} 1} W_{j' \text{-} 1} ( \w_{j' \text{-} 1} ) \cdot\cdot\cdot D'_1 W_1 ( \w_1 ) \x$;
and
\vspace{-5mm}
\item each entry of $\f^{( j' )} ( \cdot )$ is either nowhere zero or identically zero.
\end{enumerate}
Continuing the terminology defined earlier, in the context of~$\f^{( j' )} ( \cdot )$, $j' \, {=} \, 0 , 1 , ... \, , n - 1$, we refer~to $\thetabf'$\hspace{-0.25mm}, $( \hspace{-0.25mm} D'_j )_j$ and~$\D_{\thetabf'}$ satisfying the above as \emph{admissible}, \emph{activation matrices} and \emph{differentiability region}, respectively.
Note that the training loss~$f ( \cdot )$ can be expressed as \smash{$f ( \thetabf ) \, {=} \, \ell ( W_n ( \w_n ) \f^{( n - 1 )} ( \thetabf ) , y )$}, and therefore proving the inductive hypothesis for $j' = n - 1$ yields the desired result.
The base case for the induction ($j' = 0$) is trivial, so all that remains is to establish the induction step.

\medskip

Given $j' \in \{ 1 , 2 , ... \, , n - 1 \}$, assume that the inductive hypothesis holds for~$j' - 1$, and in the context of~$\f^{( j' - 1 )} ( \cdot )$, let $\thetabf'$ be an admissible weight setting, with corresponding activation matrices~\smash{$( D'_j )_{j = 1}^{j' - 1}$} and differentiability region~$\D_{\thetabf'}$.
We define the \emph{nullity pattern} of~$\thetabf'$ to be the vector~$\e \in$ $\R^{d_{j' - 1}}$ holding zero in the coordinates where $\f^{( j' - 1 )} ( \thetabf' )$ holds zero, and one elsewhere (that is, $\e$~is the vector obtained by setting to one all non-zero entries of~$\f^{( j' - 1 )} ( \thetabf' )$).
With \smash{$\1 \in \R^{d'_{j'}}$} standing for an all-ones vector, we refer to the coordinates of~$\R^{d_{j'}}$ where $W_{j'} ( \1 ) \e$ holds zero as \emph{infeasible}, and to the rest as \emph{feasible}.
Note that a coordinate of~$\R^{d_{j'}}$ is infeasible if and only if $W_{j'} ( \q ) \f^{( j' - 1 )} ( \thetabf' )$ holds zero in that coordinate for all \smash{$\q \in \R^{d'_{j'}}$}.
We shall say that~$\thetabf'$ is \emph{regular} if \smash{$W_{j'} ( \w'_{j'} \hspace{-0.25mm} ) \f^{( j' - 1 )} ( \thetabf' )$} is non-zero in all feasible coordinates, where \smash{$\w'_{\hspace{-0.5mm} j'} \hspace{0.25mm} {\in} \hspace{0.25mm} \R^{\hspace{-0.25mm} d'_{\hspace{-0.25mm} j'}}$} denotes the value of weight vector~$j'$ in~$\thetabf'$.
Hereafter we show that regularity of $\thetabf'$ implies that it is admissible in the context of~$\f^{( j' )} ( \cdot )$.
By admissibility in the context of~$\f^{( j' - 1 )} ( \cdot )$ we have that across~$\D_{\thetabf'}$, each entry of $\f^{( j' - 1 )} ( \cdot )$ is either nowhere zero or identically zero.
This implies the nullity pattern is constant across~$\D_{\thetabf'}$, which in turn means the same for the set of infeasible coordinates.
The coordinates where \smash{$W_{j'} ( \w'_{j'} ) \f^{( j' - 1 )} ( \thetabf' )$} holds zero thus vanish in \smash{$W_{j'} ( \w_{j'} ) \f^{( j' - 1 )} ( \thetabf )$} for all $\thetabf \in \D_{\thetabf'}$.
From continuity, and the fact that around any $z \neq 0$, the activation function~$\sigma ( \cdot )$ is either nowhere zero or identically zero,\note{%
The latter is possible only if $\alpha = 0$ or $\alphabar = 0$.
}
it follows that there exists an open neighborhood $\NN \subseteq \D_{\thetabf'}$ of~$\thetabf'$ on which condition~\emph{(ii)} holds.
Let $D'_{j'} \in \R^{d_{j'} , d_{j'}}$ be a diagonal matrix whose diagonal elements corresponding to positive entries in~\smash{$W_{j'} ( \w'_{j'} ) \f^{( j' - 1 )} ( \thetabf' )$} hold~$\alpha$, those corresponding to negative entries hold~$\alphabar$, and the rest hold either $\alpha$ or~$\alphabar$.
Since $\f^{( j' - 1 )} ( \cdot )$ coincides with the function $\thetabf \, {\mapsto} \, D'_{j' \text{-} 1} W_{j' \text{-} 1} ( \w_{j' \text{-} 1} ) D'_{j' \text{-} 2} W_{j' \text{-} 2} ( \w_{j' \text{-} 2} ) \cdot\cdot\cdot D'_1 W_1 ( \w_1 ) \x$ on~$\D_{\thetabf'}$, and since $\sigma ( \cdot )$ is homogeneous with slopes $\alpha$ and~$\alphabar$, condition~\emph{(i)} holds across~$\NN$.
Consider the extension of~$\NN$ comprising, for each of its weight settings, all positive rescalings of weight vectors.
Along with \smash{$( D'_j )_{j = 1}^{j'}$} as activation matrices, this extension serves as a valid differentiability region for $\thetabf'$ in the context of~$\f^{( j' )} ( \cdot )$.
The sought-after admissibility is thus established.

We conclude the proof by showing that almost every $\thetabf' \in \R^d$ is admissible in the context of~$\f^{( j' )} ( \cdot )$.
Per the above, if $\thetabf' \in \R^d$ does not meet this condition then either it is inadmissible in the context of~$\f^{( j' - 1 )} ( \cdot )$, or it is irregular.
By our inductive hypothesis, weight settings inadmissible in the context of~$\f^{( j' - 1 )} ( \cdot )$ form a set of measure zero, so it suffices to show that the collection of irregular weight settings, denoted~$\CC$, is also of measure zero.
We first establish that $\CC$ is measurable.
Let $\e \in \R^{d_{j' - 1}}$ be an arbitrary nullity pattern (vector with entries in~$\{ 0 , 1 \}$), and consider the feasible coordinates it induces.
The following two sets are measurable:
weight settings with nullity pattern~$\e$;
and
weight settings~$\thetabf$ for which \smash{$W_{j'} ( \w_{j'} ) \f^{( j' - 1 )} ( \thetabf )$} holds zero in at least one of the feasible coordinates induced by~$\e$.
The collection of irregular weight settings with nullity pattern~$\e$, denoted~$\CC_\e$, is equal to the intersection of these two sets, and therefore is measurable.
Taking union of~$\CC_\e$ with~$\e$ ranging over all (finitely many) possible nullity patterns yields~$\CC$, from which it follows that the latter is indeed measurable.
Given weight vectors $\w_1 , \w_2 , ...\, , \w_{j' - 1}$, whether or not a weight setting~$\thetabf$ is regular depends only on~$\w_{j'}$.
We may thus apply Fubini's Theorem (\cf~\cite{royden1988real}), and compute the measure of~$\CC$ by integrating over configurations of $\w_1 , \w_2 , ...\, , \w_{j' - 1}$, where for each, the measure of values for $\w_{j'} , \w_{j' + 1} , ...\, , \w_n$ leading to irregularity is integrated.
We now establish that the latter measure is zero, which in turn implies that $\CC$ has measure zero (thereby completing the proof).
Since the Cartesian product of a zero measure subset of~\smash{$\R^{d'_{j'}}$} with \smash{$\R^{d'_{j' + 1}} \, {\times} \, \R^{d'_{j' + 2}} \, {\times} \cdots {\times} \, \R^{d'_n}$} has zero measure, it suffices to show that given any configuration of $\w_1 , \w_2 , ...\, , \w_{j' - 1}$, the measure of values for~$\w_{j'}$ leading to irregularity is zero.
$\w_1 , \w_2 , ...\, , \w_{j' - 1}$ fully determine~$\f^{( j' - 1 )} ( \thetabf )$, and as a consequence, the nullity pattern of~$\thetabf$.
Consider the feasible coordinates induced by this nullity pattern.
On each of these, the linear function $\w_{j'} \mapsto W_{j'} ( \w_{j'} ) \f^{( j' - 1 )} ( \thetabf )$ is not identically zero.
The measure of values for~$\w_{j'}$ leading $W_{j'} ( \w_{j'} ) \hspace{0.25mm} \f^{( j' - 1 )} ( \thetabf )$ to vanish in a feasible coordinate, \ie~leading $\thetabf$ to be irregular, is thus zero.
This completes the proof.
\end{proof}

\section{Necessity of Assumptions in Propositions \ref{prop:lnn_hess_arbitrary_neg}, \ref{prop:fnn_hess_arbitrary_neg} and~\ref{prop:cnn_hess_arbitrary_neg}} \label{app:necessity}

In this appendix we prove that the assumptions in Propositions \ref{prop:lnn_hess_arbitrary_neg}, \ref{prop:fnn_hess_arbitrary_neg} and~\ref{prop:cnn_hess_arbitrary_neg} are necessary, in the sense that each of the latter becomes false if any of its assumptions are removed (and no further assumptions are imposed).

\begin{claim}[necessity of assumptions in Proposition~\ref{prop:lnn_hess_arbitrary_neg}]
\label{claim:necessity_lnn}
In the context of Proposition~\ref{prop:lnn_hess_arbitrary_neg}, if the network is shallow ($n = 2$) or the zero mapping is a global minimizer of the training loss (meaning $\nabla \phi ( 0 ) = 0$), then the stated result may not hold, \ie~it may be that $\inf_{\thetabf \in \R^d} \lambda_{min} ( \nabla^2 f ( \thetabf ) ) > -\infty$.
\end{claim}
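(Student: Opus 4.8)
The plan is to prove the claim by exhibiting two explicit counterexamples, one for each way of weakening Proposition~\ref{prop:lnn_hess_arbitrary_neg}: in both we arrange $\inf_{\thetabf\in\R^d}\lambda_{min}(\nabla^2 f(\thetabf))>-\infty$, once with a shallow network ($n=2$, dropping the depth hypothesis) and once with $\nabla\phi(0)=\0$ (dropping the non-degeneracy hypothesis).

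First I would handle the shallow case $n=2$. I would invoke Lemma~\ref{lemma:lnn_hess} and specialize Equation~\eqref{eq:lnn_hess} to $n=2$, where the double sum over $1\le j<j'\le n$ collapses to the single term $\Delta W_2\Delta W_1$, so that
\[
\nabla^2 f(\thetabf)[\Delta W_1,\Delta W_2]=\nabla^2\phi(W_{2:1})\big[W_2\Delta W_1+\Delta W_2 W_1\big]+2\Tr\!\big(\nabla\phi(W_{2:1})^\top\Delta W_2\Delta W_1\big).
\]
The first summand is non-negative by convexity of~$\phi(\cdot)$, and I would lower bound the second by $-\norm{\nabla\phi(W_{2:1})}_{Frobenius}\big(\norm{\Delta W_1}_{Frobenius}^2+\norm{\Delta W_2}_{Frobenius}^2\big)=-\norm{\nabla\phi(W_{2:1})}_{Frobenius}\norm{\Delta\thetabf}_2^2$ using submultiplicativity of the Frobenius norm and $2ab\le a^2+b^2$. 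This yields $\lambda_{min}(\nabla^2 f(\thetabf))\ge-\sup_{W}\norm{\nabla\phi(W)}_{Frobenius}$ for every~$\thetabf$, so it remains to pick a convex, twice continuously differentiable loss with bounded gradient that is nonzero at the origin. I would take the scalar network $d_0=d_1=d_2=1$ with the single training example $(\x_1,y_1)=(1,1)$ and the logistic loss, for which $\phi(W)=\ln(1+e^{-W})$ satisfies $\abs{\nabla\phi(W)}<1$ everywhere and $\nabla\phi(0)=-\tfrac12\neq0$; then $\inf_{\thetabf}\lambda_{min}(\nabla^2 f(\thetabf))\ge-1>-\infty$, which is exactly the asserted failure of Proposition~\ref{prop:lnn_hess_arbitrary_neg}'s conclusion.

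Next I would handle the case $\nabla\phi(0)=\0$, and here the plan is simply to make the training loss constant: taking a single training example with zero input vector, $\x_1=\0$ (with $y_1$ and $\ell(\cdot)$ arbitrary), forces $\phi(\cdot)$ and hence $f(\cdot)$ to be constant, so $\nabla\phi\equiv\0$ (in particular $\nabla\phi(0)=\0$) while $\nabla^2 f\equiv0$, giving $\inf_{\thetabf}\lambda_{min}(\nabla^2 f(\thetabf))=0>-\infty$.

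I expect no genuine technical obstacle~---~both verifications are elementary~---~the only point worth flagging is \emph{why} a degenerate loss is the natural choice in the second case. In Equation~\eqref{eq:lnn_hess} the sole mechanism that can drive $\lambda_{min}$ of the Hessian negative is the cross term, and it couples $\nabla\phi(W_{n:1})$ with the \emph{products of adjacent weight matrices} $W_{j'-1:j+1}$; for $n\ge3$ those middle products have unbounded operator norm, so along rays such as $\thetabf=(t,t,\dots,t)$ one still finds $\lambda_{min}(\nabla^2 f(\thetabf))\to-\infty$ for \emph{any} non-constant deep linear loss (e.g.\ the square loss with $\Lambda_{yx}=\0$), irrespective of whether $\nabla\phi(0)=\0$. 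What rescues us is that these middle products disappear entirely when $n=2$, leaving only the cheap estimate above, and that the cross term vanishes outright when the loss is constant~---~and these are precisely the two features the counterexamples exploit.
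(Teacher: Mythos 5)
Your proposal is correct and follows essentially the same route as the paper's proof: for $n=2$ you specialize Lemma~\ref{lemma:lnn_hess}, discard the non-negative $\nabla^2\phi$ term by convexity, bound the cross term via submultiplicativity and $2ab\le a^2+b^2$ to get $\lambda_{min}(\nabla^2 f(\thetabf))\ge-\|\nabla\phi(W_{2:1})\|_{Frobenius}$, and then invoke a bounded-gradient loss (logistic); for $\nabla\phi(0)=\0$ you use zero training inputs to make $f(\cdot)$ constant, exactly as the paper does. Your concrete scalar instantiation and the closing remark explaining why a constant loss is forced in the second case are nice additions but do not change the argument.
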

\begin{proof}
Suppose the network is shallow ($n = 2$).
With the notations of Lemma~\ref{lemma:lnn_hess}, for any $\thetabf \in \R^d$, $( \Delta W_1 , \Delta W_2 ) \in \R^{d_1 , d_0} \times \R^{d_2 , d_1}$:
\beas
\nabla^2 f ( \thetabf ) \, [ \Delta W_1 , \Delta W_2 ] 
= 
\nabla^2 \phi ( W_{2 : 1} ) \left[ W_2 ( \Delta W_1 ) \, {+} \, ( \Delta W_2 ) W_1 \right] + 2 \Tr \left( \hspace{-0.25mm} \nabla \phi ( W_{2 : 1} )\hspace{-0.5mm}^\top \hspace{-0.5mm} ( \Delta W_2 ) ( \Delta W_1 ) \hspace{-0.25mm} \right)
\\[-2mm]
\geq 
2 \Tr \left( \hspace{-0.25mm} \nabla \phi ( W_{2 : 1} )\hspace{-0.5mm}^\top \hspace{-0.5mm} ( \Delta W_2 ) ( \Delta W_1 ) \hspace{-0.25mm} \right)
\hspace{63mm}\\[-0.5mm]
\geq
- 2 \| \nabla \phi ( W_{2 : 1} ) \|_{Frobenius} \| ( \Delta W_2 ) ( \Delta W_1 ) \|_{Frobenius}
\hspace{36mm}\\
\geq - 2 \| \nabla \phi ( W_{2 : 1} ) \|_{Frobenius} \| \Delta W_2 \|_{Frobenius} \| \Delta W_1 \|_{Frobenius}
\hspace{24mm}\\
\geq - \| \nabla \phi ( W_{2 : 1} ) \|_{Frobenius} \big( \| \Delta W_2 \|^2_{Frobenius} + \| \Delta W_1 \|^2_{Frobenius} \big)
\hspace{18mm}\\
= - \| \nabla \phi ( W_{2 : 1} ) \|_{Frobenius} \| ( \Delta W_1 , \Delta W_2 ) \|^2_{Frobenius}
\text{\,,}
\hspace{37mm}
\eeas
where the first transition follows from Lemma~\ref{lemma:lnn_hess}, the second holds since $\phi ( \cdot )$ is convex, the third is an application of the Cauchy-Schwarz inequality, the fourth follows from submultiplicativity of the Frobenius norm, and the latter two are based on simple arithmetics.
It follows from the above that $\lambda_{min} ( \nabla^2 f ( \thetabf ) ) \geq - \| \nabla \phi ( W_{2 : 1} ) \|_{Frobenius}$.
Therefore if $\nabla \phi ( \cdot )$ is bounded (\eg~if $\ell ( \cdot )$ is the logistic loss~---~see Equation~\eqref{eq:train_loss_e2e}) we will have $\inf_{\thetabf \in \R^d} \lambda_{min} ( \nabla^2 f ( \thetabf ) ) > -\infty$, as required.

It remains to show that if the zero mapping is a global minimizer of the training loss (meaning $\nabla \phi ( 0 ) = 0$), then, regardless of network depth (\ie~with either $n \geq 3$ or $n = 2$), it may be that $\inf_{\thetabf \in \R^d} \lambda_{min} ( \nabla^2 f ( \thetabf ) ) > -\infty$.
This is trivial~---~simply consider the case where the training set~$\S$ is such that $\x_i = \0$ for all $i = 1 , 2 , \ldots , | \S |$.
The training loss in this case is constant (see Equations \eqref{eq:fnn} and~\eqref{eq:train_loss}), implying $\inf_{\thetabf \in \R^d} \lambda_{min} ( \nabla^2 f ( \thetabf ) ) = 0$.
\end{proof}

\begin{claim}[necessity of assumptions in Proposition~\ref{prop:fnn_hess_arbitrary_neg}]
\label{claim:necessity_fnn}
In the context of Proposition~\ref{prop:fnn_hess_arbitrary_neg}, if assumptions \emph{(i)} or~\emph{(ii)} are not satisfied, then the stated result may not hold, \ie~it may be that $\inf_{\thetabf \in \R^d~\emph{s.t.}\,\nabla^2 f ( \thetabf )~\emph{exists}} \lambda_{min} ( \nabla^2 f ( \thetabf ) ) > -\infty$.
\end{claim}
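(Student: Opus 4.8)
The plan is to follow the template of the proof of Claim~\ref{claim:necessity_lnn}, handling the two assumptions separately. For assumption~\emph{(i)} I would exhibit a shallow instance ($n=2$) that still satisfies~\emph{(ii)} yet has $\inf_{\thetabf}\lambda_{min}(\nabla^2 f(\thetabf)) > -\infty$; for assumption~\emph{(ii)} I would exhibit a deep instance ($n\geq 3$) that violates~\emph{(ii)} and again has a minimal Hessian eigenvalue bounded below. Throughout, the qualifier ``$\nabla^2 f(\thetabf)$ exists'' is dealt with by restricting attention to the differentiability regions $\D_{\thetabf'}$ of Proposition~\ref{prop:region_fnn}, on which Lemma~\ref{lemma:fnn_region_hess} applies and which together exhaust $\R^d$ up to a null set.

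For the shallow case, fix $n=2$, take $\ell$ to be the logistic loss, whose gradient in its first argument is globally bounded, say by $L>0$, and choose a training set with $\x_1\neq\0$, a label $y_1$ for which $\nabla\ell(\0,y_1)\neq\0$, and weights realizing $h_\thetabf(\x_1)\neq\0$, so that assumption~\emph{(ii)} holds. On any region $\D_{\thetabf'}$, Lemma~\ref{lemma:fnn_region_hess} with $n=2$ has its second sum running over the single pair $(j,j')=(1,2)$, and the index conventions collapse that summand to $(\Delta W_2)D'_{i,1}(\Delta W_1)\x_i$; convexity of $\ell$ makes the first sum nonnegative, so
\[
\nabla^2 f(\thetabf)[\Delta W_1,\Delta W_2] \; \geq \; \frac{2}{|\S|}\sum\nolimits_{i=1}^{|\S|}\nabla\ell_i^\top(\Delta W_2)D'_{i,1}(\Delta W_1)\x_i .
\]
Bounding each term by Cauchy--Schwarz, submultiplicativity of the Frobenius norm, $\|D'_{i,1}\|_{spectral}\leq\max\{|\alpha|,|\alphabar|\}$, and AM--GM ($\|\Delta W_1\|_{Frobenius}\|\Delta W_2\|_{Frobenius}\leq\tfrac{1}{2}\|\Delta\thetabf\|_2^2$) gives
\[
\lambda_{min}(\nabla^2 f(\thetabf)) \; \geq \; -\,\frac{\max\{|\alpha|,|\alphabar|\}}{|\S|}\sum\nolimits_{i=1}^{|\S|}\|\nabla\ell_i\|_2\|\x_i\|_2 \; \geq \; -\,\max\{|\alpha|,|\alphabar|\}\,L\,\max\nolimits_i\|\x_i\|_2 ,
\]
a constant independent of $\thetabf$, so $\inf_{\thetabf}\lambda_{min}(\nabla^2 f(\thetabf)) > -\infty$ and the proposition fails once~\emph{(i)} is dropped.

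For the degenerate case, keep $n\geq 3$ but take a training set with $\x_i=\0$ for every $i$. By homogeneity $\sigma(\0)=\0$, hence $h_\thetabf(\x_i)=\0$ for all $\thetabf$ and all $i$, so $\sum_i\nabla\ell(\0,y_i)^\top h_\thetabf(\x_i)=0$ identically, i.e.\ assumption~\emph{(ii)} fails; moreover $f(\thetabf)=\tfrac{1}{|\S|}\sum_i\ell(\0,y_i)$ is constant, so $\nabla^2 f\equiv 0$ and $\inf_{\thetabf}\lambda_{min}(\nabla^2 f(\thetabf))=0>-\infty$. The routine parts are the index bookkeeping in Lemma~\ref{lemma:fnn_region_hess} for $n=2$ and the standard matrix inequalities; the one genuinely delicate point is arranging the shallow instance so that it \emph{does} satisfy~\emph{(ii)} (so that the failure is attributable to removing~\emph{(i)} alone) while keeping $\|\nabla\ell_i\|_2$ uniformly bounded — which is why a bounded-gradient convex loss such as the logistic loss is used rather than, say, the square loss.
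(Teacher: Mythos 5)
Your proposal is correct and follows essentially the same route as the paper: the same lower bound from Lemma~\ref{lemma:fnn_region_hess} with a bounded-gradient convex loss (logistic) for the shallow case, and the all-zero-inputs construction for the degenerate case. The only detail worth spelling out is how the uniform bound on the regions $\D_{\thetabf'}$ transfers to \emph{every} $\thetabf$ at which $\nabla^2 f(\thetabf)$ exists (the paper does this via continuity of $\lambda_{min}(\nabla^2 f(\cdot))$ on a neighborhood of any such point, combined with the regions being dense); your otherwise sensible addition of making the shallow example satisfy assumption~\emph{(ii)} is not strictly required by the claim.
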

\begin{proof}
Suppose that assumption~\emph{(i)} is not satisfied, \ie~that the network is shallow ($n = 2$).
With the notations of Lemma~\ref{lemma:fnn_region_hess}, for any $\thetabf \in \D_{\thetabf'}$, $( \Delta W_1 , \Delta W_2 ) \in \R^{d_1 , d_0} \times \R^{d_2 , d_1}$:
\beas
&& \nabla^2 f ( \thetabf ) \, [ \Delta W_1 , \Delta W_2 ] 
= 
\frac{1}{| \S |} \sum\nolimits_{i = 1}^{| \S |} \nabla^2 \ell_i \big[ W_2 D'_{i , 1} ( \Delta W_1 ) \x_i + ( \Delta W_2 ) D'_{i , 1} W_1 \x_i \big] \\[-1.5mm]
&& \qquad\qquad\qquad\qquad\qquad\qquad
+ \frac{2}{| \S |} \sum\nolimits_{i = 1}^{| \S |} \nabla \ell_i^\top ( \Delta W_2 ) D'_{i , 1} ( \Delta W_1 ) \x_i \\[-1mm]
&& \qquad\qquad \geq
\frac{2}{| \S |} \sum\nolimits_{i = 1}^{| \S |} \nabla \ell_i^\top ( \Delta W_2 ) D'_{i , 1} ( \Delta W_1 ) \x_i \\[-1mm] 
&& \qquad\qquad \geq
- \frac{2}{| \S |} \sum\nolimits_{i = 1}^{| \S |} \| \nabla \ell_i \|_2 \| ( \Delta W_2 ) D'_{i , 1} ( \Delta W_1 ) \x_i \|_2 \\[-1mm] 
&& \qquad\qquad \geq
- \frac{2}{| \S |} \sum\nolimits_{i = 1}^{| \S |} \| \nabla \ell_i \|_2 \| \x_i \|_2 \| ( \Delta W_2 ) D'_{i , 1} ( \Delta W_1 ) \|_{spectral} \\[-1mm] 
&& \qquad\qquad \geq
- \frac{2}{| \S |} \sum\nolimits_{i = 1}^{| \S |} \| \nabla \ell_i \|_2 \| \x_i \|_2 \| \Delta W_2 \|_{spectral} \| D'_{i , 1} \|_{spectral} \| \Delta W_1 \|_{spectral} \\[-1mm] 
&& \qquad\qquad \geq
- \max \big\{ | \alpha | , | \alphabar | \big\} \frac{2}{| \S |} \sum\nolimits_{i = 1}^{| \S |} \| \nabla \ell_i \|_2 \| \x_i \|_2 \| \Delta W_2 \|_{spectral} \| \Delta W_1 \|_{spectral} \\[-1mm] 
&& \qquad\qquad \geq
- \max \big\{ | \alpha | , | \alphabar | \big\} \frac{2}{| \S |} \sum\nolimits_{i = 1}^{| \S |} \| \nabla \ell_i \|_2 \| \x_i \|_2 \| \Delta W_2 \|_{Frobenius} \| \Delta W_1 \|_{Frobenius} \\[-1mm]
&& \qquad\qquad \geq
- \max \big\{ | \alpha | , | \alphabar | \big\} \frac{1}{| \S |} \sum\nolimits_{i = 1}^{| \S |} \| \nabla \ell_i \|_2 \| \x_i \|_2 \big( \| \Delta W_2 \|^2_{Frobenius} + \| \Delta W_1 \|^2_{Frobenius} \big) \\[-1mm]
&& \qquad\qquad =
- \max \big\{ | \alpha | , | \alphabar | \big\} \frac{1}{| \S |} \sum\nolimits_{i = 1}^{| \S |} \| \nabla \ell_i \|_2 \| \x_i \|_2 \| ( \Delta W_1 , \Delta W_2 ) \|^2_{Frobenius}
\text{\,,}
\eeas
where the first transition follows from Lemma~\ref{lemma:fnn_region_hess}, the second holds since $\ell ( \cdot )$ is convex with respect to its first argument (recall from Lemma~\ref{lemma:fnn_region_hess} that $\nabla^2 \ell_i$ is defined to be the Hessian of~$\ell ( \cdot )$ at the point $( W_2 D'_{i , 1} W_1 \x_i , y_i )$ with respect to its first argument), the third is an application of the Cauchy-Schwarz inequality, the fourth follows from the spectral norm being the operator norm induced by the Euclidean norm, the fifth is due to submultiplicativity of the spectral norm, the sixth results from~$D'_{i , 1}$ being diagonal with diagonal elements in~$\{ \alpha , \alphabar \}$, the seventh holds since spectral norm is upper bounded by Frobenius norm, and the latter two are based on simple arithmetics.
It follows from the above that $\lambda_{min} ( \nabla^2 f ( \thetabf ) ) \geq - \max \{ | \alpha | , | \alphabar | \} \frac{1}{| \S |} \sum_{i = 1}^{| \S |} \| \nabla \ell_i \|_2 \| \x_i \|_2$.
Consider the case where the gradient of~$\ell ( \cdot )$ with respect to its first argument has Euclidean norm bounded by some constant $c > 0$ (this holds, for example, if $\ell ( \cdot )$ is the logistic loss).
Recalling (from Lemma~\ref{lemma:fnn_region_hess}) that $\nabla \ell_i$ stands for this gradient at the point $( W_2 D'_{i , 1} W_1 \x_i , y_i )$, we obtain \smash{$\lambda_{min} ( \nabla^2 f ( \thetabf ) ) \geq - c \max \{ | \alpha | , | \alphabar | \} \frac{1}{| \S |} \sum_{i = 1}^{| \S |} \| \x_i \|_2$}.
The latter holds for any $\thetabf$ belonging to any region of the form~$\D_{\thetabf'}$.
Since these regions constitute the entire weight space but a zero measure set, and since by definition existence of~$\nabla^2 f ( \thetabf )$ for some $\thetabf \in \R^d$ implies that~$f ( \cdot )$ is twice continuously differentiable (and therefore $\lambda_{min} ( \nabla^2 f ( \cdot ) )$ is continuous) on a neighborhood of~$\thetabf$, it necessarily holds that \smash{$\inf\nolimits_{\thetabf \in \R^d~\text{s.t.}\,\nabla^2 f ( \thetabf )~\text{exists}} \lambda_{min} ( \nabla^2 f ( \thetabf ) ) \geq - c \max \{ | \alpha | , | \alphabar | \} \frac{1}{| \S |} \sum\nolimits_{i = 1}^{| \S |} \| \x_i \|_2 > - \infty$}.
This establishes necessity of assumption~\emph{(i)}.

It remains to show that if assumption~\emph{(ii)} is not satisfied, \ie~if $\sum_{i = 1}^{| \S |} \nabla \ell ( \0 , y_i )^\top h_\thetabf ( \x_i ) = 0$ for all $\thetabf \in \R^d$, then, regardless of whether or not assumption~\emph{(i)} holds (\ie~of whether $n \geq 3$ or $n = 2$), it may be that $\inf_{\thetabf \in \R^d~\emph{s.t.}\,\nabla^2 f ( \thetabf )~\emph{exists}} \lambda_{min} ( \nabla^2 f ( \thetabf ) ) > -\infty$.
This is trivial~---~simply consider the case where the training set~$\S$ is such that $\x_i = \0$ for all $i = 1 , 2 , \ldots , | \S |$.
The training loss in this case is constant (see Equations \eqref{eq:fnn} and~\eqref{eq:train_loss}), implying $\inf_{\thetabf \in \R^d~\emph{s.t.}\,\nabla^2 f ( \thetabf )~\emph{exists}} \lambda_{min} ( \nabla^2 f ( \thetabf ) ) = 0$.
\end{proof}

\begin{claim}[necessity of assumptions in Proposition~\ref{prop:cnn_hess_arbitrary_neg}]
\label{claim:necessity_cnn}
In the context of Proposition~\ref{prop:cnn_hess_arbitrary_neg}, if assumptions \emph{(i)} or~\emph{(ii)} are not satisfied, then the stated result may not hold, \ie~it may be that $\inf_{\thetabf \in \R^d~\emph{s.t.}\,\nabla^2 f ( \thetabf )~\emph{exists}} \lambda_{min} ( \nabla^2 f ( \thetabf ) ) > -\infty$.
\end{claim}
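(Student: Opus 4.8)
The plan is to mirror the proof of Claim~\ref{claim:necessity_fnn}, now invoking Lemma~\ref{lemma:cnn_region_hess} in place of Lemma~\ref{lemma:fnn_region_hess} and keeping track of the weight-sharing/sparsity parametrization. I would treat the two assumptions separately.

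\textbf{Necessity of assumption~\emph{(i)}.} Suppose the network is shallow ($n = 2$). Then in the expression of Lemma~\ref{lemma:cnn_region_hess} the sum over $1 \leq j < j' \leq n$ reduces to the single pair $(j, j') = (1, 2)$ (with $D'_{i,2}$ the identity by convention), while the term carrying $\nabla^2 \ell_i$ is non-negative because $\ell ( \cdot )$ is convex in its first argument. Hence, for any $\thetabf \in \D_{\thetabf'}$ and any $( \Delta \w_1 , \Delta \w_2 )$,
\[
\nabla^2 f ( \thetabf ) [ \Delta \w_1 , \Delta \w_2 ] \, \geq \, \frac{2}{| \S |} \sum_{i = 1}^{| \S |} \nabla \ell_i^\top W_2 ( \Delta \w_2 ) D'_{i , 1} W_1 ( \Delta \w_1 ) \x_i \, .
\]
I would then apply, term by term, the Cauchy-Schwarz inequality, submultiplicativity of the spectral norm, the bound $\| D'_{i , 1} \|_{spectral} \leq \max \{ | \alpha | , | \alphabar | \}$, the bound $\| W_j ( \Delta \w_j ) \|_{spectral} \leq \| W_j ( \Delta \w_j ) \|_{Frobenius} \leq \| W_j ( \cdot ) \|_{op} \| \Delta \w_j \|_2$, and finally the arithmetic-geometric mean inequality $2 \| \Delta \w_1 \|_2 \| \Delta \w_2 \|_2 \leq \| ( \Delta \w_1 , \Delta \w_2 ) \|_2^2$, to conclude $\lambda_{min} ( \nabla^2 f ( \thetabf ) ) \geq - \max \{ | \alpha | , | \alphabar | \} \| W_1 ( \cdot ) \|_{op} \| W_2 ( \cdot ) \|_{op} \frac{1}{| \S |} \sum_i \| \nabla \ell_i \|_2 \| \x_i \|_2$. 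Choosing $\ell ( \cdot )$ to be a loss whose gradient has globally bounded Euclidean norm (e.g.~the logistic loss), the right-hand side is bounded below by a constant independent of $\thetabf$. Since the regions $\D_{\thetabf'}$ exhaust $\R^d$ up to a zero-measure set, and existence of $\nabla^2 f ( \thetabf )$ forces $f ( \cdot )$ to be twice continuously differentiable (hence $\lambda_{min} ( \nabla^2 f ( \cdot ) )$ continuous) near $\thetabf$, the infimum of $\lambda_{min} ( \nabla^2 f ( \cdot ) )$ over points where the Hessian exists is finite.

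\textbf{Necessity of assumption~\emph{(ii)}.} Suppose instead that $\sum_i \nabla \ell ( \0 , y_i )^\top h_\thetabf ( \x_i ) = 0$ for every $\thetabf \in \R^d$. Here the degenerate instance $\x_i = \0$ for all $i$ suffices: by Equations~\eqref{eq:cnn} and~\eqref{eq:train_loss} the training loss is then constant, so $\nabla^2 f \equiv 0$ and $\inf_\thetabf \lambda_{min} ( \nabla^2 f ( \thetabf ) ) = 0 > - \infty$. This works regardless of the depth $n$, and is consistent with the failure of assumption~\emph{(ii)} since $h_\thetabf ( \x_i ) = \0$ in this case. I do not anticipate a genuine obstacle: the $n = 2$ case is a routine chain of standard norm inequalities and the second case is a trivial example; the only things needing care are the bookkeeping of the convolutional parametrization (replacing $\| \Delta W_j \|$ bounds by $\| W_j ( \cdot ) \|_{op} \| \Delta \w_j \|_2$) and checking that the degenerate training set is indeed compatible with the negated hypothesis in the second case.
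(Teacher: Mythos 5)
Your proposal is correct and follows essentially the same route as the paper's own proof: the shallow case is handled by dropping the convex $\nabla^2\ell_i$ term from Lemma~\ref{lemma:cnn_region_hess} and chaining Cauchy--Schwarz, submultiplicativity, the $\max\{|\alpha|,|\alphabar|\}$ bound on $\|D'_{i,1}\|_{spectral}$, the operator-norm bound $\|W_j(\Delta\w_j)\|_{Frobenius}\leq\|W_j(\cdot)\|_{op}\|\Delta\w_j\|_2$, and AM--GM, then specializing to a bounded-gradient loss and extending to all points of twice-differentiability via the zero-measure complement of the regions $\D_{\thetabf'}$; the failure of assumption~\emph{(ii)} is likewise witnessed by the degenerate training set $\x_i=\0$. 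No gaps.
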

\begin{proof}
Suppose that assumption~\emph{(i)} is not satisfied, \ie~that the network is shallow ($n = 2$).
With the notations of Lemmas \ref{lemma:cnn_region_hess} and~\ref{lemma:cnn_region_hess_lb}, for any $\thetabf \in \D_{\thetabf'}$, $( \Delta \w_1 , \Delta \w_2 ) \in \R^{d'_1} \times \R^{d'_2}$:
\beas
&& \hspace{-5mm} \nabla^2 f ( \thetabf ) \, [ \Delta \w_1 , \Delta \w_2 ] 
= 
\frac{1}{| \S |} \sum\nolimits_{i = 1}^{| \S |} \nabla^2 \ell_i \big[ W_2 ( \w_2 ) D'_{i , 1} W_1 ( \Delta \w_1 ) \x_i + W_2 ( \Delta \w_2 ) D'_{i , 1} W_1 ( \w_1 ) \x_i \big] \\[-1.5mm]
&& \qquad\qquad\qquad\qquad\qquad~
+ \frac{2}{| \S |} \sum\nolimits_{i = 1}^{| \S |} \nabla \ell_i^\top W_2 ( \Delta \w_2 ) D'_{i , 1} W_1 ( \Delta \w_1 ) \x_i \\[-1mm]
&& \geq
\frac{2}{| \S |} \sum\nolimits_{i = 1}^{| \S |} \nabla \ell_i^\top W_2 ( \Delta \w_2 ) D'_{i , 1} W_1 ( \Delta \w_1 ) \x_i \\[-1mm] 
&& \geq
- \frac{2}{| \S |} \sum\nolimits_{i = 1}^{| \S |} \| \nabla \ell_i \|_2 \| W_2 ( \Delta \w_2 ) D'_{i , 1} W_1 ( \Delta \w_1 ) \x_i \|_2 \\[-1mm] 
&& \geq
- \frac{2}{| \S |} \sum\nolimits_{i = 1}^{| \S |} \| \nabla \ell_i \|_2 \| \x_i \|_2 \| W_2 ( \Delta \w_2 ) D'_{i , 1} W_1 ( \Delta \w_1 ) \|_{spectral} \\[-1mm] 
&& \geq
- \frac{2}{| \S |} \sum\nolimits_{i = 1}^{| \S |} \| \nabla \ell_i \|_2 \| \x_i \|_2 \| W_2 ( \Delta \w_2 ) \|_{spectral} \| D'_{i , 1} \|_{spectral} \| W_1 ( \Delta \w_1 ) \|_{spectral} \\[-1mm] 
&& \geq
- \max \big\{ | \alpha | , | \alphabar | \big\} \frac{2}{| \S |} \hspace{-0.75mm} \sum\nolimits_{i = 1}^{| \S |} \hspace{-0.5mm} \| \nabla \ell_i \|_2 \| \x_i \|_2 \| W_2 ( \Delta \w_2 ) \|_{spectral} \| W_1 ( \Delta \w_1 ) \|_{spectral} \\[-1mm] 
&& \geq
- \max \big\{ | \alpha | , | \alphabar | \big\} \frac{2}{| \S |} \hspace{-0.75mm} \sum\nolimits_{i = 1}^{| \S |} \hspace{-0.5mm} \| \nabla \ell_i \|_2 \| \x_i \|_2 \| W_2 ( \Delta \w_2 ) \|_{Frobenius} \| W_1 ( \Delta \w_1 ) \|_{Frobenius} \\[-1mm] 
&& \geq
- \max \big\{ | \alpha | , | \alphabar | \big\} \frac{2}{| \S |} \hspace{-0.75mm} \sum\nolimits_{i = 1}^{| \S |} \hspace{-0.5mm} \| \nabla \ell_i \|_2 \| \x_i \|_2 \| W_2 ( \cdot ) \|_{op} \| \Delta \w_2 \|_2 \| W_1 ( \cdot ) \|_{op} \| \Delta \w_1 \|_2 \\[-1mm]
&& \geq
- \max \big\{ | \alpha | , | \alphabar | \big\} \frac{1}{| \S |} \hspace{-0.75mm} \sum\nolimits_{i = 1}^{| \S |} \hspace{-0.5mm} \| \nabla \ell_i \|_2 \| \x_i \|_2 \| W_2 ( \cdot ) \|_{op} \| W_1 ( \cdot ) \|_{op} \big( \| \Delta \w_2 \|_2^2 + \| \Delta \w_1 \|_2^2 \big) \\[-1mm]
&& =
- \max \big\{ | \alpha | , | \alphabar | \big\} \frac{1}{| \S |} \hspace{-0.75mm} \sum\nolimits_{i = 1}^{| \S |} \hspace{-0.5mm} \| \nabla \ell_i \|_2 \| \x_i \|_2 \hspace{-0.25mm} \prod\nolimits_{j = 1}^2 \hspace{-0.75mm} \| W_j ( \cdot ) \|_{op} \| ( \Delta \w_1 , \Delta \w_2 ) \|^2_{Frobenius}
\text{\,,}
\eeas
where the first transition follows from Lemma~\ref{lemma:cnn_region_hess}, the second holds since $\ell ( \cdot )$ is convex with respect to its first argument (recall from Lemma~\ref{lemma:cnn_region_hess} that $\nabla^2 \ell_i$ is defined to be the Hessian of~$\ell ( \cdot )$ at the point $( W_2 ( \w_1 ) D'_{i , 1} W_1 ( \w_1 ) \x_i , y_i )$ with respect to its first argument), the third is an application of the Cauchy-Schwarz inequality, the fourth follows from the spectral norm being the operator norm induced by the Euclidean norm, the fifth is due to submultiplicativity of the spectral norm, the sixth results from~$D'_{i , 1}$ being diagonal with diagonal elements in~$\{ \alpha , \alphabar \}$, the seventh holds since spectral norm is upper bounded by Frobenius norm, the eighth is due to the definition of~$\| W_j ( \cdot ) \|_{op}$ (operator norm of~$W_j ( \cdot )$ induced by the Frobenius norm), and the latter two are based on simple arithmetics.
The above implies that \smash{$\lambda_{min} ( \nabla^2 f ( \thetabf ) ) \geq - \max \{ | \alpha | , | \alphabar | \} \frac{1}{| \S |} \sum_{i = 1}^{| \S |} \| \nabla \ell_i \|_2 \| \x_i \|_2 \prod_{j = 1}^2 \| W_j ( \cdot ) \|_{op}$}.
Consider the case where the gradient of~$\ell ( \cdot )$ with respect to its first argument has Euclidean norm bounded by some constant $c > 0$ (this holds, for example, if $\ell ( \cdot )$ is the logistic loss).
Recalling (from Lemma~\ref{lemma:cnn_region_hess}) that $\nabla \ell_i$ stands for this gradient at the point $( W_2 ( \w_2 ) D'_{i , 1} W_1 ( \w_1 ) \x_i , y_i )$, we obtain \smash{$\lambda_{min} ( \nabla^2 f ( \thetabf ) ) \geq - c \max \{ | \alpha | , | \alphabar | \} \frac{1}{| \S |} \sum_{i = 1}^{| \S |} \| \x_i \|_2 \prod_{j = 1}^2 \| W_j ( \cdot ) \|_{op}$}.
The latter holds for any~$\thetabf$ belonging to any region of the form~$\D_{\thetabf'}$.
Since these regions constitute the entire weight space but a zero measure set, and since by definition existence of~$\nabla^2 f ( \thetabf )$ for some $\thetabf \in \R^d$ implies that~$f ( \cdot )$ is twice continuously differentiable (and therefore $\lambda_{min} ( \nabla^2 f ( \cdot ) )$ is continuous) on a neighborhood of~$\thetabf$, it necessarily holds that:
\[
\inf\nolimits_{\thetabf \in \R^d~\text{s.t.}\,\nabla^2 f ( \thetabf )~\text{exists}} \lambda_{min} ( \nabla^2 f ( \thetabf ) ) \geq - c \max \{ | \alpha | , | \alphabar | \} \frac{1}{| \S |} \sum_{i = 1}^{| \S |} \| \x_i \|_2 \prod_{j = 1}^2 \| W_j ( \cdot ) \|_{op} > - \infty
\text{\,.}
\]
This establishes necessity of assumption~\emph{(i)}.

It remains to show that if assumption~\emph{(ii)} is not satisfied, \ie~if $\sum_{i = 1}^{| \S |} \nabla \ell ( \0 , y_i )^\top h_\thetabf ( \x_i ) = 0$ for all $\thetabf \in \R^d$, then, regardless of whether or not assumption~\emph{(i)} holds (\ie~of whether $n \geq 3$ or $n = 2$), it may be that $\inf_{\thetabf \in \R^d~\emph{s.t.}\,\nabla^2 f ( \thetabf )~\emph{exists}} \lambda_{min} ( \nabla^2 f ( \thetabf ) ) > -\infty$.
This is trivial~---~simply consider the case where the training set~$\S$ is such that $\x_i \, {=} \, \0$ for all $i \, {=} \, 1 , 2 , ...\, , | \S |$.
The training loss in this case is constant (see Equations \eqref{eq:cnn} and~\eqref{eq:train_loss}), implying $\inf_{\thetabf \in \R^d~\emph{s.t.}\,\nabla^2 f ( \thetabf )~\emph{exists}} \lambda_{min} ( \nabla^2 f ( \thetabf ) ) \, {=} \, 0$.
\end{proof}

\section{Training Loss for Least-Squares Linear Regression on Whitened Data} \label{app:train_loss_lin_square}

In this appendix we derive a simplified expression for the training loss corresponding to scalar linear regression on whitened data per least-squares criterion.
Concretely, we simplify the function $\phi : \R^{d_n , d_0} \, {\to} \, \R$ defined by Equation~\eqref{eq:train_loss_e2e} in the special case where: $d_n \, {=} \, 1$;
the empirical (uncentered) covariance matrix of the training inputs~---~\smash{$\Lambda_{xx} \, {:=} \, \frac{1}{| \S |}\sum_{i = 1}^{| \S |} \x_i \x_i^\top \, {\in} \, \R^{d_0 , d_0}$}~---~is equal to identity;
and
the loss function $\ell : \R^{d_n} \, {\times} \, \Y \, {\to} \, \R$ is the square loss, \ie~$\Y \, {=} \, \R$ and $\ell ( \hat{y} , y ) \, {=} \, \frac{1}{2} ( \hat{y} - y )^2$.

Let $X \in \R^{d_0 , | \S |}$ and $Y \in \R^{1 , | \S |}$ be the matrices whose $i$'th~columns hold, respectively, the training input~$\x_i$ and its label~$y_i$, $i = 1 , 2 , ...\, , | \S |$.
Denote by~$\Lambda_{yx}$ the empirical (uncentered) cross-covariance matrix between training labels and inputs, \ie~\smash{$\Lambda_{yx} \, {:=} \, \frac{1}{| \S |} Y X^\top \in \R^{1 , d_0}$}.
In~the special case under consideration, for any $W \in \R^{1 , d_0}$:
\beas
\phi ( W ) 
&=& \tfrac{1}{2 | \S |} \sum\nolimits_{i = 1}^{| \S |} ( W \x_i - y_i )^2 \\
&=& \tfrac{1}{2 | \S |} \| W X - Y \|_{Frobenius}^2 \\
&=& \tfrac{1}{2 | \S |} \Tr \big( ( W X - Y ) ( W X - Y )^\top \big) \\
&=& \tfrac{1}{2 | \S |} \Tr \big( W X X^\top W^\top \big) - \tfrac{1}{| \S |} \Tr \big( Y X^\top W^\top \big) + \tfrac{1}{2 | \S |} \Tr\big( Y Y^\top \big) \\
&=& \tfrac{1}{2} \Tr \big( W \Lambda_{xx} W^\top \big) - \Tr \big( \Lambda_{yx} W^\top \big) + \tfrac{1}{2 | \S |} \Tr \big( Y Y^\top \big)
\text{\,.}
\eeas
Since $\Lambda_{xx}$ is equal to identity, we have:
\beas
\phi ( W ) 
&=& \tfrac{1}{2} \Tr \big( W W^\top \big) - \Tr \big( \Lambda_{yx} W^\top \big) + \tfrac{1}{2 | \S |} \Tr \big( Y Y^\top \big) \\
&=& \tfrac{1}{2} \Tr \big( ( W - \Lambda_{yx} ) ( W - \Lambda_{yx} )^\top \big) - \tfrac{1}{2} \Tr \big( \Lambda_{yx} \Lambda_{yx}^\top \big) + \tfrac{1}{2 | \S |} \Tr \big( Y Y^\top \big) \\
&=& \tfrac{1}{2} \| W - \Lambda_{yx} \|_{Frobenius}^2 - \tfrac{1}{2} \Tr \big( \Lambda_{yx} \Lambda_{yx}^\top \big) + \tfrac{1}{2 | \S |} \Tr \big( Y Y^\top \big)
\text{\,.}
\eeas
$c := - \tfrac{1}{2} \Tr ( \Lambda_{yx} \Lambda_{yx}^\top ) + \tfrac{1}{2 | \S |} \Tr ( Y Y^\top )$ does not depend on~$W$, so we arrive at the simplified form:
\[
\phi ( W ) = \tfrac{1}{2} \| W - \Lambda_{yx} \|_{Frobenius}^2 + c
\text{\,.}
\]

\section{Convergence with Unbalanced Initialization} \label{app:unbalance}

In Section~\ref{sec:lnn} we translated an analysis of gradient flow over deep linear neural networks~---~Proposition~\ref{prop:gf_analysis} ---~into a convergence guarantee for gradient descent~---~Theorem~\ref{theorem:gd_translation}.
In order to leverage known results concerning gradient flow over deep linear neural networks, Proposition~\ref{prop:gf_analysis} assumed that initialization is balanced (\ie~meets Equation~\eqref{eq:balance}), which in turn led Theorem~\ref{theorem:gd_translation} to assume the same.
We~noted (Remark~\ref{rem:unbalance}), however, that the generic tool used for the translation~---~Theorem~\ref{theorem:gf_gd}~---~allows for gradient flow and gradient descent to be initialized differently, thus it is possible to extend Theorem~\ref{theorem:gd_translation} so that it accounts for unbalanced initialization (\ie~for initialization which satisfies Equation~\eqref{eq:balance} only approximately).
The current appendix presents such an extension.

Consider the setting of Section~\ref{sec:lnn}~---~depth~$n$ fully connected neural network as defined in Equation~\eqref{eq:fnn} (and surrounding text), with linear activation ($\sigma ( z ) = z$) and output dimension $d_n = 1$, learned via minimization of square loss over whitened and normalized data, \ie~via minimization of the training loss~$f ( \cdot )$ presented in Equation~\eqref{eq:train_loss_lnn_square} (and surrounding text).
For simplicity, we assume that the network's hidden widths are all equal to its input dimension, \ie~$d_0 = d_1 = \cdots = d_{n - 1}$.\note{%
Lemma~\ref{lemma:unbalance_to_balance} is the only part of the analysis henceforth which relies on this assumption~---~generalizing the lemma to account for arbitrary hidden widths will accordingly generalize the entire analysis.
}
Deviation from balancedness (Equation~\eqref{eq:balance}) will be quantified per the following definition.
\begin{definition}
\label{def:unbalance}
The \emph{unbalancedness magnitude} of a weight setting $\thetabf \in \R^d$ is defined to be:
\be
\max\nolimits_{j \in \{ 1 , 2 , ...\, , n - 1 \}} \| W_{j + 1}^\top W_{j + 1} - W_j W_j^\top \|_{nuclear}
\text{\,,}
\label{eq:unbalance}
\ee
where $W_1 , W_2 , ...\, , W_n$ denote the weight matrices constituting~$\thetabf$.  
\end{definition}
By Lemma~\ref{lemma:unbalance_to_balance} below, small unbalancedness magnitude implies proximity to perfect balancedness.
\begin{lemma}
\label{lemma:unbalance_to_balance}
For any weight setting $\thetabf \in \R^d$ with unbalancedness magnitude (Definition~\ref{def:unbalance}) equal to $\hat{\epsilon} \geq 0$, there exists a weight setting $\hat{\thetabf} \in \R^d$ which is balanced (has unbalancedness magnitude zero) and meets $\| \thetabf - \hat{\thetabf} \|_2 \leq n^{1.5} \sqrt{\hat{\epsilon}}$.
\end{lemma}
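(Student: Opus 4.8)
The plan is to construct $\hat\thetabf$ directly from the spectral data of the layers of $\thetabf$, exploiting the rigidity of balanced tuples in the present setting (square hidden layers $W_1,\ldots,W_{n-1}\in\R^{d_0\times d_0}$ and a row output layer $W_n\in\R^{1\times d_0}$). Balancedness demands $\hat W_{j+1}^\top\hat W_{j+1}=\hat W_j\hat W_j^\top$, so $\hat W_j$ and $\hat W_{j+1}$ share all of their squared singular values; chaining this from $j=1$ to $j=n$ and using that $\hat W_n$ has only a single singular value forces every $\hat W_j$ to have rank at most one, with one common nonzero singular value $\sigma:=\|\hat W_n\|_{spectral}$ (or all of them zero). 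A balanced tuple is therefore described exactly by a scalar $\sigma\ge 0$ and unit vectors $v_0,v_1,\ldots,v_{n-1}\in\R^{d_0}$ (up to signs), via $\hat W_1=\sigma v_1v_0^\top$, $\hat W_j=\pm\sigma v_jv_{j-1}^\top$ for $2\le j\le n-1$, and $\hat W_n=\pm\sigma v_{n-1}^\top$; crucially, no compatibility is required among the $v$'s, because ``layers $j$ and $j+1$ are balanced'' is precisely the statement that they share the vector $v_j$.

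I would then pick this data to shadow $\thetabf$: set $\sigma:=\|W_n\|_{spectral}$, take $v_j$ to be a unit top eigenvector of $W_jW_j^\top$ for $j=1,\ldots,n-1$, take $v_0$ to be a unit top eigenvector of $W_1^\top W_1$, and fix all signs greedily (processing $\hat W_n,\hat W_{n-1},\ldots,\hat W_1$ in order, using the sign that is still free at each step) so that $\langle W_j,\hat W_j\rangle\ge 0$ for every $j$. With this choice $\hat W_n=W_n$ exactly, while for $j<n$ the identity $\|W_j-\hat W_j\|_{Frobenius}^2=\|W_j\|_{Frobenius}^2-2\sigma\,v_j^\top W_jv_{j-1}+\sigma^2$ reduces everything to two estimates: (i) $\|W_j\|_{Frobenius}^2$ is within $\OO(n\hat\epsilon)$ of $\sigma^2$, and (ii) $v_j^\top W_jv_{j-1}\ge\sqrt{\sigma^2-\OO(n\hat\epsilon)}$.

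Both follow by propagating the hypothesis $\|W_{j+1}^\top W_{j+1}-W_jW_j^\top\|_{nuclear}\le\hat\epsilon$ along the chain of positive semidefinite matrices $W_1^\top W_1, W_1W_1^\top, W_2^\top W_2, W_2W_2^\top, \ldots, W_{n-1}W_{n-1}^\top, W_n^\top W_n$, whose consecutive members either differ in nuclear norm by at most $\hat\epsilon$ (the balancedness links) or have identical spectra ($W_jW_j^\top$ versus $W_j^\top W_j$, using squareness of the hidden layers). Taking traces along this chain gives $|\,\|W_j\|_{Frobenius}^2-\sigma^2\,|\le(n-j)\hat\epsilon$, which is (i); applying Weyl's inequality along it (with $\|\cdot\|_{spectral}\le\|\cdot\|_{nuclear}$) shows that the top eigenvalue of $W_jW_j^\top$ lies within $(n-j)\hat\epsilon$ of $\sigma^2$ and all other eigenvalues are at most $(n-j)\hat\epsilon$, and likewise for $W_j^\top W_j$. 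For (ii) I would write $v_j^\top W_jv_{j-1}=\sqrt{\lambda_{max}(W_jW_j^\top)}\cdot\langle q_j,v_{j-1}\rangle$, where $q_j$ is the matching top right singular vector of $W_j$ (a top eigenvector of $W_j^\top W_j$), and pinch the Rayleigh quotient $v_{j-1}^\top(W_j^\top W_j)v_{j-1}$ between the lower bound $v_{j-1}^\top(W_{j-1}W_{j-1}^\top)v_{j-1}-\hat\epsilon=\lambda_{max}(W_{j-1}W_{j-1}^\top)-\hat\epsilon$ and the eigendecomposition bound $\lambda_{max}(W_j^\top W_j)\,\langle q_j,v_{j-1}\rangle^2+\lambda_{2}(W_j^\top W_j)$, with $\lambda_2(\cdot)$ the second-largest eigenvalue; combined with the Weyl bounds this forces $\langle q_j,v_{j-1}\rangle^2$ close to $1$ and yields (ii). Plugging (i) and (ii) into the identity and using $\sqrt{\sigma^2-x}\ge\sigma-x/\sigma$ gives a per-layer bound of order $(n-j)\hat\epsilon$, with the $j=1$ layer handled exactly (there $v_0$ is literally a top eigenvector of $W_1^\top W_1$, giving $\|W_1-\hat W_1\|_{Frobenius}^2\le 3(n-1)\hat\epsilon$); summing over $j=1,\ldots,n-1$, with the constants tracked carefully, stays below $n^3\hat\epsilon$ for every $n\ge 2$, so $\|\thetabf-\hat\thetabf\|_2\le n^{1.5}\sqrt{\hat\epsilon}$. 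The one remaining case, when $\sigma^2$ is below the threshold of order $n\hat\epsilon$ at which the estimate for (ii) degenerates, is disposed of by taking $\hat\thetabf=\0$, which is balanced and obeys $\sum_j\|W_j\|_{Frobenius}^2\le\sum_j(\sigma^2+(n-j)\hat\epsilon)\le n^3\hat\epsilon$ by the same trace propagation.

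I expect the main obstacle to be estimate (ii) together with the constant bookkeeping it forces: $v_j^\top W_jv_{j-1}$ couples the top eigenvector of $W_jW_j^\top$, the matrix $W_j$ itself, and the top eigenvector of $W_{j-1}W_{j-1}^\top$, so proving it is essentially $\sigma$ needs the Rayleigh-quotient pinching layered on top of the Weyl estimates, and one must retain the exact factors $(n-j)$ (rather than bounding them by $n$) and use the exactness of the $j=1$ layer in order for the layer-wise errors to sum below $n^3\hat\epsilon$ in the tight small-$n$ cases. Fixing the sign pattern consistently and clearing the small-$\sigma$ regime are further, more routine, technicalities.
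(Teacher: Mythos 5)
Your argument is correct in substance but takes a genuinely different route from the paper. The paper proves the lemma by reduction: it replaces $W_n$ with $\sqrt{W_n^\top W_n}$ to obtain an all-square tuple with the same unbalancedness magnitude, transposes and reverses the order, and invokes Lemma~1 of \cite{razin2020implicit} (which handles square balanced tuples and conveniently leaves the first matrix of the reversed chain~---~hence $W_n$~---~untouched), then sums the resulting per-layer bounds $(j-1)\sqrt{\hat{\epsilon}}$ to get $n^{1.5}\sqrt{\hat{\epsilon}}$. You instead build $\hat{\thetabf}$ from scratch, exploiting the fact that $d_n=1$ forces every balanced tuple in this setting to be rank one with a single common singular value, and you control the per-layer error via trace propagation, Weyl's inequality, and Rayleigh-quotient pinching along the chain $W_1^\top W_1, W_1W_1^\top,\ldots,W_n^\top W_n$. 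Your route is self-contained (no external lemma needed) and the estimates you sketch do sum below $n^3\hat{\epsilon}$, including in the small-$n$ cases and in the small-$\sigma$ regime where $\hat{\thetabf}=\0$ is used; but it is strictly less general than the paper's reduction, which works for arbitrary output dimension, whereas your rank-one characterization hinges on $d_n=1$. One internal inconsistency you must resolve: you define $v_{n-1}$ as a top eigenvector of $W_{n-1}W_{n-1}^\top$ and simultaneously claim $\hat{W}_n=W_n$ exactly, which is incompatible since $\hat{W}_n=\pm\sigma v_{n-1}^\top$ need not point along $W_n^\top$. Either resolution works~---~taking $v_{n-1}:=W_n^\top/\|W_n\|_2$ makes the last layer exact and moves the pinching burden to layer $n-1$, while keeping your stated $v_{n-1}$ costs an extra $\OO(\hat{\epsilon})$ on layer $n$ via the same Rayleigh-quotient bound~---~but the writeup has to commit to one.
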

\begin{proofsketch}{(for complete proof see Subappendix~\ref{app:proof:unbalance_to_balance})}
By Lemma~1 in~\cite{razin2020implicit}, an analogous result holds in the case where all weight matrices are square (\ie~$d_0 = d_1 = \cdots = d_n$).
The proof is based on a reduction to this case, attained by replacing~$W_n$ with $\sqrt{W_n^\top W_n}$.
\end{proofsketch}
Including Lemma~\ref{lemma:unbalance_to_balance} in the translation of Proposition~\ref{prop:gf_analysis} via Theorem~\ref{theorem:gf_gd} yields Theorem~\ref{theorem:gd_translation_unbalance} below~---~an extension of Theorem~\ref{theorem:gd_translation} that allows for unbalanced initialization.
\begin{theorem}
\label{theorem:gd_translation_unbalance}
Consider minimization of the training loss~$f ( \cdot )$ in Equation~\eqref{eq:train_loss_lnn_square} via gradient descent (Equation~\eqref{eq:gd}).
Denote by $\thetabf_0 , \thetabf_1 , \thetabf_2 , ...$ the iterates of gradient descent, and by~$W_{n : 1 , 0}$ the end-to-end matrix (Equation~\eqref{eq:e2e}) of the initial point~$\thetabf_0$.
Assume that $\| W_{n : 1 , 0} \|_{Frobenius} \in ( 0 , 0.1 ]$ (initialization is small but non-zero), and that $W_{n : 1 , 0}$ is not antiparallel to~$\Lambda_{yx}$, meaning:
\[
\nu := \Tr ( \Lambda_{yx}^\top W_{n : 1 , 0} ) \big/ \big( \| \Lambda_{yx} \|_{Frobenius} \| W_{n : 1 , 0} \|_{Frobenius} \big) \neq -1
\text{\,.}
\]
Let $\tilde{\epsilon} > 0$.
Then, if the unbalancedness magnitude (Definition~\ref{def:unbalance}) of~$\thetabf_0$ is no greater than:
\be
\hspace{-2mm}
\hat{\epsilon} \,{:=}\, \tfrac{\| W_{n : 1 , 0} \|_{Frobenius}^8 \min \{ 1 , \tilde{\epsilon}^{\hspace{0.25mm} 2} \}}{ n^{15} e^{12 n + 6} \big( \hspace{-0.75mm} \max \big\{ 3 , \tfrac{3 - \nu}{1 + \nu} \big\} \hspace{-0.5mm} \big)^{9 n - 5}} \hspace{-0.5mm} \Bigg( \hspace{-1.75mm} \ln \hspace{-0.5mm} \bigg( \hspace{-1mm} \tfrac{23 n \max \big\{ 3 , \tfrac{3 - \nu}{1 + \nu} \big\}}{\| W_{n : 1 , 0} \|_{Frobenius} \min \{ 1 , \tilde{\epsilon} \}} \hspace{-1mm} \bigg) \hspace{-1.5mm} \Bigg)^{\hspace{-1mm} - 2} \hspace{-1.5mm} \in \mathit{\tilde{\Omega}} \bigg( \tfrac{\| W_{n : 1 , 0} \|_{Frobenius}^8 \tilde{\epsilon}^{\hspace{0.25mm} 2}}{n^{15} \big( poly \big( \tfrac{3 - \nu}{1 + \nu} \big) \big)^{\hspace{-0.5mm} n}} \bigg)
\hspace{-0.5mm} \text{\,,} \hspace{-0.5mm}
\label{eq:gd_translation_unbalance_mag}
\ee
and if the step size~$\eta$ meets:
\be
\hspace{-2mm}
\eta \,{\leq}\, \tfrac{\| W_{n : 1 , 0} \|_{Frobenius}^5 \min \{ 1 , \tilde{\epsilon} \}}{ n^{17 / 2} e^{7 n + 10} \big( \hspace{-0.75mm} \max \hspace{-0.5mm} \big\{ \hspace{-0.5mm} 3 , \tfrac{3 - \nu}{1 + \nu} \hspace{-0.5mm} \big\} \hspace{-0.5mm} \big)^{\hspace{-0.5mm} ( 11 n - 5 ) / 2}} \hspace{-0.5mm} \Bigg( \hspace{-1.5mm} \ln \hspace{-0.75mm} \bigg( \hspace{-1.25mm} \tfrac{23 n \max \big\{ 3 , \tfrac{3 - \nu}{1 + \nu} \big\}}{\| W_{n : 1 , 0} \|_{\hspace{-0.25mm} Frobenius} \min \{ 1 , \tilde{\epsilon} \}} \hspace{-1mm} \bigg) \hspace{-1.5mm} \Bigg)^{\hspace{-1mm} - 2} \hspace{-1.5mm} \in \mathit{\tilde{\Omega}} \bigg( \hspace{-0.75mm} \tfrac{\| W_{n : 1 , 0} \|_{Frobenius}^5 \tilde{\epsilon}}{n^{17 / 2} \big( poly \big( \tfrac{3 - \nu}{1 + \nu} \big) \big)^{\hspace{-0.5mm} n}} \hspace{-1mm} \bigg)
\hspace{-0.5mm} \text{\,,} \hspace{-1mm}
\label{eq:gd_translation_unbalance_eta}
\ee
it holds that $f ( \thetabf_k ) - \min_{\q \in \R^d} f ( \q ) \leq \tilde{\epsilon}$ for some $k \in \N$ satisfying:\note{%
In addition to an upper bound (Equation~\eqref{eq:gd_translation_unbalance_k}), the theorem's proof (Subappendix~\ref{app:proof:gd_translation_unbalance}) also establishes an exact expression for~$k$ (Equation~\eqref{eq:gd_translation_unbalance_k_exact}).
This expression includes terms that depend on~$\hat{\thetabf}_0$~---~balanced weight setting near~$\thetabf_0$ whose existence is guaranteed by Lemma~\ref{lemma:unbalance_to_balance}.
Means for computing~$\hat{\thetabf}_0$ based on~$\thetabf_0$ are not provided by the lemma's statement, but are brought forth by its proof (Subappendix~\ref{app:proof:unbalance_to_balance})~---~a constructive reduction to Lemma~1 in~\cite{razin2020implicit}, which itself is proven constructively.
}
\be
\hspace{-6mm}
k \,{\leq}\, \tfrac{3 n \big( \tfrac{3}{2} \max \big\{ 3 , \tfrac{3 - \nu}{1 + \nu} \big\} \big)^n}{\| W_{n : 1 , 0} \|_{Frobenius} \eta} \ln \bigg( \tfrac{23 n \max \big\{ 3 , \tfrac{3 - \nu}{1 + \nu} \big\}}{\| W_{n : 1 , 0} \|_{Frobenius} \min \{ 1 , \tilde{\epsilon} \}} \bigg) \,{+}\, 1 \hspace{0.25mm} \in \hspace{0.75mm} \tilde{\OO} \bigg( \tfrac{n \big( poly \big( \tfrac{3 - \nu}{1 + \nu} \big) \big)^{\hspace{-0.5mm} n} \ln \big( \tfrac{1}{\tilde{\epsilon}} \big)}{\| W_{n : 1 , 0} \|_{Frobenius} \eta} \bigg)
\text{\,.}
\label{eq:gd_translation_unbalance_k}
\ee
\end{theorem}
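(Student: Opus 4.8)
\begin{proofsketch}{(for complete proof see Subappendix~\ref{app:proof:gd_translation_unbalance})}
The plan is to reduce the unbalanced case to the balanced one (Theorem~\ref{theorem:gd_translation}) by comparing the gradient descent trajectory not against gradient flow emanating from~$\thetabf_0$, but against gradient flow emanating from a nearby \emph{balanced} point~---~a comparison that is legitimate because Theorem~\ref{theorem:gf_gd} allows gradient flow and gradient descent to be initialized differently (the initial discrepancy $\| \thetabf_0 - \thetabf ( 0 ) \|_2$ enters its step-size condition, Equation~\eqref{eq:gf_gd_eta}, explicitly).
First I would apply Lemma~\ref{lemma:unbalance_to_balance} to~$\thetabf_0$, whose unbalancedness magnitude is at most~$\hat\epsilon$, obtaining a balanced weight setting $\hat\thetabf_0 \in \R^d$ with $\| \thetabf_0 - \hat\thetabf_0 \|_2 \leq n^{1.5} \sqrt{\hat\epsilon}$.
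I would then verify that~$\hat\thetabf_0$ meets the hypotheses of Proposition~\ref{prop:gf_analysis}: balancedness holds by construction, and since the end-to-end map $\thetabf \mapsto W_{n : 1}$ is locally Lipschitz (the weight matrices of both~$\thetabf_0$ and~$\hat\thetabf_0$ remain bounded, being near-balanced with end-to-end norm at most~$0.1$), the choice of~$\hat\epsilon$ in Equation~\eqref{eq:gd_translation_unbalance_mag} makes the induced perturbation of the end-to-end matrix small enough that $\| \hat W_{n : 1 , 0} \|_{Frobenius} \in ( 0 , 0.2 ]$ (the assumption $\| W_{n : 1 , 0} \|_{Frobenius} \leq 0.1$ supplying the needed headroom) and that the alignment coefficient~$\hat\nu$ of~$\hat\thetabf_0$ stays bounded away from~$-1$; propagating $\| \thetabf_0 - \hat\thetabf_0 \|_2$ through the rational map $W \mapsto \Tr ( \Lambda_{yx}^\top W ) / ( \| \Lambda_{yx} \|_{Frobenius} \| W \|_{Frobenius} )$ then bounds $\max \{ 1 , \tfrac{1 - \hat\nu}{1 + \hat\nu} \}$ in terms of $\max \{ 3 , \tfrac{3 - \nu}{1 + \nu} \}$, which is precisely why the $\nu$-dependent factors in Equations~\eqref{eq:gd_translation_unbalance_mag}, \eqref{eq:gd_translation_unbalance_eta} and~\eqref{eq:gd_translation_unbalance_k} are inflated relative to their counterparts in Theorem~\ref{theorem:gd_translation}.

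Next I would invoke Proposition~\ref{prop:gf_analysis} for gradient flow $\hat\thetabf ( \cdot )$ emanating from~$\hat\thetabf_0$.
This supplies \emph{(i)}~a time~$\bar t$ (given by Equation~\eqref{eq:gf_analysis_time}, now with $\hat\nu$ and $\| \hat W_{n : 1 , 0} \|_{Frobenius}$) at which $f ( \hat\thetabf ( \bar t ) ) - \min_{\q \in \R^d} f ( \q ) \leq \bar\epsilon$, and hence the same at every later time since gradient flow monotonically non-increases~$f$; and \emph{(ii)}~the geometric estimates $\beta_{t , \epsilon} = 16 n$, $\gamma_{t , \epsilon} = 6 \sqrt n$, together with the upper bound on $\int_0^t m ( t' ) dt'$ in Equation~\eqref{eq:gf_analysis_m}, where $m ( \cdot )$ is the non-negative function to be handed to Theorem~\ref{theorem:gf_gd}.
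Since $\beta_{t , \epsilon}$ and $\gamma_{t , \epsilon}$ are absolute constants, a $\beta$-smoothness expansion yields $f ( \q' ) - \min_{\q \in \R^d} f ( \q ) \leq \bar\epsilon + \gamma_{t , \epsilon} \epsilon + \tfrac{1}{2} \beta_{t , \epsilon} \epsilon^2$ whenever $\hat\thetabf ( t )$ is $\bar\epsilon$-optimal and $\| \q' - \hat\thetabf ( t ) \|_2 \leq \epsilon$, so choosing $\bar\epsilon = \tilde\epsilon / 2$ and $\epsilon$ of order $\tilde\epsilon / \sqrt n$ makes the right-hand side at most~$\tilde\epsilon$.

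Finally I would apply Theorem~\ref{theorem:gf_gd} with gradient flow initialized at~$\hat\thetabf_0$ and gradient descent at~$\thetabf_0$, so that the initial-discrepancy term equals $\| \thetabf_0 - \hat\thetabf_0 \|_2 \leq n^{1.5} \sqrt{\hat\epsilon}$, up to the time $\tilde t := k \eta$ with $k := \lfloor \bar t / \eta \rfloor + 1$, so that $k \eta \geq \bar t$.
The step-size condition of Equation~\eqref{eq:gf_gd_eta} then reduces to two requirements holding for all $t \in ( 0 , k \eta ]$: positivity of the numerator $\epsilon - e^{\int_0^t m ( t' ) dt'} \| \thetabf_0 - \hat\thetabf_0 \|_2$, and $\eta < ( \epsilon - e^{\int_0^t m ( t' ) dt'} \| \thetabf_0 - \hat\thetabf_0 \|_2 ) / ( \beta_{t , \epsilon} \gamma_{t , \epsilon} \int_0^t e^{\int_{t'}^t m ( t'' ) dt''} dt' )$.
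Bounding $e^{\int_0^t m ( t' ) dt'}$ via Equation~\eqref{eq:gf_analysis_m} (using that $t \leq k \eta$ exceeds~$\bar t$ by less than~$\eta$), the choice of~$\hat\epsilon$ in Equation~\eqref{eq:gd_translation_unbalance_mag} forces the first quantity down to at most~$\epsilon / 2$, and the choice of~$\eta$ in Equation~\eqref{eq:gd_translation_unbalance_eta} secures the second; Theorem~\ref{theorem:gf_gd} then delivers $\| \thetabf_k - \hat\thetabf ( k \eta ) \|_2 \leq \epsilon$, whereupon, $\hat\thetabf ( k \eta )$ being $\bar\epsilon$-optimal, the smoothness expansion makes~$\thetabf_k$ be $\tilde\epsilon$-optimal.
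Substituting the explicit form of~$\bar t$ and the $\hat\nu \to \nu$ bound into $k = \lfloor \bar t / \eta \rfloor + 1$ yields the stated bound of Equation~\eqref{eq:gd_translation_unbalance_k}.
I expect the principal obstacle to be the constant bookkeeping: one must track the perturbation $\thetabf_0 \to \hat\thetabf_0$ simultaneously through the end-to-end matrix and through the alignment coefficient~$\nu$, while keeping the exponential factor $e^{\int_0^t m ( t' ) dt'}$~---~whose exponent grows linearly in~$t$~---~under control over a horizon $k \eta$ that is itself only pinned down up to the very bounds being proven, so that the mutual dependence of~$\eta$ on~$k$ and of~$k$ on~$\eta$ has to be disentangled.
\end{proofsketch}
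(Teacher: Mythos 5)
Your proposal is correct and follows essentially the same route as the paper's proof: Lemma~\ref{lemma:unbalance_to_balance} supplies the balanced surrogate~$\hat{\thetabf}_0$, perturbation bounds on the end-to-end matrix and on~$\nu$ (collected in the paper as Lemma~\ref{lemma:unbalance_to_balance_bounds}) verify the hypotheses of Proposition~\ref{prop:gf_analysis} and explain the inflated constants, and Theorem~\ref{theorem:gf_gd} is then invoked with gradient flow started at~$\hat{\thetabf}_0$ and gradient descent at~$\thetabf_0$, the initial discrepancy $n^{1.5}\sqrt{\hat{\epsilon}}$ being absorbed by the choice of~$\hat{\epsilon}$. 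The one substantive imprecision is your stated choice $\epsilon \sim \tilde{\epsilon}/\sqrt{n}$: to keep the term $15 n^3 (\cdots)^n \, k\eta\,\epsilon / \|\hat{W}_{n:1,0}\|_{Frobenius}$ in Equation~\eqref{eq:gf_analysis_m} of order one (so that $e^{\int_0^{k\eta} m}$ remains only singly exponential in~$n$ and the step-size bound~\eqref{eq:gd_translation_unbalance_eta} suffices), $\epsilon$ must additionally scale as $1/(k\eta)$, exactly as in Equation~\eqref{eq:gd_translation_unbalance_eps_epsbar} of the paper's proof; your closing remark about controlling the exponential factor gestures at this, but the value of~$\epsilon$ you propose would not achieve it.
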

\begin{proofsketch}{(for complete proof see Subappendix~\ref{app:proof:gd_translation_unbalance})}
The proof begins by invoking Lemma~\ref{lemma:unbalance_to_balance} for obtaining a weight setting~$\hat{\thetabf}_0$ which is balanced and meets $\| \thetabf_0 - \hat{\thetabf}_0 \|_2 \leq n^{1.5} \sqrt{\hat{\epsilon}}$.
It is then shown that as an initial point for gradient flow, $\hat{\thetabf}_0$~satisfies the conditions of Proposition~\ref{prop:gf_analysis} (namely, in addition to being balanced, its end-to-end matrix has Frobenius norm in~$( 0 , 0.2 ]$ and is not antiparallel to~$\Lambda_{yx}$).
From this point on, the proof is similar to that of Theorem~\ref{theorem:gd_translation}~---~it confirms that $f ( \thetabf_k ) - \min_{\q \in \R^d} f ( \q ) \leq \tilde{\epsilon}$ by invoking Theorem~\ref{theorem:gf_gd} to establish that gradient descent approximates gradient flow sufficiently well until gradient flow is sufficiently close to global minimum.
Throughout this process, the only deviation from the proof of Theorem~\ref{theorem:gd_translation} is that gradient descent and gradient flow are initialized differently~---~the former starts at~$\thetabf_0$, whereas the latter sets off from the nearby point~$\hat{\thetabf}_0$.
Such discrepancy between initializations is permitted by Theorem~\ref{theorem:gf_gd}.
\end{proofsketch}

\section{Further Experiments and Implementation Details} \label{app:experiments}

\subsection{Further Experiments} \label{app:experiments:further}

Figure~\ref{fig:exp_cnn} supplements Figure~\ref{fig:exp_fnn} from Section~\ref{sec:experiments} by reporting results obtained on convolutional neural networks.

\afterpage{
\begin{figure}
\begin{center}
\includegraphics[width=0.48\textwidth]{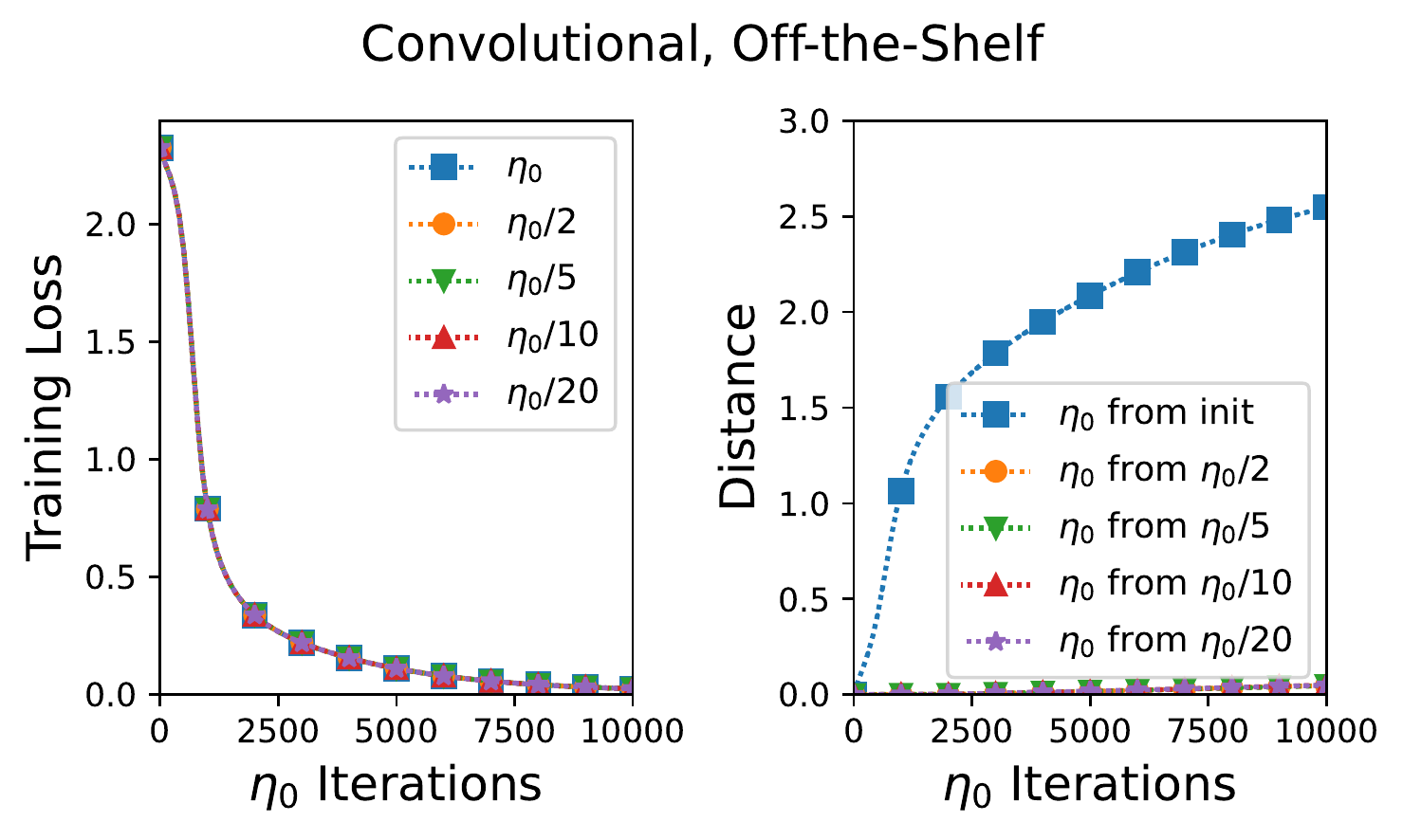}
\hspace{3mm}
\includegraphics[width=0.48\textwidth]{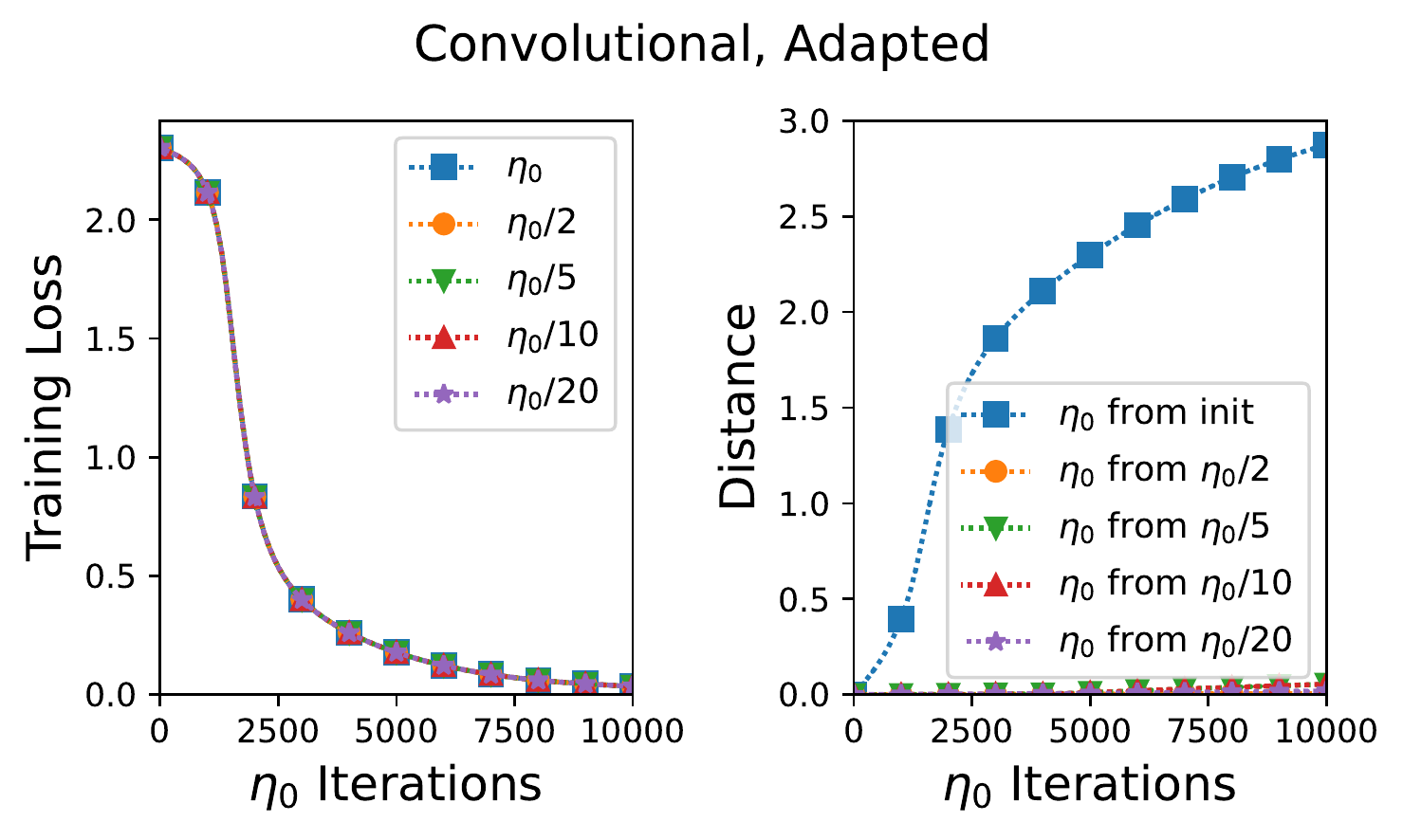}
\end{center}
\vspace{-4mm}
\caption{
Over deep convolutional neural networks, trajectories of gradient descent with conventional step size barely change when step size is reduced, suggesting they are close to the continuous limit, \ie~to trajectories of gradient flow.
This figure is identical to Figure~\ref{fig:exp_fnn}, except that the results it reports were obtained on convolutional (rather than fully connected) neural networks.
Specifically, left pair of plots reports results obtained on a network taken from the online tutorial ``Deep Learning with PyTorch: A 60 Minute Blitz'' (it comprises two convolutional layers followed by three linear layers, with rectified linear activation in each hidden layer, and max pooling in each convolutional layer),\protect\footnotemark\hspace{0mm} while right pair corresponds to the same network slightly adapted (namely, with no biases in convolutional and linear layers, and with max pooling replaced by regular subsampling, \ie~by summarizing each pooling window through its top-left entry) so that it is captured by our theory (\cf~Subsection~\ref{sec:roughly_convex:cnn}).
For further details see caption of Figure~\ref{fig:exp_fnn}, as well as Subappendix~\ref{app:experiments:details}.
}
\label{fig:exp_cnn}
\end{figure}
\footnotetext{%
For exact specification of network see \url{https://pytorch.org/tutorials/beginner/blitz/neural_networks_tutorial.html\#sphx-glr-beginner-blitz-neural-networks-tutorial-py}.
Note that zero padding (two pixels wide, on each side) was applied to MNIST images for compliance with specified input size ($32$-by-$32$).
}
}

\subsection{Implementation Details} \label{app:experiments:details}

Below are implementation details omitted from our experimental reports (Section~\ref{sec:experiments} and Subappendix~\ref{app:experiments:further}).
Source code for reproducing the results, based on the PyTorch framework (\cite{paszke2017automatic}), \ifdefined\CAMREADY
	can be found in \url{https://github.com/elkabzo/cont_disc_opt_dnn}.
\else
	is attached as supplementary material and will be made publicly available.
\fi

As customary, MNIST images were normalized before being used~---~we computed mean and standard deviation across all pixels in the dataset, and used those to shift and scale each pixel so as to ensure zero mean and unit standard deviation.
To reduce run-time, rather than applying gradient descent to the full MNIST training set ($60{,}000$ labeled images), a subset of $1{,}000$ labeled images (chosen once, uniformly at random) was used (altering the size of this subset did not yield a noticeable change in terms of final results).
The Xavier distribution employed for initializing neural network weights was of type ``uniform'' (implemented by calling PyTorch \texttt{torch.nn.init.xavier\_uniform\_()} method with default parameters).
Experiments ran on an internal Intel Xeon server with eight NVIDIA GeForce RTX 2080 Ti graphical processing units.

\section{Deferred Proofs} \label{app:proof}

\subsection{Notations} \label{app:proof:notations}

We introduce notations to be used throughout the appendix.
Beginning with matrix norms, we use $\left\Vert \cdot \right\Vert_{F}$  for Frobenius norm, $\left\Vert \cdot \right\Vert_{n}$ for nuclear norm and $\left\Vert \cdot \right\Vert_{s}$ for spectral norm.
We extend the notation established in Lemma~\ref{lemma:lnn_hess} by regarding Hessians not only as matrices and quadratic forms, but also as bilinear forms.
Namely, for any $\thetabf \in \R^d$, we regard $\nabla^2 f ( \thetabf )$ not only as a (symmetric) matrix in~$\R^{d , d}$ and a quadratic form $\nabla^2 f ( \thetabf ) [ \, \cdot \, ] : \R^{d_1 , d_0} \times \R^{d_2 , d_1} \times \cdots \times \R^{d_n , d_{n - 1}} \to \R$, but also as a bilinear form $\nabla^2 f ( \thetabf ) [ \, \cdot \, , \cdot \, ]$ that intakes two tuples $( \Delta W_1 , \Delta W_2 , ...\, , \Delta W_n ) , ( \Delta W'_1 , \Delta W'_2 , ...\, , \Delta W'_n ) \in \R^{d_1 , d_0} \times \R^{d_2 , d_1} \times \cdots \times \R^{d_n , d_{n - 1}}$ as its first and second arguments (respectively), arranges them as (respective) vectors $\Delta \thetabf , \Delta \thetabf' \in \R^d$ (in correspondence with how weight matrices $W_1 , W_2 , ...\, , W_n$ are arranged to create~$\thetabf$), and returns $\Delta \thetabf^\top \, \nabla^2 f ( \thetabf ) \, \Delta \thetabf' \in \R$.
Additionally, for any $W \in \R^{d_n , d_0}$, we extend the view of $\nabla^2 \phi ( W )$ as a quadratic form, and also see it as a bilinear form $\nabla^2 \phi ( W ) [ \, \cdot \, , \cdot \, ]$ that intakes two matrices in~$\R^{d_n , d_0}$ and returns a scalar.
We similarly extend the notation of Lemma~\ref{lemma:fnn_region_hess}, regarding the matrix $\nabla^2 \ell_i \in \R^{d_n , d_n}$, for any $i \in \{ 1 , 2 , ...\, , | \S | \}$, as a bilinear form (in addition to its view as a quadratic form) $\nabla^2 \ell_i [ \, \cdot \, , \cdot \, ] : \R^{d_n} \times \R^{d_n} \to \R$ defined by $\nabla^2 \ell_i [ \vv , \uu ] = \vv^\top \nabla^2 \ell_i \uu$.
Finally, for any $j \in \N$ we denote $[ j ] := \{ 1 , 2 , ...\, , j \}$.

\subsection{Proof of Theorem~\ref{theorem:gf_gd}} \label{app:proof:gf_gd}

Let~$\bar{\thetabf} ( \cdot )$ be the continuous polygonal curve corresponding to the iterates of gradient descent:
\[
\bar{\thetabf} : [ 0 , \infty ) \to \R^d
~~~ , ~~~
\bar{\thetabf} ( 0 ) = \thetabf_0
~~~ , ~~~
\tfrac{d}{dt} \bar{\thetabf} ( t ) = - \nabla f ( \thetabf_k )
~~\text{for $t \in ( k \eta , ( k + 1 ) \eta )$\,, $k = 0 , 1 , 2 , \ldots$}
\text{\,.}
\]
If $\| \bar{\thetabf} ( t ) - \thetabf ( t ) \|_2 \leq \epsilon$ for all $t \in [ 0 , \tilde{t} \, ]$ then we are done.
Assume by contradiction that this is not the case, and define $t_\epsilon := \inf \{ t \in [ 0 , \tilde{t} \, ] : \| \bar{\thetabf} ( t ) - \thetabf ( t ) \|_2 > \epsilon \}$.
It necessarily holds that $\| \bar{\thetabf} ( 0 ) - \thetabf ( 0 ) \|_2 < \epsilon$ (otherwise the expression on the right-hand side of Equation~\eqref{eq:gf_gd_eta} becomes negative as $t \,{\searrow}\, 0$, in contradiction to it being greater than~$\eta > 0$ for all $t \in ( 0 , \tilde{t} \, ]$).
By continuity, this implies $t_\epsilon > 0$ and $\| \bar{\thetabf} ( t_\epsilon ) - \thetabf ( t_\epsilon ) \|_2 = \epsilon$.
The trajectory of~$\bar{\thetabf} ( \cdot )$ between times $0$ and~$t_\epsilon$, \ie~$\bar{\thetabf} ( [ 0 , t_\epsilon ] ) := \{ \bar{\thetabf} ( t ) : t \in [ 0 , t_\epsilon ] \}$, is contained in~$\D_{\tilde{t} , \epsilon}$.
For any $t \in [ 0 , t_\epsilon ]$, the line segment (in~$\R^d$) between $\bar{\thetabf} ( \lfloor t / \eta \rfloor \eta )$ and~$\bar{\thetabf} ( t )$ is a subset of~$\bar{\thetabf} ( [ 0 , t_\epsilon ] )$, thus is contained in~$\D_{\tilde{t} , \epsilon}$ as well.
We therefore have, for any $t \in [ 0 , t_\epsilon ]$:
\beas
\| \tfrac{d}{dt} \bar{\thetabf} ( t^+ ) - ( - \nabla f ( \bar{\thetabf} ( t ) ) ) \|_2 &=& \| {-} \nabla f ( \bar{\thetabf} ( \lfloor t / \eta \rfloor \eta ) ) - ( - \nabla f ( \bar{\thetabf} ( t ) ) ) \|_2 \\
&\leq& \beta_{\tilde{t} , \epsilon} \| \bar{\thetabf} ( t ) - \bar{\thetabf} ( \lfloor t / \eta \rfloor \eta ) \|_2 \\
&=& \beta_{\tilde{t} , \epsilon} \| \nabla f ( \bar{\thetabf} ( \lfloor t / \eta \rfloor \eta ) ) \|_2 ( t - \lfloor t / \eta \rfloor \eta ) \\
&\leq& \beta_{\tilde{t} , \epsilon} \gamma_{\tilde{t} , \epsilon} \eta
\text{\,,}
\eeas
where $\tfrac{d}{dt} \bar{\thetabf} ( t^+ )$ represents the right derivative of~$\bar{\thetabf} ( \cdot )$ at time~$t$.
The Fundamental Theorem (Theorem~\ref{theorem:fundamental}) may thus be applied with $\delta ( t ) = \beta_{\tilde{t} , \epsilon} \gamma_{\tilde{t} , \epsilon} \eta$ for all $t \in [ 0 , t_\epsilon ]$, yielding:
\[
\| \thetabf ( t_\epsilon ) - \bar{\thetabf} ( t_\epsilon ) \|_2 \leq e^{\int_0^{t_\epsilon} m ( t' ) dt'} \| \thetabf ( 0 ) - \bar{\thetabf} ( 0 ) \|_2 + \beta_{\tilde{t} , \epsilon} \gamma_{\tilde{t} , \epsilon} \eta \smallint\nolimits_0^{t_\epsilon} e^{\int_{t'}^{t_\epsilon} m ( t'' ) dt''} dt'
\text{\,.}
\]
By our assumption on the step size (Equation~\eqref{eq:gf_gd_eta}):
\[
\eta \, < \, \frac{\epsilon - e^{\int_0^{t_\epsilon} m ( t' ) dt'} \norm{\thetabf_0 - \thetabf ( 0 )}_2}{\beta_{\tilde{t} , \epsilon} \gamma_{\tilde{t} , \epsilon} \int_0^{t_\epsilon} e^{\int_{t'}^{t_\epsilon} m ( t'' ) dt''} dt'}
\text{\,.}
\]
Combining the latter two inequalities, we obtain $\| \thetabf ( t_\epsilon ) - \bar{\thetabf} ( t_\epsilon ) \|_2 < \epsilon$.
Since it was previously noted that $\| \bar{\thetabf} ( t_\epsilon ) - \thetabf ( t_\epsilon ) \|_2 = \epsilon$, our proof by contradiction is complete.
$\qed$

\subsection{Proof of Corollary~\ref{corollary:gf_gd_coarse}} \label{app:proof:gf_gd_coarse}  
  
Non-negativity and $\beta$-smoothness of~$f ( \cdot )$ imply $\| \nabla f ( \q ) \|_2 \leq \sqrt{2 \beta f ( \q )}$ for all~$\q \in \R^d$.
Using this inequality, along with the fact that $f ( \cdot )$ is non-increasing during gradient flow, we have:
\[
\sup\nolimits_{t \in [ 0 , t_e )} \| \nabla f ( \thetabf ( t ) ) \|_2 \leq \sup\nolimits_{t \in [ 0 , t_e )} \sqrt{2 \beta f ( \thetabf ( t ) )} \leq \sqrt{2 \beta f ( \thetabf ( 0 ) )}
\text{\,.}
\]
If $\q \in \R^d$ lies no more than $\epsilon$-away from~$\thetabf ( \cdot )$, \ie~$\exists t \in [ 0 , t_e ) : \| \q - \thetabf ( t ) \|_2 \leq \epsilon$, then $\beta$-smoothness implies $\| \nabla f ( \q ) \|_2 \leq \| \nabla f ( \thetabf ( t ) ) \|_2 + \beta \epsilon$, which in turn means $\| \nabla f ( \q ) \|_2 \leq \sqrt{2 \beta f ( \thetabf ( 0 ) )} + \beta \epsilon$.
We may therefore call Theorem~\ref{theorem:gf_gd} with $\gamma_{\tilde{t} , \epsilon} = \sqrt{2 \beta f ( \thetabf ( 0 ) )} + \beta \epsilon$, alongside $\beta_{\tilde{t} , \epsilon} = \beta$ and~$m ( \cdot ) \equiv m$.
Simplifying the resulting bound on the step size (Equation~\eqref{eq:gf_gd_eta}) then completes the proof.
$\qed$

\subsection{Proof of Lemma~\ref{lemma:lnn_hess}} \label{app:proof:lnn_hess}

Recall that $\boldsymbol{\theta}\in\mathbb{R}^{d}$ is an arrangement of $(W_{1},W_{2},...,W_{n})\in\mathbb{R}^{d_{1},d_{0}}\times\mathbb{R}^{d_{2},d_{1}}\times\cdots\times\mathbb{R}^{d_{n},d_{n-1}}$ as a vector.
Let $(\Delta W_{1},\Delta W_{2},...,\Delta W_{n})\in\mathbb{R}^{d_{1},d_{0}}\times\mathbb{R}^{d_{2},d_{1}}\times\cdots\times\mathbb{R}^{d_{n},d_{n-1}}$, and denote by $\Delta\boldsymbol{\theta}\in\mathbb{R}^{d}$ its arrangement as a vector in corresponding order.
Denote:
\begin{equation}
\begin{aligned} & \Delta^{(1)}:={\textstyle \sum\nolimits _{j=1}^{n}}W_{n:j+1}(\Delta W_{j})W_{j-1:1},\\
	& \Delta^{(2)}:={\textstyle \sum\nolimits _{1\leq j<j'\leq n}}W_{n:j'+1}(\Delta W_{j'})W_{j'-1:j+1}(\Delta W_{j})W_{j-1:1},\\
	& \Delta^{(3:n)}:=(W_{n}+\Delta W_{n})\cdots(W_{1}+\Delta W_{1})-W_{n:1}-\Delta^{(1)}-\Delta^{(2)}\text{\,.}
\end{aligned}
	\label{app:proof:lnn_hess:eq:delta_definition}
\end{equation}
We now develop a second-order Taylor expansion of $f(\boldsymbol{\theta})$.
Since the matrix tuple corresponding to $(\boldsymbol{\theta}+\Delta\boldsymbol{\theta})$ is $\big( (W_{1}+\Delta W_{1}),...,(W_{n}+\Delta W_{n}) \big)$, and 
$f(\boldsymbol{\theta})=\phi(W_{n:1})$ (see beginning of Subsubsection \ref{sec:roughly_convex:fnn:lin}) on an open region containing $\boldsymbol{\theta}$, for sufficiently small $\Delta\boldsymbol{\theta}$ we obtain: 
\begin{equation}
	f(\boldsymbol{\theta}+\Delta\boldsymbol{\theta})=\phi\Big((W_{n}+\Delta W_{n})...(W_{1}+\Delta W_{1})\Big)=\phi\Big(W_{n:1}+\Delta^{(1)}+\Delta^{(2)}+\Delta^{(3:n)}\Big)\text{\,.}
	\label{app:proof:lnn_hess:eq:f_equal_phi}
\end{equation}
Let $\Delta W\in\mathbb{R}^{d_{n},d_{0}}$, the second-order Taylor expansion of the twice continuously differentiable $\phi(\cdot)$ at the point $W_{n:1}$ is given by:
\begin{equation}
	\phi(W_{n:1}+\Delta W)
	=\phi(W_{n:1})
	+\bigl\langle\nabla\phi(W_{n:1}),\Delta W\bigr\rangle
	+\tfrac{1}{2}\nabla^{2}\phi(W_{n:1})\left[\Delta W\right]
	+
	{\scriptstyle \mathcal{O}}(\left\Vert \Delta W\right\Vert _{F}^{2})
	\text{\,,}
	\label{app:proof:lnn_hess:eq:phi_taylor}
\end{equation}
where the ${\scriptstyle \mathcal{O}}(\cdot)$ notation refers to some expression satisfying  $\lim_{a\rightarrow0}\big({\scriptstyle \mathcal{O}}(a)/a\big)=0$.
We continue to develop Equation (\ref{app:proof:lnn_hess:eq:f_equal_phi}) using Equation (\ref{app:proof:lnn_hess:eq:phi_taylor}):
\[
\begin{aligned}
	f(\boldsymbol{\theta}+\Delta\boldsymbol{\theta})=\;&\phi\left(W_{n:1}+(\Delta^{(1)}+\Delta^{(2)}+\Delta^{(3:n)})\right)\\[2mm]
	=\; & \phi\left(W_{n:1}\right)+\bigl\langle\nabla\phi\left(W_{n:1}\right),\Delta^{(1)}+\Delta^{(2)}+\Delta^{(3:n)}\bigr\rangle+\\
	& \frac{1}{2}\nabla^{2}\phi\left(W_{n:1}\right)\left[\Delta^{(1)}+\Delta^{(2)}+\Delta^{(3:n)}\right]+{\scriptstyle \mathcal{O}}(\bigl\Vert \Delta^{(1)}+\Delta^{(2)}+\Delta^{(3:n)} \bigr\Vert _{F}^{2})
	\\[2mm]
	=\;& \phi\left(W_{n:1}\right)+\bigl\langle\nabla\phi\left(W_{n:1}\right),\Delta^{(1)}\bigr\rangle+\bigl\langle\nabla\phi\left(W_{n:1}\right),\Delta^{(2)}\bigr\rangle+\bigl\langle\nabla\phi\left(W_{n:1}\right),\Delta^{(3:n)}\bigr\rangle+
	\\
	& \frac{1}{2}\nabla^{2}\phi\left(W_{n:1}\right)\left[\Delta^{(1)}\right]+\frac{1}{2}\nabla^{2}\phi\left(W_{n:1}\right)\left[\Delta^{(2)}+\Delta^{(3:n)}\right]+\\
	& 2\cdot\frac{1}{2}\nabla^{2}\phi\left(W_{n:1}\right)\left[\Delta^{(1)},\Delta^{(2)}+\Delta^{(3:n)}\right]+{\scriptstyle \mathcal{O}}(\bigl\Vert\Delta^{(1)}+\Delta^{(2)}+\Delta^{(3:n)}\bigr\Vert_{F}^{2})\text{\,,}
\end{aligned}
\]
where in the last transition we view $\nabla^{2}\phi$ as both a quadratic and a bilinear form (see Subappendix~\ref{app:proof:notations}).
Notice that the following terms $\bigl\langle\nabla\phi\left(W_{n:1}\right),\Delta^{(3:n)}\bigr\rangle$,
$\nabla^{2}\phi\left(W_{n:1}\right)\left[\Delta^{(2)}+\Delta^{(3:n)}\right]$, 
$\nabla^{2}\phi\left(W_{n:1}\right)\left[\Delta^{(1)},\Delta^{(2)} + \Delta^{(3:n)}\right]$ 
and ${\scriptstyle \mathcal{O}}(\left\Vert \Delta^{(1)}+\Delta^{(2)}+\Delta^{(3:n)} \right\Vert _{F}^{2})$ 
are all ${\scriptstyle \mathcal{O}}(\left\Vert \Delta \boldsymbol{\theta} \right\Vert_{F}^{2})$, thus:
\begin{equation*}
	\begin{aligned}
		&f(\boldsymbol{\theta}+\Delta\boldsymbol{\theta})
		\\
		&\hspace{1mm}=
		\phi\left(W_{n:1}\right)+\bigl\langle\nabla\phi\left(W_{n:1}\right),\Delta^{(1)}\bigr\rangle+\bigl\langle\nabla\phi\left(W_{n:1}\right),\Delta^{(2)}\bigr\rangle+\frac{1}{2}\nabla^{2}\phi\left(W_{n:1}\right)\left[\Delta^{(1)}\right]+o\big(\left\Vert \Delta \boldsymbol{\theta} \right\Vert_{F} ^{2}\big)\text{.}
		\label{app:proof:lnn_hess:eq1}
	\end{aligned}
\end{equation*}
This is a Taylor expansion of $f\hspace{-0.1mm}(\cdot)$ at $\boldsymbol{\theta}$ with
a constant term $\phi\hspace{-0.25mm}\left(W_{n:1}\hspace{-0.1mm}\right)$,
a linear term  $\hspace{-0.25mm}\bigl\langle\hspace{-0.25mm}\nabla\phi\hspace{-0.25mm}\left(W_{n:1}\right)\hspace{-0.4mm},\hspace{-0.25mm}\Delta^{(1)}\hspace{-0.25mm}\bigr\rangle\hspace{-0.15mm}$,
a quadtratic term  $\bigl\langle\nabla\phi\left(W_{n:1}\right),\Delta^{(2)}\bigr\rangle+\frac{1}{2}\nabla^{2}\phi\left(W_{n:1}\right)\left[\Delta^{(1)}\right]$,
and a remainder term of ${\scriptstyle \mathcal{O}}(\left\Vert \Delta \boldsymbol{\theta} \right\Vert_{F} ^{2})$.
From uniqueness of the Taylor expansion it follows that the quadratic term is equal to $\frac{1}{2}\nabla^{2}f(\boldsymbol{\theta})\left[\Delta W_{1},...,\Delta W_{n}\right]$. This implies:
\[
\begin{aligned}\nabla^{2}f(\boldsymbol{\theta})\left[\Delta W_{1},...,\Delta W_{n}\right]=\; & \nabla^{2}\phi\left(W_{n:1}\right)\left[\Delta^{(1)}\right]+2\bigl\langle\nabla\phi\left(W_{n:1}\right),\Delta^{(2)}\bigr\rangle\\
	=\; & \nabla^{2}\phi\left(W_{n:1}\right)\Bigl[{\textstyle \sum\nolimits _{j=1}^{n}}W_{n:j+1}(\Delta W_{j})W_{j-1:1}\Bigr]+\\
	& 2\text{Tr}\Bigl(\nabla\phi\left(W_{n:1}\right)^{\top}{\textstyle \sum\nolimits _{1\leq j<j'\leq n}}W_{n:j'+1}(\Delta W_{j'})W_{j'-1:j+1}(\Delta W_{j})W_{j-1:1}\Bigr)\text{\,,}
\end{aligned}
\]
where the last transition follows from plugging in the definitions of $\Delta^{(1)}$ and $\Delta^{(2)}$ (see Equation (\ref{app:proof:lnn_hess:eq:delta_definition})). $\qed$

\subsection{Proof of Proposition~\ref{prop:lnn_hess_arbitrary_neg}} \label{app:proof:lnn_hess_arbitrary_neg}
Since $\nabla\phi(0)\neq0$, there exists $(\Delta W_{1}',\Delta W_{2}')\in\mathbb{R}^{d_{1},d_{0}}\times\mathbb{R}^{d_{2},d_{1}}$ and  $(W_{3}',...,W_{n}')\in\mathbb{R}^{d_{3},d_{2}}\times\cdots\times\mathbb{R}^{d_{n},d_{n-1}}$ such that  $\bigl\langle\nabla\phi(0),W_{n}'\cdots W_{3}'\Delta W_{2}'\Delta W_{1}'\bigr\rangle>0$. 
Notice that none of the following matrices $\Delta W_{1}',\Delta W_{2}',W_{3}',...,W_{n}'$ are equal to zero.
Define (while recalling the assumption of $n\geq3$):
\[
\begin{aligned}\Delta W_{1} & :=\Delta W_{1}'\in\mathbb{R}^{d_{1},d_{0}}\text{\,,}\\
	\Delta W_{2} & :=\Delta W_{2}'\in\mathbb{R}^{d_{2},d_{1}}\text{\,,}\\
	\Delta W_{3} & :=0\in\mathbb{R}^{d_{3},d_{2}}\text{\,,}\\
	\Delta W_{j} & :=0\in\mathbb{R}^{d_{j},d_{j-1}}\text{ for }j\in\{1,2,...,n\}/\{1,2,3\}\text{\,.}
\end{aligned}
\]
For some arbitrary $c>0$, we define:
\[
\begin{aligned}W_{1} & :=0\in\mathbb{R}^{d_{1},d_{0}}\text{\,,}\\[1.25mm]
	W_{2} & :=0\in\mathbb{R}^{d_{2},d_{1}}\text{\,,}\\[-1.75mm]
	W_{3} & :=W_{3}'\frac{-c\cdot{\textstyle \sum\nolimits _{1\leq j\leq n}}\bigl\Vert\Delta W_{j}\bigr\Vert_{F}^{2}}{2\bigl\langle\nabla\phi\left(0\right),W_{n}'\cdots W_{3}'\Delta W_{2}'\Delta W_{1}'\bigr\rangle}\in\mathbb{R}^{d_{3},d_{2}}\text{\,,}\\[0.1mm]
	W_{j} & :=W_{j}'\in\mathbb{R}^{d_{j},d_{j-1}}\text{ for }j\in\{1,2,...,n\}/\{1,2,3\}\text{\,.}
\end{aligned}
\]
Recall that we denote by  $\boldsymbol{\theta}\in\mathbb{R}^{d}$ the arrangement of $(W_{1},W_{2},...,W_{n})$ as a vector.
As shown in Lemma \ref{lemma:lnn_hess}:
\begin{equation}
\begin{aligned} & \nabla^{2}f(\boldsymbol{\theta})\left[\Delta W_{1},...,\Delta W_{n}\right]=\nabla^{2}\phi\left(W_{n:1}\right)\Bigl[{\textstyle \sum\nolimits _{j=1}^{n}}W_{n:j+1}(\Delta W_{j})W_{j-1:1}\Bigr]\\
	& \quad\quad+2\text{Tr}\Bigl(\nabla\phi\left(W_{n:1}\right)^{\top}{\textstyle \sum\nolimits _{1\leq j<j'\leq n}}W_{n:j'+1}(\Delta W_{j'})W_{j'-1:j+1}(\Delta W_{j})W_{j-1:1}\Bigr)\text{\,.}
\end{aligned}
\label{app:proof:lnn_hess_arbitrary_neg:eq:hessian_formula}
\end{equation}
Notice the first summand in the right-hand side of Equation (\ref{app:proof:lnn_hess_arbitrary_neg:eq:hessian_formula}) is equal to zero:
\begin{equation}
\nabla^{2}\phi(W_{n:1})\bigl[{\textstyle \sum\nolimits _{j=1}^{n}}W_{n:j+1}(\Delta W_{j})W_{j-1:1}\bigr]=\nabla^{2}\phi(W_{n:1})\bigl[\Sigma_{j=1}^{n}0\bigr]=0\text{\,.}
\label{app:proof:lnn_hess_arbitrary_neg:eq:first_term}
\end{equation}
We develop the expression of the second summand in the right-hand side of Equation (\ref{app:proof:lnn_hess_arbitrary_neg:eq:hessian_formula}):
\begin{equation}
\begin{aligned} &\hspace{-1.5mm} \:2\text{Tr}\Bigl(\nabla\phi\left(W_{n:1}\right)^{\top}{\textstyle \sum\nolimits _{1\leq j<j'\leq n}}W_{n:j'+1}(\Delta W_{j'})W_{j'-1:j+1}(\Delta W_{j})W_{j-1:1}\Bigr)\\[0.25mm]
	& =2\bigl\langle\nabla\phi\left(W_{n:1}\right),{\textstyle \sum\nolimits _{1\leq j<j'\leq n}}W_{n:j'+1}(\Delta W_{j'})W_{j'-1:j+1}(\Delta W_{j})W_{j-1:1}\bigr\rangle\\[1mm]
	& =2\bigl\langle\nabla\phi\left(0\right),W_{n}\cdots W_{3}\Delta W_{2}\Delta W_{1}\bigr\rangle\\
	& =-c\cdot{\textstyle \sum\nolimits _{1\leq j\leq n}}\bigl\Vert\Delta W_{j}\bigr\Vert_{F}^{2}\text{\,,}
\end{aligned}
\label{app:proof:lnn_hess_arbitrary_neg:eq:second_term}
\end{equation}
where the last transition follows by plugging in the definitions of $\Delta W_1$, $\Delta W_2$ and $W_{j}$ for $j\in[n]/\{1,2\}$.
Plugging in Equations (\ref{app:proof:lnn_hess_arbitrary_neg:eq:first_term}) and  (\ref{app:proof:lnn_hess_arbitrary_neg:eq:second_term}) in Equation (\ref{app:proof:lnn_hess_arbitrary_neg:eq:hessian_formula}), we obtain:
\begin{equation}
\begin{aligned}
	\nabla^{2}f(\boldsymbol{\theta})\left[\Delta W_{1},...,\Delta W_{n}\right]=\; & -c\cdot{\textstyle \sum\nolimits _{1\leq j\leq n}}\bigl\Vert\Delta W_{j}\bigr\Vert_{F}^{2}\text{\,.}
\end{aligned}
\label{app:proof:lnn_hess_arbitrary_neg:eq:hessian_negative_c}
\end{equation}
Noticing that $\sum_{1\leq j\leq n}\Vert\Delta W_{j}\Vert_{F}^{2}\neq0$, Equation (\ref{app:proof:lnn_hess_arbitrary_neg:eq:hessian_negative_c}) implies $\lambda_{\text{min}}\bigl(\nabla^{2}f(\boldsymbol{\theta})\bigr)\leq-c$.
This bound holds for every $c>0$, thus yielding the desired result (\ie~$\text{inf}_{\boldsymbol{\theta}\in\mathbb{R}^{d}}\lambda_{\text{min}}\bigl(\nabla^{2}f(\boldsymbol{\theta})\bigr)=-\infty$).
$\qed$

\subsection{Proof of Lemma~\ref{lemma:lnn_hess_lb}} \label{app:proof:lnn_hess_lb}

Recall that $\boldsymbol{\theta}\in\mathbb{R}^{d}$ is an arrangement of $(W_{1},W_{2},...,W_{n})\in\mathbb{R}^{d_{1},d_{0}}\times\mathbb{R}^{d_{2},d_{1}}\times\cdots\times\mathbb{R}^{d_{n},d_{n-1}}$ as a vector.
Let $(\Delta W_{1},\Delta W_{2},...,\Delta W_{n})\in\mathbb{R}^{d_{1},d_{0}}\times\mathbb{R}^{d_{2},d_{1}}\times\cdots\times\mathbb{R}^{d_{n},d_{n-1}}$, and denote by $\Delta\boldsymbol{\theta}\in\mathbb{R}^{d}$ its arrangement as a vector in corresponding order.
As shown in Lemma \ref{lemma:lnn_hess}:
\begin{equation*}
\begin{aligned} & \nabla^{2}f(\boldsymbol{\theta})\left[\Delta W_{1},...,\Delta W_{n}\right]=\nabla^{2}\phi\left(W_{n:1}\right)\Bigl[{\textstyle \sum\nolimits _{j=1}^{n}}W_{n:j+1}(\Delta W_{j})W_{j-1:1}\Bigr]\\
	& \quad\quad+2\text{Tr}\Bigl(\nabla\phi\left(W_{n:1}\right)^{\top}{\textstyle \sum\nolimits _{1\leq j<j'\leq n}}W_{n:j'+1}(\Delta W_{j'})W_{j'-1:j+1}(\Delta W_{j})W_{j-1:1}\Bigr)\text{\,.}
\end{aligned}
\label{app:proof:lnn_hess_lb:eq:hessian_formula}
\end{equation*}
Convexity of $\phi(\cdot)$ implies that $\nabla^{2}\phi\left(W_{n:1}\right)$ is positive semi-definite, thus:
\begin{equation*}
\nabla^{2}f(\boldsymbol{\theta}) \hspace{-0.5mm} \left[\Delta W_{1},...,\Delta W_{n}\right] \hspace{-0.5mm}\geq\hspace{-0.5mm} 2\text{Tr}\Bigl(\hspace{-0.5mm}\nabla\phi\left(W_{n:1}\right)^{\top}{\textstyle \hspace{-1mm} \sum\nolimits _{1\leq j<j'\leq n}}W_{n:j'+1}(\Delta W_{j'})W_{j'-1:j+1}(\Delta W_{j})W_{j-1:1}\hspace{-0.5mm}\Bigr)\text{.}
\end{equation*}
Using a simple corollary of Von-Neumann's trace inequality (see \cite{mirsky1975trace}): \begin{equation}
\begin{aligned} &\hspace{-1.5mm} \nabla^{2}f(\boldsymbol{\theta})\left[\Delta W_{1},...,\Delta W_{n}\right]\\
	& \geq-2\left\Vert \nabla\phi\left(W_{n:1}\right)\right\Vert _{n}\cdot\left\Vert {\textstyle \sum\nolimits _{1\leq j<j'\leq n}}W_{n:j'+1}(\Delta W_{j'})W_{j'-1:j+1}(\Delta W_{j})W_{j-1:1}\right\Vert _{s}\text{\,.}
\end{aligned}
	\label{app:proof:lnn_hess_lb:eq:von_neumann}
\end{equation}
Upper bound the nuclear norm:
\begin{equation}
	\left\Vert \nabla\phi\left(W_{n:1}\right)\right\Vert _{n}\leq
	\sqrt{\min\{d_{0},d_{n}\}}\left\Vert \nabla\phi\left(W_{n:1}\right)\right\Vert _{F}\text{\,.}
	\label{app:proof:lnn_hess_lb:eq:nuclear_norm_bound}
\end{equation}
The following bound holds:
\begin{equation}
\begin{aligned} &\hspace{-1mm} \left\Vert {\textstyle \sum\nolimits _{1\leq j<j'\leq n}}W_{n:j'+1}(\Delta W_{j'})W_{j'-1:j+1}(\Delta W_{j})W_{j-1:1}\right\Vert _{s}\\[1mm]
	& \leq\;{\textstyle \sum\nolimits _{1\leq j<j'\leq n}}\left\Vert W_{n:j'+1}(\Delta W_{j'})W_{j'-1:j+1}(\Delta W_{j})W_{j-1:1}\right\Vert _{s}\\[1mm]
	& \leq\;{\textstyle \sum\nolimits _{1\leq j<j'\leq n}}\;\bigl\Vert\Delta W_{j'}\bigr\Vert_{s}\bigl\Vert\Delta W_{j}\bigr\Vert_{s}\cdot{\textstyle \prod_{k\in[n]/\{j,j'\}}}\bigl\Vert W_{k}\bigr\Vert_{s}\\
	& \leq\;\:\max_{\substack{\mathcal{J}\subseteq[n]\\[0.15mm]
			|\mathcal{J}|=n-2
		}
	}\,\prod_{j\in\mathcal{J}}\bigl\Vert W_{j}\bigr\Vert_{s}\cdot{\textstyle \sum\nolimits _{1\leq j<j'\leq n}}\bigl\Vert\Delta W_{j'}\bigr\Vert_{s}\bigl\Vert\Delta W_{j}\bigr\Vert_{s}\text{\,,}
\end{aligned}
\label{app:proof:lnn_hess_lb:eq:spectral_norm_bound}
\end{equation}
where the first transition follows from triangle inequalities, 
the second inequality follows from sub-multiplicativity of the spectral norm,
and the last inequality follows from maximizing the term ${\textstyle \prod_{k\in[n]/\{j,j'\}}}\bigl\Vert W_{k}\bigr\Vert_{s}$ over $j,j'$.
Plugging Equations (\ref{app:proof:lnn_hess_lb:eq:nuclear_norm_bound}) and  (\ref{app:proof:lnn_hess_lb:eq:spectral_norm_bound}) into Equation (\ref{app:proof:lnn_hess_lb:eq:von_neumann}), we have:
\[
\begin{aligned} &\hspace{-1.5mm} \nabla^{2}f(\boldsymbol{\theta})\left[\Delta W_{1},...,\Delta W_{n}\right]\\
	& \geq-2\sqrt{\min\{d_{0},d_{n}\}}\left\Vert \nabla\phi\left(W_{n:1}\right)\right\Vert _{F}\max_{\substack{\mathcal{J}\subseteq[n]\\[0.25mm]
			|\mathcal{J}|=n-2
		}
	}\,\prod_{j\in\mathcal{J}}\|W_{j}\|_{s}\cdot{\textstyle \sum\nolimits _{1\leq j<j'\leq n}}\Vert \Delta W_{j'}\Vert _{s}\Vert \Delta W_{j}\Vert _{s}\text{\,.}
\end{aligned}
\]
It holds that:
\[
\begin{aligned} & \hspace{-1.5mm}{\textstyle \sum\nolimits _{1\leq j<j'\leq n}}\Vert\Delta W_{j'}\Vert_{s}\Vert\Delta W_{j}\Vert_{s}\\[1.5mm]
	& \leq\,{\textstyle \sum\nolimits _{1\leq j<j'\leq n}}\Vert\Delta W_{j'}\Vert_{F}\Vert\Delta W_{j}\Vert_{F}\\
	& =\tfrac{1}{2}\Big({\textstyle \sum_{j=1}^{n}}\Vert\Delta W_{j}\Vert_{F}\Big)^{2}-\tfrac{1}{2}{\textstyle \sum_{j=1}^{n}}\Vert\Delta W_{j}\Vert_{F}^{2}\\
	& \leq\tfrac{n}{2}{\textstyle \sum_{j=1}^{n}}\Vert\Delta W_{j}\Vert_{F}^{2}-\tfrac{1}{2}{\textstyle \sum_{j=1}^{n}}\Vert\Delta W_{j}\Vert_{F}^{2}\\[1.5mm]
	& =\tfrac{n-1}{2}{\textstyle \sum_{j=1}^{n}}\Vert\Delta W_{j}\Vert_{F}^{2}\text{\,,}
\end{aligned}
\]
where the last inequality follows from the fact that the one-norm of a vector in $\mathbb{R}^{n}$ is never greater than $\sqrt{n}$ times its euclidean-norm.
This leads us to:
\[
\begin{aligned} &\hspace{-1.5mm} \nabla^{2}f(\boldsymbol{\theta})\left[\Delta W_{1},...,\Delta W_{n}\right]\\
	 &\geq -(n-1)\sqrt{\min\{d_{0},d_{n}\}}\left\Vert \nabla\phi\left(W_{n:1}\right)\right\Vert _{F}\hspace{-0.5mm}\max_{\substack{\mathcal{J}\subseteq[n]\\[0.25mm]
			|\mathcal{J}|=n-2
		}
	}\,\prod_{j\in\mathcal{J}}\|W_{j}\|_{s}\cdot{\textstyle \sum\nolimits _{j=1}^n }\left\Vert \Delta W_{j}\right\Vert _{F}^{2}\text{\,.}
\end{aligned}
\]
The desired result readily follows:
\[
\begin{aligned}
	\lambda_{\min} ( \nabla^2 f ( \boldsymbol{\theta} ) ) \geq - (n-1) \sqrt{\min \{ d_0 , d_n \}} \, \| \nabla \phi ( W_{n : 1} ) \|_{F}\hspace{-0.5mm}\max_{\substack{\mathcal{J}\subseteq[n]\\[0.25mm]|\mathcal{J}|=n-2}}\,\prod_{j\in\mathcal{J}}\|W_{j}\|_{s} \text{\,.}
\end{aligned}
\]
$\qed$

\subsection{Proof of Proposition~\ref{prop:lnn_hess_lb_gf}} \label{app:proof:lnn_hess_lb_gf}
Denote by $\boldsymbol{\theta}(t)$ the time dependent gradient flow trajectory starting at $\boldsymbol{\theta}_{s}$ (\ie~$\boldsymbol{\theta}(0)=\boldsymbol{\theta}_{s}$) and  by $W_{1}(t),...,W_{n}(t)$ the corresponding time dependent curves of weight matrices induced by the flow. 
From the assumption  $\Vert\boldsymbol{\theta}_{s}\Vert_{2}\leq\epsilon$
we can infer $\Vert W_{j}(0)\Vert_{F}\leq\epsilon$ for all $j\in\{1,2,...,n\}$.
For $j\in\{1,2,...,n-1\}$:
\[
\begin{aligned} & \Vert W_{j+1}^{\top}(0)W_{j+1}(0)-W_{j}(0)W_{j}^{\top}(0)\Vert_{s}\\
	& \leq\Vert W_{j+1}^{\top}(0)W_{j+1}(0)\Vert_{s}+\Vert W_{j}(0)W_{j}^{\top}(0)\Vert_{s}\\
	& = \Vert W_{j+1}(0)\Vert_{s}^{2}+\Vert W_{j}(0)\Vert_{s}^{2}\\
	& \leq\Vert W_{j+1}(0)\Vert_{F}^{2}+\Vert W_{j}(0)\Vert_{F}^{2}\leq2\epsilon^{2}\leq(2\epsilon)^{2}\text{\,.}
\end{aligned}
\]
Theorem 2.2 from \cite{du2018algorithmic} states that $\frac{\partial}{\partial t}\left(W_{j}(t)W_{j}^{\top}(t)-W_{j+1}^{\top}(t)W_{j+1}(t)\right)=0$ for all $j\in\{1,2,...,n-1\}$ and $t\geq0$, thus:
\[
\Vert W_{j+1}^{\top}(t)W_{j+1}(t)-W_{j}(t)W_{j}^{\top}(t)\Vert_{s}
=\Vert W_{j+1}^{\top}(0)W_{j+1}(0)-W_{j}(0)W_{j}^{\top}(0)\Vert_{s}
\leq(2\epsilon)^{2} \text{\,.}
\]
We can rely on this condition in order to apply Lemma \ref{app:proof:lnn_hess_lb_gf:lemma} below and get that for all $t\geq0$:
\[
\text{max}_{j\in\{1,...,n\}}\Vert W_{j}(t)\Vert^{n}\leq\|W_{n:1}(t)\|_{s}+4n\epsilon\cdot
\max\big(1,\{\Vert W_{j}(t)\Vert_{s}\}_{j\in[n]}\big)^{2n}
\text{\,.}
\]
Combining the latter inequality together with the result of  Lemma \ref{lemma:lnn_hess_lb} (Equation (\ref{eq:lnn_hess_lb})),  we get:
\[
\begin{aligned} & \lambda_{min}(\nabla^{2}f(\boldsymbol{\theta}(t)))\\[1mm]
	& \geq-(n-1)\sqrt{\min\{d_{0},d_{n}\}}\,\|\nabla\phi(W_{n:1}(t))\|_{F}\max_{\substack{\mathcal{J}\subseteq[n]\\[0.25mm]
			|\mathcal{J}|=n-2
		}
	}{\textstyle \prod_{j\in\mathcal{J}}}\|W_{j}(t)\|_{s}\\[-1.5mm]
	& \geq-(n-1)\sqrt{\min\{d_{0},d_{n}\}}\,\|\nabla\phi(W_{n:1}(t))\|_{F}\:\text{max}_{j\in[n]}\Vert W_{j}(t)\Vert_{s}^{n-2}\\[1.5mm]
	& =-(n-1)\sqrt{\min\{d_{0},d_{n}\}}\,\|\nabla\phi(W_{n:1}(t))\|_{F}\:\big(\text{max}_{j\in[n]}\Vert W_{j}(t)\Vert_{s}^{n}\big)^{\frac{n-2}{n}}\\[1.5mm]
	& \geq-(n-1)\sqrt{\min\{d_{0},d_{n}\}}\,\|\nabla\phi(W_{n:1}(t))\|_{F}\:\big(\|W_{n:1}(t)\|_{s}+4n\epsilon\,\max\big(1,\{\Vert W_{j}(t)\Vert_{s}\}_{j\in[n]}\big)^{2n}\big)^{\frac{n-2}{n}}\\[1.5mm]
	& \geq-(n-1)\sqrt{\min\{d_{0},d_{n}\}}\,\|\nabla\phi(W_{n:1}(t))\|_{F}\:\|W_{n:1}(t)\|_{s}^{\frac{n-2}{2}}\\[1.5mm]
	& \hspace{28mm}-(n-1)\sqrt{\min\{d_{0},d_{n}\}}\,\|\nabla\phi(W_{n:1}(t))\|_{F}\:\big(4n\epsilon\,\max\big(1,\{\Vert W_{j}(t)\Vert_{s}\}_{j\in[n]}\big)^{2n}\big)^{\frac{n-2}{n}}\text{\,,}
\end{aligned}
\]
where the last inequality follows from sub-additivity of any power between zero and one.
Rewriting the inequality such that we remove the time notation as to be consistent with the proposition statement, we obtain:
\[
\begin{aligned} & \lambda_{\min}(\nabla^{2}f(\boldsymbol{\theta}))\geq-(n-1)\sqrt{\min\{d_{0},d_{n}\}}\,\|\nabla\phi(W_{n:1})\|_{F}\:\|W_{n:1}\|_{s}^{1-2/n}
	\\
	&
	\qquad\qquad\,
	-(n-1)\sqrt{\min\{d_{0},d_{n}\}}\,\|\nabla\phi(W_{n:1})\|_{F}\:(4n)^{\frac{n-2}{n}}\max\big(1,\{\Vert W_{j}(t)\Vert_{s}\}_{j\in[n]}\big)^{2(n-2)}\,\epsilon^{\frac{n-2}{n}}\text{\,.}
\end{aligned}
\]
$\qed$
\begin{lemma}
	\label{app:proof:lnn_hess_lb_gf:lemma}
	Let $A_{i}\in\mathbb{R}^{d_{i},d_{i-1}}$ for $i\in[n]$.
	Denote $\Delta_{i}:=A_{i+1}^{\top}A_{i+1}-A_{i}A_{i}^{\top}$ for $i\in[n-1]$.
	Assume that $\Vert\Delta_{i}\Vert_{s} \leq\frac{1}{2n}$ for $i\in[n-1]$.
	It holds that:
	\[
	\text{max}_{i\in[n]}\Vert A_{i}\Vert_{s}^{n}\leq\Vert A_{n:1}\Vert_{s}+2n\sqrt{{\textstyle \max_{i\in[n-1]}}\Vert\Delta_{i}\Vert_{s}}\cdot{\textstyle \max_{A\in\{I,A_{1},...,A_{n}\}}}\Vert A\Vert_{s}^{2n}
	\text{\,,}
	\]
	where we denote $A_{j:i}$ as $A_{j}\cdots A_{i+1}A_{i}$ for $1\leq i<j\leq n$ and as an identity matrix (with size to be inferred by context) otherwise.	
\end{lemma}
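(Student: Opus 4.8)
The plan is to prove the inequality by a forward vector-chain argument, tracking how much a single well-chosen unit vector is amplified as it passes through the factors $A_1,A_2,\dots,A_n$. Write $\epsilon:=\max_{i\in[n-1]}\|\Delta_i\|_s$ and $M:=\max_{i\in[n]}\|A_i\|_s$; the case $M=0$ being trivial, assume $M>0$. The first, elementary, ingredient is that the balancedness-type hypothesis forces the layer norms to be nearly equal: since $A_{i+1}^\top A_{i+1}=A_iA_i^\top+\Delta_i$ and $\|A\|_s^2=\lambda_{\max}(A^\top A)=\lambda_{\max}(AA^\top)$, Weyl's inequality gives $\bigl|\,\|A_{i+1}\|_s^2-\|A_i\|_s^2\,\bigr|\le\|\Delta_i\|_s\le\epsilon$, and telescoping yields $\|A_j\|_s^2\ge M^2-(n-1)\epsilon$ for every $j\in[n]$.

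The core of the argument is the chain itself. Let $w_0$ be a unit top right singular vector of $A_1$, and recursively set $w_j:=A_jw_{j-1}$, $\hat w_j:=w_j/\|w_j\|$, and $c_j:=\|A_j\hat w_{j-1}\|^2=\|w_j\|^2/\|w_{j-1}\|^2$, so that $\|A_{n:1}\|_s\ge\|A_{n:1}w_0\|=\prod_{j=1}^n\sqrt{c_j}$ and $c_1=\|A_1\|_s^2$. The key claim is the \emph{additive} recursion $c_{j+1}\ge c_j-\epsilon$. To prove it, substitute $A_{j+1}^\top A_{j+1}=A_jA_j^\top+\Delta_j$ into the definition of $c_{j+1}$ to get $c_{j+1}=\|A_j^\top\hat w_j\|^2+\hat w_j^\top\Delta_j\hat w_j\ge\|A_j^\top\hat w_j\|^2-\epsilon$; and, since $\hat w_j$ is proportional to $A_j\hat w_{j-1}$, the Cauchy-Schwarz inequality applied to the unit vector $\hat w_{j-1}$ gives $\|A_j^\top\hat w_j\|^2=\|A_j^\top A_j\hat w_{j-1}\|^2/c_j\ge\bigl(\hat w_{j-1}^\top A_j^\top A_j\hat w_{j-1}\bigr)^2/c_j=c_j$. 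Iterating the recursion and feeding in $c_1=\|A_1\|_s^2\ge M^2-(n-1)\epsilon$ from the first step gives $c_j\ge M^2-2(n-1)\epsilon$ for all $j\in[n]$. In particular, when $M^2>2(n-1)\epsilon$ all the $c_j$ are strictly positive, so the chain never degenerates (every $w_j\neq0$ and every $\hat w_j$ is well defined) and we obtain $\|A_{n:1}\|_s\ge\bigl(M^2-2(n-1)\epsilon\bigr)^{n/2}$.

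It then remains to deduce $M^n\le\|A_{n:1}\|_s+2n\sqrt{\epsilon}\,\max\{1,M\}^{2n}$, which is routine casework using $\epsilon\le\tfrac1{2n}$. If $M^2\le2(n-1)\epsilon$ then $M<1$ and a direct estimate gives $M^n\le(2(n-1)\epsilon)^{n/2}\le2n\sqrt\epsilon$, so the inequality holds already with the trivial bound $\|A_{n:1}\|_s\ge0$; the trivial bound also suffices whenever $\epsilon\ge\tfrac1{4n^2}$, since then $2n\sqrt\epsilon\ge1\ge M^n/\max\{1,M\}^{2n}$. Finally, if $\epsilon<\tfrac1{4n^2}$ and $M^2>2(n-1)\epsilon$, I would combine $\|A_{n:1}\|_s\ge(M^2-2(n-1)\epsilon)^{n/2}$ with the mean value estimate $M^n-(M^2-2(n-1)\epsilon)^{n/2}\le n(n-1)\epsilon\,M^{n-2}$, which is at most $2n\sqrt\epsilon\,\max\{1,M\}^{2n}$ because $\sqrt\epsilon<\tfrac1{2n}$ and $M^{n-2}\le\max\{1,M\}^{2n}$.

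The only substantive step is the recursion $c_{j+1}\ge c_j-\epsilon$, and the subtlety I would watch most carefully is that the perturbation must accumulate \emph{additively} --- one $\epsilon$ per layer --- rather than multiplicatively; this is exactly what the Cauchy-Schwarz bound $\|A_j^\top\hat w_j\|^2\ge c_j$ provides, and it is why the chain is rooted at a top singular vector of $A_1$, so that $c_1=\|A_1\|_s^2$ is already within $(n-1)\epsilon$ of $M^2$. A second, minor, point is to check that the chain never collapses ($w_j\neq0$), which holds precisely in the regime $M^2>2(n-1)\epsilon$, the complementary regime being disposed of as the trivial case above.
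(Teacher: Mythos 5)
Your proof is correct, but it takes a genuinely different route from the paper's. The paper also roots its argument at a top right singular vector $v$ of $A_1$, but it then builds the interpolating scalar sequence $a_i = v^\top A_{n-i:1}^\top (A_{n-i+1}^\top A_{n-i+1})^i A_{n-i:1} v$, which connects $\|A_{n:1}v\|_2^2$ (at $i=1$) to $\|A_1\|_s^{2n}$ (at $i=n$) by swapping $A A^\top \to A^\top A$ one layer at a time; each swap is controlled by expanding $(A A^\top + \Delta)^i - (A A^\top)^i$ binomially and summing a geometric series, which is where the paper's $A_{\max}^{4n}\Delta_{\max}$ increments (and the factor $\max\{1,M\}^{2n}$ in the statement) come from. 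You instead propagate the vector forward through the product and track the per-layer gains $c_j = \|A_j \hat w_{j-1}\|^2$, and your key observation --- that Cauchy--Schwarz applied to $\hat w_{j-1}^\top A_j^\top A_j \hat w_{j-1}$ yields $\|A_j^\top \hat w_j\|^2 \ge c_j$, so the balancedness perturbation enters only additively as $\hat w_j^\top \Delta_j \hat w_j \ge -\epsilon$ --- replaces the paper's matrix-power expansion entirely. This buys you a cleaner and quantitatively tighter intermediate bound, $\|A_{n:1}\|_s \ge (M^2 - 2(n-1)\epsilon)^{n/2}$, with the large powers of $\max\{1,M\}$ appearing only in the final (and correct) conversion to the stated form via the mean-value estimate and your two trivial cases. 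All steps check out, including the induction guaranteeing $c_j > 0$ in the regime $M^2 > 2(n-1)\epsilon$ and the Weyl-inequality control $\bigl|\,\|A_{i+1}\|_s^2 - \|A_i\|_s^2\,\bigr| \le \|\Delta_i\|_s$, which the paper also derives (in a separate sub-argument) to relate $\|A_1\|_s$ to $\max_i \|A_i\|_s$.
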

\begin{proof}
	Define $A_{\text{max}}:=\max_{A\in\{I,A_{1},...,A_{n}\}}\Vert A\Vert_{s}$ and
	$\Delta_{\max}:=\text{max}_{i\in[n-1]}\Vert\Delta_{i}\Vert_{s}$.
	Let $\boldsymbol{v}\in\mathbb{R}^{d_{0}}$ such that  $\boldsymbol{v}\in\text{argmax}_{\Vert\boldsymbol{u}\Vert=1}\Vert A_{1}\boldsymbol{u}\Vert_{2}$.
	Define $a_{i}:=\boldsymbol{v}^{\top}A_{n-i:1}^{\top}(A_{n-(i-1)}^{\top}A_{n-(i-1)})^{i}A_{n-i:1}\boldsymbol{v}$ for $i\in[n]$.
	For $i\in [n-1]$ we have:
	\[	
	\begin{aligned} & a_{i}-a_{i+1}\\
		& =\boldsymbol{v}^{\top}A_{n-i:1}^{\top}(A_{n-(i-1)}^{\top}A_{n-(i-1)})^{i}A_{n-i:1}\boldsymbol{v}-\boldsymbol{v}^{\top}A_{n-(i+1):1}^{\top}(A_{n-i}^{\top}A_{n-i})^{i+1}A_{n-(i+1):1}\boldsymbol{v}\\
		& =\boldsymbol{v}^{\top}A_{n-i:1}^{\top}(A_{n-(i-1)}^{\top}A_{n-(i-1)})^{i}A_{n-i:1}\boldsymbol{v}-\boldsymbol{v}^{\top}A_{n-(i+1):1}^{\top}A_{n-i}^{\top}(A_{n-i}A_{n-i}^{\top})^{i}A_{n-i}A_{n-(i+1):1}\boldsymbol{v}\\
		& =\boldsymbol{v}^{\top}A_{n-i:1}^{\top}(A_{n-(i-1)}^{\top}A_{n-(i-1)})^{i}A_{n-i:1}\boldsymbol{v}-\boldsymbol{v}^{\top}A_{n-i:1}^{\top}(A_{n-i}A_{n-i}^{\top})^{i}A_{n-i:1}\boldsymbol{v}\\
		& =\boldsymbol{v}^{\top}A_{n-i:1}^{\top}(A_{n-i}A_{n-i}^{\top}+\Delta_{n-i})^{i}A_{n-i:1}\boldsymbol{v}-\boldsymbol{v}^{\top}A_{n-i:1}^{\top}(A_{n-i}A_{n-i}^{\top})^{i}A_{n-i:1}\boldsymbol{v}\\
		& =\boldsymbol{v}^{\top}A_{n-i:1}^{\top}\big((A_{n-i}A_{n-i}^{\top}+\Delta_{n-i})^{i}-(A_{n-i}A_{n-i}^{\top})^{i}\big)A_{n-i:1}\boldsymbol{v}\\[-0.5mm]
		& =\boldsymbol{v}^{\top}A_{n-i:1}^{\top}\Bigl({\textstyle \sum_{(b_{1},...,b_{i})\in\{0,1\}^{i}}}{\textstyle \prod_{b\in\{b_{1},...,b_{i}\}}}\big(bA_{n-i}A_{n-i}^{\top}+(1-b)\Delta_{n-i}\big)-(A_{n-i}A_{n-i}^{\top})^{i}\Bigr)A_{n-i:1}\boldsymbol{v}\\[-0.5mm]
		& =\boldsymbol{v}^{\top}A_{n-i:1}^{\top}\Bigl({\textstyle \sum_{(b_{1},...,b_{i})\in\{0,1\}^{i}\backslash(1,...,1)}}{\textstyle \prod_{b\in\{b_{1},...,b_{i}\}}}\big(bA_{n-i}A_{n-i}^{\top}+(1-b)\Delta_{n-i}\big)\Bigr)A_{n-i:1}\boldsymbol{v}\text{ ,}
	\end{aligned}
	\]
	where the fourth transition follows from the definition of $\Delta_{n-i}$ and the second to last transition follows from unrolling $(A_{n-i}A_{n-i}^{\top}+\Delta_{n-i})^{i}$.
	Taking absolute value on $a_{i}-a_{i+1}$ we obtain:
	\[
	\begin{aligned} & \left|a_{i}-a_{i+1}\right|\\
		& =\left|\boldsymbol{v}^{\top}A_{n-i:1}^{\top}\Bigl({\textstyle \sum_{(b_{1},...,b_{i})\in\{0,1\}^{i}\backslash(1,...,1)}}{\textstyle \prod_{b\in\{b_{1},...,b_{i}\}}}\big(bA_{n-i}A_{n-i}^{\top}+(1-b)\Delta_{n-i}\big)\Bigr)A_{n-i:1}\boldsymbol{v}\right|\\
		& \leq{\textstyle \sum_{(b_{1},...,b_{i})\in\{0,1\}^{i}\backslash(1,...,1)}}\left|\boldsymbol{v}^{\top}A_{n-i:1}^{\top}\Bigl({\textstyle \prod_{b\in\{b_{1},...,b_{i}\}}}\big(bA_{n-i}A_{n-i}^{\top}+(1-b)\Delta_{n-i}\big)\Bigr)A_{n-i:1}\boldsymbol{v}\right|\\
		& \leq{\textstyle \sum_{(b_{1},...,b_{i})\in\{0,1\}^{i}\backslash(1,...,1)}}\left\Vert A_{n-i:1}\boldsymbol{v}\right\Vert _{2}\Bigl\Vert{\textstyle \prod_{b\in\{b_{1},...,b_{i}\}}}\big(bA_{n-i}A_{n-i}^{\top}+(1-b)\Delta_{n-i}\big)\Bigr\Vert_{s}\left\Vert A_{n-i:1}\boldsymbol{v}\right\Vert _{2}\\
		& \leq{\textstyle \sum_{(b_{1},...,b_{i})\in\{0,1\}^{i}\backslash(1,...,1)}}\left\Vert A_{n-i:1}\right\Vert _{s}\Bigl({\textstyle \prod_{b\in\{b_{1},...,b_{i}\}}}\bigl\Vert bA_{n-i}A_{n-i}^{\top}+(1-b)\Delta_{n-i}\bigr\Vert_{s}\Bigr)\left\Vert A_{n-i:1}\right\Vert _{s}\\
		& \leq{\textstyle \sum_{(b_{1},...,b_{i})\in\{0,1\}^{i}\backslash(1,...,1)}}\left\Vert A_{n-i:1}\right\Vert _{s}^{2}{\textstyle \prod_{b\in\{b_{1},...,b_{i}\}}}\Bigl(b\big\Vert A_{n-i}A_{n-i}^{\top}\big\Vert_{s}+(1-b)\left\Vert \Delta_{n-i}\right\Vert _{s}\Bigr)\\
		& \leq{\textstyle \sum_{(b_{1},...,b_{i})\in\{0,1\}^{i}\backslash(1,...,1)}}A_{\max}^{2n}{\textstyle \prod_{b\in\{b_{1},...,b_{i}\}}}\big(bA_{\max}^{2}+(1-b)\Delta_{\max}\big)A_{\max}^{n}\\
		& =A_{\max}^{2n}\cdot\Bigl(\big(A_{\max}^{2}+\Delta_{\max}\big)^{i}-A_{\max}^{2i}\Bigr)\text{ ,}
	\end{aligned}
	\]
	where the second transition follows from the triangle inequality, the third from Cauchy–Schwarz and the definition of the spectral norm, the fourth from sub-multiplicativity of the spectral norm, the fifth from sub-additivity of  the spectral norm and the sixth from the definitions of $A_{\max}$ and $\Delta_{\max}$.
	We continue by unrolling $\big(A_{\max}^{2}+\Delta_{\max}\big)^{i}$:
	\[
	\begin{aligned} & \left|a_{i}-a_{i+1}\right|\\
		& \leq A_{\max}^{2n}\cdot\Bigl({\textstyle \sum_{k=0}^{i}{i \choose k}}A_{\max}^{2(i-k)}\Delta_{\max}^{k}-A_{\max}^{2i}\Bigr)\\
		& =A_{\max}^{2n}\cdot\Bigl({\textstyle \sum_{k=1}^{i}{i \choose k}}A_{\max}^{2(i-k)}\Delta_{\max}^{k}\Bigr)\\
		& \leq A_{\max}^{2n}\cdot\Bigl({\textstyle \sum_{k=1}^{i}}n^{k}A_{\max}^{2n}\Delta_{\max}^{k}\Bigr)\\
		& =A_{\max}^{4n}\cdot\Bigl({\textstyle \sum_{k=1}^{i}}\big(n\Delta_{\max}\big)^{k}\Bigr)\\
		& \leq A_{\max}^{4n}\cdot\Bigl({\textstyle \sum_{k=1}^{\infty}}\big(n\Delta_{\max}\big)^{k}\Bigr)\\
		& =A_{\max}^{4n}\cdot\tfrac{n\Delta_{\max}}{1-n\Delta_{\max}}\\[1mm]
		& \leq A_{\max}^{4n}\cdot2n\Delta_{\max}\text{ ,}
	\end{aligned}
	\]
	where the two last transitions follow from geometric series formula and the assumption  $\Delta_{\max}\leq\frac{1}{2n}$.
	Overall we have that for $i\in[n-1]$:
	\begin{equation}
		\left|a_{i}-a_{i+1}\right|\leq2nA_{\max}^{4n}\cdot\Delta_{\max}\text{ .}
		\label{app:proof:lnn_hess_lb_gf:lemma:eq:serie_difference_bound}
	\end{equation}
	The following bound holds:
	\[
	\begin{aligned}\Vert A_{n:1}\Vert_{s}^{2}\geq\; & \Vert A_{n:1}\boldsymbol{v}\Vert_{2}^{2}\\
		=\; & \boldsymbol{v}^{\top}A_{n:1}^{\top}A_{n:1}\boldsymbol{v}\\
		=\; & \boldsymbol{v}^{\top}A_{n-1:1}^{\top}(A_{n}^{\top}A_{n})^{1}A_{n-1:1}\boldsymbol{v}\\
		=\; & a_{1}\\
		\geq\; & a_{2}-|a_{2}-a_{1}|\\
		\geq\; & a_{3}-|a_{3}-a_{2}|-|a_{2}-a_{1}|\\
		\vdots\;\\
		\geq\; & a_{n}-{\textstyle \sum_{i=1}^{n-1}}\vert a_{i+1}-a_{i}\vert\\
		\geq\; & a_{n}-{\textstyle \sum_{i=1}^{n-1}}2nA_{\max}^{4n}\cdot\Delta_{\max}\\
		\geq\; & a_{n}-2n^{2}A_{\max}^{4n}\cdot\Delta_{\max}\\
		=\; & \boldsymbol{v}^{\top}(A_{1}^{\top}A_{1})^{n}\boldsymbol{v}-2n^{2}A_{\max}^{4n}\cdot\Delta_{\max}\\
		=\; & \Vert A_{1}\Vert_{s}^{2n}-2n^{2}A_{\max}^{4n}\cdot\Delta_{\max}\text{ ,}
	\end{aligned}
	\]
	where the second to last inequality follows from Equation (\ref{app:proof:lnn_hess_lb_gf:lemma:eq:serie_difference_bound}).
	Overall we have:
	\begin{equation}
		\Vert A_{1}\Vert_{s}^{2n}\leq\Vert A_{n:1}\Vert_{s}^{2}+2n^{2}A_{\max}^{4n}\cdot\Delta_{\max}\text{ .}
		\label{app:proof:lnn_hess_lb_gf:lemma:eq:A1_bound}
	\end{equation}
	For all $i\in[n-1]$:
	\[
	\begin{aligned}\Vert A_{i}\Vert_{s}^{2} & =\Vert A_{i}A_{i}^{\top}\Vert_{s}\\
		& =\Vert A_{i+1}^{\top}A_{i+1}-\Delta_{i}\Vert_{s}\\
		& \geq\Vert A_{i+1}^{\top}A_{i+1}\Vert_{s}-\Vert\Delta_{i}\Vert_{s}\\
		& \geq\Vert A_{i+1}\Vert_{s}^{2}-\Delta_{\max}\text{ .}
	\end{aligned}
	\]
	It follows that for $i\in[n-1]$:
	\[
	\begin{aligned}\Vert A_{i+1}\Vert_{s}^{2n} & \leq\left(\Vert A_{i}\Vert_{s}^{2}+\Delta_{\max}\right)^{n}\\
		& ={\textstyle \sum_{k=0}^{n}{n \choose k}}\Vert A_{i}\Vert^{2(n-k)}\Delta_{\max}^{k}\\
		& =\Vert A_{i}\Vert_{s}^{2n}+{\textstyle \sum_{k=1}^{n}{n \choose k}}\Vert A_{i}\Vert^{2(n-k)}\Delta_{\max}^{k}\\
		& \leq\Vert A_{i}\Vert_{s}^{2n}+A_{\max}^{2n}{\textstyle \sum_{k=1}^{\infty}\big(n\Delta_{\max}\big)^{k}}\\
		& =\Vert A_{i}\Vert_{s}^{2n}+A_{\max}^{2n}\cdot\tfrac{n\Delta_{\max}}{1-n\Delta_{\max}}\\
		& \leq\Vert A_{i}\Vert_{s}^{2n}+2nA_{\max}^{2n}\cdot\Delta_{\max}\text{\,,}
	\end{aligned}
	\]
	where the two last transitions follow from geometric series formula and the assumption $\Delta_{\max}\leq\frac{1}{2n}$.
	Using the above result repeatedly, we get that for $i \in [n-1]$:
	\[
	\begin{aligned}\Vert A_{i+1}\Vert_{s}^{2n} & \leq\Vert A_{i}\Vert_{s}^{2n}+2nA_{\max}^{2n}\cdot\Delta_{\max}\\
		& \vdots\\
		& \leq\Vert A_{1}\Vert_{s}^{2n}+i\cdot2nA_{\max}^{2n}\cdot\Delta_{\max}\\
		& \leq\Vert A_{1}\Vert_{s}^{2n}+2n^{2}A_{\max}^{2n}\cdot\Delta_{\max}\text{\,.}
	\end{aligned}
	\]
	Overall we have that for $i\in[n]$:
	\begin{equation}
	\Vert A_{i}\Vert_{s}^{2n}\leq\Vert A_{1}\Vert_{s}^{2n}+2n^{2}A_{\max}^{2n}\cdot\Delta_{\max}\text{\,.}
	\label{app:proof:lnn_hess_lb_gf:lemma:eq:Ai_bound}
	\end{equation}
	Combining Equations (\ref{app:proof:lnn_hess_lb_gf:lemma:eq:Ai_bound}) and (\ref{app:proof:lnn_hess_lb_gf:lemma:eq:A1_bound}) we get for $i\in[n]$:
	\[
	\Vert A_{i}\Vert_{s}^{2n}\leq\Vert A_{1}\Vert_{s}^{2n}+2n^{2}A_{\max}^{2n}\cdot\Delta_{\max}\leq\Vert A_{n:1}\Vert_{s}^{2}+4n^{2}A_{\max}^{4n}\cdot\Delta_{\max}\text{ .}
	\]
	This leads us to:
	\[
	\begin{aligned}{\textstyle \max_{i\in[n]}}\Vert A_{i}\Vert_{s}^{n} & \leq\sqrt{\Vert A_{n:1}\Vert_{s}^{2}+4n^{2}A_{\max}^{4n}\cdot\Delta_{\max}}\\[1mm]
		& \leq\sqrt{\Vert A_{n:1}\Vert_{s}^{2}}+\sqrt{4n^{2}A_{\max}^{4n}\cdot\Delta_{\max}}\\
		& =\Vert A_{n:1}\Vert_{s}
		+
		2n\sqrt{\text{max}_{i\in[n-1]}\Vert\Delta_{i}\Vert_{s}}
		\cdot
		{\textstyle \max_{A\in\{I,A_{1},...,A_{n}\}}}\Vert A\Vert_{s}^{2n}\text{ ,}
	\end{aligned}
	\]
	where the second transition follows from sub-additivity of square root and the last transition follows from the definitions of $A_{\max}$ and $\Delta_{\max}$.
\end{proof}


\subsection{Proof of Lemma~\ref{lemma:fnn_region_hess}} \label{app:proof:fnn_region_hess}
This proof is very similar to that of Lemma \ref{lemma:lnn_hess} (see Subappendix \ref{app:proof:lnn_hess}). 
We repeat all details for completeness. 
Recall that $\boldsymbol{\theta}\in\mathbb{R}^{d}$ is an arrangement of $(W_{1},W_{2},...,W_{n})\in\mathbb{R}^{d_{1},d_{0}}\times\mathbb{R}^{d_{2},d_{1}}\times\cdots\times\mathbb{R}^{d_{n},d_{n-1}}$ as a vector.
Let $(\Delta W_{1},\Delta W_{2},...,\Delta W_{n})\in\mathbb{R}^{d_{1},d_{0}}\times\mathbb{R}^{d_{2},d_{1}}\times\cdots\times\mathbb{R}^{d_{n},d_{n-1}}$, and denote by $\Delta\boldsymbol{\theta}\in\mathbb{R}^{d}$ its arrangement as a vector in corresponding order.
Denote the following for $i\in\{1,...,|\mathcal{S}|\}$:
\begin{align}
	\label{app:proof:fnn_hess:eq:delta1_definition}
	\Delta^{(1)}_{i} & \hspace{-0.7mm} :={\textstyle \sum\nolimits _{j=1}^{n}}(D'_{i,*}W_{*})_{n:j\text{+}1}D'_{i,j}(\Delta W_{j})(D'_{i,*}W_{*})_{j\text{-}1:1}
		\text{,}\\
	\label{app:proof:fnn_hess:eq:delta2_definition}
		\Delta^{(2)}_{i} & \hspace{-0.7mm} :={\textstyle \sum\nolimits _{1\leq j<j'\leq n}}(D'_{i,*}W_{*})_{n:j'\text{+}1}D'_{i,j'}(\Delta W_{j'})(D'_{i,*}W_{*})_{j'\text{-}1:j\text{+}1}D'_{i,j}(\Delta W_{j})(D'_{i,*}W_{*})_{j\text{-}1:1}
		\text{,}\\
	\label{app:proof:fnn_hess:eq:delta3_definition}
		\Delta^{(3:n)}_{i} & \hspace{-0.7mm} :=D'_{i,n}(W_{n}+\Delta W_{n}) \cdots D'_{i,1}(W_{1}+\Delta W_{1})-(D'_{i,*}W_{*})_{n:1}-\Delta_{i}^{(1)}-\Delta_{i}^{(2)} \text{.}
\end{align}
We now develop a second-order Taylor expansion of $f(\boldsymbol{\theta})$.
Since the matrix tuple corresponding to $(\boldsymbol{\theta}+\Delta\boldsymbol{\theta})$ is $\big( (W_{1}+\Delta W_{1}),...,(W_{n}+\Delta W_{n}) \big)$, and the function $f(\cdot)$ coincides with the function given in Equation (\ref{eq:train_loss_fnn_region}) on an open region containing $\boldsymbol{\theta}$, for sufficiently small $\Delta\boldsymbol{\theta}$ we obtain: 
\begin{equation}
	\begin{aligned} & f(\boldsymbol{\theta}+\Delta\boldsymbol{\theta})\\
		& =\frac{1}{|\mathcal{S}|}\sum_{i=1}^{|\mathcal{S}|}\ell\left(D'_{i,n}(W_{n}+\Delta W_{n})...D'_{i,1}(W_{1}+\Delta W_{1})\x_{i},y_{i}\right)\\
		& =\frac{1}{|\mathcal{S}|}\sum_{i=1}^{|\mathcal{S}|}\ell\left(\bigl((D'_{i,*}W_{*})_{n:1}+\Delta_{i}^{(1)}+\Delta_{i}^{(2)}+\Delta_{i}^{(3:n)}\bigr)\x_{i},y_{i}\right)\\
		& =\frac{1}{|\mathcal{S}|}\sum_{i=1}^{|\mathcal{S}|}\ell\left((D'_{i,*}W_{*})_{n:1}\x_{i}+\bigl(\Delta_{i}^{(1)}+\Delta_{i}^{(2)}+\Delta_{i}^{(3:n)}\bigr)\x_{i},y_{i}\right)\text{\,,}
	\end{aligned}
	\label{app:proof:fnn_hess:eq:f_equal_phi}
\end{equation}
where the second transition follows from the definition of $\Delta^{(3:n)}_{i}$ (Equation (\ref{app:proof:fnn_hess:eq:delta3_definition})).
Let $\Delta\boldsymbol{v}\in\mathbb{R}^{d_{n}}$. 
For every $i \in \{1,...,|\mathcal{S}|\}$, the second-order Taylor expansion of $\ell(\cdot)$ with respect to its first argument at the point $\big( (D'_{i,*}W_{*})_{n:1}\x_{i},y_{i} \big)$ is given by:
\begin{equation}
	\hspace{-1mm}
	\ell\bigl((D'_{i,*}W_{*})_{n:1}\x_{i}+\Delta\boldsymbol{v},y_{i}\bigr)=\ell\bigl((D'_{i,*}W_{*})_{n:1}\x_{i},y_{i}\bigr)+\bigl\langle\nabla\ell_{i},\Delta\boldsymbol{v}\bigr\rangle+\tfrac{1}{2}\nabla^{2}\ell_{i}[\Delta\boldsymbol{v}]+{\scriptstyle \mathcal{O}}\big(\left\Vert \Delta\boldsymbol{v}\right\Vert _{2}^{2}\big)\text{,}
	\label{app:proof:fnn_hess:eq:phi_taylor}
\end{equation}
where the ${\scriptstyle \mathcal{O}}(\cdot)$ notation refers to some expression satisfying $\lim_{a\rightarrow0}\big({\scriptstyle \mathcal{O}}(a)/a\big)=0$.
We continue to develop Equation (\ref{app:proof:fnn_hess:eq:f_equal_phi}) using Equation (\ref{app:proof:fnn_hess:eq:phi_taylor}):
\[
\begin{aligned}f(\boldsymbol{\theta} & +\Delta\boldsymbol{\theta})\\
	=\: & \frac{1}{|\mathcal{S}|}\sum_{i=1}^{|\mathcal{S}|}\Big(\ell\bigl((D'_{i,*}W_{*})_{n:1}\x_{i},y_{i}\bigr)+\bigl\langle\nabla\ell_{i},\bigl(\Delta_{i}^{(1)}+\Delta_{i}^{(2)}+\Delta_{i}^{(3:n)}\bigr)\x_{i}\bigr\rangle+\\
	& \quad\quad\quad\quad\tfrac{1}{2}\nabla^{2}\ell_{i}\bigl[\bigl(\Delta_{i}^{(1)}+\Delta_{i}^{(2)}+\Delta_{i}^{(3:n)}\bigr)\x_{i}\bigr]+{\scriptstyle \mathcal{O}}\bigl(\bigl\Vert\bigl(\Delta_{i}^{(1)}+\Delta_{i}^{(2)}+\Delta_{i}^{(3:n)}\bigr)\x_{i}\bigr\Vert_{2}^{2}\bigr)\:\Big)\\
	=\: & \frac{1}{|\mathcal{S}|}\sum_{i=1}^{|\mathcal{S}|}\ell\bigl((D'_{i,*}W_{*})_{n:1}\x_{i},y_{i}\bigr)+\\
	& \frac{1}{|\mathcal{S}|}\sum_{i=1}^{|\mathcal{S}|}\bigl\langle\nabla\ell_{i},\Delta_{i}^{(1)}\x_{i}\bigr\rangle+\bigl\langle\nabla\ell_{i},\Delta_{i}^{(2)}\x_{i}\bigr\rangle+\bigl\langle\nabla\ell_{i},\Delta_{i}^{(3:n)}\x_{i}\bigr\rangle+\\
	& \frac{1}{|\mathcal{S}|}\sum_{i=1}^{|\mathcal{S}|}\tfrac{1}{2}\nabla^{2}\ell_{i}\bigl[\Delta_{i}^{(1)}\x_{i}\bigr]+\tfrac{1}{2}\nabla^{2}\ell_{i}\bigl[\bigl(\Delta_{i}^{(2)}+\Delta_{i}^{(3:n)}\bigr)\x_{i}\bigr]+2\cdot\tfrac{1}{2}\nabla^{2}\ell_{i}\bigl[\Delta_{i}^{(1)}\x_{i},\bigl(\Delta_{i}^{(2)}+\Delta_{i}^{(3:n)}\bigr)\x_{i}\bigr]+\\
	& \frac{1}{|\mathcal{S}|}\sum_{i=1}^{|\mathcal{S}|}{\scriptstyle \mathcal{O}}\bigl(\bigl\Vert\bigl(\Delta_{i}^{(1)}+\Delta_{i}^{(2)}+\Delta_{i}^{(3:n)}\bigr)\x_{i}\bigr\Vert_{2}^{2}\bigr)\text{\,,}
\end{aligned}
\]
where in the last transition we view $\nabla^{2}\ell_{i}$ as both a quadratic and a bilinear form (see Subappendix~\ref{app:proof:notations}).
Notice that  $\bigl\langle\nabla \ell_i,\Delta_{i}^{(3:n)}\x_{i}\bigr\rangle$, $\frac{1}{2}\nabla^{2}\ell_i\bigl[\bigl(\Delta_{i}^{(2)}+\Delta_{i}^{(3:n)}\bigr)\x_{i}\bigr]$, $\nabla^{2}\ell_i\bigl[\Delta_{i}^{(1)}\x_{i},\bigl(\Delta_{i}^{(2)}+\Delta_{i}^{(3:n)}\bigr)\x_{i}\bigr]$ and ${\scriptstyle \mathcal{O}}\big(\bigl\Vert(\Delta_{i}^{(1)}+\Delta_{i}^{(2)}+\Delta_{i}^{(3:n)})\x_{i}\bigr\Vert_{2}^{2}\big)$ are all ${\scriptstyle \mathcal{O}}\big(\left\Vert \Delta \boldsymbol{\theta} \right\Vert_{2}^{2}\big)$, thus:
\[
\begin{aligned} & f(\boldsymbol{\theta}+\Delta\boldsymbol{\theta})\\
	&=\hspace{-0.75mm} \frac{1}{|\mathcal{S}|}\hspace{-1mm}\sum_{i=1}^{|\mathcal{S}|}\ell\bigl((D'_{i,*}W_{*})_{n:1}\x_{i},y_{i}\bigr)\hspace{-0.5mm}+\hspace{-0.5mm}\bigl\langle\nabla\ell_{i},\Delta_{i}^{(1)}\x_{i}\bigr\rangle\hspace{-0.5mm}+\hspace{-0.5mm}\bigl\langle\nabla\ell_{i},\Delta_{i}^{(2)}\x_{i}\bigr\rangle\hspace{-0.5mm}+\hspace{-0.5mm}\tfrac{1}{2}\nabla^{2}\ell_{i}\bigl[\Delta_{i}^{(1)}\x_{i}\bigr]\hspace{-0.5mm}+\hspace{-0.5mm}{\scriptstyle \mathcal{O}}\big(\left\Vert \Delta\boldsymbol{\theta}\right\Vert _{2}^{2}\big)\text{.}
\end{aligned}
\]
This is a Taylor expansion of $f(\cdot)$ evaluated at $\boldsymbol{\theta}$ with
a constant term $\frac{1}{|\mathcal{S}|}\sum_{i=1}^{|\mathcal{S}|}\ell\bigl((D'_{i,*}W_{*})_{n:1}\x_{i},y_{i}\bigr)$,
a linear term  $\frac{1}{|\mathcal{S}|}\sum_{i=1}^{|\mathcal{S}|}\bigl\langle\nabla\ell_{i},\Delta_{i}^{(1)}\x_{i}\bigr\rangle$,
a quadtratic term of two summands $\frac{1}{|\mathcal{S}|}\sum_{i=1}^{|\mathcal{S}|}\bigl\langle\nabla\ell_{i},\Delta_{i}^{(2)}\x_{i}\bigr\rangle+\tfrac{1}{2}\nabla^{2}\ell_{i}\bigl[\Delta_{i}^{(1)}\x_{i}\bigr]$,
and a remainder term of ${\scriptstyle \mathcal{O}}\big(\left\Vert \Delta\boldsymbol{\theta}\right\Vert _{2}^{2}\big)$.
From uniqueness of the Taylor expansion, the quadratic term must be equal to $\frac{1}{2}\nabla^{2}f(\boldsymbol{\theta})\left[\Delta W_{1},...,\Delta W_{n}\right]$. This implies:
\[
\begin{aligned}\nabla^{2} & f(\boldsymbol{\theta})\left[\Delta W_{1},...,\Delta W_{n}\right]\\
	= & \:\frac{1}{|\mathcal{S}|}\sum_{i=1}^{|\mathcal{S}|}\Bigl(\nabla^{2}\ell_{i}\bigl[\Delta_{i}^{(1)}\x_{i}\bigr]+2\bigl\langle\nabla\ell_{i},\Delta_{i}^{(2)}\x_{i}\bigr\rangle\Bigr)\\
	= & \:\frac{1}{|\mathcal{S}|}\sum_{i=1}^{|\mathcal{S}|}\biggl(\nabla^{2}\ell_{i}\Bigl[{\textstyle \sum_{j=1}^{n}}(D'_{i,*}W_{*})_{n:j\text{+}1}D'_{i,j}(\Delta W_{j})(D'_{i,*}W_{*})_{j\text{-}1:1}\x_{i}\Bigr]+\\[-1mm]
	& \:2\Bigl\langle\nabla\ell_{i},{\textstyle \sum_{1\leq j<j'\leq n}}\hspace{-0.5mm}(D'_{i,*}W_{*})_{n:j'\text{+}1}D'_{i,j'}(\Delta W_{j'})(D'_{i,*}W_{*})_{j'\text{-}1:j\text{+}1}D'_{i,j}(\Delta W_{j})(D'_{i,*}W_{*})_{j\text{-}1:1}\x_{i}\Bigr\rangle\,\biggr)\text{\,,}
\end{aligned}
\]
where the last transition follows from plugging in the definitions of $\Delta^{(1)}$ and $\Delta^{(2)}$ (see Equations (\ref{app:proof:fnn_hess:eq:delta1_definition}) and (\ref{app:proof:fnn_hess:eq:delta2_definition})). $\qed$

\subsection{Proof of Proposition~\ref{prop:fnn_hess_arbitrary_neg}} \label{app:proof:fnn_hess_arbitrary_neg}
From assumption \emph{(ii)} there exists some $\boldsymbol{\theta}\in\mathbb{R}^{d}$ such that $\sum_{i=1}^{|\mathcal{S}|}\nabla\ell(\boldsymbol{0},y_{i})^{\top}h_{\boldsymbol{\theta}}(\x_{i})\neq0$.
Define $\big(W_{1},W_{2},...,W_{n}\big)\in\mathbb{R}^{d_{1},d_{0}}\times\mathbb{R}^{d_{2},d_{1}}\times\cdots\times\mathbb{R}^{d_{n},d_{n-1}}$ to be the weight matrices constituting $\boldsymbol{\theta}$.
We may assume $\sum_{i = 1}^{| \S |} \nabla \ell ( \0 , y_i )^\top h_\thetabf ( \x_i ) < 0$ without loss of generality, as we can negate the vectors $h_\thetabf ( \x_i )\in \mathbb{R}^{d_{n}}$ for all $i\in\{ 1,2,...,|\mathcal{S}|\}$ by flipping the signs of the entries in~$\thetabf$ corresponding to the last weight matrix~$W_n$ (see Equation (\ref{eq:fnn})).
From continuity, there exists a neighborhood $\mathcal{N}$ of~$\thetabf$ such that for all $\tilde{\thetabf} \in \mathcal{N}$ it holds that $\sum_{i=1}^{|\S|}\nabla\ell(\0,y_{i})^{\top}h_{\tilde{\thetabf}}(\x_{i})<0$.
Moreover, as discussed in Subsubsection \ref{sec:roughly_convex:fnn:non_lin}, for almost all $\thetabf' \in \mathbb{R}^{d}$ there exists an open region $\D_{\thetabf'} \subseteq \R^d$ containing~$\thetabf'$, which is closed under positive rescaling of weight matrices and across which $f ( \cdot )$ coincides with a function as given in Equation (\ref{eq:train_loss_fnn_region}).
There must exist some $\thetabf'$ in the neighborhood $\mathcal{N}$ for which a region of the type $\D_{\thetabf'}$ exists.
We may assume, without loss of generality, that $\thetabf \in \D_{\thetabf'}$.
Notice that none of the  matrices $W_{1},W_{2},...,W_{n}$ are equal to zero (as that would lead to $\sum_{i=1}^{|\mathcal{S}|}\nabla\ell(\boldsymbol{0},y_{i})^{\top}h_{\boldsymbol{\theta}}(\x_{i})=0$).
Define the following weight matrices parameterized by $a>0$ 
(while recalling that $n\geq3$ by assumption \emph{(i)}):
\[
\begin{aligned}W_{1} & (a):=W_{1}\cdot a^{-2}\in\mathbb{R}^{d_{1},d_{0}}\text{\,,}\\
	W_{2} & (a):=W_{2}\cdot a^{-2}\in\mathbb{R}^{d_{2},d_{1}}\text{\,,}\\
	W_{3} & (a):=W_{3}\cdot a\in\mathbb{R}^{d_{3},d_{2}}\text{\,,}\\
	W_{j} & (a):=W_{j}\in\mathbb{R}^{d_{j},d_{j-1}}\text{ for }j\in \{1,2,...,n\} /\{1,2,3\}\text{\,,}
\end{aligned}
\]
and denote by  $\boldsymbol{\theta}(a)\in\mathbb{R}^{d}$ their corresponding weight setting.
Since $\mathcal{D}_{\boldsymbol{\theta}'}$ is closed under positive rescaling of weight matrices, it holds that  $\{\boldsymbol{\theta}(a)\: : \:a>0\}\subseteq\mathcal{D}_{\boldsymbol{\theta}'}$.
Define:
\[
\begin{aligned}\Delta W_{1} & :=W_{1}\in\mathbb{R}^{d_{1},d_{0}}\text{\,,}\\
	\Delta W_{2} & :=W_{2}\in\mathbb{R}^{d_{2},d_{1}}\text{\,,}\\
	\Delta W_{j} & :=0\in\mathbb{R}^{d_{j},d_{j-1}}\text{ for }j\in[n]/\{1,2\}
	\text{\,.}
\end{aligned}
\]
For $a>0\,,\,i \in \{ 1 , 2 , ...\, , | \S | \}$ and $j , j' \in \{ 1 , 2 , ...\, , n \}$, define $( D'_{i , *} W_{*}(a) )_{j' : j}$ to be the matrix $D'_{i , j'} W_{j'}(a) D'_{i , j' - 1} W_{j' - 1}(a) \cdots D'_{i , j} W_{j}(a)$ (where by convention $D'_{i , n} \in \R^{d_n , d_n}$ stands for identity) if $j \leq j'$, and an identity matrix (with size to be inferred by context) otherwise.
For $i \in \{ 1 , 2 , ...\, , | \S | \}$ and $a>0$ let $\nabla \ell_i (a) \in \R^{d_n}$ and~$\nabla^2 \ell_i(a) \in \R^{d_n , d_n}$ be the gradient and Hessian (respectively) of the loss~$\ell ( \cdot )$ at the point $\big( ( D'_{i , *} W_{*}(a) )_{n : 1} \x_i , y_i \big)$ with respect to its first argument.
For every $a > 0$, since $\boldsymbol{\theta}(a) \in \mathcal{D}_{\boldsymbol{\theta}'}$ we may apply Lemma~\ref{lemma:fnn_region_hess}, obtaining:
\begin{equation}
	\hspace{-5mm}
	\begin{aligned} & \hspace{-0.6mm} \nabla^{2}f\big(\boldsymbol{\theta}(a)\big)\left[\Delta W_{1},...,\Delta W_{n}\right]=\\
		& \frac{1}{|\mathcal{S}|}
		\hspace{-1.2mm}
		\sum_{i=1}^{|\mathcal{S}|}\hspace{-0.7mm}\nabla^{2}\ell_{i}(a)\Bigl[{\textstyle \sum_{j=1}^{n}}(D'_{i,*}W_{*}(a))_{n:j\text{+}1}D'_{i,j}(\Delta W_{j})(D'_{i,*}W_{*}(a))_{j\text{-}1:1}\x_{i}\Bigr]+\\[-0mm]
		& \frac{2}{|\mathcal{S}|}
		\hspace{-1.2mm}
		\sum_{i=1}^{|\mathcal{S}|}\hspace{-0.7mm}\nabla\ell_{i}(a)^{\hspace{-0.75mm}\top}\hspace{-4.7mm}
		\sum_{{\scriptstyle 1\leq j<j'\leq n}}\hspace{-4mm}
		(D'_{i,*}W_{*}(a))_{n:j'\text{+}1}D'_{i,j'}(\Delta W_{j'})(D'_{i,*}W_{*}(a))_{j'\text{-}1:j\text{+}1}D'_{i,j}(\Delta W_{j})(D'_{i,*}W_{*}(a))_{j\text{-}1:1}\x_{i}\text{,}
		\hspace{-15mm}
	\end{aligned}
\end{equation}
where we regard Hessians as quadratic forms (see Subappendix~\ref{app:proof:notations}).
Plugging in the definitions of $W_{j}(a)$ and $\Delta W_{j}$ for $j\in[n]$ we have:
\begin{equation}
	\begin{aligned} & \hspace{-1mm} \nabla^{2}f\big(\boldsymbol{\theta}(a)\big)\left[\Delta W_{1},...,\Delta W_{n}\right]\\
		& =\frac{1}{|\mathcal{S}|}\sum_{i=1}^{|\mathcal{S}|}\nabla^{2}\ell_{i}(a)\Bigl[2a^{-1}(D'_{i,*}W_{*})_{n:1}\x_{i}\Bigr]+\frac{2}{|\mathcal{S}|}\sum_{i=1}^{|\mathcal{S}|}\hspace{-0.5mm}\nabla\ell_{i}(a)^{\top}a(D'_{i,*}W_{*})_{n:1}\x_{i}\\
		& =\frac{4}{a^{2}}\cdot\frac{1}{|\mathcal{S}|}\sum_{i=1}^{|\mathcal{S}|}\nabla^{2}\ell_{i}(a)\Bigl[(D'_{i,*}W_{*})_{n:1}\x_{i}\Bigr]+a\cdot\frac{2}{|\mathcal{S}|}\sum_{i=1}^{|\mathcal{S}|}\hspace{-0.5mm}\nabla\ell_{i}(a)^{\top}h_{\boldsymbol{\theta}}(\x_{i})\text{ ,}
	\end{aligned}
	\label{app:proof:fnn_hess_arbitrary_neg:eq:hessian}
\end{equation}
where the second transition follows from pulling $2/a$ out of the quadratic operator and the fact that  $h_{\boldsymbol{\theta}}(\x_{i})=(D'_{i,*}W_{*})_{n:1}\x_{i}$.
Note that $\lim_{a\rightarrow\infty}(D'_{i,*}W_{*})_{n:1}=0$.
Since $\ell(\cdot)$ is twice continuously differentiable in its first argument, it holds that $\lim_{a\rightarrow\infty}\hspace{-1mm}\nabla^{2}\ell_{i}(a)\hspace{-0.5mm} = \hspace{-0.5mm}\lim_{a\rightarrow\infty}\hspace{-1mm}\nabla^{2}\ell\big((D'_{i,*}W_{*}(a))_{n:1}\x_{i},y_{i}\big) \hspace{-0.5mm} =\hspace{-1mm}\nabla^{2}\ell(\boldsymbol{0},y_{i})$, and similarly  $\lim_{a\rightarrow\infty}\hspace{-1mm}\nabla\ell_{i}(a)\hspace{-0.5mm}=\hspace{-0.5mm}\lim_{a\rightarrow\infty}\hspace{-1mm}\nabla\ell\big((D'_{i,*}W_{*}(a))_{n:1}\x_{i},y_{i}\big)\hspace{-0.5mm}=\hspace{-1mm}\nabla\ell(\boldsymbol{0},y_{i})$.
Therefore, in the limit $a\rightarrow\infty$, Equation~(\ref{app:proof:fnn_hess_arbitrary_neg:eq:hessian}) becomes:
\[
\begin{aligned} & \lim_{a\rightarrow\infty}\hspace{-1mm}\bigg(\nabla^{2}f\big(\boldsymbol{\theta}(a)\big)\left[\Delta W_{1},...,\Delta W_{n}\right]\biggr)\\
	& =\lim_{a\rightarrow\infty}\hspace{-1mm}\bigg(\hspace{-0.5mm}\frac{4}{a^{2}}\cdot\frac{4}{|\mathcal{S}|}\sum_{i=1}^{|\mathcal{S}|}\nabla^{2}\ell_{i}(a)\Bigl[(D'_{i,*}W_{*})_{n:1}\x_{i}\Bigr]\biggr)+\lim_{a\rightarrow\infty}\hspace{-1mm}\bigg(\hspace{-1mm}a\cdot\frac{2}{|\mathcal{S}|}\sum_{i=1}^{|\mathcal{S}|}\hspace{-0.5mm}\nabla\ell_{i}(a)^{\top}h_{\boldsymbol{\theta}}(\x_{i})\biggr)\\
	& =\lim_{a\rightarrow\infty}\hspace{-1mm}\bigg(\hspace{-0.5mm}\frac{4}{a^{2}}\biggr)\hspace{-0.5mm}\cdot\hspace{-0.5mm}\lim_{a\rightarrow\infty}\hspace{-1mm}\bigg(\hspace{-0.5mm}\frac{4}{|\mathcal{S}|}\hspace{-0.5mm}\sum_{i=1}^{|\mathcal{S}|}\nabla^{2}\ell_{i}(a)\Bigl[(D'_{i,*}W_{*})_{n:1}\x_{i}\Bigr]\biggr)\hspace{-0.5mm}+\hspace{-0.5mm}\lim_{a\rightarrow\infty}\hspace{-1mm}\bigg(\hspace{-1mm}a\cdot\lim_{a\rightarrow\infty}\hspace{-1mm}\bigg(\hspace{-0.5mm}\frac{2}{|\mathcal{S}|}\hspace{-0.5mm}\sum_{i=1}^{|\mathcal{S}|}\hspace{-0.5mm}\nabla\ell_{i}(a)^{\top}h_{\boldsymbol{\theta}}(\x_{i})\hspace{-0.5mm}\biggr)\hspace{-0.5mm}\biggr)\\
	& =0\cdot\bigg(\hspace{-0.5mm}\frac{4}{|\mathcal{S}|}\sum_{i=1}^{|\mathcal{S}|}\nabla^{2}\ell(\boldsymbol{0},y_{i})\Bigl[(D'_{i,*}W_{*})_{n:1}\x_{i}\Bigr]\biggr)+\lim_{a\rightarrow\infty}\hspace{-1mm}\bigg(a\cdot\frac{2}{|\mathcal{S}|}\sum_{i=1}^{|\mathcal{S}|}\hspace{-0.5mm}\nabla\ell_{i}(\boldsymbol{0},y_{i})^{\top}h_{\boldsymbol{\theta}}(\x_{i})\biggr)\\
	& =-\infty\text{\,,}
\end{aligned}
\]
where the second transition is valid since the multiplied limits are finite and the limit inside a limit is non-zero, and the last transition follows from $\sum_{i=1}^{|\mathcal{S}|}\nabla\ell(\boldsymbol{0},y_{i})^{\top}h_{\boldsymbol{\theta}}(\x_{i}) < 0$.
Notice that the matrices $\Delta W_1,\Delta W_2,...,\Delta W_n$ are independent of $a$, thus it must hold that $\lim_{a\rightarrow\infty}\lambda_{\min}\big(\nabla^{2}f\big(\boldsymbol{\theta}(a)\big)\big)=-\infty$. This in particular implies the desired result:
\[
{\textstyle \inf_{\thetabf\in\R^{d}~\emph{s.t.}\,\nabla^{2}f(\thetabf)~\emph{exists}}}\hspace{0.5mm}\lambda_{\min}(\nabla^{2}f(\thetabf))=-\infty
\text{\,.}
\]\qed

\subsection{Proof of Lemma~\ref{lemma:fnn_region_hess_lb}} \label{app:proof:fnn_region_hess_lb}
This proof is very similar to that of Lemma~\ref{lemma:lnn_hess_lb} (see Subappendix \ref{app:proof:lnn_hess_lb}). We repeat all details for completeness.
Recall that $\boldsymbol{\theta}\in\mathbb{R}^{d}$ is an arrangement of $(W_{1},W_{2},...,W_{n})\in\mathbb{R}^{d_{1},d_{0}}\times\mathbb{R}^{d_{2},d_{1}}\times\cdots\times\mathbb{R}^{d_{n},d_{n-1}}$ as a vector.
Let $(\Delta W_{1},\Delta W_{2},...,\Delta W_{n})\in\mathbb{R}^{d_{1},d_{0}}\times\mathbb{R}^{d_{2},d_{1}}\times\cdots\times\mathbb{R}^{d_{n},d_{n-1}}$, and denote by $\Delta\boldsymbol{\theta}\in\mathbb{R}^{d}$ its arrangement as a vector in corresponding order.
As shown in Lemma \ref{lemma:fnn_region_hess}:
\begin{equation*}
	\begin{aligned} &\hspace{-1.5mm} \nabla^{2}f(\boldsymbol{\theta})\left[\Delta W_{1},...,\Delta W_{n}\right]=\\
		& \frac{1}{|\mathcal{S}|}\sum_{i=1}^{|\mathcal{S}|}\nabla^{2}\ell_{i}\Bigl[{\textstyle \sum_{j=1}^{n}}(D'_{i,*}W_{*})_{n:j\text{+}1}D'_{i,j}(\Delta W_{j})(D'_{i,*}W_{*})_{j\text{-}1:1}\x_{i}\Bigr]+\\[-0mm]
		& \frac{2}{|\mathcal{S}|}\sum_{i=1}^{|\mathcal{S}|}\hspace{-0.7mm}\nabla\ell_{i}^{\top}\hspace{-4.7mm}\sum_{{\scriptstyle 1\leq j<j'\leq n}}\hspace{-3.5mm}(D'_{i,*}W_{*})_{n:j'\text{+}1}D'_{i,j'}(\Delta W_{j'})(D'_{i,*}W_{*})_{j'\text{-}1:j\text{+}1}D'_{i,j}(\Delta W_{j})(D'_{i,*}W_{*})_{j\text{-}1:1}\x_{i}\text{\,,}
	\end{aligned}
	\label{app:proof:fnn_hess_lb:eq:hessian_formula}
\end{equation*}
where we regard Hessians as quadratic forms (see Subappendix~\ref{app:proof:notations}).
Convexity of $\ell(\cdot)$ in its first argument implies that for $i\in\{1,2,...,|\mathcal{S}|\}$, $\nabla^{2}\ell_{i}$ is positive semi-definite, thus:
\[
\begin{aligned} &\hspace{-1.5mm} \nabla^{2}f(\boldsymbol{\theta})\left[\Delta W_{1},...,\Delta W_{n}\right]\geq\\
	& \frac{2}{|\mathcal{S}|}\sum_{i=1}^{|\mathcal{S}|}\hspace{-0.7mm}\nabla\ell_{i}^{\top}\hspace{-4.7mm}\sum_{{\scriptstyle 1\leq j<j'\leq n}}\hspace{-3.5mm}(D'_{i,*}W_{*})_{n:j'\text{+}1}D'_{i,j'}(\Delta W_{j'})(D'_{i,*}W_{*})_{j'\text{-}1:j\text{+}1}D'_{i,j}(\Delta W_{j})(D'_{i,*}W_{*})_{j\text{-}1:1}\x_{i}\text{\,.}
\end{aligned}
\]
Applying Cauchy-Schwarz and triangle inequalities, we get:
\[
\begin{aligned} & \nabla^{2}f(\boldsymbol{\theta})\left[\Delta W_{1},...,\Delta W_{n}\right]\\
	&\hspace{1mm} \geq\hspace{-0.5mm}-\frac{2}{|\mathcal{S}|}\hspace{-0.75mm}\sum_{i=1}^{|\mathcal{S}|}\hspace{-0mm}\bigl\Vert\nabla\ell_{i}\bigr\Vert_{2}\hspace{-0.75mm}\cdot\hspace{-4.85mm}\sum_{{\scriptstyle 1\leq j<j'\leq n}}\hspace{-3.5mm}\bigl\Vert(D'_{i,*}W_{*})_{n:j'\text{+}1}D'_{i,j'}(\Delta W_{j'})(D'_{i,*}W_{*})_{j'\text{-}1:j\text{+}1}D'_{i,j}(\Delta W_{j})(D'_{i,*}W_{*})_{j\text{-}1:1}\x_{i}\bigr\Vert_{2}\\
	&\hspace{1mm} \geq\hspace{-0.5mm}-\frac{2}{|\mathcal{S}|}\hspace{-0.75mm}\sum_{i=1}^{|\mathcal{S}|}\hspace{-0mm}\bigl\Vert\nabla\ell_{i}\bigr\Vert_{2}\hspace{-0.75mm}\cdot\hspace{-4.85mm}\sum_{{\scriptstyle 1\leq j<j'\leq n}}\hspace{-3.5mm}\bigl\Vert\Delta W_{j}\bigr\Vert_{s}\bigl\Vert\Delta W_{j'}\bigr\Vert_{s}\hspace{-3mm}\prod_{k\in[n]/\{j,j'\}}\hspace{-3.5mm}\bigl\Vert W_{k}\bigr\Vert_{s}\prod_{k=1}^n\hspace{-0.25mm}\bigl\Vert D'_{i,k}\bigr\Vert_{s}\cdot\bigl\Vert\x_{i}\bigr\Vert_{2}\\
	&\hspace{1mm} \geq\hspace{-0.5mm}-\frac{2}{|\mathcal{S}|}\hspace{-0.5mm}\sum_{i=1}^{|\mathcal{S}|}\hspace{-0mm}\bigl\Vert\nabla\ell_{i}\bigr\Vert_{2}\hspace{-0.5mm}\cdot\bigg(\max_{\substack{\mathcal{J}\subseteq[n]\\[0.25mm]
			|\mathcal{J}|=n-2
		}
	}\,\prod_{j\in\mathcal{J}}\bigl\Vert W_{j}\bigr\Vert_{s}\bigg)\max\{|\alpha|,|\bar{\alpha}|\}^{n-1}\bigl\Vert\x_{i}\bigr\Vert_{2}\hspace{-3.5mm}\sum_{{\scriptstyle 1\leq j<j'\leq n}}\hspace{-3.5mm}\bigl\Vert\Delta W_{j}\bigr\Vert_{s}\bigl\Vert\Delta W_{j'}\bigr\Vert_{s}\text{ ,}
\end{aligned}
\]
where the second transition follows from the definition and sub-multiplicativity of spectral norm, and the last transition follows from maximizing  $\prod_{k\in[n]/\{j,j'\}}\hspace{-0mm}\bigl\Vert W_{k}\bigr\Vert_{s}$ over $j,j'$, upper bounding $\Vert D'_{i,j}\Vert_{s}\leq\max\{|\alpha|,|\bar{\alpha}|\}$ for $j\in[n-1]$ and recalling that $D'_{i,n}$ is an identity matrix, meaning $\Vert D'_{i,n}\Vert_{s}=1$.
It holds that:
\[
\begin{aligned} & \hspace{-1.5mm}{\textstyle \sum\nolimits _{1\leq j<j'\leq n}}\Vert\Delta W_{j'}\Vert_{s}\Vert\Delta W_{j}\Vert_{s}\\[1.5mm]
	& \leq\,{\textstyle \sum\nolimits _{1\leq j<j'\leq n}}\Vert\Delta W_{j'}\Vert_{F}\Vert\Delta W_{j}\Vert_{F}\\
	& =\tfrac{1}{2}\Big({\textstyle \sum_{j=1}^{n}}\Vert\Delta W_{j}\Vert_{F}\Big)^{2}-\tfrac{1}{2}{\textstyle \sum_{j=1}^{n}}\Vert\Delta W_{j}\Vert_{F}^{2}\\
	& \leq\tfrac{n}{2}{\textstyle \sum_{j=1}^{n}}\Vert\Delta W_{j}\Vert_{F}^{2}-\tfrac{1}{2}{\textstyle \sum_{j=1}^{n}}\Vert\Delta W_{j}\Vert_{F}^{2}\\[1.5mm]
	& =\tfrac{n-1}{2}{\textstyle \sum_{j=1}^{n}}\Vert\Delta W_{j}\Vert_{F}^{2}\text{\,,}
\end{aligned}
\]
where the last inequality follows from the fact that the one-norm of a vector in $\mathbb{R}^{n}$ is never greater than $\sqrt{n}$ times its euclidean-norm.
This leads us to the following bound:
\[
\begin{aligned} & \nabla^{2}f(\boldsymbol{\theta})\left[\Delta W_{1},...,\Delta W_{n}\right]\\
	& \geq-\max\{|\alpha|,|\bar{\alpha}|\}^{n-1}\bigg(\max_{\substack{\mathcal{J}\subseteq[n]\\[0.25mm]
			|\mathcal{J}|=n-2
		}
	}\,\prod_{j\in\mathcal{J}}\bigl\Vert W_{j}\bigr\Vert_{s}\bigg)\frac{n-1}{|\mathcal{S}|}\sum_{i=1}^{|\mathcal{S}|}\hspace{-0mm}\bigl\Vert\nabla\ell_{i}\bigr\Vert_{2}\bigl\Vert\x_{i}\bigr\Vert_{2}\hspace{-1.5mm}
	\hspace{1.3mm}
	\cdot
	\hspace{-1mm}
	\sum_{j=1}^n \hspace{-0mm}\bigl\Vert\Delta W_{j}\bigr\Vert_{F}^{2}\text{\,.}
\end{aligned}
\]
The desired result readily follows:
\[
\lambda_{\min}\big(\nabla^{2}f(\boldsymbol{\theta})\big)\geq-\max\{|\alpha|,|\bar{\alpha}|\}^{n-1}\hspace{-0.5mm}\cdot\bigg(\max_{\substack{\mathcal{J}\subseteq[n]\\[0.25mm]
		|\mathcal{J}|=n-2
	}
}\,\prod_{j\in\mathcal{J}}\bigl\Vert W_{j}\bigr\Vert_{F}\bigg)\frac{n-1}{|\mathcal{S}|}\sum_{i=1}^{|\mathcal{S}|}\hspace{-0mm}\bigl\Vert\nabla\ell_{i}\bigr\Vert_{2}\bigl\Vert\x_{i}\bigr\Vert_{2}
\text{\,.}
\]
\qed

\subsection{Proof of Proposition~\ref{prop:fnn_region_hess_lb_gf}} \label{app:proof:fnn_region_hess_lb_gf}
Recall that $(W_{1},W_{2},...,W_{n})\in\mathbb{R}^{d_{1},d_{0}}\times\mathbb{R}^{d_{2},d_{1}}\times\cdots\times\mathbb{R}^{d_{n},d_{n-1}}$ are the weight matrices constituting $\boldsymbol{\theta}\in\mathbb{R}^{d}$, and denote by $\big(W_{1,s},W_{2,s},...,W_{n,s}\big)\in\mathbb{R}^{d_{1},d_{0}}\times\mathbb{R}^{d_{2},d_{1}}\times\cdots\times\mathbb{R}^{d_{n},d_{n-1}}$ those that constitute $\boldsymbol{\theta}_{s}$.
For $j,j'\in[n]$:
\[
\left|\Vert W_{j,s}\Vert_{F}^{2}-\Vert W_{j',s}\Vert_{F}^{2}\right|
\leq
\max\bigl\{\Vert W_{j,s}\Vert_{F}^{2},\Vert W_{j',s}\Vert_{F}^{2}\bigr\}
\leq
\max_{j\in[n]}\Vert W_{j,s}\Vert_{F}^{2}\leq\left\Vert \boldsymbol{\theta}_{s}\right\Vert _{2}^{2}\leq\epsilon^{2}
\text{\,.}
\]
Corollary 2.1 from \cite{du2018algorithmic} implies that throughout a gradient flow trajectory differences between squared Frobenius norms of weight matrices are constant.
Therefore, for $j,j'\in[n]$:
\begin{equation}
\left|\Vert W_{j}\Vert_{F}^{2}-\Vert W_{j'}\Vert_{F}^{2}\right|=\left|\Vert W_{j,s}\Vert_{F}^{2}-\Vert W_{j',s}\Vert_{F}^{2}\right|\leq\epsilon^{2}
\text{\,.}
\label{app:proof:fnn_region_hess_lb_gf:eq:dif_norm_bound}
\end{equation}
If the network is shallow (\ie~$n = 2$), then Equation~(\ref{eq:fnn_region_hess_lb_gf}) coincides with Equation~(\ref{eq:fnn_region_hess_lb}), thus the desired result follows trivially from Lemma~\ref{lemma:fnn_region_hess_lb}.  Hereafter we assume that the network is deep (\ie~$n \geq 3$).
It holds that:
\[
\begin{aligned}{\max_{{\scriptstyle \mathcal{J}\subseteq[n],|\mathcal{J}|=n-2}}}\,{\prod_{j\in\mathcal{J}}}\|W_{j}\|_{F} & \leq\underset{{\scriptstyle j\in[n]}}{\max}\|W_{j}\|_{F}^{n-2}\\[-2mm]
	& =\Big(\underset{{\scriptstyle j\in[n]}}{\min}\|W_{j}\|_{F}^{2}+\underset{{\scriptstyle j\in[n]}}{\max}\|W_{j}\|_{F}^{2}-\underset{{\scriptstyle j\in[n]}}{\min}\|W_{j}\|_{F}^{2}\Big)^{\frac{n-2}{2}}\\
	& \leq\Big(\underset{{\scriptstyle j\in[n]}}{\min}\|W_{j}\|_{F}^{2}+\epsilon^{2}\Big)^{\frac{n-2}{2}}\\
	& =\bigg(\sqrt{{\textstyle \min_{j\in[n]}}\|W_{j}\|_{F}^{2}+\epsilon^{2}}\:\bigg)^{n-2}\\
	& \leq\Big(\underset{{\scriptstyle j\in[n]}}{\min}\|W_{j}\|_{F}+\epsilon\Big)^{n-2}\text{ ,}
\end{aligned}
\]
where the third transition follows from Equation (\ref{app:proof:fnn_region_hess_lb_gf:eq:dif_norm_bound})
and the last transition follows from subadditivity of square root.
Combining the latter inequality together with the result of  Lemma~\ref{lemma:fnn_region_hess_lb} (Equation~(\ref{eq:fnn_region_hess_lb})), we obtain the desired result:
\[
\lambda_{\min}\big(\nabla^{2}f(\boldsymbol{\theta})\big)\geq-\max\{|\alpha|,|\bar{\alpha}|\}^{n-1}\frac{n-1}{|\mathcal{S}|}\sum_{i=1}^{|\mathcal{S}|}\hspace{-0mm}\bigl\Vert\nabla\ell_{i}\bigr\Vert_{2}\hspace{-0.5mm}\bigl\Vert\x_{i}\bigr\Vert_{2}\Big(\underset{{\scriptstyle j\in[n]}}{\min}\|W_{j}\|_{F}+\epsilon\Big)^{n-2}
\text{\,.}
\]
\qed

\subsection{Proof of Proposition~\ref{prop:gf_analysis}} \label{app:proof:gf_analysis} 

The proof is organized as follows.
Subsubappendix~\ref{app:proof:gf_analysis:prelim} establishes preliminaries.
Subsubappendix~\ref{app:proof:gf_analysis:infinite} proves that the trajectory of gradient flow is defined over infinite time.
Subsubappendix~\ref{app:proof:gf_analysis:reparam} defines a reparameterization of the gradient flow trajectory, to be used as a technical tool.
Subsubappendix~\ref{app:proof:gf_analysis:min_dist} lower bounds the minimal distance of the reparameterized trajectory from the origin.
Subsubappendix~\ref{app:proof:gf_analysis:catapult} confirms that the reparameterized trajectory escapes the origin.
Subsubappendix~\ref{app:proof:gf_analysis:convergence} establishes subsequent convergence, during which the reparameterized trajectory approaches global minimum exponentially fast.
Subsubappendix~\ref{app:proof:gf_analysis:time_convergence} shows that at time~$\bar{t}$ (defined in Equation~\eqref{eq:gf_analysis_time}) the (original) gradient flow trajectory reaches $\bar{\epsilon}$-optimality.
Subsubappendix~\ref{app:proof:gf_analysis:geometry} analyzes the geometry of the optimization landscape around the gradient flow trajectory, namely, it confirms validity of the smoothness and Lipschitz constants $\beta_{t , \epsilon}$ and~$\gamma_{t , \epsilon}$ (given in statement of Proposition~\ref{prop:gf_analysis}) respectively, and bounds the integral of the minimal eigenvalue of the Hessian in accordance with Equation~\eqref{eq:gf_analysis_m}.
Finally, Subsubappendix~\ref{app:proof:gf_analysis:conclusion} concludes.

\subsubsection{Preliminaries} \label{app:proof:gf_analysis:prelim}

We assume $\bar{\epsilon} \,{\leq}\, {\frac{1}{2}}$ without loss of generality (a proof that is valid for $\bar{\epsilon} \,{=}\, {\frac{1}{2}}$ automatically accounts for $\bar{\epsilon} \,{>}\, {\frac{1}{2}}$ as well).
Throughout the proof we identify matrices in~$\mathbb{R}^{1,d_0}$ with vectors in~$\mathbb{R}^{d_0}$.
For example, we identify the end-to-end matrix $W_{n:1} \in \mathbb{R}^{1,d_0}$ (Equation~\eqref{eq:e2e}) with the vector $\boldsymbol{w}_{n:1} \in \R^{d_0}$, and the empirical (uncentered) cross-covariance matrix between training labels and inputs, $\Lambda_{yx} \in \R^{1,d_0}$, with the vector $\boldsymbol{\lambda}_{yx} \in \R^{d_0}$.
Accordingly, we overload notation by regarding the function~$\phi ( \cdot)$ (defined in Equation~\eqref{eq:train_loss_e2e}) not only as a mapping from $\R^{1,d_0}$ to~$\R$, but also as one from $\R^{d_0}$ to~$\R$.
Under the latter view, $\phi ( \cdot)$~is defined by $\phi(\boldsymbol{w})=\frac{1}{2}\Vert\boldsymbol{w}-\boldsymbol{\lambda}_{yx}\Vert_{2}^{2}+{\textstyle \min_{\boldsymbol{q}\in\mathbb{R}^{d}}}f(\boldsymbol{q})$.
For~$t \geq 0$, we denote by $W_1 ( t ) \in \R^{d_1 , d_0} , W_2 ( t ) \in \R^{d_2 , d_1} , ... \, , W_{n - 1} ( t ) \in \R^{d_{n - 1} , d_{n - 2}} , W_n ( t ) \in \R^{1 , d_{n - 1}}$ the weight matrices constituting $\thetabf ( t ) \in \R^d$ (gradient flow trajectory at time~$t$), and by $W_{n:1} ( t ) \in \R^{1 , d_0}$ (or $\w_{n : 1} ( t) \in \R^{d_0}$) the corresponding end-to-end matrix (\ie~$W_{n : 1} ( t ) := W_n ( t ) W_{n - 1} ( t ) \cdots W_1 ( t )$).
\begin{definition}
	\label{app:proof:gf_analysis:prelim:h_definition}
	Define $\boldsymbol{h}:\mathbb{R}^{d_{0}} \rightarrow\mathbb{R}^{d_{0}}$ by:
	\[
	\boldsymbol{h}(\boldsymbol{w}):=\Big(\Vert\boldsymbol{w}\Vert_{2}^{2-\frac{2}{n}}I_{d_0}+(n-1)\big[\boldsymbol{w}\boldsymbol{w}^{\top}\big]^{1-\frac{1}{n}}\Big)\nabla\phi(\boldsymbol{w})
	\text{\,,}
	\]
	where $I_{d_0} \in \R^{d_0 , d_0}$ represents identity, and~$[ \, \cdot \, ]^c$, $c \geq 0$, stands for a power operator defined over positive semi-definite matrices (with $c = 0$ yielding identity by definition).
\end{definition}
The importance of the vector field~$\boldsymbol{h} ( \cdot )$ lies in the fact that it characterizes the dynamics of the end-to-end matrix~---~a result proven in~\cite{arora2018optimization}, stated hereafter for completeness.
\begin{lemma}
\label{lemma:gf_analysis_e2e_dyn}
$\boldsymbol{w}_{n : 1} ( t )$~is a solution to the following initial value problem:
\[
	\boldsymbol{w}_{n:1}(0)=\boldsymbol{w}_{n:1,s}\quad,\quad\tfrac{d}{dt}\boldsymbol{w}_{n:1}(t)
	=
	-\boldsymbol{h}\big(\boldsymbol{w}_{n:1}(t)\big)
	\text{\,.}
\]
\end{lemma}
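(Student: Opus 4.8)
The plan is to check directly that $\boldsymbol{w}_{n:1}(t)$ satisfies the two requirements of a solution to the stated initial value problem: the initial condition and the differential equation. The initial condition is immediate, since by definition $\boldsymbol{w}_{n:1}(0)$ is the end-to-end matrix of $\boldsymbol{\theta}(0)=\boldsymbol{\theta}_s$, which is $\boldsymbol{w}_{n:1,s}$. The substance is the differential equation, and the derivation I would carry out is the specialization of the end-to-end dynamics of~\cite{arora2018optimization} (combined with conservation of balancedness from~\cite{du2018algorithmic}) to our scalar-output, square-loss, whitened-data setting.

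First I would differentiate the product $W_{n:1}(t)=W_n(t)W_{n-1}(t)\cdots W_1(t)$ via the product rule, obtaining $\tfrac{d}{dt}W_{n:1}(t)=\sum_{j=1}^n W_{n:j+1}(t)\,\dot W_j(t)\,W_{j-1:1}(t)$, where $W_{j':j}$ denotes $W_{j'}W_{j'-1}\cdots W_j$ (an identity matrix when $j>j'$, per the convention of Lemma~\ref{lemma:lnn_hess}). Since $f(\boldsymbol{\theta})=\phi(W_{n:1})$ on an open region around $\boldsymbol{\theta}(t)$ (see Subsubsection~\ref{sec:roughly_convex:fnn:lin}), the chain rule gives $\nabla_{W_j}f(\boldsymbol{\theta})=W_{n:j+1}^\top\,\nabla\phi(W_{n:1})\,W_{j-1:1}^\top$, so along gradient flow $\dot W_j(t)=-W_{n:j+1}^\top(t)\,\nabla\phi(W_{n:1}(t))\,W_{j-1:1}^\top(t)$. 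Substituting yields
\[
\tfrac{d}{dt}W_{n:1}(t)=-\sum\nolimits_{j=1}^{n}\big(W_{n:j+1}(t)W_{n:j+1}^\top(t)\big)\,\nabla\phi\big(W_{n:1}(t)\big)\,\big(W_{j-1:1}^\top(t)W_{j-1:1}(t)\big)\,.
\]

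Next I would exploit balancedness: since $\boldsymbol{\theta}_s$ satisfies Equation~\eqref{eq:balance} and, by Theorem~2.2 of~\cite{du2018algorithmic}, the quantities $W_{j+1}^\top(t)W_{j+1}(t)-W_j(t)W_j^\top(t)$ are constant along the flow, balancedness holds at every $t\ge 0$. As shown in~\cite{arora2018optimization}, balancedness forces adjacent products to collapse, so that $W_{j-1:1}^\top(t)W_{j-1:1}(t)=[W_{n:1}^\top(t)W_{n:1}(t)]^{(j-1)/n}$ and $W_{n:j+1}(t)W_{n:j+1}^\top(t)=[W_{n:1}(t)W_{n:1}^\top(t)]^{(n-j)/n}$ (a zeroth power read as identity, which covers the end terms $j=1$ and $j=n$). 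Specializing to $d_n=1$, where $W_{n:1}(t)W_{n:1}^\top(t)=\|W_{n:1}(t)\|_2^2$ is a scalar and $[W_{n:1}^\top(t)W_{n:1}(t)]^{(j-1)/n}=\|W_{n:1}(t)\|_2^{2(j-1)/n-2}W_{n:1}^\top(t)W_{n:1}(t)$ for $j\ge 2$, the $j=1$ summand equals $\|W_{n:1}(t)\|_2^{2-2/n}\nabla\phi(W_{n:1}(t))$, while each of the $n-1$ summands with $j\ge 2$ equals $\|W_{n:1}(t)\|_2^{-2/n}\,\nabla\phi(W_{n:1}(t))\,W_{n:1}^\top(t)W_{n:1}(t)$ (the exponents telescoping to $-2/n$). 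Collecting these, and identifying $\mathbb{R}^{1,d_0}$ with $\mathbb{R}^{d_0}$ so that $[\boldsymbol{w}\boldsymbol{w}^\top]^{1-1/n}=\|\boldsymbol{w}\|_2^{-2/n}\boldsymbol{w}\boldsymbol{w}^\top$, this is exactly $\tfrac{d}{dt}\boldsymbol{w}_{n:1}(t)=-\boldsymbol{h}(\boldsymbol{w}_{n:1}(t))$ for $\boldsymbol{h}$ as in Definition~\ref{app:proof:gf_analysis:prelim:h_definition}.

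The only nontrivial ingredient is the telescoping of the matrix products under balancedness — tracking fractional matrix powers, handling the end terms $j=1,n$ under the zeroth-power-equals-identity convention, and performing the commutations needed to fold the sum into the compact form of $\boldsymbol{h}$. This is precisely the end-to-end dynamics theorem of~\cite{arora2018optimization}, which I would invoke directly (or reproduce via its short inductive argument); everything else is the product rule, the chain rule, and the scalar-output simplification.
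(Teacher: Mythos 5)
Your proposal is correct and follows the same route as the paper, which proves the lemma by directly invoking Theorem~1 of~\cite{arora2018optimization} (the end-to-end dynamics under balanced initialization); you additionally sketch the derivation of that theorem (product rule, chain rule, conservation of balancedness, telescoping of adjacent products into fractional powers of $W_{n:1}$), and your scalar-output specialization with the exponent telescoping to $-2/n$ checks out.
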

\begin{proof}
The lemma follows directly from Theorem~1 in~\cite{arora2018optimization}.
\end{proof}
The following lemma will be used throughout the proof.
\begin{lemma}
	\label{lemma:gf_analysis_hairer}
	Let $t\hspace{-0.5mm}\in\hspace{-0.5mm} [0,\infty) \hspace{-0.25mm}\cup\hspace{-0.5mm}\{\infty\}$.
	Let $q,\bar{q}:[0,t)\rightarrow\mathbb{R}$ be differentiable functions, 
	and let $g:[0,t)\times\mathbb{R}\rightarrow\mathbb{R}$ be some locally Lipschitz function.
	Assume that:
	\[
	\begin{aligned}
		(i)\quad\quad\:\: & q(0)\leq\bar{q}(0)\text{\,;}\\[0.8mm]
		(ii)\quad\quad\: & \hspace{-0.5mm}\tfrac{d}{dt} q(t')\leq g\big(t',q(t')\big)\:\text{ for all }t'\hspace{-0.7mm}\in\hspace{-0.6mm}[0,t)\text{\,; and}\\[0.8mm]
		(iii)\quad\quad & \hspace{-0.5mm}\tfrac{d}{dt}\bar{q}(t')\geq g\big(t',\bar{q}(t')\big)\:\text{ for all }t'\hspace{-0.7mm}\in\hspace{-0.6mm}[0,t)\text{\,.}\\[0.8mm]
	\end{aligned}
	\]
	Then $q(t')\leq\bar{q}(t')$ for all $t'\in[0,t)$.
\end{lemma}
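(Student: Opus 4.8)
The plan is to prove this standard differential-inequality comparison principle by contradiction, localizing the argument to the interval following the \emph{last} time at which $q$ and $\bar{q}$ coincide; on that interval the difference $w := q - \bar{q}$ has a fixed sign, which lets a Gr\"onwall-type estimate close the argument cleanly.

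First I would assume, toward a contradiction, that $q(t') > \bar{q}(t')$ for some $t' \in [0,t)$ (if $[0,t) = \emptyset$ the claim is vacuous). The map $s \mapsto q(s) - \bar{q}(s)$ is continuous on the compact interval $[0,t']$, so the set $A := \{ s \in [0,t'] : q(s) \le \bar{q}(s) \}$ is closed, and it is nonempty since $0 \in A$ by hypothesis $(i)$. Put $s_0 := \sup A$. Then $s_0 \in A$, so $q(s_0) \le \bar{q}(s_0)$; moreover $t' \notin A$ forces $s_0 < t'$, and by definition of the supremum $q(s) > \bar{q}(s)$ for every $s \in (s_0, t']$. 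Letting $s \searrow s_0$ in this strict inequality yields $q(s_0) \ge \bar{q}(s_0)$, hence $w(s_0) = q(s_0) - \bar{q}(s_0) = 0$.

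Next I would produce a Lipschitz constant valid on the relevant region. Since $q$ and $\bar{q}$ are continuous they are bounded on $[s_0, t']$, say $|q(s)|, |\bar{q}(s)| \le M$; the set $[s_0, t'] \times [-M, M]$ is compact, so local Lipschitzness of $g$ gives a constant $L$ with $|g(s, a) - g(s, b)| \le L |a - b|$ for all $s \in [s_0, t']$ and $a, b \in [-M, M]$. For $s \in (s_0, t']$ I then combine hypotheses $(ii)$ and $(iii)$ and use that $w(s) = q(s) - \bar{q}(s) > 0$ there:
\[
\tfrac{d}{dt} w(s) \,\le\, g\big(s, q(s)\big) - g\big(s, \bar{q}(s)\big) \,\le\, L\big(q(s) - \bar{q}(s)\big) \,=\, L\, w(s)
\text{\,.}
\]
Consequently $\tfrac{d}{dt}\bigl( e^{-L s} w(s) \bigr) = e^{-Ls}\bigl(\tfrac{d}{dt}w(s) - L w(s)\bigr) \le 0$ on $(s_0, t']$, so $s \mapsto e^{-Ls} w(s)$ is non-increasing there; since $w$ is continuous with $w(s_0) = 0$, letting $s \searrow s_0$ gives $e^{-L t'} w(t') \le e^{-L s_0} w(s_0) = 0$, i.e.\ $w(t') \le 0$, contradicting $q(t') > \bar{q}(t')$. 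This contradiction establishes the lemma.

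The argument is elementary and I do not expect any genuine obstacle. The one point that requires care — and the reason for introducing the last-crossing time $s_0$ rather than working from $0$ directly — is that the Lipschitz bound only controls $|g(s, q(s)) - g(s, \bar{q}(s))|$ by $L|w(s)|$, so without a sign constraint one merely obtains $\tfrac{d}{dt} w \le L |w|$, which does not immediately propagate $w(s_0) = 0$ forward via the integrating factor $e^{-Ls}$. Restricting attention to the interval $(s_0, t']$, on which $w > 0$ by construction, is exactly what removes this difficulty.
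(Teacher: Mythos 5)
Your proof is correct. The last-crossing construction is sound: $A$ is closed and nonempty, so $s_0 := \sup A$ lies in $A$ and satisfies $q(s_0)\leq\bar{q}(s_0)$, while the strict inequality on $(s_0,t']$ together with continuity forces $w(s_0)=0$; the uniform Lipschitz constant on the compact set $[s_0,t']\times[-M,M]$ exists by the standard fact that a locally Lipschitz function is Lipschitz on compacta; and the sign constraint $w>0$ on $(s_0,t']$ is exactly what converts the Lipschitz bound $|g(s,q(s))-g(s,\bar{q}(s))|\leq L|w(s)|$ into the one-sided inequality $\tfrac{d}{dt}w\leq Lw$ needed for the integrating-factor argument. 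Your closing remark correctly identifies why one cannot simply run Gr\"onwall from $s=0$.

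Your route does differ from the paper's, though not in substance so much as in self-containment: the paper disposes of this lemma in one line by citing Theorem~10.3 of Hairer, N{\o}rsett and Wanner, which is precisely this differential-inequality comparison principle, whereas you reprove it from scratch. What your version buys is independence from the reference and transparency about where local Lipschitzness of $g$ is actually used (only to get a uniform constant on a compact rectangle, and only after the sign of $w$ has been pinned down); what the citation buys is brevity and, in Hairer et al.'s formulation, a slightly more general statement (their hypotheses can be weakened to one-sided Lipschitz conditions and Dini derivatives). For the purposes of this paper either is adequate.
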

\begin{proof}
The lemma is a direct consequence of Theorem 10.3 in \cite{hairer1993solving}.
\end{proof}

\subsubsection{Infinite Time} \label{app:proof:gf_analysis:infinite}

One of the assertions of Proposition~\ref{prop:gf_analysis} is that the gradient flow trajectory is defined over infinite time.
This is confirmed by the following lemma.
\begin{lemma}
	\label{app:proof:gf_analysis:prelim:GF_infinite_time}
	The trajectory of gradient flow is defined over infinite time.
\end{lemma}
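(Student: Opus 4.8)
The plan is to rule out finite-time blow-up of the trajectory. By Theorem~\ref{theorem:exist_unique}, the gradient flow solution $\thetabf : [ 0 , t_e ) \to \R^d$ either has $t_e = \infty$, in which case there is nothing to prove, or has $t_e < \infty$ together with $\lim_{t \nearrow t_e} \| \thetabf ( t ) \|_2 = \infty$. So I would assume the latter and derive a contradiction by exhibiting a uniform bound on $\| \thetabf ( t ) \|_2$ over $[ 0 , t_e )$. Note that Lemma~\ref{lemma:infinite} cannot be invoked directly: the training loss in Equation~\eqref{eq:train_loss_lnn_square} is a degree-$2n$ polynomial in~$\thetabf$ and hence is \emph{not} globally $\beta$-smooth, so the required bound must instead exploit the structure of deep linear networks, specifically balancedness.

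First I would bound the end-to-end matrix. Gradient flow monotonically non-increases~$f ( \cdot )$, since $\tfrac{d}{dt} f ( \thetabf ( t ) ) = - \| \nabla f ( \thetabf ( t ) ) \|_2^2 \leq 0$ on $[ 0 , t_e )$; plugging this into Equation~\eqref{eq:train_loss_lnn_square} gives $\tfrac{1}{2} \| W_{n : 1} ( t ) - \Lambda_{yx} \|_F^2 \leq \tfrac{1}{2} \| W_{n : 1 , s} - \Lambda_{yx} \|_F^2$ for all $t \in [ 0 , t_e )$. Combined with the triangle inequality, the hypothesis $\| W_{n : 1 , s} \|_F \leq 0.2$ and the normalization $\| \Lambda_{yx} \|_F = 1$, this yields $\| W_{n : 1} ( t ) \|_F \leq 2.2$ for all $t \in [ 0 , t_e )$.

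Next I would turn this into a bound on every individual weight matrix. Since initialization is balanced and, by Theorem~2.2 in~\cite{du2018algorithmic}, the quantities $W_{j + 1}^\top W_{j + 1} - W_j W_j^\top$ are invariant along the flow, we have $W_{j + 1}^\top ( t ) W_{j + 1} ( t ) = W_j ( t ) W_j^\top ( t )$ for all $j$ and all $t \in [ 0 , t_e )$. Because the output dimension is one, $W_n ( t )$ is a row vector and hence has rank at most one; propagating the balancedness relations downward shows each $W_j ( t )$ has rank at most one with a single nonzero singular value $\sigma ( t )$ common to all layers, the left singular vector of $W_j ( t )$ coinciding with the right singular vector of $W_{j + 1} ( t )$. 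Multiplying these aligned rank-one factors gives $\| W_{n : 1} ( t ) \|_F = \sigma ( t )^n$, so $\| W_j ( t ) \|_F = \sigma ( t ) = \| W_{n : 1} ( t ) \|_F^{1 / n} \leq 2.2^{1 / n}$ for every $j$. Therefore $\| \thetabf ( t ) \|_2^2 = \sum_{j = 1}^n \| W_j ( t ) \|_F^2 \leq n \cdot 2.2^{2 / n}$, a finite bound independent of~$t$, contradicting $\lim_{t \nearrow t_e} \| \thetabf ( t ) \|_2 = \infty$. Hence $t_e = \infty$.

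I do not expect a genuine obstacle here beyond bookkeeping; the only step requiring care is the rank-one and alignment structure of balanced weight matrices that converts the bound on the end-to-end matrix into bounds on the individual layers — this is precisely the mechanism already used (in approximate form) in the proof sketch of Proposition~\ref{prop:lnn_hess_lb_gf}, and here it holds exactly thanks to exact balancedness of the initialization.
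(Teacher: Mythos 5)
Your proof is correct and follows essentially the same route as the paper's: both rule out finite-time blow-up by combining monotonic non-increase of the loss (bounding the end-to-end matrix) with preservation of exact balancedness (yielding $\|W_j(t)\|_F = \|W_{n:1}(t)\|_F^{1/n}$ and hence a uniform bound on $\|\thetabf(t)\|_2$). The only difference is presentational — you derive the per-layer identity via the rank-one/singular-vector-alignment structure, whereas the paper telescopes the matrix products $W_{n:1}W_{n:1}^\top = (W_n W_n^\top)^n$ directly — but the argument is the same.
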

\begin{proof}
by Theorem~\ref{theorem:exist_unique} we may denote the gradient flow trajectory by $\thetabf : [ 0 , t_e ) \to \R^d$, where either:
\emph{(i)}~$t_e = \infty$;
or \emph{(ii)}~$t_e < \infty$ and $\lim_{t \nearrow t_e} \norm{ \thetabf ( t ) }_2 = \infty$.
Our objective is to show that $t_e = \infty$, thus it suffices to establish that $\thetabf ( \cdot )$ is bounded, \ie~there exists a constant larger than $||\thetabf ( t ) ||$ for all $t \in [0,t_e)$.
Recall that $\boldsymbol{\theta}_{s}$ meets the balancedness condition (Equation~(\ref{eq:balance})).
Theorem 2.2 in \cite{du2018algorithmic} implies that the balancedness condition is preserved along the gradient flow trajectory, \ie~for any $j\in[n-1]$ and $t \in [0 , t_e )$, it holds that: 
\begin{equation}
W_{j+1}^{\top}(t)W_{j+1}(t)=W_{j}(t)W_{j}^{\top}(t)
\text{.}
\label{eq:linear_balanced_thm}
\end{equation}
Using this relation repeatedly, we obtain:
\[
\begin{aligned}\Vert\boldsymbol{w}_{n:1}(t)\Vert_{2}^{2} & =\boldsymbol{w}_{n:1}^{\top}(t)\boldsymbol{w}_{n:1}(t)\\
	& =W_{n:1}(t)W_{n:1}^{\top}(t)\\
	& =W_{n:2}(t)W_{1}(t)W_{1}^{\top}(t)W_{n:2}^{\top}(t)\\
	& =W_{n:2}(t)W_{2}^{\top}(t)W_{2}(t)W_{n:2}^{\top}(t)\\
	& =W_{n:3}(t)W_{2}(t)W_{2}^{\top}(t)W_{2}(t)W_{2}^{\top}(t)W_{n:3}^{\top}(t)\\
	& =W_{n:3}(t)W_{3}^{\top}(t)W_{3}(t)W_{3}^{\top}(t)W_{3}(t)W_{n:3}^{\top}(t)\\
	& \hspace{1.5mm}\vdots\\
	& =\big(W_{n}(t)W_{n}^{\top}(t)\big)^{n}\\
	& =\Vert W_{n}(t)\Vert_{F}^{2n}\text{.}
\end{aligned}
\]
Since the balancedness condition implies that $\Vert W_{j}(t)\Vert_{F}=\Vert W_{j+1}(t)\Vert_{F}$ for any $t \in [0 , t_e )$ and $j\in[n-1]$ (to see this, simply apply trace to both sides of Equation~\eqref{eq:linear_balanced_thm}), we may conclude $\Vert W_{j}(t)\Vert_{F}^{2}=\Vert\boldsymbol{w}_{n:1}(t)\Vert_{2}^{2/n}$ for any $j\in[n]$.
Gradient flow monotonically non-increases the objective it optimizes,  \ie~$f\big(\boldsymbol{\theta}(t)\big)$ is non-increasing. 
In particular it holds that $f\big(\boldsymbol{\theta}(t)\big)\leq f\big(\boldsymbol{\theta}(0)\big)$ for all $t \in [ 0 , t_e )$.
Relying on Equation~(\ref{eq:train_loss_lnn_square}), we obtain $\Vert\boldsymbol{w}_{n:1}(t)-\boldsymbol{\lambda}_{yx}\Vert_{2}\leq\Vert\boldsymbol{w}_{n:1}(0)-\boldsymbol{\lambda}_{yx}\Vert_{2}$ for all $t \in [ 0 , t_e )$.
By the triangle inequality we have that $\Vert\boldsymbol{w}_{n:1}(t)\Vert_{2}\leq\Vert\boldsymbol{w}_{n:1}(0)\Vert_{2}+2\Vert\boldsymbol{\lambda}_{yx}\Vert_{2}$.
Thus, for all $t \in [ 0 , t_e )$:
\[
\Vert\boldsymbol{\theta}(t)\Vert_{2}^{2}
=\sum_{j=1}^{n}\Vert W_{j}(t)\Vert_{F}^{2}
= n\Vert\boldsymbol{w}_{n:1}(t)\Vert_{2}^{2/n}
\leq n\big(\Vert\boldsymbol{w}_{n:1}(0)\Vert_{2}+\Vert\boldsymbol{\lambda}_{yx}\Vert_{2}\big)^{2/n}
\text{.}
\]
This completes the proof.
\end{proof}

\subsubsection{Reparameterization} \label{app:proof:gf_analysis:reparam}

Consider the initial value problem:
\be
\boldsymbol{u}(0)=\boldsymbol{w}_{n:1}(0)\quad,\quad\tfrac{d}{dt}\boldsymbol{u}(t)=-\Vert\boldsymbol{u}(t)\Vert_{\hspace{-0.1mm}2}\big(n\boldsymbol{u}(t)-\boldsymbol{\lambda}_{yx}\big)+\left(n-1\right) \| \boldsymbol{u}(t) \|_2^{-1} \boldsymbol{u}(t) \boldsymbol{u}(t)^{\top} \boldsymbol{\lambda}_{yx}
\text{\,.}
\label{eq:gf_analysis_reparam_ivp}
\ee
Lemma~\ref{app:proof:gf_analysis:reparam:u(t)_existence} below establishes existence of a unique solution to this problem.
\begin{lemma}
\label{app:proof:gf_analysis:reparam:u(t)_existence}
	The initial value problem in Equation~\eqref{eq:gf_analysis_reparam_ivp} admits a solution $\boldsymbol{u} : [ 0 , t_e ) \to \R^{d_0} \,{\setminus}\, \{ \0 \}$, where either:
	\emph{(i)}~$t_e \,{=}\, \infty$;
	or 
	\emph{(ii)}~$t_e \,{<}\, \infty$ and $\lim_{t \nearrow t_e} \norm{ \boldsymbol{u} ( t ) }_2 \,{\in}\, \{ 0 , \infty \}$.	
	Moreover, the solution is unique in the sense that any other solution $\boldsymbol{u}' : [ 0 , t'_e ) \to \R^{d_0} \,{\setminus}\, \{ \0 \}$ must satisfy $t'_e \leq t_e$ and $\forall t \in [ 0 , t'_e ) : \boldsymbol{u}' ( t ) = \boldsymbol{u} ( t )$.
\end{lemma}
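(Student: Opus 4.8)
The plan is to view the initial value problem in Equation~\eqref{eq:gf_analysis_reparam_ivp} as an \emph{autonomous} ordinary differential equation posed on the \emph{open} domain $\R^{d_0}\setminus\{\0\}$, and then to apply the existence-uniqueness theory for such equations. First I would check admissibility: under the identification of Subsubappendix~\ref{app:proof:gf_analysis:prelim}, the initial point $\boldsymbol{u}(0)=\boldsymbol{w}_{n:1}(0)=\boldsymbol{w}_{n:1,s}$ corresponds to a vector of Euclidean norm $\|W_{n:1,s}\|_{Frobenius}\in(0,0.2]$, hence is nonzero and lies in $\R^{d_0}\setminus\{\0\}$. Writing $\boldsymbol{g}:\R^{d_0}\setminus\{\0\}\to\R^{d_0}$, $\boldsymbol{g}(\boldsymbol{u}):=-\|\boldsymbol{u}\|_2(n\boldsymbol{u}-\boldsymbol{\lambda}_{yx})+(n-1)\|\boldsymbol{u}\|_2^{-1}\boldsymbol{u}\boldsymbol{u}^{\top}\boldsymbol{\lambda}_{yx}$, the problem becomes $\boldsymbol{u}(0)=\boldsymbol{w}_{n:1,s}$, $\tfrac{d}{dt}\boldsymbol{u}(t)=\boldsymbol{g}(\boldsymbol{u}(t))$. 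Since $\boldsymbol{u}\mapsto\|\boldsymbol{u}\|_2$ and $\boldsymbol{u}\mapsto\|\boldsymbol{u}\|_2^{-1}$ are infinitely differentiable away from the origin, $\boldsymbol{g}(\cdot)$ is $C^1$, hence locally Lipschitz, on this open set.

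Next I would invoke the extension of Theorem~\ref{theorem:exist_unique} to open domains --- the results of Section~1.5 of~\cite{grant2014theory} underlying its proof are already stated for vector fields over open sets --- yielding a maximal solution $\boldsymbol{u}:[0,t_e)\to\R^{d_0}\setminus\{\0\}$, unique in the stated sense, with the dichotomy: either $t_e=\infty$, or $t_e<\infty$ and $\boldsymbol{u}(\cdot)$ eventually leaves every compact subset of $\R^{d_0}\setminus\{\0\}$. It then remains to recast this second alternative in the asserted form. Suppose $t_e<\infty$ and, for $k\in\{2,3,\ldots\}$, consider the compact annulus $A_k:=\{\boldsymbol{u}\in\R^{d_0}:1/k\le\|\boldsymbol{u}\|_2\le k\}$. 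For each $k$ there is a time past which $\|\boldsymbol{u}(t)\|_2\notin[1/k,k]$; since $t\mapsto\|\boldsymbol{u}(t)\|_2$ is continuous, the intermediate value theorem then forces the trajectory, past that time, to stay entirely in $\{\|\boldsymbol{u}\|_2<1/k\}$ or entirely in $\{\|\boldsymbol{u}\|_2>k\}$. These options are incompatible across different indices (being eventually below $1/k$ and eventually above $k'\ge2$ cannot both hold), so one of them holds for \emph{every} $k$, giving $\lim_{t\nearrow t_e}\|\boldsymbol{u}(t)\|_2=0$ or $\lim_{t\nearrow t_e}\|\boldsymbol{u}(t)\|_2=\infty$ --- exactly the second alternative in the lemma.

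The local Lipschitz bound and the invocation of the existence-uniqueness theorem are routine; I expect the only mildly delicate point to be this last translation of the textbook ``escapes every compact set'' conclusion into the statement that $\|\boldsymbol{u}(t)\|_2$ has a limit in $\{0,\infty\}$, which rests on the continuity/intermediate-value argument above. As an aside (not needed for this lemma), using $[\boldsymbol{u}\boldsymbol{u}^{\top}]^{1-1/n}=\|\boldsymbol{u}\|_2^{-2/n}\boldsymbol{u}\boldsymbol{u}^{\top}$ one checks that $\boldsymbol{g}(\boldsymbol{u})=-\|\boldsymbol{u}\|_2^{\,2/n-1}\boldsymbol{h}(\boldsymbol{u})$ with $\boldsymbol{h}(\cdot)$ from Definition~\ref{app:proof:gf_analysis:prelim:h_definition}, so Equation~\eqref{eq:gf_analysis_reparam_ivp} is a positive time reparameterization of the end-to-end dynamics of Lemma~\ref{lemma:gf_analysis_e2e_dyn}; this is what justifies calling $\boldsymbol{u}(\cdot)$ a reparameterization of the gradient flow trajectory.
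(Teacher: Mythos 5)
Your proposal is correct and follows essentially the same route as the paper, which likewise observes that the vector field is locally Lipschitz on the open set $\R^{d_0}\setminus\{\0\}$ and invokes the existence-uniqueness results of Section~1.5 of~\cite{grant2014theory}. The only difference is that you explicitly carry out the (correct) intermediate-value argument converting ``the maximal solution eventually leaves every compact subset of $\R^{d_0}\setminus\{\0\}$'' into $\lim_{t\nearrow t_e}\|\boldsymbol{u}(t)\|_2\in\{0,\infty\}$, a step the paper leaves to the cited reference.
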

\begin{proof}
Define $\boldsymbol{g}:[0,\infty)\times\mathbb{R}^{d_{0}}\,{\setminus}\, \{ \0 \}\rightarrow\mathbb{R}^{d_{0}}$ by:  \[
\boldsymbol{g}(t,\boldsymbol{w}):=-\Vert\boldsymbol{w}\Vert_{\hspace{-0.1mm}2}\big(n\boldsymbol{w}-\boldsymbol{\lambda}_{yx}\big)+\left(n-1\right)\hspace{-0.5mm}\Vert\boldsymbol{w}\Vert_{2}^{-1}\boldsymbol{w}\boldsymbol{w}^{\top}\boldsymbol{\lambda}_{yx}\text{\,.}
\]
The dynamics in Equation~\eqref{eq:gf_analysis_reparam_ivp} can be written as $\tfrac{d}{dt}\boldsymbol{u}(t)=\boldsymbol{g}\big(t,\boldsymbol{u}(t)\big)$.
Since $g(\cdot)$ is locally Lipschitz continuous, the lemma follows directly from the results in Section~1.5 of \cite{grant2014theory}.\textsuperscript{\normalfont{\ref{note:grant_open_domain}}}
\end{proof}
Hereafter, we denote by $\boldsymbol{u} : [ 0 , t_e ) \to \R^{d_0} \,{\setminus}\, \{ \0 \}$ the (unique) solution to Equation~\eqref{eq:gf_analysis_reparam_ivp}.
In the remainder of the current subsubappendix we will show that~$\boldsymbol{u} ( \cdot )$ is a reparameterization of the gradient flow trajectory, or more precisely, of~$\boldsymbol{w}_{n:1} ( \cdot )$.

The following definition overloads notation by extending the scalar~$\nu$ (defined in the statement of Proposition~\ref{prop:gf_analysis}) to a function. 
\begin{definition}
	\label{app:proof:gf_analysis:reparam:nu(t)_definition_correlation}
	Define $\nu:[0,{t}_{e})\rightarrow[-1,1]$ by $\nu(t) = \frac{\boldsymbol{\lambda}_{yx}^{\top}\boldsymbol{u}(t)}{\Vert\boldsymbol{\lambda}_{yx}\Vert_{2} \Vert\boldsymbol{u}(t)\Vert_{2}}$.
\end{definition}
Notice that~$\nu(0)$ coincides with the original (scalar) definition of~$\nu$.
Lemma~\ref{app:proof:gf_analysis:reparam:u'(t)_reparam_derivative} below makes use of~$\nu ( \cdot )$ for characterizing the dynamics of the norm of~$\boldsymbol{u} ( \cdot )$.
\begin{lemma}
	\label{app:proof:gf_analysis:reparam:u'(t)_reparam_derivative}
	For all $t\in[0,t_e)$:
	\[
	\tfrac{d}{dt}\Vert\boldsymbol{u}(t)\Vert_{2} = n\Vert\boldsymbol{u}(t)\Vert_{2}\Big(\nu(t)-\Vert\boldsymbol{u}(t)\Vert_{2}\Big)
	\text{\,.}
	\]
\end{lemma}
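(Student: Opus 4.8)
The plan is to differentiate $\|\boldsymbol{u}(t)\|_2$ directly via the chain rule and substitute the defining ODE (Equation~\eqref{eq:gf_analysis_reparam_ivp}). Since $\boldsymbol{u}(t) \neq \0$ for every $t \in [0,t_e)$ by Lemma~\ref{app:proof:gf_analysis:reparam:u(t)_existence}, and $\boldsymbol{u}(\cdot)$ is continuously differentiable (the velocity field in Equation~\eqref{eq:gf_analysis_reparam_ivp} is continuous), the scalar map $t \mapsto \|\boldsymbol{u}(t)\|_2$ is differentiable with
\[
\tfrac{d}{dt}\|\boldsymbol{u}(t)\|_2 = \tfrac{1}{\|\boldsymbol{u}(t)\|_2}\,\boldsymbol{u}(t)^\top \tfrac{d}{dt}\boldsymbol{u}(t)
\text{\,.}
\]

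First I would compute $\boldsymbol{u}(t)^\top \tfrac{d}{dt}\boldsymbol{u}(t)$ by taking the inner product of $\boldsymbol{u}(t)$ with the right-hand side of Equation~\eqref{eq:gf_analysis_reparam_ivp}. The first term yields $-\|\boldsymbol{u}(t)\|_2\big(n\|\boldsymbol{u}(t)\|_2^2 - \boldsymbol{u}(t)^\top\boldsymbol{\lambda}_{yx}\big)$, and the second term, using the identity $\boldsymbol{u}(t)^\top\boldsymbol{u}(t)\boldsymbol{u}(t)^\top = \|\boldsymbol{u}(t)\|_2^2\,\boldsymbol{u}(t)^\top$, yields $(n-1)\|\boldsymbol{u}(t)\|_2\,\boldsymbol{u}(t)^\top\boldsymbol{\lambda}_{yx}$. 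Adding them, the two $\boldsymbol{u}(t)^\top\boldsymbol{\lambda}_{yx}$ contributions combine to $n\|\boldsymbol{u}(t)\|_2\,\boldsymbol{u}(t)^\top\boldsymbol{\lambda}_{yx}$, so
\[
\boldsymbol{u}(t)^\top \tfrac{d}{dt}\boldsymbol{u}(t) = -n\|\boldsymbol{u}(t)\|_2^3 + n\|\boldsymbol{u}(t)\|_2\,\boldsymbol{u}(t)^\top\boldsymbol{\lambda}_{yx}
\text{\,.}
\]
Dividing by $\|\boldsymbol{u}(t)\|_2$ gives $\tfrac{d}{dt}\|\boldsymbol{u}(t)\|_2 = -n\|\boldsymbol{u}(t)\|_2^2 + n\,\boldsymbol{u}(t)^\top\boldsymbol{\lambda}_{yx}$.

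To finish I would invoke the label normalization $\|\boldsymbol{\lambda}_{yx}\|_2 = \|\Lambda_{yx}\|_{Frobenius} = 1$ imposed at the outset of Section~\ref{sec:lnn}, so that Definition~\ref{app:proof:gf_analysis:reparam:nu(t)_definition_correlation} reads $\nu(t) = \boldsymbol{u}(t)^\top\boldsymbol{\lambda}_{yx}/\|\boldsymbol{u}(t)\|_2$, i.e.\ $\boldsymbol{u}(t)^\top\boldsymbol{\lambda}_{yx} = \nu(t)\|\boldsymbol{u}(t)\|_2$. Substituting, $\tfrac{d}{dt}\|\boldsymbol{u}(t)\|_2 = -n\|\boldsymbol{u}(t)\|_2^2 + n\nu(t)\|\boldsymbol{u}(t)\|_2 = n\|\boldsymbol{u}(t)\|_2\big(\nu(t) - \|\boldsymbol{u}(t)\|_2\big)$, which is the claimed identity. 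This is an entirely routine calculation and I do not anticipate any genuine obstacle; the only things meriting a remark are the differentiability of $\|\boldsymbol{u}(\cdot)\|_2$ (from $C^1$-ness and non-vanishing of $\boldsymbol{u}(\cdot)$) and the legitimacy of using $\|\boldsymbol{\lambda}_{yx}\|_2 = 1$.
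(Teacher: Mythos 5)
Your proposal is correct and follows essentially the same route as the paper's proof: differentiate $\|\boldsymbol{u}(t)\|_2$ via the chain rule (valid since $\boldsymbol{u}(t)\neq\0$), substitute the right-hand side of Equation~\eqref{eq:gf_analysis_reparam_ivp}, and use $\|\boldsymbol{\lambda}_{yx}\|_2=1$ together with Definition~\ref{app:proof:gf_analysis:reparam:nu(t)_definition_correlation} to rewrite $\boldsymbol{u}(t)^\top\boldsymbol{\lambda}_{yx}$ as $\nu(t)\|\boldsymbol{u}(t)\|_2$. The algebra checks out and nothing is missing.
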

\begin{proof}
Recall that  $\Vert\boldsymbol{\lambda}_{yx}\Vert=1$.
For all $t\in[0,t_{e})$, it holds that:
\[
\begin{aligned}\tfrac{d}{dt}\Vert\boldsymbol{u}(t)\Vert_{2}= & \;\tfrac{\boldsymbol{u}\left(t\right)^{\top}}{\left\Vert \boldsymbol{u}\left(t\right)\right\Vert _{2}}\tfrac{d}{dt}\boldsymbol{u}\left(t\right)\\
	= & \;\tfrac{\boldsymbol{u}\left(t\right)^{\top}}{\left\Vert \boldsymbol{u}\left(t\right)\right\Vert _{2}}\Big(-\Vert\boldsymbol{u}\left(t\right)\Vert_{\hspace{-0.1mm}2}\big(n\boldsymbol{u}\left(t\right)-\boldsymbol{\lambda}_{yx}\big)+\left(n-1\right)\|\boldsymbol{u}(t)\|_{2}^{-1}\boldsymbol{u}(t)\boldsymbol{u}(t)^{\top}\boldsymbol{\lambda}_{yx}\Big)\\
	= & \;-n\Vert\boldsymbol{u}\left(t\right)\Vert_{2}^{2}+\Vert\boldsymbol{u}\left(t\right)\Vert_{2}\nu(t)+\left(n-1\right)\Vert\boldsymbol{u}\left(t\right)\Vert_{2}\nu(t)\\
	= & \;n\Vert\boldsymbol{u}(t)\Vert_{2}\Big(\nu(t)-\Vert\boldsymbol{u}(t)\Vert_{2}\Big)\text{\,,}
\end{aligned}
\]
where the first transition follows from the chain rule and derivative of a (non-zero) vector norm,
and the second transition follows from $u(\cdot)$ being a solution to Equation~(\ref{eq:gf_analysis_reparam_ivp}).
\end{proof}
Relying on Lemma~\ref{app:proof:gf_analysis:reparam:u'(t)_reparam_derivative}, Lemma~\ref{app:proof:gf_analysis:reparam:u(t)_bound_norm} below derives upper and lower bounds for the norm of~$\boldsymbol{u} ( \cdot )$.
\begin{lemma}
	\label{app:proof:gf_analysis:reparam:u(t)_bound_norm}
	For all $t\in[0,{t}_{e})$:
	\[
	\Vert\boldsymbol{u}(0)\Vert_2 e^{-2nt}\hspace{-0.5mm}\leq\Vert\boldsymbol{u}(t)\Vert_2\leq\Vert\boldsymbol{u}(0)\Vert_2 e^{nt}
	\text{\,.}
	\]
\end{lemma}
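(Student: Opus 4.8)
The plan is to reduce the whole statement to the scalar ODE for $t \mapsto \Vert\boldsymbol{u}(t)\Vert_2$ supplied by Lemma~\ref{app:proof:gf_analysis:reparam:u'(t)_reparam_derivative}, namely $\tfrac{d}{dt}\Vert\boldsymbol{u}(t)\Vert_2 = n\Vert\boldsymbol{u}(t)\Vert_2\big(\nu(t) - \Vert\boldsymbol{u}(t)\Vert_2\big)$, and then to feed appropriate one-sided differential inequalities into the comparison principle of Lemma~\ref{lemma:gf_analysis_hairer}. Throughout I would use that $\nu(t) \in [-1,1]$ for all $t \in [0,t_e)$ (Definition~\ref{app:proof:gf_analysis:reparam:nu(t)_definition_correlation}), that $\boldsymbol{u}(\cdot)$ is nowhere zero so $t \mapsto \Vert\boldsymbol{u}(t)\Vert_2$ is differentiable on $[0,t_e)$ (Lemma~\ref{app:proof:gf_analysis:reparam:u(t)_existence}), and that $\Vert\boldsymbol{u}(0)\Vert_2 = \Vert\boldsymbol{w}_{n:1,s}\Vert_2 \leq 0.2 < 1$, which holds since $\Vert W_{n:1,s}\Vert_{F} \in (0,0.2]$ by the hypothesis of Proposition~\ref{prop:gf_analysis}.

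First I would establish the auxiliary bound $\Vert\boldsymbol{u}(t)\Vert_2 \leq 1$ for all $t \in [0,t_e)$: since $\nu(t) \leq 1$ and $n\Vert\boldsymbol{u}(t)\Vert_2 > 0$, the ODE yields $\tfrac{d}{dt}\Vert\boldsymbol{u}(t)\Vert_2 \leq n\Vert\boldsymbol{u}(t)\Vert_2\big(1 - \Vert\boldsymbol{u}(t)\Vert_2\big)$, and applying Lemma~\ref{lemma:gf_analysis_hairer} with $q(t) = \Vert\boldsymbol{u}(t)\Vert_2$, $\bar{q}(t) \equiv 1$ and $g(t,q) = nq(1-q)$ (locally Lipschitz, with $\tfrac{d}{dt}\bar{q}(t) = 0 = g(t,1)$ and $q(0) \leq 0.2 \leq \bar{q}(0)$) gives the claim. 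The upper bound then follows directly: since $\nu(t) - \Vert\boldsymbol{u}(t)\Vert_2 < 1$ (again using $\nu(t) \leq 1$ and positivity of $\Vert\boldsymbol{u}(t)\Vert_2$), one has $\tfrac{d}{dt}\Vert\boldsymbol{u}(t)\Vert_2 \leq n\Vert\boldsymbol{u}(t)\Vert_2$, and Lemma~\ref{lemma:gf_analysis_hairer} with $\bar{q}(t) = \Vert\boldsymbol{u}(0)\Vert_2 e^{nt}$ and $g(t,q) = nq$ yields $\Vert\boldsymbol{u}(t)\Vert_2 \leq \Vert\boldsymbol{u}(0)\Vert_2 e^{nt}$.

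For the lower bound I would combine $\nu(t) \geq -1$ with the auxiliary bound $\Vert\boldsymbol{u}(t)\Vert_2 \leq 1$: writing $\tfrac{d}{dt}\Vert\boldsymbol{u}(t)\Vert_2 = n\Vert\boldsymbol{u}(t)\Vert_2\nu(t) - n\Vert\boldsymbol{u}(t)\Vert_2^2 \geq -n\Vert\boldsymbol{u}(t)\Vert_2 - n\Vert\boldsymbol{u}(t)\Vert_2 = -2n\Vert\boldsymbol{u}(t)\Vert_2$, and applying Lemma~\ref{lemma:gf_analysis_hairer} with $q(t) = \Vert\boldsymbol{u}(0)\Vert_2 e^{-2nt}$, $\bar{q}(t) = \Vert\boldsymbol{u}(t)\Vert_2$ and $g(t,q) = -2nq$ (for which $q(0) = \bar{q}(0)$, $\tfrac{d}{dt}q(t) = -2nq(t) = g(t,q(t))$, and $\tfrac{d}{dt}\bar{q}(t) \geq g(t,\bar{q}(t))$ by the displayed inequality) gives $\Vert\boldsymbol{u}(0)\Vert_2 e^{-2nt} \leq \Vert\boldsymbol{u}(t)\Vert_2$, completing the proof. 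I anticipate no genuine obstacle; the only point requiring care is that the clean lower rate $-2n$ becomes available only after the crude step $\Vert\boldsymbol{u}(t)\Vert_2 \leq 1$ has been proved — otherwise the best one-sided inequality the ODE gives is the nonlinear $\tfrac{d}{dt}\Vert\boldsymbol{u}(t)\Vert_2 \geq -n\Vert\boldsymbol{u}(t)\Vert_2\big(1 + \Vert\boldsymbol{u}(t)\Vert_2\big)$, which does not integrate to a pure exponential — and that the differentiability hypothesis of Lemma~\ref{lemma:gf_analysis_hairer} is met, which it is because $\boldsymbol{u}(\cdot)$ never vanishes.
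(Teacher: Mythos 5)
Your proof is correct, and it follows the paper's strategy: reduce everything to the scalar ODE of Lemma~\ref{app:proof:gf_analysis:reparam:u'(t)_reparam_derivative} and apply the comparison principle of Lemma~\ref{lemma:gf_analysis_hairer} (the paper integrates the upper bound directly rather than via the comparison lemma, but that is immaterial). The one place where you diverge is the lower bound: you first establish the a priori estimate $\Vert\boldsymbol{u}(t)\Vert_2 \leq 1$ by comparing against the constant function $1$ with the logistic field $g(t,q)=nq(1-q)$, and only then extract the linear inequality $\tfrac{d}{dt}\Vert\boldsymbol{u}(t)\Vert_2 \geq -2n\Vert\boldsymbol{u}(t)\Vert_2$. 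The paper instead avoids that intermediate step by a single application of Lemma~\ref{lemma:gf_analysis_hairer} with the piecewise-defined locally Lipschitz field $g(t,z)=-nz(1+z)$ for $z\geq 1$ and $g(t,z)=-2nz$ for $z<1$: the candidate subsolution $\Vert\boldsymbol{u}(0)\Vert_2 e^{-2nt}$ stays below $1$ so it solves $\tfrac{d}{dt}q = g(t,q)$ exactly, while the true norm satisfies $\tfrac{d}{dt}\Vert\boldsymbol{u}(t)\Vert_2 \geq g(t,\Vert\boldsymbol{u}(t)\Vert_2)$ in both regimes. Your variant costs one extra comparison argument but is arguably cleaner conceptually; both are valid, and importantly your auxiliary bound is derived directly from the ODE rather than from the later Lemma~\ref{app:proof:gf_analysis:optimization:u(t)_dynamics_decreasing_increasing}, so no circularity is introduced.
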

\begin{proof}
We start by proving the upper bound. 
Recall that by Lemma~\ref{app:proof:gf_analysis:reparam:u'(t)_reparam_derivative} we have that $\tfrac{d}{dt}\Vert\boldsymbol{u}(t)\Vert_{2}=n\Vert\boldsymbol{u}(t)\Vert_{2}\big(\nu(t)-\Vert\boldsymbol{u}(t)\Vert_{2}\big)$.
It holds that $\tfrac{d}{dt}\Vert\boldsymbol{u}(t)\Vert_{2} \leq n\Vert\boldsymbol{u}(t)\Vert_{2}$, as $\nu(t)\leq1$ (by Definition~\ref{app:proof:gf_analysis:reparam:nu(t)_definition_correlation}).
Integrating over time:
\[
\ln\left(\Vert\boldsymbol{u}(t)\Vert_{2}\right)-\ln\left(\Vert\boldsymbol{u}(0)\Vert_{2}\right)=\int_{0}^{t}\hspace{-1mm}\tfrac{1}{\Vert\boldsymbol{u}(t')\Vert_{2}}\tfrac{d}{dt}\Vert\boldsymbol{u}(t')\Vert_{2}dt'\leq\int_{0}^{t}\hspace{-1mm}ndt'=nt
\text{.}
\]
It follows that $\Vert\boldsymbol{u}(t)\Vert_{2}\leq\Vert\boldsymbol{u}(0)\Vert_{2} \hspace{0.5mm} e^{nt}$.

Moving on to the lower bound, define $g:[0,t_{e})\times\mathbb{R}\rightarrow\mathbb{R}$ by:
\[
g(t,z):=\begin{cases}
	-nz\big(1+z\big) & z\geq1\\
	-2nz & z<1
\end{cases}
\text{\,.}
\]
Note that $g(\cdot)$ is locally Lipschitz continuous.
For all $t \in [0, t_e )$, it holds that: 
\[
\Vert\boldsymbol{u}(0)\Vert_{2} e^{-2nt}\leq\Vert\boldsymbol{u}(0)\Vert_{2}=\Vert\boldsymbol{w}_{n:1}(0)\Vert_{2}<1
\text{\,,}
\]
where the equality follows from $u(\cdot)$ being a solution to Equation~(\ref{eq:gf_analysis_reparam_ivp}), and the last inequality follows from an assumption made in Proposition~\ref{prop:gf_analysis}.
Using this fact, the following holds for all $t \in [ 0 , t_e )$:
\[
\tfrac{d}{dt}\big(\Vert\boldsymbol{u}(0)\Vert_{2} e^{-2nt}\big)=-2n\Vert\boldsymbol{u}(0)\Vert_{2} e^{-2nt}=g\big(t,\Vert\boldsymbol{u}(0)\Vert_{2} e^{-2nt}\big)
\text{\,.}
\]
On the other hand, recalling that $\nu(t)\geq-1$ (by Definition~\ref{app:proof:gf_analysis:reparam:nu(t)_definition_correlation}) for all $t\in[0,{t}_{e})$, it holds (for both cases $\Vert\boldsymbol{u}(t)\Vert<1$  and $\Vert\boldsymbol{u}(t)\Vert\geq1$) that:
\[
\tfrac{d}{dt}\Vert\boldsymbol{u}(t)\Vert_{2}=n\Vert\boldsymbol{u}(t)\Vert_{2}\big(\nu(t)-\Vert\boldsymbol{u}(t)\Vert_{2}\big)\geq g\big(t,\Vert\boldsymbol{u}(t)\Vert_{2}\big)\text{\,.}
\]
We may now use Lemma~\ref{lemma:gf_analysis_hairer} to conclude $\Vert\boldsymbol{u}(0)\Vert_{2} \hspace{0.5mm} e^{-2nt}\leq\Vert\boldsymbol{u}(t)\Vert_{2}$ for all $t\in[0,{t}_{e})$.
\end{proof}
Taken together, Lemmas \ref{app:proof:gf_analysis:reparam:u(t)_existence} and~\ref{app:proof:gf_analysis:reparam:u(t)_bound_norm} imply that~$\boldsymbol{u} ( \cdot )$ is defined over infinite time.
We formalize this in Lemma~\ref{app:proof:gf_analysis:reparam:u(t)_infinite_time} below.
\begin{lemma}
	\label{app:proof:gf_analysis:reparam:u(t)_infinite_time}
	It holds that ${t}_{e}=\infty$, \ie~we may write $\boldsymbol{u}:[0,\infty)\rightarrow\mathbb{R}^{d_{0}} \,{\setminus}\, \{ \0 \}$.
\end{lemma}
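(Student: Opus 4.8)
The plan is to argue by contradiction, ruling out the finite-time alternative offered by the existence statement (Lemma~\ref{app:proof:gf_analysis:reparam:u(t)_existence}) using the two-sided norm bounds already in hand (Lemma~\ref{app:proof:gf_analysis:reparam:u(t)_bound_norm}). Recall from Lemma~\ref{app:proof:gf_analysis:reparam:u(t)_existence} that the solution $\boldsymbol{u} : [0,t_e) \to \mathbb{R}^{d_0} \setminus \{\boldsymbol{0}\}$ satisfies the dichotomy: either $t_e = \infty$, or $t_e < \infty$ and $\lim_{t \nearrow t_e} \|\boldsymbol{u}(t)\|_2 \in \{0,\infty\}$. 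So it suffices to show the second alternative cannot occur.

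Suppose for contradiction that $t_e < \infty$. First I would observe that $\|\boldsymbol{u}(0)\|_2 = \|\boldsymbol{w}_{n:1}(0)\|_2 = \|\boldsymbol{w}_{n:1,s}\|_2 > 0$, which holds because Proposition~\ref{prop:gf_analysis} assumes $\|W_{n:1,s}\|_{Frobenius} \in (0,0.2]$. Then, for every $t \in [0,t_e)$, the lower bound of Lemma~\ref{app:proof:gf_analysis:reparam:u(t)_bound_norm} gives $\|\boldsymbol{u}(t)\|_2 \geq \|\boldsymbol{u}(0)\|_2 \, e^{-2nt} \geq \|\boldsymbol{u}(0)\|_2 \, e^{-2n t_e} > 0$, so $\lim_{t \nearrow t_e} \|\boldsymbol{u}(t)\|_2$ (if it exists in $\{0,\infty\}$) cannot equal $0$. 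Symmetrically, the upper bound gives $\|\boldsymbol{u}(t)\|_2 \leq \|\boldsymbol{u}(0)\|_2 \, e^{nt} \leq \|\boldsymbol{u}(0)\|_2 \, e^{n t_e} < \infty$ for all $t \in [0,t_e)$, so the limit cannot equal $\infty$ either. This contradicts the dichotomy, forcing $t_e = \infty$, which is exactly the claim.

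I do not anticipate any real obstacle here: the entire content of the lemma is already packaged into Lemmas~\ref{app:proof:gf_analysis:reparam:u(t)_existence} and~\ref{app:proof:gf_analysis:reparam:u(t)_bound_norm}, and the argument is a one-line contradiction once those are invoked. The only point requiring the slightest care is to remember to cite the hypothesis $\|W_{n:1,s}\|_{Frobenius} > 0$ so that the lower bound is strictly positive; without it, the ``$\lim = 0$'' branch would not be excluded. No separate handling of $n$ or of the sign of $\nu$ is needed, since the norm bounds of Lemma~\ref{app:proof:gf_analysis:reparam:u(t)_bound_norm} are already fully general.
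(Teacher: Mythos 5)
Your proposal is correct and follows essentially the same route as the paper: assume $t_e < \infty$, invoke the dichotomy of Lemma~\ref{app:proof:gf_analysis:reparam:u(t)_existence}, and contradict both branches with the two-sided norm bounds of Lemma~\ref{app:proof:gf_analysis:reparam:u(t)_bound_norm}. Your extra remark on the strict positivity of $\|\boldsymbol{u}(0)\|_2$ is a fine (if implicit in the paper) detail, but nothing in your argument departs from the paper's own proof.
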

\begin{proof}
Assume by contradiction that $t_e < \infty$.
Lemma~\ref{app:proof:gf_analysis:reparam:u(t)_existence} implies $\lim_{t \nearrow t_e} \norm{ \boldsymbol{u} ( t ) }_2 \,{\in}\, \{ 0 , \infty \}$.
On the other hand, by Lemma~\ref{app:proof:gf_analysis:reparam:u(t)_bound_norm} we have that $\liminf_{t \nearrow t_e} \geq \Vert\boldsymbol{u}(0)\Vert_{2}e^{-2nt_{e}}$ and $\limsup_{t \nearrow t_e} \leq \Vert\boldsymbol{u}(0)\Vert e^{nt_{e}}$, which is a contradiction.
Hence it must be that $t_e = \infty$.
\end{proof}
Finally, we are in a position to prove that~$\boldsymbol{u} ( \cdot )$ is indeed a (monotonic) reparameterization of~$\boldsymbol{w}_{n:1} ( \cdot )$.
\begin{lemma}\label{app:proof:gf_analysis:reparam:u(t)_reparam_proof}
	For all $t\geq0$:
	\[
	\boldsymbol{w}_{n:1}\big( \xi ( t ) \big)=\boldsymbol{u}(t)\text{\,,}
	\]
	where $\xi :[ 0 , \infty ) \rightarrow \R_{\geq 0}$ is defined by $\xi ( t ) := \int_0^t \Vert\boldsymbol{u}(t')\Vert_{2}^{-(1-2/n)}\:dt'$.
\end{lemma}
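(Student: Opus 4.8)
The plan is to exploit the special form of the objective to simplify the vector field $\boldsymbol{h}(\cdot)$ from Definition~\ref{app:proof:gf_analysis:prelim:h_definition}, and then recognize Equation~\eqref{eq:gf_analysis_reparam_ivp} as nothing but a time reparameterization of the end-to-end dynamics in Lemma~\ref{lemma:gf_analysis_e2e_dyn}. Concretely, since $\phi(\boldsymbol{w})=\tfrac12\|\boldsymbol{w}-\boldsymbol{\lambda}_{yx}\|_2^2+\text{const}$ (Equation~\eqref{eq:train_loss_lnn_square}) we have $\nabla\phi(\boldsymbol{w})=\boldsymbol{w}-\boldsymbol{\lambda}_{yx}$, and since $\boldsymbol{w}\boldsymbol{w}^\top$ is rank one with nonzero eigenvalue $\|\boldsymbol{w}\|_2^2$ we have $[\boldsymbol{w}\boldsymbol{w}^\top]^{1-1/n}=\|\boldsymbol{w}\|_2^{-2/n}\boldsymbol{w}\boldsymbol{w}^\top$ for $\boldsymbol{w}\neq\boldsymbol{0}$. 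Substituting these into $\boldsymbol{h}(\cdot)$ and using $\boldsymbol{w}\boldsymbol{w}^\top\boldsymbol{w}=\|\boldsymbol{w}\|_2^2\boldsymbol{w}$, a short algebraic computation yields $\boldsymbol{h}(\boldsymbol{w})=\|\boldsymbol{w}\|_2^{1-2/n}\,\tilde{\boldsymbol{h}}(\boldsymbol{w})$ for all $\boldsymbol{w}\neq\boldsymbol{0}$, where $\tilde{\boldsymbol{h}}(\boldsymbol{w}):=\|\boldsymbol{w}\|_2(n\boldsymbol{w}-\boldsymbol{\lambda}_{yx})-(n-1)\|\boldsymbol{w}\|_2^{-1}\boldsymbol{w}\boldsymbol{w}^\top\boldsymbol{\lambda}_{yx}$. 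The point of this rewriting is that the right-hand side of Equation~\eqref{eq:gf_analysis_reparam_ivp} is exactly $-\tilde{\boldsymbol{h}}(\boldsymbol{u}(t))$, so $\boldsymbol{u}(\cdot)$ solves the autonomous system $\tfrac{d}{dt}\boldsymbol{u}(t)=-\tilde{\boldsymbol{h}}(\boldsymbol{u}(t))$.

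Next I would set up the change of variables carefully. By Lemma~\ref{app:proof:gf_analysis:reparam:u(t)_infinite_time}, $\boldsymbol{u}(\cdot)$ is defined on $[0,\infty)$, and by Lemma~\ref{app:proof:gf_analysis:reparam:u(t)_bound_norm} the integrand $\|\boldsymbol{u}(t')\|_2^{-(1-2/n)}$ is bounded on each compact interval; hence $\xi$ is a well-defined, finite-valued $C^1$ function with $\xi(0)=0$ and $\xi'(t)=\|\boldsymbol{u}(t)\|_2^{-(1-2/n)}>0$. Consequently $\xi$ is a strictly increasing bijection from $[0,\infty)$ onto $[0,T)$, where $T:=\lim_{t\to\infty}\xi(t)\in(0,\infty]$, with $C^1$ inverse $\zeta:[0,T)\to[0,\infty)$ satisfying $\zeta'(\tau)=1/\xi'(\zeta(\tau))=\|\boldsymbol{u}(\zeta(\tau))\|_2^{1-2/n}$. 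Define $\boldsymbol{p}(\tau):=\boldsymbol{u}(\zeta(\tau))$ for $\tau\in[0,T)$; it is $C^1$ and valued in $\R^{d_0}\setminus\{\boldsymbol{0}\}$. Differentiating and using the simplified dynamics of $\boldsymbol{u}(\cdot)$ together with the identity for $\boldsymbol{h}$ above, $\tfrac{d}{d\tau}\boldsymbol{p}(\tau)=\zeta'(\tau)\,\tfrac{d}{dt}\boldsymbol{u}(\zeta(\tau))=-\|\boldsymbol{p}(\tau)\|_2^{1-2/n}\tilde{\boldsymbol{h}}(\boldsymbol{p}(\tau))=-\boldsymbol{h}(\boldsymbol{p}(\tau))$, while $\boldsymbol{p}(0)=\boldsymbol{u}(0)=\boldsymbol{w}_{n:1,s}$.

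To conclude I would invoke uniqueness. By Lemma~\ref{lemma:gf_analysis_e2e_dyn}, $\boldsymbol{w}_{n:1}(\cdot)$ — which is defined on all of $[0,\infty)$ since the gradient flow trajectory is (Lemma~\ref{app:proof:gf_analysis:prelim:GF_infinite_time}) and $W_{n:1}$ is a continuous function of $\thetabf$ — solves the same initial value problem $\boldsymbol{w}(0)=\boldsymbol{w}_{n:1,s}$, $\tfrac{d}{dt}\boldsymbol{w}(t)=-\boldsymbol{h}(\boldsymbol{w}(t))$. Since the power of $\|\cdot\|_2$ controlling both terms of $\boldsymbol{h}$ has exponent $2-2/n\geq1$ (as $n\geq2$), $\boldsymbol{h}$ is locally Lipschitz on $\R^{d_0}$, so Theorem~\ref{theorem:exist_unique} gives uniqueness of the solution, whence $\boldsymbol{p}(\tau)=\boldsymbol{w}_{n:1}(\tau)$ for all $\tau\in[0,T)$. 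Therefore, for any $t\geq0$ we have $\xi(t)\in[0,T)$ and $\boldsymbol{w}_{n:1}(\xi(t))=\boldsymbol{p}(\xi(t))=\boldsymbol{u}(\zeta(\xi(t)))=\boldsymbol{u}(t)$, which is the desired identity.

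The main obstacle is a mild circularity in the reparameterization: $\xi$ is defined through $\|\boldsymbol{u}\|_2$, whereas matching $\boldsymbol{w}_{n:1}(\xi(\cdot))$ to the $\boldsymbol{u}$-dynamics directly would seem to require $\|\boldsymbol{w}_{n:1}(\xi(t))\|_2$ instead; composing with the inverse $\zeta$ and appealing to ODE uniqueness for the end-to-end system (rather than running a forward bootstrap/continuation argument on the set where the norms agree) is what resolves this cleanly. The only other point requiring care is checking that $\boldsymbol{h}$ is locally Lipschitz across $\R^{d_0}$, including at the origin, so that Theorem~\ref{theorem:exist_unique} applies — this is immediate from $n\geq2$.
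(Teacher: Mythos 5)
Your proposal is correct and takes essentially the same approach as the paper: both arguments reduce the claim to recognizing two curves as solutions of a single initial value problem and invoking uniqueness (Theorem~\ref{theorem:exist_unique}), after the same algebraic simplification of $\boldsymbol{h}(\boldsymbol{w})=\|\boldsymbol{w}\|_2^{1-2/n}\tilde{\boldsymbol{h}}(\boldsymbol{w})$. The only (cosmetic) difference is the direction of the time change: the paper keeps the $t$-clock and shows $\boldsymbol{u}(\cdot)$ and $\boldsymbol{w}_{n:1}(\xi(\cdot))$ both solve the time-dependent system $\tfrac{d}{dt}\boldsymbol{q}(t)=-\boldsymbol{h}(\boldsymbol{q}(t))/\|\boldsymbol{u}(t)\|_2^{1-2/n}$, whereas you invert $\xi$ and compare $\boldsymbol{u}(\zeta(\cdot))$ with $\boldsymbol{w}_{n:1}(\cdot)$ as solutions of the autonomous end-to-end dynamics.
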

\begin{proof}
Define $\boldsymbol{g}:[0,\infty)\times\mathbb{R}^{d_{0}}/\{\boldsymbol{0}\}\rightarrow\mathbb{R}^{d_{0}}$ by $\boldsymbol{g}(t,\boldsymbol{w}):=-\boldsymbol{h}(\boldsymbol{w})\big/\Vert\boldsymbol{u}(t)\Vert_{2}^{1-2/n}$.
Note that $\boldsymbol{g}(\cdot)$ is locally Lipschitz continuous.
Define the following initial value problem:
\begin{equation}
\boldsymbol{q}(0)=\boldsymbol{w}_{n:1}(0)\hspace{5mm},\hspace{5mm}\tfrac{d}{dt}\boldsymbol{q}(t)=\boldsymbol{g}\big(t,\boldsymbol{q}(t)\big)\text{.}
\label{eq:reparam_IVP}
\end{equation}
We will show both $\boldsymbol{u}(\cdot)$ and $\boldsymbol{w}_{n:1}(\xi(\cdot))$ are solutions to Equation~\eqref{eq:reparam_IVP}, which by uniqueness implies $\boldsymbol{u}(\cdot) = \boldsymbol{w}_{n:1}(\xi(\cdot))$ for all $t \geq 0$, as required.  By the definition of $\boldsymbol{u}(\cdot)$ (solution to Equation~\eqref{eq:gf_analysis_reparam_ivp}) it holds that $\boldsymbol{u}(0) = \boldsymbol{w}_{n:1}(0) = \boldsymbol{w}_{n:1}(\xi(0))$.
With the help of Lemma~\ref{lemma:gf_analysis_e2e_dyn} we establish the following for $t \geq 0$:
\[
\tfrac{d}{dt}\big(\boldsymbol{w}_{n:1}(\xi(t))\big)=\tfrac{d}{dt}\boldsymbol{w}_{n:1}\big(\xi(t)\big)\cdot\tfrac{d\xi}{dt}(t)=-\boldsymbol{h}\big(\boldsymbol{w}_{n:1}(\xi(t))\big)\big/\Vert\boldsymbol{u}(t)\Vert_{2}^{1-2/n}=\boldsymbol{g}\big(t,\boldsymbol{w}_{n:1}(\xi(t))\big)\text{.}
\]
Recall that $\boldsymbol{u}(\cdot)$ is a solution to  Equation~(\ref{eq:gf_analysis_reparam_ivp}).
For all $t \geq 0$, it holds that:
\[
\begin{aligned}\tfrac{d}{dt}\boldsymbol{u}(t) & =-\Vert\boldsymbol{u}(t)\Vert_{\hspace{-0.1mm}2}\big(n\boldsymbol{u}(t)-\boldsymbol{\lambda}_{yx}\big)+\left(n-1\right)\hspace{-0.5mm}\|\boldsymbol{u}(t)\|_{2}^{-1}\boldsymbol{u}(t)\boldsymbol{u}(t)^{\top}\boldsymbol{\lambda}_{yx}\\[1mm]
	& =-\Vert\boldsymbol{u}(t)\Vert_{\hspace{-0.1mm}2}\big(\boldsymbol{u}(t)-\boldsymbol{\lambda}_{yx}\big)-\left(n-1\right)\Vert\boldsymbol{u}(t)\Vert_{\hspace{-0.1mm}2}\boldsymbol{u}(t)+\left(n-1\right)\|\boldsymbol{u}(t)\|_{2}^{-1}\boldsymbol{u}(t)\boldsymbol{u}(t)^{\top}\boldsymbol{\lambda}_{yx}\\
	& =-\Vert\boldsymbol{u}(t)\Vert_{\hspace{-0.1mm}2}\big(\boldsymbol{u}(t)-\boldsymbol{\lambda}_{yx}\big)-\left(n-1\right)\bigl[\boldsymbol{u}(t)\boldsymbol{u}(t)^{\top}\bigr]^{\frac{1}{2}}\boldsymbol{u}(t)+\left(n-1\right)\bigl[\boldsymbol{u}(t)\boldsymbol{u}(t)^{\top}\bigr]^{\frac{1}{2}}\boldsymbol{\lambda}_{yx}\\
	& =-\Vert\boldsymbol{u}(t)\Vert_{\hspace{-0.1mm}2}\big(\boldsymbol{u}(t)-\boldsymbol{\lambda}_{yx}\big)-\left(n-1\right)\bigl[\boldsymbol{u}(t)\boldsymbol{u}(t)^{\top}\bigr]^{\frac{1}{2}}\big(\boldsymbol{u}(t)-\boldsymbol{\lambda}_{yx}\big)\\
	& =-\Vert\boldsymbol{u}(t)\Vert_{\hspace{-0.1mm}2}\big(\boldsymbol{u}(t)-\boldsymbol{\lambda}_{yx}\big)-\left(n-1\right)\bigl[\boldsymbol{u}(t)\boldsymbol{u}(t)^{\top}\bigr]^{\frac{1}{2}}\nabla\phi\big(\boldsymbol{u}(t)\big)\\
	& =-\Big(\Vert\boldsymbol{u}(t)\Vert_{\hspace{-0.1mm}2}^{2-2/n}\big(\boldsymbol{u}(t)-\boldsymbol{\lambda}_{yx}\big)+\left(n-1\right)\bigl[\boldsymbol{u}(t)\boldsymbol{u}(t)^{\top}\bigr]^{1-1/n}\nabla\phi\big(\boldsymbol{u}(t)\big)\Big)\Big/\Vert\boldsymbol{u}(t)\Vert_{\hspace{-0.1mm}2}^{1-2/n}\\[-0.3mm]
	& =-\boldsymbol{h}\big(\boldsymbol{u}(t)\big)\big/\Vert\boldsymbol{u}(t)\Vert_{2}^{1-2/n}\\[1.5mm]
	& =\boldsymbol{g}\big(t,\boldsymbol{u}(t)\big)\text{.}
\end{aligned}
\]
The above confirms that $\boldsymbol{u}(\cdot)$ and $\boldsymbol{w}_{n:1}(\xi(\cdot))$ are both solutions to Equation~\eqref{eq:reparam_IVP}, thereby completing the proof.
\end{proof}

\subsubsection{Minimal Distance From Origin} \label{app:proof:gf_analysis:min_dist}

In this subsubappendix we derive a lower bound on the minimal distance of~$\boldsymbol{u} ( \cdot )$~---~solution to Equation~\eqref{eq:gf_analysis_reparam_ivp}, which by Lemma~\ref{app:proof:gf_analysis:reparam:u(t)_reparam_proof} is a reparameterization of~$\boldsymbol{w}_{n:1} ( \cdot )$~---~from the origin.
We denote this minimal distance by~$u_{\min}$, \ie~we let $u_{\min}:=\inf_{t\geq0}\Vert\boldsymbol{u}(t)\Vert_{2}$.

Recall the function~$\nu ( \cdot )$ from Definition~\ref{app:proof:gf_analysis:reparam:nu(t)_definition_correlation}.
Lemma~\ref{app:proof:gf_analysis:reparam:nu(t)_properties} below establishes several properties of this function.
\begin{lemma}
	\label{app:proof:gf_analysis:reparam:nu(t)_properties}
	For all $t\geq0$, the following hold: 
	\[
	\begin{aligned}
		(i)\hspace{1.5mm}\quad & \nu(t)  \in(-1,1]\text{\,;}\\[0.5mm]
		(ii)\hspace{0.75mm}\quad & \hspace{-0.3mm}\tfrac{d}{dt}\nu(t)  =1-\nu(t)^{2}\text{;}\\[-0.5mm]
		(iii)\quad & \nu(t)=1-2\cdot\tfrac{1-\nu(0)}{1+\nu(0)}\Big/\big(\tfrac{1-\nu(0)}{1+\nu(0)}+e^{2t}\big)\text{\,; and}
		\\[-0.5mm]
		(iv)\hspace{0.25mm}\quad &{\textstyle \lim_{t{\scriptscriptstyle \nearrow}\infty}}\nu(t)  =1\text{\,.}
	\end{aligned}
	\]
\end{lemma}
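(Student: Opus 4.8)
The plan is to treat the four parts in the order \emph{(ii)}, \emph{(iii)}, \emph{(i)}, \emph{(iv)}: first derive the scalar ordinary differential equation governing $\nu(\cdot)$, then solve it in closed form, and finally read off the range of $\nu(\cdot)$ and its limit directly from the closed form. Throughout I would rely on the fact that $\boldsymbol{u}(\cdot)$ is defined and non-vanishing on all of $[0,\infty)$ (Lemma~\ref{app:proof:gf_analysis:reparam:u(t)_infinite_time}), so that $\nu(t) = \boldsymbol{\lambda}_{yx}^{\top}\boldsymbol{u}(t) / \Vert\boldsymbol{u}(t)\Vert_{2}$ is a well-defined differentiable function on $[0,\infty)$, and on the normalization $\Vert\boldsymbol{\lambda}_{yx}\Vert_{2} = 1$.

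For part~\emph{(ii)} I would differentiate $\nu(t) = \boldsymbol{\lambda}_{yx}^{\top}\boldsymbol{u}(t) / \Vert\boldsymbol{u}(t)\Vert_{2}$ via the quotient and chain rules. The term $\boldsymbol{\lambda}_{yx}^{\top}\tfrac{d}{dt}\boldsymbol{u}(t)$ is evaluated by substituting the dynamics of Equation~\eqref{eq:gf_analysis_reparam_ivp} and then simplifying with the identities $\boldsymbol{\lambda}_{yx}^{\top}\boldsymbol{u}(t) = \nu(t)\Vert\boldsymbol{u}(t)\Vert_{2}$, $\big(\boldsymbol{\lambda}_{yx}^{\top}\boldsymbol{u}(t)\big)^{2} = \nu(t)^{2}\Vert\boldsymbol{u}(t)\Vert_{2}^{2}$, and $\Vert\boldsymbol{\lambda}_{yx}\Vert_{2}^{2} = 1$; the term arising from differentiating the denominator is handled using $\tfrac{d}{dt}\Vert\boldsymbol{u}(t)\Vert_{2} = n\Vert\boldsymbol{u}(t)\Vert_{2}\big(\nu(t) - \Vert\boldsymbol{u}(t)\Vert_{2}\big)$ from Lemma~\ref{app:proof:gf_analysis:reparam:u'(t)_reparam_derivative}. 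After collecting terms, all dependence on $\Vert\boldsymbol{u}(t)\Vert_{2}$ cancels and one is left with $\tfrac{d}{dt}\nu(t) = 1 + (n-1)\nu(t)^{2} - n\nu(t)^{2} = 1 - \nu(t)^{2}$. I expect this cancellation, and in particular keeping track of signs, to be the one delicate step; the rest of the argument is routine.

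Granting part~\emph{(ii)}, for part~\emph{(iii)} I would set $r := \tfrac{1 - \nu(0)}{1 + \nu(0)}$, which is a well-defined nonnegative real number since $\nu(0) \in [-1,1]$ (Definition~\ref{app:proof:gf_analysis:reparam:nu(t)_definition_correlation}) while $\nu(0) \neq -1$ (hypothesis of Proposition~\ref{prop:gf_analysis}), and then verify directly that $\psi(t) := 1 - 2r \big/ (r + e^{2t})$ satisfies $\psi(0) = \nu(0)$ and $\tfrac{d}{dt}\psi(t) = 4 r e^{2t} / (r + e^{2t})^{2} = 1 - \psi(t)^{2}$. Thus $\psi(\cdot)$ and $\nu(\cdot)$ solve the same initial value problem, and since the one-dimensional vector field $(t,z) \mapsto 1 - z^{2}$ is locally Lipschitz continuous, uniqueness (Theorem~\ref{theorem:exist_unique}) forces $\nu \equiv \psi$ on $[0,\infty)$, which is precisely the asserted formula. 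Parts~\emph{(i)} and~\emph{(iv)} then follow immediately from this formula: since $r \geq 0$ and $e^{2t} \geq 1$ for $t \geq 0$, we have $0 \leq 2r / (r + e^{2t}) \leq 2r/(r+1) < 2$, hence $\nu(t) \in (-1,1]$; and $2r/(r+e^{2t}) \to 0$ as $t \to \infty$, hence $\nu(t) \to 1$.
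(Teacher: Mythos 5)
Your proposal is correct and follows essentially the same route as the paper's proof: derive $\tfrac{d}{dt}\nu(t)=1-\nu(t)^2$ from Equation~\eqref{eq:gf_analysis_reparam_ivp} and Lemma~\ref{app:proof:gf_analysis:reparam:u'(t)_reparam_derivative}, identify $\nu(\cdot)$ with the explicit solution of this scalar initial value problem via uniqueness, and read off \emph{(i)} and \emph{(iv)} from the closed form. Your explicit justification of the range bound in \emph{(i)} (using $e^{2t}\geq 1$ and $2r/(r+1)<2$) is slightly more detailed than the paper, which simply states that \emph{(i)} and \emph{(iv)} follow immediately, but the substance is identical.
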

\begin{proof}
Recall   $\Vert\boldsymbol{\lambda}_{yx}\Vert_{2} =1$.
It holds that:
\[
\begin{aligned}\tfrac{d}{dt}\nu(t) & =\boldsymbol{\lambda}_{yx}^{\top}\tfrac{d}{dt}\Big(\tfrac{\boldsymbol{u}(t)}{\Vert\boldsymbol{u}(t)\Vert_{2}}\Big)\\[1.5mm]
	& =\boldsymbol{\lambda}_{yx}^{\top}\frac{\frac{d}{dt}\boldsymbol{u}(t)\Vert\boldsymbol{u}(t)\Vert_{2}-\boldsymbol{u}(t)\frac{d}{dt}\Vert\boldsymbol{u}(t)\Vert_{2}}{\Vert\boldsymbol{u}(t)\Vert_{2}^{2}}\\
	& =\frac{\boldsymbol{\lambda}_{yx}^{\top}}{\Vert\boldsymbol{u}(t)\Vert_{2}}\frac{\frac{d}{dt}\boldsymbol{u}(t)\Vert\boldsymbol{u}(t)\Vert_{2}-\boldsymbol{u}(t)\frac{d}{dt}\Vert\boldsymbol{u}(t)\Vert_{2}}{\Vert\boldsymbol{u}(t)\Vert_{2}}\text{.}
\end{aligned}
\]
Plugging in the expression for $\frac{d}{dt}\boldsymbol{u}(t)$ from Equation~(\ref{eq:gf_analysis_reparam_ivp}) and the one of $\frac{d}{dt}\Vert\boldsymbol{u}(t)\Vert_{2}$ from Lemma~\ref{app:proof:gf_analysis:reparam:u'(t)_reparam_derivative} (while dividing by $\Vert\boldsymbol{u}(t)\Vert_{2}$) affirms property \emph{(ii)}:
\[
\begin{aligned}\tfrac{d}{dt}\nu(t) & =\tfrac{\boldsymbol{\lambda}_{yx}^{\top}}{\Vert\boldsymbol{u}(t)\Vert_{2}}\bigg(\Big(\hspace{-0.9mm}\left(n-1\right)\tfrac{\boldsymbol{u}(t)\boldsymbol{u}(t)^{\top}}{\|\boldsymbol{u}(t)\|_{2}}\boldsymbol{\lambda}_{yx}-\Vert\boldsymbol{u}(t)\Vert_{\hspace{-0.1mm}2}\big(n\boldsymbol{u}(t)-\boldsymbol{\lambda}_{yx}\big)\hspace{-0.5mm}\Big)\hspace{-0.5mm}-\hspace{-0.5mm}\Big(\boldsymbol{u}(t)n\big(\hspace{-0.25mm}\nu(t)-\Vert\boldsymbol{u}(t)\Vert_{2}\big)\hspace{-0.2mm}\Big)\hspace{-0.35mm}\bigg)\\[-1mm]
	& =\Big(\left(n-1\right)\nu(t)^{2}-\Vert\boldsymbol{u}(t)\Vert_{\hspace{-0.1mm}2}\big(n\nu(t)-1/\Vert\boldsymbol{u}(t)\Vert_{2}\big)\Big)-\Big(\nu(t)n\big(\nu(t)-\Vert\boldsymbol{u}(t)\Vert_{2}\big)\Big)\\[0.5mm]
	& =\left(n-1\right)\nu(t)^{2}-n\nu(t)\Vert\boldsymbol{u}(t)\Vert_{\hspace{-0.1mm}2}+1-n\nu(t)^{2}+n\nu(t)\Vert\boldsymbol{u}(t)\Vert_{2}\\[1.5mm]
	& =1-\nu(t)^{2}\text{.}
\end{aligned}
\]
By Theorem~\ref{theorem:exist_unique}, the initial value problem which $\nu(t)$ solves (\ie~$\nu(0)=0$ and $\frac{d}{dt}\nu(t)=1-\nu(t)^{2}$) admits a unique solution.
Since $t \mapsto 1-2\cdot\big(\tfrac{1-\nu(0)}{1+\nu(0)}\big)\big/\big(\tfrac{1-\nu(0)}{1+\nu(0)}+e^{2t}\big)$ is a solution to this problem, it must be that $\nu(t) = 1-2\cdot\big(\tfrac{1-\nu(0)}{1+\nu(0)}\big)\big/\big(\tfrac{1-\nu(0)}{1+\nu(0)}+e^{2t}\big)$.
This confirms property \emph{(iii)}.
Properties \emph{(i)} and \emph{(iv)} immediately follow.
\end{proof}
Below we define a point in time that will turn out to be one at which the distance of~$\boldsymbol{u} ( \cdot )$ from the origin is minimal (\ie~is equal to~$u_{\min}$).
\begin{definition}
	\label{app:proof:gf_analysis:optimization:t_m}
	Let $t_m := \inf\left\{ t \geq 0 : \nu(t)\geq\Vert\boldsymbol{u}(t)\Vert_{2}\right\}$, where by convention $t_m = \infty$ if $\nu(t) < \Vert\boldsymbol{u}(t)\Vert_{2}$ for all $t \geq 0$.
\end{definition}
Lemma~\ref{app:proof:gf_analysis:optimization:u(t)_dynamics_decreasing_increasing} below establishes that~$t_m$ is finite, that the norm of~$\boldsymbol{u} ( \cdot )$ is monotonically decreasing until~$t_m$ and monotonically non-decreasing thereafter, and that this norm remains smaller than one.
\begin{lemma}
	\label{app:proof:gf_analysis:optimization:u(t)_dynamics_decreasing_increasing}
	It holds that: 
	\[
	\begin{aligned}
		(i)\quad\quad\:\: & t_m < \infty \text{\,;}\\[0.8mm]
		(ii)\quad\quad\: & \hspace{-0.5mm}\tfrac{d}{dt}\Vert\boldsymbol{u}(t)\Vert_{2} < 0\:\text{ for all }t\hspace{-0.7mm}\in\hspace{-0.6mm}[0,t_m)\text{\,;}\\[0.8mm]
		(iii)\quad\quad & \hspace{-0.5mm}\tfrac{d}{dt}\Vert\boldsymbol{u}(t)\Vert_{2}\geq0\:\text{ for all }t\hspace{-0.7mm}\in\hspace{-0.6mm}[t_m,\infty)\text{\,; and}\\[0.8mm]
		(iv)\quad\quad & \Vert\boldsymbol{u}(t)\Vert_{2} < 1\:\text{ for all }t \geq 0\text{\,.}
	\end{aligned}
	\]
\end{lemma}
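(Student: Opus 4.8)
The plan is to work entirely with the scalar quantity $r(t):=\Vert\boldsymbol{u}(t)\Vert_{2}$, whose dynamics are $\tfrac{d}{dt}r(t)=n\hspace{0.25mm}r(t)\big(\nu(t)-r(t)\big)$ by Lemma~\ref{app:proof:gf_analysis:reparam:u'(t)_reparam_derivative}, using the following facts collected from earlier: $r(t)>0$ for all $t$ (Lemma~\ref{app:proof:gf_analysis:reparam:u(t)_infinite_time}), $r(0)=\Vert\boldsymbol{w}_{n:1,s}\Vert_{2}\leq0.2$ (the assumption of Proposition~\ref{prop:gf_analysis}, under the matrix/vector identification of Subsubappendix~\ref{app:proof:gf_analysis:prelim}), and, from Lemma~\ref{app:proof:gf_analysis:reparam:nu(t)_properties}, that $\nu(\cdot)$ is non-decreasing, takes values in $(-1,1]$, obeys $\tfrac{d}{dt}\nu(t)=1-\nu(t)^{2}$, and tends to $1$. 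Since $r(\cdot)$ and $\nu(\cdot)$ are continuous, the set $S:=\{t\geq0:\nu(t)\geq r(t)\}$ is closed, so $t_m=\inf S$ lies in $S$ whenever $S\neq\emptyset$. Property~\emph{(ii)} is then immediate: for $t\in[0,t_m)$ we have $t\notin S$, i.e.\ $\nu(t)<r(t)$, hence $\tfrac{d}{dt}r(t)=n\hspace{0.25mm}r(t)\big(\nu(t)-r(t)\big)<0$. For property~\emph{(i)} I would first rule out $S=\emptyset$: were it empty, the same computation would give $\tfrac{d}{dt}r(t)<0$ for all $t\geq0$, so $r(t)\leq r(0)\leq0.2$ for all $t$; but $\nu(t)\to1$, so $\nu(t_1)>0.2\geq r(t_1)$ for $t_1$ large, i.e.\ $t_1\in S$, a contradiction. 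Hence $S\neq\emptyset$ and $t_m=\min S<\infty$.

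For property~\emph{(iii)} I would invoke the comparison Lemma~\ref{lemma:gf_analysis_hairer} on the interval $[t_m,\infty)$ (after shifting time so that it starts at $0$), with lower function $r(\cdot)$, upper function $\nu(\cdot)$, and comparison vector field $g(t,z):=n\hspace{0.25mm}z\big(\nu(t)-z\big)$, which is locally Lipschitz since $\nu(\cdot)$ is smooth. The hypotheses hold: $r(t_m)\leq\nu(t_m)$ because $t_m\in S$; $\tfrac{d}{dt}r(t)=g\big(t,r(t)\big)$ by construction; and $\tfrac{d}{dt}\nu(t)=1-\nu(t)^{2}\geq0=g\big(t,\nu(t)\big)$. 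The lemma then gives $r(t)\leq\nu(t)$ for all $t\geq t_m$, whence $\tfrac{d}{dt}r(t)=n\hspace{0.25mm}r(t)\big(\nu(t)-r(t)\big)\geq0$ on $[t_m,\infty)$, which is~\emph{(iii)}; notably this handles the degenerate case $\nu\equiv1$ (where $t_m=0$) without any special treatment.

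For property~\emph{(iv)} I would again use Lemma~\ref{lemma:gf_analysis_hairer}, now on $[0,\infty)$, with lower function $r(\cdot)$, constant upper function $\bar q\equiv1$, and the same $g$: indeed $r(0)\leq0.2<1$, $\tfrac{d}{dt}r(t)=g\big(t,r(t)\big)$, and $\tfrac{d}{dt}\bar q(t)=0\geq n\big(\nu(t)-1\big)=g(t,1)$ since $\nu(t)\leq1$; this yields $r(t)\leq1$ for all $t$. To sharpen this to a strict inequality, suppose $r(t^{*})=1$ for some $t^{*}$; then $t^{*}>0$ (as $r(0)<1$), and $t^{*}$ is a global maximum of $r(\cdot)$, so $\tfrac{d}{dt}r(t^{*})=0$, which forces $\nu(t^{*})=1$, hence $\nu(0)=1$ by the explicit formula in Lemma~\ref{app:proof:gf_analysis:reparam:nu(t)_properties} (part~\emph{(iii)}), hence $\nu\equiv1$; but then $1-r(\cdot)$ satisfies the linear homogeneous ODE $\tfrac{d}{dt}\big(1-r(t)\big)=-n\hspace{0.25mm}r(t)\big(1-r(t)\big)$ and vanishes at $t^{*}$, so $1-r\equiv0$, contradicting $r(0)<1$. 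The main (and only mildly delicate) obstacle I anticipate is the bookkeeping around $t_m$ — verifying that $S$ is closed and nonempty so $t_m$ is a genuine minimum lying in $S$ — together with dispatching the degenerate case $\nu\equiv1$ in the strictness part of~\emph{(iv)}; once these are in place, all four claims reduce to one-dimensional comparison arguments.
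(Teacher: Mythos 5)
Your proof is correct, and for properties \emph{(iii)} and \emph{(iv)} it takes a genuinely different route from the paper's. Properties \emph{(i)} and \emph{(ii)} you handle essentially as the paper does (the paper also derives \emph{(ii)} directly from the definition of $t_m$ and rules out $t_m=\infty$ by the contradiction with $\nu(t)\to1$), though your observation that $S$ is closed, so that $t_m=\min S\in S$, is a cleaner way to secure $r(t_m)\leq\nu(t_m)$ than the paper's appeal to continuity. The real divergence is in \emph{(iii)}: the paper first splits off the degenerate case $\nu(0)=1$ and solves the logistic ODE explicitly there, and in the generic case introduces $\bar{t}_m:=\inf\{t\geq t_m:\nu(t)<\Vert\boldsymbol{u}(t)\Vert_2\}$ and derives a contradiction by comparing $\tfrac{d}{dt}\nu=1-\nu^2>0$ against $\tfrac{d}{dt}\Vert\boldsymbol{u}\Vert_2=0$ at $\bar{t}_m$; you instead feed $\nu(\cdot)$ itself into the comparison Lemma~\ref{lemma:gf_analysis_hairer} as an upper barrier (exploiting $g(t,\nu(t))=0\leq1-\nu(t)^2$), which yields $r\leq\nu$ on $[t_m,\infty)$ in one stroke and absorbs the case $\nu\equiv1$ with no special treatment. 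For \emph{(iv)} the paper reads off strictness from $r\leq\nu<1$ (valid since $\nu(0)\neq1$ there) together with the pre-$t_m$ bound $r(t)\leq r(0)<1$, whereas you run a second barrier argument at the constant $1$ and then restore strictness via the global-maximum/linear-ODE uniqueness argument; both are sound, and your version buys uniformity across the degenerate case at the cost of the extra strictness step. All hypotheses of the comparison lemma are verified correctly (local Lipschitzness of $g$, positivity of $r$ from Lemma~\ref{app:proof:gf_analysis:reparam:u(t)_infinite_time}), so no gaps.
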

\begin{proof}
We start by treating the special case where $\nu(0)=1$.
Recall that by assumption $\Vert\boldsymbol{w}_{n:1}(0)\Vert<1$.
Together with Equation~(\ref{eq:gf_analysis_reparam_ivp}) this implies $\Vert\boldsymbol{u}(0)\Vert_{2}=\Vert\boldsymbol{w}_{n:1}(0)\Vert_{2}<1=\nu(0)$.
Thus, by definition $t_{m}=0$.
We trivially obtain properties \emph{(i)} and \emph{(ii)}.	
By Lemma~\ref{app:proof:gf_analysis:reparam:u'(t)_reparam_derivative} together with property~\emph{(iii)} from Lemma~\ref{app:proof:gf_analysis:reparam:nu(t)_properties} it holds that $\frac{d}{dt}\Vert\boldsymbol{u}(t)\Vert_{2} =n\Vert\boldsymbol{u}(t)\Vert_{2} \big(1-\Vert\boldsymbol{u}(t)\Vert_{2} \big)$.
These dynamics, along with the initial value $\Vert\boldsymbol{u}(0)\Vert_{2}$, induce an initial value problem whose unique solution is $\Vert\boldsymbol{u}(t)\Vert_{2} = e^{nt}\big/\big(e^{nt}+\Vert\boldsymbol{u}(0)\Vert^{-1}-1\big)$. This confirms properties \emph{(iii)} and \emph{(iv)}.
From this point onward we assume $\nu(0)\neq1$.

By definition of $t_{m}$, it holds that $\Vert\boldsymbol{u}(t)\Vert_{2}>\nu(t)$ for $t\in[0,t_{m})$.
Together with Lemma~\ref{app:proof:gf_analysis:reparam:u'(t)_reparam_derivative} this implies property~\emph{(ii)}.

Relying on property \emph{(ii)}, we have  $\Vert\boldsymbol{u}(t)\Vert_{2}\leq \Vert\boldsymbol{u}(0)\Vert_{2} <1$ for $t\in[0,t_{m})$, where we used the fact that $\Vert\boldsymbol{u}(0)\Vert_{2}=\Vert\boldsymbol{w}_{n:1}(0)\Vert_{2}<1$.  
Assume by contradiction that property \emph{(i)} does not hold, \ie~$t_{m}=\infty$.
This means that $\nu(t)<\Vert\boldsymbol{u}(t)\Vert_{2}\leq\Vert\boldsymbol{u}(0)\Vert_{2}<1$ for all $t \geq 0$.
On the other hand, by Lemma~\ref{app:proof:gf_analysis:reparam:nu(t)_properties}, $\lim_{t{\scriptscriptstyle \nearrow}\infty}\nu(t)=1$ --- a contradiction.
Thus, property \emph{(i)} must hold.

From the definition of $t_{m}$, together with continuity of $\nu(t)$ and $\Vert\boldsymbol{u}(t)\Vert_{2}$, it must be that $\nu(t_{m})\geq\Vert\boldsymbol{u}(t_{m})\Vert_{2}$.
Define $\bar{t}_{m}:=\inf\left\{ t\geq t_{m}:\nu(t)<\Vert\boldsymbol{u}(t)\Vert_{2}\right\}$, where by convention the infimum of the empty set is equal to infinity.
Property \emph{(iii)} of Lemma~\ref{app:proof:gf_analysis:reparam:nu(t)_properties} together with $\nu(0)<1$ imply that $\nu(t)<1$ for all $t\geq0$ (recall that we are treating the case $\nu(0)\neq1$).
We have previously shown (in the proof of this lemma) that $\Vert\boldsymbol{u}(t)\Vert<1$ for $t\in[0,t_{m})$.
By definition of $\bar{t}_{m}$, it holds that $\Vert\boldsymbol{u}(t)\Vert_{2}\leq\nu(t)<1$ for all  $t\in[t_{m},\bar{t}_{m})$.
Relying on this inequality together with Lemma~\ref{app:proof:gf_analysis:reparam:u'(t)_reparam_derivative}, we have that $\frac{d}{dt}\Vert\boldsymbol{u}(t)\Vert\geq0$ for $t\in[t_{m},\bar{t}_{m})$.
If $\bar{t}_{m}=\infty$, then we obtain properties \emph{(iii)} and \emph{(iv)}, thereby finishing the proof.
Assume by contradiction that this is not the case, \ie~$\bar{t}_{m}<\infty$.
From continuity $\Vert\boldsymbol{u}(\bar{t}_{m})\Vert=\nu(\bar{t}_{m})$.
By Lemmas~\ref{app:proof:gf_analysis:reparam:u'(t)_reparam_derivative} and \ref{app:proof:gf_analysis:reparam:nu(t)_properties}:
\[
\tfrac{d}{dt}\nu(t)\big|_{t=\bar{t}_m}=1-\nu(\bar{t}_{m})^{2}>0=n\Vert\boldsymbol{u}(\bar{t}_{m})\Vert_{2}\Big(\nu(\bar{t}_{m})-\Vert\boldsymbol{u}(\bar{t}_{m})\Vert_{2}\Big)=\left. \tfrac{d}{dt} \| \boldsymbol{u} ( t ) \|_2 \right|_{t=\bar{t}_m}\text{,}
\]
implying existence of a right  neighborhood of $\bar{t}_{m}$ which contradicts its definition.
\end{proof}
As a direct consequence of Lemma~\ref{app:proof:gf_analysis:optimization:u(t)_dynamics_decreasing_increasing}, we obtain that the distance of~$\boldsymbol{u} ( \cdot )$ from the origin is indeed minimal at time~$t_m$.
This is formalized in Lemma~\ref{app:proof:gf_analysis:optimization:u_min_t_min_equal} below.
\begin{lemma}
	\label{app:proof:gf_analysis:optimization:u_min_t_min_equal}
	It holds that $\Vert\boldsymbol{u}(t_{m})\Vert_{2} = u_{\min}$.
	Moreover, if $\nu ( 0 ) \leq \Vert\boldsymbol{u}(0)\Vert_{2}$ then $\nu (t_m ) = u_{\min}$.
\end{lemma}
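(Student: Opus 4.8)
The plan is to read off both statements from Lemma~\ref{app:proof:gf_analysis:optimization:u(t)_dynamics_decreasing_increasing}, which establishes that $t_{m}<\infty$, that $t\mapsto\Vert\boldsymbol{u}(t)\Vert_{2}$ is strictly decreasing on $[0,t_{m})$, and that it is non-decreasing on $[t_{m},\infty)$. The first claim is then the assertion that $t_{m}$ is a global minimizer of this map.

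For the first claim I would argue as follows. Let $t\in[0,t_{m})$ and pick any $s\in(t,t_{m})$; strict monotonicity on $[0,t_{m})$ gives $\Vert\boldsymbol{u}(t)\Vert_{2}>\Vert\boldsymbol{u}(s)\Vert_{2}$, and letting $s\nearrow t_{m}$ and using continuity of $\Vert\boldsymbol{u}(\cdot)\Vert_{2}$ yields $\Vert\boldsymbol{u}(t)\Vert_{2}\geq\Vert\boldsymbol{u}(t_{m})\Vert_{2}$ (this step is vacuous if $t_{m}=0$). For $t\geq t_{m}$, monotonicity on $[t_{m},\infty)$ gives $\Vert\boldsymbol{u}(t)\Vert_{2}\geq\Vert\boldsymbol{u}(t_{m})\Vert_{2}$ directly. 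Hence $\Vert\boldsymbol{u}(t)\Vert_{2}\geq\Vert\boldsymbol{u}(t_{m})\Vert_{2}$ for all $t\geq0$, so taking the infimum gives $u_{\min}=\inf_{t\geq0}\Vert\boldsymbol{u}(t)\Vert_{2}=\Vert\boldsymbol{u}(t_{m})\Vert_{2}$.

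For the second claim, assume $\nu(0)\leq\Vert\boldsymbol{u}(0)\Vert_{2}$; in view of the first claim it suffices to show $\nu(t_{m})=\Vert\boldsymbol{u}(t_{m})\Vert_{2}$. Since $t\mapsto\nu(t)-\Vert\boldsymbol{u}(t)\Vert_{2}$ is continuous, the set $S:=\{t\geq0:\nu(t)\geq\Vert\boldsymbol{u}(t)\Vert_{2}\}$ is closed, and it is nonempty because $t_{m}<\infty$ by part~(i) of Lemma~\ref{app:proof:gf_analysis:optimization:u(t)_dynamics_decreasing_increasing}; hence $t_{m}=\inf S\in S$, which gives $\nu(t_{m})\geq\Vert\boldsymbol{u}(t_{m})\Vert_{2}$. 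For the reverse inequality I would split into cases. If $t_{m}>0$, then every $t\in[0,t_{m})$ lies outside $S$, \ie~$\nu(t)<\Vert\boldsymbol{u}(t)\Vert_{2}$, and letting $t\nearrow t_{m}$ together with continuity yields $\nu(t_{m})\leq\Vert\boldsymbol{u}(t_{m})\Vert_{2}$. If $t_{m}=0$, the hypothesis $\nu(0)\leq\Vert\boldsymbol{u}(0)\Vert_{2}$ is precisely $\nu(t_{m})\leq\Vert\boldsymbol{u}(t_{m})\Vert_{2}$. In either case $\nu(t_{m})=\Vert\boldsymbol{u}(t_{m})\Vert_{2}$, and combining with the first claim gives $\nu(t_{m})=u_{\min}$.

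I do not expect a substantive obstacle: the lemma is essentially bookkeeping on top of the monotonicity already proven. The only points needing care are the boundary case $t_{m}=0$, which must be handled using the hypothesis $\nu(0)\leq\Vert\boldsymbol{u}(0)\Vert_{2}$ directly rather than by a limiting argument, and the observation that the infimum defining $t_{m}$ is actually attained, which follows from $S$ being closed and nonempty.
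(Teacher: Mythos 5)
Your proposal is correct and follows essentially the same route as the paper: the first claim is read off from the monotonicity properties of $\Vert\boldsymbol{u}(\cdot)\Vert_{2}$ established in Lemma~\ref{app:proof:gf_analysis:optimization:u(t)_dynamics_decreasing_increasing}, and the second from continuity together with the definition of $t_{m}$. The paper states these steps more tersely, but your careful handling of the boundary case $t_{m}=0$ and of the attainment of the infimum defining $t_{m}$ is exactly the bookkeeping the paper's argument implicitly relies on.
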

\begin{proof}
$\Vert\boldsymbol{u}(t_{m})\Vert_{2} = u_{\min}$ directly follows from properties \emph{(ii)} and \emph{(iii)} of Lemma~\ref{app:proof:gf_analysis:optimization:u(t)_dynamics_decreasing_increasing}.
In the case where $\nu(0)\hspace{-0.5mm}\leq\hspace{-0.5mm}\Vert\boldsymbol{u}(0)\Vert_{2}$, from continuity of $\nu(t)$ and $\Vert\boldsymbol{u}(t)\Vert_{2}$, and from the definition of $t_{m}$, it must be that $\Vert\boldsymbol{u}(t_{m})\Vert_{2}\hspace{-0.5mm}=\hspace{-0.5mm}\nu(t_{m})$.
Together with $\Vert\boldsymbol{u}(t_{m})\Vert_{2}\hspace{-0.5mm}=\hspace{-0.5mm}u_{\min}$, this concludes the proof.
\end{proof}
Finally, we are ready to establish a lower bound for~$u_{\min}$.
\begin{lemma}
	\label{app:proof:gf_analysis:optimization:u_min_bound}
	It holds that:
	\[
	u_{\min}
	\geq
	\big\Vert \boldsymbol{w}_{n:1} ( 0 ) \big\Vert_{2}\min\Big\{1,\Big(\tfrac{2}{3}\hspace{-0.5mm}\cdot\hspace{-0.5mm}\tfrac{1+\nu(0)}{1-\nu(0)}\Big)^{\hspace{-0.5mm}n}\hspace{0.5mm}\Big\}
	\text{\,,}
	\]
	where in the case $\nu(0)=1$ the fraction $( 1+\nu(0) ) / (1-\nu(0) )$ is to be interpreted as equal to infinity, leading to $u_{\min}\geq\Vert \boldsymbol{w}_{n:1} ( 0 ) \Vert_{2}$.
\end{lemma}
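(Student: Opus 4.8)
The plan is to transfer the whole statement to the reparameterized curve $\boldsymbol{u}(\cdot)$, where the dynamics of $\|\boldsymbol{u}(\cdot)\|_2$ and of $\nu(\cdot)$ have already been pinned down. Write $u_0:=\|\boldsymbol{u}(0)\|_2=\|\boldsymbol{w}_{n:1}(0)\|_2$ (the last equality by the initial condition in Equation~\eqref{eq:gf_analysis_reparam_ivp}), so that $u_0\leq 0.2$. By Lemma~\ref{app:proof:gf_analysis:optimization:u_min_t_min_equal}, $u_{\min}=\|\boldsymbol{u}(t_m)\|_2$, hence the target is $u_{\min}\geq u_0\min\{1,(\tfrac{2}{3}\cdot\tfrac{1+\nu(0)}{1-\nu(0)})^n\}$. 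I would first dispose of the case $\nu(0)\geq u_0$ (which in particular includes $\nu(0)=1$, since $u_0\leq 0.2<1$): here $0$ lies in the set defining $t_m$, so $t_m=0$, whence $u_{\min}=\|\boldsymbol{u}(0)\|_2=\|\boldsymbol{w}_{n:1}(0)\|_2$ and the bound follows from $\min\{1,\cdot\}\leq 1$. In the remaining case $\nu(0)<u_0\leq 0.2$ we get $\rho_0:=\tfrac{1-\nu(0)}{1+\nu(0)}>\tfrac{0.8}{1.2}=\tfrac23$, so $(\tfrac{2}{3\rho_0})^n<1$ and it is enough to establish $u_{\min}\geq u_0(\tfrac{2}{3\rho_0})^n$.

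For that case, set $r(t):=\|\boldsymbol{u}(t)\|_2$ and integrate the identity $\tfrac{d}{dt}\ln r(t)=n(\nu(t)-r(t))$ from Lemma~\ref{app:proof:gf_analysis:reparam:u'(t)_reparam_derivative} over $[0,t_m]$, which is a finite interval by Lemma~\ref{app:proof:gf_analysis:optimization:u(t)_dynamics_decreasing_increasing}. The $\nu$-contribution integrates in closed form: since $\tfrac{d}{dt}\nu=1-\nu^2$ (Lemma~\ref{app:proof:gf_analysis:reparam:nu(t)_properties}), one checks that $-\tfrac12\ln(1-\nu(t)^2)$ is an antiderivative of $\nu(t)$, giving $\int_0^{t_m}\nu(t)\,dt=\tfrac12\ln\tfrac{1-\nu(0)^2}{1-\nu(t_m)^2}$, and $\nu(t_m)=\|\boldsymbol{u}(t_m)\|_2=u_{\min}$ by Lemma~\ref{app:proof:gf_analysis:optimization:u_min_t_min_equal}. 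For the $r$-contribution I would use that $r$ is non-increasing on $[0,t_m]$ (Lemma~\ref{app:proof:gf_analysis:optimization:u(t)_dynamics_decreasing_increasing}), so $r(t)\leq u_0$, together with an explicit value of $t_m$: solving the closed-form expression for $\nu(t)$ from Lemma~\ref{app:proof:gf_analysis:reparam:nu(t)_properties} at $\nu(t_m)=u_{\min}$ gives $t_m=\tfrac12\ln(\rho_0/\rho_m)$, where $\rho_m:=\tfrac{1-u_{\min}}{1+u_{\min}}$. Hence $\int_0^{t_m}r(t)\,dt\leq\tfrac{u_0}{2}\ln(\rho_0/\rho_m)$.

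Combining, and using the elementary identity $1-x^2=\tfrac{1-x}{1+x}(1+x)^2$ at $x=\nu(0)$ and $x=u_{\min}$ together with $1+\nu(0)=(1-\nu(0))/\rho_0$, the estimate rearranges to $\ln u_{\min}-\ln u_0\geq\tfrac{n(1-u_0)}{2}\ln(\rho_0/\rho_m)+n\ln\tfrac{1-\nu(0)}{1+u_{\min}}-n\ln\rho_0$. Two monotonicity facts close the argument: $\nu$ is non-decreasing, so $\nu(0)\leq\nu(t_m)=u_{\min}$, which gives $\rho_0\geq\rho_m$ and makes the first term nonnegative (as $u_0<1$); and $\nu(0)<0.2$ with $u_{\min}\leq u_0\leq 0.2$ gives $\tfrac{1-\nu(0)}{1+u_{\min}}\geq\tfrac{0.8}{1.2}=\tfrac23$. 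Therefore $\ln u_{\min}-\ln u_0\geq n\ln\tfrac23-n\ln\rho_0=n\ln\tfrac{2}{3\rho_0}$, i.e.\ $u_{\min}\geq u_0(\tfrac{2}{3\rho_0})^n=\|\boldsymbol{w}_{n:1}(0)\|_2(\tfrac23\cdot\tfrac{1+\nu(0)}{1-\nu(0)})^n$, as required. I expect the main obstacle to be the $\int_0^{t_m}r\,dt$ term: one has to notice that the explicit formula for $\nu(t)$ determines $t_m$ exactly, and that the resulting $\rho_0$-dependence recombines cleanly only after $1-\nu(0)^2$ is rewritten as $\rho_0(1+\nu(0))^2$ and the inequality $\nu(0)\leq u_{\min}$ is invoked; the rest is routine bookkeeping.
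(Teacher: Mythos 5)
Your proof is correct, and it reaches exactly the paper's bound, but the estimation strategy in the non-trivial case differs from the paper's. The paper handles the case $\nu(0)<\|\boldsymbol{u}(0)\|_{2}$ by combining the crude worst-case decay bound $\|\boldsymbol{u}(t)\|_{2}\geq\|\boldsymbol{u}(0)\|_{2}e^{-2nt}$ (Lemma~\ref{app:proof:gf_analysis:reparam:u(t)_bound_norm}, which uses only $\nu\geq-1$ and $\|\boldsymbol{u}\|_2\leq1$) with an \emph{upper} bound $t_m\leq t_b$, where $t_b$ is the time at which $\nu(\cdot)$ reaches $\|\boldsymbol{u}(0)\|_{2}$; the final constant then comes from $\tfrac{1+\|\boldsymbol{u}(0)\|_2}{1-\|\boldsymbol{u}(0)\|_2}\leq\tfrac32$. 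You instead integrate the exact identity $\tfrac{d}{dt}\ln\|\boldsymbol{u}(t)\|_2=n(\nu(t)-\|\boldsymbol{u}(t)\|_2)$ over $[0,t_m]$, evaluate $\int_0^{t_m}\nu$ in closed form via the antiderivative $-\tfrac12\ln(1-\nu^2)$, pin down $t_m$ exactly from $\nu(t_m)=u_{\min}$, and only use the inequality $\|\boldsymbol{u}(t)\|_2\leq\|\boldsymbol{u}(0)\|_2$ inside $\int_0^{t_m}\|\boldsymbol{u}\|_2$; the final constant then comes from $\tfrac{1-\nu(0)}{1+u_{\min}}\geq\tfrac23$. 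Both arguments rest on the same supporting lemmas (monotonicity of $\nu$, $\nu(t_m)=u_{\min}$, the closed form for $\nu(t)$, monotone decrease of $\|\boldsymbol{u}\|_2$ up to $t_m$), and I verified your algebra: the decomposition $1-x^2=\rho_x(1+x)^2$, the sign of the $\tfrac{n(1-u_0)}{2}\ln(\rho_0/\rho_m)$ term via $\rho_0\geq\rho_m$, and the numerical bound all check out. Your route is tighter in principle (it discards less information about the trajectory) at the cost of more bookkeeping, while the paper's is shorter because the coarse exponential bound suffices for the constant being claimed; since both land on the same final expression, the extra precision buys nothing here, but your computation of $t_m$ and of $\int_0^{t_m}\nu$ would be the right starting point if one wanted a sharper version of the lemma.
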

\begin{proof}
We split the proof into two possible cases: \emph{(i)} $\Vert\boldsymbol{u}(0)\Vert_{2}\leq\nu(0)$; and \emph{(ii)} $\Vert\boldsymbol{u}(0)\Vert_{2}>\nu(0)$.

In case \emph{(i)}, by definition $t_{m}=0$.
By taking Lemma~\ref{app:proof:gf_analysis:optimization:u_min_t_min_equal} together with Equation~(\ref{eq:gf_analysis_reparam_ivp}), we obtain $u_{\min}=\Vert\boldsymbol{u}(t_{m})\Vert_{2}=\Vert\boldsymbol{u}(0)\Vert_{2}=\Vert\boldsymbol{w}_{n:1}(0)\Vert_{2}$.

Moving on to case \emph{(ii)}, recall that by assumption $\Vert\boldsymbol{w}_{n:1}(0)\Vert_{2} \leq0.2$ and $\nu(0)\neq-1$. 
By  Equation~(\ref{eq:gf_analysis_reparam_ivp}), we have that $\Vert\boldsymbol{u}(0)\Vert_{2}=\Vert\boldsymbol{w}_{n:1}(0)\Vert_{2}$.
Define $t_{b}:=\frac{1}{2}\ln\hspace{-0.75mm}\big(\frac{1+\Vert\boldsymbol{u}(0)\Vert_{2}}{1-\Vert\boldsymbol{u}(0)\Vert_{2}}\hspace{-0.5mm}\cdot\hspace{-0.5mm}\frac{1-\nu(0)}{1+\nu(0)}\big)$, which we will show upper bounds~$t_m$.
Plugging $t=t_{b}$ into the explicit expression for $\nu(t)$ given in property \emph{(iii)} of Lemma~\ref{app:proof:gf_analysis:reparam:nu(t)_properties}, we have that $\nu(t_{b})=\Vert\boldsymbol{u}(0)\Vert_{2}$.
Taking this together with Lemma~\ref{app:proof:gf_analysis:optimization:u_min_t_min_equal}, we obtain $\nu(t_{b})=\Vert\boldsymbol{u}(0)\Vert_{2}\geq u_{\min}=\nu(t_{m})$.
Note that $\nu(0)<\Vert\boldsymbol{u}(0)\Vert_{2}<1$,
and property \emph{(iii)} of Lemma~\ref{app:proof:gf_analysis:reparam:nu(t)_properties}, together imply that $\nu(t)$ is (strictly) monotonically increasing.
Thus, it must be that $t_{b}\geq t_{m}$.
Combining this observation with Lemma~\ref{app:proof:gf_analysis:reparam:u(t)_bound_norm} yields:
\[
\begin{aligned}u_{\min} & =\Vert\boldsymbol{u}(t_{m})\Vert_{2}\\
	& \geq\Vert\boldsymbol{u}(0)\Vert_{2}\exp\big(-2nt_{m}\big)\\
	& \geq\Vert\boldsymbol{u}(0)\Vert_{2}\exp\big(-2nt_{b}\big)\\
	& =\Vert\boldsymbol{u}(0)\Vert_{2}{\textstyle \big(\frac{1+\Vert\boldsymbol{u}(0)\Vert_{2}}{1-\Vert\boldsymbol{u}(0)\Vert_{2}}\cdot\frac{1-\nu(0)}{1+\nu(0)}\big)^{-n}}\text{.}
\end{aligned}
\]
Recalling that $\Vert\boldsymbol{u}(0)\Vert_{2}=\Vert\boldsymbol{w}_{n:1}(0)\Vert_{2} \leq 0.2$ enables us to conclude the proof for this case.
\end{proof}

\subsubsection{Escape From Origin} \label{app:proof:gf_analysis:catapult}

Recall that $\boldsymbol{u} ( \cdot )$ is the (unique) solution to Equation~\eqref{eq:gf_analysis_reparam_ivp}, which by Lemma~\ref{app:proof:gf_analysis:reparam:u(t)_reparam_proof} is a reparameterization of~$\boldsymbol{w}_{n:1} ( \cdot )$.
Recall also the function~$\nu ( \cdot )$ from Definition~\ref{app:proof:gf_analysis:reparam:nu(t)_definition_correlation}, which quantifies the alignment between $\boldsymbol{u} ( \cdot )$ and~$\boldsymbol{\lambda}_{yx}$.
The current subsubappendix defines a certain point in time (Definition~\ref{app:proof:gf_analysis:optimization:ta_definition}), establishes that after this point $\boldsymbol{u} ( \cdot )$ and~$\boldsymbol{\lambda}_{yx}$ are highly aligned (Lemma~\ref{app:proof:gf_analysis:optimization:nu(ta)_bound}), and shows that this alignment is accompanied by an escape of~$\boldsymbol{u} ( \cdot )$ from the origin (Lemma~\ref{app:proof:gf_analysis:optimization:u(ta)_bound}).

\begin{definition}\label{app:proof:gf_analysis:optimization:ta_definition}
	Define $t_{a}
	:=
	\frac{1}{2}\ln\big(\hspace{-0.5mm}\max\big\{5 \cdot \frac{1-\nu(0)}{1+\nu(0)},1\big\}\big)$.
\end{definition}
\begin{lemma}\label{app:proof:gf_analysis:optimization:nu(ta)_bound}
	For all $t\geq t_a$: $\nu(t)\geq\frac{2}{3}$.
\end{lemma}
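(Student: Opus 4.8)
The plan is to reduce the claim to a single evaluation at time $t_a$, exploiting monotonicity of $\nu(\cdot)$. First I would invoke property \emph{(ii)} of Lemma~\ref{app:proof:gf_analysis:reparam:nu(t)_properties}, which gives $\tfrac{d}{dt}\nu(t) = 1 - \nu(t)^2$; this is non-negative because $\nu(t) \in (-1,1]$ by property \emph{(i)} of the same lemma. Hence $\nu(\cdot)$ is non-decreasing, and since $t_a \geq 0$ (as $\max\{5\cdot\tfrac{1-\nu(0)}{1+\nu(0)},1\}\geq 1$, so the logarithm in Definition~\ref{app:proof:gf_analysis:optimization:ta_definition} is non-negative), it suffices to establish $\nu(t_a) \geq \tfrac{2}{3}$; the bound for all $t \geq t_a$ then follows at once.

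Next I would plug $t = t_a$ into the closed-form expression for $\nu(t)$ furnished by property \emph{(iii)} of Lemma~\ref{app:proof:gf_analysis:reparam:nu(t)_properties}. Writing $\rho := \tfrac{1-\nu(0)}{1+\nu(0)}$ (which is finite and non-negative since $\nu(0) \in (-1,1]$ by the standing assumption of Proposition~\ref{prop:gf_analysis} that $W_{n:1,s}$ is not antiparallel to~$\Lambda_{yx}$), and noting $e^{2 t_a} = \max\{5\rho, 1\}$ by Definition~\ref{app:proof:gf_analysis:optimization:ta_definition}, one obtains
\[
\nu(t_a) = 1 - \frac{2\rho}{\rho + \max\{5\rho, 1\}}\text{\,.}
\]
Since $\rho + \max\{5\rho,1\} \geq \rho + 5\rho = 6\rho$, the subtracted fraction is at most $\tfrac{2\rho}{6\rho} = \tfrac{1}{3}$ whenever $\rho > 0$, and it is trivially $0 \leq \tfrac13$ when $\rho = 0$ (the degenerate case $\nu(0) = 1$, where in fact $t_a = 0$ and $\nu \equiv 1$). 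In all cases $\nu(t_a) \geq 1 - \tfrac{1}{3} = \tfrac{2}{3}$, which is exactly what we need.

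I do not anticipate a genuine obstacle: the argument is a direct computation resting entirely on the explicit formula and the monotonicity already packaged in Lemma~\ref{app:proof:gf_analysis:reparam:nu(t)_properties}. The only points demanding a moment's attention are confirming $t_a \geq 0$ so that the monotonicity reduction is legitimate, and checking that the $\rho = 0$ (i.e. $\nu(0)=1$) boundary case is covered by the same inequality.
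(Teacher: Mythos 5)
Your proof is correct and follows essentially the same route as the paper: monotonicity of $\nu(\cdot)$ from Lemma~\ref{app:proof:gf_analysis:reparam:nu(t)_properties} plus evaluation of the closed-form expression at $t_a$. The only cosmetic difference is that the paper splits into the cases $\nu(0)\geq\tfrac{2}{3}$ (where $t_a=0$) and $\nu(0)<\tfrac{2}{3}$ (where $\nu(t_a)=\tfrac{2}{3}$ exactly), whereas you absorb both into the single inequality $\rho+\max\{5\rho,1\}\geq 6\rho$.
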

\begin{proof}
We split the proof into two possible cases: \emph{(i)} $\nu(0)\geq\frac{2}{3}$; and \emph{(ii)} $\nu(0)<\frac{2}{3}$.

For case \emph{(i)}, it holds that $t_{a}=0$.
We conclude the proof for this case by relying on Lemma~\ref{app:proof:gf_analysis:reparam:nu(t)_properties}, which implies that $\nu(t)$ is monotonically non-decreasing.

Moving on to case \emph{(ii)}, it holds that $t_{a}=\frac{1}{2}\ln\big(5\cdot\frac{1-\nu(0)}{1+\nu(0)}\big)$.
Plugging $t=t_{a}$ into the explicit expression for $\nu(t)$ given in Lemma~\ref{app:proof:gf_analysis:reparam:nu(t)_properties}, we obtain $\nu(t_{a})=\frac{2}{3}$.
We conclude the proof for this case by once again relying on the fact that $\nu(t)$ is monotonically non-decreasing.
\end{proof}
\begin{lemma}\label{app:proof:gf_analysis:optimization:u(ta)_bound}
	For all $t\geq0$:
	\[
	\big\Vert\boldsymbol{u}(t_{a}+t)\big\Vert_{2}\geq\frac{2}{3}\cdot\frac{\exp(\tfrac{2}{3}nt)}{\exp(\tfrac{2}{3}nt)+\tfrac{2}{3}u_{\min}^{-1}-1} \text{\,,}
	\]
	where (as defined in Subsubappendix~\ref{app:proof:gf_analysis:min_dist}) $u_{\min}:=\inf_{t\geq0}\Vert\boldsymbol{u}(t)\Vert_{2}$.
\end{lemma}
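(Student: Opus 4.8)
The plan is to derive a one-sided differential inequality for $\|\boldsymbol{u}(\cdot)\|_2$ on $[t_a,\infty)$ and then compare it against the explicit solution of a logistic ordinary differential equation by invoking Lemma~\ref{lemma:gf_analysis_hairer}.

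First I would record the two ingredients. By Lemma~\ref{app:proof:gf_analysis:reparam:u'(t)_reparam_derivative}, $\tfrac{d}{dt}\|\boldsymbol{u}(t)\|_2 = n\|\boldsymbol{u}(t)\|_2\big(\nu(t)-\|\boldsymbol{u}(t)\|_2\big)$ for all $t\geq0$, and by Lemma~\ref{app:proof:gf_analysis:optimization:nu(ta)_bound}, $\nu(t)\geq\tfrac{2}{3}$ whenever $t\geq t_a$. Since $\|\boldsymbol{u}(t)\|_2>0$ (solutions of Equation~\eqref{eq:gf_analysis_reparam_ivp} never vanish), these combine to give, for every $t\geq0$,
\[
\tfrac{d}{dt}\|\boldsymbol{u}(t_a+t)\|_2 \;\geq\; n\|\boldsymbol{u}(t_a+t)\|_2\big(\tfrac{2}{3}-\|\boldsymbol{u}(t_a+t)\|_2\big)\text{\,.}
\]
I would also note that $u_{\min}>0$ (by Lemma~\ref{app:proof:gf_analysis:optimization:u_min_bound}, since $\|\boldsymbol{w}_{n:1}(0)\|_2>0$ and $\nu(0)\neq-1$) and that $u_{\min}\leq\|\boldsymbol{u}(0)\|_2=\|\boldsymbol{w}_{n:1}(0)\|_2\leq0.2<\tfrac{2}{3}$; in particular $\tfrac{2}{3}u_{\min}^{-1}-1>0$, so the right-hand side of the claimed bound is a well-defined positive quantity for all $t\geq0$.

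Next I would set up the comparison. Define $g:[0,\infty)\times\R\to\R$ by $g(t,z):=nz(\tfrac{2}{3}-z)$, which is polynomial in $z$ and hence locally Lipschitz. Let $q(t):=\tfrac{2}{3}\cdot\exp(\tfrac{2}{3}nt)\big/\big(\exp(\tfrac{2}{3}nt)+\tfrac{2}{3}u_{\min}^{-1}-1\big)$; a routine check (this is the logistic solution with carrying capacity $\tfrac{2}{3}$ and rate $n$) shows $q(\cdot)$ is differentiable, satisfies $q(0)=u_{\min}$, and solves $\tfrac{d}{dt}q(t)=g(t,q(t))$ for all $t\geq0$. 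Let $\bar q(t):=\|\boldsymbol{u}(t_a+t)\|_2$, differentiable because $\boldsymbol{u}(\cdot)$ never vanishes; the displayed inequality above is exactly $\tfrac{d}{dt}\bar q(t)\geq g(t,\bar q(t))$, and $q(0)=u_{\min}\leq\|\boldsymbol{u}(t_a)\|_2=\bar q(0)$ holds by the definition of $u_{\min}$ as an infimum.

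Finally, I would apply Lemma~\ref{lemma:gf_analysis_hairer} with $t=\infty$, taking its ``$q$'' to be our $q$, its ``$\bar q$'' to be our $\bar q$, and its ``$g$'' to be our $g$: hypothesis (i) is $q(0)\leq\bar q(0)$, hypothesis (ii) holds with equality since $q$ solves the ODE, and hypothesis (iii) is the differential inequality for $\bar q$. The conclusion $q(t)\leq\bar q(t)$ for all $t\geq0$ is precisely the asserted bound. The only mildly delicate points are getting the orientation of the comparison right — the logistic solution is the \emph{lower} object and $\|\boldsymbol{u}(t_a+\cdot)\|_2$ the \emph{upper} one, which is exactly the direction Lemma~\ref{lemma:gf_analysis_hairer} supplies — and confirming well-definedness of $q(\cdot)$ from $u_{\min}\in(0,\tfrac{2}{3})$; there is no substantive obstacle beyond these.
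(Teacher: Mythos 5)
Your proof is correct and matches the paper's argument essentially verbatim: both establish the differential inequality $\tfrac{d}{dt}\|\boldsymbol{u}(t)\|_2\geq n\|\boldsymbol{u}(t)\|_2(\tfrac{2}{3}-\|\boldsymbol{u}(t)\|_2)$ for $t\geq t_a$ via Lemmas~\ref{app:proof:gf_analysis:reparam:u'(t)_reparam_derivative} and~\ref{app:proof:gf_analysis:optimization:nu(ta)_bound}, and compare against the explicit logistic solution with initial value $u_{\min}$ using Lemma~\ref{lemma:gf_analysis_hairer}. Your additional checks (positivity of $u_{\min}$, well-definedness of the logistic curve, orientation of the comparison) are sound and only make explicit what the paper leaves implicit.
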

\begin{proof}
Define $g:[0,\infty)\rightarrow\R$ by $g(z):=nz(\frac{2}{3}-z)$.
Notice that $g(\cdot)$ is locally Lipschitz continuous.
Define $\bar{u}:[0,\infty)\rightarrow\R$ by $\bar{u}(t):=\frac{2}{3}\exp\big(\frac{2}{3}nt\big)\big/\big(\exp\big(\frac{2}{3}nt\big)+\frac{2}{3}u_{\min}^{-1}-1\big)$.
It holds that $\Vert\boldsymbol{u}(t_{a})\Vert_{2}\geq u_{\min}=\bar{u}(0)$.
By Lemmas~\ref{app:proof:gf_analysis:reparam:u'(t)_reparam_derivative} and \ref{app:proof:gf_analysis:optimization:nu(ta)_bound},  $\tfrac{d}{dt}\Vert\boldsymbol{u}(t)\Vert_{2}\geq g\big(\Vert\boldsymbol{u}(t)\Vert_{2}\big)$ for all $t\geq t_a$.
Furthermore, notice that  $\frac{d}{dt}\bar{u}\big(t\big)=g\big(\bar{u}(t)\big)$.
We may now use Lemma~\ref{lemma:gf_analysis_hairer} to obtain $\Vert\boldsymbol{u}(t_{a}+t)\Vert_{2}\geq\bar{u}(t)$ for all $t\geq0$.
\end{proof}

\subsubsection{Convergence} \label{app:proof:gf_analysis:convergence}

Recall that $\boldsymbol{u} ( \cdot )$ is the (unique) solution to Equation~\eqref{eq:gf_analysis_reparam_ivp}, and (by Lemma~\ref{app:proof:gf_analysis:reparam:u(t)_reparam_proof}) a reparameterization of~$\boldsymbol{w}_{n:1} ( \cdot )$.
Recall also the (alignment guaranteeing) time~$t_a$ from Definition~\ref{app:proof:gf_analysis:optimization:ta_definition}, the notation $u_{\min}:=\inf_{t\geq0}\Vert\boldsymbol{u}(t)\Vert_{2}$, and the fact that (by Lemma~\ref{app:proof:gf_analysis:optimization:u_min_bound}) $u_{min} > 0$.
In this subsubappendix we define a certain time duration (Definition~\ref{app:proof:gf_analysis:optimization:tc_definition}), and show that after it elapses from~$t_a$:
\emph{(i)}~the norm of~$\boldsymbol{u} ( \cdot )$ is on the order of one, which is the norm of the target solution~$\boldsymbol{\lambda}_{yx}$ (Lemma~\ref{app:proof:gf_analysis:optimization:u(tc)_bound});
and
\emph{(ii)}~$\boldsymbol{u} ( \cdot )$~converges to~$\boldsymbol{\lambda}_{yx}$ exponentially fast (Lemma~\ref{app:proof:gf_analysis:optimization:distance_analysis}).

\begin{definition}\label{app:proof:gf_analysis:optimization:tc_definition}
	Define $t_{c}
	:=
	\frac{3}{2n}\ln\big(\frac{2n}{3u_{\min}}\big)$.\note{%
	Note that, since $n \geq 2$ and $u_{\min} \leq \Vert\boldsymbol{u}(0)\Vert_{2} = \Vert\boldsymbol{w}_{n:1}(0)\Vert_{2} \leq 0.2$, the time duration $t_c$ is necessarily positive.
	}%
\end{definition}
\begin{lemma}\label{app:proof:gf_analysis:optimization:u(tc)_bound}
	For all $t \geq 0$: $\Vert\boldsymbol{u}(t_{a}+t_{c} + t)\Vert_{2}\geq \frac{2 n}{3 ( n + 1 )}$.
\end{lemma}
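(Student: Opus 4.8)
The plan is to invoke Lemma~\ref{app:proof:gf_analysis:optimization:u(ta)_bound} at the shifted time $t_a + (t_c + t)$ and then simplify the resulting bound using the explicit value of~$t_c$. First I would note that $t_c + t \geq 0$ (recall $t_c > 0$, as observed in the footnote of Definition~\ref{app:proof:gf_analysis:optimization:tc_definition}, and $t \geq 0$), so Lemma~\ref{app:proof:gf_analysis:optimization:u(ta)_bound} applies and gives
\[
\big\Vert\boldsymbol{u}(t_{a}+t_{c}+t)\big\Vert_{2}\geq\frac{2}{3}\cdot\frac{\exp\big(\tfrac{2}{3}n(t_c+t)\big)}{\exp\big(\tfrac{2}{3}n(t_c+t)\big)+\tfrac{2}{3}u_{\min}^{-1}-1}\text{\,.}
\]
Next, using $t_{c}=\frac{3}{2n}\ln\big(\frac{2n}{3u_{\min}}\big)$ I would compute $\exp\big(\tfrac{2}{3}n t_c\big)=\tfrac{2n}{3u_{\min}}$, and set $A:=\tfrac{2n}{3u_{\min}}$ and $B:=\exp\big(\tfrac{2}{3}n t\big)\geq 1$, so that $\exp\big(\tfrac{2}{3}n(t_c+t)\big)=AB$. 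The key identity is $\tfrac{2}{3}u_{\min}^{-1}=A/n$, which turns the bound into $\tfrac{2}{3}\cdot\frac{AB}{AB+A/n-1}$.

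The remaining work is purely algebraic. I would first record that $A/n-1>0$: since (by an assumption in Proposition~\ref{prop:gf_analysis}, together with $\Vert\boldsymbol{u}(0)\Vert_2=\Vert\boldsymbol{w}_{n:1}(0)\Vert_2$ from Equation~\eqref{eq:gf_analysis_reparam_ivp}) $u_{\min}\leq\Vert\boldsymbol{u}(0)\Vert_2\leq 0.2$, we get $A/n=\tfrac{2}{3u_{\min}}\geq\tfrac{10}{3}>1$; in particular the denominator $AB+A/n-1$ is positive. Then, using that $x\mapsto \frac{x}{x+c}$ is increasing for $c>0$ and that $AB\geq A$ (as $B\geq 1$), I would bound $\frac{AB}{AB+A/n-1}\geq\frac{A}{A+A/n-1}$. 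Finally, since $A/n-1\leq A/n$, the denominator satisfies $A+A/n-1\leq A+A/n=A\cdot\frac{n+1}{n}$, hence $\frac{A}{A+A/n-1}\geq\frac{n}{n+1}$. Combining, $\big\Vert\boldsymbol{u}(t_{a}+t_{c}+t)\big\Vert_{2}\geq\tfrac{2}{3}\cdot\tfrac{n}{n+1}=\tfrac{2n}{3(n+1)}$, as desired.

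I do not anticipate a real obstacle here: the statement follows directly from the previously established lower bound on $\Vert\boldsymbol{u}(t_a+\cdot)\Vert_2$ by substituting the defining value of~$t_c$ and performing elementary monotonicity estimates. The only point requiring a little care is checking the sign of $A/n-1$ (so that the monotonicity argument for $x\mapsto x/(x+c)$ is valid), which is why I would make the bound $u_{\min}\leq 0.2$ explicit at that step.
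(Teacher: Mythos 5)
Your proposal is correct and follows essentially the same route as the paper: invoke Lemma~\ref{app:proof:gf_analysis:optimization:u(ta)_bound} at time $t_c+t$, substitute $\exp\big(\tfrac{2}{3}nt_c\big)=\tfrac{2n}{3u_{\min}}$, and finish with elementary monotonicity estimates. The only cosmetic difference is the order of the two estimates — the paper first drops the $-1$ from the denominator (so the additive constant is trivially positive) and then shrinks the exponential, whereas you apply monotonicity first and therefore need the extra (correct) check that $\tfrac{2}{3}u_{\min}^{-1}-1>0$.
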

\begin{proof}
Let $t\geq0$.
From Lemma~\ref{app:proof:gf_analysis:optimization:u(ta)_bound}:
\[
\Vert\boldsymbol{u}(t_{a}+t_{c}+t)\Vert_{2}\hspace{-0.25mm}\geq\tfrac{2}{3}\cdot\tfrac{\exp\hspace{-0.5mm}\big(\frac{2}{3}n(t_{c}+t)\big)}{\exp\hspace{-0.5mm}\big(\frac{2}{3}n(t_{c}+t)\big)+\frac{2}{3}u_{\min}^{-1}-1}\geq\tfrac{2}{3}\cdot\tfrac{\exp\hspace{-0.5mm}\big(\frac{2}{3}nt_{c}\big)}{\exp\hspace{-0.5mm}\big(\frac{2}{3}nt_{c}\big)+\frac{2}{3}u_{\min}^{-1}}=\tfrac{2}{3}\cdot\tfrac{n\big(\frac{2}{3}u_{\min}^{-1}\big)}{n\big(\frac{2}{3}u_{\min}^{-1}\big)+\big(\frac{2}{3}u_{\min}^{-1}\big)}=\tfrac{2n}{3(n+1)}\text{.}
\]
\end{proof}
\begin{lemma}\label{app:proof:gf_analysis:optimization:distance_analysis}
	For all $t\geq0$: $\Vert\boldsymbol{\lambda}_{yx}-\boldsymbol{u}(t_{a}+t_{c}+t)\Vert_{2} \leq \tfrac{6}{5} \exp\hspace{-0.5mm}\big(\hspace{-1mm}-\tfrac{2 n}{3 ( n + 1 )}t\hspace{0.5mm}\big)$.
\end{lemma}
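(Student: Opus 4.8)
The plan is to control the squared quantity $E(t) := \Vert \boldsymbol{\lambda}_{yx} - \boldsymbol{u}(t) \Vert_2^2$ and show it contracts exponentially for $t \geq t_a + t_c$. First I would differentiate: with $\boldsymbol{e}(t) := \boldsymbol{\lambda}_{yx} - \boldsymbol{u}(t)$ we have $\tfrac{d}{dt} E(t) = -2\,\boldsymbol{e}(t)^\top \tfrac{d}{dt}\boldsymbol{u}(t)$; plugging in $\tfrac{d}{dt}\boldsymbol{u}(t)$ from Equation~\eqref{eq:gf_analysis_reparam_ivp}, using $\Vert \boldsymbol{\lambda}_{yx} \Vert_2 = 1$ and $\boldsymbol{\lambda}_{yx}^\top \boldsymbol{u}(t) = \nu(t)\Vert\boldsymbol{u}(t)\Vert_2$ (Definition~\ref{app:proof:gf_analysis:reparam:nu(t)_definition_correlation}), and collecting terms, I expect to reach the identity
\[
\tfrac{d}{dt} E(t) \,=\, 2\Vert\boldsymbol{u}(t)\Vert_2 \Big( (n-1)\big(1 - \nu(t)^2\big) - n\, E(t) \Big)
\text{\,.}
\]
This step — a short but somewhat tedious expansion of inner products — is the only real computation in the proof, hence the main (mild) obstacle.

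The crux is then the algebraic identity, valid because $\Vert\boldsymbol{\lambda}_{yx}\Vert_2 = 1$,
\[
E(t) \,=\, 1 - 2\nu(t)\Vert\boldsymbol{u}(t)\Vert_2 + \Vert\boldsymbol{u}(t)\Vert_2^2 \,=\, \big(1 - \nu(t)^2\big) + \big(\nu(t) - \Vert\boldsymbol{u}(t)\Vert_2\big)^2 \,\geq\, 1 - \nu(t)^2
\text{\,.}
\]
Substituting $1 - \nu(t)^2 \leq E(t)$ into the derivative formula (and using $n - 1 \geq 0$) gives $\tfrac{d}{dt}E(t) \leq 2\Vert\boldsymbol{u}(t)\Vert_2\big((n-1)E(t) - nE(t)\big) = -2\Vert\boldsymbol{u}(t)\Vert_2\,E(t)$. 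For every $t \geq t_a + t_c$, Lemma~\ref{app:proof:gf_analysis:optimization:u(tc)_bound} gives $\Vert\boldsymbol{u}(t)\Vert_2 \geq \tfrac{2n}{3(n+1)}$, whence $\tfrac{d}{dt}E(t) \leq -\tfrac{4n}{3(n+1)}E(t)$ on $[t_a+t_c,\infty)$. A comparison argument (Lemma~\ref{lemma:gf_analysis_hairer}, matching $s \mapsto E(t_a+t_c+s)$ against $s \mapsto E(t_a+t_c)e^{-\frac{4n}{3(n+1)}s}$ with $g(s,z) = -\tfrac{4n}{3(n+1)}z$) then yields $E(t_a+t_c+t) \leq E(t_a+t_c)\,e^{-\frac{4n}{3(n+1)}t}$, and taking square roots gives $\Vert\boldsymbol{\lambda}_{yx} - \boldsymbol{u}(t_a+t_c+t)\Vert_2 \leq \Vert\boldsymbol{\lambda}_{yx} - \boldsymbol{u}(t_a+t_c)\Vert_2\,e^{-\frac{2n}{3(n+1)}t}$.

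Finally I would bound the initial constant $\Vert\boldsymbol{\lambda}_{yx} - \boldsymbol{u}(t_a+t_c)\Vert_2$. Since $t_a+t_c \geq t_a$, Lemma~\ref{app:proof:gf_analysis:optimization:nu(ta)_bound} gives $\nu(t_a+t_c) \geq \tfrac{2}{3}$, and Lemma~\ref{app:proof:gf_analysis:optimization:u(t)_dynamics_decreasing_increasing} part~\emph{(iv)} gives $\Vert\boldsymbol{u}(t_a+t_c)\Vert_2 < 1$; writing $\rho := \Vert\boldsymbol{u}(t_a+t_c)\Vert_2 \in (0,1)$,
\[
\Vert\boldsymbol{\lambda}_{yx} - \boldsymbol{u}(t_a+t_c)\Vert_2^2 \,=\, 1 - 2\nu(t_a+t_c)\rho + \rho^2 \,\leq\, 1 - \tfrac{4}{3}\rho + \rho^2 \,<\, 1 - \tfrac{4}{3}\rho + \rho \,=\, 1 - \tfrac{1}{3}\rho \,<\, 1
\text{\,,}
\]
so $\Vert\boldsymbol{\lambda}_{yx} - \boldsymbol{u}(t_a+t_c)\Vert_2 < 1 \leq \tfrac{6}{5}$, which combined with the previous display gives $\Vert\boldsymbol{\lambda}_{yx} - \boldsymbol{u}(t_a+t_c+t)\Vert_2 \leq \tfrac{6}{5} e^{-\frac{2n}{3(n+1)}t}$ as claimed. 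The one point to be careful about is to run the comparison argument on the everywhere-differentiable function $E$ (the solution $\boldsymbol{u}(\cdot)$ is $C^1$) and only pass to $\Vert\boldsymbol{\lambda}_{yx} - \boldsymbol{u}(\cdot)\Vert_2$ via a pointwise square root at the end.
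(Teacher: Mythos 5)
Your proposal is correct and follows essentially the same route as the paper: both differentiate the distance to $\boldsymbol{\lambda}_{yx}$ along the dynamics of Equation~\eqref{eq:gf_analysis_reparam_ivp}, arrive at an equivalent differential inequality with decay rate $\Vert\boldsymbol{u}(t)\Vert_2$, invoke Lemma~\ref{app:proof:gf_analysis:optimization:u(tc)_bound} to lower bound that rate by $\tfrac{2n}{3(n+1)}$, and integrate (your identity for $\tfrac{d}{dt}E(t)$ checks out and is the squared-norm version of the paper's formula for $\tfrac{d}{dt}\Vert\boldsymbol{\lambda}_{yx}-\boldsymbol{u}(t)\Vert_2$). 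The only cosmetic differences are that you work with the squared distance, which sidesteps the need to verify $\Vert\boldsymbol{\lambda}_{yx}-\boldsymbol{u}(t)\Vert_2\neq 0$, and that you bound the initial constant via $\nu(t_a+t_c)\geq\tfrac{2}{3}$ (giving $<1$) whereas the paper bounds the distance at $t=0$ by $\tfrac{6}{5}$ via the triangle inequality and monotonicity.
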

\begin{proof}
Property \emph{(iv)} from Lemma~\ref{app:proof:gf_analysis:optimization:u(t)_dynamics_decreasing_increasing} together with $\Vert \boldsymbol{\lambda}_{yx}\Vert_{2}=1$ imply that $\Vert\boldsymbol{\lambda}_{yx}-\boldsymbol{u}(t)\Vert_{2}\neq0$ for all $t\geq0$. 
Relying on  Equation~(\ref{eq:gf_analysis_reparam_ivp}), while recalling the function $\nu(\cdot)$ from Definition~\ref{app:proof:gf_analysis:reparam:nu(t)_definition_correlation}, we obtain:
\[
\begin{aligned}\tfrac{d}{dt}\Vert\boldsymbol{\lambda}_{yx}-\boldsymbol{u}(t)\Vert_{2} & =\tfrac{{\textstyle (}\boldsymbol{\lambda}_{yx}-\boldsymbol{u}(t){\textstyle )}^{\top}}{\Vert\boldsymbol{\lambda}_{yx}-\boldsymbol{u}(t)\Vert_{2}}\tfrac{d}{dt}\big(\boldsymbol{\lambda}_{yx}-\boldsymbol{u}(t)\big)\\
 & =\tfrac{{\textstyle (}\boldsymbol{\lambda}_{yx}-\boldsymbol{u}(t){\textstyle )}^{\top}}{\Vert\boldsymbol{\lambda}_{yx}-\boldsymbol{u}(t)\Vert_{2}}\Big(\Vert\boldsymbol{u}(t)\Vert_{\hspace{-0.1mm}2}\big(n\boldsymbol{u}(t)-\boldsymbol{\lambda}_{yx}\big)-\left(n-1\right)\Vert\boldsymbol{u}(t)\Vert_{2}^{-1}\boldsymbol{u}(t)\boldsymbol{u}(t)^{\top}\boldsymbol{\lambda}_{yx}\Big)\\
 & =\tfrac{{\textstyle (}\boldsymbol{\lambda}_{yx}-\boldsymbol{u}(t){\textstyle )}^{\top}}{\Vert\boldsymbol{\lambda}_{yx}-\boldsymbol{u}(t)\Vert_{2}}\Big(\Vert\boldsymbol{u}(t)\Vert_{\hspace{-0.1mm}2}\big(\boldsymbol{u}(t)-\boldsymbol{\lambda}_{yx}\big)+\left(n-1\right)\Vert\boldsymbol{u}(t)\Vert_{\hspace{-0.1mm}2}\boldsymbol{u}(t)-\left(n-1\right)\nu(t)\boldsymbol{u}(t)\Big)\\
 & =\tfrac{{\textstyle (}\boldsymbol{\lambda}_{yx}-\boldsymbol{u}(t){\textstyle )}^{\top}}{\Vert\boldsymbol{\lambda}_{yx}-\boldsymbol{u}(t)\Vert_{2}}\Big(\Vert\boldsymbol{u}(t)\Vert_{\hspace{-0.1mm}2}\big(\boldsymbol{u}(t)-\boldsymbol{\lambda}_{yx}\big)+\left(n-1\right)\big(\Vert\boldsymbol{u}(t)\Vert_{\hspace{-0.1mm}2}-\nu(t)\big)\boldsymbol{u}(t)\Big)\\
 & =-\Vert\boldsymbol{u}(t)\Vert_{\hspace{-0.1mm}2}\Vert\boldsymbol{\lambda}_{yx}-\boldsymbol{u}(t)\Vert_{2}+\left(n-1\right)\big(\Vert\boldsymbol{u}(t)\Vert_{\hspace{-0.1mm}2}-\nu(t)\big)\tfrac{\nu(t)\Vert\boldsymbol{u}(t)\Vert_{2}-\Vert\boldsymbol{u}(t)\Vert_{2}^{2}}{\Vert\boldsymbol{\lambda}_{yx}-\boldsymbol{u}(t)\Vert_{2}}\\[0.5mm]
 & =-\Vert\boldsymbol{u}(t)\Vert_{\hspace{-0.1mm}2}\Vert\boldsymbol{\lambda}_{yx}-\boldsymbol{u}(t)\Vert_{2}-\tfrac{\left(n-1\right)\Vert\boldsymbol{u}(t)\Vert_{2}}{\Vert\boldsymbol{\lambda}_{yx}-\boldsymbol{u}(t)\Vert_{2}}\big(\Vert\boldsymbol{u}(t)\Vert_{\hspace{-0.1mm}2}-\nu(t)\big)^{2}\text{,}
\end{aligned}
\]
for all $t \geq 0$.
By Lemma~\ref{app:proof:gf_analysis:optimization:u(tc)_bound}, we may bound $\tfrac{d}{dt}\Vert\boldsymbol{\lambda}_{yx}-\boldsymbol{u}(t)\Vert_{2}\leq-\frac{2 n}{3 ( n + 1 )}\Vert\boldsymbol{\lambda}_{yx}-\boldsymbol{u}(t)\Vert_{2}$ for all $t\geq t_a + t_c$.
Let $t'\geq0$.
We integrate $\frac{d}{dt}\Vert\boldsymbol{\lambda}_{yx}-\boldsymbol{u}(t)\Vert_{2}/\Vert\boldsymbol{\lambda}_{yx}-\boldsymbol{u}(t)\Vert_{2}$ from $t= t_a + t_c$ to $t= t_a + t_c+t'$ in order to obtain $\Vert\boldsymbol{\lambda}_{yx}-\boldsymbol{u}(t_{a}+t_{c}+t')\Vert_{2} \leq \Vert\boldsymbol{\lambda}_{yx}-\boldsymbol{u}( t_a + t_c  )\Vert_{2}\exp\hspace{-0.5mm}\big(\hspace{-1mm}-\tfrac{2n}{3(n+1)}t'\hspace{0.5mm}\big)$.
Recall that by assumption $\Vert\boldsymbol{w}_{n:1}(0)\Vert\leq0.2$.
By Equation~(\ref{eq:gf_analysis_reparam_ivp}) we have that $\Vert\boldsymbol{u}(0)\Vert_{2}=\Vert\boldsymbol{w}_{n:1}(0)\Vert_{2}$.
We conclude the proof by noting that  $\Vert\boldsymbol{\lambda}_{yx}-\boldsymbol{u}(0)\Vert_{2}\leq\Vert\boldsymbol{\lambda}_{yx}\Vert_{2}+\Vert\boldsymbol{u}(0)\Vert_{2}\leq\frac{6}{5}$.
\end{proof}

\subsubsection{Time to Convergence} \label{app:proof:gf_analysis:time_convergence}

Recall that $\boldsymbol{u} ( \cdot )$ is the (unique) solution to Equation~\eqref{eq:gf_analysis_reparam_ivp}, and that Lemma~\ref{app:proof:gf_analysis:reparam:u(t)_reparam_proof} presents a monotonically increasing function~$\xi ( \cdot )$ satisfying $\boldsymbol{w}_{n:1}\big( \xi ( t ) \big)=\boldsymbol{u}(t)$ for all~$t \geq 0$.
Recall also the function $\nu ( \cdot )$ from Definition~\ref{app:proof:gf_analysis:reparam:nu(t)_definition_correlation}, quantifying the alignment between $\boldsymbol{u} ( \cdot )$ and~$\boldsymbol{\lambda}_{yx}$.
Finally, recall the times $t_a$ and~$t_c$ from Definitions \ref{app:proof:gf_analysis:optimization:ta_definition} and~\ref{app:proof:gf_analysis:optimization:tc_definition}, which guarantee alignment and initiation of exponential convergence, respectively.
The current subsubappendix makes use of the above to establish that at the time~$\bar{t}$ defined in Equation~\eqref{eq:gf_analysis_time}, $\boldsymbol{w}_{n:1} ( \cdot )$~is $\bar{\epsilon}$-optimal, \ie~$\frac{1}{2} \| \boldsymbol{w}_{n:1} ( \bar{t} \, ) - \boldsymbol{\lambda}_{yx} \|_2^2 \leq \bar{\epsilon}$.

We begin by defining a certain time duration (Definition~\ref{app:proof:gf_analysis:optimization:t_epsilon_bar_definition}), and showing that it elapsing from $t_a + t_c$ ensures that $\boldsymbol{u} ( \cdot )$ is $\bar{\epsilon}$-optimal (Lemma~\ref{app:proof:gf_analysis:convergence:convergence_lemma_reparam}).
\begin{definition}\label{app:proof:gf_analysis:optimization:t_epsilon_bar_definition}
	Define $t_{\bar{\epsilon}} := \frac{3 ( n + 1 )}{2 n} \ln \big(\frac{6}{5 \sqrt{{2}\bar{\epsilon}}}\hspace{0.2mm}\big)$.\note{%
	Note that $t_{\bar{\epsilon}} > 0$, since we assume $\bar{\epsilon} \leq 1/2$ without loss of generality (\cf~Subsubappendix~\ref{app:proof:gf_analysis:prelim}).	
	\label{note:t_epsilon_bar_is_positive}
	}
\end{definition}
\begin{lemma}
	\label{app:proof:gf_analysis:convergence:convergence_lemma_reparam}
	It holds that $\frac{1}{2} \| \boldsymbol{u} ( t_a + t_c + t_{\bar{\epsilon}} ) - \boldsymbol{\lambda}_{yx} \|_2^2 \leq \bar{\epsilon}$.
\end{lemma}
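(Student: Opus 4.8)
The plan is to directly invoke Lemma~\ref{app:proof:gf_analysis:optimization:distance_analysis} at the specific time~$t_{\bar{\epsilon}}$ defined in Definition~\ref{app:proof:gf_analysis:optimization:t_epsilon_bar_definition}, and then translate the resulting bound on $\|\boldsymbol{\lambda}_{yx} - \boldsymbol{u}(t_a + t_c + t_{\bar{\epsilon}})\|_2$ into a bound on the loss-like quantity $\tfrac{1}{2}\|\boldsymbol{u}(t_a+t_c+t_{\bar{\epsilon}}) - \boldsymbol{\lambda}_{yx}\|_2^2$. Concretely, Lemma~\ref{app:proof:gf_analysis:optimization:distance_analysis} gives, for all $t \geq 0$,
\[
\big\|\boldsymbol{\lambda}_{yx} - \boldsymbol{u}(t_a + t_c + t)\big\|_2 \leq \tfrac{6}{5}\exp\!\big({-}\tfrac{2n}{3(n+1)}t\big)\text{\,.}
\]
Plugging in $t = t_{\bar{\epsilon}} = \tfrac{3(n+1)}{2n}\ln\!\big(\tfrac{6}{5\sqrt{2\bar{\epsilon}}}\big)$, the exponent becomes $-\tfrac{2n}{3(n+1)}\cdot\tfrac{3(n+1)}{2n}\ln\!\big(\tfrac{6}{5\sqrt{2\bar{\epsilon}}}\big) = -\ln\!\big(\tfrac{6}{5\sqrt{2\bar{\epsilon}}}\big)$, so the right-hand side equals $\tfrac{6}{5}\cdot\tfrac{5\sqrt{2\bar{\epsilon}}}{6} = \sqrt{2\bar{\epsilon}}$. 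Hence $\|\boldsymbol{u}(t_a+t_c+t_{\bar{\epsilon}}) - \boldsymbol{\lambda}_{yx}\|_2 \leq \sqrt{2\bar{\epsilon}}$, and squaring and dividing by two yields $\tfrac{1}{2}\|\boldsymbol{u}(t_a+t_c+t_{\bar{\epsilon}}) - \boldsymbol{\lambda}_{yx}\|_2^2 \leq \bar{\epsilon}$, which is exactly the claim.

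There is essentially no obstacle here — the lemma is a one-line arithmetic substitution into an already-established exponential decay estimate. The only points requiring a word of care are: (i) confirming $t_{\bar{\epsilon}} \geq 0$ so that Lemma~\ref{app:proof:gf_analysis:optimization:distance_analysis} is applicable at $t = t_{\bar{\epsilon}}$, which follows from the standing assumption $\bar{\epsilon} \leq \tfrac12$ (made without loss of generality in Subsubappendix~\ref{app:proof:gf_analysis:prelim}, since $\bar{\epsilon}\le\frac12$ forces $\tfrac{6}{5\sqrt{2\bar\epsilon}}\ge\tfrac{6}{5}>1$, cf.~footnote~\ref{note:t_epsilon_bar_is_positive}); and (ii) checking the algebra of the exponent cancellation, which is immediate. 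I would write the proof as: state the inequality from Lemma~\ref{app:proof:gf_analysis:optimization:distance_analysis}, substitute $t = t_{\bar{\epsilon}}$, simplify the exponential, and conclude by squaring. This is the kind of step that is purely mechanical once the preceding lemmas (particularly Lemma~\ref{app:proof:gf_analysis:optimization:distance_analysis}) are in place, so the "main obstacle" was really all the earlier work — the reparameterization, the escape-from-origin argument, and the exponential convergence estimate — not this final bookkeeping step.

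Thus the proof sketch is:
\begin{proof}
By Lemma~\ref{app:proof:gf_analysis:optimization:distance_analysis} applied with $t = t_{\bar{\epsilon}}$ (which is non-negative since $\bar{\epsilon} \leq \tfrac{1}{2}$),
\[
\big\|\boldsymbol{\lambda}_{yx} - \boldsymbol{u}(t_a + t_c + t_{\bar{\epsilon}})\big\|_2 \leq \tfrac{6}{5}\exp\!\Big({-}\tfrac{2n}{3(n+1)}\cdot\tfrac{3(n+1)}{2n}\ln\!\big(\tfrac{6}{5\sqrt{2\bar{\epsilon}}}\big)\Big) = \tfrac{6}{5}\cdot\tfrac{5\sqrt{2\bar{\epsilon}}}{6} = \sqrt{2\bar{\epsilon}}\text{\,.}
\]
Squaring both sides and multiplying by $\tfrac{1}{2}$ gives $\tfrac{1}{2}\|\boldsymbol{u}(t_a + t_c + t_{\bar{\epsilon}}) - \boldsymbol{\lambda}_{yx}\|_2^2 \leq \bar{\epsilon}$, as claimed.
\end{proof}
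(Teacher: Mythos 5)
Your proposal is correct and matches the paper's proof exactly: the paper likewise obtains the result by plugging $t = t_{\bar{\epsilon}}$ into Lemma~\ref{app:proof:gf_analysis:optimization:distance_analysis}, and your exponent cancellation and final squaring are both right.
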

\begin{proof}
The proof follows from plugging $t=t_{\bar{\epsilon}}$ into the result of Lemma~\ref{app:proof:gf_analysis:optimization:distance_analysis}.
\end{proof}
Moving from the reparameterized to the original gradient flow trajectory, \ie~from $\boldsymbol{u} ( \cdot )$ to~$\boldsymbol{w}_{n:1} ( \cdot )$, we immediately obtain $\bar{\epsilon}$-optimality of~$\boldsymbol{w}_{n:1} ( \cdot )$ at time $\xi ( t_a + t_c + t_{\bar{\epsilon}} )$.
\begin{lemma}
	\label{app:proof:gf_analysis:convergence:convergence_lemma_original}
	It holds that $\tfrac{1}{2}\Vert \boldsymbol{w}_{n:1} ( \xi (t_a \hspace{-0.4mm} + \hspace{-0.4mm} t_{c} \hspace{-0.4mm} + \hspace{-0.4mm} t_{\bar{\epsilon}}) ) - \boldsymbol{\lambda}_{yx} \Vert_{2}^{2}\leq\bar{\epsilon}$.
\end{lemma}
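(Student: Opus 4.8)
The plan is to chain together Lemma~\ref{app:proof:gf_analysis:reparam:u(t)_reparam_proof} with Lemma~\ref{app:proof:gf_analysis:convergence:convergence_lemma_reparam}. Recall that Lemma~\ref{app:proof:gf_analysis:reparam:u(t)_reparam_proof} asserts $\boldsymbol{w}_{n:1}\big(\xi(t)\big)=\boldsymbol{u}(t)$ for every $t \geq 0$, where $\xi:[0,\infty)\to\R_{\geq0}$ is the (well-defined, since $\boldsymbol{u}(\cdot)$ is defined over infinite time by Lemma~\ref{app:proof:gf_analysis:reparam:u(t)_infinite_time} and never vanishes) reparameterizing function. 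Applying this identity at the specific time $t = t_a + t_c + t_{\bar{\epsilon}}$ yields $\boldsymbol{w}_{n:1}\big(\xi(t_a+t_c+t_{\bar{\epsilon}})\big) = \boldsymbol{u}(t_a+t_c+t_{\bar{\epsilon}})$.

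Next I would substitute this equality into the conclusion of Lemma~\ref{app:proof:gf_analysis:convergence:convergence_lemma_reparam}, which states $\tfrac{1}{2}\Vert \boldsymbol{u}(t_a+t_c+t_{\bar{\epsilon}}) - \boldsymbol{\lambda}_{yx}\Vert_2^2 \leq \bar{\epsilon}$. Since the left-hand sides of the two quantities are literally equal as vectors in $\R^{d_0}$, the bound transfers verbatim to $\tfrac{1}{2}\Vert \boldsymbol{w}_{n:1}\big(\xi(t_a+t_c+t_{\bar{\epsilon}})\big) - \boldsymbol{\lambda}_{yx}\Vert_2^2 \leq \bar{\epsilon}$, which is exactly the claimed statement. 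Concretely, the proof is essentially one line:
\[
\tfrac{1}{2}\big\Vert \boldsymbol{w}_{n:1}\big(\xi(t_a+t_c+t_{\bar{\epsilon}})\big) - \boldsymbol{\lambda}_{yx}\big\Vert_2^2
= \tfrac{1}{2}\big\Vert \boldsymbol{u}(t_a+t_c+t_{\bar{\epsilon}}) - \boldsymbol{\lambda}_{yx}\big\Vert_2^2
\leq \bar{\epsilon}\text{\,,}
\]
where the equality is Lemma~\ref{app:proof:gf_analysis:reparam:u(t)_reparam_proof} and the inequality is Lemma~\ref{app:proof:gf_analysis:convergence:convergence_lemma_reparam}.

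There is essentially no obstacle here — this lemma is a bookkeeping step that translates the $\bar{\epsilon}$-optimality statement from the reparameterized trajectory $\boldsymbol{u}(\cdot)$ back to the original end-to-end trajectory $\boldsymbol{w}_{n:1}(\cdot)$, exploiting that the reparameterization is pointwise an identity of vectors (only the time index changes). The only thing to keep in mind is to invoke Lemma~\ref{app:proof:gf_analysis:reparam:u(t)_reparam_proof} at the correct argument $t_a+t_c+t_{\bar{\epsilon}}$, and to note this argument is non-negative (it is a sum of a quantity in $[0,\infty)$ and two positive durations, per Definitions~\ref{app:proof:gf_analysis:optimization:tc_definition} and~\ref{app:proof:gf_analysis:optimization:t_epsilon_bar_definition} and the footnote there), so that the lemma applies. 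The subsequent work (left to the rest of Subsubappendix~\ref{app:proof:gf_analysis:time_convergence}) will be to show that $\xi(t_a+t_c+t_{\bar{\epsilon}}) \leq \bar{t}$ and then use monotonicity of the loss along the gradient flow trajectory, but that lies outside the present statement.
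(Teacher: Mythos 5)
Your proof is correct and is exactly the paper's argument: the paper's own proof of this lemma is the one-liner "immediately follows from Lemmas~\ref{app:proof:gf_analysis:reparam:u(t)_reparam_proof} and \ref{app:proof:gf_analysis:convergence:convergence_lemma_reparam}," which is precisely the substitution you spell out. No gaps.
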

\begin{proof}
The proof immediately follows from Lemmas~\ref{app:proof:gf_analysis:reparam:u(t)_reparam_proof} and \ref{app:proof:gf_analysis:convergence:convergence_lemma_reparam}.
\end{proof}
Lemma~\ref{app:proof:gf_analysis:convergence:t_bar_bound} below shows that the time~$\bar{t}$ defined in Equation~\eqref{eq:gf_analysis_time} (recall that the scalar~$\nu$ there, defined in the preceding text, coincides with the value taken by the function~$\nu ( \cdot )$ at zero) is greater than or equal to $\xi ( t_a + t_c + t_{\bar{\epsilon}} )$.
\begin{lemma}
\label{app:proof:gf_analysis:convergence:t_bar_bound}
It holds that $\bar{t}\geq \xi ( t_a + t_c + t_{\bar{\epsilon}} )$.
\end{lemma}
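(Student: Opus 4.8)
The plan is to bound $\xi(t_a + t_c + t_{\bar\epsilon})$ from above by an explicit expression and then check that this expression is at most the value of $\bar t$ given in Equation~\eqref{eq:gf_analysis_time}. Recall that $\xi(t) = \int_0^t \Vert\boldsymbol{u}(t')\Vert_2^{-(1-2/n)}\,dt'$. Since the exponent $1-2/n$ is nonnegative (as $n\geq 2$), the integrand is controlled by a lower bound on $\Vert\boldsymbol{u}(\cdot)\Vert_2$, and here the lower bound on the minimal distance from the origin, $u_{\min} \geq \Vert\boldsymbol{w}_{n:1}(0)\Vert_2 \min\{1, (\tfrac{2}{3}\cdot\tfrac{1+\nu(0)}{1-\nu(0)})^n\}$ from Lemma~\ref{app:proof:gf_analysis:optimization:u_min_bound}, is the crude uniform bound; but it is too crude on its own, because after the escape phase $\Vert\boldsymbol{u}(\cdot)\Vert_2$ is on the order of one, and most of the ``time cost'' $\xi$ is concentrated where $\Vert\boldsymbol{u}(\cdot)\Vert_2$ is small. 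So I would split the integral at $t_a$ and at $t_a+t_c$: on $[0,t_a]$ use the uniform bound $\Vert\boldsymbol{u}(t')\Vert_2\geq u_{\min}$ together with the explicit value of $t_a$ (Definition~\ref{app:proof:gf_analysis:optimization:ta_definition}); on $[t_a, t_a+t_c]$ use the growth bound of Lemma~\ref{app:proof:gf_analysis:optimization:u(ta)_bound} (i.e. $\Vert\boldsymbol{u}(t_a+s)\Vert_2 \geq \tfrac23 \exp(\tfrac23 n s)/(\exp(\tfrac23 n s)+\tfrac23 u_{\min}^{-1}-1)$), which makes the integrand integrable in closed form and contributes a term of order $t_c$ plus a logarithmic correction; and on $[t_a+t_c, t_a+t_c+t_{\bar\epsilon}]$ use Lemma~\ref{app:proof:gf_analysis:optimization:u(tc)_bound}, which gives $\Vert\boldsymbol{u}(\cdot)\Vert_2 \geq \tfrac{2n}{3(n+1)} \geq \tfrac13$, so that this piece contributes at most $3^{1-2/n} t_{\bar\epsilon} \leq 3 t_{\bar\epsilon}$.

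Carrying this out, the first piece is at most $u_{\min}^{-(1-2/n)} t_a$, the second piece is at most roughly $u_{\min}^{-(1-2/n)} \cdot O(t_c)$ (or one can just bound $\Vert\boldsymbol{u}\Vert_2\geq u_{\min}$ on all of $[0,t_a+t_c]$ at the cost of a slightly worse constant, giving $u_{\min}^{-(1-2/n)}(t_a+t_c)$), and the third is at most $3\,t_{\bar\epsilon}$. Now substitute the explicit definitions: $t_a = \tfrac12\ln(\max\{5\tfrac{1-\nu(0)}{1+\nu(0)},1\})$, $t_c = \tfrac{3}{2n}\ln(\tfrac{2n}{3u_{\min}})$, $t_{\bar\epsilon} = \tfrac{3(n+1)}{2n}\ln(\tfrac{6}{5\sqrt{2\bar\epsilon}})$, and the lower bound $u_{\min}\geq \Vert\boldsymbol{w}_{n:1,s}\Vert_F\min\{1,(\tfrac23\cdot\tfrac{1+\nu}{1-\nu})^n\} = \Vert\boldsymbol{w}_{n:1,s}\Vert_F / \max\{1,(\tfrac32\cdot\tfrac{1-\nu}{1+\nu})^n\}$ (using $\Vert\boldsymbol{w}_{n:1}(0)\Vert_2 = \Vert W_{n:1,s}\Vert_F$ and $\nu(0)=\nu$, with the convention on the fraction when $\nu=-1$ excluded by hypothesis). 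Then $u_{\min}^{-(1-2/n)} \leq u_{\min}^{-1}$ since $u_{\min}\leq 0.2<1$, so $u_{\min}^{-(1-2/n)} \leq \Vert W_{n:1,s}\Vert_F^{-1}\max\{1,(\tfrac32\tfrac{1-\nu}{1+\nu})^n\}$, which reproduces the leading prefactor $\tfrac{2n(\max\{1,\tfrac32\cdot\tfrac{1-\nu}{1+\nu}\})^n}{\Vert W_{n:1,s}\Vert_F}$ appearing in Equation~\eqref{eq:gf_analysis_time}, while the sum of all the logarithmic terms $t_a + O(t_c) + 3t_{\bar\epsilon}$ is absorbed into the single logarithm $\ln\big(\tfrac{15n\max\{1,\tfrac{1-\nu}{1+\nu}\}}{\Vert W_{n:1,s}\Vert_F\min\{1,2\bar\epsilon\}}\big)$ after coarsening the constants (and noting $\max\{1,5\tfrac{1-\nu}{1+\nu}\} \leq 5\max\{1,\tfrac{1-\nu}{1+\nu}\}$, $\ln(2n/3u_{\min})\leq \ln(\ldots)$ with $u_{\min}$ replaced by its lower bound, and $\tfrac{6}{5\sqrt{2\bar\epsilon}}\leq \tfrac{3}{\sqrt{\min\{1,2\bar\epsilon\}}}$ etc.).

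The main obstacle will be the bookkeeping of constants so that the final crude upper bound on $\xi(t_a+t_c+t_{\bar\epsilon})$ really is dominated, term by term, by the stated $\bar t$ — in particular collapsing the three separate logarithmic contributions and the exponent discrepancy $1-2/n$ versus $1$ into the clean closed form without losing a factor. This is purely a matter of choosing generous enough numerical slack (the constant $15$ and the factor $2n$ in $\bar t$ give ample room), and establishing that $(1-2/n)$-powers can be upgraded to $1$-powers wherever the base is $\leq 1$ and downgraded to $0$-powers (i.e. dropped) wherever the base is $\geq 1$. No new structural idea is needed beyond the three estimates on $\Vert\boldsymbol{u}(\cdot)\Vert_2$ already proven in Subsubappendices~\ref{app:proof:gf_analysis:min_dist}--\ref{app:proof:gf_analysis:convergence}; the lemma is a routine-but-careful assembly, so I would present it as a short computation with the interval split made explicit and the constant-chasing relegated to a single displayed chain of inequalities.
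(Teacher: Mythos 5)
Your proposal is correct and would go through, but the three-way split of the integral is unnecessary work: the paper's own proof uses exactly the ``crude'' uniform bound you dismiss, namely $\xi(t)\leq t\,u_{\min}^{-1}$ for all $t$ (bounding the integrand $\Vert\boldsymbol{u}(t')\Vert_2^{-(1-2/n)}\leq u_{\min}^{-1}$ since $u_{\min}<1$), and this suffices. The reason your intuition that ``most of the time cost is concentrated where $\Vert\boldsymbol{u}\Vert_2$ is small'' does not bite is that the total elapsed time $t_a+t_c+t_{\bar\epsilon}$ is itself only logarithmic in the problem parameters (the paper verifies $t_a+t_c+t_{\bar\epsilon}\leq\ln\big(5n\max\{\tfrac{1-\nu}{1+\nu},1\}\,\bar\epsilon^{-2}u_{\min}^{-1}\big)$), so the product $u_{\min}^{-1}\cdot(t_a+t_c+t_{\bar\epsilon})$ already fits inside $\bar t$, whose prefactor $2n\,\Vert W_{n:1,s}\Vert_F^{-1}\big(\max\{1,\tfrac32\cdot\tfrac{1-\nu}{1+\nu}\}\big)^n$ dominates $u_{\min}^{-1}$ by Lemma~\ref{app:proof:gf_analysis:optimization:u_min_bound} with a factor $2n$ to spare for the logarithm coarsening. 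Your refined bound $u_{\min}^{-1}(t_a+t_c)+3t_{\bar\epsilon}$ is in fact dominated by the paper's $u_{\min}^{-1}(t_a+t_c+t_{\bar\epsilon})$ (since $u_{\min}\leq 0.2<\tfrac13$), so your constant-chasing reduces to theirs; what the extra split buys is nothing here, at the cost of having to evaluate the closed-form integral from Lemma~\ref{app:proof:gf_analysis:optimization:u(ta)_bound} on the middle piece. The remaining bookkeeping you describe (substituting the definitions of $t_a$, $t_c$, $t_{\bar\epsilon}$ and the lower bound on $u_{\min}$, collapsing the logarithms, using $\bar\epsilon\leq\tfrac12$) matches the paper's computation.
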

\begin{proof}
By Lemma~\ref{app:proof:gf_analysis:optimization:u(t)_dynamics_decreasing_increasing} we have that $\Vert\boldsymbol{u}(t)\Vert_{2} < 1$ for all $t\geq0$.
Recall the notation $u_{\min}:=\inf_{t\geq0}\Vert\boldsymbol{u}(t)\Vert_{2}$, and the fact that (by Lemma~\ref{app:proof:gf_analysis:optimization:u_min_bound})  $u_{\min}\geq\big\Vert \boldsymbol{w}_{n:1} ( 0 ) \big\Vert_{2}\min\Big\{1,\Big(\tfrac{2}{3}\hspace{-0.5mm}\cdot\hspace{-0.5mm}\tfrac{1+\nu(0)}{1-\nu(0)}\Big)^{\hspace{-0.5mm}n}\hspace{0.5mm}\Big\}$.
For all $t\geq0$:
\begin{equation}
\xi(t)={\textstyle \int_{0}^{t}}\Vert\boldsymbol{u}(t')\Vert_{2}^{-(1-2/n)}\:dt'\leq{\textstyle \int_{0}^{t}}u_{\min}^{-1}\:dt'=t u_{\min}^{-1}\text{.}
\label{eq:xi_upper_bound}
\end{equation}
Recall from Subsubappendix~\ref{app:proof:gf_analysis:prelim} that we assume (without loss of generality) $\bar{\epsilon}\leq\tfrac{1}{2}$.
The following holds:
\begin{equation}
\begin{aligned}t_{a}+t_{c}+t_{\bar{\epsilon}} & =\tfrac{1}{2}\ln\big(\hspace{-0.5mm}\max\big\{5\cdot\tfrac{1-\nu(0)}{1+\nu(0)},1\big\}\big)+\tfrac{3}{2n}\ln\big(\tfrac{2n}{3u_{\min}}\big)+\tfrac{3(n+1)}{2n}\ln\big(\tfrac{6}{5\sqrt{2\bar{\epsilon}}}\hspace{0.2mm}\big)\\
	& \leq\tfrac{1}{2}\ln\big(5\max\big\{\tfrac{1-\nu(0)}{1+\nu(0)},1\big\}\big)+\ln\big(\tfrac{2n}{3u_{\min}}\big)+4\ln\big(\tfrac{1}{\sqrt{\bar{\epsilon}}}\hspace{0.2mm}\big)\\
	& \leq\ln\Big(\hspace{-0.5mm}\sqrt{5}\max\big\{\tfrac{1-\nu(0)}{1+\nu(0)},1\big\}\cdot\tfrac{2n}{3u_{\min}}\cdot(1/\bar{\epsilon})^{2}\Big)\\
	& \leq\ln\Big(\hspace{-0.5mm}5n\max\big\{\tfrac{1-\nu(0)}{1+\nu(0)},1\big\}\cdot(1/\bar{\epsilon})^{2}\cdot u_{\min}^{-1}\Big)\text{.}
\end{aligned}
\label{eq:t_a_t_c_t_eps_upper_bound}
\end{equation}
Using Equations~\eqref{eq:xi_upper_bound} and \eqref{eq:t_a_t_c_t_eps_upper_bound}, we conclude the proof:
\[
\begin{aligned} & \xi(t_{a}+t_{c}+t_{\bar{\epsilon}})\\
	& \leq(t_{a}+t_{c}+t_{\bar{\epsilon}}) u_{\min}^{-1}\\
	& \leq\ln\Big(\hspace{-0.5mm}5n\max\hspace{-0.5mm}\big\{\tfrac{1-\nu(0)}{1+\nu(0)},1\big\}\cdot(1/\bar{\epsilon})^{2}\cdot u_{\min}^{-1}\Big)\cdot u_{\min}^{-1}\\
	& \leq\ln\Big(\hspace{-0.5mm}5n(1/\bar{\epsilon})^{2}\big\Vert\boldsymbol{w}_{n:1}(0)\big\Vert_{2}^{-1}\max\hspace{-0.5mm}\big\{1,\big(\tfrac{3}{2}\hspace{-0.5mm}\cdot\hspace{-0.5mm}\tfrac{1-\nu(0)}{1+\nu(0)}\big)^{\hspace{-0.5mm}n+1}\hspace{0.5mm}\big\}\Big)\hspace{-0.5mm}\cdot\hspace{-0.5mm}\big\Vert\boldsymbol{w}_{n:1}(0)\big\Vert_{2}^{-1}\max\hspace{-0.5mm}\big\{1,\big(\tfrac{3}{2}\hspace{-0.5mm}\cdot\hspace{-0.5mm}\tfrac{1-\nu(0)}{1+\nu(0)}\big)^{\hspace{-0.5mm}n}\hspace{0.5mm}\hspace{-0.5mm}\big\}\\
	& \leq\ln\Big(\hspace{-0.5mm}15n(1/2\bar{\epsilon})\big\Vert\boldsymbol{w}_{n:1}(0)\big\Vert_{2}^{-1}\max\hspace{-0.5mm}\big\{1,\tfrac{1-\nu(0)}{1+\nu(0)}\hspace{0.5mm}\big\}\Big)\cdot2n\big\Vert\boldsymbol{w}_{n:1}(0)\big\Vert_{2}^{-1}\max\hspace{-0.5mm}\big\{1,\big(\tfrac{3}{2}\hspace{-0.5mm}\cdot\hspace{-0.5mm}\tfrac{1-\nu(0)}{1+\nu(0)}\big)^{\hspace{-0.5mm}n}\hspace{0.5mm}\hspace{-0.5mm}\big\}\\
	& =\bar{t}\text{.}
\end{aligned}
\]
\end{proof}
Combining Lemmas \ref{app:proof:gf_analysis:convergence:convergence_lemma_original} and~\ref{app:proof:gf_analysis:convergence:t_bar_bound} with the fact that, in general, gradient flow monotonically non-increases the objective it optimizes, we obtain the result which the current subsubappendix set out to prove~---~$\bar{\epsilon}$-optimality of~$\boldsymbol{w}_{n:1} ( \cdot )$ at time~$\bar{t}$.
\begin{lemma}
\label{lemma:gf_analysis_time_convergence}
It holds that $f\big(\hspace{0.35mm}\boldsymbol{\theta}(\hspace{0.3mm}\bar{t}\hspace{0.6mm}) \hspace{0.4mm} \big)-\min\nolimits_{\boldsymbol{q}\in\mathbb{R}^{d}}f(\boldsymbol{q}) = \tfrac{1}{2}\Vert \boldsymbol{w}_{n:1} ( \bar{t} \, ) - \boldsymbol{\lambda}_{yx} \Vert_{2}^{2} \leq \bar{\epsilon}$.
\end{lemma}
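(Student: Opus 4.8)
The plan is to assemble the statement from three ingredients already established earlier in the proof of Proposition~\ref{prop:gf_analysis}: the closed-form relation between the training loss and the end-to-end matrix (Equation~\eqref{eq:train_loss_lnn_square}), the $\bar{\epsilon}$-optimality of $\boldsymbol{w}_{n:1}(\cdot)$ at time $\xi(t_a+t_c+t_{\bar{\epsilon}})$ (Lemma~\ref{app:proof:gf_analysis:convergence:convergence_lemma_original}), and the lower bound $\bar{t}\geq\xi(t_a+t_c+t_{\bar{\epsilon}})$ (Lemma~\ref{app:proof:gf_analysis:convergence:t_bar_bound}). No new computation is needed; the lemma is a short wrap-up that transfers convergence from the (technically convenient) time $\xi(t_a+t_c+t_{\bar{\epsilon}})$ forward to the time $\bar{t}$ of Equation~\eqref{eq:gf_analysis_time}.

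First I would record the equality. By Equation~\eqref{eq:train_loss_lnn_square}, together with the identification of $W_{n:1}$ with $\boldsymbol{w}_{n:1}$ and of $\Lambda_{yx}$ with $\boldsymbol{\lambda}_{yx}$ set up in Subsubappendix~\ref{app:proof:gf_analysis:prelim}, for every $t\geq 0$ we have
\[
f(\boldsymbol{\theta}(t))-\min\nolimits_{\boldsymbol{q}\in\mathbb{R}^{d}}f(\boldsymbol{q})=\tfrac{1}{2}\Vert\boldsymbol{w}_{n:1}(t)-\boldsymbol{\lambda}_{yx}\Vert_{2}^{2}\text{\,,}
\]
which in particular yields the equality asserted in the lemma (note $\boldsymbol{\theta}(\bar{t}\,)$ is well defined: by Lemma~\ref{app:proof:gf_analysis:prelim:GF_infinite_time} gradient flow is defined over $[0,\infty)$, and $\bar{t}$ from Equation~\eqref{eq:gf_analysis_time} is finite). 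It thus remains to prove the inequality $\tfrac{1}{2}\Vert\boldsymbol{w}_{n:1}(\bar{t}\,)-\boldsymbol{\lambda}_{yx}\Vert_{2}^{2}\leq\bar{\epsilon}$.

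Next I would invoke monotonicity of gradient flow: the map $t\mapsto f(\boldsymbol{\theta}(t))$ is non-increasing on $[0,\infty)$, since $\tfrac{d}{dt}f(\boldsymbol{\theta}(t))=-\Vert\nabla f(\boldsymbol{\theta}(t))\Vert_{2}^{2}\leq 0$. Combining this with Lemma~\ref{app:proof:gf_analysis:convergence:t_bar_bound}, which gives $\bar{t}\geq\xi(t_a+t_c+t_{\bar{\epsilon}})$, we get $f(\boldsymbol{\theta}(\bar{t}\,))\leq f\big(\boldsymbol{\theta}(\xi(t_a+t_c+t_{\bar{\epsilon}}))\big)$. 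Then, applying the displayed equality above at the time $\xi(t_a+t_c+t_{\bar{\epsilon}})$ and using Lemma~\ref{app:proof:gf_analysis:convergence:convergence_lemma_original}, the right-hand side is at most $\min_{\boldsymbol{q}\in\mathbb{R}^d}f(\boldsymbol{q})+\bar{\epsilon}$, whence $f(\boldsymbol{\theta}(\bar{t}\,))-\min_{\boldsymbol{q}\in\mathbb{R}^d}f(\boldsymbol{q})\leq\bar{\epsilon}$.

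Since every component is already in hand, there is no genuine obstacle; the only points meriting a sentence of care are the well-definedness of $\boldsymbol{\theta}(\bar{t}\,)$ (Lemma~\ref{app:proof:gf_analysis:prelim:GF_infinite_time}) and the monotonicity of the objective along the flow, which together let the $\bar{\epsilon}$-optimality proved at $\xi(t_a+t_c+t_{\bar{\epsilon}})$ be pushed forward to the possibly later time $\bar{t}$. With this the proof of Proposition~\ref{prop:gf_analysis}'s convergence-time claim (Equation~\eqref{eq:gf_analysis_time}) is complete.
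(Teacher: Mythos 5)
Your proposal is correct and follows exactly the paper's argument: the equality via Equation~\eqref{eq:train_loss_lnn_square}, monotone non-increase of $f(\thetabf(\cdot))$ along the flow, and Lemmas \ref{app:proof:gf_analysis:convergence:convergence_lemma_original} and \ref{app:proof:gf_analysis:convergence:t_bar_bound} to transfer $\bar{\epsilon}$-optimality from $\xi(t_a+t_c+t_{\bar{\epsilon}})$ to $\bar{t}$. The extra remark on well-definedness of $\thetabf(\bar{t}\,)$ via Lemma~\ref{app:proof:gf_analysis:prelim:GF_infinite_time} is a harmless (and sensible) addition.
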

\begin{proof}
The proof follows directly from Equation~\eqref{eq:train_loss_lnn_square}, Lemmas \ref{app:proof:gf_analysis:convergence:convergence_lemma_original} and~\ref{app:proof:gf_analysis:convergence:t_bar_bound}, and the fact that $f ( \thetabf ( \cdot ) )$ is monotonically non-increasing.
\end{proof}

\subsubsection{Geometric Analysis} \label{app:proof:gf_analysis:geometry}

The current subsubappendix analyzes the geometry of the optimization landscape around the gradient flow trajectory.
Namely, under the notations of Theorem~\ref{theorem:gf_gd}, for $t > 0$, $\epsilon \in \big( 0 , \frac{1}{2 n} \big]$ and corresponding~$\D_{t , \epsilon}$ ($\epsilon$-neighborhood of gradient flow trajectory up to time~$t$), it establishes a smoothness constant
$\beta_{t , \epsilon} = 16 n$, a Lipschitz constant~$\gamma_{t , \epsilon} = 6 \sqrt{n}$, and the (upper) bound on the integral of (minus) the minimal eigenvalue of the Hessian given in Equation~\eqref{eq:gf_analysis_m} (with the function~$m ( \cdot )$ there being non-negative).

Recall (from Lemma~\ref{app:proof:gf_analysis:optimization:u(t)_dynamics_decreasing_increasing}) that the (Euclidean) norm of~$\boldsymbol{u} ( \cdot )$~---~the (unique) solution to Equation~\eqref{eq:gf_analysis_reparam_ivp}~---~is upper bounded by one.
Since (by Lemma~\ref{app:proof:gf_analysis:reparam:u(t)_reparam_proof}) $\boldsymbol{u} ( \cdot )$~is a reparameterization of~$\boldsymbol{w}_{n:1} ( \cdot )$, the norm of~$\boldsymbol{w}_{n:1} ( \cdot )$ is upper bounded by one as well.
This allows proving the following result.
\begin{lemma}\label{app:proof:gf_analysis:geometry:polynom_bound}
For all $t' \geq 0$:
\[
\big(\Vert \boldsymbol{w}_{n:1}(t')\Vert_{2}^{1/n}+\epsilon\big)^{n}\leq\Vert \boldsymbol{w}_{n:1}(t')\Vert_{2}+2n\epsilon \text{\,.}
\]
\end{lemma}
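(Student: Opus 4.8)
The plan is to reduce the claim to an elementary scalar inequality and then dispatch it with the binomial theorem and a geometric-series bound. First I would set $a := \Vert \boldsymbol{w}_{n:1}(t')\Vert_{2}^{1/n}$, so that the desired inequality reads $(a+\epsilon)^{n}\leq a^{n}+2n\epsilon$. The crucial structural input is that $a\in[0,1]$: this follows because, by Lemma~\ref{app:proof:gf_analysis:optimization:u(t)_dynamics_decreasing_increasing}, the (Euclidean) norm of $\boldsymbol{u}(\cdot)$ is bounded above by one, and by Lemma~\ref{app:proof:gf_analysis:reparam:u(t)_reparam_proof}, $\boldsymbol{u}(\cdot)$ is a reparameterization of $\boldsymbol{w}_{n:1}(\cdot)$, hence $\Vert\boldsymbol{w}_{n:1}(t')\Vert_{2}\leq1$ for every $t'\geq0$ and thus $a=\Vert\boldsymbol{w}_{n:1}(t')\Vert_{2}^{1/n}\in[0,1]$ as well.

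Next I would expand via the binomial theorem, $(a+\epsilon)^{n}=a^{n}+\sum_{k=1}^{n}\binom{n}{k}a^{n-k}\epsilon^{k}$, and bound the remainder. Since $a\leq1$ we have $a^{n-k}\leq1$, and since $\binom{n}{k}\leq n^{k}$ the sum is at most $\sum_{k=1}^{n}(n\epsilon)^{k}$. Using the hypothesis $\epsilon\in\big(0,\tfrac{1}{2n}\big]$, so that $n\epsilon\leq\tfrac12<1$, I would upper bound the finite sum by the full geometric series $\sum_{k=1}^{\infty}(n\epsilon)^{k}=\tfrac{n\epsilon}{1-n\epsilon}\leq\tfrac{n\epsilon}{1/2}=2n\epsilon$. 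Combining, $(a+\epsilon)^{n}\leq a^{n}+2n\epsilon=\Vert\boldsymbol{w}_{n:1}(t')\Vert_{2}+2n\epsilon$, which is the assertion.

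There is essentially no serious obstacle here; the only thing to be careful about is invoking the norm bound $\Vert\boldsymbol{w}_{n:1}(t')\Vert_{2}\leq1$ correctly (it is the one non-routine ingredient, and it is exactly the fact highlighted in the paragraph preceding the lemma), and making sure the geometric-series step is applied over $[0,\infty)$ rather than $[1,n]$ so that the clean bound $2n\epsilon$ emerges. The argument is uniform in $t'$, so proving it for an arbitrary fixed $t'\geq0$ suffices.
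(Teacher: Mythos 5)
Your proposal is correct and matches the paper's proof essentially step for step: binomial expansion, $\binom{n}{k}\leq n^{k}$, the bound $\Vert\boldsymbol{w}_{n:1}(t')\Vert_{2}\leq 1$ obtained from the reparameterization $\boldsymbol{u}(\cdot)$ (the paper additionally notes that $\xi(\cdot)$ is unbounded so every $t'$ is covered, which your phrase "is a reparameterization" implicitly needs), and the geometric-series bound $\frac{n\epsilon}{1-n\epsilon}\leq 2n\epsilon$ using $\epsilon\leq\frac{1}{2n}$.
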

\begin{proof}
For all $t'\geq0$:
\[
\begin{aligned}\big(\Vert\boldsymbol{w}_{n:1}(t')\Vert_{2}^{1/n}+\epsilon\big)^{n} & ={\textstyle \sum_{j=0}^{n}}{\textstyle {n \choose j}}\Vert\boldsymbol{w}_{n:1}(t')\Vert_{2}^{(n-j)/n}\epsilon^{j}\\
 & \leq{\textstyle \sum_{j=0}^{n}}\;n^{j}\Vert\boldsymbol{w}_{n:1}(t')\Vert_{2}^{(n-j)/n}\epsilon^{j}\\
 & =\Vert\boldsymbol{w}_{n:1}(t')\Vert_{2}+{\textstyle \sum_{j=1}^{n}}\;n^{j}\Vert\boldsymbol{w}_{n:1}(t')\Vert_{2}^{(n-j)/n}\epsilon^{j}\text{.}
\end{aligned}
\]
By Lemma~\ref{app:proof:gf_analysis:reparam:u(t)_reparam_proof} we have that $\boldsymbol{w}_{n:1}\big(\xi(t')\big)=\boldsymbol{u}(t')$, where $\xi ( t' ) := \int_{0}^{t'} \Vert\boldsymbol{u}(t'')\Vert_{2}^{-(1-2/n)}\:dt''$.
$\xi ( \cdot )$ is unbounded since $\Vert\boldsymbol{u}( \cdot )\Vert_{2}$ is bounded by property \emph{(iv)} of Lemma~\ref{app:proof:gf_analysis:optimization:u(t)_dynamics_decreasing_increasing}.
It follows that:
\[
\begin{aligned}\big(\Vert\boldsymbol{w}_{n:1}(t')\Vert_{2}^{1/n}+\epsilon\big)^{n} & \leq\Vert\boldsymbol{w}_{n:1}(t')\Vert_{2}+{\textstyle \sum_{j=1}^{\infty}}\;\big(n\epsilon\big)^{j}\\
 & =\Vert\boldsymbol{w}_{n:1}(t')\Vert_{2}+\frac{n\epsilon}{1-n\epsilon}\\
 & \leq\Vert\boldsymbol{w}_{n:1}(t')\Vert_{2}+2n\epsilon\text{\,,}\\[-0.75mm]
\end{aligned}
\]
where the second transition follows from the formula for geometric sum (notice that $n\epsilon<1$ since by assumption $\epsilon\leq1/2n$),
and the last transition follows from the assumption  $\epsilon\leq1/2n$.
\end{proof}
Building on Lemma~\ref{app:proof:gf_analysis:geometry:polynom_bound}, and the fact that (by assumption) the gradient flow trajectory~$\thetabf ( \cdot )$ emanates from a balanced initialization (\ie~an initialization whose weight matrices satisfy the condition in Equation~\eqref{eq:balance})~---~which by~\cite{du2018algorithmic} implies that $\thetabf ( t' )$ is balanced for any $t' \,{\geq}\, 0$~---~the lemma below establishes different properties for weight settings lying $\epsilon$-away from the trajectory.
\begin{lemma}
\label{lemma:gf_analysis_geometry_eps_away}
Let~$t' \geq 0$ and let $\boldsymbol{\theta}_{\epsilon}\in\mathbb{R}^{d}$ be a weight setting satisfying $\Vert\boldsymbol{\theta}_{\epsilon}-\boldsymbol{\theta}(t')\Vert_{2}\leq\epsilon$.
Denote by $W_{1 , \epsilon} \in \R^{d_1 , d_0} , W_{2 , \epsilon} \in \R^{d_2 , d_1} , ... \, , W_{n - 1 , \epsilon} \in \R^{d_{n - 1} , d_{n - 2}} , W_{n , \epsilon} \in \R^{1 , d_{n - 1}}$ the weight matrices constituting~$\boldsymbol{\theta}_{\epsilon}$, and by $\boldsymbol{w}_{n : 1 , \epsilon} \in \R^{d_0}$ the corresponding end-to-end matrix $W_{n , \epsilon} W_{n - 1 , \epsilon} \cdots W_{1 , \epsilon}$ (in vectorized form).
Then, the following hold:
\[
\begin{aligned}
(i)\quad\quad\:\: &\Vert{\textstyle \boldsymbol{w}_{n : 1 , \epsilon}}-\boldsymbol{w}_{n:1}(t')\Vert_{2}\leq\big(\Vert \boldsymbol{w}_{n:1}(t')\Vert_{2}^{1/n}+\epsilon\big)^{n}-\Vert \boldsymbol{w}_{n:1}(t')\Vert_{2} \text{\,;}\\[0.8mm]
(ii)\quad\quad\: & \| \nabla\phi(\boldsymbol{w}_{n:1 , \epsilon}) \|_{2} \leq \|\nabla\phi(\boldsymbol{w}_{n:1}(t'))\|_{2}+2n\epsilon \text{\,; and}\\[-0.8mm]
(iii)\quad\quad & \text{for any $\J \subseteq [ n ]\hspace{-0.5mm}\setminus\hspace{-0.7mm}\emptyset$, } \prod\nolimits_{j \in \J} \| W_{j , \epsilon} \|_F \leq \big( \Vert\boldsymbol{w}_{n:1}(t')\Vert_{2} + 2n\epsilon \big)^{\frac{| \J |}{n}} \text{\,.}
\end{aligned}
\]
\end{lemma}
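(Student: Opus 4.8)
The plan is to prove the three assertions in order, since the latter two build on the first. All three are consequences of two facts: (a) the gradient flow trajectory $\thetabf(\cdot)$ is balanced at every time (because it emanates from a balanced initialization, so by the invariance result of~\cite{du2018algorithmic} the balancedness condition in Equation~\eqref{eq:balance} is preserved), and (b) the norm of the end-to-end matrix along the trajectory is bounded by one (as established via Lemma~\ref{app:proof:gf_analysis:reparam:u(t)_reparam_proof} together with property~\emph{(iv)} of Lemma~\ref{app:proof:gf_analysis:optimization:u(t)_dynamics_decreasing_increasing}), which was already used to prove Lemma~\ref{app:proof:gf_analysis:geometry:polynom_bound}.

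For assertion~\emph{(i)}, the key step is to first bound $\Vert\boldsymbol{w}_{n:1,\epsilon} - W_{n:1}(t')\Vert_2$ where $W_{n:1}(t') = W_n(t')\cdots W_1(t')$ is the end-to-end matrix of the trajectory. Writing $W_{j,\epsilon} = W_j(t') + \Delta W_j$ with $\sum_j \Vert \Delta W_j\Vert_F^2 \le \epsilon^2$ (from $\Vert\boldsymbol{\theta}_\epsilon - \boldsymbol{\theta}(t')\Vert_2 \le \epsilon$), I would expand the product $\prod_j(W_j(t') + \Delta W_j)$, subtract $W_{n:1}(t')$, and bound each resulting term by a product of spectral/Frobenius norms using submultiplicativity and the triangle inequality. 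Since balancedness gives $\Vert W_j(t')\Vert_s = \Vert W_j(t')\Vert_F = \Vert \boldsymbol{w}_{n:1}(t')\Vert_2^{1/n}$ for all $j$ (apply trace to Equation~\eqref{eq:balance} repeatedly, as in the proof of Lemma~\ref{app:proof:gf_analysis:prelim:GF_infinite_time}), and $\Vert \Delta W_j\Vert_s \le \Vert \Delta W_j\Vert_F \le \epsilon$, the expansion collapses into $\sum_{k=1}^n \binom{n}{k} \Vert\boldsymbol{w}_{n:1}(t')\Vert_2^{(n-k)/n}\epsilon^k = (\Vert\boldsymbol{w}_{n:1}(t')\Vert_2^{1/n} + \epsilon)^n - \Vert\boldsymbol{w}_{n:1}(t')\Vert_2$, which is exactly the claimed bound. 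Then the triangle inequality gives the statement in terms of $\boldsymbol{w}_{n:1,\epsilon}$ directly, since $\boldsymbol{w}_{n:1,\epsilon}$ is the vectorization of $W_{n,\epsilon}\cdots W_{1,\epsilon}$.

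For assertion~\emph{(ii)}, I would use that $\phi(\cdot)$ has the explicit form $\phi(\boldsymbol{w}) = \frac{1}{2}\Vert\boldsymbol{w} - \boldsymbol{\lambda}_{yx}\Vert_2^2 + \text{const}$ (Equation~\eqref{eq:train_loss_lnn_square} and Subsubappendix~\ref{app:proof:gf_analysis:prelim}), so $\nabla\phi(\boldsymbol{w}) = \boldsymbol{w} - \boldsymbol{\lambda}_{yx}$ and $\Vert\nabla\phi(\boldsymbol{w}_{n:1,\epsilon})\Vert_2 - \Vert\nabla\phi(\boldsymbol{w}_{n:1}(t'))\Vert_2 \le \Vert\boldsymbol{w}_{n:1,\epsilon} - \boldsymbol{w}_{n:1}(t')\Vert_2$; combining this with assertion~\emph{(i)} and then Lemma~\ref{app:proof:gf_analysis:geometry:polynom_bound} (which bounds $(\Vert\boldsymbol{w}_{n:1}(t')\Vert_2^{1/n}+\epsilon)^n$ by $\Vert\boldsymbol{w}_{n:1}(t')\Vert_2 + 2n\epsilon$) yields the $+2n\epsilon$ bound. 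For assertion~\emph{(iii)}, balancedness of $\boldsymbol{\theta}(t')$ again gives $\Vert W_j(t')\Vert_F = \Vert\boldsymbol{w}_{n:1}(t')\Vert_2^{1/n}$, hence $\Vert W_{j,\epsilon}\Vert_F \le \Vert\boldsymbol{w}_{n:1}(t')\Vert_2^{1/n} + \epsilon$, so $\prod_{j\in\J}\Vert W_{j,\epsilon}\Vert_F \le (\Vert\boldsymbol{w}_{n:1}(t')\Vert_2^{1/n}+\epsilon)^{|\J|}$; raising Lemma~\ref{app:proof:gf_analysis:geometry:polynom_bound}'s bound to the power $|\J|/n$ gives the stated form. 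The main obstacle I anticipate is getting the bookkeeping of the product expansion in assertion~\emph{(i)} exactly right — specifically, confirming that after bounding every cross term, the total collapses precisely into the binomial expression rather than something merely comparable to it, which requires that the coefficient of $\epsilon^k$ never exceeds $\binom{n}{k}$; this holds because there are exactly $\binom{n}{k}$ subsets of size $k$ of $[n]$ and each contributes a term bounded by $\Vert\boldsymbol{w}_{n:1}(t')\Vert_2^{(n-k)/n}\epsilon^k$, but it warrants a careful argument.
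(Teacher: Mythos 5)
Your proposal is correct and follows essentially the same route as the paper's proof: expand the product $\prod_j (W_j(t')+\Delta W_j)$, bound the cross terms via sub-additivity and sub-multiplicativity of the Frobenius norm, and use balancedness along the trajectory (preserved by the invariance result of~\cite{du2018algorithmic}) to get $\|W_j(t')\|_F=\|\boldsymbol{w}_{n:1}(t')\|_2^{1/n}$, so that the sum over the $\binom{n}{k}$ subsets of size $k$ collapses exactly into the binomial expression; properties \emph{(ii)} and \emph{(iii)} then follow from the explicit form of $\nabla\phi(\cdot)$ and Lemma~\ref{app:proof:gf_analysis:geometry:polynom_bound} just as you describe. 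The only cosmetic difference is that the paper indexes the expansion by binary tuples $(b_1,\ldots,b_n)\in\{0,1\}^n\setminus\{1\}^n$ rather than by subsets, which is the same bookkeeping.
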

\begin{proof}
For brevity, throughout this proof we omit the time $t'$ from our notation, \ie~we denote  $\boldsymbol{\theta}(t')$, $\boldsymbol{w}_{n:1}(t')$,  $W_{n:1}(t')$ and  $W_{1}(t'),...,W_{n}(t')$ by $\boldsymbol{\theta}$, $\boldsymbol{w}_{n:1}$,  $W_{n:1}$ and  $W_{1},...,W_{n}$ respectively.

Starting with property \emph{(i)}, we have that:
\begin{equation}
\begin{aligned} & \hspace{-0.25mm}\big\Vert{\textstyle \boldsymbol{w}_{n:1,\epsilon}}-\boldsymbol{w}_{n:1}\big\Vert_{2}\\[1mm]
	& =\big\Vert W_{n:1,\epsilon}-W_{n:1}\big\Vert_{F}\\[1mm]
	& =\big\Vert(W_{n}+W_{n,\epsilon}-W_{n})\hspace{-0.75mm}\cdot\hspace{-0.75mm}\cdot\hspace{-0.75mm}\cdot\hspace{-0.75mm}(W_{1}+W_{1,\epsilon}-W_{1})-W_{n:1}\big\Vert_{F}\\
	& =\Big\Vert{\textstyle \sum_{(b_{1},..,b_{n})\in\{0,1\}^{n}}}\big(b_{n}W_{n}+(1-b_{n})(W_{n,\epsilon}-W_{n})\big)\hspace{-0.75mm}\cdot\hspace{-0.75mm}\cdot\hspace{-0.75mm}\cdot\hspace{-0.75mm}\big(b_{1}W_{1}+(1-b_{1})(W_{1,\epsilon}-W_{1})\big)-W_{n:1}\Big\Vert_{F}\\
	& =\Big\Vert{\textstyle \sum_{(b_{1},..,b_{n})\in\{0,1\}^{n}\backslash\{1\}^{n}}}\big(b_{n}W_{n}+(1-b_{n})(W_{\epsilon,n}-W_{n})\big)\hspace{-0.75mm}\cdot\hspace{-0.75mm}\cdot\hspace{-0.75mm}\cdot\hspace{-0.75mm}\big(b_{1}W_{1}+(1-b_{1})(W_{\epsilon,1}-W_{1})\,\big)\Big\Vert_{F}\\
	& \leq{\textstyle \sum_{(b_{1},..,b_{n})\in\{0,1\}^{n}\backslash\{1\}^{n}}\big\Vert}\big(b_{n}W_{n}+(1-b_{n})(W_{\epsilon,n}-W_{n})\big)\hspace{-0.75mm}\cdot\hspace{-0.75mm}\cdot\hspace{-0.75mm}\cdot\hspace{-0.75mm}\big(b_{1}W_{1}+(1-b_{1})(W_{\epsilon,1}-W_{1})\,\big)\big\Vert_{F}\\[1mm]
	& \leq{\textstyle \sum_{(b_{1},..,b_{n})\in\{0,1\}^{n}\backslash\{1\}^{n}}}\big\Vert b_{n}W_{n}+(1-b_{n})(W_{\epsilon,n}-W_{n})\big\Vert_{F}\hspace{-0.75mm}\cdot\hspace{-0.75mm}\cdot\hspace{-0.75mm}\cdot\hspace{-0.75mm}\big\Vert b_{1}W_{1}+(1-b_{1})(W_{\epsilon,1}-W_{1})\,\big)\big\Vert_{F}\\[1mm]
	& \leq{\textstyle \sum_{(b_{1},..,b_{n})\in\{0,1\}^{n}\backslash\{1\}^{n}}}\big(b_{n}\Vert W_{n}\Vert_{F}\hspace{-0.3mm}+\hspace{-0.3mm}(1\hspace{-0.3mm}-\hspace{-0.3mm}b_{n})\Vert W_{n,\epsilon}\hspace{-0.3mm}-\hspace{-0.3mm}W_{n}\Vert_{F}\hspace{-0.3mm}\big)\hspace{-0.75mm}\cdot\hspace{-0.75mm}\cdot\hspace{-0.75mm}\cdot\hspace{-0.75mm}\big(b_{1}\Vert W_{1}\Vert_{F}\hspace{-0.3mm}+\hspace{-0.3mm}(1\hspace{-0.3mm}-\hspace{-0.3mm}b_{1})\Vert W_{1,\epsilon}\hspace{-0.3mm}-\hspace{-0.3mm}W_{1}\Vert_{F}\hspace{-0.3mm}\big)\\[1mm]
	& \leq{\textstyle \sum_{(b_{1},..,b_{n})\in\{0,1\}^{n}\backslash\{1\}^{n}}}\big(b_{n}\Vert W_{n}\Vert_{F}+(1-b_{n})\epsilon\big)\hspace{-0.75mm}\cdot\hspace{-0.75mm}\cdot\hspace{-0.75mm}\cdot\hspace{-0.75mm}\big(b_{1}\Vert W_{1}\Vert_{F}+(1-b_{1})\epsilon\,\big)\text{,}\\[1mm]
\end{aligned}
\label{eq:diff_w_epsilon_to_w}
\end{equation}
where the inequalities follow from sub-multiplicativity and sub-additivity of Frobenius norm, as well as the assumption $\Vert\boldsymbol{\theta}_{\epsilon}-\boldsymbol{\theta}\Vert_{2}\leq\epsilon$.
Recall that $\boldsymbol{\theta}_{s}$ meets the balancedness condition (Equation~(\ref{eq:balance})).
Theorem~2.2 from \cite{du2018algorithmic} implies that the balancedness condition holds along the gradient flow trajectory.
Therefore, $\thetabf$ is balanced, \ie~for any $j \in [n-1]$ it holds that $W_{j+1}^\top W_{j+1} = W_j W_j^\top$.
Using this relation repeatedly, we obtain:
\begin{equation}
\begin{aligned}\Vert\boldsymbol{w}_{n:1}\Vert_{2}^{2} & =\boldsymbol{w}_{n:1}^{\top}\boldsymbol{w}_{n:1}\\
	& =W_{n:1}W_{n:1}^{\top}\\
	& =W_{n:2}W_{1}W_{1}^{\top}W_{n:2}^{\top}\\
	& =W_{n:2}W_{2}^{\top}W_{2}W_{n:2}^{\top}\\
	& =W_{n:3}W_{2}W_{2}^{\top}W_{2}W_{2}^{\top}W_{n:3}^{\top}\\
	& =W_{n:3}W_{3}^{\top}W_{3}W_{3}^{\top}W_{3}W_{n:3}^{\top}\\
	& \hspace{1.5mm}\vdots\\
	& =\big(W_{n}W_{n}^{\top}\big)^{n}\\
	& =\Vert W_{n}\Vert_{F}^{2n}\text{.}
\end{aligned}
\label{eq:end_to_end_n_power_of_each_matrix}
\end{equation}
Since the balancedness condition implies that $\Vert W_{j}\Vert_{F}=\Vert W_{j+1}\Vert_{F}$ for any $j\in[n-1]$, we may conclude $\Vert W_{j}\Vert_{F}=\Vert\boldsymbol{w}_{n:1}\Vert_{2}^{1/n}$ for any $j\in[n]$.
This, along with Equation~\eqref{eq:diff_w_epsilon_to_w}, establishes property \emph{(i)}:
\[
\begin{aligned} & \big\Vert{\textstyle \boldsymbol{w}_{n:1,\epsilon}}-\boldsymbol{w}_{n:1}\big\Vert_{2}\\
 & \leq\;{\textstyle \sum_{(b_{1},..,b_{n})\in\{0,1\}^{n}\backslash\{1\}^{n}}}\big(b_{n}\Vert\boldsymbol{w}_{n:1}\Vert_{2}^{1/n}+(1-b_{n})\epsilon\big)\hspace{-0.75mm}\cdot\hspace{-0.75mm}\cdot\hspace{-0.75mm}\cdot\hspace{-0.75mm}\big(b_{1}\Vert\boldsymbol{w}_{n:1}\Vert_{2}^{1/n}+(1-b_{1})\epsilon\,\big)\\
 & =\;{\textstyle \sum_{(b_{1},..,b_{n})\in\{0,1\}^{n}}}\big(b_{n}\Vert\boldsymbol{w}_{n:1}\Vert_{2}^{1/n}+(1-b_{n})\epsilon\big)\hspace{-0.75mm}\cdot\hspace{-0.75mm}\cdot\hspace{-0.75mm}\cdot\hspace{-0.75mm}\big(b_{1}\Vert\boldsymbol{w}_{n:1}\Vert_{2}^{1/n}+(1-b_{1})\epsilon\,\big)-\Vert\boldsymbol{w}_{n:1}\Vert_{2}\\
 & =\;\big(\Vert\boldsymbol{w}_{n:1}\Vert_{2}^{1/n}+\epsilon\big)^{n}-\Vert\boldsymbol{w}_{n:1}\Vert_{2}\text{\,.}
\end{aligned}
\]

Moving to property \emph{(ii)}, we have that:
\[
\begin{aligned}\Vert\nabla\phi(\boldsymbol{w}_{n:1,\epsilon})\Vert_{2} & =\Vert\boldsymbol{w}_{n:1,\epsilon}-\boldsymbol{\lambda}_{yx}\Vert_{2}\\
	& =\Vert\boldsymbol{w}_{n:1}-\boldsymbol{\lambda}_{yx}+\boldsymbol{w}_{n:1,\epsilon}-\boldsymbol{w}_{n:1}\Vert_{2}\\
	& \leq\Vert\boldsymbol{w}_{n:1}-\boldsymbol{\lambda}_{yx}\Vert_{2}+\Vert\boldsymbol{w}_{n:1,\epsilon}-\boldsymbol{w}_{n:1}\Vert_{2}\\
	& =\Vert\nabla\phi(\boldsymbol{w}_{n:1})\Vert_{2}+\Vert\boldsymbol{w}_{n:1,\epsilon}-\boldsymbol{w}_{n:1}\Vert_{2}
	\text{.}
\end{aligned}
\]
applying property \emph{(i)}, together with Lemma~\ref{app:proof:gf_analysis:geometry:polynom_bound}, we obtain property \emph{(ii)}:
\[
\Vert\nabla\phi(\boldsymbol{w}_{n:1,\epsilon})\Vert_{2}\leq\Vert\nabla\phi(\boldsymbol{w}_{n:1})\Vert_{2}+\big(\Vert\boldsymbol{w}_{n:1}\Vert_{2}^{1/n}+\epsilon\big)^{n}-\Vert\boldsymbol{w}_{n:1}\Vert_{2}\leq\Vert\nabla\phi(\boldsymbol{w}_{n:1})\Vert_{2}+2n\epsilon\text{.}
\]

Regarding property \emph{(iii)}, for any $\J \subseteq [ n ]$ we have that:
\[
\begin{aligned}\prod\nolimits _{j\in\J}\Vert W_{j,\epsilon}\Vert_{F} & =\prod\nolimits _{j\in\J}\Vert W_{j}+W_{j,\epsilon}-W_{j}\Vert_{F}\\
	& \leq\prod\nolimits _{j\in\J}\big(\Vert W_{j}\Vert_{F}+\Vert W_{j,\epsilon}-W_{j}\Vert_{F}\big)\\
	& \leq\prod\nolimits _{j\in\J}\big(\Vert W_{j}\Vert_{F}+\epsilon\big)\\
	& =\prod\nolimits _{j\in\J}\big(\Vert\boldsymbol{w}_{n:1}\Vert_{2}^{1/n}+\epsilon\big)\\[-0.5mm]
	& =\big(\Vert\boldsymbol{w}_{n:1}\Vert_{2}^{1/n}+\epsilon\big)^{|\J|}\\[-1mm]
	& =\Big(\big(\Vert\boldsymbol{w}_{n:1}\Vert_{2}^{1/n}+\epsilon\big)^{n}\Big)^{\frac{|\J|}{n}}\text{,}
\end{aligned}
\]
where the third transition follows from the assumption  $\Vert\boldsymbol{\theta}_{\epsilon}-\boldsymbol{\theta}\Vert_{2}\leq\epsilon$, and the fourth from Equation~\eqref{eq:end_to_end_n_power_of_each_matrix}.
Applying Lemma~\ref{app:proof:gf_analysis:geometry:polynom_bound} concludes the proof of property \emph{(iii)}, and the entire lemma.
\end{proof}
The following lemma analyzes the Hessian and gradient of the training loss~$f ( \cdot )$, bounding their spectral and Euclidean norms respectively.
\begin{lemma}
\label{lemma:gf_analysis_geometry_hess_grad_bounds}
For any weight setting $\thetabf \in \R^d$ with corresponding weight matrices $W_1 \in \R^{d_1 , d_0} , W_2 \in \R^{d_2 , d_1} , ... \, , W_{n - 1} \in \R^{d_{n - 1} , d_{n - 2}} , W_n \in \R^{1 , d_{n - 1}}$, the following hold:\textsuperscript{\normalfont{\ref{note:empty_prod}}}
\[
\begin{aligned}
(i)\quad\: & \| \nabla^{2}f(\boldsymbol{\theta}) \|_s \leq n \hspace{-1.5mm} \max_{\substack{\mathcal{J}\subseteq [ n ]\\[0.25mm] |\mathcal{J}|=n-1}}\,{ \prod_{j\in\mathcal{J}}}\|W_{j}\|_{F}^{2}+2n\,\|\nabla\phi(\boldsymbol{w}_{n:1})\|_{2} \hspace{-1.5mm} \max_{\substack{\mathcal{J}\subseteq [ n ] \\[0.25mm] |\mathcal{J}|=n-2}}\,{ \prod_{j\in\mathcal{J}}}\|W_{j}\|_F \text{\,; and}\\[0.8mm]
(ii)\quad & \Vert \nabla f(\boldsymbol{\theta})\Vert _{2}\leq\sqrt{n} \, \Vert\nabla\phi(\boldsymbol{w}_{n:1}) \Vert_{2}\hspace{-1.5mm} \max_{\substack{\mathcal{J}\subseteq [ n ] \\[0.25mm] |\mathcal{J}|=n-1}} \, \prod_{j\in\mathcal{J}} \|W_{j}\|_F \text{\,.}
\end{aligned}
\]
\end{lemma}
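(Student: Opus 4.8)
The plan is to reduce everything to the exact Hessian formula of Lemma~\ref{lemma:lnn_hess} (together with the first-order, i.e.\ gradient, analogue read off from the same Taylor expansion), specialized to the present setting in which $\phi(\boldsymbol{w})=\tfrac12\Vert\boldsymbol{w}-\boldsymbol{\lambda}_{yx}\Vert_2^2+\min_{\boldsymbol{q}}f(\boldsymbol{q})$. In particular, $\nabla\phi(\boldsymbol{w})=\boldsymbol{w}-\boldsymbol{\lambda}_{yx}$ is a single row vector (so $\Vert\cdot\Vert_F=\Vert\cdot\Vert_s$ on it), and $\nabla^2\phi(\boldsymbol{w})[\Delta\boldsymbol{w}]=\Vert\Delta\boldsymbol{w}\Vert_F^2$ (so its operator norm is $1$, which is what will produce the clean coefficient $n$ rather than $n\,\Vert\nabla^2\phi\Vert_s$). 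Throughout I would use the submultiplicative bounds $\Vert W_{j':j}\Vert_s\le\prod_{k=j}^{j'}\Vert W_k\Vert_s$ and $\Vert ABC\Vert_F\le\Vert A\Vert_s\Vert B\Vert_F\Vert C\Vert_s$, together with $\Vert\cdot\Vert_s\le\Vert\cdot\Vert_F$, so that all products of weight-matrix norms can be replaced by $\max_{\mathcal{J}\subseteq[n],|\mathcal{J}|=k}\prod_{j\in\mathcal{J}}\Vert W_j\Vert_F$ for the appropriate $k$.

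For part~\emph{(ii)}: differentiating $f(\boldsymbol{\theta})=\phi(W_{n:1})$ by the chain rule (equivalently, extracting the linear term $\langle\nabla\phi(W_{n:1}),\sum_j W_{n:j+1}(\Delta W_j)W_{j-1:1}\rangle$ from the expansion in the proof of Lemma~\ref{lemma:lnn_hess}) gives $\nabla_{W_j}f(\boldsymbol{\theta})=W_{n:j+1}^\top\nabla\phi(W_{n:1})W_{j-1:1}^\top$. Hence $\Vert\nabla_{W_j}f(\boldsymbol{\theta})\Vert_F\le\Vert\nabla\phi(W_{n:1})\Vert_2\prod_{k\neq j}\Vert W_k\Vert_s\le\Vert\nabla\phi(W_{n:1})\Vert_2\max_{|\mathcal{J}|=n-1}\prod_{j\in\mathcal{J}}\Vert W_j\Vert_F$, and summing the squares over $j\in[n]$ and taking a square root yields the claimed bound $\Vert\nabla f(\boldsymbol{\theta})\Vert_2\le\sqrt{n}\,\Vert\nabla\phi(W_{n:1})\Vert_2\max_{|\mathcal{J}|=n-1}\prod_{j\in\mathcal{J}}\Vert W_j\Vert_F$.

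For part~\emph{(i)}: since $\nabla^2 f(\boldsymbol{\theta})$ is symmetric, it suffices to bound $|\Delta\boldsymbol{\theta}^\top\nabla^2 f(\boldsymbol{\theta})\Delta\boldsymbol{\theta}|$ uniformly over $\Delta\boldsymbol{\theta}$. Plugging the matrix tuple corresponding to $\Delta\boldsymbol{\theta}$ into Lemma~\ref{lemma:lnn_hess}, the first summand equals $\big\Vert\sum_{j=1}^n W_{n:j+1}(\Delta W_j)W_{j-1:1}\big\Vert_F^2$, which is nonnegative and, by the triangle inequality, submultiplicativity, and Cauchy--Schwarz, is at most $n\big(\max_{|\mathcal{J}|=n-1}\prod_{j\in\mathcal{J}}\Vert W_j\Vert_F\big)^2\Vert\Delta\boldsymbol{\theta}\Vert_2^2$; the second summand equals $2\langle\nabla\phi(W_{n:1}),\sum_{1\le j<j'\le n}W_{n:j'+1}(\Delta W_{j'})W_{j'-1:j+1}(\Delta W_j)W_{j-1:1}\rangle$, whose absolute value is at most $2\Vert\nabla\phi(W_{n:1})\Vert_2\sum_{1\le j<j'\le n}\Vert\Delta W_{j'}\Vert_F\Vert\Delta W_j\Vert_F\prod_{k\notin\{j,j'\}}\Vert W_k\Vert_F\le 2n\,\Vert\nabla\phi(W_{n:1})\Vert_2\big(\max_{|\mathcal{J}|=n-2}\prod_{j\in\mathcal{J}}\Vert W_j\Vert_F\big)\Vert\Delta\boldsymbol{\theta}\Vert_2^2$, using $\sum_{1\le j<j'\le n}\Vert\Delta W_{j'}\Vert_F\Vert\Delta W_j\Vert_F\le\tfrac n2\sum_j\Vert\Delta W_j\Vert_F^2$ exactly as in the proof of Lemma~\ref{lemma:lnn_hess_lb}. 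Adding these gives an upper bound $B\Vert\Delta\boldsymbol{\theta}\Vert_2^2$ with $B=n\max_{|\mathcal{J}|=n-1}\prod_{j\in\mathcal{J}}\Vert W_j\Vert_F^2+2n\,\Vert\nabla\phi(W_{n:1})\Vert_2\max_{|\mathcal{J}|=n-2}\prod_{j\in\mathcal{J}}\Vert W_j\Vert_F$; for the matching lower bound, the first summand is $\ge0$ and the second is $\ge-B\Vert\Delta\boldsymbol{\theta}\Vert_2^2$, so $|\Delta\boldsymbol{\theta}^\top\nabla^2 f(\boldsymbol{\theta})\Delta\boldsymbol{\theta}|\le B\Vert\Delta\boldsymbol{\theta}\Vert_2^2$ for all $\Delta\boldsymbol{\theta}$, i.e.\ $\Vert\nabla^2 f(\boldsymbol{\theta})\Vert_s\le B$.

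I do not anticipate a genuine obstacle here: the argument is entirely a consequence of Lemma~\ref{lemma:lnn_hess} plus routine matrix-norm inequalities. The only points requiring care are the bookkeeping with the mixed Frobenius/spectral submultiplicative bounds and the passage to the combinatorial maxima over $(n-1)$- and $(n-2)$-element subsets of $[n]$, and the observation that controlling $\Vert\nabla^2 f(\boldsymbol{\theta})\Vert_s$ requires the quadratic-form estimate in \emph{both} directions (the lower one being free since the first summand of Lemma~\ref{lemma:lnn_hess} is a squared Frobenius norm).
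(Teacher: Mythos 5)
Your proposal is correct and follows essentially the same route as the paper: both parts reduce to the exact Hessian formula of Lemma~\ref{lemma:lnn_hess} (resp.\ the explicit layer-wise gradient $W_{n:j+1}^\top\nabla\phi(W_{n:1})W_{j-1:1}^\top$), exploit that $\nabla^2\phi$ is the identity for the square loss, and apply the same triangle/Cauchy--Schwarz/submultiplicativity bounds leading to the maxima over $(n-1)$- and $(n-2)$-element subsets. Your explicit remark that the spectral-norm bound in part~\emph{(i)} needs the quadratic-form estimate in both directions is a point the paper's write-up glosses over, but the underlying argument is identical.
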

\begin{proof}
Let $\Delta W_1 \in \R^{d_1 , d_0} , \Delta W_2 \in \R^{d_2 , d_1} , ... \, , \Delta W_{n - 1} \in \R^{d_{n - 1} , d_{n - 2}} , \Delta W_n \in \R^{1 , d_{n - 1}}$.

We begin with property \emph{(i)}.
By Lemma~\ref{lemma:lnn_hess} we have that:
\begin{equation}
	\begin{aligned} & \nabla^{2}f(\boldsymbol{\theta})\left[\Delta W_{1},...,\Delta W_{n}\right]=\nabla^{2}\phi\left(W_{n:1}\right)\Bigl[{\textstyle \sum\nolimits _{j=1}^{n}}W_{n:j+1}(\Delta W_{j})W_{j-1:1}\Bigr]\\
		& \quad\quad+2\text{Tr}\Bigl(\nabla\phi\left(W_{n:1}\right)^{\top}{\textstyle \sum\nolimits _{1\leq j<j'\leq n}}W_{n:j'+1}(\Delta W_{j'})W_{j'-1:j+1}(\Delta W_{j})W_{j-1:1}\Bigr)\text{,}
	\end{aligned}
	\label{eq:gf_analysis_geometry:hessian_expression}
\end{equation}
	where $W_{j' : j}$, for any $j , j' \in \{ 1 , 2 , \ldots , n \}$, is defined as $W_{j'} W_{j' - 1} \cdots W_j$ if $j \leq j'$, and as an identity matrix (with size to be inferred by context) otherwise.
	We will upper bound each of the two summands on the right-hand side of Equation~(\ref{eq:gf_analysis_geometry:hessian_expression}).
	We bound the first summand as follows:
	\begin{equation}
		\begin{aligned} & \hspace{-0.5mm}\nabla^{2}\phi(W_{n:1})\Bigl[{\textstyle \sum\nolimits _{j=1}^{n}}W_{n:j+1}(\Delta W_{j})W_{j-1:1}\Bigr]\\[-1mm]
		 & =\Big\Vert{\textstyle \sum\nolimits _{j=1}^{n}}W_{n:j+1}(\Delta W_{j})W_{j-1:1}\Big\Vert_{F}^{2}\\
		 & \leq\Big({\textstyle \sum\nolimits _{j=1}^{n}}\big\Vert W_{n:j+1}(\Delta W_{j})W_{j-1:1}\big\Vert_{F}\Big)^{2}\\[1mm]
		 & \leq n{\textstyle \sum\nolimits _{j=1}^{n}}\big\Vert W_{n:j+1}(\Delta W_{j})W_{j-1:1}\big\Vert_{F}^{2}\\[1.5mm]
		 & \leq n{\textstyle \sum\nolimits _{j=1}^{n}}\bigl\Vert W_{n}\bigr\Vert_{F}^{2}\cdots\bigl\Vert W_{j+1}\bigr\Vert_{F}^{2}\big\Vert\Delta W_{j}\big\Vert_{F}^{2}\bigl\Vert W_{j-1}\bigr\Vert_{F}^{2}\cdots\bigl\Vert W_{1}\bigr\Vert_{F}^{2}\\[1.5mm]
		 & \leq n\max_{\substack{\mathcal{J}\subseteq[n]\\[0.25mm]
		|\mathcal{J}|=n-1
		}
		}\,{\textstyle \prod_{j\in\mathcal{J}}}\|W_{j}\|_{F}^{2}\:\cdot\:{\textstyle \sum\nolimits _{j=1}^{n}}\big\Vert\Delta W_{j}\big\Vert_{F}^{2}\text{\,,}
		\end{aligned}
		\label{eq:gf_analysis_geometry:left_summand_bound}
	\end{equation}
	where the first transition follows from the fact that the Hessian of $\phi(\cdot)$ is an identity  (since $\phi(W)=\frac{1}{2}\Vert W-\Lambda_{yx}\Vert_{F}^{2}+c$), 
	the second trasition follows from the triangle inequality,
	the third trasition follows from the one-norm of a vector in $\mathbb{R}^{n}$ being no greater than $\sqrt{n}$ times its Euclidean norm,
	and the fourth transition follows from sub-multiplicativity of Frobenius norm. 
	Moving on to bounding the second summand on the right-hand side of Equation~(\ref{eq:gf_analysis_geometry:hessian_expression}):
	\[
	\begin{aligned} & 2\text{Tr}\Bigl(\nabla\phi(W_{n:1})^{\top}{\textstyle \sum\nolimits _{1\leq j<j'\leq n}}W_{n:j'+1}(\Delta W_{j'})W_{j'-1:j+1}(\Delta W_{j})W_{j-1:1}\Bigr)\\[-1mm]
	 & \leq2\left\Vert \nabla\phi(W_{n:1})\right\Vert _{F}\left\Vert {\textstyle \sum\nolimits _{1\leq j<j'\leq n}}W_{n:j'+1}(\Delta W_{j'})W_{j'-1:j+1}(\Delta W_{j})W_{j-1:1}\right\Vert _{F}\\
	 & \leq2\left\Vert \nabla\phi(W_{n:1})\right\Vert _{F}{\textstyle \sum\nolimits _{1\leq j<j'\leq n}}\left\Vert W_{n:j'+1}(\Delta W_{j'})W_{j'-1:j+1}(\Delta W_{j})W_{j-1:1}\right\Vert _{F}\\[0.5mm]
	 & \leq2\left\Vert \nabla\phi(W_{n:1})\right\Vert _{F}{\textstyle \sum\nolimits _{1\leq j<j'\leq n}}\bigl\Vert\Delta W_{j'}\bigr\Vert_{F}\bigl\Vert\Delta W_{j}\bigr\Vert_{F}\cdot{\textstyle \prod_{j''\in[n]/\{j,j'\}}}\bigl\Vert W_{j''}\bigr\Vert_{F}\\[0.5mm]
	 & \leq2\left\Vert \nabla\phi(W_{n:1})\right\Vert _{F}\:\max_{\substack{\mathcal{J}\subseteq[n]\\[0.15mm]
	|\mathcal{J}|=n-2
	}
	}\,{\textstyle \prod_{j\in\mathcal{J}}}\bigl\Vert W_{j}\bigr\Vert_{F}\cdot{\textstyle \sum\nolimits _{1\leq j<j'\leq n}}\bigl\Vert\Delta W_{j'}\bigr\Vert_{F}\bigl\Vert\Delta W_{j}\bigr\Vert_{F}\text{\,,}
	\end{aligned}
	\]
	where the first transition follows from Cauchy-Schwartz inequality, the second and third from sub-additivity and sub-multiplicativity of Frobenius norm respectively.
	It holds that:
	\[
	{\textstyle \sum\nolimits _{1\leq j<j'\leq n}}\left\Vert \Delta W_{j'}\right\Vert _{F}\left\Vert \Delta W_{j}\right\Vert _{F}\leq\Bigl({\textstyle \sum\nolimits _{j=1}^{n}}\left\Vert \Delta W_{j}\right\Vert _{F}\Bigr)^{2}\leq n{\textstyle \sum\nolimits _{j=1}^{n}}\left\Vert \Delta W_{j}\right\Vert _{F}^{2}\text{\,,}
	\]
	where the last transition follows from the fact that the one-norm of a vector in $\mathbb{R}^{n}$ is never greater than $\sqrt{n}$ times its Euclidean norm. 
	This leads us to:
	\begin{equation}
	\begin{aligned} & 2\text{Tr}\Bigl(\nabla\phi(W_{n:1})^{\top}{\textstyle \sum\nolimits _{1\leq j<j'\leq n}}W_{n:j'+1}(\Delta W_{j'})W_{j'-1:j+1}(\Delta W_{j})W_{j-1:1}\Bigr)\\
	 & \:\leq2n\left\Vert \nabla\phi(W_{n:1})\right\Vert _{F}\hspace{-0.5mm}\max_{\substack{\mathcal{J}\subseteq[n]\\[0.15mm]
	|\mathcal{J}|=n-2
	}
	}\,{\textstyle \prod_{j\in\mathcal{J}}}\bigl\Vert W_{j}\bigr\Vert_{F}\cdot{\textstyle \sum\nolimits _{j=1}^{n}}\left\Vert \Delta W_{j}\right\Vert _{F}^{2}\text{\,.}
	\end{aligned}
		\label{eq:gf_analysis_geometry:right_summand_bound}
	\end{equation}
	Plugging  Equations~(\ref{eq:gf_analysis_geometry:left_summand_bound}) and (\ref{eq:gf_analysis_geometry:right_summand_bound}) into Equation (\ref{eq:gf_analysis_geometry:hessian_expression}), we obtain:
	\[
	\begin{aligned} 
		& \nabla^{2}f(\boldsymbol{\theta})\left[\Delta W_{1},..,\Delta W_{n}\right]\leq\\
		& \hspace{1mm} \bigg(n\hspace{-1mm}\max_{\substack{\mathcal{J}\subseteq[n]\\[0.25mm]
				|\mathcal{J}|=n-1
			}
		}\,{\textstyle \prod_{j\in\mathcal{J}}}\|W_{j}\|_{F}^{2}+2n\Vert\nabla\phi(W_{n:1})\Vert_{F}\hspace{-1mm}\max_{\substack{\mathcal{J}\subseteq[n]\\[0.25mm]
				|\mathcal{J}|=n-2
			}
		}\,{\textstyle \prod_{j\in\mathcal{J}}}\|W_{j}\|_{F}\bigg)\:{\textstyle \sum\nolimits _{j=1}^{n}}\left\Vert \Delta W_{j}\right\Vert _{F}^{2}\text{\,.}
	\end{aligned}
	\]
	This proves property \emph{(i)}.

Moving on to property \emph{(ii)}, we overload notation by allowing the function $f(\cdot)$ to intake the tuple $(W_{1},W_{2},...,W_{n})$ (in which case $W_1 , ... , W_n$ are arranged as $\thetabf$, and the value $f(\thetabf)$ is returned).
In Appendix A of \cite{arora2018optimization} it is shown that:
	\[
	\begin{aligned} & \nabla f(W_{1},...,W_{n})
		=\\
		& \hspace{1mm} \Big((W_{n:2})^{\top}\nabla\phi(W_{n:1}),..,(W_{n:j+1})^{\top}\nabla\phi(W_{n:1})(W_{j-1:1})^{\top},..,\nabla\phi(W_{n:1})(W_{n-1:1})^{\top}\Big) \text{\,.}
	\end{aligned}
	\]
	It follows that:
	\[
	\begin{aligned}\left\Vert \nabla f(\boldsymbol{\theta})\right\Vert _{2}^{2}= & \;\left\Vert \nabla f(W_{1},...,W_{n})\right\Vert _{Frobenius}^{2}\\
	= & \;{\textstyle \sum\nolimits _{j=1}^{n}}\big\Vert(W_{n:j+1})^{\top}\nabla\phi(W_{n:1})(W_{j-1:1})^{\top}\big\Vert_{F}^{2}\\
	\leq & \;{\textstyle \sum\nolimits _{j=1}^{n}}\big\Vert\nabla\phi(W_{n:1})\big\Vert_{F}^{2}{\textstyle {\textstyle \prod_{i\in[n]/\{j\}}}}\|W_{j}\|_{F}^{2}\\
	\leq & \;n\big\Vert\nabla\phi(W_{n:1})\big\Vert_{F}^{2}\max_{\substack{\mathcal{J}\subseteq[n]\\[0.25mm]
	|\mathcal{J}|=n-1
	}
	}{\textstyle \prod_{j\in\mathcal{J}}}\|W_{j}\|_{F}^{2}\text{\,,}
	\end{aligned}
	\]
	where the second transition follows from sub-multiplicativity of Frobenius norm.
	Taking square root of both sides of the inequality concludes the proof of property \emph{(ii)}, and the entire lemma.
\end{proof}
Combining Lemmas \ref{lemma:gf_analysis_geometry_eps_away} and~\ref{lemma:gf_analysis_geometry_hess_grad_bounds}, Lemma~\ref{lemma:gf_analysis_geometry_smooth_lipschitz} below establishes the smoothness and Lipschitz constants $\beta_{t , \epsilon} = 16 n$ and $\gamma_{t , \epsilon} = 6 \sqrt{n}$ respectively.
\begin{lemma}
\label{lemma:gf_analysis_geometry_smooth_lipschitz}
It holds that $\sup_{\boldsymbol{q}\in\mathcal{D}_{t,\epsilon}}\hspace{-0.5mm}\Vert\nabla^{2}f(\boldsymbol{q})\Vert_s \leq 16 n$ and $\sup_{\boldsymbol{q}\in\mathcal{D}_{t,\epsilon}}\hspace{-0.5mm}\Vert\nabla f(\boldsymbol{q})\Vert_2 \leq 6\sqrt{n}$.
\end{lemma}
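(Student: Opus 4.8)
The plan is to reduce the claim, pointwise over $\mathcal{D}_{t,\epsilon}$, to the per-weight-setting estimates of Lemma~\ref{lemma:gf_analysis_geometry_hess_grad_bounds}, and then to control the quantities appearing there (the norm of $\nabla\phi$ at the end-to-end matrix, and products of Frobenius norms of weight matrices) via Lemma~\ref{lemma:gf_analysis_geometry_eps_away}, using as input the a priori bound $\|\boldsymbol{w}_{n:1}(t')\|_2\le1$ on the trajectory's end-to-end matrix. Concretely, I would fix an arbitrary $\boldsymbol{q}\in\mathcal{D}_{t,\epsilon}$ and, by definition of $\mathcal{D}_{t,\epsilon}=\bigcup_{t'\in[0,t]}\mathcal{B}_\epsilon(\boldsymbol{\theta}(t'))$, choose $t'\in[0,t]$ with $\|\boldsymbol{q}-\boldsymbol{\theta}(t')\|_2\le\epsilon$. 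Writing $W_{1,\epsilon},\dots,W_{n,\epsilon}$ for the weight matrices constituting $\boldsymbol{q}$ and $\boldsymbol{w}_{n:1,\epsilon}$ for its end-to-end matrix, Lemma~\ref{lemma:gf_analysis_geometry_hess_grad_bounds} expresses $\|\nabla^2 f(\boldsymbol{q})\|_s$ and $\|\nabla f(\boldsymbol{q})\|_2$ in terms of $\|\nabla\phi(\boldsymbol{w}_{n:1,\epsilon})\|_2$ and of $\prod_{j\in\mathcal{J}}\|W_{j,\epsilon}\|_F$ (or its square) over subsets $\mathcal{J}\subseteq[n]$ of cardinality $n-1$ and $n-2$.

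Next I would assemble the a priori estimates. Since $\boldsymbol{u}(\cdot)$ (the solution to Equation~\eqref{eq:gf_analysis_reparam_ivp}) is a reparameterization of $\boldsymbol{w}_{n:1}(\cdot)$ by Lemma~\ref{app:proof:gf_analysis:reparam:u(t)_reparam_proof}, and $\|\boldsymbol{u}(s)\|_2<1$ for all $s\ge0$ by property \emph{(iv)} of Lemma~\ref{app:proof:gf_analysis:optimization:u(t)_dynamics_decreasing_increasing}, it follows that $\|\boldsymbol{w}_{n:1}(t')\|_2\le1$; combined with the hypothesis $\epsilon\le\tfrac{1}{2n}$ this gives $\|\boldsymbol{w}_{n:1}(t')\|_2+2n\epsilon\le2$. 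Using the normalization $\|\boldsymbol{\lambda}_{yx}\|_2=1$ (recall $\phi(\boldsymbol{w})=\tfrac12\|\boldsymbol{w}-\boldsymbol{\lambda}_{yx}\|_2^2+\mathrm{const}$), we also get $\|\nabla\phi(\boldsymbol{w}_{n:1}(t'))\|_2=\|\boldsymbol{w}_{n:1}(t')-\boldsymbol{\lambda}_{yx}\|_2\le2$. Then property \emph{(ii)} of Lemma~\ref{lemma:gf_analysis_geometry_eps_away} yields $\|\nabla\phi(\boldsymbol{w}_{n:1,\epsilon})\|_2\le2+2n\epsilon\le3$, and property \emph{(iii)} (with the empty-product convention covering the case $n=2$, where the cardinality-$(n-2)$ set is empty) yields, for every $\mathcal{J}\subseteq[n]$, $\prod_{j\in\mathcal{J}}\|W_{j,\epsilon}\|_F\le(\|\boldsymbol{w}_{n:1}(t')\|_2+2n\epsilon)^{|\mathcal{J}|/n}\le2^{|\mathcal{J}|/n}$, hence $\prod_{j\in\mathcal{J}}\|W_{j,\epsilon}\|_F^2\le2^{2|\mathcal{J}|/n}$.

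Finally I would substitute these into Lemma~\ref{lemma:gf_analysis_geometry_hess_grad_bounds}. For the Hessian bound: the first summand is at most $n\cdot2^{2(n-1)/n}<4n$, and the second is at most $2n\cdot3\cdot2^{(n-2)/n}<12n$, so $\|\nabla^2 f(\boldsymbol{q})\|_s<16n$. For the gradient bound: $\|\nabla f(\boldsymbol{q})\|_2\le\sqrt{n}\cdot3\cdot2^{(n-1)/n}<6\sqrt{n}$. Since $\boldsymbol{q}\in\mathcal{D}_{t,\epsilon}$ was arbitrary, taking suprema completes the proof. I do not anticipate a genuine obstacle here: the argument is a bookkeeping exercise in chaining the earlier lemmas and tracking constants; the only two points that warrant care are correctly invoking the reparameterization together with property \emph{(iv)} of Lemma~\ref{app:proof:gf_analysis:optimization:u(t)_dynamics_decreasing_increasing} to obtain $\|\boldsymbol{w}_{n:1}(t')\|_2\le1$, and handling the $n=2$ degenerate case in which the cardinality-$(n-2)$ product over weight matrices is empty and thus equal to one.
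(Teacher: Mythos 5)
Your proposal is correct and follows essentially the same route as the paper's proof: both reduce to Lemma~\ref{lemma:gf_analysis_geometry_hess_grad_bounds}, use the reparameterization (Lemma~\ref{app:proof:gf_analysis:reparam:u(t)_reparam_proof} together with property \emph{(iv)} of Lemma~\ref{app:proof:gf_analysis:optimization:u(t)_dynamics_decreasing_increasing}) to get $\|\boldsymbol{w}_{n:1}(t')\|_2<1$, and then apply properties \emph{(ii)} and \emph{(iii)} of Lemma~\ref{lemma:gf_analysis_geometry_eps_away} with $\epsilon\le 1/(2n)$ to bound the gradient of $\phi$ by $3$ and the weight-matrix products by $2$ (your $2^{|\mathcal{J}|/n}$ is just a marginally sharper form of the paper's bound of $2$). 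The constant-tracking matches and the conclusion is the same.
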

\begin{proof}
Under the conditions and notations of Lemma~\ref{lemma:gf_analysis_geometry_eps_away}, for any $\J \subseteq [ n ]$:
\begin{equation}
\prod_{j\in\mathcal{J}}\|W_{j,\epsilon}\|_{F}\leq\big(\Vert\boldsymbol{w}_{n:1}(t')\Vert_{2}+2n\epsilon\big)^{\frac{|\J|}{n}}\text{.}
\end{equation}
By Lemma~\ref{app:proof:gf_analysis:reparam:u(t)_reparam_proof} we have that $\boldsymbol{w}_{n:1}\big(\xi(t')\big)=\boldsymbol{u}(t')$, where $\xi ( t' ) := \int_{0}^{t'} \Vert\boldsymbol{u}(t'')\Vert_{2}^{-(1-2/n)}\:dt''$.
$\xi ( \cdot )$ is unbounded since $\Vert\boldsymbol{u}( \cdot )\Vert_{2}<1$ by property \emph{(iv)} of Lemma~\ref{app:proof:gf_analysis:optimization:u(t)_dynamics_decreasing_increasing}.
This implies $\Vert\boldsymbol{w}_{n:1}(t')\Vert_{2} < 1$, which together with the fact that by definition $\epsilon \leq 1/2n$, means:
\begin{equation}
\prod_{j\in\mathcal{J}}\|W_{j,\epsilon}\|_{F}\leq\big(1+1\big)^{\frac{|\J|}{n}}\leq2\text{.}
\label{eq:gf_analysis_geometry:matrices_prod_course_bound}
\end{equation}
It holds that:
\begin{equation}
\begin{aligned} & \hspace{50mm}\Vert\nabla\phi(\boldsymbol{w}_{n:1,\epsilon})\Vert_{2}\leq\\
	& \hspace{-2mm}\Vert\nabla\phi(\boldsymbol{w}_{n:1}(t'))\Vert_{2}+2n\epsilon=\Vert\boldsymbol{w}_{n:1}(t')-\boldsymbol{\lambda}_{yx}\Vert_{2}+2n\epsilon\leq\Vert\boldsymbol{w}_{n:1}(t')\Vert_{2}+\Vert\boldsymbol{\lambda}_{yx}\Vert_{2}+2n\epsilon\leq3\text{,}
\end{aligned}
\label{eq:gf_analysis_geometry:grad_course_bound}
\end{equation}
where the first transition follows from Lemma~\ref{lemma:gf_analysis_geometry_eps_away}, and the last from $\Vert\boldsymbol{w}_{n:1}(t')\Vert_{2}<1$, $\Vert\boldsymbol{\lambda}_{yx}\Vert_{2}=1$ and $\epsilon \leq 1/2n$.
We conclude the proof by plugging Equations~(\ref{eq:gf_analysis_geometry:matrices_prod_course_bound}) and (\ref{eq:gf_analysis_geometry:grad_course_bound}) into the results of Lemma~\ref{lemma:gf_analysis_geometry_hess_grad_bounds}, while noticing that arbitrary $t'\geq0$ and $\boldsymbol{\theta}_{\epsilon}$ account for all $\boldsymbol{q}\in\mathcal{D}_{t,\epsilon}$.
\end{proof}
Lemma~\ref{lemma:gf_analysis_geometry_min_eig} below employs Lemma~\ref{lemma:lnn_hess_lb} from our analysis in Section~\ref{sec:roughly_convex}, along with Lemma~\ref{lemma:gf_analysis_geometry_eps_away} above, for deriving a lower bound on the minimal eigenvalue of the Hessian (of the training loss $f(\cdot)$) in the vicinity of a point along the gradient flow trajectory.
\begin{lemma}
\label{lemma:gf_analysis_geometry_min_eig}
For all $t' \geq 0$:
\[
\inf_{\substack{\q \in \R^d \\[0.25mm] \| \q - \thetabf ( t' ) \|_2 \leq \epsilon}} \hspace{-4mm} \lambda_{min} ( \nabla^2 f ( \q ) ) \, \geq \, - (n-1)\,\big(\|\nabla\phi(\boldsymbol{w}_{n:1}(t'))\|_{2}+2n\epsilon\big)\,\big(\Vert \boldsymbol{w}_{n:1}(t')\Vert_{2}+2n\epsilon\big)^{1-\frac{2}{n}}
\text{\,,}
\]
where $\lambda_{min} ( \nabla^2 f ( \q ) )$ stands for the minimal eigenvalue of~$\nabla^2 f ( \q )$.
\end{lemma}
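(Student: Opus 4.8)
The plan is to obtain the bound by chaining the general Hessian lower bound from Section~\ref{sec:roughly_convex} with the local geometric estimates for the gradient flow trajectory established in Lemma~\ref{lemma:gf_analysis_geometry_eps_away}. First I would fix $t' \geq 0$ and an arbitrary weight setting $\q \in \R^d$ with $\| \q - \thetabf ( t' ) \|_2 \leq \epsilon$, writing $W_{1,\epsilon}, W_{2,\epsilon}, \ldots, W_{n,\epsilon}$ for its weight matrices and $\boldsymbol{w}_{n:1,\epsilon}$ for the corresponding end-to-end matrix (in vectorized form, per the identification adopted in Subsubappendix~\ref{app:proof:gf_analysis:prelim}). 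Since in the present setting the output dimension is $d_n = 1$ and the training loss $f(\cdot)$ is everywhere twice continuously differentiable, Lemma~\ref{lemma:lnn_hess_lb} applies at $\q$; using $\sqrt{\min\{d_0,d_n\}} = 1$ and $\| W_{j,\epsilon} \|_s \leq \| W_{j,\epsilon} \|_F$, it yields
\[
\lambda_{min}\big(\nabla^2 f(\q)\big) \geq -(n-1)\, \| \nabla\phi(\boldsymbol{w}_{n:1,\epsilon}) \|_2 \max_{\substack{\J \subseteq [n] \\ |\J| = n-2}} \prod_{j \in \J} \| W_{j,\epsilon} \|_F \text{\,.}
\]

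Next I would substitute the two estimates supplied by Lemma~\ref{lemma:gf_analysis_geometry_eps_away}: property \emph{(ii)} gives $\| \nabla\phi(\boldsymbol{w}_{n:1,\epsilon}) \|_2 \leq \| \nabla\phi(\boldsymbol{w}_{n:1}(t')) \|_2 + 2n\epsilon$, and property \emph{(iii)}, applied with any $\J$ satisfying $|\J| = n-2$, gives $\prod_{j \in \J} \| W_{j,\epsilon} \|_F \leq \big(\| \boldsymbol{w}_{n:1}(t') \|_2 + 2n\epsilon\big)^{(n-2)/n} = \big(\| \boldsymbol{w}_{n:1}(t') \|_2 + 2n\epsilon\big)^{1 - 2/n}$, so the maximum over such $\J$ is bounded by the same quantity. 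Since these upper bounds only make the (negative) right-hand side smaller, the inequality is preserved; plugging them in and then taking the infimum over all admissible $\q$ produces exactly the claimed bound.

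The one point requiring a little care is the degenerate case $n = 2$, in which $|\J| = n-2 = 0$ forces $\J = \emptyset$, so property \emph{(iii)} (stated for non-empty index sets) does not apply directly; however the empty product equals one by the standing convention (footnote~\ref{note:empty_prod}) and $\big(\| \boldsymbol{w}_{n:1}(t') \|_2 + 2n\epsilon\big)^{1-2/n} = (\,\cdot\,)^0 = 1$, so the desired inequality holds trivially and needs no separate treatment. Beyond this bookkeeping the argument is a short and routine composition of the earlier lemmas, so I do not anticipate a genuine obstacle.
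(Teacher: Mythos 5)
Your proposal is correct and follows essentially the same route as the paper's proof: apply Lemma~\ref{lemma:lnn_hess_lb} at the perturbed point (with $\sqrt{\min\{d_0,d_n\}}=1$), bound spectral by Frobenius norms, and then substitute properties \emph{(ii)} and \emph{(iii)} of Lemma~\ref{lemma:gf_analysis_geometry_eps_away} before taking the infimum. Your extra remark on the $n=2$ case is a harmless bookkeeping point that the paper leaves implicit via the empty-product convention.
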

\begin{proof}
Let $\boldsymbol{\theta}_{\epsilon}\in \R^{d}$ be a weight setting satisfying $\Vert\boldsymbol{\theta}_{\epsilon}-\boldsymbol{\theta}(t')\Vert_{2}\leq\epsilon$.
Denote by $W_{1 , \epsilon} \in \R^{d_1 , d_0} , W_{2 , \epsilon} \in \R^{d_2 , d_1} , ... \, , W_{n - 1 , \epsilon} \in \R^{d_{n - 1} , d_{n - 2}} , W_{n , \epsilon} \in \R^{1 , d_{n - 1}}$ the weight matrices constituting~$\boldsymbol{\theta}_{\epsilon}$, and by $\boldsymbol{w}_{n : 1 , \epsilon} \in \R^{d_0}$ the corresponding end-to-end matrix $W_{n , \epsilon} W_{n - 1 , \epsilon} \cdots W_{1 , \epsilon}$ (in vectorized form).
Lemma~\ref{lemma:lnn_hess_lb} ensures:
\[
\lambda_{\min}(\nabla^{2}f(\boldsymbol{\theta}_{\epsilon}))\geq-(n-1)\,\|\nabla\phi(\boldsymbol{w}_{n:1,\epsilon})\|_{2}\max_{\substack{\J\subseteq[n]\\[0.25mm]
		|\J|=n-2
	}
}\,\prod_{j\in\J}\|W_{j,\epsilon}\|_{s}\text{\,.}
\]
We conclude the proof by bounding spectral norms with Frobenius norms, and applying properties \emph{(ii)} and \emph{(iii)} from Lemma~\ref{lemma:gf_analysis_geometry_eps_away}.
\end{proof}
Lemma~\ref{lemma:gf_analysis_geometry_min_eig} implies that, under the notations of Theorem~\ref{theorem:gf_gd}, we may choose the function~$m ( \cdot )$ to be as follows:
\be
m : [ 0 , t ] \to \R
~~ , ~~
m ( t' ) = (n-1)(\|\nabla\phi(\boldsymbol{w}_{n:1}(t'))\|_{2}+2n\epsilon)(\Vert \boldsymbol{w}_{n:1}(t')\Vert_{2}+2n\epsilon)^{1-\frac{2}{n}}
\text{\,.}
\label{eq:gf_analysis_m_def}
\ee
Lemma~\ref{lemma:gf_analysis_geometry_min_eig_int} below bounds the integral of this choice of~$m ( \cdot )$ in accordance with Equation~\eqref{eq:gf_analysis_m} (recall that the scalar~$\nu$ there, defined in the preceding text, coincides with the value taken by the function~$\nu ( \cdot )$ from Definition~\ref{app:proof:gf_analysis:reparam:nu(t)_definition_correlation} at zero).
For doing so, it makes use of the reparameterized trajectory~$\boldsymbol{u} ( \cdot )$, and splits the reparameterized integral into two parts corresponding to two time intervals: before exponentially fast convergence is guaranteed to have commenced (\ie~until time $t_a + t_c$~---~see Subsubappendix~\ref{app:proof:gf_analysis:convergence}), and afterwards.
\begin{lemma}
\label{lemma:gf_analysis_geometry_min_eig_int}
With the function~$m ( \cdot )$ defined by Equation~\eqref{eq:gf_analysis_m_def}, Equation~\eqref{eq:gf_analysis_m} is satisfied.
\end{lemma}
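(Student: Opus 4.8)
The plan is to pass to the reparameterized trajectory $\boldsymbol{u}(\cdot)$ (the solution to Equation~\eqref{eq:gf_analysis_reparam_ivp}) and change variables in $\int_0^t m(s)\,ds$. By Lemma~\ref{app:proof:gf_analysis:reparam:u(t)_reparam_proof} we have $\boldsymbol{w}_{n:1}(\xi(\tau)) = \boldsymbol{u}(\tau)$ with $\xi'(\tau) = \|\boldsymbol{u}(\tau)\|_2^{-(1-2/n)}$, and since $\|\boldsymbol{u}(\cdot)\|_2 < 1$ (Lemma~\ref{app:proof:gf_analysis:optimization:u(t)_dynamics_decreasing_increasing}) the map $\xi$ is a strictly increasing bijection of $[0,\infty)$ onto itself. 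Setting $T := \xi^{-1}(t)$ and substituting $s = \xi(\tau)$ — while using $\nabla\phi(\boldsymbol{w}) = \boldsymbol{w} - \boldsymbol{\lambda}_{yx}$ — turns $\int_0^t m(s)\,ds$ into $(n-1)\int_0^T\big(\|\boldsymbol{u}(\tau)-\boldsymbol{\lambda}_{yx}\|_2 + 2n\epsilon\big)\big(1 + 2n\epsilon/\|\boldsymbol{u}(\tau)\|_2\big)^{1-2/n}\,d\tau$, the point being that the Jacobian $\|\boldsymbol{u}(\tau)\|_2^{-(1-2/n)}$ exactly cancels the factor $\|\boldsymbol{w}_{n:1}(s)\|_2^{1-2/n}$ present in $m$ (Equation~\eqref{eq:gf_analysis_m_def}). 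Since $\xi' \geq 1$ we also record $T \leq t$.

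Next I would peel off the $\epsilon$-perturbation, keeping $\epsilon$ to the first power. Using $(1+x)^{1-2/n} \leq 1+x$ and $\|\boldsymbol{u}(\tau)\|_2 \geq u_{\min}$ one gets $\big(1 + 2n\epsilon/\|\boldsymbol{u}(\tau)\|_2\big)^{1-2/n} \leq 1 + 2n\epsilon/u_{\min}$ (rather than a factor scaling like $\epsilon^{1-2/n}$, which would be useless for $n\geq 3$). Expanding the product leaves, besides the main term $(n-1)\int_0^T\|\boldsymbol{u}(\tau)-\boldsymbol{\lambda}_{yx}\|_2\,d\tau$, three perturbation integrals; bounding them with $\|\boldsymbol{u}(\tau)-\boldsymbol{\lambda}_{yx}\|_2 \leq 2$, $T\leq t$, $2n\epsilon\leq 1$, and the estimate $u_{\min} \geq \|W_{n:1,s}\|_F\big(\max\{1,\tfrac32\tfrac{1-\nu}{1+\nu}\}\big)^{-n}$ — a consequence of Lemma~\ref{app:proof:gf_analysis:optimization:u_min_bound} together with $\max\{1,x^n\} = (\max\{1,x\})^n$ — their total is at most $\tfrac{8n(n-1)\epsilon t}{u_{\min}} \leq \tfrac{15 n^3 (\max\{1,\frac32\frac{1-\nu}{1+\nu}\})^n t\,\epsilon}{\|W_{n:1,s}\|_F}$, i.e.\ the first summand of Equation~\eqref{eq:gf_analysis_m}.

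It then remains to bound the main term $(n-1)\int_0^T\|\boldsymbol{u}(\tau)-\boldsymbol{\lambda}_{yx}\|_2\,d\tau \leq (n-1)\int_0^\infty\|\boldsymbol{u}(\tau)-\boldsymbol{\lambda}_{yx}\|_2\,d\tau$ by the second (constant in $t$, logarithmic) summand of Equation~\eqref{eq:gf_analysis_m}, and this is where the split at $t_a$ and $t_a+t_c$ (Definitions~\ref{app:proof:gf_analysis:optimization:ta_definition} and~\ref{app:proof:gf_analysis:optimization:tc_definition}) enters. On $[0,t_a]$ I use only $\|\boldsymbol{u}(\tau)-\boldsymbol{\lambda}_{yx}\|_2 \leq \|\boldsymbol{u}(\tau)\|_2 + 1 < 2$. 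On $[t_a, t_a+t_c]$ I would exploit the alignment $\nu(\tau) \geq \tfrac23$ of Lemma~\ref{app:proof:gf_analysis:optimization:nu(ta)_bound}: from $\|\boldsymbol{u}(\tau)-\boldsymbol{\lambda}_{yx}\|_2^2 = \|\boldsymbol{u}(\tau)\|_2^2 - 2\nu(\tau)\|\boldsymbol{u}(\tau)\|_2 + 1 \leq \|\boldsymbol{u}(\tau)\|_2^2 - \tfrac43\|\boldsymbol{u}(\tau)\|_2 + 1$, maximized over $\|\boldsymbol{u}(\tau)\|_2 \in [0,1]$, one gets $\|\boldsymbol{u}(\tau)-\boldsymbol{\lambda}_{yx}\|_2 \leq 1$ there. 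On $[t_a+t_c,\infty)$ I invoke the exponential convergence of Lemma~\ref{app:proof:gf_analysis:optimization:distance_analysis}, whose integral is $\tfrac{6}{5}\cdot\tfrac{3(n+1)}{2n} = \tfrac{9(n+1)}{5n}$. Summing gives $\int_0^\infty\|\boldsymbol{u}(\tau)-\boldsymbol{\lambda}_{yx}\|_2\,d\tau \leq 2t_a + t_c + \tfrac{9(n+1)}{5n}$; substituting $2t_a = \ln\max\{5\tfrac{1-\nu}{1+\nu},1\}$, $t_c = \tfrac{3}{2n}\ln\tfrac{2n}{3u_{\min}}$, the $u_{\min}$ lower bound, $\max\{1,ax\}\leq a\max\{1,x\}$ and $\tfrac{2n}{3}\leq n$, and multiplying by $(n-1)$, produces a bound whose coefficient on $\ln\max\{1,\tfrac{1-\nu}{1+\nu}\}$ is exactly $\tfrac{5(n-1)}{2}$, whose coefficients on $\ln n$ and $\ln(1/\|W_{n:1,s}\|_F)$ are $\tfrac{3(n-1)}{2n} < 2$, and whose remaining constant is $(n-1)\big(\ln 5 + \tfrac32\ln\tfrac32 + \tfrac{9(n+1)}{5n}\big) < 5(n-1)$ for all $n\geq 2$; this is precisely $\ln\big(n^2(e^2\max\{1,\tfrac{1-\nu}{1+\nu}\})^{5(n-1)/2}/\|W_{n:1,s}\|_F^2\big)$.

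The proof is conceptually short, so the main obstacle is the constant bookkeeping, which goes through only because of two refinements: (i) factoring $\|\boldsymbol{u}(\tau)\|_2$ out of $(\|\boldsymbol{u}(\tau)\|_2 + 2n\epsilon)^{1-2/n}$ so that $\epsilon$ survives linearly in the perturbation terms; and (ii) on $[t_a,t_a+t_c]$ using the sharper bound $\|\boldsymbol{u}(\tau)-\boldsymbol{\lambda}_{yx}\|_2 \leq 1$ from $\nu(\tau)\geq\tfrac23$ rather than the crude $\leq 2$ — otherwise the $t_c$-contribution would carry coefficient $3$ instead of $\tfrac32$ on $\ln\tfrac{2n}{3u_{\min}}$, forcing the coefficient of $\ln(1/\|W_{n:1,s}\|_F)$ above $2$ and breaking the estimate for large $n$.
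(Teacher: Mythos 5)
Your proposal is correct and follows essentially the same route as the paper's proof: the same change of variables via $\xi(\cdot)$, the same factoring of $(\Vert\boldsymbol{u}\Vert_2+2n\epsilon)^{1-2/n}\Vert\boldsymbol{u}\Vert_2^{2/n-1}$ into $(1+2n\epsilon/\Vert\boldsymbol{u}\Vert_2)^{1-2/n}\leq 1+2n\epsilon\,u_{\min}^{-1}$, the same use of $\xi^{-1}(t)\leq t$, and the same split of the remaining integral at $t_a+t_c$ with Lemma~\ref{app:proof:gf_analysis:optimization:distance_analysis} handling the tail. The only (immaterial) divergence is on $[0,t_a+t_c]$: the paper bounds $\Vert\boldsymbol{u}(\tau)-\boldsymbol{\lambda}_{yx}\Vert_2\leq\Vert\boldsymbol{u}(0)-\boldsymbol{\lambda}_{yx}\Vert_2\leq\tfrac{6}{5}$ uniformly via monotonicity of the loss, whereas you use $2$ on $[0,t_a]$ and the alignment $\nu(\tau)\geq\tfrac23$ to get $1$ on $[t_a,t_a+t_c]$ --- both give coefficients within the budget of Equation~\eqref{eq:gf_analysis_m}.
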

\begin{proof}
We apply a change of variable using the (continuously differentiable and strictly increasing) function $\xi(\cdot)$ defined in Lemma~\ref{app:proof:gf_analysis:reparam:u(t)_reparam_proof}:
\[
\int_{0}^{t}m(t')dt'=\int_{\xi^{-1}(0)}^{\xi^{-1}(t)}m\big(\xi(t')\big)\tfrac{d}{dt'}\xi(t')dt'\text{.}
\]
Notice that $\xi(0)=0$ and $\frac{d}{dt'}\xi(t')=\Vert\boldsymbol{u}(t')\Vert^{-(1-2/n)}$.
Plugging this and the definition of m(.) (Equation~\eqref{eq:gf_analysis_m_def}) into the above leads to:
\[
\int_{0}^{t}\hspace{-0.75mm}m(t')dt'
=\hspace{-0.5mm}
\int_{0}^{\xi^{-1}(t)}\hspace{-1mm}(n-1)\Big(\big\Vert\nabla\phi\big(\boldsymbol{w}_{n:1}\big(\hspace{-0.25mm}\xi(t')\hspace{-0.25mm}\big)\big)\big\Vert_{2}+2n\epsilon\Big)\Big(\big\Vert\boldsymbol{w}_{n:1}\big(\hspace{-0.25mm}\xi(t')\hspace{-0.25mm}\big)\big\Vert_{2}+2n\epsilon\Big)^{1-\frac{2}{n}}\big\Vert\boldsymbol{u}(t')\big\Vert^{\frac{2}{n}-1}dt'
\text{.}
\]
Since (by Lemma~\ref{app:proof:gf_analysis:reparam:u(t)_reparam_proof}) $\boldsymbol{w}_{n:1}(\xi(t)) =  \boldsymbol{u}(t)$, we have that:
\[
\int_{0}^{t}m(t')dt'=\int_{0}^{\xi^{-1}(t)}(n-1)\Big(\big\Vert\nabla\phi\big(\boldsymbol{u}(t')\big)\big\Vert_{2}+2n\epsilon\Big)\Big(\big\Vert\boldsymbol{u}(t')\big\Vert_{2}+2n\epsilon\Big)^{1-\frac{2}{n}}\big\Vert\boldsymbol{u}(t')\big\Vert^{\frac{2}{n}-1}dt'
\text{.}
\]
Recall the notation $u_{\min}:=\inf_{t\geq0}\Vert\boldsymbol{u}(t)\Vert_{2}$ and that, by Lemma~\ref{app:proof:gf_analysis:optimization:u_min_bound}, $u_{\min} > 0$.
It holds that:
\[
\begin{aligned}\int_{0}^{t}m(t')dt' & =\int_{0}^{\xi^{-1}(t)}(n-1)\Big(\big\Vert\nabla\phi\big(\boldsymbol{u}(t')\big)\big\Vert_{2}+2n\epsilon\Big)\Big(1+2n\epsilon\Vert\boldsymbol{u}(t')\Vert^{-1}\Big)^{1-\frac{2}{n}}dt'\\
	& \leq\int_{0}^{\xi^{-1}(t)}(n-1)\Big(\big\Vert\nabla\phi\big(\boldsymbol{u}(t')\big)\big\Vert_{2}+2n\epsilon\Big)\Big(1+2n\epsilon\Vert\boldsymbol{u}(t')\Vert^{-1}\Big)dt'\\
	& \leq\int_{0}^{\xi^{-1}(t)}(n-1)\Big(\big\Vert\nabla\phi\big(\boldsymbol{u}(t')\big)\big\Vert_{2}+2n\epsilon\Big)\Big(1+2n\epsilon u_{\min}^{-1}\Big)dt'\\[1.75mm]
	& =(n-1)\Big(1+2n\epsilon u_{\min}^{-1}\Big)\Big({\textstyle \int_{0}^{\xi^{-1}(t)}}\big\Vert\nabla\phi\big(\boldsymbol{u}(t')\big)\big\Vert_{2}dt'+2n\epsilon\hspace{-0mm}\xi^{-1}(t)\Big)
	\text{.}
\end{aligned}
\]
Per Lemma~\ref{app:proof:gf_analysis:optimization:u(t)_dynamics_decreasing_increasing} we know that $\Vert\boldsymbol{u}(t')\Vert_{2}<1$ for all $t'\geq0$.
Thus, by the definition of $\xi(\cdot)$, for all $t'\geq0$ it holds that $\xi(t')\geq t'$, which (since $\xi(\cdot)$ is strictly increasing) implies $\xi^{-1}(t)\leq t$.
This leads to:
\begin{equation*}
\begin{aligned} & \int_{0}^{t}m(t')dt'\\[-1mm]
	& \leq(n-1)\Big(1+2n\epsilon u_{\min}^{-1}\Big)\Big({\textstyle \int_{0}^{t}}\big\Vert\nabla\phi\big(\boldsymbol{u}(t')\big)\big\Vert_{2}dt'+2n\epsilon t\Big)\\
	& =(n-1){\textstyle \int_{0}^{t}}\big\Vert\nabla\phi\big(\boldsymbol{u}(t')\big)\big\Vert_{2}dt'+(n-1)2n\epsilon t+(n-1)2n\epsilon u_{\min}^{-1}\Big({\textstyle \int_{0}^{t}}\big\Vert\nabla\phi\big(\boldsymbol{u}(t')\big)\big\Vert_{2}dt'+2n\epsilon t\Big)\\[0.75mm]
	& \leq(n-1){\textstyle \int_{0}^{t}}\big\Vert\nabla\phi\big(\boldsymbol{u}(t')\big)\big\Vert_{2}dt'+2n^{2}\epsilon t+4n^{3}\epsilon^{2}u_{\min}^{-1}t+2n^{2}\epsilon u_{\min}^{-1}{\textstyle \int_{0}^{t}}\big\Vert\nabla\phi\big(\boldsymbol{u}(t')\big)\big\Vert_{2}dt'\text{.}
\end{aligned}
\end{equation*}
It holds that $\Vert\nabla\phi\big(\boldsymbol{u}(t')\big)\Vert_{2}=\Vert\boldsymbol{u}(t')-\boldsymbol{\lambda}_{yx}\Vert_{2}\leq\Vert\boldsymbol{u}(t')\Vert_{2}+\Vert\boldsymbol{\lambda}_{yx}\Vert_{2}\leq2$ for all $t'\geq0$ (recall that $\Vert\boldsymbol{\lambda}_{yx}\Vert_{2}=1$ by assumption).
Thus:
\begin{equation}
\begin{aligned}\int_{0}^{t}m(t')dt' & \leq(n-1){\textstyle \int_{0}^{t}}\big\Vert\nabla\phi\big(\boldsymbol{u}(t')\big)\big\Vert_{2}dt'+2n^{2}\epsilon t+4n^{3}\epsilon^{2}u_{\min}^{-1}t+2n^{2}\epsilon u_{\min}^{-1}\cdot2t\\[-1.5mm]
	& \leq(n-1){\textstyle \int_{0}^{t}}\big\Vert\nabla\phi\big(\boldsymbol{u}(t')\big)\big\Vert_{2}dt'+3\cdot\max\Big\{2n^{2}\epsilon t,4n^{3}\epsilon^{2}u_{\min}^{-1}t,4n^{2}\epsilon u_{\min}^{-1}t\Big\}\\
	& \leq(n-1){\textstyle \int_{0}^{t}}\big\Vert\nabla\phi\big(\boldsymbol{u}(t')\big)\big\Vert_{2}dt'+3\cdot4n^{3}\epsilon u_{\min}^{-1}t\text{.}
\end{aligned}
\label{eq:gf_analysis_geometry:integral_bound}
\end{equation}
We may bound the latter integral as follows:
\[
\begin{aligned}{\textstyle \int_{0}^{t}}\big\Vert\nabla\phi\big(\boldsymbol{u}(t')\big)\big\Vert_{2}dt' & \leq{\textstyle \int_{0}^{\infty}}\big\Vert\nabla\phi\big(\boldsymbol{u}(t')\big)\big\Vert_{2}dt'\\
	& ={\textstyle \int_{0}^{t_{a}+t_{c}}}\big\Vert\nabla\phi\big(\boldsymbol{u}(t')\big)\big\Vert_{2}dt'+{\textstyle \int_{t_{a}+t_{c}}^{\infty}}\big\Vert\nabla\phi\big(\boldsymbol{u}(t')\big)\big\Vert_{2}dt'\\
	& ={\textstyle \int_{0}^{t_{a}+t_{c}}}\big\Vert\boldsymbol{u}(t')-\boldsymbol{\lambda}_{yx}\big\Vert_{2}dt'+{\textstyle \int_{0}^{\infty}}\big\Vert\boldsymbol{u}(t_{a}+t_{c}+t')-\boldsymbol{\lambda}_{yx}\big\Vert_{2}dt'\text{,}
\end{aligned}
\]
where $t_a$ and~$t_c$ are given by Definitions \ref{app:proof:gf_analysis:optimization:ta_definition} and~\ref{app:proof:gf_analysis:optimization:tc_definition} respectively.
Notice that $\big\Vert\boldsymbol{u}(t')-\boldsymbol{\lambda}_{yx}\big\Vert_{2}$ is monotonically non-increasing (since $\boldsymbol{u}(\cdot)$ is a monotonic reparameterization of $\boldsymbol{w}_{n:1}(\cdot)$, and gradient flow monotonically non-increases the objective it optimizes).
Applying this fact, as well as Lemma~\ref{app:proof:gf_analysis:optimization:distance_analysis}, we obtain:
\begin{equation*}
\begin{aligned}{\textstyle \int_{0}^{t}}\big\Vert\nabla\phi\big(\boldsymbol{u}(t')\big)\big\Vert_{2}dt' & \leq{\textstyle \int_{0}^{t_{a}+t_{c}}}\big\Vert\boldsymbol{u}(0)-\boldsymbol{\lambda}_{yx}\big\Vert_{2}dt'+\tfrac{6}{5}{\textstyle \int_{0}^{\infty}}\exp\big(-\tfrac{2n}{3(n+1)}t'\big)dt'\\
 & =\big\Vert\boldsymbol{u}(0)-\boldsymbol{\lambda}_{yx}\big\Vert_{2}\big(t_{a}+t_{c}\big)+\tfrac{6}{5}\cdot\tfrac{3(n+1)}{2n}\\
 & \leq\tfrac{6}{5}\big(t_{a}+t_{c}\big)+3\text{,}
\end{aligned}
\end{equation*}
where the last transition follows from the assumptions  $\Vert\boldsymbol{w}_{n:1}(0)\Vert_{2}\leq0.2$ and $\Vert\boldsymbol{\lambda}_{yx}\Vert_{2}=1$.
Plug in the definitions of $t_a$ and $t_c$ (Definitions \ref{app:proof:gf_analysis:optimization:ta_definition} and~\ref{app:proof:gf_analysis:optimization:tc_definition} respectively):
\begin{equation}
\begin{aligned} & {\textstyle \int_{0}^{t}}\big\Vert\nabla\phi\big(\boldsymbol{u}(t')\big)\big\Vert_{2}dt'\\
 & \leq\tfrac{6}{5}\Big(\tfrac{1}{2}\ln\big(\hspace{-0.5mm}\max\big\{5\cdot\tfrac{1-\nu(0)}{1+\nu(0)},1\big\}\big)+\tfrac{3}{2n}\ln\big(\tfrac{2n}{3u_{\min}}\big)\Big)+3\\
 & =\tfrac{3}{5}\ln\big(\hspace{-0.5mm}\max\big\{5\cdot\tfrac{1-\nu(0)}{1+\nu(0)},1\big\}\big)+\tfrac{9}{5n}\ln\big(\tfrac{2n}{3u_{\min}}\big)+3\\
 & =\tfrac{3}{5n}\ln\Big(\hspace{-0.5mm}\max\big(\big\{5\cdot\tfrac{1-\nu(0)}{1+\nu(0)},1\big\}\big)^{n}\cdot\big(\tfrac{2n}{3u_{\min}}\big)^{3}\cdot e^{5n}\Big)\\
 & \leq\tfrac{3}{5n}\ln\Big(5^{n}\max\big(\big\{\tfrac{1-\nu(0)}{1+\nu(0)},1\big\}\big)^{n}\cdot\big(\tfrac{2n}{3}\big)^{3}\max\big(\big\{\tfrac{3}{2}\cdot\tfrac{1-\nu(0)}{1+\nu(0)},1\big\}\big)^{3n}\Vert\boldsymbol{w}_{n:1}(0)\Vert_{2}^{-3}\cdot e^{5n}\Big)\\
 & \leq\tfrac{3}{5n}\ln\Big(\hspace{-0.5mm}n^{3}\Vert\boldsymbol{w}_{n:1}(0)\Vert_{2}^{-3}e^{8n}\max\big(\big\{\tfrac{1-\nu(0)}{1+\nu(0)},1\big\}\big)^{4n}\Big)\text{,}
\end{aligned}
\label{eq:gf_analysis_geometry:grad_integral_bound}
\end{equation}
where the fourth transition follows from Lemma~\ref{app:proof:gf_analysis:optimization:u_min_bound}.
Plug Equation~(\ref{eq:gf_analysis_geometry:grad_integral_bound}) into Equation~(\ref{eq:gf_analysis_geometry:integral_bound}):
\[
\begin{aligned} & \int_{0}^{t}m(t')dt'\\
 & \leq\tfrac{3(n-1)}{5n}\ln\Big(\hspace{-0.5mm}n^{3}\Vert\boldsymbol{w}_{n:1}(0)\Vert_{2}^{-3}e^{8n}\max\big(\big\{\tfrac{1-\nu(0)}{1+\nu(0)},1\big\}\big)^{4n}\Big)+12n^{3}\epsilon u_{\min}^{-1}t\\
 & \leq\ln\Big(\hspace{-0.5mm}n^{2}\Vert\boldsymbol{w}_{n:1}(0)\Vert_{2}^{-2}e^{5(n-1)}\max\big(\big\{\tfrac{1-\nu(0)}{1+\nu(0)},1\big\}\big)^{\frac{5}{2}(n-1)}\Big)+15n^{3}u_{\min}^{-1}\epsilon t\text{.}
\end{aligned}
\]
We conclude the proof with the help of Lemma~\ref{app:proof:gf_analysis:optimization:u_min_bound}:
\[
\int_{0}^{t}m(t')dt'\leq\tfrac{15n^{3}\max\big(\big\{\tfrac{3}{2}\cdot\tfrac{1-\nu(0)}{1+\nu(0)},1\big\}\big)^{n}t\epsilon}{\Vert\boldsymbol{w}_{n:1}(0)\Vert_{2}}+\ln\Big(\tfrac{n^{2}e^{5(n-1)}\max\big(\big\{\tfrac{1-\nu(0)}{1+\nu(0)},1\big\}\big)^{\frac{5}{2}(n-1)}}{\Vert\boldsymbol{w}_{n:1}(0)\Vert_{2}^{2}}\Big)\text{.}
\]
\end{proof}

\subsubsection{Conclusion} \label{app:proof:gf_analysis:conclusion}

Lemmas \ref{app:proof:gf_analysis:prelim:GF_infinite_time}, \ref{lemma:gf_analysis_time_convergence}, \ref{lemma:gf_analysis_geometry_smooth_lipschitz} and~\ref{lemma:gf_analysis_geometry_min_eig_int}, along with the fact that by the definition of~$m ( \cdot )$ (Equation~\eqref{eq:gf_analysis_m_def}) it is non-negative, together form a complete proof for Proposition~\ref{prop:gf_analysis}.
\qed

\subsection{Proof of Theorem~\ref{theorem:gd_translation}} \label{app:proof:gd_translation}

Let $\tilde{\epsilon}>0$, and consider $\eta > 0$ and~$k \in \N$ adhering to Equations \eqref{eq:gd_translation_eta} and~\eqref{eq:gd_translation_k} respectively.
We would like to show that with step size~$\eta$, iterate~$k$ of gradient descent is $\tilde{\epsilon}$-optimal, \ie~$f ( \thetabf_k ) - \min_{\q \in \R^d} f ( \q ) \leq \tilde{\epsilon}$.
Without loss of generality, we may assume $\tilde{\epsilon} \,{\leq}\, 1$ (a proof that is valid for $\tilde{\epsilon} \,{=}\, 1$ automatically accounts for $\tilde{\epsilon} \,{>}\, 1$ as well).
Define:
\be
\bar{\epsilon}:=\tilde{\epsilon} / 2
~~ , ~~
\epsilon:=\tfrac{\|W_{n:1,0}\|_F \tilde{\epsilon}}{15 n^3 \big( \max \big\{ 1 , \tfrac{3}{2} \cdot \tfrac{1 - \nu}{1 + \nu} \big\} \big)^{\hspace{-0.5mm} n} k \eta} 
\text{\,.}
\label{eq:gd_translation_eps_epsbar}
\ee
Invoking Proposition~\ref{prop:gf_analysis} with initial point $\boldsymbol{\theta}_s = \boldsymbol{\theta}_{0}$, time $t = k \eta$ and $\bar{\epsilon}$, $\epsilon$ as defined above (note that $\epsilon \in ( 0 , 1 / ( 2 n ) ]$), we obtain that the gradient flow trajectory emanating from~$\thetabf_0$ is defined over infinite time, and with $\thetabf : [ 0 , \infty ) \to \R^d$ representing this trajectory, the following time~$\bar{t}$ satisfies $f ( \thetabf ( \bar{t} \, ) ) - \min_{\q \in \R^d} f ( \q ) \leq \bar{\epsilon}$:
\be
\bar{t} = \tfrac{2 n \big(  \max \big\{ 1 , \tfrac{3}{2}\cdot \tfrac{1 - \nu}{1 + \nu} \big\} \big)^n}{\| W_{n : 1 , 0} \|_F} \ln \bigg( \tfrac{15 n \max \big\{ 1 , \tfrac{1 - \nu}{1 + \nu} \big\}}{\| W_{n:1,0} \|_F \min \{ 1 , 2 \bar{\epsilon} \}} \bigg)
\text{\,.}
\label{eq:gd_translation_time}
\ee
Moreover, we obtain that under the notations of Theorem~\ref{theorem:gf_gd}, in correspondence with~$\D_{k \eta , \epsilon}$ ($\epsilon$-neighborhood of gradient flow trajectory up to time~$k\eta$) are the smoothness and Lipschitz constants $\beta_{k\eta , \epsilon} = 16 n$ and~$\gamma_{k\eta , \epsilon} = 6 \sqrt{n}$ respectively, and the following (upper) bound on the integral of (minus) the minimal eigenvalue of the Hessian:
\be
\int_0^{k \eta} m ( t' ) dt' \leq \tfrac{15 n^3 \big( \max \big\{ 1 , \tfrac{3}{2} \cdot \tfrac{1 - \nu}{1 + \nu} \big\} \big)^n k \eta \epsilon}{\| W_{n:1,0} \|_F} + \ln \bigg( \tfrac{n^2 \big( e^2 \max \big\{ 1 , \tfrac{1 - \nu}{1 + \nu} \big\} \big)^{5 ( n - 1 ) / 2}}{\| W_{n : 1 , 0} \|_F^2} \bigg)
\text{\,,}
\label{eq:gd_translation_m}
\ee
where the function $m : [ 0 , k \eta ] \to \R$ is non-negative.

Notice that $k = \lfloor \bar{t} / \eta + 1 \rfloor$ and therefore $k \eta \geq \bar{t}$.
Combining this with the fact that the gradient flow trajectory is $\bar{\epsilon}$-optimal at time~$\bar{t}$, and that in general gradient flow monotonically non-increases the objective it optimizes, we infer $\bar{\epsilon}$-optimality of the gradient flow trajectory at time~$k \eta$, \ie~$\boldsymbol{\theta} ( k \eta ) - \min_{\boldsymbol{q} \in \R^d} f ( \boldsymbol{q} ) \leq \bar{\epsilon}$.
We will invoke Theorem~\ref{theorem:gf_gd} for showing that, in addition to being $\bar{\epsilon}$-optimal, the gradient flow trajectory at time~$k \eta$ is also $\epsilon$-approximated by iterate~$k$ of gradient descent, \ie~$\| \thetabf_k - \thetabf ( k \eta ) \|_2 \leq \epsilon$.
This, along with $f ( \cdot )$ being $6 \sqrt{n}$-Lipschitz across $\D_{k \eta , \epsilon}$ ($\epsilon$-neighborhood of gradient flow trajectory up to time~$k\eta$), yields the desired result~---~$\tilde{\epsilon}$-optimality for iterate~$k$ of gradient descent:
\[
\begin{aligned} & f\big(\,\boldsymbol{\theta}_{k}\big)-\text{min}_{\boldsymbol{q}\in\mathbb{R}^{d}}f(\boldsymbol{q})\\
	&\quad =\Big(\hspace{0.3mm}f\big(\,\boldsymbol{\theta}_{k}\big)-f\big(\hspace{0.25mm}\boldsymbol{\theta}(k\eta)\hspace{0.25mm}\big)\hspace{0.3mm}\Big)+\Big(f\big(\,\boldsymbol{\theta}(k\eta)\big)-\text{min}_{\boldsymbol{q}\in\mathbb{R}^{d}}f(\boldsymbol{q})\Big)\\
	&\quad \leq\Big(\hspace{0.25mm}6\sqrt{n}\hspace{0.25mm}\big\Vert\boldsymbol{\theta}_{k}\hspace{-0.25mm}-\boldsymbol{\theta}(k\eta)\big\Vert_{2}\Big)+\Big(f\big(\,\boldsymbol{\theta}(k\eta)\big)-\text{min}_{\boldsymbol{q}\in\mathbb{R}^{d}}f(\boldsymbol{q})\Big)\\[0.6mm]
	&\quad \leq6\sqrt{n}\cdot\epsilon+\bar{\epsilon}\\[1.9mm]
	&\quad \leq\hspace{0.5mm}\tilde{\epsilon} \text{\,,}
\end{aligned}
\]
where the last transition follows from the definitions of $\epsilon$ and~$\bar{\epsilon}$ (Equation~\eqref{eq:gd_translation_eps_epsbar}).

\medskip

We conclude the proof by showing that indeed $\| \thetabf_k \, {-} \, \thetabf ( k \eta ) \|_2 \, {\leq} \, \epsilon$.
Equation~\eqref{eq:gd_translation_m}, the definition of~$\epsilon$ (Equation~\eqref{eq:gd_translation_eps_epsbar}) and the condition $\tilde{\epsilon} \leq 1$ together imply:
\bea
\int_0^{k \eta} m ( t' ) dt' &\leq& \tfrac{15 n^3 \big( \max \big\{ 1 , \tfrac{3}{2} \cdot \tfrac{1 - \nu}{1 + \nu} \big\} \big)^n k \eta \epsilon}{\| W_{n:1,0} \|_F} + \ln \bigg( \tfrac{n^2 \big( e^2 \max \big\{ 1 , \tfrac{1 - \nu}{1 + \nu} \big\} \big)^{5 ( n - 1 ) / 2}}{\| W_{n : 1 , 0} \|_F^2} \bigg) 
\label{eq:gd_translation_m_ub} \\
&=& \tilde{\epsilon} + \ln \bigg( \tfrac{n^2 \big( e^2 \max \big\{ 1 , \tfrac{1 - \nu}{1 + \nu} \big\} \big)^{5 ( n - 1 ) / 2}}{\| W_{n : 1 , 0} \|_F^2} \bigg) 
\nonumber \\
&\leq& 1 + \ln \bigg( \tfrac{n^2 \big( e^2 \max \big\{ 1 , \tfrac{1 - \nu}{1 + \nu} \big\} \big)^{5 ( n - 1 ) / 2}}{\| W_{n : 1 , 0} \|_F^2} \bigg) 
\nonumber \\
&<& \ln \bigg( \tfrac{3 n^2 \big( e^2 \max \big\{ 1 , \tfrac{1 - \nu}{1 + \nu} \big\} \big)^{5 ( n - 1 ) / 2}}{\| W_{n : 1 , 0} \|_F^2} \bigg)
\text{\,.}
\nonumber
\eea
Recalling the fact that $k = \lfloor \bar{t} / \eta + 1 \rfloor$, the expression for~$\bar{t}$ (Equation~\eqref{eq:gd_translation_time}), and the definition of~$\bar{\epsilon}$ (Equation~\eqref{eq:gd_translation_eps_epsbar}), we have:
\bea
&& 
k \eta = \lfloor \bar{t} / \eta + 1 \rfloor \eta \leq \bar{t} + \eta = \tfrac{2 n \big(  \max \big\{ 1 , \tfrac{3}{2}\cdot \tfrac{1 - \nu}{1 + \nu} \big\} \big)^n}{\| W_{n : 1 , 0} \|_F} \ln \bigg( \tfrac{15 n \max \big\{ 1 , \tfrac{1 - \nu}{1 + \nu} \big\}}{\| W_{n:1,0} \|_F \tilde{\epsilon}} \bigg) + \eta 
\label{eq:gd_translation_k_eta_ub} \\
&& \quad\,\,
< \tfrac{3 n \big(  \max \big\{ 1 , \tfrac{3}{2}\cdot \tfrac{1 - \nu}{1 + \nu} \big\} \big)^n}{\| W_{n : 1 , 0} \|_F} \ln \bigg( \tfrac{15 n \max \big\{ 1 , \tfrac{1 - \nu}{1 + \nu} \big\}}{\| W_{n:1,0} \|_F \tilde{\epsilon}} \bigg)
\text{\,,}
\nonumber
\eea
where the last transition makes use of the upper bound on~$\eta$ given in Equation~\eqref{eq:gd_translation_eta}.
It holds that:
\begin{equation*}
	\begin{aligned} 
		& \epsilon^{-1} \beta_{k \eta , \epsilon} \gamma_{k \eta , \epsilon} k \eta e^{\int_{0}^{k\eta}m(t')dt'} \\
		& \quad <~ \tfrac{15 n^3 \big( \max \big\{ 1 , \tfrac{3}{2} \cdot \tfrac{1 - \nu}{1 + \nu} \big\} \big)^n k \eta}{\| W_{n : 1 , 0}\|_F \tilde{\epsilon}} \cdot 16 n \cdot 6 \sqrt{n} \cdot k \eta \cdot \tfrac{3n^{2}\big(e^{2}\max\big\{1,\tfrac{1-\nu}{1+\nu}\big\}\big)^{5(n-1) / 2}}{\|W_{n:1,0}\|_{F}^{2}} \\
		& \quad <~ \tfrac{4500n^{13/2}e^{6n-5}\big(\max\big\{1,\tfrac{1-\nu}{1+\nu}\big\}\big)^{(7n-5)/2}}{\|W_{n:1,0}\|_F^3 \tilde{\epsilon}} (k\eta)^{2}\\
		& \quad <~ \tfrac{4500n^{13/2}e^{6n-5}\big(\max\big\{1,\tfrac{1-\nu}{1+\nu}\big\}\big)^{(7n-5)/2}}{\|W_{n:1,0}\|_F^3 \tilde{\epsilon}}\cdot \tfrac{9 n^2 \big(  \max \big\{ 1 , \tfrac{3}{2}\cdot \tfrac{1 - \nu}{1 + \nu} \big\} \big)^{2 n}}{\| W_{n : 1 , 0} \|_F^2} \Bigg( \ln \bigg( \tfrac{15 n \max \big\{ 1 , \tfrac{1 - \nu}{1 + \nu} \big\}}{\| W_{n:1,0} \|_F \tilde{\epsilon}} \bigg) \Bigg)^2\\
		& \quad < \tfrac{n^{17/2}e^{7n+6}\big(\max\big\{1,\tfrac{1-\nu}{1+\nu}\big\}\big)^{(11n-5)/2}}{\|W_{n:1,0}\|_F^5 \tilde{\epsilon}}\Bigg( \ln\bigg(\tfrac{15n\max\big\{1,\tfrac{1-\nu}{1+\nu}\big\}}{\|W_{n:1,0}\|_{F}\tilde{\epsilon}}\bigg) \Bigg)^2\\[2mm]
		& \quad \leq1/\eta\text{\,,}
	\end{aligned}
\end{equation*}	
where
the first transition follows from Equation~\eqref{eq:gd_translation_m_ub} and the definition of~$\epsilon$ (Equation~\eqref{eq:gd_translation_eps_epsbar});
the third makes use of Equation~\eqref{eq:gd_translation_k_eta_ub};
and the last is due to the upper bound on~$\eta$ given in Equation~\eqref{eq:gd_translation_eta}.
Rearrange the derived inequality:
\[
\eta < \frac{\epsilon}{\beta_{k\eta,\epsilon}\gamma_{k\eta,\epsilon}k\eta\:e^{\int_{0}^{k\eta}m(t')dt'}}
\text{\,.}
\]
Since~$m ( \cdot )$ is non-negative, it holds that:
\[
\frac{\epsilon}{\beta_{k\eta,\epsilon}\gamma_{k\eta,\epsilon}k\eta\:e^{\int_{0}^{k\eta}m(t')dt'}}\leq\inf_{t\in(0,k\eta]}\frac{\epsilon}{\beta_{k \eta,\epsilon}\gamma_{k \eta,\epsilon}\int_{0}^{t}e^{\int_{t'}^t m ( t'' ) \: dt''}\:dt'}
\text{\,,}
\]
and therefore:
\be
\eta < \inf_{t\in(0,k\eta]}\frac{\epsilon}{\beta_{k \eta,\epsilon}\gamma_{k \eta,\epsilon}\int_{0}^{t}e^{\int_{t'}^t m ( t'' ) \: dt''}\:dt'}
\text{\,.}
\label{eq:gd_translation_eta_admissible}
\ee
We now invoke Theorem~\ref{theorem:gf_gd} with~$\epsilon$ as we have defined (Equation~\eqref{eq:gd_translation_eps_epsbar}), time $\tilde{t} = k \eta$, and $\beta_{k \eta , \epsilon}$, $\gamma_{k \eta , \epsilon}$ and~$m ( \cdot )$ as produced by Proposition~\ref{prop:gf_analysis}.
Recalling that in our context gradient flow and gradient descent are initialized identically, \ie~$\thetabf ( 0 ) = \thetabf_0$, we conclude from Equation~\eqref{eq:gd_translation_eta_admissible} that the first $\lfloor k \eta / \eta \rfloor = k$ iterates of gradient descent $\epsilon$-approximate the gradient flow trajectory up to time~$k \eta$, \ie~$\| \thetabf_{k'} - \thetabf ( k' \eta ) \|_2 \leq \epsilon$ for all $k' \in \{ 1 , 2 , ...\, , k \}$.
In particular $\| \thetabf_k - \thetabf ( k \eta ) \|_2 \leq \epsilon$, as required.
\qed

\subsection{Proof of Proposition \ref{prop:worst} \label{app:proof:worst}}
The proof is organized as follows.
Subsubappendix~\ref{app:proof:worst:prelim} establishes preliminaries.
Subsubappendixes \ref{app:proof:worst:anisotropic}, \ref{app:proof:worst:transition} and~\ref{app:proof:worst:isotropic} respectively analyze the trajectories of gradient flow and gradient descent in three different regions of the objective function:
\emph{(i)}~``anisotropic'' region where curvatures in first and second coordinates differ;
\emph{(ii)}~transition region between the previous and the next;
and
\emph{(iii)}~``isotropic'' region where curvatures in first and second coordinates are identical.
Subsubappendix~\ref{app:proof:worst:inapprox} shows that the location of gradient flow at time~$\tilde{t}$ is not $\epsilon$-approximated by different portions of the gradient descent trajectory.
Finally, Subsubappendix~\ref{app:proof:worst:conclusion} concludes.

\subsubsection{Preliminaries} \label{app:proof:worst:prelim}

Consider an arbitrary time 
\[
\smash{\tilde{t} \in \big[ \tfrac{2}{a} \ln \big( \tfrac{2 - 3 \bar{\rho} / 2}{\theta_{s , 2} - ( \bar{\rho} / 2 - 1 )} \big) {+} \tfrac{1}{a} \ln \big( \tfrac{2}{1 - \bar{\rho}} \big) \, , \, \tfrac{2}{a} \ln \big( \tfrac{\,~ 2 - 3 \bar{\rho} / 2}{\theta_{s , 2} - ( \bar{\rho} / 2 - 1 )} \big) {+} \tfrac{1}{a} \ln \big( \tfrac{1 + \bar{\rho} / 4}{1 - 3 \bar{\rho} / 4} \big) {+} \tfrac{1}{a} \ln ( b ) \big]}
\text{\,,}
\]
and suppose the step size~$\eta$ is greater than or equal to~$\frac{10^{14}}{a} e^{- a \tilde{t}} \epsilon$.
We aim to prove $\| \thetabf_k - \thetabf ( \tilde{t} \, ) \|_2 > \epsilon$ for all $k \in \N \cup \{ 0 \}$.

Since the objective function~$f ( \cdot )$ (defined in Equation~\eqref{eq:worst_obj}) is additively separable (can be expressed as a sum of terms, each depending on a single input variable), the dynamics in~$\R^d$ induced by gradient flow and gradient descent can be analyzed separately for different coordinates.
Lemma~\ref{lemma:worst_big_eta} below analyzes the dynamics in the third coordinate, establishing the sought after result for the case where~$\eta$ is greater than~$\tfrac{1}{6 a}$.
\begin{lemma}
\label{lemma:worst_big_eta}
Assume $\eta > \tfrac{1}{6a}$.
Then $\| \thetabf_k - \thetabf ( \tilde{t} \, ) \|_2 > \epsilon$ for all $k \in \N \cup \{ 0 \}$.
\end{lemma}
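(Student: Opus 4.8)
The plan is to exploit the additive separability of the objective function~$f ( \cdot )$ (Equation~\eqref{eq:worst_obj}): since $f ( \q ) = \varphi ( q_1 ) + \bar{\varphi} ( q_2 ) + 6 a q_3^2$, the dynamics induced in~$\R^d$ by gradient flow and gradient descent decouple across coordinates, so it suffices to examine the third coordinate alone, where the objective is the quadratic $q_3 \mapsto 6 a q_3^2$ with gradient $12 a q_3$. First I would write down the two resulting closed forms: gradient descent gives $\theta_{k , 3} = ( 1 - 12 a \eta )^k \theta_{s , 3}$, while gradient flow gives $\theta_3 ( t ) = \theta_{s , 3} e^{- 12 a t}$ (here $\theta_{k , 3}$ and $\theta_3 ( t )$ denote the third coordinates of~$\thetabf_k$ and~$\thetabf ( t )$ respectively). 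The crucial point is that $\eta > \tfrac{1}{6 a}$ forces $12 a \eta > 2$, hence $1 - 12 a \eta < - 1$, so that $| 1 - 12 a \eta |^k \geq 1$ for every $k \in \N \cup \{ 0 \}$, and therefore $| \theta_{k , 3} | = | 1 - 12 a \eta |^k \theta_{s , 3} \geq \theta_{s , 3} > 2$ (using $\theta_{s , 3} > 2 > 0$). Gradient flow instead contracts, so $\theta_3 ( \tilde{t} \, ) = \theta_{s , 3} e^{- 12 a \tilde{t}} \in ( 0 , \theta_{s , 3} \, ]$.

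Concretely, for every $k \in \N \cup \{ 0 \}$ I would chain
\[
\| \thetabf_k - \thetabf ( \tilde{t} \, ) \|_2 \geq | \theta_{k , 3} - \theta_3 ( \tilde{t} \, ) | \geq | \theta_{k , 3} | - | \theta_3 ( \tilde{t} \, ) | \geq \theta_{s , 3} \big( 1 - e^{- 12 a \tilde{t}} \big)
\text{\,,}
\]
and it then only remains to bound $\theta_{s , 3} ( 1 - e^{- 12 a \tilde{t}} )$ below by~$\epsilon$. For this I would use the left endpoint of the admissible interval for~$\tilde{t}$ in the proposition statement. Since $\bar{\rho} = \min \{ e^{-12} / 2 , \epsilon / 2 b \} < 1$ (because $\epsilon < 1$ and $b \geq 3$) and the first summand of that lower bound is non-negative (its logarithm has argument $( 2 - \tfrac{3}{2} \bar{\rho} ) / ( \theta_{s , 2} + 1 - \tfrac{1}{2} \bar{\rho} ) > 1$, the numerator exceeding one and the denominator being smaller than one), we get $\tilde{t} \geq \tfrac{1}{a} \ln \big( \tfrac{2}{1 - \bar{\rho}} \big) \geq \tfrac{1}{a} \ln 2$, hence $e^{- 12 a \tilde{t}} \leq 2^{- 12} < \tfrac{1}{2}$. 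Plugging this into the chain above gives $\| \thetabf_k - \thetabf ( \tilde{t} \, ) \|_2 > \theta_{s , 3} \cdot \tfrac{1}{2} > 1 > \epsilon$, which is precisely the claim.

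I do not expect a genuine obstacle here — the argument is short, and the only care required is elementary bookkeeping: confirming the two coordinate-wise closed forms (each valid by separability), tracking signs so that $| 1 - 12 a \eta | \geq 1$ and $\theta_{s , 3} > 0$ may be invoked, and checking the crude estimates $\bar{\rho} < 1$ and non-negativity of the first term in the lower bound on~$\tilde{t}$. Note that this lemma only covers the regime $\eta > \tfrac{1}{6 a}$, in which divergence in the third coordinate does all the work; the complementary regime $\eta \leq \tfrac{1}{6 a}$, where the first two coordinates must be tracked through the anisotropic, transition and isotropic regions, is handled in the remainder of the proof of Proposition~\ref{prop:worst}.
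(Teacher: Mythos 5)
Your proposal is correct and follows essentially the same route as the paper's proof: restrict to the third coordinate via additive separability, observe that $\eta > \tfrac{1}{6a}$ makes $|1 - 12 a \eta| > 1$ so the gradient descent iterates do not shrink in that coordinate while gradient flow contracts it by at least a factor of two by time $\tilde{t}$, and conclude via $\| \thetabf_k - \thetabf ( \tilde{t} \, ) \|_2 \geq | \theta_{k,3} | - | \theta_3 ( \tilde{t} \, ) | > 1 > \epsilon$. The only cosmetic difference is that you use the closed form $(1 - 12 a \eta)^k \theta_{s,3}$ where the paper argues via the one-step inequality $| \hat{\theta}_{k+1} | > | \hat{\theta}_k |$; both are valid.
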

\begin{proof}
	Denote by $\hat{\theta}(\cdot)$ the third coordinate of the gradient flow trajectory $\boldsymbol{\theta}(\cdot)$.
	Similarly, for any $k\in\N\cup\{0\}$, denote by $\hat{\theta}_{k}$ the third coordinate of the gradient descent iterate $\boldsymbol{\theta}_{k}$.
	For any $k\in\N\cup\{0\}$, it holds that:
	\[
	|\hat{\theta}_{k+1}|=|\hat{\theta}_{k}-\eta\tfrac{\partial f}{\partial q_{3}}(\boldsymbol{\theta}_{k})|=|\hat{\theta}_{k}-12a\eta\hat{\theta}_{k}|=|\hat{\theta}_{k}|\cdot|1-12a\eta| > |\hat{\theta}_{k}|\text{.}
	\]
	Thus, we may conclude $|\hat{\theta}_{k}|>|\hat{\theta}_{0}|$ for any $k\in\N$.
	The solution to the gradient flow equation of the third coordinate (\ie~$\frac{d}{dt}\hat{\theta}(t)=-12a\hat{\theta}(t)$) is $\hat{\theta}(t)=\hat{\theta}(0)e^{-12at}$.
	Recall that $\hat{\theta}(0)=\hat{\theta}_{0}>2$ and notice that $\tilde{t}\geq\frac{\ln(2)}{12a}$.
	For any $k\in\N\cup\{0\}$ we have:
	\[
	\Vert\boldsymbol{\theta}(\tilde{t}\,)-\boldsymbol{\theta}_{k}\Vert_{2}\geq|\hat{\theta}(\tilde{t}\,)-\hat{\theta}_{k}|\geq|\hat{\theta}_{k}|-|\hat{\theta}(\tilde{t}\,)|\geq|\hat{\theta}_{0}|-\big|\hat{\theta}\big({\textstyle \frac{\ln(2)}{12a}}\big)\big|=|\hat{\theta}_{0}|-\tfrac{1}{2}|\hat{\theta}_{0}| > 1 > \epsilon\text{.}
	\]
\end{proof}
It remains to treat the case where $\eta$ is no greater than~$\frac{1}{6 a}$.
In the remainder of the proof we restrict our attention to this case, \ie~we assume \smash{$\eta \, {\in} \, \big[ \frac{10^{14}}{a} e^{- a \tilde{t}} \epsilon \, , \frac{1}{6 a} \big]$}. 
Special focus will be devoted~to~the~dynamics in the first two coordinates.
Denote by $\theta ( \cdot )$ and~$\bar{\theta} ( \cdot )$ the first and second coordinates, respectively, of the gradient flow trajectory~$\thetabf ( \cdot )$.
Similarly, for $k \in \N \cup \{ 0 \}$, denote by $\theta_k$ and~$\bar{\theta}_k$ the first and second coordinates, respectively, of the gradient descent iterate~$\thetabf_k$.
The following lemma shows that in the first two coordinates, the trajectories of gradient flow and gradient descent are monotonically non-decreasing.
\begin{lemma}
\label{lemma:worst_monotone}
The functions $\theta ( \cdot )$ and~$\bar{\theta} ( \cdot )$, and the series $( \theta_k )_{k = 0}^\infty$ and~$( \bar{\theta}_k )_{k = 0}^\infty$, are all monotonically non-decreasing.
\end{lemma}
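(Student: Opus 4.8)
The objective $f(\q)=\varphi(q_1)+\bar{\varphi}(q_2)+6aq_3^2$ in Equation~\eqref{eq:worst_obj} is additively separable, so $\nabla f(\q)=\big(\varphi'(q_1),\bar{\varphi}'(q_2),12aq_3,0,\dots,0\big)$ and the dynamics in each coordinate decouple: for gradient flow (Equation~\eqref{eq:gf}) the first coordinate solves $\tfrac{d}{dt}\theta(t)=-\varphi'(\theta(t))$ and the second solves $\tfrac{d}{dt}\bar{\theta}(t)=-\bar{\varphi}'(\bar{\theta}(t))$, while for gradient descent (Equation~\eqref{eq:gd}) we have $\theta_{k+1}=\theta_k-\eta\,\varphi'(\theta_k)$ and $\bar{\theta}_{k+1}=\bar{\theta}_k-\eta\,\bar{\varphi}'(\bar{\theta}_k)$. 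Hence it suffices to prove two sign facts ---~$\varphi'(z)\le 0$ for all $z\ge 0$, and $\bar{\varphi}'(z)\le 0$ for all $z\ge \tfrac12\bar{\rho}-1$~--- together with the trapping statements that $\theta(\cdot),(\theta_k)_k$ stay in $[0,\infty)$ (as $\theta_{s,1}\in(0.5,1)>0$) and $\bar{\theta}(\cdot),(\bar{\theta}_k)_k$ stay in $[\tfrac12\bar{\rho}-1,\infty)$ (using $\theta_{s,2}>e^{-12}/2-1\ge\tfrac{\bar{\rho}}{2}-1>\tfrac12\bar{\rho}-1$). Granting these, monotonicity is immediate: $\tfrac{d}{dt}\theta(t)=-\varphi'(\theta(t))\ge 0$, $\tfrac{d}{dt}\bar{\theta}(t)=-\bar{\varphi}'(\bar{\theta}(t))\ge 0$, and $\theta_{k+1}-\theta_k=-\eta\,\varphi'(\theta_k)\ge 0$, $\bar{\theta}_{k+1}-\bar{\theta}_k=-\eta\,\bar{\varphi}'(\bar{\theta}_k)\ge 0$, since $\eta>0$ (this is the only property of $\eta$ used).

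\textbf{The sign computations.} I would verify $\varphi'\le 0$ piecewise from Equation~\eqref{eq:worst_obj_axis}: on $(0,z_c)$ one has $\varphi'(z)=-az\le 0$; on $(z_c+1,\infty)$ one has $\varphi'(z)=0$; and the only nontrivial interval is the quartic transition $[z_c,z_c+1]$. Writing $u=z-z_c\in[0,1]$ there gives $\varphi'(z)=-a\big[z_c+u-(2+3z_c)u^2+(1+2z_c)u^3\big]$; since $u=1$ is a (double) root of the bracketed cubic ---~forced by the matching to the constant-zero piece at $z_c+1$~--- dividing out shows the bracket equals $(u-1)^2(1+2z_c)\big(u+\tfrac{z_c}{1+2z_c}\big)$, which is $\ge 0$ on $[0,1]$, so $\varphi'\le 0$ on $[z_c,z_c+1]$ as well. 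The same argument with $\bar{z}_c$ in place of $z_c$ handles the transition piece $[\bar{z}_c,\bar{z}_c+1]$ of $\bar{\varphi}$. The remaining pieces of $\bar{\varphi}$ in Equation~\eqref{eq:worst_obj_axis_bar} are direct: on $(\tfrac12\bar{\rho}-1,1-\bar{\rho})$, $\bar{\varphi}'(z)=-\tfrac12 a\big(z-(\tfrac12\bar{\rho}-1)\big)\le 0$; on $[1-\bar{\rho},1]$, $\bar{\varphi}'(z)=-az-\tfrac14 a\bar{\rho}^{-1}(z-1)^2\le 0$ (using $z\ge 1-\bar{\rho}>0$); on $(1,\bar{z}_c)$, $\bar{\varphi}'(z)=-az\le 0$; and on $(\bar{z}_c+1,\infty)$, $\bar{\varphi}'(z)=0$. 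Checking agreement of one-sided derivatives at the junction points confirms consistency with the asserted $C^2$ regularity. I expect this piecewise verification ---~in particular spotting the $(u-1)^2$ factorization on the quartic transitions~--- to be the main (though elementary) obstacle; everything else is routine.

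\textbf{Conclusion of monotonicity.} For the gradient-descent sequences the trapping is an easy induction: $\theta_0=\theta_{s,1}>0$ and if $\theta_k>0$ then $\varphi'(\theta_k)\le 0$ gives $\theta_{k+1}=\theta_k-\eta\,\varphi'(\theta_k)\ge\theta_k>0$; likewise $\bar{\theta}_0=\theta_{s,2}>\tfrac12\bar{\rho}-1$ and $\bar{\varphi}'(\bar{\theta}_k)\le 0$ forces $\bar{\theta}_{k+1}\ge\bar{\theta}_k>\tfrac12\bar{\rho}-1$; in both cases the increments are nonnegative, proving the sequences are non-decreasing. For the gradient-flow trajectories I would invoke uniqueness of ODE solutions (Theorem~\ref{theorem:exist_unique}, applicable since $\varphi,\bar{\varphi}$ are twice continuously differentiable hence have locally Lipschitz derivatives; and the flow is defined for all time by Lemma~\ref{lemma:infinite} as $f$ is smooth): $\varphi'(0)=0$ and $\bar{\varphi}'(\tfrac12\bar{\rho}-1)=0$ ---~the latter because $\tfrac12\bar{\rho}-1$ is the reflection center and $\bar{\varphi}\in C^2$~--- so the constants $0$ and $\tfrac12\bar{\rho}-1$ are equilibria, and since $\theta(0)>0$ and $\bar{\theta}(0)>\tfrac12\bar{\rho}-1$ the trajectories cannot cross these equilibria, i.e.\ $\theta(t)>0$ and $\bar{\theta}(t)>\tfrac12\bar{\rho}-1$ for all $t\ge 0$. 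Then $\tfrac{d}{dt}\theta(t)=-\varphi'(\theta(t))\ge 0$ and $\tfrac{d}{dt}\bar{\theta}(t)=-\bar{\varphi}'(\bar{\theta}(t))\ge 0$, so $\theta(\cdot)$ and $\bar{\theta}(\cdot)$ are monotonically non-decreasing, completing the proof.
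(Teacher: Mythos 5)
Your proof is correct and follows exactly the route the paper takes: its one-sentence proof simply asserts that $\varphi'\le 0$ on $[0,\infty)$ and $\bar{\varphi}'\le 0$ on $[\bar{\rho}/2-1,\infty)$, which is precisely what you verify piecewise (including the $(u-1)^2$ factorization on the quartic transitions) before adding the routine trapping arguments. No gaps.
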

\begin{proof}
	The results follows from the fact that the derivative of $\varphi ( \cdot )$ over $[0 , \infty)$, and that of $\bar{\varphi} ( \cdot )$ over $[\tfrac{\bar{\rho}}{2} - 1,\infty)$, are both non-positive.
\end{proof}
With Lemma~\ref{lemma:worst_monotone} at hand, we consider three regions (in~$\R^d$) which may be traversed by the trajectories of gradient flow and gradient descent:
\emph{(i)}~\emph{``anisotropic'' region} $[ 0 \, , z_c ) \times [ \bar{\rho} / 2 - 1 \, , 1 - \bar{\rho} ) \times \R^{d - 2}$, where the curvatures of~$f ( \cdot )$ in the first and second coordinates differ (namely, they equal $- a$ and~$- a / 2$ respectively);
\emph{(ii)}~\emph{transition region} $[ 0 \, , z_c ) \times [ 1 - \bar{\rho} \, , 1 ) \times \R^{d - 2}$;
and
\emph{(iii)}~\emph{``isotropic'' region} $[ 0 \, , z_c ) \times [ 1 \, , \bar{z}_c ) \times \R^{d - 2}$, where the curvatures of~$f ( \cdot )$ in the first and second coordinates are identical (namely, they both equal~$- a$).
As we now show, throughout the above regions, the trajectories of gradient flow and gradient descent admit simple characterizations for their first coordinate.
\begin{lemma}
\label{lemma:worst_coor1}
It holds that $\theta ( t ) = \theta ( 0 ) e^{a t}$ for all $t \in \big[ 0 \, , a^{- 1} \ln \big( z_c / \theta ( 0 ) \big) \big]$, and $\theta_k = \theta_0 ( 1 + a \eta )^k$ for all $k \in \big\{ 0 , 1 , ...\, , \big\lceil \ln ( z_c / \theta_0 )  \big/ \ln ( 1 + a \eta ) \big\rceil \big\}$.
\end{lemma}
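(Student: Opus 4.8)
The plan is to exploit the additive separability of~$f$ (Equation~\eqref{eq:worst_obj}): since $f ( \q ) = \varphi ( q_1 ) + \bar{\varphi} ( q_2 ) + 6 a q_3^2$, the first coordinate of the gradient flow trajectory obeys the autonomous scalar initial value problem $\tfrac{d}{dt} \theta ( t ) = - \varphi' ( \theta ( t ) )$, $\theta ( 0 ) = \theta_{s , 1}$, and the first coordinate of the gradient descent iterates obeys $\theta_{k + 1} = \theta_k - \eta \, \varphi' ( \theta_k )$, $\theta_0 = \theta_{s , 1}$. The structural fact that drives everything is that on $( 0 , z_c )$ the definition in Equation~\eqref{eq:worst_obj_axis} gives $\varphi ( z ) = \varphi ( 0 ) - \tfrac{1}{2} a z^2$, hence $\varphi' ( z ) = - a z$ there; moreover $\varphi' ( 0 ) = 0$ (by evenness and differentiability of~$\varphi ( \cdot )$) and $\varphi' ( z_c ) = - a z_c$ (by twice continuous differentiability at~$z_c$, or a one-line computation from the cubic-quartic piece). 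Thus $\varphi' ( z ) = - a z$ for all $z \in [ 0 , z_c ]$, and both dynamics, while confined to this interval in the first coordinate, are simply scaled exponential growth.

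For the gradient flow claim I would introduce the candidate curve $g ( t ) := \theta ( 0 ) e^{a t}$ on the interval $I := \big[ 0 , a^{-1} \ln \big( z_c / \theta ( 0 ) \big) \big]$, which is well defined and non-degenerate because $\theta ( 0 ) = \theta_{s , 1} \in ( 0.5 , 1 )$ while $z_c = b e^{30} + 1 > 1$. For $t \in I$ we have $g ( t ) \in [ \theta ( 0 ) , z_c ] \subset [ 0 , z_c ]$, so $g' ( t ) = a \, g ( t ) = - \varphi' ( g ( t ) )$ and $g ( 0 ) = \theta ( 0 )$; by the uniqueness part of Theorem~\ref{theorem:exist_unique} applied to the scalar problem above, $\theta ( t ) = g ( t ) = \theta ( 0 ) e^{a t}$ for all $t \in I$, which is the first assertion.

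For the gradient descent claim I would argue by induction on~$k$ that $\theta_k = \theta_0 ( 1 + a \eta )^k$ throughout the stated range, carrying along the side condition $\theta_k \le z_c$. Setting $K := \big\lceil \ln ( z_c / \theta_0 ) \big/ \ln ( 1 + a \eta ) \big\rceil$, one checks that $K - 1 < \ln ( z_c / \theta_0 ) / \ln ( 1 + a \eta )$ (which holds whether or not this quantity is an integer), so for each $k \le K - 1$ the inductive formula gives $\theta_k = \theta_0 ( 1 + a \eta )^k < z_c$; then $\varphi' ( \theta_k ) = - a \theta_k$ yields $\theta_{k + 1} = \theta_k + a \eta \, \theta_k = \theta_0 ( 1 + a \eta )^{k + 1}$, which completes the step and establishes the formula for $k = 0 , 1 , \ldots , K$.

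I do not expect a serious obstacle here; the only points needing care are (i) invoking uniqueness of the scalar ODE solution correctly, so the trajectory is not permitted to leave $[ 0 , z_c ]$ before time $a^{-1} \ln ( z_c / \theta ( 0 ) )$ — monotonicity from Lemma~\ref{lemma:worst_monotone} also precludes this — and (ii) the bookkeeping with the ceiling function so that the discrete identity is asserted at exactly the right set of indices. Both amount to short elementary verifications.
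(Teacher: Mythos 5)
Your proposal is correct and follows essentially the same route as the paper: restrict to the quadratic piece of $\varphi ( \cdot )$ where $\varphi' ( z ) = - a z$, solve the resulting scalar linear ODE explicitly and invoke uniqueness for gradient flow, and run the geometric recursion with the ceiling-function bookkeeping for gradient descent. Your extra care at the endpoints $z = 0$ and $z = z_c$ is a harmless refinement of what the paper handles via continuity.
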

\begin{proof}
	Notice that  $\theta(0)\in(0,z_{c})$.
	For any $t\in[0,\infty)$ such that $\theta(t)\in(0,z_{c})$, we obtain:
	\[
	\tfrac{d \theta}{dt}(t)
	=-\tfrac{d \varphi}{dz}\big({\theta}(t)\big)
	=a\theta(t)
	\text{.}
	\]
	The function $t \mapsto \theta( 0 ) e^{at}$ is a solution to this initial value problem valid through $t \in \big( 0 , \ln(\frac{z_{c}}{\theta_{0}})\big/a \big)$, and from uniqueness of the solution together with continuity of $\theta(\cdot)$, we conclude that $\theta(t) = \theta( 0 ) e^{at}$ for $t\in\big[0,\ln(\frac{z_{c}}{\theta_{0}})\big/a\big]$.
	
	Moving on to gradient descent.
	Notice that $\theta_{0}\in(0,z_{c})$, and for any $k\in\N$ such that $\theta_{k-1}\in(0,z_{c})$ we have:
	\[
	\theta_{k}
	=\theta_{k-1}-\eta\tfrac{d \varphi}{d z}(\theta_{k-1})
	=\theta_{k-1}+a\eta \theta_{k-1}
	=\theta_{k-1}(1+a\eta)
	\text{.}
	\]
	It follows that $\theta_{k}=\theta_{0}(1+a\eta)^{k}$ for any $k\in\big\{0,1,..,\bigl\lceil\ln(\frac{z_{c}}{\theta_{0}})/\ln(1+a\eta)\bigr\rceil\big\}$, where by plugging in $k=\bigl\lceil\ln(\frac{z_{c}}{\theta_{0}})/\ln(1+a\eta)\bigr\rceil$ we obtain $\theta_{k-1}<z_{c}$. 
\end{proof}
Compared to the first coordinate, in the second coordinate the trajectories of gradient flow and gradient descent are more involved~---~analyses for the anisotropic, transition and isotropic regions are conducted in Subsubappendixes \ref{app:proof:worst:anisotropic}, \ref{app:proof:worst:transition} and~\ref{app:proof:worst:isotropic} respectively.

\subsubsection{Anisotropic Region} \label{app:proof:worst:anisotropic}

The current subsubappendix analyzes the second coordinate of the gradient flow and gradient descent trajectories throughout the anisotropic region, or more specifically, when the second coordinate is in the range~$[ \bar{\rho} / 2 - 1 \, , 1 - \bar{\rho} )$. 
Beginning with gradient flow, we recall that (by Lemma~\ref{lemma:worst_monotone}) the second coordinate of the trajectory is monotonically non-decreasing, and consider the time at which it exits the range~$[ \bar{\rho} / 2 - 1 \, , 1 - \bar{\rho} )$.
\begin{definition}
\label{def:worst_aniso_coor2_gf_time}
Define $t_{1 {-} \bar{\rho}} := \inf \{ t \geq 0 : \bar{\theta} ( t ) \geq 1 - \bar{\rho} \}$.\note{%
Note that by convention, the infimum of the empty set is equal to infinity.
\label{note:empty_inf}
} 
\end{definition}
Lemma~\ref{lemma:worst_aniso_coor2_gf} below provides an explicit expression for~$t_{1 {-} \bar{\rho}}$, and for the second coordinate of the gradient flow trajectory until this time.
\begin{lemma}
\label{lemma:worst_aniso_coor2_gf}
The following hold:
\[
\begin{aligned}
		(i)\quad\quad\:\: & t_{1 {-} \bar{\rho}} = \tfrac{2}{a}\ln\big( ( 4-3\bar{\rho} ) \big/ ( 2\bar{\theta} ( 0 ) +2-\bar{\rho} ) \big) \text{\,; and}\\[0.8mm]
		(ii)\quad\quad & \bar{\theta}(t)=\big(\bar{\theta} ( 0 ) - (\bar{\rho}/2-1) \big)e^{at/2}+(\bar{\rho}/2-1) \:\text{ for all } t\in[0 \, , t_{1 {-} \bar{\rho}}] \text{\,.}
\end{aligned}
\]
\end{lemma}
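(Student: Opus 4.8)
The plan is to solve the gradient flow ODE explicitly in the anisotropic region, where $\bar{\varphi}(\cdot)$ is a concrete downward parabola, in direct analogy with the treatment of the first coordinate in Lemma~\ref{lemma:worst_coor1}. Recall that $\bar{\theta}(\cdot)$, the second coordinate of $\boldsymbol{\theta}(\cdot)$, satisfies $\tfrac{d}{dt}\bar{\theta}(t) = -\tfrac{d\bar{\varphi}}{dz}\big(\bar{\theta}(t)\big)$ since $f(\cdot)$ (Equation~\eqref{eq:worst_obj}) is additively separable. First I would check that the initial point lies strictly inside the open interval $\big(\tfrac{1}{2}\bar{\rho} - 1 \, , 1 - \bar{\rho}\big)$ on which the parabola formula for $\bar{\varphi}(\cdot)$ in Equation~\eqref{eq:worst_obj_axis_bar} is valid: $\bar{\theta}(0) = \theta_{s,2} > e^{-12}/2 - 1 \geq \tfrac{1}{2}\bar{\rho} - 1$ because $\bar{\rho} \leq e^{-12}/2$, and $\bar{\theta}(0) = \theta_{s,2} < e^{-12} - 1 < 1 - \bar{\rho}$ again because $\bar{\rho} \leq e^{-12}/2$. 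On that interval one has $\tfrac{d\bar{\varphi}}{dz}(z) = -\tfrac{1}{2}a\big(z - (\tfrac{1}{2}\bar{\rho} - 1)\big)$, so as long as the trajectory stays in it the flow reads $\tfrac{d}{dt}\bar{\theta}(t) = \tfrac{1}{2}a\big(\bar{\theta}(t) - (\tfrac{1}{2}\bar{\rho} - 1)\big)$.

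Next I would exhibit the solution of this linear initial value problem. Writing $y(t) = \bar{\theta}(t) - (\tfrac{1}{2}\bar{\rho} - 1)$ turns the equation into $\dot{y} = \tfrac{a}{2}y$, whose solution with the correct initial value is $y(t) = y(0)e^{at/2}$; hence the map $t \mapsto \big(\bar{\theta}(0) - (\tfrac{1}{2}\bar{\rho} - 1)\big)e^{at/2} + (\tfrac{1}{2}\bar{\rho} - 1)$ satisfies both the ODE and the initial condition. By the uniqueness part of Theorem~\ref{theorem:exist_unique}, together with continuity of $\bar{\theta}(\cdot)$, this explicit expression must coincide with $\bar{\theta}(\cdot)$ on the maximal interval over which $\bar{\theta}(t) \in \big(\tfrac{1}{2}\bar{\rho} - 1 \, , 1 - \bar{\rho}\big)$. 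Since $\bar{\theta}(0) - (\tfrac{1}{2}\bar{\rho} - 1) > 0$, this solution is strictly increasing (consistent with Lemma~\ref{lemma:worst_monotone}), so it reaches the level $1 - \bar{\rho}$ at exactly one time, found by solving $\big(\bar{\theta}(0) - (\tfrac{1}{2}\bar{\rho} - 1)\big)e^{at/2} + (\tfrac{1}{2}\bar{\rho} - 1) = 1 - \bar{\rho}$, i.e. $e^{at/2} = (4 - 3\bar{\rho})\big/\big(2\bar{\theta}(0) + 2 - \bar{\rho}\big)$, giving the finite positive value $t = \tfrac{2}{a}\ln\big((4 - 3\bar{\rho})\big/(2\bar{\theta}(0) + 2 - \bar{\rho})\big)$. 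Monotonicity and continuity then identify this with the infimum $t_{1 {-} \bar{\rho}}$ of Definition~\ref{def:worst_aniso_coor2_gf_time}, proving claim~\emph{(i)}, and the explicit formula is valid throughout $[0 \, , t_{1 {-} \bar{\rho}}]$, proving claim~\emph{(ii)}.

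The computation is routine, so the only points requiring care are the two bookkeeping matters above: confirming that $\bar{\theta}(0)$ lies strictly inside the open interval where the parabolic form of $\bar{\varphi}(\cdot)$ holds (so the flow genuinely has the claimed linear form starting at time zero, with no issue at the piecewise boundary), and the standard maximal-interval argument ensuring the explicit solution is not overtaken by a neighbouring branch of $\bar{\varphi}(\cdot)$ before attaining $1 - \bar{\rho}$ — here this is immediate, since the explicit solution is strictly increasing and reaches $1 - \bar{\rho}$ at the finite time $t_{1 {-} \bar{\rho}}$. I do not anticipate a serious obstacle; this lemma is one of the elementary building blocks of the worst-case analysis.
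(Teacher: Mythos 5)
Your proposal is correct and follows essentially the same route as the paper's proof: verify $\bar{\theta}(0)$ lies in the open interval where $\bar{\varphi}(\cdot)$ is the curvature-$(-a/2)$ parabola, solve the resulting linear ODE explicitly, and invoke uniqueness plus continuity to identify the exit time with $t_{1-\bar{\rho}}$. The algebra for the exit time and the verification of the initial condition both check out.
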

\begin{proof}
	Notice that  $\bar{\theta}(0)\in(\frac{\bar{\rho}}{2}-1,1-\bar{\rho})$.
	For any $t\in[0,\infty)$ such that $\bar{\theta}(t)\in(\frac{\bar{\rho}}{2}-1,1-\bar{\rho})$, we obtain:
	\[
	\tfrac{d\bar{\theta}}{dt}(t)=-\tfrac{d\bar{\varphi}}{dz}\big(\bar{\theta}(t)\big)=\tfrac{a}{2}\big(\bar{\theta}(t)-(\tfrac{\bar{\rho}}{2}-1)\big)\text{.}
	\]
	The function $t \mapsto \big(\bar{\theta}(0)-(\frac{1}{2}\bar{\rho}-1)\big)e^{at/2}+\big(\frac{1}{2}\bar{\rho}-1\big)$ is a solution to this initial value problem valid through $t \in \big( 0, \tfrac{2}{a}\ln( ( 4-3\bar{\rho} ) \big/ ( 2\bar{\theta} ( 0 ) +2-\bar{\rho} ) ) \big)$, and from uniqueness of the solution together with continuity of $\bar{\theta}(\cdot)$, we conclude that $\bar{\theta}(t) = \big(\bar{\theta}(0)-(\frac{1}{2}\bar{\rho}-1)\big)e^{at/2}+\big(\frac{1}{2}\bar{\rho}-1\big)$ for $t \in \big[ 0, \tfrac{2}{a}\ln( ( 4-3\bar{\rho} ) \big/ ( 2\bar{\theta} ( 0 ) +2-\bar{\rho} ) ) \big]$.
	This, along with the definition of $t_{1 - \bar{\rho}}$, implies that $t_{1-\bar{\rho}} = \tfrac{2}{a}\ln( ( 4-3\bar{\rho} ) \big/ ( 2\bar{\theta} ( 0 ) +2-\bar{\rho} ) ) $.
\end{proof}

Moving on to gradient descent, we provide a treatment analogous to that of gradient flow.
Namely, we recall that (by Lemma~\ref{lemma:worst_monotone}) the second coordinate of the trajectory is monotonically non-decreasing, consider the iteration at which it exits the range~$[ \bar{\rho} / 2 - 1 \, , 1 - \bar{\rho} )$, and present an explicit expression for the index of this iteration as well as the second coordinate of the gradient descent trajectory until the iteration is reached.
\begin{definition}
\label{def:worst_aniso_coor2_gd_iter}
Define $k_{1 {-} \bar{\rho}} := \inf \big\{ k \in \N \cup \{ 0 \} : \bar{\theta}_k \geq 1 - \bar{\rho} \big\}$.\textsuperscript{\normalfont{\ref{note:empty_inf}}}
\end{definition}
\begin{lemma}
\label{lemma:worst_aniso_coor2_gd}	
The following hold:
\[
\begin{aligned}
		(i)\quad\quad\:\: & k_{1 {-} \bar{\rho}} = \bigl\lceil \big( \ln(2-3\bar{\rho}/2)-\ln(\bar{\theta}_{0}+1-\bar{\rho}/2) \big) \big/ \ln(1+a\eta/2) \bigr\rceil \text{\,; and}\\[0.8mm]
		(ii)\quad\quad & \bar{\theta}_{k}=\big(\bar{\theta}_{0}-(\bar{\rho}/2-1)\big)(1+ a \eta / 2 )^{k}+(\bar{\rho} / 2 -1 ) \:\text{ for all } k \in \{0 , 1 , ...\, , k_{1 {-} \bar{\rho}} \} \text{\,.}
\end{aligned}
\]
\end{lemma}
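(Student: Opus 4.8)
The plan is to discretize the argument used for gradient flow in Lemma~\ref{lemma:worst_aniso_coor2_gf} (which in turn parallels the first-coordinate analysis in Lemma~\ref{lemma:worst_coor1}). First I would record that, by the hypotheses of Proposition~\ref{prop:worst} and the definition of~$\bar{\rho}$, the initial second coordinate $\bar{\theta}_0 = \theta_{s,2}$ lies strictly inside $(\bar{\rho}/2 - 1, 1 - \bar{\rho})$; on this interval the piecewise definition~\eqref{eq:worst_obj_axis_bar} gives $\bar{\varphi}(z) = \bar{\varphi}(\bar{\rho}/2 - 1) - \tfrac{1}{4} a (z - (\bar{\rho}/2 - 1))^2$, hence $\tfrac{d\bar{\varphi}}{dz}(z) = -\tfrac{1}{2} a (z - (\bar{\rho}/2 - 1))$. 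Substituting this into the gradient descent update for the second coordinate shows that whenever $\bar{\theta}_{k-1} \in [\bar{\rho}/2 - 1, 1 - \bar{\rho})$,
\[
\bar{\theta}_k - (\bar{\rho}/2 - 1) = (1 + a\eta/2)\,\big(\bar{\theta}_{k-1} - (\bar{\rho}/2 - 1)\big) .
\]

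Next I would prove property~(ii) by induction on~$k$. The base case $k = 0$ is trivial. For $1 \le k \le k_{1-\bar{\rho}}$ we have $k - 1 < k_{1-\bar{\rho}}$, so by Definition~\ref{def:worst_aniso_coor2_gd_iter} $\bar{\theta}_{k-1} < 1 - \bar{\rho}$, while monotonicity (Lemma~\ref{lemma:worst_monotone}) together with $\bar{\theta}_0 > \bar{\rho}/2 - 1$ gives $\bar{\theta}_{k-1} \ge \bar{\rho}/2 - 1$; the single-piece recursion above therefore applies and, with the induction hypothesis, yields $\bar{\theta}_k - (\bar{\rho}/2 - 1) = (\bar{\theta}_0 - (\bar{\rho}/2 - 1))(1 + a\eta/2)^k$, which is property~(ii). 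The same reasoning shows $k_{1-\bar{\rho}}$ is finite: otherwise the closed form would hold for every~$k$, yet its right-hand side tends to infinity, contradicting $\bar{\theta}_k < 1 - \bar{\rho}$ for all~$k$.

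Finally, for property~(i) I would use that, by~(ii), the map $k \mapsto \bar{\theta}_k - (\bar{\rho}/2 - 1) = (\bar{\theta}_0 + 1 - \bar{\rho}/2)(1 + a\eta/2)^k$ is strictly increasing on $\{0, \ldots, k_{1-\bar{\rho}}\}$, and that $\bar{\theta}_k \ge 1 - \bar{\rho}$ holds iff $(1 + a\eta/2)^k \ge (2 - 3\bar{\rho}/2)/(\bar{\theta}_0 + 1 - \bar{\rho}/2)$, i.e. iff $k \ge K := \big(\ln(2 - 3\bar{\rho}/2) - \ln(\bar{\theta}_0 + 1 - \bar{\rho}/2)\big)/\ln(1 + a\eta/2)$. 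Setting $\tilde{k} := \lceil K \rceil$: since $\tilde{k} - 1 < K$, if $k_{1-\bar{\rho}} < \tilde{k}$ the closed form would give $\bar{\theta}_{k_{1-\bar{\rho}}} < 1 - \bar{\rho}$, a contradiction, so $k_{1-\bar{\rho}} \ge \tilde{k}$; then the closed form is valid at $\tilde{k}$ and $\bar{\theta}_{\tilde{k}} \ge 1 - \bar{\rho}$ forces $k_{1-\bar{\rho}} \le \tilde{k}$. Hence $k_{1-\bar{\rho}} = \tilde{k}$, which is property~(i). None of these computations is hard; the only point needing attention — exactly as in the gradient-flow proof — is the bookkeeping guaranteeing that $\bar{\theta}_k$ never leaves $[\bar{\rho}/2 - 1, 1 - \bar{\rho})$ before iteration $k_{1-\bar{\rho}}$, so that a single quadratic piece of $\bar{\varphi}(\cdot)$ governs the whole recursion, together with the accompanying checks that $\bar{\theta}_0$ starts in that interval and that $k_{1-\bar{\rho}}$ is finite.
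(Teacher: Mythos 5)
Your proposal is correct and follows essentially the same route as the paper's proof: derive the one-step linear recursion for the shifted coordinate $\bar{\theta}_{k}-(\bar{\rho}/2-1)$ from the quadratic piece of $\bar{\varphi}(\cdot)$, unroll it to get the closed form of property~(ii), and then pin down $k_{1-\bar{\rho}}$ as the ceiling expression by a two-sided contradiction with its definition. Your version merely makes the induction bookkeeping and the finiteness of $k_{1-\bar{\rho}}$ more explicit, which the paper leaves implicit.
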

\begin{proof}
	Notice that  $\bar{\theta}_{0}\in(\frac{\bar{\rho}}{2}-1,1-\bar{\rho})$.
	For any $k\in\N$ such that $\bar{\theta}_{k-1}\in(\frac{\bar{\rho}}{2}-1,1-\bar{\rho})$, we obtain:
	\[
	\bar{\theta}_{k}=\bar{\theta}_{k-1}-\eta\tfrac{d\bar{\varphi}}{dz}(\bar{\theta}_{k-1})=\bar{\theta}_{k-1}+\tfrac{a}{2}\eta\big(\bar{\theta}_{k-1}-(\tfrac{\bar{\rho}}{2}-1)\big)\text{.}
	\]
	Subtract $(\tfrac{\bar{\rho}}{2}-1)$ from both sides of the equation:
	\[
	\big(\bar{\theta}_{k}-(\tfrac{\bar{\rho}}{2}-1)\big)=\big(\bar{\theta}_{k-1}-(\tfrac{\bar{\rho}}{2}-1)\big)+\tfrac{a}{2}\eta\big(\bar{\theta}_{k-1}-(\tfrac{\bar{\rho}}{2}-1)\big)\text{.}
	\]
	This leads us to:
	\[
	\big(\bar{\theta}_{k}-(\tfrac{\bar{\rho}}{2}-1)\big)=\big(\bar{\theta}_{k-1}-(\tfrac{\bar{\rho}}{2}-1)\big)(1+\tfrac{a}{2}\eta)\text{.}
	\]
	It follows that $\big(\bar{\theta}_{k}-(\tfrac{\bar{\rho}}{2}-1)\big)=\big(\bar{\theta}_{0}-(\tfrac{\bar{\rho}}{2}-1)\big)(1+\tfrac{a}{2}\eta)^{k}$ for any  $k\in\{0,1,...,k_{1 {-} \bar{\rho}}\}$.
	From the definition of $k_{1-\bar{\rho}}$ it must be equal to $\bigl\lceil \big( \ln(2-3\bar{\rho}/2)-\ln(\bar{\theta}_{0}+1-\bar{\rho}/2) \big) \big/ \ln(1+a\eta/2) \bigr\rceil$, as if it is smaller we get $\bar{\theta}_{k_{1-\bar{\rho}}} < 1 - \bar{\rho}$, and if it is larger we get $\bar{\theta}_{k_{1-\bar{\rho} - 1}} \geq 1 - \bar{\rho}$, both contradicting the definition of $k_{1-\bar{\rho}}$. 
\end{proof}

We conclude this subsubappendix by combining its results with Lemma~\ref{lemma:worst_coor1}, thereby showing that the gradient flow trajectory between initialization and time~$t_{1 {-} \bar{\rho}}$, and the gradient descent trajectory between initialization and iteration~$k_{1 {-} \bar{\rho}}$, both lie in the anisotropic region.
\begin{lemma}
\label{lemma:worst_aniso_periods}
It holds that $\big( \theta ( t ) , \bar{\theta} ( t ) \big) \in [ 0 \, , z_c ) \times [ \bar{\rho} / 2 - 1 \, , 1 - \bar{\rho} )$ for all $t \in [ 0 \, , t_{1 {-} \bar{\rho}} )$, and $( \theta_k , \bar{\theta}_k ) \in [ 0 \, , z_c ) \times [ \bar{\rho} / 2 - 1 \, , 1 - \bar{\rho} )$ for all $k \in \{ 0 , 1 , ...\, , k_{1 {-} \bar{\rho}} - 1 \}$.
\end{lemma}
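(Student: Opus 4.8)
The plan is to read off all four containments directly from the closed‑form descriptions of the two trajectories supplied by the preceding lemmas, using monotonicity (Lemma~\ref{lemma:worst_monotone}) for the lower bounds and the explicit formulas for the upper bounds. The second‑coordinate claims are the quick part. By Lemma~\ref{lemma:worst_monotone}, $\bar{\theta}(\cdot)$ and $(\bar{\theta}_k)_k$ are non‑decreasing, so $\bar{\theta}(t)\ge\bar{\theta}(0)=\theta_{s,2}$ and $\bar{\theta}_k\ge\bar{\theta}_0=\theta_{s,2}$; since the hypotheses of Proposition~\ref{prop:worst} give $\theta_{s,2}>e^{-12}/2-1$ and the definition of $\bar{\rho}$ gives $\bar{\rho}\le e^{-12}/2$, this yields $\bar{\theta}(t),\bar{\theta}_k\ge\bar{\rho}/2-1$, the required lower bound. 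The upper bound $\bar{\theta}(t)<1-\bar{\rho}$ for $t<t_{1-\bar{\rho}}$ is immediate from Definition~\ref{def:worst_aniso_coor2_gf_time} (infimum of the times at which $\bar{\theta}$ reaches $1-\bar{\rho}$), and the discrete analogue $\bar{\theta}_k<1-\bar{\rho}$ for $k<k_{1-\bar{\rho}}$ from Definition~\ref{def:worst_aniso_coor2_gd_iter}; one also notes $\theta_{s,2}<e^{-12}-1<1-\bar{\rho}$ so these infima are positive and the index ranges are non‑empty.

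The first‑coordinate claims need slightly more work: I must show $\theta(\cdot)$ (resp.\ $(\theta_k)_k$) has not yet left $[0,z_c)$ by the time $t_{1-\bar{\rho}}$ (resp.\ iteration $k_{1-\bar{\rho}}$) at which the second coordinate exits the anisotropic range. The lower bound $\theta>0$ is trivial from Lemma~\ref{lemma:worst_coor1} together with $\theta_{s,1}>0.5$. For the upper bound, the idea is to compare the exponential growth rate $a$ of the first coordinate with the rate $a/2$ of the second. For gradient flow, Lemmas~\ref{lemma:worst_coor1} and~\ref{lemma:worst_aniso_coor2_gf} give $\theta(t)=\theta(0)e^{at}$ as long as $\theta(t)\le z_c$, together with $t_{1-\bar{\rho}}=\tfrac{2}{a}\ln\!\big((4-3\bar{\rho})/(2\bar{\theta}(0)+2-\bar{\rho})\big)$; substituting $\theta_{s,1}<1$, $\theta_{s,2}>e^{-12}/2-1$ and $\bar{\rho}\le e^{-12}/2$ bounds $\theta(0)e^{a t_{1-\bar{\rho}}}$ by a fixed multiple of $e^{24}$, which is far below $z_c=be^{30}+1\ge 3e^{30}+1$ (the gap $e^6$ and the hypothesis $b\ge3$ leave ample room). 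In particular $t_{1-\bar{\rho}}\le a^{-1}\ln(z_c/\theta(0))$, so the formula $\theta(t)=\theta(0)e^{at}$ is valid on all of $[0,t_{1-\bar{\rho}})$ and stays strictly below $z_c$ there.

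For gradient descent the same comparison is effected through the elementary inequality $1+a\eta\le(1+a\eta/2)^2$. From Lemma~\ref{lemma:worst_aniso_coor2_gd}, $k_{1-\bar{\rho}}=\lceil x\rceil$ with $x=\big(\ln(2-3\bar{\rho}/2)-\ln(\bar{\theta}_0+1-\bar{\rho}/2)\big)/\ln(1+a\eta/2)$, hence $k_{1-\bar{\rho}}-1<x$ and $(1+a\eta/2)^{k_{1-\bar{\rho}}-1}<(2-3\bar{\rho}/2)/(\bar{\theta}_0+1-\bar{\rho}/2)$; squaring and using $1+a\eta\le(1+a\eta/2)^2$ gives $(1+a\eta)^{k_{1-\bar{\rho}}-1}\le(1+a\eta/2)^{2(k_{1-\bar{\rho}}-1)}<\big((2-3\bar{\rho}/2)/(\bar{\theta}_0+1-\bar{\rho}/2)\big)^2$, again a fixed multiple of $e^{24}$ and therefore below $z_c/\theta_{s,1}$. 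This forces $k_{1-\bar{\rho}}-1\le\lceil\ln(z_c/\theta_0)/\ln(1+a\eta)\rceil$, so by Lemma~\ref{lemma:worst_coor1} the identity $\theta_k=\theta_0(1+a\eta)^k$ holds for every $k\le k_{1-\bar{\rho}}-1$, and by monotonicity $\theta_k\le\theta_{k_{1-\bar{\rho}}-1}<z_c$ throughout that range.

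The main obstacle I anticipate is not conceptual but organizational: keeping the chain of crude numerical estimates mutually consistent, making sure that the constants built into $z_c$, $\bar{z}_c$ and $\bar{\rho}$ are generous enough (which the explicit ``cut points'' and the hypothesis $b\ge3$ are precisely designed to ensure), and handling the ceiling in the gradient‑descent case so that Lemma~\ref{lemma:worst_coor1}'s stated range of validity genuinely covers iterations $0,\dots,k_{1-\bar{\rho}}-1$. Everything else is a direct substitution into the formulas already established in Lemmas~\ref{lemma:worst_coor1}, \ref{lemma:worst_aniso_coor2_gf} and~\ref{lemma:worst_aniso_coor2_gd}.
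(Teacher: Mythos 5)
Your proposal is correct and follows essentially the same route as the paper's proof: monotonicity plus the definitions of $t_{1-\bar{\rho}}$ and $k_{1-\bar{\rho}}$ for the second coordinate, and a comparison of $t_{1-\bar{\rho}}$ (resp.\ $k_{1-\bar{\rho}}$) against the validity range of Lemma~\ref{lemma:worst_coor1} for the first. The only difference is cosmetic: where the paper states the key numerical inequalities as ``can be verified by recalling the assumptions,'' you spell out the verification (in particular via $1+a\eta\le(1+a\eta/2)^2$ in the discrete case), which is sound.
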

\begin{proof}
We start by proving the result for gradient flow.
By assumption it holds that $\bar{\theta}(0)\in[ \bar{\rho} / 2 - 1 \, , 1 - \bar{\rho} )$.
From monotonicity of $\bar{\theta}(\cdot)$ (Lemma~\ref{lemma:worst_monotone}) together with the definition of $t_{1-\bar{\rho}}$, we have that $\bar{\theta}(t)\in\big[\bar{\rho}/2-1,1-\bar{\rho})$ for all $t\in\big[0,t_{1-\bar{\rho}})$.
Recall that we assume $\theta(0)\in(0.5,1)$.
By Lemma~\ref{lemma:worst_coor1} we have that $\theta(t)\in\big[\theta(0),z_{c}\big)$ for all $t\in\big[0,\frac{1}{a}\ln\big(z_{c}/\theta(0)\big)\big)$.
By Lemma~\ref{lemma:worst_aniso_coor2_gf} $t_{1 {-} \bar{\rho}} = \tfrac{2}{a}\ln\big( ( 4-3\bar{\rho} ) \big/ ( 2\bar{\theta} ( 0 ) +2-\bar{\rho} ) \big)$.
Since $\tfrac{2}{a}\ln\big( ( 4-3\bar{\rho} ) \big/ ( 2\bar{\theta} ( 0 ) +2-\bar{\rho} ) \big) \leq \frac{1}{a}\ln\big(z_{c}/\theta(0)\big)$ (can be verified by recalling the assumptions on $a$, $z_c$, $\bar{\rho}$, $\theta(0)$ and $\bar{\theta}(0)$), it follows that $\theta(t)\in\big[0,z_{c}\big)$ for all $t\in\big[0,t_{1-\bar{\rho}})$.
In conclusion, we have shown that $\big( \theta ( t ) , \bar{\theta} ( t ) \big) \in [ 0 \, , z_c ) \times [ \bar{\rho} / 2 - 1 \, , 1 - \bar{\rho} )$ for all $t \in [ 0 \, , t_{1 {-} \bar{\rho}} )$.

Moving on to gradient descent,
by assumption it holds that $\bar{\theta}_{0}\in[ \bar{\rho} / 2 - 1 \, , 1 - \bar{\rho} )$.
From monotonicity of $( \bar{\theta}_k )_{k = 0}^\infty$ (Lemma~\ref{lemma:worst_monotone}) together with the definition of $k_{1-\bar{\rho}}$, we have that $\bar{\theta}_{k}\in\big[\bar{\rho}/2-1,1-\bar{\rho})$ for all $k \in \{ 0 , 1 , ...\, , k_{1 {-} \bar{\rho}} - 1 \}$.
Recall that we assume $\theta_{0}\in(0.5,1)$.
By Lemma~\ref{lemma:worst_coor1} we have that $\theta_{k}\in\big[\theta(0),z_{c}\big)$ for all $k\in\big\{0,1,...\,,\big\lceil\ln(z_{c}/\theta_{0})\big/\ln(1+a\eta)\big\rceil-1\big\}$.
By Lemma~\ref{lemma:worst_aniso_coor2_gd} $k_{1 {-} \bar{\rho}} = \bigl\lceil \big( \ln(2-3\bar{\rho}/2)-\ln(\bar{\theta}_{0}+1-\bar{\rho}/2) \big) \big/ \ln(1+a\eta/2) \bigr\rceil$.
Since $k_{1 {-} \bar{\rho}} \leq \big\lceil\ln(z_{c}/\theta_{0})\big/\ln(1+a\eta)\big\rceil$ (can be verified by recalling the assumptions on $a$, $z_c$, $\bar{\rho}$, $\theta_{0}$ and $\bar{\theta}_{0}$), it follows that $\theta_{k}\in\big[0,z_{c}\big)$ for all $k\in\{0,1,...\,,k_{1{-}\bar{\rho}}-1\}$.
In conclusion, we have shown that $( \theta_{k} , \bar{\theta}_{k} ) \in [ 0 \, , z_c ) \times [ \bar{\rho} / 2 - 1 \, , 1 - \bar{\rho} )$ for all $k\in\{0,1,...\,,k_{1{-}\bar{\rho}}-1\}$.
\end{proof}
\subsubsection{Transition Region} \label{app:proof:worst:transition}

The current subsubappendix analyzes the second coordinate of the gradient flow and gradient descent trajectories throughout the transition region, or more specifically, when the second coordinate is in the range~$[ 1 - \bar{\rho} \, , 1 )$. 
Beginning with gradient flow, we recall that (by Lemma~\ref{lemma:worst_monotone}) the second coordinate of the trajectory is monotonically non-decreasing, and consider the time at which it exits the range~$[ 1 - \bar{\rho} \, , 1 )$.
\begin{definition}
\label{def:worst_tran_coor2_gf_time}
Define $t_1 := \inf \{ t \geq 0 : \bar{\theta} ( t ) \geq 1 \}$.\textsuperscript{\normalfont{\ref{note:empty_inf}}}
\end{definition}
Using~$t_{1 {-} \bar{\rho}}$ from Definition~\ref{def:worst_aniso_coor2_gf_time}, Lemma~\ref{lemma:worst_tran_coor2_gf} below provides lower and upper bounds for~$t_1$, and an upper bound for the ratio between the second coordinate of the gradient flow trajectory at time~$t_1$, and its first coordinate at the same~time.
\begin{lemma}
\label{lemma:worst_tran_coor2_gf}
The following hold:
\[
\begin{aligned}
		(i)\quad\quad\:\: & t_{1 {-} \bar{\rho}}+ a^{-1} \ln\big( (4+\bar{\rho})/(4-3\bar{\rho})\big) \, \leq \, t_1 \, \leq \, t_{1 {-} \bar{\rho}}+ a^{-1} \ln\big( 1 / ( 1-\bar{\rho} ) \big) \text{\,; and}\\[0.8mm]
		(ii)\quad\quad & \bar{\theta} ( t_1 ) / \theta ( t_1 ) \, \leq \, 
		\big( ( \bar{\theta} ( 0 ) +1-\bar{\rho}/2 ) \big/ ( 2-3\bar{\rho}/2 ) \big)^2 \big/ \theta ( 0 ) \text{\,.}
\end{aligned}
\]
\end{lemma}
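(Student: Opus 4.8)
The plan is to reduce everything to a one-dimensional analysis of the second coordinate $\bar{\theta}(\cdot)$ inside the transition region $\bar{\theta}\in[1-\bar{\rho},1)$, sandwiching the traversal time between two elementary linear comparison ODEs via Lemma~\ref{lemma:gf_analysis_hairer}. Since the objective in Equation~\eqref{eq:worst_obj} is additively separable, $\bar{\theta}(\cdot)$ obeys $\tfrac{d}{dt}\bar{\theta}(t)=-\tfrac{d\bar{\varphi}}{dz}\bigl(\bar{\theta}(t)\bigr)$, and from the branch of Equation~\eqref{eq:worst_obj_axis_bar} valid for $z\in[1-\bar{\rho},1]$ one computes $-\tfrac{d\bar{\varphi}}{dz}(z)=az+\tfrac{1}{4}a\bar{\rho}^{-1}(1-z)^{2}$. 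By Lemma~\ref{lemma:worst_monotone} the trajectory enters this region precisely after $\bar{\theta}$ crosses $1-\bar{\rho}$, i.e.\ after time $t_{1-\bar{\rho}}$ (Definition~\ref{def:worst_aniso_coor2_gf_time}, whose explicit value is supplied by Lemma~\ref{lemma:worst_aniso_coor2_gf}), and leaves it when $\bar{\theta}$ reaches $1$, i.e.\ at time $t_{1}$ (Definition~\ref{def:worst_tran_coor2_gf_time}); thus $t_{1}-t_{1-\bar{\rho}}$ is exactly the time to drive $\bar{\theta}$ from $1-\bar{\rho}$ to $1$ under the above velocity field, where $\bar{\theta}(t_{1-\bar{\rho}})=1-\bar{\rho}$ and $\bar{\theta}(t_{1})=1$ by continuity.

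For the upper bound in part \emph{(i)}, I would use the crude velocity lower bound $-\tfrac{d\bar{\varphi}}{dz}(\bar{\theta})\ge a\bar{\theta}$ and compare, through Lemma~\ref{lemma:gf_analysis_hairer}, against the solution of $\tfrac{d}{dt}q=aq$ started at $1-\bar{\rho}$ at time $t_{1-\bar{\rho}}$; that solution reaches $1$ after an extra $a^{-1}\ln\bigl(1/(1-\bar{\rho})\bigr)$ of time, and since $\bar{\theta}$ dominates it, $\bar{\theta}$ reaches $1$ no later, giving $t_{1}\le t_{1-\bar{\rho}}+a^{-1}\ln\bigl(1/(1-\bar{\rho})\bigr)$. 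For the lower bound, the key observation is that inside the region $(1-\bar{\theta})^{2}\le\bar{\rho}(1-\bar{\theta})$, whence the velocity is bounded \emph{above} by $a\bar{\theta}+\tfrac{1}{4}a(1-\bar{\theta})=\tfrac{3}{4}a\bar{\theta}+\tfrac{1}{4}a$; comparing against the solution of $\tfrac{d}{dt}q=\tfrac{3}{4}aq+\tfrac{1}{4}a$ from $1-\bar{\rho}$, which reaches $1$ after an extra $\tfrac{4}{3a}\ln\bigl(4/(4-3\bar{\rho})\bigr)$ of time, yields $t_{1}\ge t_{1-\bar{\rho}}+\tfrac{4}{3a}\ln\bigl(4/(4-3\bar{\rho})\bigr)$. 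The claimed bound then reduces to the elementary inequality $\tfrac{4}{3}\ln\bigl(4/(4-3\bar{\rho})\bigr)\ge\ln\bigl((4+\bar{\rho})/(4-3\bar{\rho})\bigr)$, equivalently $(4+\bar{\rho})(4-3\bar{\rho})^{1/3}\le 4^{4/3}$, which I would verify by noting that the left-hand side, viewed as a function of $\bar{\rho}$, equals $4^{4/3}$ at $\bar{\rho}=0$, has vanishing derivative there, and is concave.

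For part \emph{(ii)}, note first that $\bar{\theta}(t_{1})=1$ by continuity and Definition~\ref{def:worst_tran_coor2_gf_time}. For the first coordinate, I would use the part-\emph{(i)} upper bound on $t_{1}$ together with the explicit $t_{1-\bar{\rho}}$ from Lemma~\ref{lemma:worst_aniso_coor2_gf} to check that $t_{1}\le a^{-1}\ln\bigl(z_{c}/\theta(0)\bigr)$ --- this holds with room to spare because $z_{c}=be^{30}+1$ is enormous, exactly as in the proof of Lemma~\ref{lemma:worst_aniso_periods} --- so Lemma~\ref{lemma:worst_coor1} gives $\theta(t_{1})=\theta(0)e^{at_{1}}$. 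Since $t_{1}\ge t_{1-\bar{\rho}}$ by Lemma~\ref{lemma:worst_monotone}, and since $e^{at_{1-\bar{\rho}}}=\bigl((2-3\bar{\rho}/2)/(\bar{\theta}(0)+1-\bar{\rho}/2)\bigr)^{2}$ by the explicit value of $t_{1-\bar{\rho}}$, we obtain $\theta(t_{1})\ge\theta(0)\bigl((2-3\bar{\rho}/2)/(\bar{\theta}(0)+1-\bar{\rho}/2)\bigr)^{2}$, and dividing $\bar{\theta}(t_{1})=1$ by this delivers the stated ratio bound. The main obstacle is the lower bound in part \emph{(i)}: one has to choose the velocity over-estimate sharply enough (the constant-in-$\bar{\theta}$ bound $-\tfrac{d\bar{\varphi}}{dz}(\bar{\theta})\le a(1+\bar{\rho}/4)$ is too weak) and then dispatch the slightly delicate scalar inequality $(4+\bar{\rho})(4-3\bar{\rho})^{1/3}\le 4^{4/3}$; the two comparison-ODE integrations, the continuity arguments, and the range check $t_{1}\le a^{-1}\ln\bigl(z_{c}/\theta(0)\bigr)$ are all routine.
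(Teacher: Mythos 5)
Your proposal is correct and follows essentially the same route as the paper: crude lower bound $\dot{\bar{\theta}}\ge a\bar{\theta}$ for the upper bound on $t_1$, a linear majorant of the velocity for the lower bound, and for part \emph{(ii)} the identities $\bar{\theta}(t_1)=1$, $\theta(t_1)=\theta(0)e^{at_1}$ combined with $t_1\ge t_{1-\bar{\rho}}$ and the explicit value of $t_{1-\bar{\rho}}$. The only deviation is in the lower bound of part \emph{(i)}: the paper bounds the cubic-correction term by the constant $\tfrac{a\bar{\rho}}{4}$ (via $(1-z)^2\le\bar{\rho}^2$), giving $\dot{q}\le aq+\tfrac{a\bar{\rho}}{4}$ which integrates to exactly the stated constant $(4+\bar{\rho})/(4-3\bar{\rho})$, whereas you use $(1-z)^2\le\bar{\rho}(1-z)$ to get $\dot{q}\le\tfrac{3}{4}aq+\tfrac{a}{4}$ and must then dispatch the extra scalar inequality $(4+\bar{\rho})(4-3\bar{\rho})^{1/3}\le 4^{4/3}$ --- which your argument does correctly, since $\tfrac{d}{d\bar{\rho}}\big[(4+\bar{\rho})(4-3\bar{\rho})^{1/3}\big]=-4\bar{\rho}(4-3\bar{\rho})^{-2/3}\le 0$ --- so both variants are valid.
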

\begin{proof}
	We start by proving property~\emph{(i)}.
	Lemma~\ref{lemma:worst_aniso_coor2_gf} implies  $\bar{\theta}(t_{1 {-} \bar{\rho}})=1-\bar{\rho}$.
	Recall that by Lemma~\ref{lemma:worst_monotone}, $\bar{\theta}(\cdot)$ is monotonically non-decreasing.
	For any $t\in[0,\infty)$ such that $\bar{\theta}(t)\in[1-\bar{\rho},1]$, we have:
	\begin{equation}
		\tfrac{d\bar{\theta}}{dt}(t)=-\tfrac{d\bar{\varphi}}{dz}\big(\bar{\theta}(t)\big)=a\bar{\theta}(t)+\tfrac{a}{4\bar{\rho}}\big(\bar{\theta}(t)-1\big)^{2}\text{.}
		\label{eq_worst_transition_derivative_q_bar}
	\end{equation}
	By lower bounding Equation~\eqref{eq_worst_transition_derivative_q_bar} we get $\tfrac{d\bar{\theta}}{dt}(t)\geq a\bar{\theta}(t)$ (which implies $t_1 < \infty$). 
	Dividing both sides of this inequality by $\bar{\theta} ( t )$ and integrating over time from $t_{1 {-} \bar{\rho}}$ until $t_{1}$ we have that $\bar{\theta}(t_{1})\geq(1-\bar{\rho})e^{a(t_{1}-t_{1-\bar{\rho}})}$.
	From continuity of $\bar{\theta}(\cdot)$, the definition of $t_{1}$ and the fact that $t_{1}<\infty$, we have that $\boldsymbol{\theta}(t_1)=1$.
	This implies $(1-\bar{\rho})e^{a(t_{1}-t_{1-\bar{\rho}})}\leq1$.
	We may conclude $t_{1} \leq t_{1{-}\bar{\rho}}+a^{-1}\ln\big(1/(1-\bar{\rho})\big)$.
	We now turn to upper bound Equation~\eqref{eq_worst_transition_derivative_q_bar}.
	For any $t\in[0,\infty)$ such that $\bar{\theta}(t)\in[1-\bar{\rho},1]$:
	\[
	\tfrac{d\bar{\theta}}{dt}(t)\leq a\bar{\theta}(t)+\tfrac{a}{4\bar{\rho}}\big(\bar{\rho}\big)^{2}=a\bar{\theta}(t)+\tfrac{a\bar{\rho}}{4}\text{.}
	\]
	Dividing both sides of this inequality by $\bar{\theta} ( t )$ and integrating over time from $t_{1 {-} \bar{\rho}}$ until $t_{1}$, we have that $\bar{\theta}(t_{1})\leq\big(1-\tfrac{3}{4}\bar{\rho}\big)e^{a(t_{1}-t_{1-\bar{\rho}})}-\tfrac{\bar{\rho}}{4}$.
	Since $\boldsymbol{\theta}(t_1)=1$, this implies $\big(1-\tfrac{3}{4}\bar{\rho}\big)e^{a(t_{1}-t_{1-\bar{\rho}})}-\tfrac{\bar{\rho}}{4} \geq 1$.
	We may conclude $t_{1} \geq t_{1{-}\bar{\rho}}+a^{-1}\ln\big((4+\bar{\rho})/(4-3\bar{\rho})\big)$.
	
	Moving on to property~\emph{(ii)}, by property~\emph{(i)} we know that $t_1 < \infty$.
	Recall that $\bar{\theta}(t_{{ 1}})=1$.
	Lemma~\ref{lemma:worst_aniso_coor2_gf} showed that $t_{1 {-} \bar{\rho}} = \tfrac{2}{a}\ln\big( ( 4-3\bar{\rho} ) \big/ ( 2\bar{\theta} ( 0 ) +2-\bar{\rho} ) \big)$.
	Lemma~\ref{lemma:worst_coor1} ensures $\theta(t)=\theta_{0}e^{at}$ for $t\in\big[0,\ln(\frac{z_{c}}{\theta(0)})\big/a\big]$.
	Notice that $t_{{ 1}} \leq t_{1{-}\bar{\rho}}+a^{-1}\ln\big(1/(1-\bar{\rho})\big) \leq \ln(\frac{z_{c}}{\theta(0)})\big/a$, where the first inequality follows from property~\emph{(i)}.
	It holds that:
	\[
	\bar{\theta}(t_{1})\big/\theta(t_{1})=1\big/\big(\theta(0)e^{at_{1}}\big)\leq1\big/\big(\theta(0)e^{at_{1-\bar{\rho}}}\big)\leq\big(\tfrac{2\vbox{\kern0.2ex \hbox{\ensuremath{{\scriptstyle \bar{\theta}}}}}(0)+2-\bar{\rho}}{4-3\bar{\rho}}\big)^{2}\big/\theta(0)\text{.}
	\]
\end{proof}

Moving on to gradient descent, we recall that here too the second coordinate of the trajectory is monotonically non-decreasing (see Lemma~\ref{lemma:worst_monotone}), and consider the iteration at which this second coordinate exits the range~$[ 1 - \bar{\rho} \, , 1 )$. 
\begin{definition}
\label{def:worst_tran_coor2_gd_iter}
Define $k_1 := \inf \big\{ k \in \N \cup \{ 0 \} : \bar{\theta}_k \geq 1 \big\}$.\textsuperscript{\normalfont{\ref{note:empty_inf}}}
\end{definition}
Using~$k_{1 {-} \bar{\rho}}$ from Definition~\ref{def:worst_aniso_coor2_gd_iter}, Lemma~\ref{lemma:worst_tran_coor2_gd} below provides an upper bound for~$k_1$, and a lower bound for the ratio between the second coordinate of the gradient descent trajectory at iteration~$k_1$, and its first coordinate at the same iteration.
\begin{lemma}
\label{lemma:worst_tran_coor2_gd}
The following hold:
\[
\begin{aligned}(i)\quad\quad\:\: & k_{1}\,\leq\,k_{1{-}\bar{\rho}}+\big\lceil \max\big\{0, -\ln(\bar{\theta}_{k_{1-\bar{\rho}}})\big/\ln(1+a\eta) \big\} \big\rceil\text{\,; and}\\[0.8mm]
	(ii)\quad\quad & \bar{\theta}_{k_{1}}/\theta_{k_{1}}\,\geq\,\big((\bar{\theta}_{0}+1-\bar{\rho}/2)\big/(2-3\bar{\rho}/2)\big)^{2}\big/\big(\theta_{0}(1-a\eta/10)\big)\text{\,.}
\end{aligned}
\]
\end{lemma}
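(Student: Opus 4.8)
The plan is to mirror the proof of Lemma~\ref{lemma:worst_tran_coor2_gf}, replacing the continuous flow by the gradient descent recursion and accounting for discretization. In the transition region the second coordinate obeys $\bar\theta_{k+1}=\bar\theta_k-\eta\,\tfrac{d\bar\varphi}{dz}(\bar\theta_k)=(1+a\eta)\bar\theta_k+\tfrac{a\eta}{4\bar\rho}(\bar\theta_k-1)^2$ whenever $\bar\theta_k\in[1-\bar\rho,1)$, using the form of $\bar\varphi(\cdot)$ on $[1-\bar\rho,1]$ from Equation~\eqref{eq:worst_obj_axis_bar}. For part~(i), by Lemma~\ref{lemma:worst_monotone} the iterates are non-decreasing, so for $k_{1-\bar\rho}\le k<k_1$ we have $\bar\theta_k\in[1-\bar\rho,1)$ and the recursion gives $\bar\theta_{k+1}\ge(1+a\eta)\bar\theta_k$; iterating from $\bar\theta_{k_{1-\bar\rho}}\ge 1-\bar\rho>0$ shows the sequence exceeds~$1$ after at most $\lceil\max\{0,-\ln\bar\theta_{k_{1-\bar\rho}}/\ln(1+a\eta)\}\rceil$ further steps, which is exactly the claimed bound on $k_1-k_{1-\bar\rho}$ (this also establishes $k_1<\infty$ and that the iterates stay in the transition region until step~$k_1$).

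For part~(ii) I would first note $\bar\theta_{k_1}\ge 1$ (definition of $k_1$), and that Lemma~\ref{lemma:worst_coor1} applies up to step~$k_1$ --- this requires a crude check, via part~(i), Lemma~\ref{lemma:worst_aniso_coor2_gd}(i) and the parameter assumptions, that $k_1\le\lceil\ln(z_c/\theta_0)/\ln(1+a\eta)\rceil$, so that $\theta_{k_1}=\theta_0(1+a\eta)^{k_1}$. Then $\bar\theta_{k_1}/\theta_{k_1}\ge 1/(\theta_0(1+a\eta)^{k_1})$, and the task reduces to showing $(1+a\eta)^{k_1}\le(1-a\eta/10)\,X^2$ with $X:=(2-3\bar\rho/2)/(\bar\theta_0+1-\bar\rho/2)$ (note $X^{-2}$ equals the squared factor in the lemma's bound). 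Writing $r=1+a\eta$, $s=1+a\eta/2$, $q=\ln r/\ln s$, the two ingredients are: (a) a quantitative strict inequality $q<2$, namely $2-q\ge a\eta/3$, obtained from $r=s^2-a^2\eta^2/4$ together with $-\ln(1-x)\ge x$ and $\ln s\le a\eta/2$; and (b) from Lemma~\ref{lemma:worst_aniso_coor2_gd}(ii), $\bar\theta_{k_{1-\bar\rho}-1}<1-\bar\rho$ translates into $s^{k_{1-\bar\rho}-1}<X$, hence $r^{k_{1-\bar\rho}-1}=(s^{k_{1-\bar\rho}-1})^{q}<X^{q}$. Combining (b) with $r^{k_1-k_{1-\bar\rho}}\le r/(1-\bar\rho)$ (from part~(i) and $\bar\theta_{k_{1-\bar\rho}}\ge 1-\bar\rho$) yields $r^{k_1}\le r^2X^q/(1-\bar\rho)$, so it suffices to prove $(1-\bar\rho)(1-a\eta/10)\,X^{2-q}\ge r^2$.

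The main obstacle is this final inequality --- matching the $O(1)$ constants through the discretization. It goes through because $X$ is \emph{large}: since $\bar\theta_0+1-\bar\rho/2<e^{-12}<(2-3\bar\rho/2)e^{-12}$ (using $\bar\theta_0=\theta_{s,2}\in(e^{-12}/2-1,e^{-12}-1)$) we get $\ln X>12$, whence $X^{2-q}\ge e^{(a\eta/3)\cdot 12}=e^{4a\eta}$ by (a); moreover $\bar\rho\le\epsilon/(2b)$ together with $\eta\ge\frac{10^{14}}{a}e^{-a\tilde t}\epsilon$ and the given upper bound on $\tilde t$ force $\bar\rho\ll a\eta$, so that $(1-\bar\rho)(1-a\eta/10)e^{4a\eta}\ge e^{2a\eta}\ge(1+a\eta)^2=r^2$ for $a\eta\le 1/6$ (which holds throughout this subsubappendix). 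I expect the bookkeeping relating $s^{k}$, $r^{k}$ and the ceiling from part~(i) --- i.e.\ ensuring no constant factor exceeding $e^{4a\eta}$ creeps in --- to be the only delicate point, with the rest a direct discretization of the argument in Lemma~\ref{lemma:worst_tran_coor2_gf}.
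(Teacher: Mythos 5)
Your proposal is correct and follows essentially the same route as the paper's proof: part (i) via the iteration $\bar{\theta}_{k+1}\geq(1+a\eta)\bar{\theta}_k$ in the transition region, and part (ii) by reducing to $(1+a\eta)^{k_1}\leq(1-a\eta/10)X^2$, converting the base-$(1+a\eta/2)$ count from Lemma~\ref{lemma:worst_aniso_coor2_gd} via the exponent $q=\ln(1+a\eta)/\ln(1+a\eta/2)\leq 2-a\eta/3$, and absorbing the remaining $(1+a\eta)^2/(1-\bar{\rho})$ factor using $\ln X>12$ and $\bar{\rho}\leq a\eta/10$. The only cosmetic difference is that the paper obtains the bound $q\leq 2-a\eta/3$ from Topsøe's inequality for $\ln(1+z)$ whereas you derive it from $1+a\eta=(1+a\eta/2)^2-a^2\eta^2/4$; both are valid.
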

\begin{proof}
	We start by proving property~\emph{(i)}.
	By the definition of $\bar{\theta}_{k_{1 {-} \bar{\rho}}}$ (and from the fact that it is finite from Lemma~\ref{lemma:worst_aniso_coor2_gd}), we know that $\bar{\theta}_{k_{1 {-} \bar{\rho}}}\geq1-\bar{\rho}$.
	Recall that by Lemma~\ref{lemma:worst_monotone} $( \bar{\theta}_k )_{k = 0}^\infty$ is monotonically non-decreasing.
	Notice that it is possible for $k_{{ 1}}$ to be equal to $k_{1 {-} \bar{\rho}}$; this will be the case if $\bar{\theta}_{k_{1 {-} \bar{\rho}}}\geq1$.
	For any $k\in\N$ such that $\bar{\theta}_{{ k-1}}\in[1-\bar{\rho},1]$, we obtain:
	\[
	\bar{\theta}_{k}
	=\bar{\theta}_{{ k-1}}\hspace{-0.75mm}-\hspace{-0.5mm}\eta\tfrac{d\bar{\varphi}}{dz}(\bar{\theta}_{{ k-1}})
	=\bar{\theta}_{{ k-1}}\hspace{-0.75mm}+\hspace{-0.5mm}\eta\big(a\bar{\theta}_{{ k-1}}\hspace{-0.75mm}+\hspace{-0.5mm}\tfrac{a}{4\bar{\rho}}(\bar{\theta}_{{ k-1}}\hspace{-0.75mm}-\hspace{-0.5mm}1)^{2}\big)
	\geq\bar{\theta}_{{ k-1}}\hspace{-0.75mm}+\hspace{-0.5mm}a\eta\bar{\theta}_{{ k-1}}
	=\bar{\theta}_{{ k-1}}(1\hspace{-0.5mm}+\hspace{-0.5mm}a\eta)
	\text{.}
	\]
	It follows that $\bar{\theta}_{{ k}}\geq\bar{\theta}_{{\scriptstyle k}_{1-\bar{\rho}}}(1+a\eta)^{k-k_{1-\bar{\rho}}}$ for any $k \in \{ k_{1-\bar{\rho}} , k_{1-\bar{\rho}} + 1 , ... , k_{1} \}$.
	Plugging $k=k_{1{-}\bar{\rho}}+\big\lceil\max\{0,-\ln(\bar{\theta}_{k_{1-\bar{\rho}}})\big/\ln(1+a\eta)\}\big\rceil$ yields $\bar{\theta}_{{\scriptstyle k}_{1-\bar{\rho}}}(1+a\eta)^{k-k_{1-\bar{\rho}}} \geq 1$.
	From monotonicity of $( \bar{\theta}_k )_{k = 0}^\infty$, we may conclude that $k_{{ 1}} \leq k_{1{-}\bar{\rho}}+\big\lceil\max\{0,-\ln(\bar{\theta}_{k_{1-\bar{\rho}}})\big/\ln(1+a\eta)\}\big\rceil$, thereby finishing the proof of property~\emph{(i)}.
	
	Moving on to property~\emph{(ii)},
	with the help of Lemma~\ref{lemma:worst_coor1}, Lemma~\ref{lemma:worst_aniso_coor2_gd} and property~\emph{(i)}, we obtain:
	\[
	\begin{aligned}\tfrac{\theta_{k_{1}}}{\vbox{\kern0.2ex \hbox{\ensuremath{{\scriptstyle \bar{\theta}}}}}_{k_{1}}} & \leq\theta_{{\scriptstyle k}_{{\ensuremath{{\scriptscriptstyle 1}}}}}\\[0.5mm]
		& =\theta_{0}(1+a\eta)^{k_{1}}\\[1.5mm]
		& =\theta_{0}\exp\Big(\ln(1+a\eta)k_{1}\Big)\\
		& \leq\theta_{0}\exp\bigg(\ln(1+a\eta)\Big(\Bigl\lceil\tfrac{\ln(2-3\bar{\rho}/2)-\ln(\bar{\theta}_{0}+1-\bar{\rho}/2)}{\ln(1+a\eta/2)}\Bigr\rceil+\Bigl\lceil\tfrac{-\ln(\bar{\theta}_{{\scriptstyle k}_{1-\bar{\rho}}})}{\ln(1+a\eta)}\Bigr\rceil\Big)\bigg)\\
		& \leq\theta_{0}\exp\bigg(\ln(1+a\eta)\Big(2+\tfrac{\ln(2-3\bar{\rho}/2)-\ln(\bar{\theta}_{0}+1-\bar{\rho}/2)}{\ln(1+a\eta/2)}\,-\,\tfrac{\ln(\bar{\theta}_{{\scriptstyle k}_{1-\bar{\rho}}})}{\ln(1+a\eta)}\,\Big)\bigg)\\[-0.75mm]
		&
		=\theta_{0}\Big(\tfrac{2-3\bar{\rho}/2}{\vbox{\kern0.2ex \hbox{\ensuremath{{\scriptstyle \bar{\theta}}}}}_{0}+1-\bar{\rho}/2}\Big)^{\frac{\ln(1+a\eta)}{\ln(1+a\eta/2)}}\tfrac{1}{\vbox{\kern0.2ex \hbox{\ensuremath{{\scriptstyle \bar{\theta}}}}}_{{\scriptstyle k}_{1-\bar{\rho}}}}(1+a\eta)^{2}\\[-1mm]
		&
		\leq \tfrac{\theta_{0}}{1-\bar{\rho}}\Big(\tfrac{2-3\bar{\rho}/2}{\vbox{\kern0.2ex \hbox{\ensuremath{{\scriptstyle \bar{\theta}}}}}_{0}+1-\bar{\rho}/2}\Big)^{\frac{\ln(1+a\eta)}{\ln(1+a\eta/2)}}(1+a\eta)^{2}\text{.}
	\end{aligned}
	\]
	Equation~(3) in \cite{topsoe2004some} states that $\frac{2z}{2+z}\leq\ln(1+z)\leq\frac{z}{2}\cdot\frac{2+z}{1+z}$ for all $z\geq0$.
	This, along with the fact that $\eta \leq \frac{1}{6a}$ (see Subsubappendix~\ref{app:proof:worst:prelim}), leads us to $\frac{\ln(1+a\eta)}{\ln(1+a\eta/2)}\leq\big(\frac{a\eta}{2}\cdot\frac{2+a\eta}{1+a\eta}\big)\big/\big(\frac{a\eta}{2+a\eta/2}\big)=\frac{4+3a\eta+(a\eta)^{2}/2}{2+2a\eta}=2+\frac{(a\eta)^{2}/2-a\eta}{2+2a\eta}\leq2-a\eta/3$. Thus:
	\[
	\tfrac{\theta_{k_{1}}}{\vbox{\kern0.2ex \hbox{\ensuremath{{\scriptstyle \bar{\theta}}}}}_{k_{1}}}\leq\tfrac{\theta_{0}}{1-\bar{\rho}}\Big(\tfrac{2-3\bar{\rho}/2}{\vbox{\kern0.2ex \hbox{\ensuremath{{\scriptstyle \bar{\theta}}}}}_{0}+1-\bar{\rho}/2}\Big)^{2-a\eta/3}(1+a\eta)^{2}=\tfrac{\theta_{0}}{1-\bar{\rho}}\Big(\tfrac{2-3\bar{\rho}/2}{\vbox{\kern0.2ex \hbox{\ensuremath{{\scriptstyle \bar{\theta}}}}}_{0}+1-\bar{\rho}/2}\Big)^{2}\Big(\tfrac{\bar{\theta}_{0}+1-\bar{\rho}/2}{2-3\bar{\rho}/2}\Big)^{a\eta/3}(1+a\eta)^{2}\text{.}
	\]
	By assumption on $\bar{\theta}_{0}$ and $\bar{\rho}$, namely $\bar{\theta}_{0} \leq e^{-12}-1$ and $\bar{\rho} \in [0 , e^{-12}/2]$, we may bound as follows:
	\[
	\tfrac{\theta_{k_{1}}}{\vbox{\kern0.2ex \hbox{\ensuremath{{\scriptstyle \bar{\theta}}}}}_{k_{1}}}\leq\tfrac{\theta_{0}}{1-\bar{\rho}}\Big(\tfrac{2-3\bar{\rho}/2}{\vbox{\kern0.2ex \hbox{\ensuremath{{\scriptstyle \bar{\theta}}}}}_{0}+1-\bar{\rho}/2}\Big)^{2}\big(e^{-12}\big)^{a\eta/3}(1+a\eta)^{2}=\tfrac{\theta_{0}}{1-\bar{\rho}}\Big(\tfrac{2-3\bar{\rho}/2}{\vbox{\kern0.2ex \hbox{\ensuremath{{\scriptstyle \bar{\theta}}}}}_{0}+1-\bar{\rho}/2}\Big)^{2}e^{-4a\eta}(1+a\eta)^{2}\text{.}
	\]
	Since $1+z \leq e^{z}$ for all $z\geq0$, we have that:
	\[
	\tfrac{\theta_{k_{1}}}{\vbox{\kern0.2ex \hbox{\ensuremath{{\scriptstyle \bar{\theta}}}}}_{k_{1}}}\leq\tfrac{\theta_{0}}{1-\bar{\rho}}\Big(\tfrac{2-3\bar{\rho}/2}{\vbox{\kern0.2ex \hbox{\ensuremath{{\scriptstyle \bar{\theta}}}}}_{0}+1-\bar{\rho}/2}\Big)^{2}\tfrac{1}{1+4a\eta}(1+a\eta)^{2}=\tfrac{\theta_{0}}{1-\bar{\rho}}\Big(\tfrac{2-3\bar{\rho}/2}{\vbox{\kern0.2ex \hbox{\ensuremath{{\scriptstyle \bar{\theta}}}}}_{0}+1-\bar{\rho}/2}\Big)^{2}\Big(1-\tfrac{4}{1+4a\eta}a\eta\Big)\Big(1+2a\eta+(a\eta)^{2}\Big)\text{.}
	\]
	Once again relying on the fact that $\eta \leq \frac{1}{6a}$, we obtain: 
	\begin{equation*}
	\tfrac{\theta_{k_{1}}}{\vbox{\kern0.2ex \hbox{\ensuremath{{\scriptstyle \bar{\theta}}}}}_{k_{1}}}\leq\tfrac{\theta_{0}}{1-\bar{\rho}}\Big(\tfrac{2-3\bar{\rho}/2}{\vbox{\kern0.2ex \hbox{\ensuremath{{\scriptstyle \bar{\theta}}}}}_{0}+1-\bar{\rho}/2}\Big)^{2}\big(1-\tfrac{12}{5}a\eta\big)\big(1+\tfrac{11}{5}a\eta\big)\leq\tfrac{\theta_{0}}{1-\bar{\rho}}\Big(\tfrac{2-3\bar{\rho}/2}{\vbox{\kern0.2ex \hbox{\ensuremath{{\scriptstyle \bar{\theta}}}}}_{0}+1-\bar{\rho}/2}\Big)^{2}(1-a\eta/5)
	\text{.}
	\label{eq:worst_gd_ratio_intermediate}
	\end{equation*}
	We will show that $({1-a\eta/5})\big/({1-\bar{\rho}})\leq1-a\eta/10$, thereby finishing the proof, as this leads to $\bar{\theta}_{k_{1}}/\theta_{k_{1}}\,\geq\,\big((\bar{\theta}_{0}+1-\bar{\rho}/2)\big/(2-3\bar{\rho}/2)\big)^{2}\big/\big(\theta_{0}(1-a\eta{/10})\big)$.
	It holds that:
	\begin{equation}
	a\tilde{t}\leq2\ln\big(\tfrac{\,~2-3\bar{\rho}/2}{\vbox{\kern0.2ex \hbox{\ensuremath{{\scriptstyle \bar{\theta}}}}}_{0}-(\bar{\rho}/2-1)}\big){+}\ln\big(\tfrac{1+\bar{\rho}/4}{1-3\bar{\rho}/4}\big){+}\ln(b)\leq2\ln\big(\tfrac{2}{e^{-12}/4}\big){+}\ln\big(e\big){+}\ln(b)\leq30+\ln(b)\text{,}
	\label{eq:worst_at_bound}
	\end{equation}
	where the first transition follows from the upper bound for $\tilde{t}$;
	and the second follows from the definition $\bar{\rho} := \min \{ e^{-12} / 2 , {\epsilon/2b} \}$ together with the assumption  $\bar{\theta}_{0} \,{\in}\, ( e^{- 12} / 2 \,{-}\, 1 ,$ $e^{- 12} \,{-}\, 1 )$.
	The following holds:
	\[
	\bar{\rho}\leq\tfrac{\epsilon}{2b}=10^{13}\epsilon\cdot\tfrac{1}{10^{13}\cdot2b}\leq10^{13}\epsilon\cdot\tfrac{1}{e^{30}b}=10^{13}\epsilon\cdot e^{-(30+\ln(b))}\leq10^{13}\epsilon\cdot e^{-a\tilde{t}}\leq a\eta/10\text{,}
	\]
	where the first transition follows from the definition of $\bar{\rho}$;
	the fifth from Equation~\eqref{eq:worst_at_bound};
	and the last from the assumption $ \eta \geq e^{-a\tilde{t}} \cdot 10^{14}\epsilon / a  $.
	It follows that:
	\[
	\tfrac{1-a\eta/5}{1-\bar{\rho}}\leq\tfrac{1-a\eta/5}{1-a\eta/10}=1-\tfrac{a\eta/10}{1-a\eta/10}\leq1-a\eta/10\text{.}
	\]
\end{proof}

We conclude this subsubappendix by combining its results with Lemma~\ref{lemma:worst_coor1}, thereby showing that the gradient flow trajectory between times $t_{1 {-} \bar{\rho}}$ and~$t_1$, and the gradient descent trajectory between iterations $k_{1 {-} \bar{\rho}}$ and~$k_1$, both lie in the transition region.
\begin{lemma}
\label{lemma:worst_tran_periods}
It holds that $\big( \theta ( t ) , \bar{\theta} ( t ) \big) \in [ 0 \, , z_c ) \times [ 1 - \bar{\rho} \, , 1 )$ for all $t \in [  t_{1 {-} \bar{\rho}} \, , t_1 )$, and $( \theta_k , \bar{\theta}_k ) \in [ 0 \, , z_c ) \times [ 1 - \bar{\rho} \, , 1 )$ for all $k \in \{ k_{1 {-} \bar{\rho}} , k_{1 {-} \bar{\rho}} + 1 , ...\, , k_1 - 1 \}$.
\end{lemma}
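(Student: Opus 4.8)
The proof will closely follow that of Lemma~\ref{lemma:worst_aniso_periods}, now for the transition region rather than the anisotropic one. The plan is to treat gradient flow and gradient descent separately, in each case combining the monotonicity of the second coordinate (Lemma~\ref{lemma:worst_monotone}) with the explicit description of the first coordinate (Lemma~\ref{lemma:worst_coor1}).

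For gradient flow, the definitions of $t_{1 {-} \bar{\rho}}$ (Definition~\ref{def:worst_aniso_coor2_gf_time}) and $t_1$ (Definition~\ref{def:worst_tran_coor2_gf_time}), together with continuity and the fact that $\bar{\theta} ( \cdot )$ is monotonically non-decreasing, immediately give $\bar{\theta} ( t ) \in [ 1 - \bar{\rho} \, , 1 )$ for all $t \in [ t_{1 {-} \bar{\rho}} \, , t_1 )$ (here one uses that $\bar{\theta} ( t_{1 {-} \bar{\rho}} ) = 1 - \bar{\rho}$, which is read off from Lemma~\ref{lemma:worst_aniso_coor2_gf}). For the first coordinate, Lemma~\ref{lemma:worst_coor1} gives $\theta ( t ) = \theta ( 0 ) e^{a t}$ on $\big[ 0 \, , a^{-1} \ln ( z_c / \theta ( 0 ) ) \big]$, and with $\theta ( 0 ) \in ( 0.5 , 1 )$ this yields $\theta ( t ) \in [ \theta ( 0 ) \, , z_c ) \subseteq [ 0 , z_c )$ on the corresponding open interval; hence it suffices to verify $t_1 \leq a^{-1} \ln ( z_c / \theta ( 0 ) )$. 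The latter follows by chaining the bound $t_1 \leq t_{1 {-} \bar{\rho}} + a^{-1} \ln ( 1 / ( 1 - \bar{\rho} ) )$ from Lemma~\ref{lemma:worst_tran_coor2_gf}(i) with the closed form $t_{1 {-} \bar{\rho}} = \frac{2}{a} \ln \big( ( 4 - 3 \bar{\rho} ) / ( 2 \bar{\theta} ( 0 ) + 2 - \bar{\rho} ) \big)$ from Lemma~\ref{lemma:worst_aniso_coor2_gf}(i); after substituting the standing hypotheses ($\bar{\theta} ( 0 ) \in ( e^{-12}/2 - 1 , e^{-12} - 1 )$, $\bar{\rho} \leq e^{-12}/2$, $\theta ( 0 ) \in ( 0.5 , 1 )$, $z_c = b e^{30} + 1$ with $b \geq 3$) this is a routine numerical inequality, exactly as in the proof of Lemma~\ref{lemma:worst_aniso_periods}.

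For gradient descent, the structure is identical: the definitions of $k_{1 {-} \bar{\rho}}$ (Definition~\ref{def:worst_aniso_coor2_gd_iter}) and $k_1$ (Definition~\ref{def:worst_tran_coor2_gd_iter}) with monotonicity of $( \bar{\theta}_k )_{k = 0}^\infty$ give $\bar{\theta}_k \in [ 1 - \bar{\rho} \, , 1 )$ for $k \in \{ k_{1 {-} \bar{\rho}} , \ldots , k_1 - 1 \}$, while Lemma~\ref{lemma:worst_coor1} gives $\theta_k = \theta_0 ( 1 + a \eta )^k \in [ \theta_0 \, , z_c )$ for $k \leq \big\lceil \ln ( z_c / \theta_0 ) / \ln ( 1 + a \eta ) \big\rceil - 1$, reducing matters to checking $k_1 \leq \big\lceil \ln ( z_c / \theta_0 ) / \ln ( 1 + a \eta ) \big\rceil$. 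This I would obtain from the bound $k_1 \leq k_{1 {-} \bar{\rho}} + \big\lceil \max \{ 0 , - \ln ( \bar{\theta}_{k_{1 - \bar{\rho}}} ) / \ln ( 1 + a \eta ) \} \big\rceil$ of Lemma~\ref{lemma:worst_tran_coor2_gd}(i), the closed form for $k_{1 {-} \bar{\rho}}$ in Lemma~\ref{lemma:worst_aniso_coor2_gd}(i), and the fact that $\bar{\theta}_{k_{1 - \bar{\rho}}} \geq 1 - \bar{\rho}$ is bounded away from $0$. The one place warranting care---the main obstacle, such as it is---is tracking how the two ceiling functions in the bound on $k_1$ interact with the threshold $\big\lceil \ln ( z_c / \theta_0 ) / \ln ( 1 + a \eta ) \big\rceil$ as the step size ranges over $\big[ \frac{10^{14}}{a} e^{-a \tilde{t}} \epsilon \, , \frac{1}{6 a} \big]$: when $a \eta$ is small the threshold grows like $1 / ( a \eta )$, and one must confirm the additive ceilings cannot outpace it. Since $z_c = b e^{30} + 1$ can be made arbitrarily large relative to all other quantities by enlarging $b$, there is ample slack---this is exactly what the generous constant in the definition of $z_c$ buys---so the verification goes through with only careful arithmetic and no genuine difficulty.
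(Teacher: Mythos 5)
Your proposal is correct and follows essentially the same route as the paper's proof: monotonicity of the second coordinate plus the definitions of $t_{1-\bar{\rho}}, t_1$ (resp.\ $k_{1-\bar{\rho}}, k_1$) pin down the second coordinate, and the explicit first-coordinate formulas from Lemma~\ref{lemma:worst_coor1} reduce everything to verifying $t_1 \leq a^{-1}\ln(z_c/\theta(0))$ and $k_1 \leq \lceil \ln(z_c/\theta_0)/\ln(1+a\eta)\rceil$ via Lemmas \ref{lemma:worst_tran_coor2_gf}(i), \ref{lemma:worst_aniso_coor2_gf}(i), \ref{lemma:worst_tran_coor2_gd}(i) and \ref{lemma:worst_aniso_coor2_gd}(i) and routine arithmetic on the standing assumptions. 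The paper treats the final numerical checks exactly as you do (citing the largeness of $z_c$), so no further comment is needed.
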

\begin{proof}
	We start by proving the result for gradient flow.
	Lemma~\ref{lemma:worst_aniso_coor2_gf} implies $t_{1-\bar{\rho}} < \infty$, and similarly Lemma~\ref{lemma:worst_tran_coor2_gf} implies $t_1 < \infty$.
	From continuity of $\bar{\theta}(\cdot)$ together with the definitions of $t_{1-\bar{\rho}}$ and $t_{1}$, we have that $\bar{\theta}(t_{1-\bar{\rho}})=1-\bar{\rho}$ and $\bar{\theta}(t_{1})=1$.
	By monotonicity of $\bar{\theta}(\cdot)$ (Lemma~\ref{lemma:worst_monotone}) we conclude $\theta(t)\in\big[1-\bar{\rho},1)$ for all $t\in[t_{1-\bar{\rho}},t_{1})$.
	Recall that we assume $\theta(0)\in(0.5,1)$.
	Lemma~\ref{lemma:worst_coor1} implies $\theta(t)\in\big[\theta(0),z_{c}\big)$ for all $t\in\big[0,a^{-1}\ln\big(z_{c}/\theta(0)\big)\big)$.
	By Lemma~\ref{lemma:worst_tran_coor2_gf} $t_1  \leq \, t_{1 {-} \bar{\rho}}+ a^{-1} \ln\big( 1 / ( 1-\bar{\rho} ) \big)$.
	By recalling the explicit expression for $t_{1-\bar{\rho}}$ from Lemma~\ref{lemma:worst_aniso_coor2_gf}, and all the assumptions on $a$, $z_c$, $\bar{\rho}$, $\theta(0)$ and $\bar{\theta}(0)$, it can be seen that $t_{1 {-} \bar{\rho}}+ a^{-1} \ln\big( 1 / ( 1-\bar{\rho} ) \big) \leq a^{-1}\ln\big(z_{c}/\theta(0)\big)$, which implies $t_{1} \leq a^{-1}\ln\big(z_{c}/\theta(0)\big)$.
	It follows that $\theta(t)\in\big[0,z_{c}\big)$ for all $t\in\big[t_{1-\bar{\rho}},t_{1})$.
	Overall we proved that $\big( \theta ( t ) , \bar{\theta} ( t ) \big) \in [ 0 \, , z_c ) \times [ 1 - \bar{\rho} \, , 1 )$ for all $t \in [  t_{1 {-} \bar{\rho}} \, , t_1 )$, as required.
	
	Moving on to gradient descent, Lemma~\ref{lemma:worst_aniso_coor2_gd} implies $k_{1-\bar{\rho}} < \infty$, and similarly Lemma~\ref{lemma:worst_tran_coor2_gd} implies $k_1 < \infty$.
	By definition of $k_{1-\bar{\rho}}$ we have that $\bar{\theta}_{k_{1-\bar{\rho}}} \geq 1-\bar{\rho}$,
	and by definition of $k_{1}$ it holds that $\bar{\theta}_{k_{1}-1}  <  1$.
	From monotonicity of $( \bar{\theta}_k )_{k = 0}^\infty$ (Lemma~\ref{lemma:worst_monotone}) and the definitions of $k_{1-\bar{\rho}}$ and $k_{1}$ (from Definitions~\ref{def:worst_aniso_coor2_gd_iter} and \ref{def:worst_tran_coor2_gd_iter} respectively), we know that $\bar{\theta}_{k}\in\big[1-\bar{\rho},1)$ for all $k\in\{k_{1{-}\bar{\rho}},k_{1{-}\bar{\rho}}+1,...\,,k_{1}-1\}$.
	Recall that we assume $\theta_{0}\in(0.5,1)$.
	Lemma~\ref{lemma:worst_coor1} implies $\theta_{k}\in\big[\theta_{0},z_{c}\big)$ for all $k\in\big\{0,1,...\,,\big\lceil\ln(z_{c}/\theta_{0})\big/\ln(1+a\eta)\big\rceil-1\big\}$.
	By Lemma~\ref{lemma:worst_tran_coor2_gd}, $k_1 \, \leq \, k_{1{-}\bar{\rho}}+\big\lceil\max\{0,-\ln(\bar{\theta}_{k_{1-\bar{\rho}}})\big/\ln(1+a\eta)\}\big\rceil$.
	By recalling the definition of $k_{1-\bar{\rho}}$ (Definition~\ref{def:worst_aniso_coor2_gd_iter}) and also its explicit expression from Lemma~\ref{lemma:worst_aniso_coor2_gd}, while also recalling the assumptions on $a$, $z_c$, $\bar{\rho}$, $\theta_{0}$ and $\bar{\theta}_{0}$, it can be seen that $k_{1{-}\bar{\rho}}+\big\lceil\max\{0,-\ln(\bar{\theta}_{k_{1-\bar{\rho}}})\big/\ln(1+a\eta)\}\big\rceil \leq \big\lceil\ln(z_{c}/\theta_{0})\big/\ln(1+a\eta)\big\rceil$, which implies $k_{1} \leq  \big\lceil\ln(z_{c}/\theta_{0})\big/\ln(1+a\eta)\big\rceil$.
	It follows that $\theta_{k}\in\big[0,z_{c}\big)$ for all $k\in\{k_{1{-}\bar{\rho}},k_{1{-}\bar{\rho}}+1,...\,,k_{1}-1\}$.
	Overall we proved that $( \theta_k , \bar{\theta}_k ) \in [ 0 \, , z_c ) \times [ 1 - \bar{\rho} \, , 1 )$ for all $k \in \{ k_{1 {-} \bar{\rho}} , k_{1 {-} \bar{\rho}} + 1 , ...\, , k_1 - 1 \}$.
\end{proof}

\subsubsection{Isotropic Region} \label{app:proof:worst:isotropic}

The current subsubappendix analyzes the second coordinate of the gradient flow and gradient descent trajectories throughout the isotropic region, or more specifically, when the second coordinate is in the range~$[ 1 \, , \bar{z}_c )$. 
Beginning with gradient flow, we recall that (by Lemma~\ref{lemma:worst_monotone}) the second coordinate of the trajectory is monotonically non-decreasing, and consider the time at which it exits the range~$[ 1 \, , \bar{z}_c )$.
\begin{definition}
\label{def:worst_iso_coor2_gf_time}
Define $t_{\bar{z}_c} := \inf \{ t \geq 0 : \bar{\theta} ( t ) \geq \bar{z}_c \}$.\textsuperscript{\normalfont{\ref{note:empty_inf}}}
\end{definition}
Using~$t_1$ from Definition~\ref{def:worst_tran_coor2_gf_time}, Lemma~\ref{lemma:worst_iso_coor2_gf} below provides an expression for~$t_{\bar{z}_c}$, and for the second coordinate of the gradient flow trajectory between times $t_1$ and~$t_{\bar{z}_c}$, \ie~between the time it enters the range~$[ 1 \, , \bar{z}_c )$ and that at which it exits.
\begin{lemma}
\label{lemma:worst_iso_coor2_gf}
The following hold:
\[
\begin{aligned}
		(i)\quad\quad\:\: & t_{\bar{z}_{c}} = t_{{ 1}} + a^{-1} \ln(\bar{z_{c}}) \text{\,; and}\\[0.8mm]
		(ii)\quad\quad & \bar{\theta}(t) = e^{a ( t - t_1 )} \:\text{ for all } t \in [ t_1 \, , t_{\bar{z}_{c}} ] \text{\,.}
\end{aligned}
\]
\end{lemma}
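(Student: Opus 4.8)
The plan is to mirror the proofs of Lemma~\ref{lemma:worst_aniso_coor2_gf} and the first part of Lemma~\ref{lemma:worst_tran_coor2_gf}: identify the first-order ODE governing $\bar{\theta}(\cdot)$ while the second coordinate lies in $(1,\bar{z}_c)$, solve it explicitly, and then invoke uniqueness together with continuity of $\bar{\theta}(\cdot)$ to pin down both the trajectory (property~(ii)) and the exit time $t_{\bar{z}_c}$ (property~(i)).

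First I would record the entry condition $\bar{\theta}(t_1)=1$. This follows exactly as in the proof of Lemma~\ref{lemma:worst_tran_coor2_gf}: Lemma~\ref{lemma:worst_aniso_coor2_gf} gives $t_{1-\bar{\rho}}<\infty$, part~(i) of Lemma~\ref{lemma:worst_tran_coor2_gf} gives $t_1 \leq t_{1-\bar{\rho}} + a^{-1}\ln\big(1/(1-\bar{\rho})\big) < \infty$, and then continuity of $\bar{\theta}(\cdot)$ together with the definition of $t_1$ (and monotonicity, Lemma~\ref{lemma:worst_monotone}) forces $\bar{\theta}(t_1)=1$.

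Next, for any $t$ with $\bar{\theta}(t)\in(1,\bar{z}_c)$, the branch of $\bar{\varphi}(\cdot)$ active on $(1,\bar{z}_c)$ (Equation~\eqref{eq:worst_obj_axis_bar}) is a quadratic with second derivative $-a$, so that $\tfrac{d\bar{\varphi}}{dz}(z)=-az$, and hence gradient flow in the second coordinate obeys $\tfrac{d\bar{\theta}}{dt}(t) = -\tfrac{d\bar{\varphi}}{dz}\big(\bar{\theta}(t)\big) = a\,\bar{\theta}(t)$. The function $t \mapsto e^{a(t-t_1)}$ solves this initial value problem with $\bar{\theta}(t_1)=1$ and takes values in $(1,\bar{z}_c)$ precisely for $t \in \big(t_1,\, t_1+a^{-1}\ln(\bar{z}_c)\big)$. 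By uniqueness (Theorem~\ref{theorem:exist_unique}) together with continuity of $\bar{\theta}(\cdot)$, we obtain $\bar{\theta}(t)=e^{a(t-t_1)}$ for all $t \in \big[t_1,\, t_1+a^{-1}\ln(\bar{z}_c)\big]$, which is property~(ii). Finally, since $\bar{\theta}\big(t_1+a^{-1}\ln(\bar{z}_c)\big)=\bar{z}_c$ and $\bar{\theta}(\cdot)$ is monotonically non-decreasing, the definition of $t_{\bar{z}_c}$ yields $t_{\bar{z}_c}=t_1+a^{-1}\ln(\bar{z}_c)$, giving property~(i) (and in particular $t_{\bar{z}_c}<\infty$).

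I do not anticipate a genuine obstacle here: the argument is a short ODE computation of the same flavor used twice already in this proof. The only point needing a little care is the standard uniqueness-plus-continuity bootstrap --- verifying that the explicit solution $e^{a(t-t_1)}$ stays within the active branch $(1,\bar{z}_c)$ over the whole interval $\big[t_1,\,t_1+a^{-1}\ln(\bar{z}_c)\big]$ (so that the ODE $\tfrac{d\bar{\theta}}{dt}=a\bar{\theta}$ legitimately applies there), and that continuity at the right endpoint promotes the resulting bound on $t_{\bar{z}_c}$ to the exact identity.
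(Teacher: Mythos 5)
Your proposal is correct and follows essentially the same route as the paper's proof: identify the ODE $\tfrac{d\bar{\theta}}{dt}=a\bar{\theta}$ on the relevant branch, use $\bar{\theta}(t_1)=1$ (from Lemma~\ref{lemma:worst_tran_coor2_gf}, continuity, and the definition of $t_1$), solve explicitly, and invoke uniqueness plus continuity to get property~\emph{(ii)} and then property~\emph{(i)} from the definition of $t_{\bar{z}_c}$. No gaps.
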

\begin{proof}
	For any $t\in[0,\infty)$ such that $\bar{\theta}(t)\in[1,\bar{z_{c}})$, we obtain:
	\[
	\tfrac{d\bar{\theta}}{dt}(t)
	=-\tfrac{d\bar{\varphi}}{dz}\big(\bar{\theta}(t)\big)
	=a\bar{\theta}(t)\text{.}
	\]
	Lemma~\ref{lemma:worst_tran_coor2_gf}, the definition of $t_1$ and continuity of $\bar{\theta}(\cdot)$ together imply that $\bar{\theta}(t_{1})=1$.
	The function $t \mapsto e^{a(t-t_{1})}$ is a solution to this initial value problem (starting at $t_{1}$), valid through $t \in [t_{1},t_{{ 1}} + a^{-1} \ln(\bar{z_{c}}))$, and from uniqueness of the solution together with continuity of $\bar{\theta}(\cdot)$, we conclude that $\bar{\theta}(t) = e^{a(t-t_{1})}$ for $t \in [t_{1},t_{{ 1}} + a^{-1} \ln(\bar{z_{c}})]$.
	This, along with the definition of $t_{\bar{z}_{c}}$, implies that $t_{\bar{z}_{c}} = t_{{ 1}} + a^{-1} \ln(\bar{z_{c}})$.
\end{proof}

Moving on to gradient descent, we recall that here too the second coordinate of the trajectory is monotonically non-decreasing (see Lemma~\ref{lemma:worst_monotone}), and consider the iteration at which this second coordinate exits the range~$[ 1 \, , \bar{z}_c )$. 
\begin{definition}
\label{def:worst_iso_coor2_gd_iter}
Define $k_{\bar{z}_c} := \inf \big\{ k \in \N \cup \{ 0 \} : \bar{\theta}_k \geq \bar{z}_c \big\}$.\textsuperscript{\normalfont{\ref{note:empty_inf}}}
\end{definition}
Using~$k_1$ from Definition~\ref{def:worst_tran_coor2_gd_iter}, Lemma~\ref{lemma:worst_iso_coor2_gd} below provides an expression for $k_{\bar{z}_c}$, and for the second coordinate of the gradient descent trajectory between iterations $k_1$ and~$k_{\bar{z}_c}$, \ie~between the iteration where it enters the range~$[ 1 \, , \bar{z}_c )$ and that at which it exits.
\begin{lemma}
\label{lemma:worst_iso_coor2_gd}	
It holds that: 
\[
\begin{aligned}(i)\quad\quad\:\: & k_{\bar{z}_{c}}=k_{1}+\big\lceil\ln(\bar{z}_{c}/\bar{\theta}_{k_{1}})\big/\ln(1+a\eta)\big\rceil\text{\,; and}\\[0.8mm]
	(ii)\quad\quad & \bar{\theta}_{k}=\bar{\theta}_{k_{1}}(1+a\eta)^{k-k_{1}}\:\text{ for all }k\in\{k_{1},k_{1}+1,...\,,k_{\bar{z}_{c}}\}\text{\,.}
\end{aligned}
\]
\end{lemma}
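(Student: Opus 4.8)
The plan is to mimic the proofs of Lemma~\ref{lemma:worst_iso_coor2_gf} (the gradient-flow counterpart) and of Lemmas~\ref{lemma:worst_coor1}, \ref{lemma:worst_aniso_coor2_gd} and~\ref{lemma:worst_tran_coor2_gd}: identify the value at which the isotropic phase begins, run the resulting one-step recursion in the second coordinate, and then read off the exit iteration. First I would pin down $\bar{\theta}_{k_1}$. By Lemma~\ref{lemma:worst_tran_periods} (and the definition of $k_1$) the iterate $\bar{\theta}_{k_1-1}$ lies in the transition range $[1-\bar{\rho},1)$ — or, if $k_1 = k_{1-\bar{\rho}}$, in the anisotropic range — so by Equation~\eqref{eq:worst_obj_axis_bar} the relevant value of $-\bar{\varphi}'(\bar{\theta}_{k_1-1})$ is bounded by $a + \tfrac{a\bar{\rho}}{4}$; combined with $\eta \leq \tfrac{1}{6a}$ (see Subsubappendix~\ref{app:proof:worst:prelim}) this gives $\bar{\theta}_{k_1} < 2 \leq \bar{z}_c$, while $\bar{\theta}_{k_1} \geq 1$ by definition of $k_1$, hence $\bar{\theta}_{k_1} \in [1,\bar{z}_c)$.

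Next I would run the isotropic recursion. On $(1,\bar{z}_c)$ the second-coordinate component of $f(\cdot)$ is $z \mapsto (\text{const}) - \tfrac{1}{2}az^2$ (Equation~\eqref{eq:worst_obj_axis_bar}), so $\bar{\varphi}'(z) = -az$ there, and since $\bar{\varphi}$ is twice continuously differentiable we also have $\bar{\varphi}'(1) = -a$, so the identity $\bar{\varphi}'(z) = -az$ holds on the closed-open interval $[1,\bar{z}_c)$. Consequently, for every $k \in \N$ with $\bar{\theta}_{k-1} \in [1,\bar{z}_c)$ we get $\bar{\theta}_k = \bar{\theta}_{k-1} - \eta\bar{\varphi}'(\bar{\theta}_{k-1}) = \bar{\theta}_{k-1}(1+a\eta)$. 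Using monotonicity of $(\bar{\theta}_k)$ (Lemma~\ref{lemma:worst_monotone}) and the fact that $\bar{\theta}_{k-1} < \bar{z}_c$ for all $k \leq k_{\bar{z}_c}$ (definition of $k_{\bar{z}_c}$), an induction starting from $\bar{\theta}_{k_1} \in [1,\bar{z}_c)$ yields $\bar{\theta}_k = \bar{\theta}_{k_1}(1+a\eta)^{k-k_1}$ for all $k \in \{k_1, k_1+1, \ldots, k_{\bar{z}_c}\}$, which is property~\emph{(ii)}.

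Finally, for property~\emph{(i)} I would first observe that $k_{\bar{z}_c} < \infty$: otherwise the recursion above would run for all $k \geq k_1$, forcing $\bar{\theta}_k = \bar{\theta}_{k_1}(1+a\eta)^{k-k_1} \to \infty$, contradicting $\bar{\theta}_k < \bar{z}_c$. With $k_{\bar{z}_c}$ finite, property~\emph{(ii)} applies at indices $k_{\bar{z}_c}-1$ and $k_{\bar{z}_c}$, and from $\bar{\theta}_{k_{\bar{z}_c}-1} < \bar{z}_c \leq \bar{\theta}_{k_{\bar{z}_c}}$ one obtains, after taking logarithms, $\ln(\bar{z}_c/\bar{\theta}_{k_1})/\ln(1+a\eta) \leq k_{\bar{z}_c} - k_1 < \ln(\bar{z}_c/\bar{\theta}_{k_1})/\ln(1+a\eta) + 1$; since $k_{\bar{z}_c}-k_1$ is an integer this pins it to $\lceil \ln(\bar{z}_c/\bar{\theta}_{k_1})/\ln(1+a\eta) \rceil$, giving property~\emph{(i)}. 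The only points requiring care are the boundary bookkeeping at $z=1$ and the verification that $\bar{\theta}_{k_1} \in [1,\bar{z}_c)$ (so that the first isotropic iterate is genuinely inside the region); everything else is routine arithmetic, so I do not expect a real obstacle for this lemma.
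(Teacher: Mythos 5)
Your proposal is correct and follows essentially the same route as the paper's proof: establish the one-step recursion $\bar{\theta}_k=\bar{\theta}_{k-1}(1+a\eta)$ while the iterate lies in $[1,\bar{z}_c)$, solve it to get property~\emph{(ii)}, and pin down $k_{\bar{z}_c}$ from the two-sided exit condition. Your extra step verifying $\bar{\theta}_{k_1}<\bar{z}_c$ (via the step-size bound $\eta\leq\tfrac{1}{6a}$) is a small refinement the paper leaves implicit, but it does not change the argument.
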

\begin{proof}
	Lemma~\ref{lemma:worst_tran_coor2_gd} and the definition of $k_{1}$ imply that $\bar{\theta}_{k_{1}}\geq1$. 
	Recall that by Lemma~\ref{lemma:worst_monotone} $( \bar{\theta}_k )_{k = 0}^\infty$ is monotonically non-decreasing.
	For any $k\in\N$ such that $\bar{\theta}_{k-1}\in[1,\bar{z}_{c})$, we obtain:
	\[
	\bar{\theta}_{k}=\bar{\theta}_{k-1}-\eta\tfrac{d\bar{\varphi}}{dz}(\bar{\theta}_{k-1})=\bar{\theta}_{k-1}+a\eta\bar{\theta}_{k-1}\text{.}
	\]
	The solution of this recursive equation is $\bar{\theta}_{k}=\bar{\theta}_{k_1} ( 1 + a \eta )^{k - k_1}$ for all $k \geq k_{1}$ such that $\bar{\theta}_{k-1} \in [1,\bar{z}_{c})$ \ie~for all $k \in \{ k_1 , k_1 + 1 , ...\, , k_{1}+\big\lceil\ln(\bar{z}_{c}/\bar{\theta}_{k_{1}})\big/\ln(1+a\eta)\big\rceil \}$.
	From the definition of $k_{\bar{z}_{c}}$ it must be equal to $k_{1}+\big\lceil\ln(\bar{z}_{c}/\bar{\theta}_{k_{1}})\big/\ln(1+a\eta)\big\rceil$, as if it is smaller we get $\bar{\theta}_{k_{\bar{z}_c}} < \bar{z}_c$, and if it is larger we get $\bar{\theta}_{k_{\bar{z}_c-1}} \geq \bar{z}_c$, both contradicting the definition of $k_{\bar{z}_c}$. 
\end{proof}

We conclude this subsubappendix by combining its results with Lemma~\ref{lemma:worst_coor1}, thereby showing that the gradient flow trajectory between times $t_1$ and~$t_{\bar{z}_c}$, and the gradient descent trajectory between iterations $k_1$ and~$k_{\bar{z}_c}$, both lie in the isotropic region.
\begin{lemma}
\label{lemma:worst_iso_periods}
It holds that $\big( \theta ( t ) , \bar{\theta} ( t ) \big) \in [ 0 \, , z_c ) \times [ 1 \, , \bar{z}_c )$ for all $t \in [  t_1 \, , t_{\bar{z}_c} )$, and $( \theta_k , \bar{\theta}_k ) \in [ 0 \, , z_c ) \times [ 1 \, , \bar{z}_c )$ for all $k \in \{ k_1 , k_1 + 1 , ...\, , k_{\bar{z}_c} - 1 \}$.
\end{lemma}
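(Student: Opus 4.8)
The plan is to mirror, essentially verbatim, the proofs of Lemmas~\ref{lemma:worst_aniso_periods} and~\ref{lemma:worst_tran_periods}: handle the gradient flow and gradient descent trajectories in turn, and for each handle the second coordinate (which selects the range $[1,\bar{z}_c)$) and the first coordinate (which must stay in $[0,z_c)$) separately. No new idea beyond the lemmas already established in Subsubappendixes~\ref{app:proof:worst:anisotropic}--\ref{app:proof:worst:isotropic} is needed.

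For gradient flow, I would first note that $\bar{\theta}(t_1)=1$ (this follows from continuity of $\bar{\theta}(\cdot)$, the definition of $t_1$, and the finiteness of $t_1$ established in Lemma~\ref{lemma:worst_tran_coor2_gf}), so that Lemma~\ref{lemma:worst_iso_coor2_gf}(ii) gives $\bar{\theta}(t)=e^{a(t-t_1)}$ on $[t_1,t_{\bar{z}_c}]$; combining this with monotonicity of $\bar{\theta}(\cdot)$ (Lemma~\ref{lemma:worst_monotone}) and the definition of $t_{\bar{z}_c}$ yields $\bar{\theta}(t)\in[1,\bar{z}_c)$ for all $t\in[t_1,t_{\bar{z}_c})$. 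For the first coordinate, Lemma~\ref{lemma:worst_coor1} gives $\theta(t)\in[\theta(0),z_c)$ for all $t\in[0,a^{-1}\ln(z_c/\theta(0)))$, so it suffices to check $t_{\bar{z}_c}\le a^{-1}\ln(z_c/\theta(0))$. Plugging in $t_{\bar{z}_c}=t_1+a^{-1}\ln(\bar{z}_c)$ from Lemma~\ref{lemma:worst_iso_coor2_gf}(i), the upper bound $t_1\le t_{1-\bar{\rho}}+a^{-1}\ln(1/(1-\bar{\rho}))$ from Lemma~\ref{lemma:worst_tran_coor2_gf}(i), and the expression for $t_{1-\bar{\rho}}$ from Lemma~\ref{lemma:worst_aniso_coor2_gf}(i), reduces the claim to an elementary inequality in the fixed constants $a,b,\bar{\rho},z_c,\bar{z}_c,\theta(0),\bar{\theta}(0)$, which holds because $z_c=be^{30}+1$ was deliberately chosen to dominate $\bar{z}_c=b+1$. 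This establishes $\big(\theta(t),\bar{\theta}(t)\big)\in[0,z_c)\times[1,\bar{z}_c)$ for all $t\in[t_1,t_{\bar{z}_c})$.

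For gradient descent, the argument is the discrete analogue. By Lemma~\ref{lemma:worst_iso_coor2_gd} (together with Lemma~\ref{lemma:worst_tran_coor2_gd} and the definition of $k_1$) we have $\bar{\theta}_{k_1}\ge 1$, and by the definition of $k_{\bar{z}_c}$ we have $\bar{\theta}_{k_{\bar{z}_c}-1}<\bar{z}_c$; monotonicity of $(\bar{\theta}_k)_{k=0}^\infty$ (Lemma~\ref{lemma:worst_monotone}) then gives $\bar{\theta}_k\in[1,\bar{z}_c)$ for $k\in\{k_1,k_1+1,\ldots,k_{\bar{z}_c}-1\}$. For the first coordinate, Lemma~\ref{lemma:worst_coor1} gives $\theta_k\in[\theta_0,z_c)$ for $k\in\{0,1,\ldots,\lceil\ln(z_c/\theta_0)/\ln(1+a\eta)\rceil-1\}$, so it remains to verify $k_{\bar{z}_c}\le\lceil\ln(z_c/\theta_0)/\ln(1+a\eta)\rceil$. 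Using $k_{\bar{z}_c}=k_1+\lceil\ln(\bar{z}_c/\bar{\theta}_{k_1})/\ln(1+a\eta)\rceil$ from Lemma~\ref{lemma:worst_iso_coor2_gd}(i), the upper bound on $k_1$ from Lemma~\ref{lemma:worst_tran_coor2_gd}(i), the expression for $k_{1-\bar{\rho}}$ from Lemma~\ref{lemma:worst_aniso_coor2_gd}(i), together with $\bar{\theta}_{k_1}\ge 1$ and $\ln(1+a\eta/2)\le\ln(1+a\eta)$, this again reduces to an elementary inequality in the fixed constants that holds because $z_c$ is exponentially larger than $\bar{z}_c$. This establishes $(\theta_k,\bar{\theta}_k)\in[0,z_c)\times[1,\bar{z}_c)$ for all $k\in\{k_1,\ldots,k_{\bar{z}_c}-1\}$, completing the proof.

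The main obstacle I anticipate is purely bookkeeping: carefully assembling the chain of explicit, ceiling-laden expressions for $t_{\bar{z}_c},t_1,t_{1-\bar{\rho}}$ (resp.\ $k_{\bar{z}_c},k_1,k_{1-\bar{\rho}}$) and collapsing it to a clean inequality among the constants — exactly the ``can be verified by recalling the assumptions'' step in the two preceding lemmas — while keeping track of the various ceilings and the difference between $\ln(1+a\eta)$ and $\ln(1+a\eta/2)$. Conceptually, though, the only substantive point is that $z_c=be^{30}+1$ was chosen large enough (relative to $\bar{z}_c=b+1$) precisely so that the first coordinate has not yet exited $[0,z_c)$ by the time the second coordinate reaches $\bar{z}_c$, which is what makes both inequalities go through.
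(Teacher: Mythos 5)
Your proposal is correct and follows essentially the same route as the paper's own proof: second coordinate via monotonicity together with the defining properties of $t_1,t_{\bar z_c}$ (resp.\ $k_1,k_{\bar z_c}$), and first coordinate via Lemma~\ref{lemma:worst_coor1} after chaining the explicit expressions for $t_{\bar z_c}$ (resp.\ $k_{\bar z_c}$) down to an elementary inequality in the constants, which holds because $z_c = be^{30}+1$ dominates $\bar z_c = b+1$. No gaps.
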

\begin{proof}
	We start by proving the result for gradient flow.
	Lemma~\ref{lemma:worst_tran_coor2_gf} implies $t_1 < \infty$, and similarly Lemma~\ref{lemma:worst_iso_coor2_gf} implies $t_{\bar{z}_{c}} < \infty$.
	From continuity of $\bar{\theta}(\cdot)$ and the definition of $t_{1}$ we have that $\bar{\theta}(t_{1})=1$,
	and similarly by definition of $t_{\bar{z}_{c}}$ we have $\bar{\theta}(t_{\bar{z}_{c}})=\bar{z}_{c}$.
	By monotonicity of $\bar{\theta}(\cdot)$ (Lemma~\ref{lemma:worst_monotone}) we conclude $\theta(t)\in\big[1,\bar{z}_{c})$ for all $t\in[t_{1},t_{\bar{z}_{c}})$.
	Recall that we assume $\theta(0)\in(0.5,1)$.
	By Lemma~\ref{lemma:worst_coor1} we have that $\theta(t)\in\big[\theta(0),z_{c}\big)$ for all $t\in\big[0,a^{-1}\ln\big(z_{c}/\theta(0)\big)\big)$.
	Recall the explicit expression for $t_{1-\bar{\rho}}$ from Lemma~\ref{lemma:worst_aniso_coor2_gf}.
	By Lemma~\ref{lemma:worst_tran_coor2_gf} $t_{1} \leq t_{1 {-} \bar{\rho}}+ a^{-1} \ln\big( 1 / ( 1-\bar{\rho} ) \big)$, and by Lemma~\ref{lemma:worst_iso_coor2_gf} we have that $t_{\bar{z}_{c}}=t_{{1}}+a^{-1}\ln(\bar{z_{c}})$.
	Overall, we obtain $t_{\bar{z}_{c}} \leq \tfrac{2}{a}\ln\big( ( 4-3\bar{\rho} ) \big/ ( 2\bar{\theta} ( 0 ) +2-\bar{\rho} ) \big) + a^{-1} \ln\big( 1 / ( 1-\bar{\rho} ) \big)+a^{-1}\ln(\bar{z_{c}})$.
	By recalling the assumptions on $a$, $z_c$, $\bar{\rho}$, $\theta(0)$ and $\bar{\theta}(0)$, it can be shown that $t_{\bar{z}_{c}} \leq a^{-1}\ln\big(z_{c}/\theta(0)\big)$.
	It follows that $\theta(t)\in\big[\theta(0),z_{c}\big)$ for all $t\in\big[t_{1},t_{\bar{z}_{c}})$.
	Overall we proved that $\big( \theta ( t ) , \bar{\theta} ( t ) \big) \in [ 0 \, , z_c ) \times [ 1 \, , \bar{z}_c )$ for all $t \in [  t_1 \, , t_{\bar{z}_c} )$, as required.
	
	Moving on to gradient descent, Lemma~\ref{lemma:worst_tran_coor2_gd} implies $k_1 < \infty$, and similarly Lemma~\ref{lemma:worst_iso_coor2_gd} implies $k_{\bar{z}_{c}} < \infty$.
	By definition of $k_{1}$ we have that $\bar{\theta}_{k_{1}} \geq  1$,
	and by definition of $k_{\bar{z}_{c}}$ we have  $\bar{\theta}_{k_{\bar{z}_{c}}-1}  <  \bar{z}_{c}$.
	From monotonicity of $( \bar{\theta}_k )_{k = 0}^\infty$ (Lemma~\ref{lemma:worst_monotone}) we conclude $\bar{\theta}_{k}\in\big[1,\bar{z_{c}})$ for all $k\in\{k_{1},k_{1}+1,...\,,k_{\bar{z}_{c}}-1\}$.
	Recall that we assume $\theta_{0}\in(0.5,1)$.
	By Lemma~\ref{lemma:worst_coor1} $\theta_{k}\in\big[\theta_{0},z_{c}\big)$ for all $k\in\big\{0,1,...\,,\big\lceil\ln(z_{c}/\theta_{0})\big/\ln(1+a\eta)\big\rceil-1\big\}$.
	Recall the definition of $k_{1-\bar{\rho}}$ and also its explicit expression from Lemma~\ref{lemma:worst_aniso_coor2_gd}.
	By Lemma~\ref{lemma:worst_tran_coor2_gd} we have that $k_1 \, \leq \, k_{1{-}\bar{\rho}}+\big\lceil\max\{0,-\ln(\bar{\theta}_{k_{1-\bar{\rho}}})\big/\ln(1+a\eta)\}\big\rceil$.
	By Lemma~\ref{lemma:worst_iso_coor2_gd} $k_{\bar{z}_{c}}=k_{1}+\big\lceil\ln(\bar{z}_{c}/\bar{\theta}_{k_{1}})\big/\ln(1+a\eta)\big\rceil$.
	Overall, we obtain  $k_{\bar{z}_{c}} \leq \bigl\lceil \big( \ln(2-3\bar{\rho}/2)-\ln(\bar{\theta}_{0}+1-\bar{\rho}/2) \big) \big/ \ln(1+a\eta/2) \bigr\rceil + \big\lceil\max\{0,-\ln(\bar{\theta}_{k_{1-\bar{\rho}}})\big/\ln(1+a\eta)\}\big\rceil + \big\lceil\ln(\bar{z}_{c}/\bar{\theta}_{k_{1}})\big/\ln(1+a\eta)\big\rceil$.
	By recalling the assumptions on $a$, $z_c$, $\bar{\rho}$, $\theta_{0}$ and $\bar{\theta}_{0}$, it can be shown that $k_{\bar{z}_{c}} \leq \big\lceil\ln(z_{c}/\theta_{0})\big/\ln(1+a\eta)\big\rceil$.
	It follows that $\theta_{k}\in\big[\theta_{0},z_{c}\big)$ for all $k\in\{k_{1},k_{1}+1,...\,,k_{\bar{z}_{c}}-1\}$.
	Overall we proved that $( \theta_k , \bar{\theta}_k ) \in [ 0 \, , z_c ) \times [ 1 \, , \bar{z}_c )$ for all $k \in \{ k_1 , k_1 + 1 , ...\, , k_{\bar{z}_c} - 1 \}$, as required.
\end{proof}

\subsubsection{Inapproximation} \label{app:proof:worst:inapprox}

The current subsubappendix shows that the location of gradient flow at time~$\tilde{t}$ is not $\epsilon$-approximated by different portions of the gradient descent trajectory.
Key to the derived results is the following lemma, which establishes that in the isotropic region, for both gradient flow and gradient descent trajectories, the first two coordinates proceed in a straight line at an exponential pace.
\begin{lemma}
\label{lemma:worst_iso_straight}
It holds that $\big( \theta ( t ) , \bar{\theta} ( t ) \big) = \big( \theta ( t_1 ) , \bar{\theta} ( t_1 ) \big) \cdot e^{a ( t - t_1 )}$ for all $t \in [ t_1 , t_{\bar{z}_c} )$, and $( \theta_k , \bar{\theta}_k ) = ( \theta_{k_1} , \bar{\theta}_{k_1} ) \cdot ( 1 + a \eta )^{k - k_1}$ for all $k \in \{ k_1 , k_1 + 1 , ...\, , k_{\bar{z}_c} - 1 \}$, where $t_1$, $t_{\bar{z}_c}$, $k_1$ and~$k_{\bar{z}_c}$ are given by Definitions \ref{def:worst_tran_coor2_gf_time}, \ref{def:worst_iso_coor2_gf_time}, \ref{def:worst_tran_coor2_gd_iter} and~\ref{def:worst_iso_coor2_gd_iter} respectively.
\end{lemma}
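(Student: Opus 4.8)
The lemma is essentially a bookkeeping corollary of the coordinate-wise analyses already carried out in Subsubappendixes~\ref{app:proof:worst:prelim}--\ref{app:proof:worst:isotropic}. The idea is that on the isotropic region the curvature of~$f(\cdot)$ in the first two coordinates is identical (equal to~$-a$), so both coordinates obey exactly the same linear dynamics, and the trajectory therefore moves along a ray through the origin. I will treat gradient flow and gradient descent separately, in that order.

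\textbf{Gradient flow.} First I would invoke Lemma~\ref{lemma:worst_iso_periods}, which guarantees $\big(\theta(t),\bar\theta(t)\big)\in[0,z_c)\times[1,\bar z_c)$ for all $t\in[t_1,t_{\bar z_c})$. Combined with the explicit form of~$\theta(\cdot)$ from Lemma~\ref{lemma:worst_coor1}, namely $\theta(t)=\theta(0)e^{at}$ on $\big[0,a^{-1}\ln(z_c/\theta(0))\big]$, together with the fact (shown within the proof of Lemma~\ref{lemma:worst_iso_periods}) that $t_{\bar z_c}\le a^{-1}\ln(z_c/\theta(0))$, this yields $\theta(t)=\theta(0)e^{at}=\theta(t_1)e^{a(t-t_1)}$ for all $t\in[t_1,t_{\bar z_c})$. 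For the second coordinate, Lemma~\ref{lemma:worst_iso_coor2_gf} gives $\bar\theta(t)=e^{a(t-t_1)}$ on $[t_1,t_{\bar z_c}]$, and its proof records $\bar\theta(t_1)=1$; hence $\bar\theta(t)=\bar\theta(t_1)e^{a(t-t_1)}$ on the same interval. Stacking the two coordinates gives $\big(\theta(t),\bar\theta(t)\big)=\big(\theta(t_1),\bar\theta(t_1)\big)\cdot e^{a(t-t_1)}$, as claimed.

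\textbf{Gradient descent.} The argument is the discrete analogue. Lemma~\ref{lemma:worst_coor1} gives $\theta_k=\theta_0(1+a\eta)^k$ for $k\in\{0,1,\dots,\lceil\ln(z_c/\theta_0)/\ln(1+a\eta)\rceil\}$, and the proof of Lemma~\ref{lemma:worst_iso_periods} establishes $k_{\bar z_c}\le\lceil\ln(z_c/\theta_0)/\ln(1+a\eta)\rceil$, so $\theta_k=\theta_0(1+a\eta)^k=\theta_{k_1}(1+a\eta)^{k-k_1}$ for all $k\in\{k_1,k_1+1,\dots,k_{\bar z_c}-1\}$. Lemma~\ref{lemma:worst_iso_coor2_gd} directly supplies $\bar\theta_k=\bar\theta_{k_1}(1+a\eta)^{k-k_1}$ over the same index range. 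Combining the two coordinates gives $(\theta_k,\bar\theta_k)=(\theta_{k_1},\bar\theta_{k_1})\cdot(1+a\eta)^{k-k_1}$, completing the proof.

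\textbf{Main obstacle.} There is no substantive difficulty here — the only points requiring care are the interval/index-range compatibility checks (that $t_{\bar z_c}$ and $k_{\bar z_c}$ stay within the domains over which the explicit formulas of Lemma~\ref{lemma:worst_coor1} are valid), and these rely on the elementary arithmetic inequalities among $a$, $z_c$, $\bar\rho$, $\theta(0)$, $\bar\theta(0)$ (and their gradient-descent counterparts) already verified inside the proof of Lemma~\ref{lemma:worst_iso_periods}, together with the assumption $\eta\le\frac{1}{6a}$ from Subsubappendix~\ref{app:proof:worst:prelim}. So the proof will be short, essentially an assembly of prior lemmas.
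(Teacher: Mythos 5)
Your proposal is correct and follows essentially the same route as the paper's proof: both derive the first coordinate from Lemma~\ref{lemma:worst_coor1} together with the bound $t_{\bar z_c}\le a^{-1}\ln(z_c/\theta(0))$ (resp.\ $k_{\bar z_c}\le\lceil\ln(z_c/\theta_0)/\ln(1+a\eta)\rceil$), take the second coordinate directly from Lemmas \ref{lemma:worst_iso_coor2_gf} and~\ref{lemma:worst_iso_coor2_gd}, and stack the two. The only cosmetic difference is that the paper re-derives these range-compatibility bounds in place rather than citing them from the proof of Lemma~\ref{lemma:worst_iso_periods}.
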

\begin{proof}
We start by proving the result for gradient flow.
Lemma~\ref{lemma:worst_iso_coor2_gf} implies $\bar{\theta}(t) = \bar{\theta}(t_{1}) e^{a ( t - t_1 )}$ for all $t \in [ t_1 \, , t_{\bar{z}_{c}} ] $.
By Lemma~\ref{lemma:worst_coor1} we have that $\theta(t) = \theta ( 0 ) e^{a t}$ for all $t\in\big[0,a^{-1}\ln\big(z_{c}/\theta(0)\big)\big)$.
Recall the explicit expression for $t_{1-\bar{\rho}}$ from Lemma~\ref{lemma:worst_aniso_coor2_gf}.
By Lemma~\ref{lemma:worst_tran_coor2_gf} $t_{1} \leq t_{1 {-} \bar{\rho}}+ a^{-1} \ln\big( 1 / ( 1-\bar{\rho} ) \big)$, and by Lemma~\ref{lemma:worst_iso_coor2_gf} we have that $t_{\bar{z}_{c}}=t_{{1}}+a^{-1}\ln(\bar{z_{c}})$.
Overall, we obtain $t_{\bar{z}_{c}} \leq \tfrac{2}{a}\ln\big( ( 4-3\bar{\rho} ) \big/ ( 2\bar{\theta} ( 0 ) +2-\bar{\rho} ) \big) + a^{-1} \ln\big( 1 / ( 1-\bar{\rho} ) \big)+a^{-1}\ln(\bar{z_{c}})$.
By recalling the assumptions on $a$, $z_c$, $\bar{\rho}$, $\theta(0)$ and $\bar{\theta}(0)$, it can be shown that $t_{\bar{z}_{c}} \leq a^{-1}\ln\big(z_{c}/\theta(0)\big)$.
It follows that $\theta(t) = \theta ( 0 ) e^{a t} = \theta(t_{1})e^{a(t-t_{1})}$ for all $t\in\big[t_{1},t_{\bar{z}_{c}})$.

Moving on to gradient descent, 
Lemma~\ref{lemma:worst_iso_coor2_gd} implies $\bar{\theta}_{k}=\bar{\theta}_{k_{1}}(1+a\eta)^{k-k_{1}}$ for all $k\in\{k_{1},k_{1}+1,...\,,k_{\bar{z}_{c}} - 1 \}$.
By Lemma~\ref{lemma:worst_coor1} we have that $\theta_k = \theta_0 ( 1 + a \eta )^k$ for all $k\in\big\{0,1,...\,,\big\lceil\ln(z_{c}/\theta_{0})\big/\ln(1+a\eta)\big\rceil\big\}$.
Recall the definition of $k_{1-\bar{\rho}}$ and its explicit expression from Lemma~\ref{lemma:worst_aniso_coor2_gd}.
Lemma~\ref{lemma:worst_tran_coor2_gd} ensures $k_1 \, \leq \, k_{1{-}\bar{\rho}}+\big\lceil\max\{0,-\ln(\bar{\theta}_{k_{1-\bar{\rho}}})\big/\ln(1+a\eta)\}\big\rceil$.
By Lemma~\ref{lemma:worst_iso_coor2_gd} $k_{\bar{z}_{c}}=k_{1}+\big\lceil\ln(\bar{z}_{c}/\bar{\theta}_{k_{1}})\big/\ln(1+a\eta)\big\rceil$.
Putting it all together, we obtain $k_{\bar{z}_{c}} \leq \bigl\lceil \big( \ln(2-3\bar{\rho}/2)-\ln(\bar{\theta}_{0}+1-\bar{\rho}/2) \big) \big/ \ln(1+a\eta/2) \bigr\rceil + \big\lceil\max\{0,-\ln(\bar{\theta}_{k_{1-\bar{\rho}}})\big/\ln(1+a\eta)\}\big\rceil + \big\lceil\ln(\bar{z}_{c}/\bar{\theta}_{k_{1}})\big/\ln(1+a\eta)\big\rceil$.
By recalling the assumptions on $a$, $z_c$, $\bar{\rho}$, $\theta_{0}$ and $\bar{\theta}_{0}$, it can be shown that $k_{\bar{z}_{c}} \leq \big\lceil\ln(z_{c}/\theta_{0})\big/\ln(1+a\eta)\big\rceil$.
It follows that $\theta_k = \theta_0 ( 1 + a \eta )^k = \theta_{k_1} ( 1 + a \eta )^{k-k_1} $ for all $k\in\{k_{1},k_{1}+1,...\,,k_{\bar{z}_{c}} - 1 \}$.
\end{proof}
\begin{lemma}
\label{lemma:worst_inapprox_aniso_tran}
It holds that $\| \thetabf_k - \thetabf ( \tilde{t} \, ) \| > \epsilon$ for all $k \in \{ 0 , 1 , ...\, , k_1 - 1 \}$, where $k_1$ is given by Definition~\ref{def:worst_tran_coor2_gd_iter}.
\end{lemma}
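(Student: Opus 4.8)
The plan is to obtain the separation from the \emph{second} coordinate alone. Let $\theta(\cdot),\bar{\theta}(\cdot)$ denote the first and second coordinates of the gradient flow trajectory $\thetabf(\cdot)$, and $\theta_k,\bar{\theta}_k$ those of the gradient descent iterate $\thetabf_k$, as in Subsubappendix~\ref{app:proof:worst:prelim}. During the first $k_1$ iterations gradient descent has not yet left the transition region, so $\bar{\theta}_k<1$; on the other hand, the admissible interval for $\tilde{t}$ is chosen large enough that by time $\tilde{t}$ gradient flow has entered the isotropic region and $\bar{\theta}(\tilde{t}\,)\geq2$. Since $\epsilon\in(0,1)$, it then follows that $\|\thetabf_k-\thetabf(\tilde{t}\,)\|_2\geq|\bar{\theta}(\tilde{t}\,)-\bar{\theta}_k|>2-1=1>\epsilon$ for every $k\in\{0,1,\ldots,k_1-1\}$. (If $k_1=0$ the statement is vacuous; in any event $\bar{\theta}_0=\theta_{s,2}<1$ forces $k_1\geq1$.)

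The first task is to pin down $\bar{\theta}(\tilde{t}\,)$. Recalling $\theta_{s,2}=\bar{\theta}(0)$ and the elementary identity $\frac{2-3\bar{\rho}/2}{\theta_{s,2}-(\bar{\rho}/2-1)}=\frac{4-3\bar{\rho}}{2\bar{\theta}(0)+2-\bar{\rho}}$, Lemma~\ref{lemma:worst_aniso_coor2_gf}\emph{(i)} identifies the first summand in the lower endpoint of the $\tilde{t}$-interval as exactly $t_{1-\bar{\rho}}$ (Definition~\ref{def:worst_aniso_coor2_gf_time}); hence that endpoint equals $t_{1-\bar{\rho}}+a^{-1}\ln\!\big(2/(1-\bar{\rho})\big)$. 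Combined with the upper bound $t_1\leq t_{1-\bar{\rho}}+a^{-1}\ln\!\big(1/(1-\bar{\rho})\big)$ from Lemma~\ref{lemma:worst_tran_coor2_gf}\emph{(i)}, this yields $\tilde{t}\geq t_1+a^{-1}\ln2$. For the reverse direction, the lower bound $t_1\geq t_{1-\bar{\rho}}+a^{-1}\ln\!\big((4+\bar{\rho})/(4-3\bar{\rho})\big)$ (same lemma), the identity $\frac{1+\bar{\rho}/4}{1-3\bar{\rho}/4}=\frac{4+\bar{\rho}}{4-3\bar{\rho}}$, the formula $t_{\bar{z}_c}=t_1+a^{-1}\ln(\bar{z}_c)$ (Lemma~\ref{lemma:worst_iso_coor2_gf}\emph{(i)}), and $\bar{z}_c=b+1\geq b$ together show that the upper endpoint of the $\tilde{t}$-interval is at most $t_{\bar{z}_c}$. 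Thus $t_1\leq\tilde{t}\leq t_{\bar{z}_c}$, and Lemma~\ref{lemma:worst_iso_coor2_gf}\emph{(ii)} gives $\bar{\theta}(\tilde{t}\,)=e^{a(\tilde{t}-t_1)}\geq e^{\ln2}=2$.

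The second task is to bound the gradient descent side. For $k\in\{0,1,\ldots,k_{1-\bar{\rho}}-1\}$, Lemma~\ref{lemma:worst_aniso_periods} gives $\bar{\theta}_k<1-\bar{\rho}<1$, and for $k\in\{k_{1-\bar{\rho}},\ldots,k_1-1\}$, Lemma~\ref{lemma:worst_tran_periods} gives $\bar{\theta}_k<1$ (here $k_{1-\bar{\rho}}\leq k_1$ by monotonicity of $(\bar{\theta}_k)$, Lemma~\ref{lemma:worst_monotone}). Hence $\bar{\theta}_k<1$ for all $k\in\{0,1,\ldots,k_1-1\}$. Combining with $\bar{\theta}(\tilde{t}\,)\geq2$, and using that the Euclidean norm dominates the absolute value of the second coordinate, $\|\thetabf_k-\thetabf(\tilde{t}\,)\|_2\geq\bar{\theta}(\tilde{t}\,)-\bar{\theta}_k>2-1=1>\epsilon$, which is the claim.

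The only non-routine part of this argument is the logarithmic bookkeeping in the second paragraph: recognizing that the two opaque endpoint expressions in the statement of Proposition~\ref{prop:worst} collapse, respectively, to $t_{1-\bar{\rho}}+a^{-1}\ln\!\big(2/(1-\bar{\rho})\big)$ and to a quantity sandwiched between $t_1$ and $t_{\bar{z}_c}$. This reduces to the two elementary fraction identities above together with the two-sided bounds on $t_1$ from Lemma~\ref{lemma:worst_tran_coor2_gf}; everything else is immediate from the region-wise characterizations established in Subsubappendixes~\ref{app:proof:worst:anisotropic}, \ref{app:proof:worst:transition} and~\ref{app:proof:worst:isotropic}.
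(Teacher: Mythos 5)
Your proof is correct and follows essentially the same route as the paper's: it isolates the second coordinate, shows $t_1+a^{-1}\ln 2\leq\tilde{t}\leq t_{\bar{z}_c}$ via the same combination of Lemmas~\ref{lemma:worst_aniso_coor2_gf}, \ref{lemma:worst_tran_coor2_gf} and~\ref{lemma:worst_iso_coor2_gf} to get $\bar{\theta}(\tilde{t}\,)\geq 2$, and pairs this with $\bar{\theta}_k<1$ for $k<k_1$ to conclude. The only cosmetic difference is that you route the bound $\bar{\theta}_k<1$ through Lemmas~\ref{lemma:worst_aniso_periods} and~\ref{lemma:worst_tran_periods}, whereas the paper reads it off directly from Definition~\ref{def:worst_tran_coor2_gd_iter} together with monotonicity (Lemma~\ref{lemma:worst_monotone}); both are valid.
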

\begin{proof}
	Recall that $\tilde{t}\hspace{-0.5mm}\in\hspace{-0.5mm}\big[\tfrac{2}{a}\ln\hspace{-0.5mm}\big(\tfrac{2-3\bar{\rho}/2}{\vbox{\kern0.2ex \hbox{\ensuremath{{\scriptstyle \bar{\theta}}}}}(0)-(\bar{\rho}/2-1)}\big){+}\tfrac{1}{a}\ln\hspace{-0.5mm}\big(\hspace{-0.25mm}\tfrac{2}{1-\bar{\rho}}\hspace{-0.1mm}\big),\tfrac{2}{a}\ln\hspace{-0.5mm}\big(\tfrac{\,~2-3\bar{\rho}/2}{\vbox{\kern0.2ex \hbox{\ensuremath{{\scriptstyle \bar{\theta}}}}}(0)-(\bar{\rho}/2-1)}\big)\hspace{-0.25mm}{+}\tfrac{1}{a}\ln\hspace{-0.5mm}\big(\tfrac{1+\bar{\rho}/4}{1-3\bar{\rho}/4}\big)\hspace{-0.25mm}{+}\tfrac{1}{a}\ln(b)\big]$ and $\epsilon<1$.
	Lemmas~\ref{lemma:worst_aniso_coor2_gf} and \ref{lemma:worst_tran_coor2_gf} imply $\tilde{t}\in[t_{1}+\tfrac{1}{a}\ln(2),t_{1}+\tfrac{1}{a}\ln(b)]$.
	Notice that $t_{1}\leq t_{1}+\tfrac{1}{a}\ln(2)\leq\tilde{t}\leq t_{1}+\tfrac{1}{a}\ln(b)\leq t_{1}+\tfrac{1}{a}\ln(b+1)=t_{\bar{z}_{c}}$.
	Thus, by Lemma~\ref{lemma:worst_iso_periods}, gradient flow is in the isotropic region at time $\tilde{t}$. 
	We may use Lemma~\ref{lemma:worst_iso_coor2_gf} together with monotonicity of $\bar{\theta}(\cdot)$ (Lemma~\ref{lemma:worst_monotone}) to obtain 
	$\bar{\theta}(\tilde{t}\,)\geq\bar{\theta}\big(t_{1}+\tfrac{1}{a}\ln(2)\big)=e^{a\ln(2)/a}=2$.
	From the definition of $k_{1}$ (and from the fact that it is finite from Lemma~\ref{lemma:worst_tran_coor2_gd}) we know that $|\bar{\theta}_{k_{1}-1}|\leq1$.
	For all $k \in \{ 0 , 1 , ...\, , k_1 - 1 \}$, using  monotonicity of $( \bar{\theta}_k )_{k = 0}^\infty$ (Lemma~\ref{lemma:worst_monotone}), we may conclude:
	\[
	\Vert\boldsymbol{\theta}_{k}-\boldsymbol{\theta}(\tilde{t}\,)\Vert_{2}\geq|\bar{\theta}_{k}-\bar{\theta}(\tilde{t}\,)|\geq|\bar{\theta}(\tilde{t}\,)|-|\bar{\theta}_{k}|\geq|\bar{\theta}(\tilde{t}\,)|-|\bar{\theta}_{k_{1}-1}|\geq2-1=1>\epsilon\text{.}
	\]
\end{proof}
\begin{lemma}
\label{lemma:worst_inapprox_iso}
It holds that $\| \thetabf_k - \thetabf ( \tilde{t} \, ) \| > \epsilon$ for all $k \in \{ k_1 , k_1 + 1 , ...\, , k_{\bar{z}_c} - 1 \}$, where $k_1$ and~$k_{\bar{z}_c}$ are given by Definitions \ref{def:worst_tran_coor2_gd_iter} and~\ref{def:worst_iso_coor2_gd_iter} respectively.
\end{lemma}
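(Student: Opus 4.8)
The plan is to reduce everything to the first two coordinates, exactly as in the proof of Lemma~\ref{lemma:worst_inapprox_aniso_tran}: since $\| \thetabf_k - \thetabf ( \tilde{t} \, ) \|_2 \geq \big\| \big( \theta_k - \theta ( \tilde{t} \, ) , \bar{\theta}_k - \bar{\theta} ( \tilde{t} \, ) \big) \big\|_2$, it suffices to lower bound the planar distance between $P := \big( \theta ( \tilde{t} \, ) , \bar{\theta} ( \tilde{t} \, ) \big)$ and $Q := ( \theta_k , \bar{\theta}_k )$. The proof of Lemma~\ref{lemma:worst_inapprox_aniso_tran} already shows $t_1 \leq \tilde{t} < t_{\bar{z}_c}$, so by Lemma~\ref{lemma:worst_iso_straight} the gradient flow point~$P$ lies on the ray through the origin of slope $s_{GF} := \bar{\theta} ( t_1 ) / \theta ( t_1 ) = 1 / \theta ( t_1 )$ (last equality since $\bar{\theta} ( t_1 ) = 1$), and for $k$ in the stated range the gradient descent point~$Q$ lies on the ray of slope $s_{GD} := \bar{\theta}_{k_1} / \theta_{k_1}$. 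Consequently $\| P - Q \|_2$ is at least the distance from~$P$ to the gradient descent line, which (using $\bar{\theta} ( \tilde{t} \, ) = s_{GF} \, \theta ( \tilde{t} \, )$) equals $( s_{GD} - s_{GF} ) \, \theta ( \tilde{t} \, ) \big/ \sqrt{1 + s_{GD}^2}$.

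Next I would exploit the fact that the gradient descent ray is strictly steeper than the gradient flow ray. Since $\thetabf_0 = \thetabf_s$, we have $\theta_0 = \theta ( 0 )$ and $\bar{\theta}_0 = \bar{\theta} ( 0 )$, so writing $C := \big( ( \bar{\theta} ( 0 ) + 1 - \bar{\rho} / 2 ) / ( 2 - 3 \bar{\rho} / 2 ) \big)^2 / \theta ( 0 )$, Lemma~\ref{lemma:worst_tran_coor2_gf}\emph{(ii)} gives $s_{GF} \leq C$ while Lemma~\ref{lemma:worst_tran_coor2_gd}\emph{(ii)} gives $s_{GD} \geq C / ( 1 - a \eta / 10 )$ (well defined and positive since $a \eta \leq 1 / 6$ per Subsubappendix~\ref{app:proof:worst:prelim}). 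Hence $s_{GD} / s_{GF} \geq 1 / ( 1 - a \eta / 10 )$ and therefore $s_{GD} - s_{GF} \geq ( a \eta / 10 ) \, s_{GF}$. Plugging this in, and using $s_{GF} \, \theta ( \tilde{t} \, ) = \bar{\theta} ( \tilde{t} \, ) = e^{a ( \tilde{t} - t_1 )}$ from Lemma~\ref{lemma:worst_iso_coor2_gf}\emph{(ii)}, the distance from~$P$ to the gradient descent line is at least $\tfrac{a \eta}{10} e^{a ( \tilde{t} - t_1 )} \big/ \sqrt{1 + s_{GD}^2}$; invoking the hypothesis $\eta \geq \tfrac{10^{14}}{a} e^{- a \tilde{t}} \epsilon$ then collapses this to $10^{13} \epsilon \, e^{- a t_1} \big/ \sqrt{1 + s_{GD}^2}$.

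The remaining work is to bound $a t_1$ and $s_{GD}$ by absolute constants. For $a t_1$ I would combine Lemmas~\ref{lemma:worst_tran_coor2_gf}\emph{(i)} and~\ref{lemma:worst_aniso_coor2_gf}\emph{(i)} to get $a t_1 \leq 2 \ln \big( ( 4 - 3 \bar{\rho} ) / ( 2 \bar{\theta} ( 0 ) + 2 - \bar{\rho} ) \big) + \ln \big( 1 / ( 1 - \bar{\rho} ) \big)$, then use $\bar{\theta} ( 0 ) > e^{- 12} / 2 - 1$ and $\bar{\rho} \leq e^{- 12} / 2$ to obtain $2 \bar{\theta} ( 0 ) + 2 - \bar{\rho} > e^{- 12} / 2$, hence $a t_1 < 2 \ln ( 8 e^{12} ) + 1 < 28.2$. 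For $s_{GD}$, Lemma~\ref{lemma:worst_tran_periods} places $\bar{\theta}_{k_1 - 1}$ in $[ 1 - \bar{\rho} , 1 )$, so one step of the transition-region recursion gives $\bar{\theta}_{k_1} < 1 + a \eta ( 1 + \bar{\rho} / 4 ) < 1.2$, while $\theta_{k_1} \geq \theta_0 > 0.5$ by Lemma~\ref{lemma:worst_coor1}; thus $s_{GD} < 2.4$ and $\sqrt{1 + s_{GD}^2} < 2.6$. Putting the pieces together yields $\| \thetabf_k - \thetabf ( \tilde{t} \, ) \|_2 > \tfrac{10^{13}}{2.6 \, e^{28.2}} \epsilon > \epsilon$, as desired. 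The main obstacle is not conceptual but quantitative: the slack in the final inequality is only about a factor of two, so the argument requires careful bookkeeping of the constants $10^{14}$ (from the hypothesis on~$\eta$), $e^{a t_1}$, and $\sqrt{1 + s_{GD}^2}$ — in particular, pinning $a t_1$ below roughly~$28$ (rather than settling for the cruder bound~$30$ of Equation~\eqref{eq:worst_at_bound}) is essential for the estimate to close.
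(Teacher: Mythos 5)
Your argument is essentially the paper's: both proofs project onto the first two coordinates, use Lemma~\ref{lemma:worst_iso_straight} to place $\thetabf(\tilde t\,)$ and the relevant gradient descent iterates on two rays through the origin, lower bound $\|\thetabf_k-\thetabf(\tilde t\,)\|_2$ by the distance from the gradient flow point to the gradient descent ray, and quantify the angular gap between the rays via the ratio bounds of Lemmas~\ref{lemma:worst_tran_coor2_gf}\emph{(ii)} and~\ref{lemma:worst_tran_coor2_gd}\emph{(ii)}. The only difference is bookkeeping: the paper anchors the magnitude at $\theta(\tilde t\,)=\theta(0)e^{a\tilde t}$ and bounds the sine of the angle between the rays below by $(1-\beta)/(2\alpha)$ with $\alpha=\theta(0)e^{a t_{1-\bar\rho}}$, which folds your $e^{a t_1}$ and $\sqrt{1+s_{GD}^2}$ into a single expression and spares the separate absolute-constant estimates you need at the end.

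Two of your constants need repair, and since (as you yourself note) the final margin is only about a factor of two, they are load-bearing. First, the chain $a t_1 < 2\ln(8e^{12})+1<28.2$ is false as written: $2\ln(8e^{12})\approx 28.16$, so adding $1$ gives $\approx 29.16$, and with $e^{a t_1}\approx e^{29.16}$ your final expression evaluates to roughly $0.83\,\epsilon$, which does not close. The fix is simply not to discard the smallness of the transition term: $\ln\big(1/(1-\bar\rho)\big)\leq 2\bar\rho\leq e^{-12}<10^{-5}$, so $a t_1<28.16<28.2$ and the estimate does close (with margin $\approx 2.2$). Second, your appeal to Lemma~\ref{lemma:worst_tran_periods} for $\bar{\theta}_{k_1-1}\in[1-\bar\rho,1)$ silently assumes $k_1>k_{1-\bar\rho}$; in the edge case $k_1=k_{1-\bar\rho}$ (possible when $\bar{\theta}_{k_{1-\bar\rho}}\geq 1$) the step into iterate $k_1$ is an anisotropic-region step, but one checks directly that $\bar{\theta}_{k_1}\leq\bar{\theta}_{k_1-1}(1+a\eta/2)+(a\eta/2)(1-\bar\rho/2)<1+a\eta<1.2$ there as well, so the bound $s_{GD}<2.4$ survives. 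With these two patches the proof is sound.
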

\begin{proof}
	Recall that $\tilde{t}\hspace{-0.5mm}\in\hspace{-0.5mm}\big[\tfrac{2}{a}\ln\hspace{-0.5mm}\big(\tfrac{2-3\bar{\rho}/2}{\vbox{\kern0.2ex \hbox{\ensuremath{{\scriptstyle \bar{\theta}}}}}(0)-(\bar{\rho}/2-1)}\big){+}\tfrac{1}{a}\ln\hspace{-0.5mm}\big(\hspace{-0.25mm}\tfrac{2}{1-\bar{\rho}}\hspace{-0.1mm}\big),\tfrac{2}{a}\ln\hspace{-0.5mm}\big(\tfrac{\,~2-3\bar{\rho}/2}{\vbox{\kern0.2ex \hbox{\ensuremath{{\scriptstyle \bar{\theta}}}}}(0)-(\bar{\rho}/2-1)}\big)\hspace{-0.25mm}{+}\tfrac{1}{a}\ln\hspace{-0.5mm}\big(\tfrac{1+\bar{\rho}/4}{1-3\bar{\rho}/4}\big)\hspace{-0.25mm}{+}\tfrac{1}{a}\ln(b)\big]$ and $\epsilon<1$.
	Lemmas~\ref{lemma:worst_aniso_coor2_gf} and \ref{lemma:worst_tran_coor2_gf} imply $\tilde{t}\in[t_{1}+\tfrac{1}{a}\ln(2),t_{1}+\tfrac{1}{a}\ln(b)]$.
	Notice that $t_{1}\leq t_{1}+\tfrac{1}{a}\ln(2)\leq\tilde{t}\leq t_{1}+\tfrac{1}{a}\ln(b)\leq t_{1}+\tfrac{1}{a}\ln(b+1)=t_{\bar{z}_{c}}$.
	Thus, by Lemma~\ref{lemma:worst_iso_periods}, gradient flow is in the isotropic region at time $\tilde{t}$. 
	By Lemma~\ref{lemma:worst_tran_coor2_gd} we know that $k_1 < \infty$, and by Lemma~\ref{lemma:worst_iso_periods} it holds that $\bar{\theta}_{k_{1}}\geq1$.
	By monotonicity of $( {\theta}_k )_{k = 0}^\infty$ (Lemma~\ref{lemma:worst_monotone}) and since $\theta_{0} > 0$, we know that $\theta_{k}>0$ for all $k\in\N$.
	For all $k\in\{k_{1},k_{1}+1,...,k_{\bar{z}_{c}}-1\}$, Lemma~\ref{lemma:worst_iso_straight} ensures $\theta_{k}/\bar{\theta}_{k}=\theta_{k_{1}}/\bar{\theta}_{k_{1}}$, thus $(\theta_{k},\bar{\theta}_{k})\in\big\{(q,\bar{q})\, : \, q,\bar{q}\in (0,\infty) \text{ s.t. }q/\bar{q}=\theta_{k_{1}}/\bar{\theta}_{k_{1}}\big\} = \big\{ c\hspace{0.2mm}(\theta_{k_{1}},\bar{\theta}_{k_{1}})\::\:c>0\big\}$. 
	This leads us to:
	\[
	\big\Vert\boldsymbol{\theta}_{k}-\boldsymbol{\theta}(\tilde{t}\,)\big\Vert_{2}\geq\big\Vert\big(\theta_{k},\bar{\theta}_{k}\big)-\big(\theta(\tilde{t}\,),\bar{\theta}(\tilde{t}\,)\big)\big\Vert_{2}\geq\hspace{-0mm}\inf_{c\hspace{0.1mm}>0}\hspace{-0mm}\big\Vert c\big(\theta_{k_{1}},\bar{\theta}_{k_{1}}\big)-\big(\theta(\tilde{t}\,),\bar{\theta}(\tilde{t}\,)\big)\big\Vert_{2}\text{.}
	\]
	Minimizing over $c>0$ yields $c_{\min} = \bigl\langle\big(\theta(\tilde{t}\,),\bar{\theta}(\tilde{t}\,)\big),\big(\theta_{k_{1}},\bar{\theta}_{k_{1}}\big)\bigr\rangle\big/{\textstyle \Vert}(\theta_{k_{1}},\bar{\theta}_{k_{1}}){\textstyle \Vert}_{2}^{2}$.
	Note that since gradient flow is in the isotropic region at time $\tilde{t}$, then $\Vert(\theta(\tilde{t}\,),\bar{\theta}(\tilde{t}\,))\Vert_{2} \neq 0$.
	We obtain:
	\[
	\begin{aligned}\Vert\boldsymbol{\theta}_{k}-\boldsymbol{\theta}(\tilde{t}\,)\Vert_{2} & \geq\Big\Vert\big(\theta_{k_{1}},\bar{\theta}_{k_{1}}\big)\bigl\langle\big(\theta(\tilde{t}\,),\bar{\theta}(\tilde{t}\,)\big),\big(\theta_{k_{1}},\bar{\theta}_{k_{1}}\big)\bigr\rangle\big/{\textstyle \Vert}(\theta_{k_{1}},\bar{\theta}_{k_{1}}){\textstyle \Vert}_{2}^{2}-\big(\theta(\tilde{t}\,),\bar{\theta}(\tilde{t}\,)\big)\Big\Vert_{2}\\[0.75mm]
		& = \sqrt{\big\Vert\big(\theta(\tilde{t}\,),\bar{\theta}(\tilde{t}\,)\big)\big\Vert_{2}^{2}-\Bigl\langle\big(\theta(\tilde{t}\,),\bar{\theta}(\tilde{t}\,)\big),\tfrac{(\theta_{k_{1}},\bar{\theta}_{k_{1}})}{\hspace{1.35mm}{\textstyle \Vert}(\theta_{k_{1}},\bar{\theta}_{k_{1}}){\textstyle \Vert}_{2}}\Bigr\rangle^{\hspace{-0.75mm}2}}\\
		& =\big\Vert\big(\theta(\tilde{t}\,),\bar{\theta}(\tilde{t}\,)\big)\big\Vert_{2}\sqrt{1-\Bigl\langle\tfrac{(\theta(\tilde{t}\,),\bar{\theta}(\tilde{t}\,))}{\hspace{1.35mm}{\textstyle \Vert}(\theta(\tilde{t}\,),\bar{\theta}(\tilde{t}\,)){\textstyle \Vert}_{2}},\tfrac{(\theta_{k_{1}},\bar{\theta}_{k_{1}})}{\hspace{1.35mm}{\textstyle \Vert}(\theta_{k_{1}},\bar{\theta}_{k_{1}}){\textstyle \Vert}_{2}}\Bigr\rangle^{\hspace{-0.75mm}2}}\\
		& \geq\big|\theta(\tilde{t}\,)\big|\sqrt{1-\Bigl\langle\tfrac{(\theta(\tilde{t}\,),\bar{\theta}(\tilde{t}\,))}{\hspace{1.35mm}{\textstyle \Vert}(\theta(\tilde{t}\,),\bar{\theta}(\tilde{t}\,)){\textstyle \Vert}_{2}},\tfrac{(\theta_{k_{1}},\bar{\theta}_{k_{1}})}{\hspace{1.35mm}{\textstyle \Vert}(\theta_{k_{1}},\bar{\theta}_{k_{1}}){\textstyle \Vert}_{2}}\Bigr\rangle^{\hspace{-0.75mm}2}}\text{.}
	\end{aligned}
	\]
	By Lemma~\ref{lemma:worst_coor1} we have that $\theta(t) = \theta ( 0 ) e^{a t}$ for all $t\in\big[0,a^{-1}\ln\big(z_{c}/\theta(0)\big)\big)$.
	Recall the explicit expression for $t_{1-\bar{\rho}}$ from Lemma~\ref{lemma:worst_aniso_coor2_gf}.
	By Lemma~\ref{lemma:worst_tran_coor2_gf} $t_{1} \leq t_{1 {-} \bar{\rho}}+ a^{-1} \ln\big( 1 / ( 1-\bar{\rho} ) \big)$, and by Lemma~\ref{lemma:worst_iso_coor2_gf} we have that $t_{\bar{z}_{c}}=t_{{1}}+a^{-1}\ln(\bar{z_{c}})$.
	Since $\tilde{t} \leq t_{\bar{z}_{c}}$, overall  we obtain $\tilde{t} \leq \tfrac{2}{a}\ln\big( ( 4-3\bar{\rho} ) \big/ ( 2\bar{\theta} ( 0 ) +2-\bar{\rho} ) \big) + a^{-1} \ln\big( 1 / ( 1-\bar{\rho} ) \big)+a^{-1}\ln(\bar{z_{c}})$.
	By recalling the assumptions on $a$, $z_c$, $\bar{\rho}$, $\theta(0)$ and $\bar{\theta}(0)$, it can be shown that $\tilde{t} \leq a^{-1}\ln\big(z_{c}/\theta(0)\big)$.
	It follows that:
	\[
	\Vert\boldsymbol{\theta}_{k}-\boldsymbol{\theta}(\tilde{t}\,)\Vert_{2}\geq\theta(0)e^{a\tilde{t}}\sqrt{1-\Bigl\langle\tfrac{(\theta(\tilde{t}\,),\bar{\theta}(\tilde{t}\,))}{\hspace{1.35mm}{\textstyle \Vert}(\theta(\tilde{t}\,),\bar{\theta}(\tilde{t}\,)){\textstyle \Vert}_{2}},\tfrac{(\theta_{k_{1}},\bar{\theta}_{k_{1}})}{\hspace{1.35mm}{\textstyle \Vert}(\theta_{k_{1}},\bar{\theta}_{k_{1}}){\textstyle \Vert}_{2}}\Bigr\rangle^{\hspace{-0.75mm}2}}\text{.}
	\]
	Recall that $\tilde{t} \in [t_{1} , t_{\bar{z}_{c}}]$. Lemma~\ref{lemma:worst_iso_periods} implies that both $\bar{\theta}(\tilde{t}\,)$ and $\bar{\theta}(t_{1})$ are greater or equal to one.
	Since  Lemma~\ref{lemma:worst_iso_straight} implies $\theta(\tilde{t}\,)/\bar{\theta}(\tilde{t}\,)=\theta(t_{1})/\bar{\theta}(t_{1})$, it follows that:
	\[
	\begin{aligned}\Vert\boldsymbol{\theta}_{k}-\boldsymbol{\theta}(\tilde{t}\,)\Vert_{2} & \geq\theta(0)e^{a\tilde{t}}\sqrt{1-\Bigl\langle\tfrac{(\theta(t_{1}),\bar{\theta}(t_{1}))}{\hspace{1.35mm}{\textstyle \Vert}(\theta(t_{1}),\bar{\theta}(t_{1})){\textstyle \Vert}_{2}},\tfrac{(\theta_{k_{1}},\bar{\theta}_{k_{1}})}{\hspace{1.35mm}{\textstyle \Vert}(\theta_{k_{1}},\bar{\theta}_{k_{1}}){\textstyle \Vert}_{2}}\Bigr\rangle^{\hspace{-0.75mm}2}}\\
		& =\theta(0)e^{a\tilde{t}}\sqrt{1-\Bigl\langle\tfrac{(\theta(t_{1})/\bar{\theta}(t_{1}),1)}{\hspace{1.35mm}{\textstyle \Vert}(\theta(t_{1})/\bar{\theta}(t_{1}),1){\textstyle \Vert}_{2}},\tfrac{(\theta_{k_{1}}/\bar{\theta}_{k_{1}},1)}{\hspace{1.35mm}{\textstyle \Vert}(\theta_{k_{1}}/\bar{\theta}_{k_{1}},1){\textstyle \Vert}_{2}}\Bigr\rangle^{\hspace{-0.75mm}2}}
	\end{aligned}
	\]
	Note that the latter inner product is between positively correlated unit vectors, and that it is squared.
	We use Lemmas~\ref{lemma:worst_tran_coor2_gf} and \ref{lemma:worst_tran_coor2_gd}, ensuring that $\bar{\theta} ( t_1 ) / \theta ( t_1 ) \, \leq \, \big( ( \bar{\theta} ( 0 ) +1-\bar{\rho}/2 ) \big/ ( 2-3\bar{\rho}/2 ) \big)^2 \big/ \theta ( 0 )$, and $\bar{\theta}_{k_{1}}/\theta_{k_{1}}\,\geq\,\big((\bar{\theta}_{0}+1-\bar{\rho}/2)\big/(2-3\bar{\rho}/2)\big)^{2}\big/\big(\theta_{0}(1-a\eta/10)\big)$ respectively.
	For brevity, denote $\alpha \hspace{-0.25mm} := \theta(0)\big((4-3\bar{\rho})/(2\bar{\theta}(0)+2-\bar{\rho})\big)^{2}$ and $\beta \hspace{-0.25mm} :=(1-a\eta/10)$.
	Notice that $\beta \hspace{-0.25mm} \in \hspace{-0.25mm} (0,1)$.
	Recall that by definition $\theta(0)=\theta_{0}$ and $\bar{\theta}(0)=\bar{\theta}_{0}$.
	Thus, ${\theta_{t_{1}}}/{\bar{\theta}_{t_{1}}}\leq\alpha\beta < \alpha \leq {\theta(t_{1})}/{\bar{\theta}(t_{1})}$.
	Replacing $\theta(t_{1})/\bar{\theta}(t_{1})$ with $\alpha$, and $\theta_{k_{1}}/\bar{\theta}_{k_{1}}$ with $\alpha\beta$, decreases the angle between the unit vectors, thereby increasing their inner product.
	We thus have that:
	\[
	\begin{aligned}\Vert\boldsymbol{\theta}_{k}-\boldsymbol{\theta}(\tilde{t}\,)\Vert_{2} & \geq\theta(0)e^{a\tilde{t}}\sqrt{1-\Bigl\langle\tfrac{(\alpha,1)}{\hspace{1.35mm}\Vert(\alpha,1)\Vert_{2}},\tfrac{(\alpha\beta,1)}{\hspace{1.35mm}\Vert(\alpha\beta,1)\Vert_{2}}\Bigr\rangle^{\hspace{-0.75mm}2}}\\
		& =\theta(0)e^{a\tilde{t}}\sqrt{1-\Big(\tfrac{\alpha^{2}\beta+1}{\sqrt{\alpha^{2}+1}\cdot\sqrt{\alpha^{2}\beta^{2}+1}}\Big)^{\hspace{-0.75mm}2}}\\
		& =\theta(0)e^{a\tilde{t}}\sqrt{\tfrac{\alpha^{4}\beta^{2}+\alpha^{2}\beta^{2}+\alpha^{2}+1}{\alpha^{4}\beta^{2}+\alpha^{2}\beta^{2}+\alpha^{2}+1}-\Big(\tfrac{\alpha^{4}\beta^{2}+2\alpha^{2}\beta+1}{\alpha^{4}\beta^{2}+\alpha^{2}\beta^{2}+\alpha^{2}+1}\Big)}\\
		& =\theta(0)e^{a\tilde{t}}\sqrt{\tfrac{\alpha^{2}(1-\beta)^{2}}{\alpha^{4}\beta^{2}+\alpha^{2}\beta^{2}+\alpha^{2}+1}}
	\end{aligned}
	\]
	Since $\beta \in (0,1)$ and $\alpha \geq 1$ (can be verified by recalling the assumptions on $a,\eta,\bar{\rho},\theta(0)$ and $\bar{\theta}(0)$), we obtain:
	\[
	\Vert\boldsymbol{\theta}_{k}-\boldsymbol{\theta}(\tilde{t}\,)\Vert_{2}\geq\theta(0)e^{a\tilde{t}}\hspace{-0mm}\hspace{-0.25mm}\big(\tfrac{1-\beta}{2\alpha}\big)\text{.}
	\]
	Plugging in $\alpha$ and $\beta$, we obtain:
	\[
	\Vert\boldsymbol{\theta}_{k}-\boldsymbol{\theta}(\tilde{t}\,)\Vert_{2}\geq\theta(0)e^{a\tilde{t}}\hspace{-0.75mm}\cdot\hspace{-0.5mm}\tfrac{a\eta}{10}\cdot\tfrac{1}{2\theta(0)}\big(\tfrac{2\bar{\theta}(0)+2-\bar{\rho}}{4-3\bar{\rho}}\big)^{2}=\tfrac{1}{20}\big(\tfrac{2\bar{\theta}(0)+2-\bar{\rho}}{4-3\bar{\rho}}\big)^{2}a\eta e^{a\tilde{t}}\text{.}
	\]
	The definition of $\bar{\rho}$, and the assumptions on $\theta(0)$ and $\bar{\theta}(0)$ lead us to:
	\[
	\Vert\boldsymbol{\theta}_{k}-\boldsymbol{\theta}(\tilde{t}\,)\Vert_{2} \geq\tfrac{1}{20}\big(\tfrac{0.5e^{-12}}{4}\big)^{2}a\eta e^{a\tilde{t}} >10^{-14}a\eta e^{a\tilde{t}}\text{.}
	\]
	Since $\eta\geq\epsilon 10^{14}e^{-a\tilde{t}}\hspace{-0.25mm}/a$, we have  $\Vert\boldsymbol{\theta}_{k}-\boldsymbol{\theta}({\tilde{t}}\,)\Vert_{2} \hspace{-0.25mm} > \hspace{-0.25mm} \epsilon$, and this holds for all $k \hspace{-0.25mm} \in \hspace{-0.4mm} \{ k_1 , k_1 \hspace{-0.25mm} + \hspace{-0.25mm} 1 , ...\, , k_{\bar{z}_c} \hspace{-0.25mm} - \hspace{-0.25mm} 1 \}$.
\end{proof}
\begin{lemma}
\label{lemma:worst_inapprox_post_iso}
It holds that $\| \thetabf_k - \thetabf ( \tilde{t} \, ) \| > \epsilon$ for all $k \in \{ k_{\bar{z}_c} , k_{\bar{z}_c} + 1 , k_{\bar{z}_c} + 2 , \ldots\, \}$, where $k_{\bar{z}_c}$ is given by Definition~\ref{def:worst_iso_coor2_gd_iter}.
\end{lemma}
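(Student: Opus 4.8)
The plan is to reduce everything to the second coordinate: I will show that at time~$\tilde{t}$ the gradient flow trajectory is still inside the isotropic region with second coordinate at most~$b$, whereas every gradient descent iterate indexed by~$k \geq k_{\bar{z}_c}$ has second coordinate at least $\bar{z}_c = b + 1$, so the two are separated by more than~$\epsilon$ simply by projecting onto that coordinate.

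First I would locate $\tilde{t}$ between the times $t_1$ and~$t_{\bar{z}_c}$ of Definitions~\ref{def:worst_tran_coor2_gf_time} and~\ref{def:worst_iso_coor2_gf_time}. Using the explicit value $t_{1 {-} \bar{\rho}} = \tfrac{2}{a}\ln\big((4 - 3\bar{\rho})/(2\bar{\theta}(0) + 2 - \bar{\rho})\big)$ from Lemma~\ref{lemma:worst_aniso_coor2_gf}, the two-sided estimate $t_{1 {-} \bar{\rho}} + a^{-1}\ln\big((4 + \bar{\rho})/(4 - 3\bar{\rho})\big) \leq t_1 \leq t_{1 {-} \bar{\rho}} + a^{-1}\ln\big(1/(1 - \bar{\rho})\big)$ from Lemma~\ref{lemma:worst_tran_coor2_gf}, and the observation that the hypothesized interval for~$\tilde{t}$ is exactly $\big[\, t_{1 {-} \bar{\rho}} + a^{-1}\ln\big(2/(1 - \bar{\rho})\big) \,,\, t_{1 {-} \bar{\rho}} + a^{-1}\ln\big((4 + \bar{\rho})/(4 - 3\bar{\rho})\big) + a^{-1}\ln(b) \,\big]$ (here one uses $\bar{\theta}(0) - (\bar{\rho}/2 - 1) = \bar{\theta}(0) + 1 - \bar{\rho}/2$ and $(1 + \bar{\rho}/4)/(1 - 3\bar{\rho}/4) = (4 + \bar{\rho})/(4 - 3\bar{\rho})$), I would derive the chain $t_1 < t_1 + a^{-1}\ln 2 \leq \tilde{t} \leq t_1 + a^{-1}\ln(b)$. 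Since $t_{\bar{z}_c} = t_1 + a^{-1}\ln(\bar{z}_c) = t_1 + a^{-1}\ln(b + 1)$ by Lemma~\ref{lemma:worst_iso_coor2_gf}, this places $\tilde{t} \in [t_1 \, , t_{\bar{z}_c}]$, and the same lemma then gives $\bar{\theta}(\tilde{t}\,) = e^{a(\tilde{t} - t_1)} \leq e^{\ln b} = b$.

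Next I would invoke monotonicity. By Lemma~\ref{lemma:worst_iso_coor2_gd} the index $k_{\bar{z}_c}$ is finite, so by its defining property $\bar{\theta}_{k_{\bar{z}_c}} \geq \bar{z}_c = b + 1$, and Lemma~\ref{lemma:worst_monotone} (monotonicity of $(\bar{\theta}_k)_{k=0}^\infty$) upgrades this to $\bar{\theta}_k \geq b + 1$ for every $k \geq k_{\bar{z}_c}$. Projecting both points onto the second coordinate, for any such~$k$,
\[
\| \thetabf_k - \thetabf(\tilde{t}\,) \|_2 \;\geq\; \big| \bar{\theta}_k - \bar{\theta}(\tilde{t}\,) \big| \;\geq\; \bar{\theta}_k - \bar{\theta}(\tilde{t}\,) \;\geq\; (b + 1) - b \;=\; 1 \;>\; \epsilon \,,
\]
using $\epsilon \in (0,1)$; the existence of $\thetabf(\tilde{t}\,)$ is guaranteed by Lemma~\ref{lemma:infinite} since $f(\cdot)$ is twice continuously differentiable and smooth.

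I expect the only mildly delicate step to be the bookkeeping in the second paragraph: matching the logarithmic terms in the hypothesized interval for~$\tilde{t}$ with those produced by Lemmas~\ref{lemma:worst_aniso_coor2_gf}, \ref{lemma:worst_tran_coor2_gf} and~\ref{lemma:worst_iso_coor2_gf} so as to sandwich $\tilde{t}$ strictly between $t_1$ and $t_{\bar{z}_c}$ (analogous bookkeeping already appears in the proofs of Lemmas~\ref{lemma:worst_inapprox_aniso_tran} and~\ref{lemma:worst_inapprox_iso}). Once $\bar{\theta}(\tilde{t}\,) \leq b$ is in hand, the rest is an immediate consequence of monotonicity of the gradient descent iterates and a one-coordinate projection.
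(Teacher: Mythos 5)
Your proposal is correct and follows essentially the same route as the paper's proof: sandwich $\tilde{t}$ in $[t_1 + a^{-1}\ln 2,\, t_1 + a^{-1}\ln b] \subseteq [t_1, t_{\bar{z}_c}]$ via Lemmas~\ref{lemma:worst_aniso_coor2_gf}, \ref{lemma:worst_tran_coor2_gf} and~\ref{lemma:worst_iso_coor2_gf} to get $\bar{\theta}(\tilde{t}\,) \leq b$, then use $\bar{\theta}_k \geq \bar{z}_c = b+1$ for $k \geq k_{\bar{z}_c}$ (finiteness from Lemma~\ref{lemma:worst_iso_coor2_gd} plus monotonicity from Lemma~\ref{lemma:worst_monotone}) and project onto the second coordinate. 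The bookkeeping identifying the endpoints of the hypothesized interval for $\tilde{t}$ with $t_{1-\bar{\rho}}$-based expressions is exactly what the paper does.
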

\begin{proof}
	Recall that $\tilde{t}\hspace{-0.5mm}\in\hspace{-0.5mm}\big[\tfrac{2}{a}\ln\hspace{-0.5mm}\big(\tfrac{2-3\bar{\rho}/2}{\vbox{\kern0.2ex \hbox{\ensuremath{{\scriptstyle \bar{\theta}}}}}(0)-(\bar{\rho}/2-1)}\big){+}\tfrac{1}{a}\ln\hspace{-0.5mm}\big(\hspace{-0.25mm}\tfrac{2}{1-\bar{\rho}}\hspace{-0.1mm}\big),\tfrac{2}{a}\ln\hspace{-0.5mm}\big(\tfrac{\,~2-3\bar{\rho}/2}{\vbox{\kern0.2ex \hbox{\ensuremath{{\scriptstyle \bar{\theta}}}}}(0)-(\bar{\rho}/2-1)}\big)\hspace{-0.25mm}{+}\tfrac{1}{a}\ln\hspace{-0.5mm}\big(\tfrac{1+\bar{\rho}/4}{1-3\bar{\rho}/4}\big)\hspace{-0.25mm}{+}\tfrac{1}{a}\ln(b)\big]$ and $\epsilon<1$.
	Lemmas~\ref{lemma:worst_aniso_coor2_gf} and \ref{lemma:worst_tran_coor2_gf} imply $\tilde{t}\in[t_{1}+\tfrac{1}{a}\ln(2),t_{1}+\tfrac{1}{a}\ln(b)]$.
	Notice that $t_{1}\leq t_{1}+\tfrac{1}{a}\ln(2)\leq\tilde{t}\leq t_{1}+\tfrac{1}{a}\ln(b)\leq t_{1}+\tfrac{1}{a}\ln(b+1)=t_{\bar{z}_{c}}$.
	Thus, by Lemma~\ref{lemma:worst_iso_periods}, gradient flow is in the isotropic region at time $\tilde{t}$. 
	We may use Lemma~\ref{lemma:worst_iso_coor2_gf} together with monotonicity of $\bar{\theta}(\cdot)$ (Lemma~\ref{lemma:worst_monotone}) to obtain 
	$\bar{\theta}(\tilde{t}\,)\leq\bar{\theta}\big(t_{1}+\tfrac{1}{a}\ln(b)\big)=e^{a\ln(b)/a}=b$.
	Lemma~\ref{lemma:worst_monotone} further ensures monotonicity of $( \bar{\theta}_k )_{k = 0}^\infty$.
	By the definition of $\bar{\theta}_{k_{\bar{z}_{c}}}$ (and from the fact that it is finite from Lemma~\ref{lemma:worst_iso_coor2_gd}) we know that $\bar{\theta}_{k_{\bar{z}_{c}}}\geq\bar{z}_{c}=b+1$.
	This implies, for all $k \in \{ k_{\bar{z}_c} , k_{\bar{z}_c} + 1 , k_{\bar{z}_c} + 2 , \ldots\, \}$:
	\[
	\Vert\boldsymbol{\theta}_{k}-\boldsymbol{\theta}(\tilde{t}\,)\Vert_{2}\geq|\bar{\theta}_{k}-\bar{\theta}(\tilde{t}\,)|\geq|\bar{\theta}_{k}|-|\bar{\theta}(\tilde{t}\,)|\geq|\bar{\theta}_{k_{\bar{z}_{c}}}|-|\bar{\theta}(\tilde{t}\,)|\geq(b+1)-b=1>\epsilon\text{.}
	\]
\end{proof}

\subsubsection{Conclusion} \label{app:proof:worst:conclusion}

Taken together, Lemmas \ref{lemma:worst_inapprox_aniso_tran}, \ref{lemma:worst_inapprox_iso} and~\ref{lemma:worst_inapprox_post_iso} form a proof for Proposition~\ref{prop:worst} in the case where the step size~$\eta$ is no greater than~$\tfrac{1}{6 a}$.
The complementary case $\eta > \tfrac{1}{6 a}$ is accounted for by Lemma~\ref{lemma:worst_big_eta}.

\subsection{Proof of Lemma \ref{lemma:cnn_region_hess} \label{app:proof:cnn_region_hess}}
This proof is very similar to that of Lemmas~\ref{lemma:lnn_hess} and \ref{lemma:fnn_region_hess} (see Subappendixes \ref{app:proof:lnn_hess} and \ref{app:proof:fnn_region_hess} respectively).
We repeat all details for completeness. 
Recall that $\boldsymbol{\theta}\in\mathbb{R}^{d}$ is a concatenation of $(\w_{1},\w_{2},...,\w_{n})\in\R^{d'_{1}}\times\R^{d'_{2}}\times\cdots\times\R^{d'_{n}}$ as a vector.
Let $(\Delta\w_{1},\Delta\w_{2},...,\Delta\w_{n})\in\R^{d'_{1}}\times\R^{d'_{2}}\times\cdots\times\R^{d'_{n}}$, and denote by $\Delta\boldsymbol{\theta}\in\mathbb{R}^{d}$ its concatenation as a vector in corresponding order.
Denote the following for $i\in\{1,...,|\mathcal{S}|\}$:
\begin{equation}
\begin{aligned}\Delta_{i}^{(1)} & \hspace{-0.7mm}:={\textstyle \sum\nolimits _{j=1}^{n}}(D'_{i,*}W_{*}(\w_{*}))_{n:j\text{+}1}D'_{i,j}(W_{j}(\Delta\w_{j}))(D'_{i,*}W_{*}(\w_{*}))_{j\text{-}1:1}\text{,}\\
	\Delta_{i}^{(2)} & \hspace{-0.7mm}:={\textstyle \sum\nolimits _{1\leq j<j'\leq n}}(D'_{i,*}W_{*}(\w_{*}))_{n:j'\text{+}1}D'_{i,j'}(W_{j'}(\Delta\w_{j'}))(D'_{i,*}W_{*}(\w_{*}))_{j'\text{-}1:j\text{+}1}\\
	& \hspace{76mm}D'_{i,j}(W_{j}(\Delta\w_{j}))(D'_{i,*}W_{*}(\w_{*}))_{j\text{-}1:1}\text{,}\\
	\Delta_{i}^{(3:n)} & \hspace{-0.7mm}:=D'_{i,n}(W_{n}(\w_{n})+W_{n}(\Delta\w_{n}))\cdots D'_{i,1}(W_{1}(\w_{1})+W_{1}(\Delta\w_{1}))\\
	& \hspace{75.4mm}-(D'_{i,*}W_{*}(\w_{*}))_{n:1}-\Delta_{i}^{(1)}-\Delta_{i}^{(2)}\text{.}
\end{aligned}
\label{app:proof:cnn_hess:eq:delta_definition}
\end{equation}
We now develop a second-order Taylor expansion of $f(\boldsymbol{\theta})$.
Since the vector tuple corresponding to $(\boldsymbol{\theta}+\Delta\boldsymbol{\theta})$ is $\big( (\w_{1}+\Delta \w_{1}),...,(\w_{n}+\Delta \w_{n}) \big)$, and the function $f(\cdot)$ coincides with the function given in Equation (\ref{eq:train_loss_cnn_region}) on an open region containing $\boldsymbol{\theta}$, for sufficiently small $\Delta\boldsymbol{\theta}$ we obtain: 
\begin{equation}
\begin{aligned} & f(\boldsymbol{\theta}+\Delta\boldsymbol{\theta})\\
	& =\frac{1}{|\mathcal{S}|}\sum_{i=1}^{|\mathcal{S}|}\ell\Bigl(D'_{i,n}\big(W_{n}(\w_{n}+\Delta\w_{n})\big)\cdots D'_{i,1}\big(W_{1}(\w_{1}+\Delta\w_{1})\big)\x_{i},y_{i}\Bigr)\\
	& =\frac{1}{|\mathcal{S}|}\sum_{i=1}^{|\mathcal{S}|}\ell\Bigl(D'_{i,n}\big(W_{n}(\w_{n})+W_{n}(\Delta\w_{n})\big)\cdots D'_{i,1}\big(W_{1}(\w_{1})+W_{1}(\Delta\w_{1})\big)\x_{i},y_{i}\Bigr)\\
	& =\frac{1}{|\mathcal{S}|}\sum_{i=1}^{|\mathcal{S}|}\ell\Bigl(\bigl((D'_{i,*}W_{*}(\w_{*}))_{n:1}+\Delta_{i}^{(1)}+\Delta_{i}^{(2)}+\Delta_{i}^{(3:n)}\bigr)\x_{i},y_{i}\Bigr)\\
	& =\frac{1}{|\mathcal{S}|}\sum_{i=1}^{|\mathcal{S}|}\ell\Bigl((D'_{i,*}W_{*}(\w_{*}))_{n:1}\x_{i}+\bigl(\Delta_{i}^{(1)}+\Delta_{i}^{(2)}+\Delta_{i}^{(3:n)}\bigr)\x_{i},y_{i}\Bigr)\text{\,,}
\end{aligned}
	\label{app:proof:cnn_hess:eq:f_equal_phi}
\end{equation}
where the second transition follows from linearity of $W_{j}(\cdot)$ for $j\in\{1,...,n\}$ and the third transition follows from the definition of $\Delta^{(3:n)}_{i}$ (Equation (\ref{app:proof:cnn_hess:eq:delta_definition})).
Let $\Delta\boldsymbol{v}\in\mathbb{R}^{d_{n}}$. 
For every $i \in \{1,...,|\mathcal{S}|\}$, the second-order Taylor expansion of $\ell(\cdot)$ with respect to its first argument at  $\big( (D'_{i,*}W_{*}(\w_{*}))_{n:1}\x_{i},y_{i} \big)$ is given by:
\begin{equation}
\begin{aligned} & \ell\bigl((D'_{i,*}W_{*}(\w_{*}))_{n:1}\x_{i}+\Delta\boldsymbol{v},y_{i}\bigr)=\\
	& \hspace{2mm} \ell\bigl((D'_{i,*}W_{*}(\w_{*}))_{n:1}\x_{i},y_{i}\bigr)+\bigl\langle\nabla\ell_{i},\Delta\boldsymbol{v}\bigr\rangle+\tfrac{1}{2}\nabla^{2}\ell_{i}[\Delta\boldsymbol{v}]+{\scriptstyle \mathcal{O}}\big(\left\Vert \Delta\boldsymbol{v}\right\Vert _{2}^{2}\big)\text{,}
\end{aligned}
	\label{app:proof:cnn_hess:eq:phi_taylor}
\end{equation}
where the ${\scriptstyle \mathcal{O}}(\cdot)$ notation refers to some expression satisfying $\lim_{a\rightarrow0}\big({\scriptstyle \mathcal{O}}(a)/a\big)=0$.
We continue to develop Equation (\ref{app:proof:cnn_hess:eq:f_equal_phi}) using Equation (\ref{app:proof:cnn_hess:eq:phi_taylor}):
\[
\begin{aligned}f(\boldsymbol{\theta} & +\Delta\boldsymbol{\theta})\\
	=\: & \frac{1}{|\mathcal{S}|}\sum_{i=1}^{|\mathcal{S}|}\Big(\ell\bigl((D'_{i,*}W_{*}(\w_{*}))_{n:1}\x_{i},y_{i}\bigr)+\bigl\langle\nabla\ell_{i},\bigl(\Delta_{i}^{(1)}+\Delta_{i}^{(2)}+\Delta_{i}^{(3:n)}\bigr)\x_{i}\bigr\rangle+\\
	& \quad\quad\quad\quad\tfrac{1}{2}\nabla^{2}\ell_{i}\bigl[\bigl(\Delta_{i}^{(1)}+\Delta_{i}^{(2)}+\Delta_{i}^{(3:n)}\bigr)\x_{i}\bigr]+{\scriptstyle \mathcal{O}}\bigl(\bigl\Vert\bigl(\Delta_{i}^{(1)}+\Delta_{i}^{(2)}+\Delta_{i}^{(3:n)}\bigr)\x_{i}\bigr\Vert_{2}^{2}\bigr)\:\Big)\\
	=\: & \frac{1}{|\mathcal{S}|}\sum_{i=1}^{|\mathcal{S}|}\ell\bigl((D'_{i,*}W_{*}(\w_{*}))_{n:1}\x_{i},y_{i}\bigr)+\\
	& \frac{1}{|\mathcal{S}|}\sum_{i=1}^{|\mathcal{S}|}\bigl\langle\nabla\ell_{i},\Delta_{i}^{(1)}\x_{i}\bigr\rangle+\bigl\langle\nabla\ell_{i},\Delta_{i}^{(2)}\x_{i}\bigr\rangle+\bigl\langle\nabla\ell_{i},\Delta_{i}^{(3:n)}\x_{i}\bigr\rangle+\\
	& \frac{1}{|\mathcal{S}|}\sum_{i=1}^{|\mathcal{S}|}\tfrac{1}{2}\nabla^{2}\ell_{i}\bigl[\Delta_{i}^{(1)}\x_{i}\bigr]+\tfrac{1}{2}\nabla^{2}\ell_{i}\bigl[\bigl(\Delta_{i}^{(2)}+\Delta_{i}^{(3:n)}\bigr)\x_{i}\bigr]+2\cdot\tfrac{1}{2}\nabla^{2}\ell_{i}\bigl[\Delta_{i}^{(1)}\x_{i},\bigl(\Delta_{i}^{(2)}+\Delta_{i}^{(3:n)}\bigr)\x_{i}\bigr]+\\
	& \frac{1}{|\mathcal{S}|}\sum_{i=1}^{|\mathcal{S}|}{\scriptstyle \mathcal{O}}\bigl(\bigl\Vert\bigl(\Delta_{i}^{(1)}+\Delta_{i}^{(2)}+\Delta_{i}^{(3:n)}\bigr)\x_{i}\bigr\Vert_{2}^{2}\bigr)\text{\,,}
\end{aligned}
\]
where in the last transition we view $\nabla^{2}\ell_{i}$ as both a quadratic and a bilinear form (see Subappendix~\ref{app:proof:notations}).
Notice that  $\bigl\langle\nabla \ell_i,\Delta_{i}^{(3:n)}\x_{i}\bigr\rangle$, $\frac{1}{2}\nabla^{2}\ell_i\bigl[\bigl(\Delta_{i}^{(2)}+\Delta_{i}^{(3:n)}\bigr)\x_{i}\bigr]$, $\nabla^{2}\ell_i\bigl[\Delta_{i}^{(1)}\x_{i},\bigl(\Delta_{i}^{(2)}+\Delta_{i}^{(3:n)}\bigr)\x_{i}\bigr]$ and ${\scriptstyle \mathcal{O}}\big(\bigl\Vert(\Delta_{i}^{(1)}+\Delta_{i}^{(2)}+\Delta_{i}^{(3:n)})\x_{i}\bigr\Vert_{2}^{2}\big)$ are all ${\scriptstyle \mathcal{O}}\big(\left\Vert \Delta \boldsymbol{\theta} \right\Vert_{2}^{2}\big)$, thus:
\[
\begin{aligned} & \hspace{65mm} f(\boldsymbol{\theta}+\Delta\boldsymbol{\theta})=\\
	& \frac{1}{|\mathcal{S}|}\hspace{-0.75mm}\sum_{i=1}^{|\mathcal{S}|}\hspace{-0.5mm}\ell\bigl((D'_{i,*}W_{*}(\w_{*}))_{n:1}\x_{i},y_{i}\bigr)\hspace{-0.5mm}+\hspace{-0.5mm}\bigl\langle\nabla\ell_{i},\Delta_{i}^{(1)}\x_{i}\bigr\rangle\hspace{-0.5mm}+\hspace{-0.5mm}\bigl\langle\nabla\ell_{i},\Delta_{i}^{(2)}\x_{i}\bigr\rangle\hspace{-0.5mm}+\hspace{-0.5mm}\tfrac{1}{2}\nabla^{2}\ell_{i}\bigl[\Delta_{i}^{(1)}\x_{i}\bigr]\hspace{-0.5mm}+\hspace{-0.5mm}{\scriptstyle \mathcal{O}}\big(\hspace{-0.5mm}\left\Vert \Delta\boldsymbol{\theta}\right\Vert _{2}^{2}\big)\text{.}
\end{aligned}
\]
This is in fact a Taylor expansion of the function $f(\cdot)$ evaluated at the point $\boldsymbol{\theta}$ with
a constant term $\frac{1}{|\mathcal{S}|} \hspace{-1mm} \sum_{i=1}^{|\mathcal{S}|} \hspace{-0.75mm} \ell\bigl(\hspace{-0.4mm}(D'_{i,*}W_{*}(\w_{*})\hspace{-0.4mm})_{n:1}\x_{i},y_{i}\bigr)$,
a linear term  $\frac{1}{|\mathcal{S}|}\sum_{i=1}^{|\mathcal{S}|}\bigl\langle\nabla\ell_{i},\Delta_{i}^{(1)}\x_{i}\bigr\rangle$,
a quadtratic term  $\frac{1}{|\mathcal{S}|}\sum_{i=1}^{|\mathcal{S}|}\bigl\langle\nabla\ell_{i},\Delta_{i}^{(2)}\x_{i}\bigr\rangle+\tfrac{1}{2}\nabla^{2}\ell_{i}\bigl[\Delta_{i}^{(1)}\x_{i}\bigr]$,
and a remainder term of ${\scriptstyle \mathcal{O}}\big(\left\Vert \Delta\boldsymbol{\theta}\right\Vert _{2}^{2}\big)$.
From uniqueness of the Taylor expansion, the quadratic term must be equal to $\frac{1}{2}\nabla^{2}f(\boldsymbol{\theta})\left[\Delta \w_{1},...,\Delta \w_{n}\right]$. This implies:
\[
\begin{aligned}\nabla^{2} & f(\boldsymbol{\theta})\left[\Delta\w_{1},...,\Delta\w_{n}\right]\\
	= & \:\frac{1}{|\mathcal{S}|}\sum_{i=1}^{|\mathcal{S}|}\Bigl(\nabla^{2}\ell_{i}\bigl[\Delta_{i}^{(1)}\x_{i}\bigr]+2\bigl\langle\nabla\ell_{i},\Delta_{i}^{(2)}\x_{i}\bigr\rangle\Bigr)\\
	= & \:\frac{1}{|\mathcal{S}|}\sum_{i=1}^{|\mathcal{S}|}\nabla^{2}\ell_{i}\Bigl[{\textstyle \sum_{j=1}^{n}}(D'_{i,*}W_{*}(\w_{*}))_{n:j\text{+}1}D'_{i,j}(W_{j}(\Delta\w_{j}))(D'_{i,*}W_{*}(\w_{*}))_{j\text{-}1:1}\x_{i}\Bigr]+\\[0mm]
	& \:\frac{2}{|\mathcal{S}|}\sum_{i=1}^{|\mathcal{S}|}\nabla\ell_{i}^{\top}\hspace{-4mm}\sum_{1\leq j<j'\leq n}\hspace{-4mm}(D'_{i,*}W_{*}(\w_{*}))_{n:j'\text{+}1}D'_{i,j'}(W_{j'}(\Delta\w_{j'}))(D'_{i,*}W_{*}(\w_{*}))_{j'\text{-}1:j\text{+}1} \cdot\\
	& \hspace{87mm}D'_{i,j}(W_{j}(\Delta\w_{j}))(D'_{i,*}W_{*}(\w_{*}))_{j\text{-}1:1}\x_{i}\text{\,,}
\end{aligned}
\]
where the last transition follows from plugging in the definitions of $\Delta^{(1)}$ and $\Delta^{(2)}$ (see Equation (\ref{app:proof:cnn_hess:eq:delta_definition}). $\qed$

\subsection{Proof of Proposition \ref{prop:cnn_hess_arbitrary_neg} \label{app:proof:cnn_hess_arbitrary_neg}}
This proof is very similar to that of Proposition~\ref{prop:fnn_hess_arbitrary_neg} (see Subappendix \ref{app:proof:fnn_hess_arbitrary_neg}). We repeat all details for completeness.
From assumption \emph{(ii)} there exists some $\boldsymbol{\theta}\in\mathbb{R}^{d}$ such that $\sum_{i=1}^{|\mathcal{S}|}\nabla\ell(\boldsymbol{0},y_{i})^{\top}h_{\boldsymbol{\theta}}(\x_{i})\neq0$.
Define $(\w_{1},\w_{2},...,\w_{n})\in\R^{d'_{1}}\times\R^{d'_{2}}\times\cdots\times\R^{d'_{n}}$ to be the weight vectors constituting $\boldsymbol{\theta}$.
We may assume $\sum_{i = 1}^{| \S |} \nabla \ell ( \0 , y_i )^\top h_\thetabf ( \x_i ) < 0$ without loss of generality, as we can negate the vectors $h_\thetabf ( \x_i )\in \mathbb{R}^{d_{n}}$ for all $i\in\{ 1,2,...,|\mathcal{S}|\}$ by flipping the signs of the entries in~$\thetabf$ corresponding to the last vector~$\w_n$ (see Equation (\ref{eq:cnn})).
From continuity, there exists a neighborhood $\mathcal{N}$ of~$\thetabf$ such that for all $\tilde{\thetabf} \in \mathcal{N}$ it holds that $\sum_{i=1}^{|\S|}\nabla\ell(\0,y_{i})^{\top}h_{\tilde{\thetabf}}(\x_{i})<0$.
Moreover, as discussed in Appendix~\ref{app:cnn}, for almost all $\thetabf' \in \mathbb{R}^{d}$ there exists an open region $\D_{\thetabf'} \subseteq \R^d$ containing~$\thetabf'$, which is closed under positive rescaling of weight matrices and across which $f ( \cdot )$ coincides with a function as given in Equation (\ref{eq:train_loss_cnn_region}).
There must exist some $\thetabf'$ in the neighborhood $\mathcal{N}$ for which a region of the type $\D_{\thetabf'}$ exists.
We may assume, without loss of generality, that $\thetabf \in \D_{\thetabf'}$.
Notice that none of the  weight vectors $\w_{1},\w_{2},...,\w_{n}$ are equal to zero (as that would lead to $\sum_{i=1}^{|\mathcal{S}|}\nabla\ell(\boldsymbol{0},y_{i})^{\top}h_{\boldsymbol{\theta}}(\x_{i})=0$).
Define the following weight vectors parameterized by $a>0$ 
(while recalling that $n\geq3$ by assumption \emph{(i)}):
\[
\begin{aligned}\w_{1} & (a):=\w_{1}\cdot a^{-2}\in\mathbb{R}^{d'_{1}}\text{\,,}\\
	\w_{2} & (a):=\w_{2}\cdot a^{-2}\in\mathbb{R}^{d'_{2}}\text{\,,}\\
	\w_{3} & (a):=\w_{3}\cdot a\in\mathbb{R}^{d'_{3}}\text{\,,}\\
	\w_{j} & (a):=\w_{j}\in\mathbb{R}^{d'_{j}}\text{ for }j\in \{1,2,...,n\} /\{1,2,3\}\text{\,,}
\end{aligned}
\]
and denote by  $\boldsymbol{\theta}(a)\in\mathbb{R}^{d}$ their corresponding weight setting.
Since $\mathcal{D}_{\boldsymbol{\theta}'}$ is closed under positive rescaling of weight vectors, it holds that  $\{\boldsymbol{\theta}(a)\: : \:a>0\}\subseteq\mathcal{D}_{\boldsymbol{\theta}'}$.
Define:
\[
\begin{aligned}\Delta\w_{1} & :=\w_{1}\in\mathbb{R}^{d'_{1}}\text{\,,}\\
	\Delta\w_{2} & :=\w_{2}\in\mathbb{R}^{d'_{2}}\text{\,,}\\
	\Delta\w_{j} & :=\boldsymbol{0}\in\mathbb{R}^{d'_{j}}\text{ for }j\in[n]/\{1,2\}\text{\,.}
\end{aligned}
\]
For $a>0\,,\,i \in \{ 1 , 2 , ...\, , | \S | \}$ and $j , j' \in \{ 1 , 2 , ...\, , n \}$, define $( D'_{i , *} W_{*}(\w_{*}(a) ) )_{j' : j}$ to be the matrix $D'_{i,j'}W_{j'}(\w_{j'}(a))D'_{i,j'-1}W_{j'-1}(\w_{j'-1}(a))\cdots D'_{i,j}W_{j}(\w_{j}(a))$ (where by convention $D'_{i , n} \in \R^{d_n , d_n}$ stands for identity) if $j \leq j'$, and an identity matrix (with size to be inferred by context) otherwise.
For $i \in \{ 1 , 2 , ...\, , | \S | \}$ and $a>0$ let $\nabla \ell_i (a) \in \R^{d_n}$ and~$\nabla^2 \ell_i(a) \in \R^{d_n , d_n}$ be the gradient and Hessian (respectively) of the loss~$\ell ( \cdot )$ at the point $\big( ( D'_{i , *} W_{*}( \w_{*}(a) ) )_{n : 1} \x_i , y_i \big)$ with respect to its first argument.
For every $a > 0$, since $\boldsymbol{\theta}(a) \in \mathcal{D}_{\boldsymbol{\theta}'}$, we may apply Lemma~\ref{lemma:cnn_region_hess}, obtaining:
\begin{equation}
\begin{aligned} &
	\hspace{-2mm}
	\nabla^{2}f\big(\boldsymbol{\theta}(a)\big)\left[\Delta\w_{1},...,\Delta\w_{n}\right]=\\
	& \frac{1}{|\mathcal{S}|}\sum_{i=1}^{|\mathcal{S}|}\nabla^{2}\ell_{i}(a)\Bigl[{\textstyle \sum_{j=1}^{n}}(D'_{i,*}W_{*}(\w_{*}(a)))_{n:j\text{+}1}D'_{i,j}(W_{j}(\Delta\w_{j}))(D'_{i,*}W_{*}(\w_{*}(a)))_{j\text{-}1:1}\x_{i}\Bigr]+\\[-0mm]
	& \frac{2}{|\mathcal{S}|}\sum_{i=1}^{|\mathcal{S}|}\nabla\ell_{i}(a)^{\top}\hspace{-4mm}\sum_{1\leq j<j'\leq n}\hspace{-4mm}(D'_{i,*}W_{*}(\w_{*}(a)))_{n:j'\text{+}1}D'_{i,j'}(W_{j'}(\Delta\w_{j'}))(D'_{i,*}W_{*}(\w_{*}(a)))_{j'\text{-}1:j\text{+}1}\\
	& \hspace{85mm}D'_{i,j}(W_{j}(\Delta\w_{j}))(D'_{i,*}W_{*}(\w_{*}(a)))_{j\text{-}1:1}\x_{i}\text{,}
\end{aligned}
\hspace{-10mm}
\end{equation}
where we regard Hessians as quadratic forms (see Subappendix \ref{app:proof:notations}).
Plugging in the definitions of $\w_{j}(a)$ and $\Delta \w_{j}$ for $j\in[n]$ and relying on  linearity of $W_{j}(\cdot)$ for $j\in[n]$, we have:
\begin{equation}
\begin{aligned} &
	\hspace{-1.5mm}
	\nabla^{2}f\big(\boldsymbol{\theta}(a)\big)\left[\Delta\w_{1},...,\Delta\w_{n}\right]\\
	& =\frac{1}{|\mathcal{S}|}\sum_{i=1}^{|\mathcal{S}|}\nabla^{2}\ell_{i}(a)\Bigl[2a^{-1}(D'_{i,*}W_{*}(\w_{*}))_{n:1}\x_{i}\Bigr]+\frac{2}{|\mathcal{S}|}\sum_{i=1}^{|\mathcal{S}|}\hspace{-0.5mm}\nabla\ell_{i}(a)^{\top}a(D'_{i,*}W_{*}(\w_{*}))_{n:1}\x_{i}\\
	& =\frac{4}{a^{2}}\cdot\frac{1}{|\mathcal{S}|}\sum_{i=1}^{|\mathcal{S}|}\nabla^{2}\ell_{i}(a)\Bigl[(D'_{i,*}W_{*}(\w_{*}))_{n:1}\x_{i}\Bigr]+a\cdot\frac{2}{|\mathcal{S}|}\sum_{i=1}^{|\mathcal{S}|}\hspace{-0.5mm}\nabla\ell_{i}(a)^{\top}h_{\boldsymbol{\theta}}(\x_{i})\text{ ,}
\end{aligned}
	\label{app:proof:cnn_hess_arbitrary_neg:eq:hessian}
\end{equation}
where the second transition follows from pulling $2/a$ out of the quadratic operator and the fact that  $h_{\boldsymbol{\theta}}(\x_{i})=(D'_{i,*}W_{*}(\w_{*}))_{n:1}\x_{i}$.
Note that $\lim_{a\rightarrow\infty}\hspace{-1mm}\big(D'_{i,*}W_{*}(\w_{*}(a))\big)_{n:1}=0$.
Since the function $\ell(\cdot)$ is twice continuously differentiable in its first argument, it holds that $\lim_{a\rightarrow\infty}\hspace{-1mm}\nabla^{2}\ell_{i}(a)\hspace{-0.5mm} = \hspace{-0.5mm}\lim_{a\rightarrow\infty}\hspace{-1mm}\nabla^{2}\ell\big((D'_{i,*}W_{*}(\w_{*}(a)))_{n:1}\x_{i},y_{i}\big) \hspace{-0.5mm} =\hspace{-1mm}\nabla^{2}\ell(\boldsymbol{0},y_{i})$, and similarly we have that $\lim_{a\rightarrow\infty}\hspace{-1mm}\nabla\ell_{i}(a)\hspace{-0.5mm}=\hspace{-0.5mm}\lim_{a\rightarrow\infty}\hspace{-1mm}\nabla\ell\big((D'_{i,*}W_{*}(\w_{*}(a)))_{n:1}\x_{i},y_{i}\big)\hspace{-0.5mm}=\hspace{-1mm}\nabla\ell(\boldsymbol{0},y_{i})$.
Therefore, in the limit $a\rightarrow\infty$, Equation~(\ref{app:proof:cnn_hess_arbitrary_neg:eq:hessian}) becomes:
\[
\begin{aligned} & \lim_{a\rightarrow\infty}\hspace{-1mm}\bigg(\hspace{-0.5mm}\nabla^{2}f\big(\boldsymbol{\theta}(a)\big)\left[\Delta\w_{1},...,\Delta\w_{n}\right]\biggr)\\
	& =\lim_{a\rightarrow\infty}\hspace{-1mm}\bigg(\hspace{-0.5mm}\frac{4}{a^{2}}\cdot\frac{4}{|\mathcal{S}|}\sum_{i=1}^{|\mathcal{S}|}\nabla^{2}\ell_{i}(a)\Bigl[(D'_{i,*}W_{*}(\w_{*}))_{n:1}\x_{i}\Bigr]\biggr)\hspace{-0.5mm}+\hspace{-1mm}\lim_{a\rightarrow\infty}\hspace{-1mm}\bigg(\hspace{-0.5mm}a\cdot\frac{2}{|\mathcal{S}|}\sum_{i=1}^{|\mathcal{S}|}\hspace{-0.5mm}\nabla\ell_{i}(a)^{\top}h_{\boldsymbol{\theta}}(\x_{i})\hspace{-0.5mm}\biggr)\\
	& =\lim_{a\rightarrow\infty}\hspace{-1mm}\bigg(\hspace{-0.5mm}\frac{4}{a^{2}}\biggr)\hspace{-1mm}\lim_{a\rightarrow\infty}\hspace{-1mm}\bigg(\hspace{-0.5mm}\frac{4}{|\mathcal{S}|}\hspace{-0.5mm}\sum_{i=1}^{|\mathcal{S}|}\nabla^{2}\ell_{i}(a)\Bigl[(D'_{i,*}W_{*}(\w_{*}))_{n:1}\x_{i}\Bigr]\hspace{-0.5mm}\biggr)\hspace{-1mm}+\hspace{-1.25mm}\lim_{a\rightarrow\infty}\hspace{-1.25mm}\bigg(\hspace{-1mm}a\hspace{-0.75mm}\cdot\hspace{-1.25mm}\lim_{a\rightarrow\infty}\hspace{-1mm}\bigg(\hspace{-0.5mm}\frac{2}{|\mathcal{S}|}\hspace{-0.5mm}\sum_{i=1}^{|\mathcal{S}|}\hspace{-0.5mm}\nabla\ell_{i}(a)^{\hspace{-0.5mm}\top}\hspace{-0.25mm}h_{\boldsymbol{\theta}}(\x_{i})\hspace{-1mm}\biggr)\hspace{-1mm}\biggr)\\
	& =0\cdot\bigg(\hspace{-0.5mm}\frac{4}{|\mathcal{S}|}\sum_{i=1}^{|\mathcal{S}|}\nabla^{2}\ell(\boldsymbol{0},y_{i})\Bigl[(D'_{i,*}W_{*}(\w_{*}))_{n:1}\x_{i}\Bigr]\biggr)\hspace{-0.5mm}+\hspace{-1mm}\lim_{a\rightarrow\infty}\hspace{-1mm}\bigg(a\cdot\frac{2}{|\mathcal{S}|}\sum_{i=1}^{|\mathcal{S}|}\hspace{-0.5mm}\nabla\ell_{i}(\boldsymbol{0},y_{i})^{\top}h_{\boldsymbol{\theta}}(\x_{i})\biggr)\\
	& =-\infty\text{\,,}
\end{aligned}
\]
where the second transition is valid since the multiplied limits are finite and the limit inside a limit is non-zero, and the last transition follows from $\sum_{i=1}^{|\mathcal{S}|}\nabla\ell(\boldsymbol{0},y_{i})^{\top}h_{\boldsymbol{\theta}}(\x_{i}) < 0$.
Notice that the vectors $\Delta \w_1,\Delta \w_2,...,\Delta \w_n$ are independent of $a$, thus it must hold that $\lim_{a\rightarrow\infty}\lambda_{\min}\big(\nabla^{2}f\big(\boldsymbol{\theta}(a)\big)\big)=-\infty$. This in particular implies the desired result:
\[
{\textstyle \inf_{\thetabf\in\R^{d}~\emph{s.t.}\,\nabla^{2}f(\thetabf)~\emph{exists}}}\hspace{0.5mm}\lambda_{\min}(\nabla^{2}f(\thetabf))=-\infty
\text{\,.}
\]\qed

\subsection{Proof of Lemma \ref{lemma:cnn_region_hess_lb} \label{app:proof:cnn_region_hess_lb}}
This proof is very similar to that of Lemmas~\ref{lemma:lnn_hess_lb} and \ref{lemma:fnn_region_hess_lb} (see Subappendixes \ref{app:proof:lnn_hess_lb} and \ref{app:proof:fnn_region_hess_lb} respectively).
We repeat all details for completeness. 
Recall that $\boldsymbol{\theta}\in\mathbb{R}^{d}$ is a concatenation of $(\w_{1},\w_{2},...,\w_{n})\in\R^{d'_{1}}\times\R^{d'_{2}}\times\cdots\times\R^{d'_{n}}$ as a vector.
Let $(\Delta\w_{1},\Delta\w_{2},...,\Delta\w_{n})\in\R^{d'_{1}}\times\R^{d'_{2}}\times\cdots\times\R^{d'_{n}}$, and denote by $\Delta\boldsymbol{\theta}\in\mathbb{R}^{d}$ its concatenation as a vector in corresponding order.
As shown in Lemma \ref{lemma:cnn_region_hess}:
\begin{equation*}
\begin{aligned} &\hspace{-0.5mm} \nabla^{2}f(\boldsymbol{\theta})\left[\Delta\w_{1},...,\Delta\w_{n}\right]=\\
	&\hspace{1mm} \frac{1}{|\mathcal{S}|}\sum_{i=1}^{|\mathcal{S}|}\nabla^{2}\ell_{i}\Bigl[{\textstyle \sum_{j=1}^{n}}(D'_{i,*}W_{*}(\w_{*}))_{n:j\text{+}1}D'_{i,j}(W_{j}(\Delta\w_{j}))(D'_{i,*}W_{*}(\w_{*}))_{j\text{-}1:1}\x_{i}\Bigr]+\\[-0mm]
	&\hspace{1mm} \frac{2}{|\mathcal{S}|}\sum_{i=1}^{|\mathcal{S}|}\nabla\ell_{i}^{\top}\hspace{-4mm}\sum_{1\leq j<j'\leq n}\hspace{-4mm}(D'_{i,*}W_{*}(\w_{*}))_{n:j'\text{+}1}D'_{i,j'}(W_{j'}(\Delta\w_{j'}))(D'_{i,*}W_{*}(\w_{*}))_{j'\text{-}1:j\text{+}1}\cdot
	\\ & \hspace{90mm}
	D'_{i,j}(W_{j}(\Delta\w_{j}))(D'_{i,*}W_{*}(\w_{*}))_{j\text{-}1:1}\x_{i}
	\text{\,,}
\end{aligned}
	\label{app:proof:cnn_hess_lb:eq:hessian_formula}
\end{equation*}
where we regard Hessians as quadratic forms (see Subappendix \ref{app:proof:notations}).
Convexity of $\ell(\cdot)$ in its first argument implies that for $i\in\{1,2,...,|\mathcal{S}|\}$, $\nabla^{2}\ell_{i}$ is positive semi-definite, thus:
\[
\begin{aligned} & \nabla^{2}f(\boldsymbol{\theta})\left[\Delta\w_{1},...,\Delta\w_{n}\right]\geq\\
	&\hspace{1.5mm} \frac{2}{|\mathcal{S}|}\sum_{i=1}^{|\mathcal{S}|}\nabla\ell_{i}^{\top}\hspace{-4mm}\sum_{1\leq j<j'\leq n}\hspace{-4mm}(D'_{i,*}W_{*}(\w_{*}))_{n:j'\text{+}1}D'_{i,j'}(W_{j'}(\Delta\w_{j'}))(D'_{i,*}W_{*}(\w_{*}))_{j'\text{-}1:j\text{+}1}\cdot\\
	& \hspace{90mm}D'_{i,j}(W_{j}(\Delta\w_{j}))(D'_{i,*}W_{*}(\w_{*}))_{j\text{-}1:1}\x_{i}\text{\,.}
\end{aligned}
\]
Applying Cauchy-Schwarz and triangle inequalities, we get:
\[
\begin{aligned} & \nabla^{2}f(\boldsymbol{\theta})\left[\Delta\w_{1},...,\Delta\w_{n}\right]\\
	& \geq\hspace{-0.5mm}-\frac{2}{|\mathcal{S}|}\hspace{-0.5mm}\sum_{i=1}^{|\mathcal{S}|}\hspace{-0mm}\bigl\Vert\nabla\ell_{i}\bigr\Vert_{2}\hspace{-0.5mm}\cdot\hspace{-4.7mm}\sum_{{\scriptstyle 1\leq j<j'\leq n}}\hspace{-3.5mm}\bigl\Vert(D'_{i,*}W_{*}(\w_{*}))_{n:j'\text{+}1}D'_{i,j'}(W_{j'}(\Delta\w_{j'}))(D'_{i,*}W_{*}(\w_{*}))_{j'\text{-}1:j\text{+}1}\cdot\\
	& \hspace{90mm}D'_{i,j}(W_{j}(\Delta\w_{j}))(D'_{i,*}W_{*}(\w_{*}))_{j\text{-}1:1}\x_{i}\bigr\Vert_{2}\\
	& \geq\hspace{-0.5mm}-\frac{2}{|\mathcal{S}|}\hspace{-0.5mm}\sum_{i=1}^{|\mathcal{S}|}\hspace{-0mm}\bigl\Vert\nabla\ell_{i}\bigr\Vert_{2}\hspace{-0.5mm}\cdot\hspace{-4.7mm}\sum_{{\scriptstyle 1\leq j<j'\leq n}}\hspace{-3.5mm}\bigl\Vert W_{j}(\Delta\w_{j})\bigr\Vert_{s}\bigl\Vert W_{j'}(\Delta\w_{j'})\bigr\Vert_{s}\hspace{-3mm}\prod_{k\in[n]/\{j,j'\}}\hspace{-3.5mm}\bigl\Vert W_{k}(\w_{k})\bigr\Vert_{s}\prod_{k\in[n]}\hspace{-0.5mm}\bigl\Vert D'_{i,k}\bigr\Vert_{s}\cdot\bigl\Vert\x_{i}\bigr\Vert_{2}\\
	& \geq\hspace{-0.5mm}-\frac{2}{|\mathcal{S}|}\hspace{-0.5mm}\sum_{i=1}^{|\mathcal{S}|}\hspace{-0mm}\bigl\Vert\nabla\ell_{i}\bigr\Vert_{2}\hspace{-0.5mm}\cdot\prod_{j=1}^{n}\Vert W_{j}(\cdot)\Vert_{op}\hspace{-4.7mm}\sum_{{\scriptstyle 1\leq j<j'\leq n}}\hspace{-3.5mm}\bigl\Vert\Delta\w_{j}\bigr\Vert_{2}\bigl\Vert\Delta\w_{j'}\bigr\Vert_{2}\hspace{-3mm}\prod_{k\in[n]/\{j,j'\}}\hspace{-3.5mm}\bigl\Vert\w_{k}\bigr\Vert_{2}\prod_{k=1}^{n}\hspace{-0.5mm}\bigl\Vert D'_{i,k}\bigr\Vert_{s}\cdot\bigl\Vert\x_{i}\bigr\Vert_{2}\\
	& \geq\hspace{-0.5mm}-\frac{2}{|\mathcal{S}|}\hspace{-0.5mm}\sum_{i=1}^{|\mathcal{S}|}\hspace{-0mm}\bigl\Vert\nabla\ell_{i}\bigr\Vert_{2}\hspace{-0.5mm}\prod_{j=1}^{n} \hspace{-1mm} \Vert W_{j}(\cdot)\Vert_{op}   \hspace{-2mm}\max_{\substack{\mathcal{J}\subseteq[n]\\[0.25mm]
			|\mathcal{J}|=n-2
		}
	}\,\prod_{j\in\mathcal{J}}\bigl\Vert\w_{j}\bigr\Vert_{2}\max\{|\alpha|,|\bar{\alpha}|\}^{n-1}\bigl\Vert\x_{i}\bigr\Vert_{2}\hspace{-4.5mm}\sum_{{\scriptstyle 1\leq j<j'\leq n}}\hspace{-3.5mm}\bigl\Vert\Delta\w_{j}\bigr\Vert_{2}\bigl\Vert\Delta\w_{j'}\bigr\Vert_{2}\text{,}
\end{aligned}
\]
where the second transition follows from the definition and sub-multiplicativity of spectral norm, 
the third transition follows from bounding spectral norms with Frobenius norms and the definition of $\Vert W_{j}(\cdot)\Vert_{op}$,
and the last transition follows from maximizing  $\prod_{k\in[n]/\{j,j'\}}\hspace{-0mm}\Vert \w_{k}\Vert_{2}$ over $j,j'$, upper bounding $\Vert D'_{i,j}\Vert_{s}\leq\max\{|\alpha|,|\bar{\alpha}|\}$ for $j\in[n-1]$ and recalling that $D'_{i,n}$ is an identity matrix, meaning $\Vert D'_{i,n}\Vert_{s}=1$.
It holds that:
\[
\begin{aligned} & \hspace{-1.5mm}{\textstyle \sum\nolimits _{1\leq j<j'\leq n}}\Vert\Delta\w_{j'}\Vert_{2}\Vert\Delta\w_{j}\Vert_{2}\\
	& =\tfrac{1}{2}\Bigl({\textstyle \sum\nolimits _{j=1}^{n}}\Vert\Delta\w_{j}\Vert_{2}\Bigr)^{2}-\tfrac{1}{2}{\textstyle \sum\nolimits _{j=1}^{n}}\Vert\Delta\w_{j}\Vert_{2}^{2}\\
	& \leq\tfrac{n}{2}{\textstyle \sum\nolimits _{j=1}^{n}}\Vert\Delta\w_{j}\Vert_{2}^{2}-\tfrac{1}{2}{\textstyle \sum\nolimits _{j=1}^{n}}\Vert\Delta\w_{j}\Vert_{2}^{2}\\
	& =\tfrac{n-1}{2}{\textstyle \sum\nolimits _{j=1}^{n}}\Vert\Delta\w_{j}\Vert_{2}^{2}\text{,}
\end{aligned}
\]
where the last inequality follows from the fact that the one-norm of a vector in $\mathbb{R}^{n}$ is never greater than $\sqrt{n}$ times its euclidean-norm.
This leads us to the following bound:
\[
\begin{aligned} & \nabla^{2}f(\boldsymbol{\theta})\left[\Delta\w_{1},...,\Delta\w_{n}\right]\geq\\
	& -\max\{|\alpha|,|\bar{\alpha}|\}^{n-1}\hspace{-0.5mm}\prod_{j=1}^n\hspace{-1mm}\Vert W_{j}(\cdot)\Vert_{op}\hspace{-1mm}\max_{\substack{\mathcal{J}\subseteq[n]\\[0.25mm]
			|\mathcal{J}|=n-2
		}
	}\,\prod_{j\in\mathcal{J}}\bigl\Vert\w_{j}\bigr\Vert_{2}\frac{n-1}{|\mathcal{S}|}\hspace{-0.5mm}\sum_{i=1}^{|\mathcal{S}|}\hspace{-0mm}\bigl\Vert\nabla\ell_{i}\bigr\Vert_{2}\bigl\Vert\x_{i}\bigr\Vert_{2}\hspace{-0mm}\sum_{j=1}^n\hspace{-0mm}\Vert\Delta\w_{j}\Vert_{2}^{2}\text{\,.}
\end{aligned}
\]
The desired result readily follows:
\[
\lambda_{\min}\big(\nabla^{2}f(\boldsymbol{\theta})\big)\geq-\max\{|\alpha|,|\bar{\alpha}|\}^{n-1}\,\frac{n-1}{|\mathcal{S}|}\hspace{-0.5mm}\sum_{i=1}^{|\mathcal{S}|}\hspace{-0mm}\bigl\Vert\nabla\ell_{i}\bigr\Vert_{2}\bigl\Vert\x_{i}\bigr\Vert_{2}\hspace{-0.5mm}\prod_{j=1}^n\hspace{-0.5mm}\Vert W_{j}(\cdot)\Vert_{op}\hspace{-1mm}\max_{\substack{\mathcal{J}\subseteq[n]\\[0.25mm]
		|\mathcal{J}|=n-2
	}
}\,\prod_{j\in\mathcal{J}}\bigl\Vert\w_{j}\bigr\Vert_{2}
\text{\,.}
\]
\qed

\subsection{Proof of Proposition \ref{prop:cnn_region_hess_lb_gf} \label{app:proof:cnn_region_hess_lb_gf}}
This proof is very similar to that of Proposition~\ref{prop:fnn_region_hess_lb_gf} (see Subappendix~\ref{app:proof:fnn_region_hess_lb_gf}).
Recall that $(\w_{1},\w_{2},...,\w_{n})\in\R^{d'_{1}}\times\R^{d'_{2}}\times\cdots\times\R^{d'_{n}}$ are the weight vectors constituting $\boldsymbol{\theta}\in\mathbb{R}^{d}$, and denote by $(\w_{1,s},\w_{2,s},...,\w_{n,s})\in\R^{d'_{1}}\times\R^{d'_{2}}\times\cdots\times\R^{d'_{n}}$ those that constitute $\boldsymbol{\theta}_{s}$.
For $j,j'\in[n]$:
\[
\left|\Vert\w_{j,s}\Vert_{2}^{2}-\Vert\w_{j',s}\Vert_{2}^{2}\right|\leq\max\bigl\{\Vert\w_{j,s}\Vert_{2}^{2},\Vert\w_{j',s}\Vert_{2}^{2}\bigr\}\leq\max_{j\in[n]}\Vert\w_{j,s}\Vert_{2}^{2}\leq\left\Vert \boldsymbol{\theta}_{s}\right\Vert _{2}^{2}\leq\epsilon^{2}
\text{\,.}
\]
Theorem~2.3 from \cite{du2018algorithmic} implies that throughout a gradient flow trajectory differences between squared Euclidean norms of weight vectors are constant.
Therefore, for $j,j'\in[n]$:
\begin{equation}
	\left|\Vert\w_{j}\Vert_{2}^{2}-\Vert\w_{j'}\Vert_{2}^{2}\right|=\left|\Vert\w_{j,s}\Vert_{2}^{2}-\Vert\w_{j',s}\Vert_{2}^{2}\right|\leq\epsilon^{2}\text{\,.}
	\label{app:proof:cnn_region_hess_lb_gf:eq:dif_norm_bound}
\end{equation}
If the network is shallow (\ie~$n = 2$), then Equation~(\ref{eq:cnn_region_hess_lb_gf}) coincides with Equation~(\ref{eq:cnn_region_hess_lb}), thus the desired result follows trivially from Lemma~\ref{lemma:cnn_region_hess_lb}.  Hereafter we assume that the network is deep (\ie~$n \geq 3$).
It holds that:
\[
\begin{aligned}{\max_{{\scriptstyle \mathcal{J}\subseteq[n],|\mathcal{J}|=n-2}}}\,{\prod_{j\in\mathcal{J}}}\|\w_{j}\|_{2} & \leq\underset{{\scriptstyle j\in[n]}}{\max}\|\w_{j}\|_{2}^{n-2}\\
	& =\Big(\underset{{\scriptstyle j\in[n]}}{\min}\|\w_{j}\|_{2}^{2}+\underset{{\scriptstyle j\in[n]}}{\max}\|\w_{j}\|_{2}^{2}-\underset{{\scriptstyle j\in[n]}}{\min}\|\w_{j}\|_{2}^{2}\Big)^{\frac{n-2}{2}}\\
	& \leq\Big(\underset{{\scriptstyle j\in[n]}}{\min}\|\w_{j}\|_{2}^{2}+\epsilon^{2}\Big)^{\frac{n-2}{2}}\\
	& =\bigg(\sqrt{{\textstyle \min_{j\in[n]}}\|\w_{j}\|_{2}^{2}+\epsilon^{2}}\:\bigg)^{n-2}\\
	& \leq\Big(\underset{{\scriptstyle j\in[n]}}{\min}\|\w_{j}\|_{2}+\epsilon\Big)^{n-2}\text{ ,}
\end{aligned}
\]
where the third transition follows from Equation (\ref{app:proof:cnn_region_hess_lb_gf:eq:dif_norm_bound})
and the last transition follows from subadditivity of square root.
Combining the latter inequality together with the result of  Lemma~\ref{lemma:cnn_region_hess_lb} (Equation~(\ref{eq:cnn_region_hess_lb})), we obtain the desired result:
\[
\lambda_{\min}\big(\nabla^{2}f(\boldsymbol{\theta})\big)\geq-\max\{|\alpha|,|\bar{\alpha}|\}^{n-1}\frac{n-1}{|\mathcal{S}|}\hspace{-0.5mm}\sum_{i=1}^{|\mathcal{S}|}\hspace{-0mm}\bigl\Vert\nabla\ell_{i}\bigr\Vert_{2}\bigl\Vert\x_{i}\bigr\Vert_{2}\hspace{-1mm}\prod_{j\in[n]}\hspace{-1.5mm}\Vert W_{j}(\cdot)\Vert_{op}\Big(\underset{{\scriptstyle j\in[n]}}{\min}\|\w_{j}\|_{2}+\epsilon\Big)^{n-2}\text{.}
\]
\qed

\subsection{Proof of Lemma \ref{lemma:unbalance_to_balance} \label{app:proof:unbalance_to_balance}}

In this proof we overload the definition of unbalancedness magnitude (Definition \ref{def:unbalance}) to account for arbitrary matrix dimensions, namely, for any matrices $A_{1},...,A_{n}$ such that the product $A_{n}\cdots A_{1}$ is defined, we refer to $\max\nolimits _{j\in[n-1]}\|A_{j+1}^{\top}A_{j+1}-A_{j}A_{j}^{\top}\|_{n}$ as their unbalancedness magnitude.
Recall that $\boldsymbol{\theta}\in\mathbb{R}^{d}$ is the arrangement of $W_{1},W_{2},...,W_{n-1}\in\mathbb{R}^{d_{0},d_{0}}$ and $W_{n}\in\mathbb{R}^{d_{n},d_{0}}$ as a vector.
Define the matrices $B_{1},B_{2},...,B_{n}\in\mathbb{R}^{d_{0},d_{0}}$ as follows:
$B_{j}:=W_{j}$ for $j\in[n-1]$ and  $B_{n}:=\sqrt{W_{n}^{\top}W_{n}}$.
Notice that: 
\[
B_{n}^{\top}B_{n}=\sqrt{W_{n}^{\top}W_{n}}\sqrt{W_{n}^{\top}W_{n}}=W_{n}^{\top}W_{n}
\text{ ,}
\]
thus the unbalancedness magnitude of $B_{1},...,B_{n}$ is equal to that of $W_{1},...,W_{n}$, \ie~to $\hat{\epsilon}$.
Define the matrices $C_{1},C_{2},...,C_{n}\in\mathbb{R}^{d_{0},d_{0}}$ by transposing and reversing the order of $B_{1},...,B_{n}$, formally: $C_{j} := B_{n-j+1}^{\top}$ for $j\in[n]$.
Notice that transposition and order reversal do not change the unbalancedness magnitude.
Namely, since for $j \in [n-1]$ we have that $\Vert C_{j+1}^{\top}C_{j+1}-C_{j}C_{j}^{\top}\Vert_{n}
=
\Vert B_{n-j}B_{n-j}^{\top}-B_{n-j+1}^{\top}B_{n-j+1}\Vert_{n}$, the unbalancedness magnitude of $C_{1},...,C_{n}$ is equal to that of $B_{1},...,B_{n}$, \ie~to $\hat{\epsilon}$.
Applying Lemma~1 from \cite{razin2020implicit} to $C_{1},...,C_{n}$, we conclude that there exists $\hat{C}_{1},...,\hat{C}_{n} \in \mathbb{R}^{d_{0},d_{0}}$ which are balanced (\ie~have unbalancedness magnitude zero), such that $\Vert C_{j}-\hat{C}_{j}\Vert_{F}\leq(j-1)\sqrt{\hat{\epsilon}}$ for $j \in [n]$.
Pay special notice to the fact that $\hat{C}_{1}=C_{1}$ (as the Frobenius norm of the discrepancy is zero).
Define the matrices $\hat{B}_{1},\hat{B}_{2},...,\hat{B}_{n}\in\mathbb{R}^{d_{0},d_{0}}$ by transposing and reversing the order of $\hat{C}_{1},...,\hat{C}_{n}$, formally: $\hat{B}_{j} := \hat{C}_{n-j+1}^{\top}$ for $j\in[n]$.
Relying again on the fact that transposition and order reversal do not change  unbalancedness magnitude, we have that $\hat{B}_{1},...,\hat{B}_{n}$, similarly to $\hat{C}_{1},...,\hat{C}_{n}$, are balanced.
Define the matrices $\hat{W}_{1},\hat{W}_{2},...,\hat{W}_{n-1}\in\mathbb{R}^{d_{0},d_{0}}$ and  $\hat{W}_{n}\in\mathbb{R}^{d_{n},d_{0}}$ as follows:
$\hat{W}_{j}:=\hat{B}_{j}$ for $j\in[n-1]$ and $\hat{W}_{n}:=W_{n}$.
Notice that the dimensions of $\hat{W}_{1},\hat{W}_{2},...,\hat{W}_{n}$ correspond to those of ${W}_{1},{W}_{2},...,{W}_{n}$, and in particular that these are valid weight matrices.
We denote their corresponding weight setting by $\hat{\boldsymbol{\theta}}\in\mathbb{R}^{d}$.
Notice that:
\[
\hat{W}_{n}^{\top}\hat{W}_{n}=W_{n}^{\top}W_{n}=\sqrt{W_{n}^{\top}W_{n}}\sqrt{W_{n}^{\top}W_{n}}=B_{n}B_{n}=C_{1}C_{1}=\hat{C}_{1}\hat{C}_{1}=\hat{B}_{n}\hat{B}_{n}
\text{ ,}
\]
which means that $\hat{W}_{1},\hat{W}_{2},...,\hat{W}_{n}$ are balanced, as they have the same unbalancedness magnitude of $\hat{B}_{1},...,\hat{B}_{n}$, \ie~zero.
Furthermore we have that:
\[
\begin{aligned}\Vert\hat{\boldsymbol{\theta}}-\boldsymbol{\theta}\Vert_{2} & =\Vert(\hat{W}_{n},\hat{W}_{n-1}...,\hat{W}_{1})-(W_{n},W_{n-1},...,W_{1})\Vert_{F}\\
	& =\Vert(W_{n},\hat{B}_{n-1}...,\hat{B}_{1})-(W_{n},B_{n-1},...,B_{1})\Vert_{{F}}\\
	& =\sqrt{\Vert W_{n}-W_{n}\Vert_{{F}}^{2}+\Vert\hat{B}_{n-1}-B_{n-1}\Vert_{{F}}^{2}+...+\Vert\hat{B}_{1}-B_{1}\Vert_{{F}}^{2}}\\
	& =\sqrt{0+\Vert\hat{C}_{2}^{\top}-C_{2}^{\top}\Vert_{{F}}^{2}+...+\Vert\hat{C}_{n}^{\top}-C_{n}^{\top}\Vert_{{F}}^{2}}\\
	& \leq\sqrt{(n-1)\cdot(n-1)^{2}\hat{\epsilon}}\\
	& \leq n^{1.5}\sqrt{\hat{\epsilon}}\text{ ,}
\end{aligned}
\]
where the second transition follows from the definitions of $\hat{W}_{1},...,\hat{W}_{n-1}$ and $B_{1},...,B_{n-1}$, the third from the definition of Frobenius norm, the forth from the definitions of $\hat{B}_{1},...,\hat{B}_{n-1}$ and $C_{1},...,C_{n-1}$ and the fifth transition follows from the conclusion of Lemma~1 from \cite{razin2020implicit} applied to $C_{1},...,C_{n}$.
\qed

\subsection{Proof of Theorem \ref{theorem:gd_translation_unbalance} \label{app:proof:gd_translation_unbalance}}

Without loss of generality, we may assume $\tilde{\epsilon} \,{\leq}\, 1$ (a proof that is valid for $\tilde{\epsilon} \,{=}\, 1$ automatically accounts for $\tilde{\epsilon} \,{>}\, 1$ as well).
Given that the unbalancedness magnitude (Definition~\ref{def:unbalance}) of~$\thetabf_0$ is no greater than~$\hat{\epsilon}$ (defined in Equation~\eqref{eq:gd_translation_unbalance_mag}), by Lemma~\ref{lemma:unbalance_to_balance}, there exists a weight setting $\hat{\boldsymbol{\theta}}_{0}\in\mathbb{R}^{d}$ which is balanced (has unbalancedness magnitude zero) and meets $\Vert\boldsymbol{\theta}_{0}-\hat{\boldsymbol{\theta}}_{0}\Vert_{2}\leq n^{1.5}\sqrt{\hat{\epsilon}}$.
Denote by $(\hat{W}_{1,0},\hat{W}_{2,0},...,\hat{W}_{n,0})\in\mathbb{R}^{d_{1},d_{0}}\times\mathbb{R}^{d_{2},d_{1}}\times\cdots\times\mathbb{R}^{d_{n},d_{n-1}}$ the weight matrices corresponding to~$\hat{\boldsymbol{\theta}}_{0}$, and by $\hat{W}_{n:1,0}\in\R^{d_{n},d_{0}}$ its end-to-end matrix (\ie~$\hat{W}_{n:1,0} := \hat{W}_{n,0}\hat{W}_{n-1,0}\cdots\hat{W}_{1,0}$).
Define~$\hat{\nu}$ as \smash{$\Tr(\Lambda_{yx}^{\top}\hat{W}_{n:1,0})\big/\big(\|\Lambda_{yx}\|_{F}\|\hat{W}_{n:1,0}\|_{F}\big)$} if $\Vert\hat{W}_{n:1}\Vert_{F}\neq0$, and as $0$ otherwise.
The following lemma establishes several bounds relating $\hat{W}_{n:1,0}$ and~$\hat{\nu}$ to $W_{n:1,0}$ and~$\nu$ respectively.
\begin{lemma}
\label{lemma:unbalance_to_balance_bounds}
The following hold:
\bea
&& \Vert{W}_{n:1,0}-\hat{W}_{n:1,0}\Vert_{\hspace{-0.4mm}F}\leq\tfrac{1}{3}\Vert W_{n:1,0}\Vert_{\hspace{-0.4mm}F}
\text{\,;}
\label{eq:unbalance_to_balance_e2e_dist_bound}
\\[0.25mm]
&& \hspace{0.5mm}\hat{\nu}\geq\min\big\{\hspace{-0.9mm}-\hspace{-0.5mm}\tfrac{1}{2}\hspace{0.5mm},\hspace{0.5mm}\text{sign}(\nu)\tfrac{|\nu|+1}{2}\big\}
\text{\,;}
\label{eq:unbalance_to_balance_nu_bound}
\\[0.75mm]
&& \|\hat{W}_{n:1,0}\Vert_{F}^{-1}\leq\tfrac{3}{2}\|W_{n:1,0}\Vert_{F}^{-1}
\text{\,; and}
\label{eq:unbalance_to_balance_e2e_inv_bound}
\\[0.75mm]
&& \max\{1,\tfrac{1-\hat{\nu}}{1+\hat{\nu}}\}\hspace{-0.5mm}\leq\hspace{-0.5mm}\max\{3,\tfrac{3-\nu}{1+\nu}\}\text{\,.}
\label{eq:unbalance_to_balance_nu_ratio_bound}
\eea
\end{lemma}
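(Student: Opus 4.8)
The plan is to obtain all four bounds from the single estimate~\eqref{eq:unbalance_to_balance_e2e_dist_bound} — that $\hat W_{n:1,0}$ lies within relative Frobenius distance $\tfrac13$ of $W_{n:1,0}$ — and then read off \eqref{eq:unbalance_to_balance_nu_bound}, \eqref{eq:unbalance_to_balance_e2e_inv_bound}, \eqref{eq:unbalance_to_balance_nu_ratio_bound} as elementary perturbation consequences. To prove~\eqref{eq:unbalance_to_balance_e2e_dist_bound} I would set $E_j := W_{j,0} - \hat W_{j,0}$, so that $\|E_j\|_F \le \|\thetabf_0 - \hat\thetabf_0\|_2 \le n^{1.5}\sqrt{\hat\epsilon} =: \rho$ by Lemma~\ref{lemma:unbalance_to_balance}, expand $W_{n:1,0} = \prod_{j=n}^1(\hat W_{j,0}+E_j)$ multilinearly, and peel off the all-$\hat W$ term $\hat W_{n:1,0}$. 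Every remaining summand contains at least one factor $E_j$; using submultiplicativity of the Frobenius norm together with the balancedness of $\hat\thetabf_0$ — which, since $d_n=1$, forces $\|\hat W_{j,0}\|_F = \|\hat W_{n:1,0}\|_F^{1/n}$ for every $j$ (cf.~Equation~\eqref{eq:end_to_end_n_power_of_each_matrix}) — the summands with exactly $k$ error factors contribute at most $\binom nk \rho^k \|\hat W_{n:1,0}\|_F^{1-k/n}$. Bounding $\|\hat W_{n:1,0}\|_F \le w + x$ with $w := \|W_{n:1,0}\|_F > 0$ and $x := \|W_{n:1,0}-\hat W_{n:1,0}\|_F$, then dividing by $w$, yields a self-referential inequality $t \le \sum_{k=1}^n \binom nk (\rho\, w^{-1/n})^k (1+t)^{1-k/n}$ for $t := x/w$. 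A short bootstrap (treating $t \le 1$ and $t > 1$ separately and deriving a contradiction from $t > \tfrac13$) closes this, provided $\rho\, w^{-1/n} \le n^{-6}$; and this quantitative smallness follows by substituting the definition of $\hat\epsilon$ from Equation~\eqref{eq:gd_translation_unbalance_mag} into $\rho = n^{1.5}\sqrt{\hat\epsilon}$ and using $w \le 0.1 < 1$ (so that the $\ln$ factor is $\ge 1$ and $w^8 \le w^2 \le w$).

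Given~\eqref{eq:unbalance_to_balance_e2e_dist_bound}, the bound~\eqref{eq:unbalance_to_balance_e2e_inv_bound} is immediate from the reverse triangle inequality: $\|\hat W_{n:1,0}\|_F \ge w - \tfrac13 w = \tfrac23 w > 0$, hence $\|\hat W_{n:1,0}\|_F^{-1} \le \tfrac32 w^{-1}$; this also shows $\hat\nu$ is given by the quotient formula rather than the ``else $0$'' branch. For~\eqref{eq:unbalance_to_balance_nu_bound} I would identify $W_{n:1,0}$, $\hat W_{n:1,0}$, $\Lambda_{yx}$ with vectors in $\R^{d_0}$ and use $\|\Lambda_{yx}\|_F = 1$ (the label normalization of Section~\ref{sec:lnn}), so that $\nu$ and $\hat\nu$ are the cosines of the angles these end-to-end vectors make with $\Lambda_{yx}$. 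Writing $\hat\nu = (w\nu + \langle \hat W_{n:1,0} - W_{n:1,0}, \Lambda_{yx}\rangle)/\|\hat W_{n:1,0}\|_F$, bounding the inner product by $\tfrac13 w$ in absolute value and $\|\hat W_{n:1,0}\|_F$ by $\tfrac43 w$, one gets the uniform estimate $\hat\nu \ge (3\nu-1)/4$ (dividing a possibly-negative numerator by the \emph{largest} admissible denominator in the worst case). Comparing $(3\nu-1)/4$ against $\min\{-\tfrac12, \sign(\nu)\tfrac{|\nu|+1}{2}\}$ in the cases $\nu \ge 0$ and $\nu < 0$ — where the min equals $-\tfrac12$ and $\tfrac{\nu-1}{2}$ respectively — and using $\nu \ge -1$ settles~\eqref{eq:unbalance_to_balance_nu_bound}.

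Finally,~\eqref{eq:unbalance_to_balance_nu_ratio_bound} follows from the same estimate $\hat\nu \ge (3\nu-1)/4$, which, since $\nu \ne -1$, also gives $\hat\nu > -1$. As $t \mapsto \tfrac{1-t}{1+t}$ is strictly decreasing on $(-1,\infty)$, this yields $\tfrac{1-\hat\nu}{1+\hat\nu} \le \tfrac{5-3\nu}{3(1+\nu)} \le \tfrac{9-3\nu}{3(1+\nu)} = \tfrac{3-\nu}{1+\nu}$, hence $\max\{1, \tfrac{1-\hat\nu}{1+\hat\nu}\} \le \max\{1, \tfrac{3-\nu}{1+\nu}\} \le \max\{3, \tfrac{3-\nu}{1+\nu}\}$.

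The technical heart, and the only place requiring care, is~\eqref{eq:unbalance_to_balance_e2e_dist_bound}: one must tame the self-reference in the multilinear bound — the norms $\|\hat W_{j,0}\|_F$ are themselves controlled only via $\|\hat W_{n:1,0}\|_F$, hence via the very quantity $x$ being estimated — and verify that the baroque choice of $\hat\epsilon$ in Equation~\eqref{eq:gd_translation_unbalance_mag} is indeed small enough for the bootstrap constants to close. Once that estimate is secured, the remaining three bounds are routine.
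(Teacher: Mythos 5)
Your handling of \eqref{eq:unbalance_to_balance_e2e_dist_bound} and \eqref{eq:unbalance_to_balance_e2e_inv_bound} is sound and essentially matches the paper's (the paper closes the self-reference differently: it bounds $\|\hat{W}_{n:1,0}\|_F^{(n-j)/n}\le\max\{1,\|\hat{W}_{n:1,0}\|_F\}$ and then proves $\|\hat{W}_{n:1,0}\|_F\le 1$ by a one-line contradiction, rather than bootstrapping on $t=x/w$; both routes work given the smallness of $\hat{\epsilon}$). The gap is in \eqref{eq:unbalance_to_balance_nu_bound}. First, a local error: writing $\hat{\nu}=N/D$ with $N=w\nu+\delta$, $|\delta|\le w/3$, $D\in[\tfrac{2}{3}w,\tfrac{4}{3}w]$, the worst case for a lower bound on $N/D$ when the numerator bound $w(\nu-\tfrac{1}{3})$ is negative is the \emph{smallest} denominator, not the largest; your argument therefore yields only $\hat{\nu}\ge\tfrac{3\nu-1}{2}$ for $\nu<\tfrac{1}{3}$, and $\tfrac{3\nu-1}{2}<\tfrac{\nu-1}{2}=\sign(\nu)\tfrac{|\nu|+1}{2}$ exactly when $\nu<0$, which is the regime you need. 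Second, and more fundamentally, no bound of the required form can be extracted from \eqref{eq:unbalance_to_balance_e2e_dist_bound} alone: a relative Frobenius perturbation of size $\tfrac{1}{3}$ can rotate the end-to-end vector by up to $\arcsin(\tfrac{1}{3})\approx 19.5^{\circ}$, so for $\nu=\cos\big(\pi-\arcsin(\tfrac{1}{3})\big)=-\sqrt{8}/3\approx-0.943$ one can have $\hat{\nu}=-1$, violating $\hat{\nu}\ge\tfrac{\nu-1}{2}\approx-0.971$ and rendering $\tfrac{1-\hat{\nu}}{1+\hat{\nu}}$ undefined, so \eqref{eq:unbalance_to_balance_nu_ratio_bound} collapses as well.

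The missing ingredient is a perturbation bound that \emph{shrinks as $\nu\to-1$}, and this is precisely what the factor $\big(\max\big\{3,\tfrac{3-\nu}{1+\nu}\big\}\big)^{9n-5}$ in the denominator of $\hat{\epsilon}$ (Equation~\eqref{eq:gd_translation_unbalance_mag}) supplies. The paper's proof returns to that definition to obtain $\sqrt{n^{5}\hat{\epsilon}}\le(1+\nu)/16$, and then estimates, for $\nu\in(-1,0)$, $|\hat{\nu}|\le|\nu|+4\tfrac{\sqrt{n^{5}\hat{\epsilon}}}{1-\sqrt{n^{5}\hat{\epsilon}}}\le|\nu|+\tfrac{1-|\nu|}{2}=\tfrac{|\nu|+1}{2}$, which is exactly \eqref{eq:unbalance_to_balance_nu_bound} in that case (the case $\nu\ge0$ is handled by the crude argument, where $-\tfrac{1}{2}$ suffices). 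You must therefore carry the $(1+\nu)$-dependence of $\hat{\epsilon}$ through to the angle estimate rather than discarding it after proving \eqref{eq:unbalance_to_balance_e2e_dist_bound}. Once \eqref{eq:unbalance_to_balance_nu_bound} is secured, your monotonicity argument for \eqref{eq:unbalance_to_balance_nu_ratio_bound} goes through with the case split $\hat{\nu}\ge-\tfrac{1}{2}$ versus $\hat{\nu}\ge\tfrac{\nu-1}{2}$ in place of the false uniform estimate $\hat{\nu}\ge\tfrac{3\nu-1}{4}$.
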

Proof for Lemma~\ref{lemma:unbalance_to_balance_bounds} is provided in Subsubappendix~\ref{app:proof:gd_translation_unbalance:unbalance_to_balance_bounds}.

\medskip

Given $\eta > 0$ adhering to Equation~\eqref{eq:gd_translation_unbalance_eta}, define:
\begin{equation}
	k:=\left\lfloor \hspace{-0.5mm}\tfrac{2n\big(\max\big\{1, \tfrac{3}{2}\cdot \tfrac{1-\hat{\nu}}{1+\hat{\nu}}\big\}\big)^{n}}{\|\hat{W}_{n:1,0}\|_{F}\eta}\ln\hspace{-0.5mm}\bigg(\hspace{-0.5mm}\tfrac{15n\max\big\{1,\tfrac{1-\hat{\nu}}{1+\hat{\nu}}\big\}}{\|\hat{W}_{n:1,0}\|_{F}\tilde{\epsilon}}\hspace{-0.5mm}\bigg)\,{+}\,1\hspace{-0.25mm}\right\rfloor \text{\,.}
	\label{eq:gd_translation_unbalance_k_exact}
\end{equation}
Taken together, Equations \eqref{eq:unbalance_to_balance_e2e_inv_bound} and~\eqref{eq:unbalance_to_balance_nu_ratio_bound} imply that~$k$ adheres to the upper bound in Equation~\eqref{eq:gd_translation_unbalance_k}.
It thus suffices to show that with step size~$\eta$, iterate~$k$ of gradient descent is $\tilde{\epsilon}$-optimal, \ie~$f ( \thetabf_k ) - \min_{\q \in \R^d} f ( \q ) \leq \tilde{\epsilon}$.

Equations \eqref{eq:unbalance_to_balance_e2e_dist_bound} and~\eqref{eq:unbalance_to_balance_nu_bound} respectively imply that $\Vert\hat{W}_{n:1,0}\Vert_{F} \leq 0.2$ and $\hat{\nu} \neq -1$.
Therefore, as an initial point for gradient flow, the (balanced) weight setting~$\hat{\thetabf}_0$ satisfies the conditions of Proposition~\ref{prop:gf_analysis}.
Define:
\be
\bar{\epsilon}:=\tilde{\epsilon}/2~~,~~\epsilon:=\tfrac{\|\hat{W}_{n:1,0}\|_{F}\tilde{\epsilon}}{15n^{3}\big(\max\big\{1,\tfrac{3}{2}\cdot\tfrac{1-\hat{\nu}}{1+\hat{\nu}}\big\}\big)^{\hspace{-0.5mm}n}k\eta}
\text{\,,}
\label{eq:gd_translation_unbalance_eps_epsbar}
\ee
and invoke Proposition~\ref{prop:gf_analysis} with initial point $\boldsymbol{\theta}_s = \hat{\boldsymbol{\theta}}_{0}$, time $t = k \eta$ and $\bar{\epsilon}$, $\epsilon$ as above (note that $\epsilon \in ( 0 , 1 / ( 2 n ) ]$).
From the proposition we obtain that the gradient flow trajectory emanating from~$\hat{\boldsymbol{\theta}}_{0}$ is defined over infinite time, and with $\hat{\thetabf} : [ 0 , \infty ) \to \R^d$ representing this trajectory, the following time~$\bar{t}$ satisfies $f ( \hat{\thetabf} ( \bar{t} \, ) ) - \min_{\q \in \R^d} f ( \q ) \leq \bar{\epsilon}$:
\be
\bar{t}=\tfrac{2n\big(\max\big\{1, \tfrac{3}{2}\cdot \tfrac{1-\hat{\nu}}{1+\hat{\nu}}\big\}\big)^{n}}{\|\hat{W}_{n:1,0}\|_{F}}\ln\bigg(\tfrac{15n\max\big\{1,\tfrac{1-\hat{\nu}}{1+\hat{\nu}}\big\}}{\|\hat{W}_{n:1,0}\|_{F}\min\{1,2\bar{\epsilon}\}}\bigg)
\text{\,.}
\label{eq:gd_translation_unbalance_time}
\ee
Moreover, we obtain that under the notations of Theorem~\ref{theorem:gf_gd}, in correspondence with~$\D_{k \eta , \epsilon}$ ($\epsilon$-neighborhood of gradient flow trajectory up to time~$k\eta$) are the smoothness and Lipschitz constants $\beta_{k\eta , \epsilon} = 16 n$ and~$\gamma_{k\eta , \epsilon} = 6 \sqrt{n}$ respectively, and the following (upper) bound on the integral of (minus) the minimal eigenvalue of the Hessian:
\be
\int_{0}^{k\eta}m(t')dt'\leq\tfrac{15n^{3}\big(\max\big\{1,\tfrac{3}{2}\cdot\tfrac{1-\hat{\nu}}{1+\hat{\nu}}\big\}\big)^{n}k\eta\epsilon}{\|\hat{W}_{n:1,0}\|_{F}}+\ln\bigg(\tfrac{n^{2}\big(e^{2}\max\big\{1,\tfrac{1-\hat{\nu}}{1+\hat{\nu}}\big\}\big)^{5(n-1)/2}}{\|\hat{W}_{n:1,0}\|_{F}^{2}}\bigg)
\text{\,,}
\label{eq:gd_translation_unbalance_m}
\ee
where the function $m : [ 0 , k \eta ] \to \R$ is non-negative.

Notice that $k = \lfloor \bar{t} / \eta + 1 \rfloor$ and therefore $k \eta \geq \bar{t}$.
Combining this with the fact that the gradient flow trajectory~$\hat{\thetabf} ( \cdot )$ is $\bar{\epsilon}$-optimal at time~$\bar{t}$, and that in general gradient flow monotonically non-increases the objective it optimizes, we infer $\bar{\epsilon}$-optimality of the gradient flow trajectory at time~$k \eta$, \ie~$\hat{\boldsymbol{\theta}} ( k \eta ) - \min_{\boldsymbol{q} \in \R^d} f ( \boldsymbol{q} ) \leq \bar{\epsilon}$.
We will invoke Theorem~\ref{theorem:gf_gd} for showing that, in addition to being $\bar{\epsilon}$-optimal, the gradient flow trajectory at time~$k \eta$ is also $\epsilon$-approximated by iterate~$k$ of gradient descent, \ie~$\| \thetabf_k - \hat{\thetabf} ( k \eta ) \|_2 \leq \epsilon$.
This, along with $f ( \cdot )$ being $6 \sqrt{n}$-Lipschitz across $\D_{k \eta , \epsilon}$ ($\epsilon$-neighborhood of gradient flow trajectory up to time~$k\eta$), yields the desired result~---~$\tilde{\epsilon}$-optimality for iterate~$k$ of gradient descent:
\[
\begin{aligned} & f\big(\,\boldsymbol{\theta}_{k}\big)-\text{min}_{\boldsymbol{q}\in\mathbb{R}^{d}}f(\boldsymbol{q})\\
	& =\Big(\hspace{0.3mm}f\big(\,\boldsymbol{\theta}_{k}\big)-f\big(\hspace{0.25mm}\hat{\boldsymbol{\theta}}(k\eta)\hspace{0.25mm}\big)\hspace{0.3mm}\Big)+\Big(f\big(\,\hat{\boldsymbol{\theta}}(k\eta)\big)-\text{min}_{\boldsymbol{q}\in\mathbb{R}^{d}}f(\boldsymbol{q})\Big)\\
	& \leq\Big(\hspace{0.25mm}6\sqrt{n}\hspace{0.25mm}\big\Vert\boldsymbol{\theta}_{k}\hspace{-0.25mm}-\hat{\boldsymbol{\theta}}(k\eta)\big\Vert_{2}\Big)+\Big(f\big(\,\hat{\boldsymbol{\theta}}(k\eta)\big)-\text{min}_{\boldsymbol{q}\in\mathbb{R}^{d}}f(\boldsymbol{q})\Big)\\[0.6mm]
	& \leq6\sqrt{n}\cdot\epsilon+\bar{\epsilon}\\[1.9mm]
	& \leq\hspace{0.5mm}\tilde{\epsilon}\text{\,,}
\end{aligned}
\]
where the last transition follows from the definitions of $\epsilon$ and~$\bar{\epsilon}$ (Equation~\eqref{eq:gd_translation_unbalance_eps_epsbar}).

\medskip

We conclude the proof by showing that indeed $\| \thetabf_k - \hat{\thetabf} ( k \eta ) \|_2 \leq \epsilon$.
Equation~\eqref{eq:gd_translation_unbalance_m}, the definition of~$\epsilon$ (Equation~\eqref{eq:gd_translation_unbalance_eps_epsbar}) and the condition $\tilde{\epsilon} \leq 1$ together imply:
\bea
\int_0^{k \eta} m ( t' ) dt' &\leq& \tfrac{15 n^3 \big( \max \big\{ 1 , \tfrac{3}{2} \cdot \tfrac{1 - \hat{\nu}}{1 + \hat{\nu}} \big\} \big)^n k \eta \epsilon}{\| \hat{W}_{n:1,0} \|_F} + \ln \bigg( \tfrac{n^2 \big( e^2 \max \big\{ 1 , \tfrac{1 - \hat{\nu}}{1 + \hat{\nu}} \big\} \big)^{5 ( n - 1 ) / 2}}{\| \hat{W}_{n : 1 , 0} \|_F^2} \bigg) 
\label{eq:gd_translation_unbalance_m_ub} \\
&=& \tilde{\epsilon} + \ln \bigg( \tfrac{n^2 \big( e^2 \max \big\{ 1 , \tfrac{1 - \hat{\nu}}{1 + \hat{\nu}} \big\} \big)^{5 ( n - 1 ) / 2}}{\| \hat{W}_{n : 1 , 0} \|_F^2} \bigg) 
\nonumber \\
&\leq& 1 + \ln \bigg( \tfrac{n^2 \big( e^2 \max \big\{ 1 , \tfrac{1 - \hat{\nu}}{1 + \hat{\nu}} \big\} \big)^{5 ( n - 1 ) / 2}}{\| \hat{W}_{n : 1 , 0} \|_F^2} \bigg) 
\nonumber \\
&<& \ln \bigg( \tfrac{3 n^2 \big( e^2 \max \big\{ 1 , \tfrac{1 - \hat{\nu}}{1 + \hat{\nu}} \big\} \big)^{5 ( n - 1 ) / 2}}{\| \hat{W}_{n : 1 , 0} \|_F^2} \bigg)
\text{\,.}
\nonumber
\eea
Recalling the expressions for $k$ and~$\bar{t}$ (Equations \eqref{eq:gd_translation_unbalance_k_exact} and~\eqref{eq:gd_translation_unbalance_time} respectively), and the definition of~$\bar{\epsilon}$ (Equation~\eqref{eq:gd_translation_unbalance_eps_epsbar}), we have:
\bea
&& 
k \eta = \lfloor \bar{t} / \eta + 1 \rfloor \eta \leq \bar{t} + \eta = \tfrac{2 n \big(  \max \big\{ 1 , \tfrac{3}{2}\cdot \tfrac{1 - \hat{\nu}}{1 + \hat{\nu}} \big\} \big)^n}{\| \hat{W}_{n : 1 , 0} \|_F} \ln \bigg( \tfrac{15 n \max \big\{ 1 , \tfrac{1 - \hat{\nu}}{1 + \hat{\nu}} \big\}}{\| \hat{W}_{n:1,0} \|_F \tilde{\epsilon}} \bigg) + \eta 
\label{eq:gd_translation_unbalance_k_eta_ub} \\
&& \quad\,\,
< \tfrac{3 n \big(  \max \big\{ 1 , \tfrac{3}{2}\cdot \tfrac{1 - \hat{\nu}}{1 + \hat{\nu}} \big\} \big)^n}{\| \hat{W}_{n : 1 , 0} \|_F} \ln \bigg( \tfrac{15 n \max \big\{ 1 , \tfrac{1 - \hat{\nu}}{1 + \hat{\nu}} \big\}}{\| \hat{W}_{n:1,0} \|_F \tilde{\epsilon}} \bigg)
\text{\,,}
\nonumber
\eea
where the last transition makes use of the upper bound on~$\eta$ given in Equation~\eqref{eq:gd_translation_unbalance_eta}.
It holds that:
\begin{equation*}
\begin{aligned} & 4n^{3}\epsilon^{-2}e^{2\int_{0}^{k\eta}m(t')dt'}\\
	& ~~<~4n^{3}\tfrac{225n^{6}\big(\max\big\{1,\tfrac{3}{2}\cdot\tfrac{1-\hat{\nu}}{1+\hat{\nu}}\big\}\big)^{2n}(k\eta)^{2}}{\|\hat{W}_{n:1}(0)\|_{F}^{2}\tilde{\epsilon}^{2}}\cdot\tfrac{9n^{4}\big(e^{2}\max\big\{1,\tfrac{1-\hat{\nu}}{1+\hat{\nu}}\big\}\big)^{5(n-1)}}{\|\hat{W}_{n:1}(0)\|_{F}^{4}}\\
	& ~~<~\tfrac{8100n^{13}e^{11n-10}\big(\max\big\{1,\tfrac{1-\hat{\nu}}{1+\hat{\nu}}\big\}\big)^{7n-5}}{\|\hat{W}_{n:1}(0)\|_{F}^{6}\tilde{\epsilon}^{2}}(k\eta)^{2}\\
	& ~~<~\tfrac{8100n^{13}e^{11n-10}\big(\max\big\{1,\tfrac{1-\hat{\nu}}{1+\hat{\nu}}\big\}\big)^{7n-5}}{\|\hat{W}_{n:1}(0)\|_{F}^{6}\tilde{\epsilon}^{2}}\cdot\tfrac{9n^{2}\big(\max\big\{1, \tfrac{3}{2}\cdot \tfrac{1-\hat{\nu}}{1+\hat{\nu}}\big\}\big)^{2n}}{\|\hat{W}_{n:1}(0)\|_{F}^{2}}\Bigg(\ln\bigg(\tfrac{15n\max\big\{1,\tfrac{1-\hat{\nu}}{1+\hat{\nu}}\big\}}{\|\hat{W}_{n:1}(0)\|_{F}\tilde{\epsilon}}\bigg)\Bigg)^{2}\\
	& ~~<~\tfrac{n^{15}e^{12n+2}\big(\max\big\{1,\tfrac{1-\hat{\nu}}{1+\hat{\nu}}\big\}\big)^{9n-5}}{\|\hat{W}_{n:1}(0)\|_{F}^{8}\tilde{\epsilon}^{2}}\Bigg(\ln\bigg(\tfrac{15n\max\big\{1,\tfrac{1-\hat{\nu}}{1+\hat{\nu}}\big\}}{\|\hat{W}_{n:1}(0)\|_{F}\tilde{\epsilon}}\bigg)\Bigg)^{2}\\
	& ~~\leq~\tfrac{n^{15}e^{12n+2}\big(\max\big\{3,\tfrac{3-\nu}{1+\nu}\big\}\big)^{9n-5}}{(\frac{2}{3})^{8}\|W_{n:1}(0)\|_{F}^{8}\tilde{\epsilon}^{2}}\Bigg(\ln\bigg(\tfrac{15n\max\big\{3,\tfrac{3-\nu}{1+\nu}\big\}}{\frac{2}{3}\|W_{n:1}(0)\|_{F}\tilde{\epsilon}}\bigg)\Bigg)^{2}\\[2mm]
	& ~~\leq~\tfrac{n^{15}e^{12n+6}\big(\max\big\{3,\tfrac{3-\nu}{1+\nu}\big\}\big)^{9n-5}}{\|W_{n:1}(0)\|_{F}^{8}\tilde{\epsilon}^{2}}\Bigg(\ln\bigg(\tfrac{23n\max\big\{3,\tfrac{3-\nu}{1+\nu}\big\}}{\|W_{n:1}(0)\|_{F}\tilde{\epsilon}}\bigg)\Bigg)^{2}\\[2mm]
	& ~~=~1\big/\hat{\epsilon}\text{\,,}
\end{aligned}
\end{equation*}
where 
the first transition follows from Equation~\eqref{eq:gd_translation_unbalance_m_ub} and the definition of~$\epsilon$ (Equation~\eqref{eq:gd_translation_unbalance_eps_epsbar});
the third makes use of Equation~\eqref{eq:gd_translation_unbalance_k_eta_ub};
the fifth relies on Equations \eqref{eq:unbalance_to_balance_e2e_inv_bound} and~\eqref{eq:unbalance_to_balance_nu_ratio_bound};
and 
the last is based on the definition of~$\hat{\epsilon}$ (Equation~\eqref{eq:gd_translation_unbalance_mag}) and the condition $\tilde{\epsilon} \leq 1$.
Rearranging the derived inequality gives \smash{$\sqrt{\hat{\epsilon}} < \frac{1}{2} n^{-1.5} \epsilon e^{- \int_{0}^{k\eta}m(t')dt'}$}.
Combining this with the fact that $\Vert\boldsymbol{\theta}_{0}-\hat{\boldsymbol{\theta}}(0)\Vert_{2}\leq n^{1.5}\sqrt{\hat{\epsilon}}$, we~obtain:
\begin{equation}
	\epsilon-e^{\int_{0}^{k\eta}m(t')dt'}\Vert\boldsymbol{\theta}_{0}-\hat{\boldsymbol{\theta}}(0)\Vert_{2}\geq\epsilon-e^{\int_{0}^{k\eta}m(t')dt'} n^{1.5}\sqrt{\hat{\epsilon}}>\epsilon-\tfrac{1}{2}\epsilon=\tfrac{1}{2}\epsilon\text{.}
	\label{eq:gd_translation_unbalance_eps_corrected_bound}
\end{equation}
We now have:
\begin{equation*}
\begin{aligned} & \beta_{k\eta,\epsilon}\gamma_{k\eta,\epsilon}k\eta e^{\int_{0}^{k\eta}m(t')dt'}\Big/\Big(\epsilon-e^{\int_{0}^{k\eta}m(t')d'}\Vert\boldsymbol{\theta}_{0}-\hat{\boldsymbol{\theta}}(0)\Vert_{2}\Big)\\[1.5mm]
	& ~~<~\beta_{k\eta,\epsilon}\gamma_{k\eta,\epsilon}k\eta e^{\int_{0}^{k\eta}m(t')dt'}2\epsilon^{-1}\\
	& ~~<~(16n)(6\sqrt{n})\:k\eta\cdot\tfrac{3n^{2}\big(e^{2}\max\big\{1,\tfrac{1-\hat{\nu}}{1+\hat{\nu}}\big\}\big)^{5(n-1)/2}}{\|\hat{W}_{n:1,0}\|_{F}^{2}}\cdot2\tfrac{15n^{3}\big(\max\big\{1,\tfrac{3}{2}\cdot\tfrac{1-\hat{\nu}}{1+\hat{\nu}}\big\}\big)^{n}k\eta}{\|\hat{W}_{n:1,0}\|_{F}\tilde{\epsilon}}\\
	& ~~<~\tfrac{9000n^{13/2}e^{6n-5}\big(\max\big\{1,\tfrac{1-\hat{\nu}}{1+\hat{\nu}}\big\}\big)^{(7n-5)/2}}{\|\hat{W}_{n:1,0}\|_{F}^{3}\tilde{\epsilon}}(k\eta)^{2}\\
	& ~~<~\tfrac{9000n^{13/2}e^{6n-5}\big(\max\big\{1,\tfrac{1-\hat{\nu}}{1+\hat{\nu}}\big\}\big)^{(7n-5)/2}}{\|\hat{W}_{n:1,0}\|_{F}^{3}\tilde{\epsilon}}\cdot\tfrac{9n^{2}\big(\max\big\{1, \tfrac{3}{2}\cdot \tfrac{1-\hat{\nu}}{1+\hat{\nu}}\big\}\big)^{2n}}{\|\hat{W}_{n:1,0}\|_{F}^{2}}\Bigg(\ln\bigg(\tfrac{15n\max\big\{1,\tfrac{1-\hat{\nu}}{1+\hat{\nu}}\big\}}{\|\hat{W}_{n:1,0}\|_{F}\tilde{\epsilon}}\bigg)\Bigg)^{2}\\
	& ~~<~\tfrac{n^{17/2}e^{7n+7}\big(\max\big\{1,\tfrac{1-\hat{\nu}}{1+\hat{\nu}}\big\}\big)^{(11n-5)/2}}{\|\hat{W}_{n:1,0}\|_{F}^{5}\tilde{\epsilon}}\Bigg(\ln\bigg(\tfrac{15n\max\big\{1,\tfrac{1-\hat{\nu}}{1+\hat{\nu}}\big\}}{\|\hat{W}_{n:1,0}\|_{F}\tilde{\epsilon}}\bigg)\Bigg)^{2}\\
	& ~~\leq~\tfrac{n^{17/2}e^{7n+7}\big(\max\big\{3,\tfrac{3-\nu}{1+\nu}\big\}\big)^{(11n-5)/2}}{(\frac{2}{3})^{5}\|W_{n:1,0}\|_{F}^{5}\tilde{\epsilon}}\Bigg(\ln\bigg(\tfrac{15n\max\big\{3,\tfrac{3-\nu}{1+\nu}\big\}}{\frac{2}{3}\|W_{n:1,0}\|_{F}\tilde{\epsilon}}\bigg)\Bigg)^{2}\\[2mm]
	& ~~\leq~\tfrac{n^{17/2}e^{7n+10}\big(\max\big\{3,\tfrac{3-\nu}{1+\nu}\big\}\big)^{(11n-5)/2}}{\|W_{n:1,0}\|_{F}^{5}\tilde{\epsilon}}\Bigg(\ln\bigg(\tfrac{23n\max\big\{3,\tfrac{3-\nu}{1+\nu}\big\}}{\|W_{n:1,0}\|_{F}\tilde{\epsilon}}\bigg)\Bigg)^{2}\\
	& ~~\leq~1/\eta\text{,}
\end{aligned}
\end{equation*}	
where 
the first transition is due to Equation~\eqref{eq:gd_translation_unbalance_eps_corrected_bound};
the second makes use of $\beta_{k\eta , \epsilon} \, {=} \, 16 n$, $\gamma_{k\eta , \epsilon} \, {=} \, 6 \sqrt{n}$, Equation~\eqref{eq:gd_translation_unbalance_m_ub} and the definition of~$\epsilon$ (Equation~\eqref{eq:gd_translation_unbalance_eps_epsbar});
the fourth relies on Equation~\eqref{eq:gd_translation_unbalance_k_eta_ub};
the sixth is an outcome of Equations \eqref{eq:unbalance_to_balance_e2e_inv_bound} and~\eqref{eq:unbalance_to_balance_nu_ratio_bound};
and 
the last follows from the upper bound on~$\eta$ given in Equation~\eqref{eq:gd_translation_unbalance_eta}, as well as the condition $\tilde{\epsilon} \leq 1$.
Rearrange the inequality above:
\[
\eta < \frac{\epsilon-e^{\int_{0}^{k\eta}m(t')dt'}\Vert\boldsymbol{\theta}_{0}-\hat{\boldsymbol{\theta}}(0)\Vert_{2}}{\beta_{k\eta,\epsilon}\gamma_{k\eta,\epsilon}k\eta\:e^{\int_{0}^{k\eta}m(t')dt'}}
\text{\,.}
\]
Since~$m ( \cdot )$ is non-negative, it holds that:
\[
\frac{\epsilon-e^{\int_{0}^{k\eta}m(t')dt'}\Vert\boldsymbol{\theta}_{0}-\hat{\boldsymbol{\theta}}(0)\Vert_{2}}{\beta_{k\eta,\epsilon}\gamma_{k\eta,\epsilon}k\eta\:e^{\int_{0}^{k\eta}m(t')dt'}}
\leq
\inf_{t\in(0,k\eta]}\frac{\epsilon-e^{\int_{0}^{t}m(t')dt'}\Vert\boldsymbol{\theta}_{0}-\hat{\boldsymbol{\theta}}(0)\Vert_{2}}{\beta_{k \eta,\epsilon}\gamma_{k \eta,\epsilon}\int_{0}^{t}e^{\int_{t'}^t m ( t'' ) \: dt''}\:dt'}
\text{\,,}
\]
and therefore:
\be
\eta < \inf_{t\in(0,k\eta]}\frac{\epsilon-e^{\int_{0}^{k\eta}m(t')dt'}\Vert\boldsymbol{\theta}_{0}-\hat{\boldsymbol{\theta}}(0)\Vert_{2}}{\beta_{k \eta,\epsilon}\gamma_{k \eta,\epsilon}\int_{0}^{t}e^{\int_{t'}^t m ( t'' ) \: dt''}\:dt'}
\text{\,.}
\label{eq:gd_translation_unbalance_eta_admissible}
\ee
We now invoke Theorem~\ref{theorem:gf_gd} with $\epsilon$ as we have defined (Equation~\eqref{eq:gd_translation_unbalance_eps_epsbar}), time $\tilde{t} = k \eta$, and $\beta_{k \eta , \epsilon}$, $\gamma_{k \eta , \epsilon}$ and~$m ( \cdot )$ as produced by Proposition~\ref{prop:gf_analysis}.
The theorem implies that, by Equation~\eqref{eq:gd_translation_unbalance_eta_admissible}, the first $\lfloor k \eta / \eta \rfloor \, {=} \, k$ iterates of gradient descent $\epsilon$-approximate the gradient flow trajectory up to time~$k \eta$, \ie~$\| \thetabf_{k'} \, {-} \, \hat{\thetabf} ( k' \eta ) \|_2 \, {\leq} \, \epsilon$ for all $k' \, {\in} \, \{ 1 , 2 , ...\, , k \}$.
In particular $\| \thetabf_k \, {-} \, \hat{\thetabf} ( k \eta ) \|_2 \, {\leq} \, \epsilon$, as required.
\qed

\vspace{15mm}

\subsubsection{Proof of Lemma~\ref{lemma:unbalance_to_balance_bounds}} \label{app:proof:gd_translation_unbalance:unbalance_to_balance_bounds}
\vspace{3mm}

For conciseness, in the current proof we omit a second subscript ``0'' from our notation.
Namely, we use $W_{n:1}$ and~$\hat{W}_{n:1}$ as shorthand for $W_{n:1,0}$ and~$\hat{W}_{n:1,0}$ respectively, and for any $j \in [ n ]$, $W_j$~and~$\hat{W}_j$ serve as shorthand for $W_{j,0}$ and~$\hat{W}_{j,0}$ respectively.

We start by proving Equation~\eqref{eq:unbalance_to_balance_e2e_dist_bound}.
The following matrix $\hat{W}_{j' : j}$, for any $j , j' \in [n]$, is defined as $\hat{W}_{j'} \hat{W}_{j' - 1} \cdots \hat{W}_j$ if $j \leq j'$, and as an identity matrix (with size to be inferred by context) otherwise.
Recall that $\hat{\boldsymbol{\theta}}_{0}$ meets the balancedness condition, \ie~$\hat{W}_{j+1}^{\top}\hat{W}_{j+1}=\hat{W}_{j}\hat{W}_{j}^{\top}$ for all $j\in[n-1]$.
Using this relation repeatedly (while recalling that $d_n = 1$), we have:
\[
\begin{aligned}\Vert \hat{W}_{n:1}\Vert_{F}^{2} & =\hat{W}_{n:1}\hat{W}_{n:1}^{\top}\\
	& =\hat{W}_{n:2}\hat{W}_{1}\hat{W}_{1}^{\top}\hat{W}_{n:2}^{\top}\\
	& =\hat{W}_{n:2}\hat{W}_{2}^{\top}\hat{W}_{2}\hat{W}_{n:2}^{\top}\\
	& =\hat{W}_{n:3}\hat{W}_{2}\hat{W}_{2}^{\top}\hat{W}_{2}\hat{W}_{2}^{\top}\hat{W}_{n:3}^{\top}\\
	& =\hat{W}_{n:3}\hat{W}_{3}^{\top}\hat{W}_{3}\hat{W}_{3}^{\top}\hat{W}_{3}\hat{W}_{n:3}^{\top}\\
	& \hspace{1.5mm}\vdots\\
	& =\big(\hat{W}_{n}\hat{W}_{n}^{\top}\big)^{n}\\
	& =\Vert \hat{W}_{n}\Vert_{F}^{2n}\text{.}
\end{aligned}
\]
Since the balancedness condition implies that $\Vert \hat{W}_{j}\Vert_{F}=\Vert \hat{W}_{j+1}\Vert_{F}$ for any $j\in[n-1]$, we may conclude $\Vert \hat{W}_{j}\Vert_{F}=\Vert \hat{W}_{n:1}\Vert_{F}^{1/n}$ for any $j\in[n]$.
It holds that:
\[
\begin{aligned} & \big\Vert{W}_{n:1}-\hat{W}_{n:1}\big\Vert_{F}\\[1mm]
	& =\big\Vert(\hat{W}_{n}+{W}_{n}-\hat{W}_{n})\hspace{-0.75mm}\cdot\hspace{-0.75mm}\cdot\hspace{-0.75mm}\cdot\hspace{-0.75mm}(\hat{W}_{1}+{W}_{1}-\hat{W}_{1})-\hat{W}_{n:1}\big\Vert_{F}\\
	& =\Big\Vert{\textstyle \sum_{(b_{1},..,b_{n})\in\{0,1\}^{n}}}\big(b_{n}\hat{W}_{n}+(1-b_{n})({W}_{n}-\hat{W}_{n})\big)\hspace{-0.75mm}\cdot\hspace{-0.75mm}\cdot\hspace{-0.75mm}\cdot\hspace{-0.75mm}\big(b_{1}\hat{W}_{1}+(1-b_{1})({W}_{1}-\hat{W}_{1})\big)-\hat{W}_{n:1}\Big\Vert_{F}\\
	& =\Big\Vert{\textstyle \sum_{(b_{1},..,b_{n})\in\{0,1\}^{n}\backslash\{1\}^{n}}}\big(b_{n}\hat{W}_{n}+(1-b_{n})({W}_{n}-\hat{W}_{n})\big)\hspace{-0.75mm}\cdot\hspace{-0.75mm}\cdot\hspace{-0.75mm}\cdot\hspace{-0.75mm}\big(b_{1}\hat{W}_{1}+(1-b_{1})({W}_{1}-\hat{W}_{1})\,\big)\Big\Vert_{F}\\
	& \leq{\textstyle \sum_{(b_{1},..,b_{n})\in\{0,1\}^{n}\backslash\{1\}^{n}}\big\Vert}\big(b_{n}\hat{W}_{n}+(1-b_{n})({W}_{n}-\hat{W}_{n})\big)\hspace{-0.75mm}\cdot\hspace{-0.75mm}\cdot\hspace{-0.75mm}\cdot\hspace{-0.75mm}\big(b_{1}\hat{W}_{1}+(1-b_{1})({W}_{1}-\hat{W}_{1})\,\big)\big\Vert_{F}\\[1mm]
	& \leq{\textstyle \sum_{(b_{1},..,b_{n})\in\{0,1\}^{n}\backslash\{1\}^{n}}}\big\Vert b_{n}\hat{W}_{n}+(1-b_{n})({W}_{n}-\hat{W}_{n})\big\Vert_{F}\hspace{-0.75mm}\cdot\hspace{-0.75mm}\cdot\hspace{-0.75mm}\cdot\hspace{-0.75mm}\big\Vert b_{1}\hat{W}_{1}+(1-b_{1})({W}_{1}-\hat{W}_{1})\,\big)\big\Vert_{F}\\[1mm]
	& \leq{\textstyle \sum_{(b_{1},..,b_{n})\in\{0,1\}^{n}\backslash\{1\}^{n}}}\big(b_{n}\Vert\hat{W}_{n}\Vert_{F}\hspace{-0.3mm}+\hspace{-0.3mm}(1\hspace{-0.3mm}-\hspace{-0.3mm}b_{n})\Vert{W}_{n}\hspace{-0.3mm}-\hspace{-0.3mm}\hat{W}_{n}\Vert_{F}\hspace{-0.3mm}\big)\hspace{-0.75mm}\cdot\hspace{-0.75mm}\cdot\hspace{-0.75mm}\cdot\hspace{-0.75mm}\big(b_{1}\Vert\hat{W}_{1}\Vert_{F}\hspace{-0.3mm}+\hspace{-0.3mm}(1\hspace{-0.3mm}-\hspace{-0.3mm}b_{1})\Vert{W}_{1}\hspace{-0.3mm}-\hspace{-0.3mm}\hat{W}_{1}\Vert_{F}\hspace{-0.3mm}\big)\text{,}\\[1mm]
\end{aligned}
\]
where the inequalities follow from sub-multiplicativity and sub-additivity of Frobenius norm.
Since $\Vert\boldsymbol{\theta}_{0}-\hat{\boldsymbol{\theta}}_{0}\Vert_{2}\leq \sqrt{n^{3}\hat{\epsilon}}$ and $\Vert \hat{W}_{j}\Vert_{F}=\Vert \hat{W}_{n:1}\Vert_{F}^{1/n}$ for any $j\in[n]$, we obtain:
\begin{equation*}
	\hspace{-2mm}\begin{aligned} & \big\Vert{W}_{n:1}-\hat{W}_{n:1}\big\Vert_{F}\\[1mm]
		& \leq{\textstyle \sum_{(b_{1},..,b_{n})\in\{0,1\}^{n}\backslash\{1\}^{n}}}\big(b_{n}\Vert\hat{W}_{n:1}\Vert_{\hspace{-0.4mm}F}^{1/n}\hspace{-0.5mm}+\hspace{-0.5mm}(1-b_{n})\sqrt{n^{3}\hat{\epsilon}}\,\big)\hspace{-0.75mm}\cdot\hspace{-0.75mm}\cdot\hspace{-0.75mm}\cdot\hspace{-0.75mm}\big(b_{1}\Vert\hat{W}_{n:1}\Vert_{\hspace{-0.4mm}F}^{1/n}\hspace{-0.5mm}+\hspace{-0.5mm}(1-b_{1})\sqrt{n^{3}\hat{\epsilon}}\,\big)\\[1mm]
		& =\big(\Vert\hat{W}_{n:1}\Vert_{\hspace{-0.4mm}F}^{1/n}+\sqrt{n^{3}\hat{\epsilon}}\,\big)^{n}-\Vert\hat{W}_{n:1}\Vert_{\hspace{-0.4mm}F}\\[1mm]
		& ={\textstyle \sum_{j=0}^{n}{n \choose j}}\Vert\hat{W}_{n:1}\Vert_{\hspace{-0.4mm}F}^{(n-j)/n}\big(n^{3}\hat{\epsilon}\big)^{j/2}-\Vert\hat{W}_{n:1}\Vert_{\hspace{-0.4mm}F}\\[1mm]
		& ={\textstyle \sum_{j=1}^{n}{n \choose j}}\Vert\hat{W}_{n:1}\Vert_{\hspace{-0.4mm}F}^{(n-j)/n}\big(n^{3}\hat{\epsilon}\big)^{j/2}\\[1mm]
		& \leq{\textstyle \sum_{j=1}^{n}}n^{j}\max\big\{1,\Vert\hat{W}_{n:1}\Vert_{\hspace{-0.4mm}F}\big\}\big(n^{3}\hat{\epsilon}\big)^{j/2}\\[1mm]
		& =\hspace{0.25mm}\max\big\{1,\Vert\hat{W}_{n:1}\Vert_{\hspace{-0.4mm}F}\big\}{\textstyle \sum_{j=1}^{n}}\big(n^{5}\hat{\epsilon}\big)^{j/2}\\[1mm]
		& \leq\hspace{0.25mm}\max\big\{1,\Vert\hat{W}_{n:1}\Vert_{\hspace{-0.4mm}F}\big\}{\textstyle \sum_{j=1}^{\infty}}\big(n^{5}\hat{\epsilon}\big)^{j/2}\text{.}\\[1mm]
	\end{aligned}
	\label{eq:gd_translation_unbalance_diff_between_end2end_with_epsilon_unfinished}
\end{equation*}
Since $\sqrt{n^{5}\hat{\epsilon}}<1$ (relying on the definition of $\hat{\epsilon}$ in Equation~\eqref{eq:gd_translation_unbalance_mag}), we obtain:
\begin{equation}
	\big\Vert{W}_{n:1}-\hat{W}_{n:1}\big\Vert_{F}\leq\hspace{0.25mm}\max\big\{1,\Vert\hat{W}_{n:1}\Vert_{\hspace{-0.4mm}F}\big\}\tfrac{\sqrt{n^{5}\hat{\epsilon}}}{1-\sqrt{n^{5}\hat{\epsilon}}}\text{.}
	\label{eq:gd_translation_unbalance_end2end_diff_exact_with_max}
\end{equation}
By the definition of $\hat{\epsilon}$ (Equation~\eqref{eq:gd_translation_unbalance_mag}), it follows that $\sqrt{n^{5}\hat{\epsilon}}\big/\big(1-\sqrt{n^{5}\hat{\epsilon}}\big)\leq\frac{1}{3}\Vert W_{n:1}\Vert_{F}$, thus: \[
\Vert{W}_{n:1}-\hat{W}_{n:1}\Vert_{F}\leq\tfrac{1}{3}\max\big\{1,\Vert\hat{W}_{n:1}\Vert_{\hspace{-0.4mm}F}\big\}\Vert W_{n:1}\Vert_{\hspace{-0.4mm}F}
\text{.}
\]
We conclude the proof of Equation~\eqref{eq:unbalance_to_balance_e2e_dist_bound} by showing that $\Vert\hat{W}_{n:1}\Vert_{\hspace{-0.4mm}F}\leq1$.
Indeed, assuming that this is not the case, \ie~$\Vert\hat{W}_{n:1}\Vert_{\hspace{-0.4mm}F}>1$, while recalling that $\Vert W_{n:1}\Vert_{F}\leq0.1$, leads us to a contradiction:
\[
\big\Vert\hat{W}_{n:1}\big\Vert_{F} \leq\big\Vert W_{n:1}\big\Vert_{F}+\big\Vert\hat{W}_{n:1}-W_{n:1}\big\Vert_{F}\leq 0.1\big\Vert\hat{W}_{n:1}\big\Vert_{F}+\tfrac{1}{3}\big\Vert W_{n:1}\big\Vert_{F}\big\Vert\hat{W}_{n:1}\big\Vert_{F}<\big\Vert\hat{W}_{n:1}\big\Vert_{F}\text{.}
\]
Note that in addition to Equation~\eqref{eq:unbalance_to_balance_e2e_dist_bound}, from Equation~\eqref{eq:gd_translation_unbalance_end2end_diff_exact_with_max} and the fact that $\Vert\hat{W}_{n:1}\Vert_{F}\leq1$, we may also establish the following:
\begin{equation}
	\Vert{W}_{n:1}-\hat{W}_{n:1}\Vert_{F}\leq\Vert\hat{W}_{n:1}\Vert_{\hspace{-0.4mm}F} \, \tfrac{\sqrt{n^{5}\hat{\epsilon}}}{1-\sqrt{n^{5}\hat{\epsilon}}}\text{.}\label{eq:gd_translation_unbalance_diff_between_end2end_with_epsilon}
\end{equation}

Moving on to the proof of Equation~\eqref{eq:unbalance_to_balance_nu_bound},
we split the analysis into the following two cases: \emph{(i)}~$\nu\in[0,1]$; and \emph{(ii)}~$\nu\in(-1,0)$.
We start by analyzing case \emph{(i)}.
Note that  Equation~\eqref{eq:unbalance_to_balance_e2e_dist_bound} together with the fact that $\Vert W_{n:1}\Vert_{F}\neq0$ imply  $\Vert\hat{W}_{n:1}\Vert_{F}\neq0$.
It holds that:
\[
\begin{aligned}\hat{\nu} & =\tfrac{\langle \Lambda_{yx},\hat{W}_{n:1}\rangle }{\Vert \Lambda_{yx}\Vert _{F}\Vert \hat{W}_{n:1}\Vert _{F}}\\
	& =\tfrac{\langle \Lambda_{yx},W_{n:1}+\hat{W}_{n:1}-W_{n:1}\rangle }{\Vert \Lambda_{yx}\Vert _{F}\Vert \hat{W}_{n:1}\Vert _{F}}\\
	& =\tfrac{\langle \Lambda_{yx},W_{n:1}\rangle }{\Vert \Lambda_{yx}\Vert _{F}\Vert \hat{W}_{n:1}\Vert _{F}}+\tfrac{\langle \Lambda_{yx},\hat{W}_{n:1}-W_{n:1}\rangle }{\Vert \Lambda_{yx}\Vert _{F}\Vert \hat{W}_{n:1}\Vert _{F}}\\
	& =\nu\cdot\tfrac{\Vert W_{n:1}\Vert _{F}}{\Vert \hat{W}_{n:1}\Vert _{F}}+\tfrac{\langle \Lambda_{yx},\hat{W}_{n:1}-W_{n:1}\rangle }{\Vert \Lambda_{yx}\Vert _{F}\Vert \hat{W}_{n:1}\Vert _{F}}\\
	& \geq0+\tfrac{\langle \Lambda_{yx},\hat{W}_{n:1}-W_{n:1}\rangle }{\Vert \Lambda_{yx}\Vert _{F}\Vert \hat{W}_{n:1}\Vert _{F}}
	\text{.}
\end{aligned}
\]
Recall that $\Vert\Lambda_{yx}\Vert_{F}=1$.
We may finish the proof for case~\emph{(i)} by using Cauchy-Schwartz and triangle inequalities together with Equation~\eqref{eq:unbalance_to_balance_e2e_dist_bound}:
\[
\begin{aligned}
	\hat{\nu} & \geq-\tfrac{1\cdot\Vert\hat{W}_{n:1}-W_{n:1}\Vert_{F}}{1\cdot\Vert\hat{W}_{n:1}\Vert_{F}}\\
	& =-\tfrac{\Vert\hat{W}_{n:1}-W_{n:1}\Vert_{F}}{\Vert W_{n:1}+\hat{W}_{n:1}-W_{n:1}\Vert_{F}}\\
	& \geq-\tfrac{\Vert\hat{W}_{n:1}-W_{n:1}\Vert_{F}}{\Vert W_{n:1}\Vert_{F}-\Vert\hat{W}_{n:1}-W_{n:1}\Vert_{F}}\\
	& \geq-\tfrac{\Vert W_{n:1}\Vert_{F}/3}{2\Vert W_{n:1}\Vert_{F}/3}\\
	& =-\tfrac{1}{2}\text{.}
\end{aligned}
\]
Regarding case~\emph{(ii)} (\ie~$\nu\in(-1,0)$), we have that:
\[
\begin{aligned}|\hat{\nu}| & =\tfrac{|\langle\Lambda_{yx},\hat{W}_{n:1}\rangle|}{\Vert\Lambda_{yx}\Vert_{F}\Vert\hat{W}_{n:1}\Vert_{F}}\\
	& =\tfrac{|\langle\Lambda_{yx},W_{n:1}+\hat{W}_{n:1}-W_{n:1}\rangle|}{1\cdot\Vert W_{n:1}+\hat{W}_{n:1}-W_{n:1}\Vert_{F}}\\
	& =\tfrac{|\langle\Lambda_{yx},W_{n:1}\rangle+\langle\Lambda_{yx},\hat{W}_{n:1}-W_{n:1}\rangle|}{\Vert W_{n:1}+\hat{W}_{n:1}-W_{n:1}\Vert_{F}}\\
	& \leq\tfrac{|\langle\Lambda_{yx},W_{n:1}\rangle|+|\langle\Lambda_{yx},\hat{W}_{n:1}-W_{n:1}\rangle|}{(\Vert W_{n:1}\Vert_{F}-\Vert\hat{W}_{n:1}-W_{n:1}\Vert_{F})}\\
	& \leq\tfrac{|\langle\Lambda_{yx},W_{n:1}\rangle|+1\cdot\Vert\hat{W}_{n:1}-W_{n:1}\Vert_{F}}{\Vert W_{n:1}\Vert_{F}-\Vert\hat{W}_{n:1}-W_{n:1}\Vert_{F}}\\
	& =|\nu|-|\nu|+\tfrac{|\langle\Lambda_{yx},W_{n:1}\rangle|+\Vert\hat{W}_{n:1}-W_{n:1}\Vert_{F}}{\Vert W_{n:1}\Vert_{F}-\Vert\hat{W}_{n:1}-W_{n:1}\Vert_{F}}\\
	& =|\nu|-\tfrac{|\langle\Lambda_{yx},W_{n:1}\rangle|}{\Vert\Lambda_{yx}\Vert_{F}\Vert W_{n:1}\Vert_{F}}+\tfrac{|\langle\Lambda_{yx},W_{n:1}\rangle|+\Vert\hat{W}_{n:1}-W_{n:1}\Vert_{F}}{\Vert W_{n:1}\Vert_{F}-\Vert\hat{W}_{n:1}-W_{n:1}\Vert_{F}}\\
	& =|\nu|-\tfrac{|\langle\Lambda_{yx},W_{n:1}\rangle|}{1\cdot\Vert W_{n:1}\Vert_{F}}+\tfrac{|\langle\Lambda_{yx},W_{n:1}\rangle|+\Vert\hat{W}_{n:1}-W_{n:1}\Vert_{F}}{\Vert W_{n:1}\Vert_{F}-\Vert\hat{W}_{n:1}-W_{n:1}\Vert_{F}}\\
	& =|\nu|+\Vert\hat{W}_{n:1}-W_{n:1}\Vert_{F}\cdot\tfrac{\Vert W_{n:1}\Vert_{F}^{-1}|\langle\Lambda_{yx},W_{n:1}\rangle|+1}{\Vert W_{n:1}\Vert_{F}-\Vert\hat{W}_{n:1}-W_{n:1}\Vert_{F}}
	\\&=|\nu|+\Vert\hat{W}_{n:1}-W_{n:1}\Vert_{F}\cdot\tfrac{|\nu|+1}{\Vert W_{n:1}\Vert_{F}-\Vert\hat{W}_{n:1}-W_{n:1}\Vert_{F}}\text{\,,}
\end{aligned}
\]
where the first transition relies on $\Vert\hat{W}_{n:1}\Vert_{F}\neq0$;
the second uses $\Vert\Lambda_{yx}\Vert_{F}=1$;
the fourth uses triangle inequality, and relies on Equation~\eqref{eq:unbalance_to_balance_e2e_dist_bound} ensuring positive denominator;
the fifth uses Cauchy-Schwartz and $\Vert\Lambda_{yx}\Vert_{F}=1$;
and both the eighth and the last follow from $\Vert\Lambda_{yx}\Vert_{F}=1$.
It holds that:
\[
\begin{aligned}|\hat{\nu}| & \leq|\nu|+\Vert\hat{W}_{n:1}-W_{n:1}\Vert_{F}\cdot\tfrac{3}{2}\tfrac{|\nu|+1}{\Vert W_{n:1}\Vert_{F}}\\
	& \leq|\nu|+\Vert\hat{W}_{n:1}-W_{n:1}\Vert_{F}\cdot\tfrac{3}{\Vert W_{n:1}\Vert_{F}}\\
	& \leq|\nu|+3\tfrac{\sqrt{n^{5}\hat{\epsilon}}}{1-\sqrt{n^{5}\hat{\epsilon}}}\cdot\tfrac{\Vert\hat{W}_{n:1}\Vert_{F}}{\Vert W_{n:1}\Vert_{F}}\\
	& \leq|\nu|+3\tfrac{\sqrt{n^{5}\hat{\epsilon}}}{1-\sqrt{n^{5}\hat{\epsilon}}}\cdot\tfrac{\Vert W_{n:1}\Vert_{F}+\Vert\hat{W}_{n:1}-W_{n:1}\Vert_{F}}{\Vert W_{n:1}\Vert_{F}}\\
	& \leq|\nu|+4\tfrac{\sqrt{n^{5}\hat{\epsilon}}}{1-\sqrt{n^{5}\hat{\epsilon}}}\\
	& \leq|\nu|+4\tfrac{(1+\nu)/16}{1-1/2}\\
	& =|\nu|+\tfrac{1-|\nu|}{2}\\
	& =\tfrac{|\nu|+1}{2}\text{\,,}
\end{aligned}
\]
where the first transition uses Equation~\eqref{eq:unbalance_to_balance_e2e_dist_bound};
the second relies on $|\nu|\leq1$;
the third uses Equation~\eqref{eq:gd_translation_unbalance_diff_between_end2end_with_epsilon};
the fourth follows from triange inequality;
the fifth uses Equation~\eqref{eq:unbalance_to_balance_e2e_dist_bound};
the sixth follows from the definition of $\hat{\epsilon}$ (Equation~\eqref{eq:gd_translation_unbalance_mag}), namely that $\sqrt{n^{5}\hat{\epsilon}} \leq (1+\nu) / 16$ and $\sqrt{n^{5}\hat{\epsilon}} \leq 1/2$;
and the seventh relies on the assumption of case \emph{(ii)} (\ie~$\nu<0$).
After proving both cases \emph{(i)} and \emph{(ii)}, we may conclude Equation~\eqref{eq:unbalance_to_balance_nu_bound}.

Equation~\eqref{eq:unbalance_to_balance_e2e_inv_bound} follows from triangle inequality and Equation~\eqref{eq:unbalance_to_balance_e2e_dist_bound}:
\[
\Vert\hat{W}_{n:1}\Vert_{F} \geq\Vert W_{n:1}\Vert_{F}-\Vert\hat{W}_{n:1}-W_{n:1}\Vert_{F}\geq\Vert W_{n:1}\Vert_{F}-\tfrac{1}{3}\Vert W_{n:1}\Vert_{F}=\tfrac{2}{3}\Vert W_{n:1}\Vert_{F}\text{.} 
\]

To prove Equation~\eqref{eq:unbalance_to_balance_nu_ratio_bound}, it suffices to show that $\tfrac{1-\hat{\nu}}{1+\hat{\nu}}\leq3$ or $\tfrac{1-\hat{\nu}}{1+\hat{\nu}}\leq\tfrac{3-\nu}{1+\nu}$.
We prove this separately for the following two cases: $-\hspace{-0mm}\tfrac{1}{2} \leq \text{sign}(\nu)\tfrac{|\nu|+1}{2}$ and $-\hspace{-0mm}\tfrac{1}{2}\hspace{0mm} > \hspace{0mm}\text{sign}(\nu)\tfrac{|\nu|+1}{2}$.
In the case of $-\hspace{-0mm}\tfrac{1}{2}\hspace{0mm} \leq \hspace{0mm}\text{sign}(\nu)\tfrac{|\nu|+1}{2}$, Equation~\eqref{eq:unbalance_to_balance_nu_bound} implies $\hat{\nu}\geq-\hspace{-0mm}\tfrac{1}{2}$.
Thus, we have that $\tfrac{1-\hat{\nu}}{1+\hat{\nu}}\leq\tfrac{1-(-0.5)}{1-0.5}=3$, thereby proving that Equation~\eqref{eq:unbalance_to_balance_nu_ratio_bound} holds for this case.
For the other case (\ie~$-0.5>\text{sign}(\nu){(|\nu|+1)}/{2}$), we have that $\nu < 0$, and Equation~\eqref{eq:unbalance_to_balance_nu_bound} implies $\hat{\nu}\geq-\tfrac{1-\nu}{2}$.
Thus, we have that $\tfrac{1-\hat{\nu}}{1+\hat{\nu}}\leq\tfrac{1-(-(1-\nu)/2)}{1-(1-\nu)/2}=\tfrac{1.5-\nu/2}{0.5+\nu/2}=\tfrac{3-\nu}{1+\nu}$, thereby proving that Equation~\eqref{eq:unbalance_to_balance_nu_ratio_bound} holds for the second (and last) case.
\qed

\ifdefined\ENABLEENDNOTES
	\theendnotes
\fi

\end{document}